\documentclass[11pt]{article}

\usepackage{amsmath,amssymb,amsthm,mathtools}
\usepackage{bbm}
\usepackage{graphicx}
\usepackage{booktabs}
\usepackage{algorithm}
\usepackage{algpseudocode}
\usepackage{xurl}
\usepackage{arxiv}
\usepackage{subcaption}
\usepackage{float}
\usepackage{enumitem}
\newcommand{\sS}{\mathcal{S}}
\newcommand{\sA}{\mathcal{A}}
\newcommand{\sX}{\mathcal{X}}

\newcommand{\sH}{\mathcal{H}}

\newcommand{\sQ}{\mathcal{Q}}
\newcommand{\sW}{\mathcal{W}}
\newcommand{\mhat}{\widehat m}

\newcommand{\E}{\mathbb{E}}
\newcommand{\1}{\mathbf{1}}
\newcommand{\dd}{\mathrm{d}}

\newcommand{\nup}{\nu_\pi^+}
\newcommand{\cpi}{c_\pi}
\newcommand{\omegazero}{\omega_0}
\newcommand{\omegastar}{\omega_{\pi,\gamma}}
\newcommand{\omegahatzero}{\widehat{\omega}_0}
\newcommand{\Bpig}{\mathsf{B}_\gamma^\pi}
\newcommand{\Bpigcov}{\mathsf{B}_{\gamma,\mathrm{cov}}^\pi}
\newcommand{\Bpigcap}[1]{\mathsf{B}_{\gamma,\mathrm{cov},#1}^\pi}
\newcommand{\Pback}{P_{\pi,\nu}^{\leftarrow}}
\newcommand{\Mpi}{\Pback}
\newcommand{\Ppi}{P_\pi}
\newcommand{\dinit}{d_0}
\newcommand{\omegahat}{\widehat{\omega}}

\newcommand{\chat}{\widehat{c}_\pi}

\newcommand{\norm}[2][]{\left\lVert #2 \right\rVert_{#1}}
\newcommand{\LoneNu}[1]{\norm[L^1(\nu)]{#1}}
\newcommand{\LoneNuPlus}[1]{\norm[L^1(\nu_\pi^+)]{#1}}

\newcommand{\abs}[1]{\left\lvert #1 \right\rvert}

\DeclareMathOperator*{\argmin}{arg\,min}

\theoremstyle{plain}
\newtheorem{theorem}{Theorem}[section]
\newtheorem{corollary}[theorem]{Corollary}
\newtheorem{proposition}[theorem]{Proposition}
\newtheorem{lemma}[theorem]{Lemma}

\theoremstyle{definition}

\theoremstyle{remark}

\makeatletter
\AddToHook{env/theorem/after}{\@afterindentfalse\@afterheading}
\AddToHook{env/corollary/after}{\@afterindentfalse\@afterheading}
\AddToHook{env/proposition/after}{\@afterindentfalse\@afterheading}
\AddToHook{env/lemma/after}{\@afterindentfalse\@afterheading}
\AddToHook{env/definition/after}{\@afterindentfalse\@afterheading}
\AddToHook{env/remark/after}{\@afterindentfalse\@afterheading}
\AddToHook{env/proof/after}{\@afterindentfalse\@afterheading}
\AddToHook{env/enumerate/after}{\@afterindentfalse\@afterheading}
\AddToHook{env/itemize/after}{\@afterindentfalse\@afterheading}
\makeatother

\title{Fitted Occupancy-Ratio Evaluation\\ without Bellman Completeness}

\author{
  Lars van der Laan \\
 Stanford University \\
   \texttt{vdlaan@stanford.edu}
  \And
  Nathan Kallus \\
  Netflix and Cornell University
}

\begin{document}

\maketitle

\begin{abstract}
 
Occupancy ratios correct distribution shift in offline reinforcement learning
and are central to off-policy evaluation. Existing primal-dual and minimax
methods typically estimate these ratios by enforcing occupancy-balance moments over a critic class. We propose fitted occupancy-ratio
evaluation (\textsc{FORE}), a fitted fixed-point method that characterizes the
discounted occupancy ratio through an adjoint Bellman recursion. At each
iteration, \textsc{FORE} solves a single-level density-ratio objective on
one-step-transition data, thereby projecting the adjoint Bellman image onto a log-ratio
class in Kullback--Leibler (KL) divergence. Unlike analyses of fitted
\(Q\)-evaluation, which typically require value-function realizability together
with Bellman completeness or projected-operator stability, our central
approximation condition is just realizability of the discounted occupancy ratio
itself. Under this condition, the population KL-projected recursion contracts
in relative entropy toward the true ratio by virtue of the adjoint Bellman operator being a KL-contraction. For the empirical recursion, we
establish finite-sample regret bounds that yield convergence in KL up to approximation error and a statistical error governed by the
complexity of the ratio hypothesis class. 
When full coverage fails, we introduce coverage-stopped \textsc{FORE}, which
targets the discounted occupancy accumulated before the first uncovered
state--action pair and yields a conservative lower bound on target-policy value
for nonnegative rewards.  The fitted ratio supports direct value
estimation by reward reweighting, occupancy-weighted fitted \(Q\)-evaluation, and doubly robust estimation that combines the fitted ratio with
a fitted \(Q\)-function. Together, these results identify discounted occupancy-ratio realizability as a sufficient condition for offline policy evaluation
without any completeness assumptions.

\end{abstract}

\section{Introduction}

Offline policy evaluation must correct the mismatch between the distribution of
observed transitions and the discounted occupancy distribution induced by a
target policy. Value-based and ratio-based methods address this mismatch by
estimating complementary objects. On the value-function side, fitted
$Q$-evaluation (FQE) is the canonical fitted-iteration approach: it repeatedly regresses Bellman targets onto a value-function class to estimate the $Q$-function
\citep{lagoudakisParr2003LSPI,ernstEtAl2005TreeBatchRL,
antosEtAl2007FQI,munosSzepesvari2008FVI,riedmiller2005neural,
tosatto2017boosted,leEtAl2019BatchPolicyConstraints}.
This approximates the iteration of the composition of a projection operator with
the Bellman operator. Unlike the Bellman operator alone, this composition need
not be contractive: the Bellman operator and the projection are naturally
controlled under \textit{different} measures
\citep{vanDerLaanKallus2025StationaryWeightedFQE}. Contractivity can be
recovered if the Bellman image already lies in the projection image, so that the
composition reduces to the Bellman operator itself. This condition is called Bellman
completeness, a key assumption in analyses of FQE; alternatives include
directly assuming projected-Bellman stability or a small inherent Bellman error
\citep{antosEtAl2007FQI,munosSzepesvari2008FVI,chen2019information,
foster2021offline,amortila2020variant,wang2021exponential,
wang2021statistical,chang2022learning}.

The discounted occupancy ratio is the density of the target policy's discounted
state--action occupancy measure relative to the offline data distribution. It
converts offline-distribution averages into target-occupancy averages and is a
central ingredient in marginalized importance sampling, doubly robust
estimation, stationary-distribution correction, and occupancy-weighted value
methods 
\citep{thomasBrunskill2016DataEfficientOPE,jiangLi2016DoublyRobust,
xieEtAl2019MarginalizedIS,yinWang2020EfficientTabularOPE,
kallusUehara2020DoubleRL,kallusUehara2020DRL_ICML,kallusUehara2022BreakingHorizonDRL,liuEtAl2018InfiniteHorizonOPE,
hallakMannor2017COPTD,suttonEtAl2016EmphaticTD,
geladaBellemare2019CovariateShift,
vanDerLaanKallus2025StationaryWeightedFQE,
vanDerLaanKallus2025SoftFQI}.
This ratio satisfies an adjoint Bellman equation, equivalently a family of
balance conditions indexed by state-action test functions. This perspective
gives rise to primal-dual and minimax methods that seek a ratio function
minimizing empirical imbalance over a class of critic functions
\citep{liuEtAl2018InfiniteHorizonOPE,nachumEtAl2019DualDICE,
ueharaEtAl2021FiniteSampleMinimax,ueharaEtAl2020MWLMQL}. Their guarantees consequently shift the approximation burden to the coupled ratio and critic classes and require critic richness, dual realizability, or completeness conditions. The same approaches also apply to $Q$-learning by minimizing empirical deviations of the (non-adjoint) Bellman equation, with similar requirements for guarantees.

We propose fitted occupancy-ratio evaluation (\textsc{FORE}), a fitted-iteration
method for discounted occupancy-ratio estimation. \textsc{FORE} is analogous to
FQE, but at each iteration it approximates the composition of a projection in
Kullback--Leibler (KL) divergence with the adjoint Bellman operator.
Importantly, unlike FQE, this composition combines compatible contractions,
yielding convergence without any completeness condition. The method requires no
separate critic class, and each iteration can be implemented with standard
supervised learners, such as gradient-boosted trees or neural networks.

The key approximation condition for \textsc{FORE} is mere realizability (or approximation) of the true discounted occupancy ratio by the hypothesis class. It does not
require an adjoint Bellman completeness condition: adjoint Bellman images of
arbitrary candidate ratios need not remain in, or be well approximated by, the
hypothesis class. The reason is that the adjoint Bellman recursion is aligned with
the KL projection geometry. The unprojected adjoint Bellman map contracts
relative entropy toward the true ratio by the discount factor, and
\textsc{FORE} projects in the same geometry. Thus, when the target ratio is
realizable, the projected population operator inherits this contraction. The
analysis therefore replaces closure of an entire sequence of Bellman images
with direct approximation of the target fixed point.

This contrasts with standard FQE. The Bellman evaluation operator is naturally
stable in the target-policy occupancy norm, whereas FQE uses a projection norm
under the offline data distribution. Without Bellman completeness or projected-operator stability,
this norm mismatch can make the projected Bellman recursion underlying FQE
unstable
\citep{pattersonEtAl2022GeneralizedPBE,
vanDerLaanKallus2025StationaryWeightedFQE}. This helps explain why
value-function realizability alone does not ensure stable FQE
\citep{wang2021exponential,wang2021statistical,foster2021offline,
amortila2020variant}.

Besides realizability, the coverage requirements for full-ratio guarantees are
standard in offline policy evaluation: the target-relevant distributions must
be absolutely continuous with respect to the offline data distribution, and
the finite-sample theory requires boundedness or subexponential tail conditions
for empirical-process control. Because \textsc{FORE} models density ratios
through their log-ratios, the full-coverage analysis also assumes that the
target discounted occupancy ratio is positive on the support of the offline
data distribution.

\medskip

\noindent\textbf{Contributions.}
We make five contributions.
\begin{enumerate}
\item We characterize the discounted occupancy ratio as the unique fixed point
of an adjoint Bellman recursion and introduce \textsc{FORE}, a fitted method
that estimates this recursion through repeated single-level KL density-ratio
objectives.

\item We develop a population approximation theory based directly on
occupancy-ratio realizability. The KL-projected adjoint Bellman operator
contracts in relative entropy toward the true ratio, up to an approximation
error determined by the log-ratio class. This result does not require an
adjoint Bellman completeness condition.

\item We prove finite-sample guarantees for the empirical fitted recursion. The
generalized KL error decomposes into a geometrically decaying initialization
term, a population KL approximation term, and a statistical term governed
by local Rademacher critical radii for the log-ratio class and the induced
multiplier class.

\item We develop three policy-evaluation applications. The fitted ratio yields
reward-reweighted value estimates, a doubly robust estimator with a
product-form error bound, and an occupancy-weighted FQE procedure. For the last
application, we derive bounds that separate ratio error, value-function
approximation error, and finite-iteration error, without imposing Bellman
completeness on the value-function class.

\item We extend \textsc{FORE} to limited-coverage settings via coverage-stopped
\textsc{FORE}, which targets the discounted occupancy accumulated before the
first uncovered state--action pair. The method uses a learned coverage
classifier to remove unsupported target-policy occupancy, while clipping
regularizes estimation in weakly covered regions and preserves population
contraction. The resulting occupancy mass diagnoses effective coverage, and
the corresponding value is a conservative lower bound on the target-policy
value for nonnegative rewards.
\end{enumerate}

\subsection{Related Work}
\label{sec:relatedwork}

\paragraph{Off-policy evaluation and occupancy corrections.}
Classical off-policy evaluation uses trajectory-level or per-decision
importance ratios, while doubly robust estimators combine importance weighting
with value-function estimates
\citep{thomasBrunskill2016DataEfficientOPE,jiangLi2016DoublyRobust}.
Marginalized importance sampling avoids products of trajectory ratios by
correcting marginal state or state--action occupancies
\citep{xieEtAl2019MarginalizedIS,yinWang2020EfficientTabularOPE,
liuEtAl2018InfiniteHorizonOPE}. Semiparametric theory likewise identifies the
occupancy ratio, together with a value function, as the pair of nuisance
functions underlying efficient and doubly robust off-policy evaluation
\citep{kallusUehara2019IntrinsicOPE,kallusUehara2020DRL_ICML,
kallusUehara2020DoubleRL,ueharaShiKallus2022OPEReview,
kallusUehara2020PolicyGradients,kallusUehara2020DeterministicPolicyDR,
kallusUehara2022BreakingHorizonDRL,kallusUehara2024NaturalPolicies,
vanDerLaanEtAl2025AutomaticDRL,
vanDerLaanKallusBibaut2025ClassificationIRL,
vanDerLaanBibautKallus2025EfficientIRL}. Occupancy and
stationary-distribution corrections also appear in off-policy temporal
difference learning, emphatic TD, generalized projected Bellman-error
objectives, stationary-weighted FQE, and stationary-reweighted soft fitted
\(Q\)-iteration
\citep{suttonEtAl2016EmphaticTD,hallakMannor2017COPTD,
geladaBellemare2019CovariateShift,pattersonEtAl2022GeneralizedPBE,
vanDerLaanKallus2025StationaryWeightedFQE,
vanDerLaanKallus2025SoftFQI}.

\paragraph{Primal-dual and minimax ratio estimation.}
These methods estimate occupancy corrections by enforcing balance or
stationarity restrictions through saddle-point, minimax, or
temporal-difference objectives. DualDICE estimates discounted distribution
corrections without behavior-policy probabilities or trajectory products
\citep{nachumEtAl2019DualDICE}. GenDICE and GradientDICE extend this
perspective to stationary-distribution correction and off-policy evaluation
\citep{zhangEtAl2020GenDICE,zhangEtAl2020GradientDICE}. Related work develops
infinite-horizon density-ratio estimators, minimax weight and value-function
learners, regularized-Lagrangian formulations, confidence intervals, and
regression-based variants
\citep{liuEtAl2018InfiniteHorizonOPE,ueharaEtAl2020MWLMQL,
ueharaEtAl2021FiniteSampleMinimax,yangEtAl2020RegularizedLagrangianDICE,
daiEtAl2020CoinDICE,cheEtAl2025AVGDICE}. Other occupancy-correction methods
use successor representations, state abstractions, or distribution matching
for policy optimization, constrained control, and imitation learning
\citep{fujimotoEtAl2021SRDICE,pavseHanna2023ScalingMIS,
nachumEtAl2019AlgaeDICE,kostrikovEtAl2020ValueDICE,
leeEtAl2021OptiDICE,leeEtAl2022COptiDICE,maEtAl2022SMODICE,
kimEtAl2022LobsDICE}. These procedures can also be viewed as minimax
estimators of the Riesz representer of the policy-value functional
\citep{dikkala2020minimax,bennett2023source,bennett2025inference,
vanDerLaanEtAl2025AutomaticDRL}. In contrast, \textsc{FORE} uses the adjoint
Bellman identity recursively and estimates each projected image through a
single-level density-ratio objective, without introducing a separate critic
class.

\paragraph{Fitted occupancy-ratio estimation under completeness.}
A closely related finite-horizon precursor to \textsc{FORE} is the FORC
estimator of \citet{huangChenJiang2023DensityFeatures}; see also
\citet{huangJiang2024OccupancyPG}. FORC recursively fits stagewise occupancy
ratios by squared-loss regression and is closest in spirit to the
regression-based variant of \textsc{FORE} in
Appendix~\ref{app:strong-form-regression-fori}, which combines iterative
regression with preliminary density-ratio estimates. Both approaches rely on a
density-ratio analogue of Bellman completeness; specifically, our
regression-based variant assumes adjoint Bellman completeness. By contrast,
the main \textsc{FORE} estimator iteratively targets adjoint Bellman images and
projects each image onto the fitted ratio class in KL divergence, under which
the population adjoint Bellman recursion is contractive. Its guarantees
therefore require only approximation of the target discounted occupancy ratio,
rather than closure of the fitted class under intermediate adjoint Bellman
images.

\paragraph{Policy evaluation and Bellman completeness.}
A central difficulty in offline policy evaluation with function approximation
is that value-function realizability alone does not ensure stable Bellman
regression under distribution shift. Analyses of FQE and fitted value
iteration typically control this instability through Bellman completeness,
small inherent Bellman error, or contraction of the projected Bellman operator,
together with coverage or concentrability conditions relating target-policy
distributions to the offline data
\citep{antosEtAl2007FQI,munosSzepesvari2008FVI,chen2019information,foster2021offline}. In tabular
models, state aggregation, and discretized representations, the required
stability is built into the approximation architecture
\citep{puterman1994MDP,munosSzepesvari2008FVI}. Linear function approximation provides another route when the
features are compatible with the reward, transition, and coverage structure
\citep{lagoudakisParr2003LSPI}.

Minimax and adversarial Bellman-error methods replace direct Bellman
regression with moment conditions evaluated against an auxiliary critic class.
These approaches can avoid Bellman completeness of the value class, but their
analyses instead require the critic to detect the relevant residuals, through
conditions such as critic richness, dual realizability or completeness,
coverage, and control of saddle-point optimization
\citep{ueharaEtAl2020MWLMQL,ueharaEtAl2021FiniteSampleMinimax,
dikkala2020minimax,bennett2023source,bennett2025inference}. Weighted TD and
FQE-style methods take a complementary route: they retain fitted Bellman
regression but change the projection norm using emphatic, covariate-shift, or
stationary occupancy weights
\citep{suttonEtAl2016EmphaticTD,hallakMannor2017COPTD,geladaBellemare2019CovariateShift,
pattersonEtAl2022GeneralizedPBE,
vanDerLaanKallus2025StationaryWeightedFQE,
vanDerLaanKallus2025SoftFQI}. These results show that an appropriate occupancy
weight can stabilize projected Bellman regression. With nonlinear function
approximation, however, the weight must itself be estimated, and DICE-style or
minimax estimators can reintroduce critic-side realizability or completeness
conditions at this first stage.

\textsc{FORE} addresses this ratio-estimation problem directly. Its guarantees
require neither Bellman completeness of a value class, adjoint Bellman
completeness of a ratio class, nor dual completeness of a critic class. The
central approximation condition is instead that the discounted occupancy ratio
be realizable, or well approximated, by the chosen log-ratio class. This is
made possible by KL projection, which aligns with the relative-entropy
contraction of the adjoint Bellman recursion. We use the resulting ratio in
Section~\ref{sec:fori-weighted-fqe} to construct an end-to-end
\textsc{FORE}-weighted FQE procedure.

\section{Setup and Adjoint Bellman Identification}
 
\subsection{MDP and target occupancy}
\label{sec:mdp-target-occupancy}

Let \((\sS,\sA,P,\mu_0)\) be an MDP with target initial state distribution
\(\mu_0\). Write \(\sX=\sS\times\sA\) and \(X=(S,A)\). We study estimation of
the discounted occupancy ratio of a target policy \(\pi\) relative to an
offline state--action distribution \(\nu\), focusing on the setting
\(\gamma<1\); the undiscounted analogue is treated in
Appendix~\ref{app:mixed-contraction}. The method requires samples from \(\nu\)
and their one-step target-policy successors, but not an explicit behavior
policy. For an ergodic trajectory under a stationary behavior policy, \(\nu\)
may be the stationary distribution of the induced state--action process. For
pooled finite trajectories, \(\nu\) may instead be the time-averaged
state--action distribution over the sampled time points.

For an action distribution \(\eta\), let \(P_\eta\) denote the induced state--action
transition kernel,
\[
    P_\eta(\dd s',\dd a'\mid s,a)
    =
    P(\dd s'\mid s,a)\eta(\dd a'\mid s'),
\]
and write \(\Ppi\) for the kernel induced by \(\pi\). For a finite signed
measure \(\mu\) on \(\sX\), let \(\mu\Ppi\) and \(\mu\Ppi^t\) denote its
one-step and \(t\)-step pushforwards. Let
\[
    \dinit(\dd s,\dd a)=\mu_0(\dd s)\pi(\dd a\mid s)
\]
be the target initial state--action distribution. For \(X\sim\nu\), let
\(X^+\mid X\sim\Ppi(\cdot\mid X)\), and write \(\nup=\nu\Ppi\) for the
marginal distribution of \(X^+\). We make the following coverage condition.

\begin{enumerate}[label=\textbf{(A\arabic*)}, ref=A\arabic*, series=forecond]
\item \label{ass:overlap}
\textit{One-step target coverage.} The measures \(\dinit\) and \(\nup\) are
absolutely continuous with respect to \(\nu\).
\end{enumerate}
For a discount factor \(\gamma\in[0,1)\), the normalized discounted target
occupancy measure is
\[
    d_{\pi,\gamma}
    =
    (1-\gamma)\sum_{t=0}^{\infty}\gamma^t \dinit \Ppi^t .
\]
Under Condition~\ref{ass:overlap}, \(d_{\pi,\gamma}\ll\nu\). Our goal is to
estimate the state--action occupancy ratio
\[
    \omegastar=\frac{\dd d_{\pi,\gamma}}{\dd\nu}.
\]
This ratio is useful for policy evaluation because, for any integrable reward
function \(r\),
\[
    V_\pi(r)
    :=
    E_{d_0,P_\pi}\left\{(1-\gamma)\sum_{t=0}^\infty
    \gamma^t r(X_t)\right\}
    =
    E_{d_{\pi,\gamma}}\{r(X)\}
    =
    E_\nu\{\omegastar(X)r(X)\},
\]
where the first expectation is over trajectories \(\{X_t\}_{t\ge0}\) with
\(X_0\sim d_0\) evolving according to \(P_\pi\).

When Condition~\ref{ass:overlap} fails, the full occupancy ratio is not
identified. Section~\ref{sec:coverage-truncated-fore} instead targets a
coverage-stopped occupancy, estimated using a learned coverage classifier,
with clipping used to regularize estimation.

\subsection{Adjoint Bellman identification}
\label{sec:adjoint-identification}

The discounted occupancy measure satisfies the Bellman equation
\begin{equation}
\label{eqn::occbell}
    d_{\pi,\gamma}
    =
    (1-\gamma)\dinit
    +
    \gamma d_{\pi,\gamma}\Ppi .
\end{equation}
Taking Radon--Nikodym derivatives in \eqref{eqn::occbell} gives
\[
    \omegastar
    =
    (1-\gamma)\omegazero
    +
    \gamma
    \frac{\dd\{(\omegastar\nu)\Ppi\}}{\dd\nu}; \qquad     \omegazero:=\frac{\dd\dinit}{\dd\nu}.
\]
Thus, for any \(\omega\) such that \((\omega\nu)\Ppi\ll\nu\), define the
adjoint Bellman operator \citep{ueharaEtAl2021FiniteSampleMinimax}
\[
    \Bpig\omega
    =
    (1-\gamma)\omegazero
    +
    \gamma
    \frac{\dd\{(\omega\nu)\Ppi\}}{\dd\nu}.
\]
Then the occupancy ratio is characterized by the fixed-point equation
\begin{equation}
\label{eq:adjoint-bellman-fixed-point}
    \omegastar=\Bpig\omegastar .
\end{equation}

Although \(\Bpig\omega\) is generally not available pointwise, its action
against critic functions can be evaluated from one-step transitions. For any
measurable \(f\) for which the expectations exist,
\begin{equation}
\label{eq:bellman-moment-update}
    E_\nu\{(\Bpig\omega)(X)f(X)\}
    =
    (1-\gamma)E_{\dinit}\{f(X)\}
    +
    \gamma E_\nu\{\omega(X)f(X^+)\},
\end{equation}
where \(X^+\sim\Ppi(\cdot\mid X)\). At the fixed point, this becomes the
occupancy Bellman moment identity
\begin{equation}
\label{eq:weak-occupancy-balance}
    E_\nu\!\left[
        \omegastar(X)\{f(X)-\gamma f(X^+)\}
    \right]
    =
    (1-\gamma)E_{\dinit}\{f(X)\}.
\end{equation}

Minimax
occupancy-balancing methods \citep{liuEtAl2018InfiniteHorizonOPE,nachumEtAl2019DualDICE,
ueharaEtAl2020MWLMQL,ueharaEtAl2021FiniteSampleMinimax} estimate a ratio function by making violations of
\eqref{eq:weak-occupancy-balance} small uniformly over a critic class:
\begin{equation}
    \label{eq:minimax}
    \argmin_{\omega\in\mathcal W}
    \sup_{f\in\mathcal F}
    \left\{
        (1-\gamma)E_{\dinit}\{f(X)\}
        -
        E_\nu\!\left[
            \omega(X)\{f(X)-\gamma f(X^+)\}
        \right]
    \right\}.
\end{equation}
Here \(f\in\mathcal F\) acts as a critic for violations of the adjoint Bellman balance equations. To make this into an estimator, one replaces true expectations with empirical ones and also regularizes $\omega$ and/or $f$.
This approach generally requires that the critic class contain witnesses for the
adjoint Bellman residuals generated by candidate weights. For example,
\citet{ueharaEtAl2021FiniteSampleMinimax} require a
completeness condition of the form \(\{\mathsf B_\gamma^\pi \omega-\omega:\omega\in\mathcal W\}\subseteq c\mathcal F\) for some scaling $c>0$, which, combined with $\omegastar\in\mathcal W$, ensures that $\omegastar$ is in the argmin set in \eqref{eq:minimax}. This completeness condition is similar to Bellman completeness, but using the adjoint Bellman operator and with the critic class allowed to be different from the hypothesis class. Nonetheless it can be quite restrictive.

\textsc{FORE} takes a complementary fixed-point view. Instead of minimizing a
worst-case balance residual, it iterates the adjoint Bellman map to reach the fixed point. The
moment identity \eqref{eq:bellman-moment-update} provides an estimable loss for each
KL-projected update.

\section{FORE: Fitted Occupancy-Ratio Evaluation}
\label{sec:kl-fori}

The adjoint Bellman identification suggests estimating the discounted occupancy
ratio by iterating the adjoint Bellman map. In general state-action spaces,
however, the exact image \(\Bpig\omega\) is not available as a pointwise density
ratio. \textsc{FORE} addresses this by replacing each exact Bellman image with
its KL projection onto a tractable normalized ratio class. The construction
rests on two population facts: the exact adjoint Bellman map is contractive in
relative entropy, and the corresponding KL projection can be written using only
initial-state moments and one-step target-policy transitions.

\subsection{KL contraction of the adjoint Bellman operator}
\label{sec:kl-contraction}

We first establish the population stability that makes the fitted iteration
well posed. Let
\[
    \Delta_\nu=\{\omega\geq 0:E_\nu\omega=1\},
    \qquad
    D_\nu(\omega\|\widetilde\omega)
    =
    E_\nu\!\left[
        \omega(X)\log\frac{\omega(X)}{\widetilde\omega(X)}
    \right].
\]
Thus \(D_\nu(\omega\|\widetilde\omega)=D_{\rm KL}(\omega\nu\|\widetilde\omega\nu)\),
where \(D_{\rm KL}\) denotes the KL divergence between measures.
Starting from any \(\omega^{(0)}\in\Delta_\nu\), the exact
adjoint Bellman iteration is
\[
    \omega^{(k+1)}=\Bpig\omega^{(k)}, \qquad k=0,1,\ldots .
\]
The next lemma shows that this exact adjoint Bellman iteration contracts relative entropy to the target ratio.

\begin{lemma}[KL contraction of the adjoint Bellman operator]
\label{lem:adjoint-bellman-kl-contraction}
Suppose Condition~\ref{ass:overlap} holds and \(\gamma\in[0,1)\). Then, for any
\(\omega,\widetilde\omega\in\Delta_\nu\),
\[
    D_\nu(\Bpig\omega\|\Bpig\widetilde\omega)
    \leq
    \gamma D_\nu(\omega\|\widetilde\omega).
\]
Consequently, for any \(\omega\in\Delta_\nu\) with
\(D_\nu(\omega\|\omegastar)<\infty\),
\[
    D_\nu((\Bpig)^k\omega\|\omegastar)
    \leq
    \gamma^k D_\nu(\omega\|\omegastar),
    \qquad k\geq 0 .
\]
\end{lemma}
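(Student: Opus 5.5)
Write measures \(\mu=\omega\nu\) and \(\widetilde\mu=\widetilde\omega\nu\), both probability measures since \(\omega,\widetilde\omega\in\Delta_\nu\). By definition, \((\Bpig\omega)\nu=(1-\gamma)\dinit+\gamma\,\mu\Ppi\), and similarly \((\Bpig\widetilde\omega)\nu=(1-\gamma)\dinit+\gamma\,\widetilde\mu\Ppi\). So the first claim is a statement about KL divergence between probability measures:
\[
D_{\rm KL}\bigl((1-\gamma)\dinit+\gamma\mu\Ppi\,\big\|\,(1-\gamma)\dinit+\gamma\widetilde\mu\Ppi\bigr)\le\gamma D_{\rm KL}(\mu\|\widetilde\mu).
\]
A preliminary step checks that these objects are well defined under Condition~\ref{ass:overlap}. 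Since \(\mu\ll\nu\), we have \(\mu\Ppi\ll\nu\Ppi=\nup\ll\nu\), so the Radon--Nikodym derivative in \(\Bpig\omega\) exists. Moreover \(\Bpig\omega\ge0\) with \(E_\nu\Bpig\omega=(1-\gamma)+\gamma=1\), so \(\Bpig\) maps \(\Delta_\nu\) into itself.

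The proof then uses two standard facts. The first is joint convexity of KL divergence. Applied to the mixtures with weights \((1-\gamma,\gamma)\), and using that the \(\dinit\) components coincide, it gives
\[
D_{\rm KL}\bigl((1-\gamma)\dinit+\gamma\mu\Ppi\,\|\,(1-\gamma)\dinit+\gamma\widetilde\mu\Ppi\bigr)\le(1-\gamma)\cdot0+\gamma D_{\rm KL}(\mu\Ppi\|\widetilde\mu\Ppi).
\]
The second is the data-processing inequality for the Markov kernel \(\Ppi\), which gives \(D_{\rm KL}(\mu\Ppi\|\widetilde\mu\Ppi)\le D_{\rm KL}(\mu\|\widetilde\mu)\). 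It follows from the chain rule: the joint laws \(\mu\otimes\Ppi\) and \(\widetilde\mu\otimes\Ppi\) have KL equal to \(D_{\rm KL}(\mu\|\widetilde\mu)\), and marginalizing cannot increase KL.

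Combining the two facts and translating back via \(D_\nu(\omega\|\widetilde\omega)=D_{\rm KL}(\omega\nu\|\widetilde\omega\nu)\) yields the first inequality. If \(D_\nu(\omega\|\widetilde\omega)=\infty\), the claim is trivial.

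For the consequence, take \(\widetilde\omega=\omegastar\). Note that \(\omegastar\in\Delta_\nu\), because \(d_{\pi,\gamma}\) is a probability measure and \(d_{\pi,\gamma}\ll\nu\). Then use \(\Bpig\omegastar=\omegastar\) from \eqref{eq:adjoint-bellman-fixed-point} and induct on \(k\). Each iterate stays in \(\Delta_\nu\) by the preliminary step, and the bound stays finite by induction.

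The main subtlety is in the measure-theoretic details rather than the idea. One needs joint convexity and data processing in general (non-discrete) spaces, possibly with infinite values. I would cite the standard variational or \(f\)-divergence form (e.g.\ the Donsker--Varadhan representation, or the chain rule for KL on \(\sX\times\sX\)) rather than reprove them.
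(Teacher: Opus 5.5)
Your proposal is correct and matches the paper's proof. Joint convexity of KL removes the common \((1-\gamma)\dinit\) component, data processing for the Markov kernel \(\Ppi\) yields \(\gamma D_\nu(\omega\|\widetilde\omega)\), and the iterated bound follows from \(\Bpig\omegastar=\omegastar\) by induction. Your well-definedness check, that \((\omega\nu)\Ppi\ll\nu\) and that \(\Bpig\) preserves \(\Delta_\nu\), is the same absolute-continuity propagation the paper records separately in its appendix.
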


\begin{proof}[Proof sketch]
By joint convexity of KL and the data processing inequality for the
Markov kernel \(\Ppi\) \citep{coverThomas2006Elements},
\begin{align*}
    D_\nu(\Bpig\omega\|\Bpig\widetilde\omega)
    &=
    D_{\rm KL}\!\left\{
        (1-\gamma)\dinit+\gamma(\omega\nu)\Ppi
        \,\middle\|\,
        (1-\gamma)\dinit+\gamma(\widetilde\omega\nu)\Ppi
    \right\} \\
    &\leq
    (1-\gamma)D_{\rm KL}(\dinit\|\dinit)
    +
    \gamma D_{\rm KL}\{(\omega\nu)\Ppi\|(\widetilde\omega\nu)\Ppi\} \\
    &=
    \gamma D_{\rm KL}\{(\omega\nu)\Ppi\|(\widetilde\omega\nu)\Ppi\} \\
    &\leq
    \gamma D_{\rm KL}(\omega\nu\|\widetilde\omega\nu) \\
    &=
    \gamma D_\nu(\omega\|\widetilde\omega).\qedhere
\end{align*}
\end{proof}

\noindent The exact adjoint Bellman iteration is infeasible because we do not know $\Bpig$ and we cannot approximate it uniformly well over the unrestricted ratio space $\Delta_\nu$.

\subsection{KL-projected Bellman updates}
\label{sec:kl-projected-bellman-updates}

We now restrict to a hypothesis class of weights: given a hypothesis class \(\mathcal H\) of log-ratios, let
\[
    \mathcal W
    =
    \{\omega_h:h\in\mathcal H\},
    \qquad
    \omega_h(x)=\exp\{h(x)-\Lambda_\nu(h)\},
    \qquad
    \Lambda_\nu(h)=\log E_\nu e^{h(X)} .
\]
The log-partition term \(\Lambda_\nu(h)\) normalizes each candidate so that
\(E_\nu\{\omega_h(X)\}=1\) and \(\sW\subset\Delta_\nu\).

For \(u\in\Delta_\nu\), define the KL projection onto \(\sW\) by
\[
    \Pi_{\sW}^{\rm KL}u
    \in
    \argmin_{v\in\sW}D_\nu(u\|v),
\]
whenever the minimizer exists. The population-level KL-projected adjoint
Bellman operator is the composition
\[
    \mathsf T_{\sW}^{\rm KL}\omega
    =
    \Pi_{\sW}^{\rm KL}\Bpig\omega
    \in
    \argmin_{\tilde\omega\in\sW} D_\nu(\Bpig\omega\|\tilde\omega).
\]
Starting from any \(\omega^{(0)}\in\sW\), the exact KL-projected adjoint
Bellman iteration is
\begin{equation}\label{eq:pop-fori}
    \omega^{(k+1)}
    =
    \mathsf T_{\sW}^{\rm KL}\omega^{(k)},
    \qquad
    k=0,\ldots,K-1 .
\end{equation}

Although \(\Bpig\omega\) is generally not available pointwise, the following
lemma shows that its KL projection onto \(\sW\), \(\mathsf T_{\sW}^{\rm KL}\omega\),
can be learned from transition-data moments using supervised learning.

\begin{lemma}[KL projection loss]
\label{lem:bellman-kl-loss}
Suppose Condition~\ref{ass:overlap} holds. For any
\(\omega\in\Delta_\nu\), the KL projection of \(\Bpig\omega\) onto
\(\mathcal W\) is obtained by solving
\[
    \argmin_{h\in\mathcal H}
    D_\nu(\Bpig\omega\|\omega_h)
    =
    \argmin_{h\in\mathcal H}
    \left\{
        \Lambda_\nu(h)
        -
        (1-\gamma)E_{\dinit}\{h(X)\}
        -
        \gamma E_\nu\{\omega(X)h(X^+)\}
    \right\},
\]
where \(X^+\sim\Ppi(\cdot\mid X)\).
\end{lemma}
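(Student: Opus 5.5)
The plan is to expand the KL divergence, discard the term that does not depend on \(h\), and then rewrite the remaining cross term with the moment identity \eqref{eq:bellman-moment-update}. Write \(u=\Bpig\omega\). Condition~\ref{ass:overlap} ensures \(u\) is well defined, since \((\omega\nu)\Ppi\ll\nup\ll\nu\) when \(\omega\) is bounded, and \(\dinit\ll\nu\). I would first check that \(u\in\Delta_\nu\). Nonnegativity is clear. For the normalization, apply \eqref{eq:bellman-moment-update} with \(f\equiv1\):
\[
E_\nu u=(1-\gamma)+\gamma E_\nu\omega=1.
\]

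Next, expand the divergence for \(\omega_h\in\sW\):
\[
D_\nu(u\|\omega_h)=E_\nu\{u\log u\}-E_\nu\{u\,h\}+\Lambda_\nu(h)\,E_\nu u .
\]
Since \(E_\nu u=1\), the last term is simply \(\Lambda_\nu(h)\). The first term, the negative entropy of \(u\) relative to \(\nu\), does not depend on \(h\). If it is finite, it can be dropped from the argmin. If it is \(+\infty\), both sides should be read with the convention that the projection is defined by the reduced objective; I would note this as a caveat, or simply assume \(D_\nu(u\|\omega_h)<\infty\) for some \(h\).

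The key step is the cross term. Apply \eqref{eq:bellman-moment-update} with \(f=h\):
\[
E_\nu\{u(X)h(X)\}=(1-\gamma)E_{\dinit}\{h(X)\}+\gamma E_\nu\{\omega(X)h(X^+)\}.
\]
This identity holds because \(E_\nu\{u f\}=(1-\gamma)\int f\,\dd\dinit+\gamma\int f\,\dd\{(\omega\nu)\Ppi\}\), and the last integral equals \(E_\nu\{\omega(X)f(X^+)\}\) by the definition of the pushforward. Substituting this expression into the expansion gives exactly the stated objective, up to the constant \(E_\nu\{u\log u\}\). The two argmin sets therefore coincide.

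The main obstacle is integrability: the equality of argmins requires every term to be finite, or at least requires handling infinities consistently. I would add the standing requirement that \(h\) be \(\nu\)-, \(\dinit\)-, and \(\omega\nu\Ppi\)-integrable for \(h\in\mathcal H\), for example by taking \(\mathcal H\) bounded, and that \(\Lambda_\nu(h)<\infty\). With these in place, the manipulations are justified by linearity of the integral. The splitting of \(\log(u/\omega_h)\) into \(\log u-h+\Lambda_\nu(h)\) is valid \(u\nu\)-almost everywhere, because \(\omega_h>0\) everywhere.
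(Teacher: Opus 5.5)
Your proposal is correct and follows the paper's proof. You expand $D_\nu(\Bpig\omega\|\omega_h)$, drop the $h$-independent entropy term, and rewrite $E_\nu\{(\Bpig\omega)h\}$ using the moment identity \eqref{eq:bellman-moment-update}, while also making explicit what the paper leaves implicit ($E_\nu\Bpig\omega=1$ via $f\equiv1$, and integrability of $h$). One small correction: boundedness of $\omega$ is not needed for $(\omega\nu)\Ppi\ll\nu$, because $\omega\nu\ll\nu$ together with $\nu\Ppi\ll\nu$ already gives this for every $\omega\in\Delta_\nu$ (Lemma~\ref{lem:ac-propagation}).
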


\noindent The contraction argument in Lemma~\ref{lem:adjoint-bellman-kl-contraction} is not specific to KL: by joint convexity and data processing \citep{coverThomas2006Elements,raginsky2014StrongDataProcessing}, the adjoint Bellman map is a contraction with respect to any \(f\)-divergence. KL is used because, for the normalized exponential ratio class, its projection reduces to the single-level loss in Lemma~\ref{lem:bellman-kl-loss}.

\subsection{Empirical FORE algorithm}
\label{sec:empirical-fori}

We now turn the exact KL-projected adjoint Bellman iteration into an estimator.
Lemma~\ref{lem:bellman-kl-loss} provides the bridge: each KL-projected step
depends only on the initial moment \(E_{\dinit}\{h(X)\}\), the one-step moment
\(E_\nu\{\omega(X)h(X^+)\}\), and the normalizing log-partition function. Each
quantity has a direct sample analogue.

Suppose we observe one-step transitions
\(X_i=(S_i,A_i)\sim\nu\), \(S_i'\sim P(\cdot\mid X_i)\),
\(i=1,\ldots,n\). For each transition, draw
\(A_i^+\sim\pi(\cdot\mid S_i')\) and set \(X_i^+=(S_i',A_i^+)\).\footnote{If \(\mathcal A\) is discrete, one can replace \(h(X_i^+)\) everywhere by \(\sum_{a\in\mathcal A}h(S_i',a)\pi(a\mid S_i')\). Alternatively, one can sample \(A_i^{+,j}\sim\pi(\cdot\mid S_i')\) multiple times, for \(j=1,\dots,m\), and replace \(h(X_i^+)\) everywhere by
\(\frac1m\sum_{j=1}^m h(X_i^{+,j})\), or by any other unbiased estimator of the conditional expectation. This reduces the conditional Monte Carlo variance but does not affect the rates.} Let \(\widehat P_0 h\) denote an estimator of
\(E_{\dinit}\{h(X)\}\), where \(\dinit\) is the target initial state-action distribution,
with \(\widehat P_0 1=1\). For example, if initial samples
\(X_1^0\sim\dinit,\dots,X_m^0\sim\dinit\) are available, one may take
\(\widehat P_0 h = m^{-1}\sum_{i=1}^m h(X_i^0)\). Alternatively, if the state
marginal of \(\nu\) equals the target initial state distribution \(\mu_0\), one
may draw \(A_i^\pi\sim\pi(\cdot\mid S_i)\) and take
\(\widehat P_0 h = n^{-1}\sum_{i=1}^n h(S_i,A_i^\pi)\).

\begin{algorithm}[!htb]
\caption{\textsc{FORE}: Fitted Occupancy-Ratio Evaluation}
\label{alg:fori}
\begin{algorithmic}[1]
\Require Offline transitions \(\{X_i=(S_i,A_i),S_i'\}_{i=1}^n\), initial-moment estimator \(\widehat P_0\),
target policy \(\pi\), discount \(\gamma\), function class \(\mathcal H\),
iteration count \(K\)
\State Draw \(A_i^+\sim\pi(\cdot\mid S_i')\) and set
\(X_i^+=(S_i',A_i^+)\), \(i=1,\ldots,n\)
\State Initialize \(\widehat\omega^{(0)}(x)\equiv 1\)
\For{\(k=0,\ldots,K-1\)}
    \State Compute
    \[
        \widehat h_{k+1}
        \in
        \argmin_{h\in\mathcal H}
        \left\{
            \log\left(
                \frac{1}{n}\sum_{i=1}^n e^{h(X_i)}
            \right)
            -
            (1-\gamma)\widehat P_0 h
            -
            \gamma
            \frac{
                n^{-1}\sum_{i=1}^n
                \widehat\omega^{(k)}(X_i)h(X_i^+)
            }{
                n^{-1}\sum_{i=1}^n
                \widehat\omega^{(k)}(X_i)
            }
        \right\}.
    \]
    \State Set
    \[
        \widehat\omega^{(k+1)}(x)
        =
        \frac{e^{\widehat h_{k+1}(x)}}
        {\frac{1}{n}\sum_{i=1}^n e^{\widehat h_{k+1}(X_i)}}.
    \]
\EndFor
\Ensure Occupancy-ratio estimate \(\widehat\omega^{(K)}\)
\end{algorithmic}
\end{algorithm}

The resulting estimator replaces the population moments in
Lemma~\ref{lem:bellman-kl-loss} by sample averages and normalizes each
exponential update empirically. Algorithm~\ref{alg:fori} states this fitted
recursion, initialized at \(\widehat\omega^{(0)}\equiv1\). The objective function 
\[
    \widehat L(h;\omega)
    =
    \widehat\Lambda_\nu(h)
    -
    (1-\gamma)\widehat P_0 h
    -
    \gamma
    \frac{n^{-1}\sum_{i=1}^n\omega(X_i)h(X_i^+)}
    {n^{-1}\sum_{i=1}^n\omega(X_i)},      \qquad \widehat\Lambda_\nu(h)
    =
    \log\left\{
        \frac{1}{n}\sum_{i=1}^n e^{h(X_i)}
    \right\}
\]
is convex in \(h\). Thus, for a
linear hypothesis class, the objective remains convex in the linear
coefficients. For nonlinear classes, one can use batched stochastic gradients by
writing the empirical log partition in variational form,
\[
    \widehat\Lambda_\nu(h)
    =
    \inf_{a\in\mathbb R}
    \left\{
        a-1+\frac1n\sum_{i=1}^n e^{h(X_i)-a}
    \right\}.
\]
For a parametrized class \(\{h_\theta:\theta\in\Theta\}\), a transition batch
\(\{(X_i,X_i^+)\}_{i=1}^b\), and an initial-state batch
\(\{X_i^0\}_{i=1}^b\), the corresponding stochastic gradient in \((\theta,a)\)
is
\[
\begin{pmatrix}
\displaystyle
\frac1b\sum_{i=1}^b e^{h_\theta(X_i)-a}\nabla_\theta h_\theta(X_i)
-(1-\gamma)\frac1b\sum_{i=1}^b\nabla_\theta h_\theta(X_i^0)
-\gamma
\frac{\sum_{i=1}^b
    \widehat\omega^{(k)}(X_i)\nabla_\theta h_\theta(X_i^+)}
{\sum_{i=1}^b \widehat\omega^{(k)}(X_i)}
\\[1ex]
\displaystyle
1-\frac1b\sum_{i=1}^b e^{h_\theta(X_i)-a}
\end{pmatrix}.
\]
One may also update \(\widehat\omega^{(k)}\), after one or a few batched gradient
steps, to the current empirically normalized ratio
\(\widehat\omega_{h_{\widehat\theta}}(x):=
e^{h_{\widehat\theta}(x)}/\{n^{-1}\sum_{i=1}^n
e^{h_{\widehat\theta}(X_i)}\}\), rather than waiting for convergence of each
iteration, as in practical neural fitted value iteration
\citep{mnih2013playing}. Regularization, such as a Tikhonov penalty, can be
added to the same objective.

\paragraph{Poisson-loss implementation.}
\textsc{FORE} can equivalently be formulated using a generalized-KL objective
for finite measures, which has the form of a Poisson loss. When the log-ratio
class is closed under additive constants, replacing
\(\widehat\Lambda_\nu(h)
=\log\{n^{-1}\sum_{i=1}^n e^{h(X_i)}\}\) with
\(n^{-1}\sum_{i=1}^n e^{h(X_i)}\) yields the same empirically normalized fitted
ratio. This formulation corresponds to
Algorithm~\ref{alg:coverage-truncated-fore} in
Section~\ref{sec:coverage-clipped-fitted} with \(\mathcal C=\{1\}\) and
clipping disabled.

\section{Ratio-approximation Guarantees for FORE}
\label{sec:main-guarantees}

This section establishes guarantees for how well FORE approximates $\omegastar$ in KL-divergence.

\subsection{KL-projected fixed-point recursion}
\label{sec:kl-fori-theory}

We begin by analyzing the idealized population version of \textsc{FORE} where we iterate $\mathsf T_{\sW}^{\rm KL}$, as given by \eqref{eq:pop-fori}. Then we compose this analysis with the sample-based approximation errors. We leverage the following conditions in addition to  \ref{ass:overlap}:
 
\begin{enumerate}[label=\textbf{(A\arabic*)}, ref=A\arabic*, resume=forecond]
\item \label{ass:kl-class}
\textit{Closed convex log-ratio class.} The class \(\sH\) is
convex, closed, and totally bounded as a subset of \(L^2(\nu)\).

\item \label{ass:kl-population-integrability}
\label{ass:kl-target-integrability}
\label{ass:kl-bellman-entropy}
\textit{Population positivity and finite entropy.} The target ratio satisfies
\(\omegastar>0\) \(\nu\)-a.e. With
\(\cpi=\mathrm d\nup/\mathrm d\nu\),
\[
    E_\nu\{\omegazero\log\omegazero\}
    +
    E_\nu\{\cpi\log\cpi\}<\infty .
\]

\item \label{ass:kl-bounded}
\textit{Bounded centered log class.} There exist a measurable set
\(\mathcal X_R\) with \(\nu(\mathcal X_R)=1\) and a finite constant \(R\) such that
\[
    \sup_{h\in\sH}
    \left|h(x)-E_\nu\{h(X)\}\right|
    \le R,
    \qquad x\in\mathcal X_R.
\]
\end{enumerate}
Condition~\ref{ass:kl-class} is mild for finite-dimensional linear classes:
if \(\theta\mapsto h_\theta\) is continuous into \(L^2(\nu)\) and the parameter
space is compact and convex, then \(\sH\) is convex, closed, and totally
bounded. Condition~\ref{ass:kl-population-integrability} requires
positivity for the target occupancy ratio and ensures KL divergences and entropies are well-defined and finite. Positivity is relative to \(\nu\); when the target
occupancy support is known, one may restrict \(\nu\) to that support.   Finally, Condition~\ref{ass:kl-bounded} uniformly bounds
the normalized ratios, controls empirical-process envelopes, and ensures local
quadratic curvature of the excess KL loss.

Define the KL approximation error by
\[
    \varepsilon_{\rm KL}
    :=
    \inf_{v\in\sW}
    D_\nu(v\|\omegastar).
\]
In particular, if we just have realizability \(\omegastar\in\sW\), then
\(\varepsilon_{\rm KL}=0\).

\begin{theorem}[KL-projected fixed-point recursion]
\label{thm:kl-fori-realizable}
Let \(\gamma\in[0,1)\). Suppose Conditions~\ref{ass:overlap},
\ref{ass:kl-class} and~\ref{ass:kl-population-integrability} hold, and suppose
Condition~\ref{ass:kl-bounded} holds with constant \(R\). Then there is a
finite constant \(C_{\rm app}\), depending only on \(R\), such that, for every
\(\omega\in\sW\),
\[
    D_\nu(\mathsf T_{\sW}^{\rm KL}\omega\|\omegastar)
    \le
    \gamma D_\nu(\omega\|\omegastar)
    +
    C_{\rm app}\varepsilon_{\rm KL}.
\]
Consequently, for any \(\omega^{(0)}\in\sW\), the iterates defined by \( \omega^{(k+1)}
    =
    \mathsf T_{\sW}^{\rm KL}\omega^{(k)},\) \(k=0,\ldots,K-1,\)
satisfy
\[
    D_\nu(\omega^{(K)}\|\omegastar)
    \le
    \gamma^K D_\nu(\omega^{(0)}\|\omegastar)
    +
    C_{\rm app}\frac{1-\gamma^K}{1-\gamma}\varepsilon_{\rm KL}.
\]
\end{theorem}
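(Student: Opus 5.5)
The plan is to reduce the one-step inequality to a Pythagorean-type inequality for KL projections onto the normalized exponential class \(\sW\), and then combine it with Lemma~\ref{lem:adjoint-bellman-kl-contraction}. The \(K\)-step bound follows from the one-step bound by induction and summing the geometric series \(\sum_{j<K}\gamma^j\). So all the work is in the one-step inequality.

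\emph{Step 1: a Pythagorean inequality.} Fix \(\omega\in\sW\), set \(u=\Bpig\omega\), and let \(v^\ast=\omega_{h^\ast}=\mathsf T_{\sW}^{\rm KL}\omega\). Existence of \(v^\ast\) should follow from Lemma~\ref{lem:bellman-kl-loss}: the objective \(L_u(h)=\Lambda_\nu(h)-E_u h\) is convex and continuous on the closed, convex, totally bounded set \(\sH\), and by Condition~\ref{ass:kl-bounded} it is finite there. Finiteness of \(E_u h\) should come from Condition~\ref{ass:kl-population-integrability} together with \(\omega\le e^{2R}\). Convexity of \(\sH\) gives the first-order condition
\[
    E_{v^\ast}(h-h^\ast)-E_u(h-h^\ast)\ge 0
    \qquad\text{for all } h\in\sH .
\]
Now take any \(\bar\omega=\omega_{\bar h}\in\sW\). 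Expanding log-ratios gives
\[
    D_\nu(u\|\bar\omega)-D_\nu(u\|v^\ast)
    =E_u(h^\ast-\bar h)-\Lambda_\nu(h^\ast)+\Lambda_\nu(\bar h).
\]
Applying the first-order condition at \(h=\bar h\) bounds the right-hand side below by \(D_\nu(v^\ast\|\bar\omega)\). Hence
\[
    D_\nu(v^\ast\|\bar\omega)\le D_\nu(u\|\bar\omega)-D_\nu(u\|v^\ast)\le D_\nu(u\|\bar\omega).
\]

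\emph{Step 2: transfer from \(\bar\omega\) to \(\omegastar\).} Take \(\bar\omega\) to be an (approximate) minimizer in the definition of \(\varepsilon_{\rm KL}\); it exists by compactness, up to an arbitrarily small slack. Then write
\[
    D_\nu(v^\ast\|\omegastar)
    =D_\nu(v^\ast\|\bar\omega)+E_{v^\ast}\log(\bar\omega/\omegastar),
\]
\[
    D_\nu(u\|\bar\omega)
    =D_\nu(u\|\omegastar)+E_u\log(\omegastar/\bar\omega).
\]
Combining these with Step~1 and with Lemma~\ref{lem:adjoint-bellman-kl-contraction}, which gives \(D_\nu(u\|\omegastar)\le\gamma D_\nu(\omega\|\omegastar)\) because \(\Bpig\omegastar=\omegastar\), yields
\[
    D_\nu(v^\ast\|\omegastar)\le \gamma D_\nu(\omega\|\omegastar)
    + E_{v^\ast-u}\log(\bar\omega/\omegastar),
\]
where \(E_{v^\ast-u}\) denotes \(E_\nu[(v^\ast-u)\,\cdot\,]\). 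It therefore suffices to show that this cross term is at most \(C_{\rm app}\varepsilon_{\rm KL}\).

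\emph{Step 3, the main obstacle: the cross term.} Both \(v^\ast\) and \(\bar\omega\) lie in \(\sW\), so Condition~\ref{ass:kl-bounded} gives \(e^{-2R}\le v^\ast/\bar\omega\le e^{2R}\), and therefore \(E_{v^\ast}g\) is comparable to \(E_{\bar\omega}g\) for \(g\ge0\). However, \(u\) is not in \(\sW\) and \(\log(\bar\omega/\omegastar)\) is unbounded, and a naive total-variation bound would only give \(\sqrt{\varepsilon_{\rm KL}}\).

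My first attempt is to exploit the optimality of \(\bar h\) for the I-projection \(\min_h D_\nu(\omega_h\|\omegastar)\). Its variational inequality reads
\[
    \mathrm{Cov}_{\bar\omega}\bigl(\bar h-\log\omegastar,\;h-\bar h\bigr)\ge0
    \qquad\text{for } h\in\sH .
\]
I would apply it at \(h=h^\ast\), and apply the projection optimality of Step~1 at \(h=\bar h\). The goal is to rewrite the cross term so that only \(\bar\omega\)-weighted (hence \(v^\ast\)-comparable) integrals of \(\log(\bar\omega/\omegastar)\) remain. The positive part of such an integral is bounded by \(e^{4R}\) times a quantity dominated by \(\varepsilon_{\rm KL}\), using \(E_{\bar\omega}\log(\bar\omega/\omegastar)=\varepsilon_{\rm KL}\) and the bounded ratio \(v^\ast/\bar\omega\).

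If the \(u\)-weighted term cannot be eliminated this way, the fallback is to apply the Step~1 inequality with the reference point \(\omegastar\) itself in place of \(\bar\omega\), approximated through the bounded class. The resulting \(R\)-dependent curvature constants from the local quadratic behaviour of \(\Lambda_\nu\) on the bounded class would then be absorbed into \(C_{\rm app}\).
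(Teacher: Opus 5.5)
Your Steps 1 and 2 are fine and match the paper's Pythagorean inequality and change-of-reference computation. Step 3 cannot be completed as planned, and the cause is a loss you take in Step 1. When you pass from \(D_\nu(v^\ast\|\bar\omega)\le D_\nu(u\|\bar\omega)-D_\nu(u\|v^\ast)\) to \(D_\nu(v^\ast\|\bar\omega)\le D_\nu(u\|\bar\omega)\), you discard \(D_\nu(u\|v^\ast)\). That quantity is of order one whenever the Bellman image is far from \(\sW\), and it is exactly what pays for the cross term.

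On its own, the cross term \(E_\nu[(v^\ast-u)f]\) with \(f=\log(\bar\omega/\omegastar)\) is linear in \(f\). But \(f\) is small only in a quadratic sense, since \(E_{\bar\omega}(e^{-f}-1+f)=\varepsilon_{\rm KL}\). So the cross term is generically of order \(\sqrt{\varepsilon_{\rm KL}}\), not \(\varepsilon_{\rm KL}\).

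Concretely, take an absorbing kernel \(\Ppi(\cdot\mid x)=\delta_x\), so that \(u=(1-\gamma)\omegastar+\gamma\omega\). Let \(\sH=\{t\phi:|t|\le T\}\), \(\log\omegastar=\phi+\eta\psi+\mathrm{const}\), and \(\omega=1\). The moment condition \(E_{v^\ast}\phi=E_u\phi\) of the interior M-projection reduces the cross term to exactly \(-\eta\,(E_{v^\ast}\psi-E_u\psi)\). This is of order \(\eta\) and can be made positive, while \(\varepsilon_{\rm KL}\) is of order \(\eta^2\).

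Your proposed devices cannot change this.
\begin{itemize}
\item The first-order conditions for \(h^\ast\) and for the I-projection \(\bar h\) involve only \(E_u\) of functions in \(\sH-\sH\) and \(\bar\omega\)-covariances. They can never remove \(E_u\log\omegastar\).
\item The claim about positive parts is false. \(E_{\bar\omega}[\log(\bar\omega/\omegastar)]_+\) is itself generically of order \(\sqrt{\varepsilon_{\rm KL}}\): take \(\bar\omega=1\) and \(\omegastar=1+\eta g\) with \(g=\pm1\).
\item The fallback is not meaningful, because the Pythagorean inequality needs its reference point in \(\sW\).
\end{itemize}
Completed with Cauchy--Schwarz, your route yields only a \(\sqrt{\varepsilon_{\rm KL}}\) approximation term.

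The missing idea is to keep \(-D_\nu(u\|v^\ast)\) and spend it on the \(u\)-weighted part of the cross term, using the Donsker--Varadhan (Gibbs) variational inequality. Set \(\delta=\log(\omegastar/\bar\omega)\), so the cross term equals \(E_\nu[(u-v^\ast)\delta]\). Then \(E_u\delta-D_\nu(u\|v^\ast)\le\log E_{v^\ast}e^{\delta}\), and together with \(\log z\le z-1\) this gives
\[
E_\nu[(u-v^\ast)\delta]-D_\nu(u\|v^\ast)\le\log E_{v^\ast}e^{\delta}-E_{v^\ast}\delta\le E_{v^\ast}\bigl(e^{\delta}-1-\delta\bigr).
\]
The integrand is now nonnegative and of second order. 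The bounded ratio \(v^\ast/\bar\omega\le e^{4R}\) therefore transfers it to \(e^{4R}E_{\bar\omega}(e^{\delta}-1-\delta)=e^{4R}D_\nu(\bar\omega\|\omegastar)\), using \(E_\nu\omegastar=E_\nu\bar\omega=1\). Note the constant is \(e^{4R}\), not \(e^{2R}\) as in your Step 3, since each element of \(\sW\) lies in \([e^{-2R},e^{2R}]\).

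This is Lemma~\ref{lem:kl-projection-violation}. Taking the infimum over \(\bar\omega\in\sW\) gives \(C_{\rm app}=e^{4R}\). Combined with your Steps 1--2 and Lemma~\ref{lem:adjoint-bellman-kl-contraction}, it yields the one-step bound, and the \(K\)-step bound follows by iteration as you describe.
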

\begin{proof}[Proof sketch in the realizable case]
Assume \(\omegastar\in\sW\). Since \(\sW\) is a normalized exponential family with convex natural-parameter
space, the KL projection satisfies the Pythagorean inequality for information
projections \citep{csiszar1975divergence,banerjee2005clustering}. Thus
\begin{align*}
    D_\nu(\mathsf T_{\sW}^{\rm KL}\omega\|\omegastar)
    &\le
    D_\nu(\Bpig\omega\|\omegastar)
    -
    D_\nu(\Bpig\omega\|\mathsf T_{\sW}^{\rm KL}\omega) \\
    &\le
    D_\nu(\Bpig\omega\|\omegastar) \\
    &=
    D_\nu(\Bpig\omega\|\Bpig\omegastar) \\
    &\le
    \gamma D_\nu(\omega\|\omegastar),
\end{align*}
where the last inequality is Lemma~\ref{lem:adjoint-bellman-kl-contraction}.
\end{proof}
 
Theorem~\ref{thm:kl-fori-realizable} gives a KL error recursion for the
population KL-projected fixed-point iteration. Each step decomposes into the
\(\gamma\)-contraction of the exact adjoint Bellman map toward the target
occupancy ratio and a KL projection error. Importantly, this projection error is
controlled by the best KL approximation error for the fixed point
\(\omegastar\), not by an approximation error for Bellman images of arbitrary
candidate ratios. In particular, if \(\omegastar\in\sW\), then the projection
error vanishes and the bound reduces to
\(D_\nu(\omega^{(K)}\|\omegastar)
\le
\gamma^K D_\nu(\omega^{(0)}\|\omegastar)\).

This is the main distinction from standard FQE/FVI analyses. In those settings,
the standard value Bellman operator is contractive in the stationary target-policy
norm, but the population algorithm composes this operator with a projection map,
typically an \(L^2\) projection under the offline data distribution
\citep{munosSzepesvari2008FVI,vanDerLaanKallus2025StationaryWeightedFQE}.
Because of this norm mismatch, the projection can prevent the projected
recursion from inheriting the stability of the Bellman fixed point. Stability
therefore typically requires Bellman completeness, approximate Bellman
completeness, or a small inherent Bellman error; these conditions ensure that
Bellman images of functions in the approximation class remain close to the class
\citep{munosSzepesvari2008FVI,chen2019information,foster2021offline}. By
contrast, \textsc{FORE} composes the adjoint Bellman operator with a KL
projection map. The adjoint Bellman step contracts relative entropy toward
\(\omegastar\), and the projection step uses the same KL loss. The projection
error is controlled by how well the ratio class approximates \(\omegastar\),
rather than by a global inherent adjoint Bellman error such as
\(
    \sup_{\omega\in\mathcal W}
    \inf_{\widetilde\omega\in\mathcal W}
    D_\nu\!\left(\Bpig\omega\,\middle\|\,\widetilde\omega\right).
\)
Thus, the analysis does not require the ratio class to be closed under adjoint
Bellman updates.

\subsection{Finite-sample error bounds}
\label{sec:fitted-kl-fori}

We next incorporate sampling error into Algorithm~\ref{alg:fori}. The theorem
below analyzes the exact-ERM fitted recursion with the initial moment estimated
from an independent sample.

For the finite-sample statement, let \(X_1^0,\ldots,X_n^0\) be i.i.d. samples
from \(\dinit\), independent of a transition sample with
\(X_i\stackrel{\mathrm{i.i.d.}}{\sim}\nu\) and
\(S_i'\mid X_i\sim P(\cdot\mid X_i)\).
Apply Algorithm~\ref{alg:fori} with
\(\widehat P_0h=n^{-1}\sum_{i=1}^n h(X_i^0)\) and exact ERM at each fitted
step. This produces iterates
\(\widehat\omega^{(0)},\ldots,\widehat\omega^{(K)}\). Each output is normalized
under the empirical offline data distribution. Since it need not integrate to one under
\(\nu\), we measure its error using the generalized KL divergence
\[
    D_\nu^{\rm gen}(f\|g)
    =
    E_\nu\left[
        f(X)\log\frac{f(X)}{g(X)}-f(X)+g(X)
    \right].
\]
This reduces to \(D_\nu(f\|g)\) when both arguments integrate to one under
\(\nu\), that is, $f,g\in\Delta_\nu$.

\begin{enumerate}[label=\textbf{(A\arabic*)}, ref=A\arabic*, resume=forecond]
\item \label{ass:fitted-kl-coverage-bounded}
\textit{Subexponential initial coverage and one-step smoothing.} There are
constants \(0<K_0,K_+<\infty\) such that
\[
    \|\omegazero\|_{\psi_1} \le K_0,
    \qquad
    \sup_{\omega\in\sW}
    \left\|
        \frac{\mathrm d\{(\omega\nu)\Ppi\}}{\mathrm d\nu}
    \right\|_{\psi_1}
    \le K_+,
\]
where
\(\|Z\|_{\psi_1}:=\inf\{s>0:E_\nu\exp(|Z|/s)\le2\}\).

\item \label{ass:fitted-kl-lower-tail}
\textit{Lower-tail margin.} There exist constants \(0<A<\infty\) and
\(\alpha>0\) such that, for every \(t\in(0,1]\),
\[
    \nu\{x:0<\omegastar(x)\le t\}\le A t^\alpha .
\]
\end{enumerate}
Condition~\ref{ass:fitted-kl-coverage-bounded} ensures that fitted-loss control under $\nu$ extends to the initial and successor distributions. It holds, for example, when the initial density ratio and the transition density relative to \(\nu\) are uniformly bounded, and more generally allows unbounded induced densities with uniformly exponential tails; see Lemma~\ref{lem:bounded-kernel-smoothing}. Condition~\ref{ass:fitted-kl-lower-tail} is a mild soft-margin condition that
allows \(\omegastar\) to approach zero, provided the \(\nu\)-mass of near-zero
regions decays polynomially. It holds automatically for any $\alpha > 0$ under the hard-margin condition
\(\omegastar(x)\ge m_\star>0\) almost surely.

The statistical error is governed by the local complexity of the log-ratio
class and by the multiplier class induced by the Bellman moment terms. For a
class \(\mathcal G\) of square-integrable functions under a distribution \(P\), define
the local Rademacher complexity
\citep{bartlettEtAl2005LocalRademacher}
\[
    \mathcal R_n(\mathcal G,r;P)
    =
    \E_{Z,\sigma}
    \sup_{\substack{g\in\mathcal G:\\
        \|g\|_{L^2(P)}\le r}}
    \left|
        \frac{1}{n}\sum_{i=1}^n
        \sigma_i g(Z_i)
    \right|,
\]
where \(Z_1,\ldots,Z_n \sim P\) are independent draws, and
\(\sigma_1,\ldots,\sigma_n\) are independent Rademacher variables. Let
\[
    \sH^\circ=\{h-E_\nu\{h(X)\}:h\in\sH\},
    \qquad
    \mathcal H_\Delta=\{h_1-h_2:h_1,h_2\in\sH^\circ\}.
\]
Let \(Q_{\nu,\Delta}\) denote the distribution of \((X,X)\) with \(X\sim\nu\), and let
\(Q_{\nu,\pi}\) denote the distribution of \((X,X^+)\) with \(X\sim\nu\) and
\(X^+\mid X\sim\Ppi(\cdot\mid X)\). Define the multiplier class
\[
    \mathcal G_\times
    =
    \left\{
    (x,x^+)\mapsto
    f(x)h_\Delta(x^+):
    f\in\sW,\ h_\Delta\in\mathcal H_\Delta
    \right\}.
\]
Define $ \mathfrak C_n(r)
    =
    \max\left\{
        \mathcal R_n(\mathcal H_\Delta,r;\nu),
        \mathcal R_n(\mathcal H_\Delta,r;\dinit),
        \mathcal R_n(\mathcal G_\times,r;Q_{\nu,\Delta}),
        \mathcal R_n(\mathcal G_\times,r;Q_{\nu,\pi})
    \right\}.$
Define the critical radius \citep{wainwright2019HighDimensionalStatistics}
\begin{equation}
\label{eq:rad-critical-radius}
    \mathfrak r_{n,\rm fit}
    =
    n^{-1/2}
    \vee
    \inf\left\{r>0:
    \mathfrak C_n(r)
    \le
    r^2
    \right\}.
\end{equation}
\begin{theorem}[Fitted \textsc{FORE} with empirical normalization]
\label{thm:fitted-kl-fori}
Let \(\gamma\in[0,1)\). Assume Conditions~\ref{ass:overlap},
\ref{ass:kl-class}, \ref{ass:kl-population-integrability},
\ref{ass:kl-bounded},
\ref{ass:fitted-kl-coverage-bounded}, and~\ref{ass:fitted-kl-lower-tail},
and assume \(0\in\sH\). Let
\(\{\widehat\omega^{(k)}\}_{k=0}^K\) be the fitted \textsc{FORE} iterates
defined by Algorithm~\ref{alg:fori}, initialized at
\(\widehat\omega^{(0)}\equiv1\).
Then, for every \(0<\delta<1\), with probability at least \(1-\delta\),
\[
\begin{aligned}
    D_\nu^{\rm gen}(\widehat\omega^{(K)}\|\omegastar)
    &\le
    C_{\rm env}\left(\frac{1+\gamma}{2}\right)^K
    D_\nu^{\rm gen}(\widehat\omega^{(0)}\|\omegastar)
    +
    \frac{C_{\rm env}}{1-\gamma}\varepsilon_{\rm KL}
    \\
    &\quad+
    \frac{C_{\rm env}}{(1-\gamma)^2}
    \log^2(en)
    \left\{
        \mathfrak r_{n,\rm fit}^2
        +
        \frac{\log(1/\delta)}{n}
    \right\}.
\end{aligned}
\]
Here, for universal finite exponents \(p,q\) and a finite
\(C_0=C_0(A,\alpha)\), \(C_{\rm env}<\infty\) may be chosen so that $C_{\rm env}
    \le
    C_0(1+K_0+K_+)^q(1+e^{2R})^p.$
\end{theorem}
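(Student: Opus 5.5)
The plan is to prove a one-step perturbed contraction for the empirical recursion and then unroll it. Fix an iterate \(\widehat\omega^{(k)}\), which is the empirically normalized \(\widehat\omega_{h}\) for some \(h\in\sH\). Let \(\bar\omega^{(k)}=\widehat\omega^{(k)}/E_\nu\widehat\omega^{(k)}\in\sW\) be its population renormalization. The population-projected update is then \(\omega^\star_{k+1}=\mathsf T_{\sW}^{\rm KL}\bar\omega^{(k)}\). Theorem~\ref{thm:kl-fori-realizable} gives
\[
D_\nu(\omega^\star_{k+1}\|\omegastar)\le\gamma D_\nu(\bar\omega^{(k)}\|\omegastar)+C_{\rm app}\varepsilon_{\rm KL}.
\]
It therefore suffices to show that, uniformly over the starting point in \(\sW\), the empirical minimizer \(\widehat h_{k+1}\) has excess population loss over \(\omega^\star_{k+1}\) of order \((1-\gamma)^{-1}\log^2(en)\{\mathfrak r_{n,\rm fit}^2+\log(1/\delta)/n\}\). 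After that, we convert between the generalized KL of the empirically normalized output and the KL of its population normalization.

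\emph{Step 1: curvature.} By Lemma~\ref{lem:bellman-kl-loss}, the population loss \(L(h;\omega)=\Lambda_\nu(h)-(1-\gamma)E_{\dinit}h-\gamma E_\nu\{\omega h(X^+)\}\) satisfies \(L(h;\omega)-L(h';\omega)=D_\nu(\Bpig\omega\|\omega_h)-D_\nu(\Bpig\omega\|\omega_{h'})\). Condition~\ref{ass:kl-bounded} keeps centered log-ratios in \([-R,R]\), so \(\Lambda_\nu\) is strongly convex with modulus of order \(e^{-2R}\) in \(\|h-h'\|_{L^2(\nu)}\) on \(\sH^\circ\). Convexity of \(\sH\) (Condition~\ref{ass:kl-class}) then yields a first-order optimality inequality at the projection. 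Together these give the quadratic margin
\[
L(h;\omega)-L(h^\star;\omega)\gtrsim e^{-2R}\|h-h^\star\|^2_{L^2(\nu)},
\]
and also \(D_\nu(\omega_h\|\omegastar)\le D_\nu(\omega_{h^\star}\|\omegastar)+C(R)\,[L(h)-L(h^\star)]\), via a three-point identity for exponential families.

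\emph{Step 2: uniform deviation.} We bound the empirical-minus-population loss difference \((\widehat L-L)(h;\widehat\omega)-(\widehat L-L)(h^\star;\widehat\omega)\) uniformly over \(h,h^\star\in\sH\) and weights \(\omega\in\sW\). This uses localized Rademacher bounds at radius \(\mathfrak r_{n,\rm fit}\):
\begin{itemize}[label={}]
\item the log-partition term through \(\mathcal H_\Delta\) under \(\nu\), where \(\log\) is Lipschitz on bounded ranges;
\item the initial term through \(\mathcal H_\Delta\) under \(\dinit\);
\item the Bellman moment through \(\mathcal G_\times\) under \(Q_{\nu,\pi}\), with the denominator handled through \(\mathcal G_\times\) under \(Q_{\nu,\Delta}\).
\end{itemize}
The subexponential envelopes of Condition~\ref{ass:fitted-kl-coverage-bounded} do two jobs: they relate \(L^2(\dinit)\) and \(L^2\) norms under successor measures to \(L^2(\nu)\) via Hölder, and they supply Bernstein/Talagrand concentration for unbounded \(\psi_1\) envelopes after truncation at level \(\log(en)\). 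This truncation is the source of the \(\log^2(en)\) factor. Uniformity in \(\omega\) is essential because \(\widehat\omega^{(k)}\) depends on the same data; this is why the multiplier class ranges over all \(f\in\sW\). Combining with Step~1 by the usual fixed-point argument gives excess loss \(\lesssim\log^2(en)\{\mathfrak r^2_{n,\rm fit}+\log(1/\delta)/n\}\) on a single event valid for all \(k\) simultaneously. No union bound over \(K\) is needed, since the event is uniform over \(\omega\).

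\emph{Step 3: unrolling and conversions.} The iterate feeding the next step is the empirically normalized \(\widehat\omega^{(k)}\), not \(\bar\omega^{(k)}\). The weight used in the Bellman term is self-normalized, so the loss actually uses \(\bar\omega^{(k)}\) up to the denominator error already controlled in Step~2. The output comparison rests on the identity
\[
D^{\rm gen}_\nu(c\,\bar\omega\|\omegastar)=c\,D_\nu(\bar\omega\|\omegastar)+c\log c-c+1,
\]
and \(|c-1|\) is controlled by the same empirical process over \(\sW\). Passing from excess loss to the KL to \(\omegastar\) requires handling \(\log\omegastar\), which may be unbounded below. Here Condition~\ref{ass:fitted-kl-lower-tail} and Condition~\ref{ass:kl-population-integrability} let us bound the relevant cross terms, at the cost of the constant \(C_0(A,\alpha)\). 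In doing so we absorb a \(\gamma\)-fraction slack via \(\gamma\mapsto(1+\gamma)/2\), using Young's inequality on cross terms of the form \(\sqrt{D\cdot\text{stat}}\). This costs the extra \((1-\gamma)^{-1}\) per step. Iterating \(a_{k+1}\le\frac{1+\gamma}{2}a_k+C\varepsilon_{\rm KL}+C(1-\gamma)^{-1}\text{stat}\) and summing the geometric series then yields the stated bound with \(C_{\rm env}\) polynomial in \(1+K_0+K_+\) and \(1+e^{2R}\).

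The main obstacle is Step~2 combined with Step~3's cross terms. The empirical process must be localized uniformly over the data-dependent weight class \(\sW\) with only \(\psi_1\) envelopes. In addition, turning excess-loss control at the projection into KL control toward \(\omegastar\) with the Young slack requires a careful three-point inequality whose remainder is quadratic in \(\|h-h^\star\|\). I would first attempt the deterministic three-point inequality, then plug in a truncated Talagrand bound.
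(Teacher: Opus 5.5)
Your architecture matches the paper's: you pass to the population renormalization $\widehat\omega^{(k)}/E_\nu\widehat\omega^{(k)}\in\sW$, use a localized deviation bound uniform over $f\in\sW$, apply Lemma~\ref{lem:kl-projected-recursion}, take Young slack with $\rho=(1+\gamma)/2$, and finish with the scalar identity for the empirically normalized output. The gap is in the passage from excess loss to KL toward $\omegastar$.

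First, the Step~1 inequality $D_\nu(\omega_h\|\omegastar)\le D_\nu(\omega_{h^\star}\|\omegastar)+C(R)\{L(h)-L(h^\star)\}$ is false in general. With $\bar u=\omega_{h^\star}$, the three-point identity leaves the cross term $\int(\omega_h-\bar u)\log(\bar u/\omegastar)\,\dd\nu$. This term is generically first order in $\omega_h-\bar u$, while the excess loss is second order.

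Second, the Young form you invoke in Step~3 does not rescue this with a constant. Cauchy--Schwarz bounds the cross term by $\{C_R\Delta\int\bar u\log^2(\bar u/\omegastar)\,\dd\nu\}^{1/2}$. Absorbing that into $\lambda D_\nu(\bar u\|\omegastar)$ requires $r(\log r)^2\le C\phi(r)$ for $r=\bar u/\omegastar$ and $\phi(r)=r\log r-r+1$. This holds only with $C\asymp 1+\log\sup r$. Under the soft margin of Condition~\ref{ass:fitted-kl-lower-tail}, $r$ is unbounded, so no constant $C_0(A,\alpha)$ suffices.

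The missing idea is a lower envelope, carried out in four steps:
\begin{enumerate}
\item Replace $\omegastar$ by $\omegastar\vee\tau$. For $0\le a\le e^{2R}$ this changes $D_\nu^{\rm gen}(a\|\cdot)$ by at most $A_{\rm lt}\tau^\alpha$, where $A_{\rm lt}=A(1+e^{2R}/\alpha)$ (Lemma~\ref{lem:kl-lower-tail-comparison}).
\item Run the approximate-projection step against this target, which costs a multiplier $1+\log(1/\tau)$ (Lemma~\ref{lem:kl-projection-perturbation}).
\item Carry the bias $A_{\rm lt}\tau^\alpha$ through the recursion.
\item Take $\tau=(a_n/A_{\rm lt})^{1/\alpha}$, where $a_n\ge n^{-1}$ is the per-step excess-loss level.
\end{enumerate}
The price is a factor $1+\alpha^{-1}\log_+(A_{\rm lt}n)$, which grows with $n$ and is not a constant.

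This also changes your log accounting. The empirical-process classes here are uniformly bounded: centered log-ratios are bounded by $R$ after fixing versions on $\mathcal X_R$, and $\sW\subset[e^{-2R},e^{2R}]$. So there is no unbounded $\psi_1$ envelope to truncate for concentration. Condition~\ref{ass:fitted-kl-coverage-bounded} is used only to transfer the $L^2(\nu)$ localization supplied by curvature to $L^2(\dinit)$ and to the successor law. For instance, $\int g^2 r\,\dd\nu\le t\|g\|_{L^2(\nu)}^2+\|g\|_\infty^2E_\nu(r-t)_+$ with $t\asymp(1+K_0+K_+)\log(en)$. That inflates squared local radii, and hence the excess loss, by a single $\log(en)$ (Lemmas~\ref{lem:kl-uniform-process} and~\ref{lem:kl-erm-excess}). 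The theorem's $\log^2(en)$ is this factor times the lower-envelope factor above. If your truncation step really produced $\log^2(en)$, adding the lower-envelope logarithm would leave you at $\log^3(en)$.

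A smaller point concerns the localized log-partition difference $\{\widehat\Lambda_\nu-\Lambda_\nu\}(h)-\{\widehat\Lambda_\nu-\Lambda_\nu\}(h_f^\star)$. It is not directly a contraction of $\mathcal H_\Delta$, because the base point $h_f^\star$ varies with $f$. The paper instead differentiates along $h_t=h_f^\star+t(h-h_f^\star)$, and this is where $\mathcal G_\times$ under $Q_{\nu,\Delta}$ enters. The denominator $P_{n,X}f$ needs only the class $\sW$ under $\nu$.
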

 
\paragraph{Bound terms.}
This bound decomposes the error into three terms: the fixed-point error of $K$ iterations of the idealized population iteration, the best KL approximation error for \(\omegastar\), and the statistical error of estimating the idealized population iteration using data. The first term is negligible even for moderate $K$. For example, any $K\geq \log n/\log (2/(1+\gamma))$ ensures this term is $O(1/n)$. The second term crucially only depends on how well our hypothesis class $\mathcal W$ approximates $\omegastar$, \textit{not} how well we approximate every iteration. This is exactly how our bounds are distinguished from (approximate) Bellman completeness. The third term is a standard statistical error for empirical risk minimization and we instantiate bounds on it for specific function classes below. Notice that unlike some analyses of fitted iterations \citep[e.g.][]{munosSzepesvari2008FVI,chang2022learning} we avoid splitting the data into $K$ samples and needing to balance the number of iterations and the amount of data available for statistical estimation. Following \citet{vanDerLaanKallus2025StationaryWeightedFQE,hu2025fast}, we control this by using a uniform statistical error, which is the reason for introducing the function class $G_\times$.

\paragraph{Horizon dependence.} In long-horizon value estimation, the powers of \((1-\gamma)^{-1}\) determine
how the generalized-KL error bound scales with the effective horizon. The
deterministic approximation term \(\varepsilon_{\rm KL}/(1-\gamma)\) retains
the population horizon factor from Theorem~\ref{thm:kl-fori-realizable},
whereas the finite-sample error
\[
    \frac{\log^2(en)}{(1-\gamma)^2}
    \left\{
        \mathfrak r_{n,\rm fit}^2
        +
        \frac{\log(1/\delta)}{n}
    \right\}
\]
pays one additional factor of \((1-\gamma)^{-1}\) due to the propagation of
statistical error across iterations. As shown in the next
section, the policy-value bounds depend on the
square root of the generalized-KL ratio error. Thus, the statistical term has
the familiar \((1-\gamma)^{-1}\) value-level horizon dependence of FQE, while
the deterministic approximation term for \textsc{FORE} has the more favorable
\(\sqrt{\varepsilon_{\rm KL}/(1-\gamma)}\) value-scale contribution. This
favorable dependence for the approximation term contrasts with standard FQE
bounds under approximate Bellman completeness, where inherent Bellman error is
propagated through the Bellman recursion and appears with \((1-\gamma)^{-1}\)
dependence at the value-error scale \citep{munosSzepesvari2008FVI}.

\paragraph{Bounds on statistical error for specific function classes.}

The key statistical term in Theorem~\ref{thm:fitted-kl-fori} is the critical radius $\mathfrak r_{n,\rm fit}$. We next discuss bounds for specific choices of our hypothesis class $\mathcal H$.

\begin{itemize}

\item \textit{Linear function classes}.
If $\mathcal H\subseteq\{x\mapsto\beta_0+\beta^\top \phi(x)\}$ where $\phi:\mathcal X\to\mathbb R^d$, then Corollary~\ref{cor:kl-finite-dimensional} in the appendix establishes that
\[
    \mathfrak r_{n,\rm fit}^2
    \lesssim
    \frac{d\log(n)}{n}.
\]

\item \textit{Nonparametric function classes}.
If the hypothesis class \(\mathcal H\) has a finite uniform
entropy integral, then Corollary~\ref{cor:kl-entropy-integral} bounds
\(\mathfrak r_{n,\rm fit}\) by the entropy-based critical radius associated
with \(\mathcal H\)
\citep{vanDerVaartWellner2011LocalMaximal,van2026researcher}. For bounded
H\"older log-ratio balls and Sobolev balls in dimension \(d\) and smoothness
\(s>d/2\), this gives \citep{nicklPotscher2007BracketingMetricEntropy}
\[\mathfrak r_{n,\rm fit}^2\lesssim n^{-2s/(2s+d)}.\]

\end{itemize}


\section{Applications to Policy Evaluation}
\label{sec:applications-policy-evaluation}

The preceding sections focus on estimating the discounted occupancy ratio
\(\omegastar=\dd d_{\pi,\gamma}/\dd\nu\). This ratio can be used to evaluate
bounded target-occupancy functionals. For any bounded measurable \(g\),
\[
    \Psi_\pi(g)
    :=
    E_{d_{\pi,\gamma}}\{g(X)\}
    =
    E_\nu\{\omegastar(X)g(X)\}.
\]
Thus, a single ratio fit can evaluate rewards, costs, feature moments, and
visitation probabilities under the target discounted occupancy.

Let \(\omega_{\rm fit}:=\widehat\omega^{(K_\omega)}\) denote the fitted
\textsc{FORE} estimate from Section~\ref{sec:fitted-kl-fori}, and define
\[
\begin{aligned}
    \mathcal E_{\rm FORE}^2
    :=
    \left(\frac{1+\gamma}{2}\right)^{K_\omega}
    D_\nu^{\rm gen}(\widehat\omega^{(0)}\|\omegastar)
    +
    \frac{\varepsilon_{\rm KL}}{1-\gamma} +
    \frac{\log^2(en)}{(1-\gamma)^2}
    \left\{
        \mathfrak r_{n,\rm fit}^2
        +
        \frac{\log(1/\delta)}{n}
    \right\},
\end{aligned}
\]
where \(\mathcal E_{\rm FORE}
    :=
    \{\mathcal E_{\rm FORE}^2\}^{1/2}\).

\begin{corollary}[Bounded target-functional bound]
\label{cor:fori-target-functional}
Suppose the conditions of Theorem~\ref{thm:fitted-kl-fori} hold.
Then, with probability at least \(1-\delta\), there is a finite constant
\(C_{\rm eval}\), depending only on the constants in
Conditions~\ref{ass:kl-bounded},
\ref{ass:fitted-kl-coverage-bounded}, and
\ref{ass:fitted-kl-lower-tail}, such that
\[
    \sup_{\|g\|_\infty\le 1}
    \left|
        E_\nu\{\omega_{\rm fit}(X)g(X)\}
        -
        \Psi_\pi(g)
    \right|
    \le
    C_{\rm eval}\mathcal E_{\rm FORE}.
\]
\end{corollary}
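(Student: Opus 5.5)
The plan is to reduce the functional error to an \(L^1(\nu)\) error of the fitted ratio, and then control that \(L^1\) error by the generalized KL divergence bounded in Theorem~\ref{thm:fitted-kl-fori}. For \(\|g\|_\infty\le1\),
\[
    \left|E_\nu\{\omega_{\rm fit}(X)g(X)\}-\Psi_\pi(g)\right|
    =
    \left|E_\nu[\{\omega_{\rm fit}(X)-\omegastar(X)\}g(X)]\right|
    \le
    \LoneNu{\omega_{\rm fit}-\omegastar}.
\]
So it suffices to show \(\LoneNu{\omega_{\rm fit}-\omegastar}\le C\{D_\nu^{\rm gen}(\omega_{\rm fit}\|\omegastar)\}^{1/2}\). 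We then invoke Theorem~\ref{thm:fitted-kl-fori} on its probability-\(1-\delta\) event. There the generalized KL is at most \(C_{\rm env}\) times the three terms defining \(\mathcal E_{\rm FORE}^2\), because the factors \(1/(1-\gamma)\) and \(1/(1-\gamma)^2\) appear identically. Hence \(C_{\rm eval}\) can be taken to be \(C\sqrt{C_{\rm env}}\), and \(C_{\rm env}\) depends only on \(R,K_0,K_+,A,\alpha\), as required.

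The key step is a Pinsker-type inequality for the generalized KL between finite measures, since \(\omega_{\rm fit}\) is normalized empirically rather than under \(\nu\). I would use the pointwise inequality, valid for \(a\ge0\) and \(b>0\),
\[
    a\log\frac ab-a+b\ \ge\ \frac{(a-b)^2}{2\max(a,b)}\ \ge\ \frac{3(a-b)^2}{2(2a+4b)},
\]
which follows from the standard bound \(\phi(t)=t\log t-t+1\ge 3(t-1)^2/(2t+4)\). Writing \(\phi_i=a\log(a/b)-a+b\) pointwise and applying Cauchy--Schwarz gives
\[
    \LoneNu{\omega_{\rm fit}-\omegastar}
    \le
    \Big(E_\nu\tfrac{(\omega_{\rm fit}-\omegastar)^2}{(2\omega_{\rm fit}+4\omegastar)/3}\Big)^{1/2}
    \Big(E_\nu\tfrac{2\omega_{\rm fit}+4\omegastar}{3}\Big)^{1/2}
    \le
    \{2D_\nu^{\rm gen}(\omega_{\rm fit}\|\omegastar)\}^{1/2}
    \Big(\tfrac{2E_\nu\omega_{\rm fit}+4}{3}\Big)^{1/2}.
\]

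It remains to bound the total mass \(E_\nu\omega_{\rm fit}\) by a constant. By Condition~\ref{ass:kl-bounded}, \(\omega_{\rm fit}=e^{\widehat h-\widehat\Lambda_\nu(\widehat h)}\). Because \(\widehat h\) minus its \(\nu\)-mean lies in \([-R,R]\) \(\nu\)-a.s., and the empirical normalizer is an average of exponentials of the same centered function, each such exponential lies in \([e^{-R},e^{R}]\). Therefore \(\omega_{\rm fit}\le e^{2R}\) \(\nu\)-a.s., deterministically, and \(E_\nu\omega_{\rm fit}\le e^{2R}\).

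The main subtlety is handling the mass mismatch and the zero set of \(\omegastar\). On \(\{\omegastar=0\}\), which is \(\nu\)-null under Condition~\ref{ass:kl-population-integrability}, the generalized KL would be infinite unless \(\omega_{\rm fit}=0\) there, so positivity makes this a non-issue. The pointwise inequality also covers the case \(a=0\). Combining these pieces gives \(C_{\rm eval}=\{2(2e^{2R}+4)/3\}^{1/2}\sqrt{C_{\rm env}}\).
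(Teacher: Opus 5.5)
Your argument is essentially the paper's proof: the paper bounds \(E_\nu\omega_{\rm fit}\le e^{2R}\) via Lemma~\ref{lem:fori-weight-bounds-from-log-conditions}, applies the Cauchy--Schwarz/Pinsker-type bound of Lemma~\ref{lem:gen-kl-bounded-functional} with \(M_a=e^{2R}\) and \(M_b=1\), and then substitutes Theorem~\ref{thm:fitted-kl-fori}; the only difference is that it uses the scalar inequality \((t-1)^2/(t+1)\le 2\phi(t)\) instead of your \(\phi(t)\ge 3(t-1)^2/(2t+4)\). One small slip: your intermediate inequality \((a-b)^2/\{2\max(a,b)\}\ge 3(a-b)^2/\{2(2a+4b)\}\) is false when \(a>4b\), but you do not need it, since the bound you actually use follows directly from \(\phi(t)\ge 3(t-1)^2/(2t+4)\).
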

\noindent Thus the estimated functional
\(g\mapsto E_\nu\{\omega_{\rm fit}(X)g(X)\}\) converges to the target occupancy
functional \(g\mapsto E_{d_{\pi,\gamma}}\{g(X)\}\) uniformly over bounded
test functions, at rate \(\mathcal E_{\rm FORE}\).

The remainder of this section specializes
Corollary~\ref{cor:fori-target-functional} to policy-value estimation. We
first combine \textsc{FORE} with a fitted \(Q\)-function to obtain a doubly
robust estimator, and then use the \textsc{FORE} ratio as the projection weight
in fitted \(Q\)-evaluation.

 \subsection{Doubly robust policy-value estimation}
\label{sec:fori-drl}

Let \(Y\) be a reward observed with \(X\), and define \(r(x) := E(Y\mid X=x)\).
For policy-value estimation, assume \(\|r\|_\infty\le R_{\max}<\infty\). Taking
\(g=r\), the normalized discounted value is
\[
    V_\pi(r)
    =
    E_{d_{\pi,\gamma}}\{r(X)\}
    =
    E_\nu\{\omegastar(X)r(X)\}
    =
    E\{\omegastar(X)Y\},
\]
where the last expectation is under the offline reward distribution. The plug-in
estimator based on \(\omega_{\rm fit}\) uses the sample analogue of this identity.
We can also combine an estimated ratio with an estimated \(Q\)-function through
the standard doubly robust Bellman-residual correction
\citep{jiangLi2016DoublyRobust,kallusUehara2020DoubleRL,
kallusUehara2022BreakingHorizonDRL,vanDerLaanEtAl2025AutomaticDRL}. 

Define the policy-evaluation Bellman operator by
\[
    \mathcal T^\pi Q
    =
    r+\gamma \Ppi Q,
    \qquad
    (\Ppi Q)(x)=E\{Q(X^+)\mid X=x\},
\]
where \(X^+=(S^+,A^+)\) is generated by the transition distribution and target policy
\(\pi\). The target \(Q\)-function is the fixed point
\(Q^\pi=\mathcal T^\pi Q^\pi\), and
\(V_\pi(r)=(1-\gamma)E_{\dinit}\{Q^\pi(X)\}\). For any weight \(\omega\) and
function \(Q\), define the doubly robust functional
\[
    \Psi_{\rm DR}(\omega,Q)
    =
    (1-\gamma)E_{\dinit}\{Q(X)\}
    +
    E_\nu\!\left[
        \omega(X)\{\mathcal T^\pi Q(X)-Q(X)\}
    \right].
\]
The one-sided estimators are recovered by setting \(Q=0\), which gives
\(\Psi_{\rm DR}(\omega,0)=E_\nu\{\omega(X)r(X)\}\), or by setting
\(\omega=0\), which gives
\(\Psi_{\rm DR}(0,Q)=(1-\gamma)E_{\dinit}\{Q(X)\}\). In what follows, we denote
\[
    \|g\|_\star^2
    :=
    E_{d_{\pi,\gamma}}\{g(X)^2\}.
\]

\begin{enumerate}[label=\textbf{(B\arabic*)}, ref=B\arabic*, series=appcond]
\item \label{ass:value-kl-lower}
\textit{Hard margin.} There exists
\(m_\star>0\) such that \(\omegastar(x)\ge m_\star\) for \(\nu\)-almost every
\(x\).
\end{enumerate}
Condition~\ref{ass:value-kl-lower} yields sharper dependence on the
\(Q\)-function estimation error by bounding the target-weighted chi-square
ratio error in terms of the generalized-KL ratio error. It can be relaxed to
the soft-margin condition in Condition~\ref{ass:fitted-kl-lower-tail}, at the
cost of less favorable dependence on the estimation errors.

\begin{theorem}[Doubly robust value bound]
\label{thm:fori-drl-main}
Suppose the conditions of Corollary~\ref{cor:fori-target-functional} and
Condition~\ref{ass:value-kl-lower} hold, and let
\(Q\in L^2(d_{\pi,\gamma})\). Then, with probability at least \(1-\delta\),
there is a finite constant \(C_\chi\), depending only on the constants in
Conditions~\ref{ass:kl-bounded}, \ref{ass:fitted-kl-coverage-bounded},
and~\ref{ass:value-kl-lower}, such that
\[
    \left|\Psi_{\rm DR}(\omega_{\rm fit},Q)-V_\pi(r)\right|
    \le
    C_\chi\mathcal E_{\rm FORE}
    \|\mathcal T^\pi Q-Q\|_\star .
\]
\end{theorem}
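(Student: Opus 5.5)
The plan is to establish the exact product identity
\[
\Psi_{\rm DR}(\omega,Q)-V_\pi(r)=E_\nu\bigl[\{\omega(X)-\omegastar(X)\}\{\mathcal T^\pi Q(X)-Q(X)\}\bigr],
\]
and then bound the right-hand side by Cauchy--Schwarz under \(d_{\pi,\gamma}\), converting the resulting chi-square ratio error into generalized KL via the hard margin~\ref{ass:value-kl-lower}.

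\emph{Step 1: the identity.} Apply the occupancy Bellman moment identity \eqref{eq:weak-occupancy-balance} with \(f=Q\):
\[
E_\nu[\omegastar(X)\{Q(X)-\gamma (\Ppi Q)(X)\}]=(1-\gamma)E_{\dinit}Q.
\]
This gives \(E_\nu[\omegastar(\mathcal T^\pi Q-Q)]=E_\nu[\omegastar r]-(1-\gamma)E_{\dinit}Q\), so \(\Psi_{\rm DR}(\omegastar,Q)=V_\pi(r)\). Subtracting this from \(\Psi_{\rm DR}(\omega,Q)\) yields the identity. Integrability follows from \(Q\in L^2(d_{\pi,\gamma})\), \(d_{\pi,\gamma}\Ppi\le d_{\pi,\gamma}/\gamma\), and the boundedness of \(\omega_{\rm fit}/\omegastar\) established next.

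\emph{Step 2: Cauchy--Schwarz.} Write \(\omega-\omegastar=\omegastar(\omega/\omegastar-1)\). Then
\[
|\Psi_{\rm DR}(\omega_{\rm fit},Q)-V_\pi(r)|\le \Bigl\{E_\nu\bigl[(\omega_{\rm fit}-\omegastar)^2/\omegastar\bigr]\Bigr\}^{1/2}\|\mathcal T^\pi Q-Q\|_\star,
\]
so it remains to bound the chi-square term \(\chi^2:=E_\nu[(\omega_{\rm fit}-\omegastar)^2/\omegastar]\) by a constant times \(\mathcal E_{\rm FORE}^2\).

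\emph{Step 3: chi-square versus generalized KL; this is the main obstacle.} Set \(\rho=\omega_{\rm fit}/\omegastar\) and \(\phi(\rho)=\rho\log\rho-\rho+1\), so that \(D_\nu^{\rm gen}(\omega_{\rm fit}\|\omegastar)=E_\nu[\omegastar\phi(\rho)]\) and \(\chi^2=E_\nu[\omegastar(\rho-1)^2]\). On any interval \(\rho\in[0,M]\) the elementary bound \((\rho-1)^2\le c_M\phi(\rho)\) holds, with \(c_M\lesssim 1+M\); this follows because \(\phi''(\rho)=1/\rho\ge 1/M\) and \(\phi\) behaves quadratically near \(1\). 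The difficulty is therefore to bound \(\rho\) uniformly.

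First, by~\ref{ass:kl-bounded}, \(\omega_{\rm fit}=e^{\widehat h}/(n^{-1}\sum_i e^{\widehat h(X_i)})\) is bounded on \(\mathcal X_R\) by \(e^{2R}\), since centering cancels in the ratio and the empirical normalizer lies in \([e^{-R},e^{R}]\) after centering. Second, \(\omegastar\ge m_\star\) by~\ref{ass:value-kl-lower}. Together these give \(\rho\le e^{2R}/m_\star=:M\) \(\nu\)-a.s., and hence \(\chi^2\le c_M D_\nu^{\rm gen}(\omega_{\rm fit}\|\omegastar)\).

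Finally, Theorem~\ref{thm:fitted-kl-fori} (on its probability-\((1-\delta)\) event) gives \(D_\nu^{\rm gen}\le C_{\rm env}\mathcal E_{\rm FORE}^2\). Taking \(C_\chi=(c_M C_{\rm env})^{1/2}\), which depends only on \(R\), \(K_0\), \(K_+\), \(m_\star\) (and the margin constants, themselves implied by~\ref{ass:value-kl-lower}), completes the proof.
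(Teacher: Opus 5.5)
Your proposal is correct and follows the paper's proof essentially step for step. It uses the same doubly robust identity obtained from the occupancy balance with \(f=Q\), the same Cauchy--Schwarz bound under \(d_{\pi,\gamma}\), and the same chi-square-to-generalized-KL conversion via \(\omega_{\rm fit}\le e^{2R}\) and \(\omegastar\ge m_\star\) before invoking Theorem~\ref{thm:fitted-kl-fori}; the only difference is that you do the comparison inline with the explicit constant \(c_M\lesssim 1+M\), whereas the paper packages it as Lemmas~\ref{lem:kl-to-chi-one-sided} and~\ref{lem:fori-chi-rate}.
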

The identity yields double robustness: the value error vanishes if either
\(\omega=\omega^\pi\) or \(Q=Q^\pi\), and otherwise it is bounded by the product
of the ratio error and the Bellman residual. By
Lemma~\ref{lem:discounted-occupancy-q-contraction},
\(\|\mathcal T^\pi Q-Q\|_\star
\le (1+\sqrt{\gamma})\|Q-Q^\pi\|_\star\). Hence the doubly robust error is
controlled, up to constants, by
\(\mathcal E_{\rm FORE}\|Q-Q^\pi\|_\star\).

In practice, the population averages in \(\Psi_{\rm DR}\) are replaced by
sample averages. A plug-in estimator evaluates the fitted nuisance functions in
the empirical functional:
\[
    \widehat \Psi_{\rm DR}
    =
    (1-\gamma)\frac{1}{n_0}\sum_{i=1}^{n_0}\widehat Q(X_i^0)
    +
    \frac{1}{n}\sum_{i=1}^n
    \widehat\omega(X_i)
    \left\{
        Y_i+\gamma (\pi\widehat Q)(S_i')-\widehat Q(X_i)
    \right\},
\]
where \(X_i^0\sim\dinit\), \(X_i=(S_i,A_i)\), \(S_i'\) is the observed next
state, and \(Y_i\) is the observed reward. Here
\((\pi Q)(s)=\int Q(s,a)\pi(\dd a\mid s)\) denotes the target-policy average.

A natural approach is to estimate \(Q\) by fitted \(Q\)-evaluation. With
nonlinear function approximation, however, standard convergence guarantees for
FQE typically require Bellman completeness or related projected-operator
stability conditions \citep{munosSzepesvari2008FVI}. The next subsection uses
the \textsc{FORE} ratio to stabilize FQE and obtain guarantees for the fitted
\(Q\)-function.

\subsection{Occupancy-weighted FQE without Bellman completeness}
\label{sec:fori-weighted-fqe}

Occupancy-weighted FQE first estimates the discounted occupancy ratio using FORE and then uses
the fitted ratio \(\omega_{\rm fit}\) as a fixed projection weight in fitted
\(Q\)-evaluation. The resulting Bellman regressions are carried out in an
estimated target-occupancy norm, rather than a projection norm under the offline data distribution. This
gives a discounted analogue of stationary-weighted FQE and can restore
contraction of the projected Bellman equation without Bellman completeness
\citep{vanDerLaanKallus2025StationaryWeightedFQE,
vanDerLaanKallus2025SoftFQI,pattersonEtAl2022GeneralizedPBE}.

Let \(\sQ\) be a closed convex subset of \(L^2(d_{\pi,\gamma})\).
For a nonnegative weight \(\omega\), define
\[
    \mathcal T_{\sQ,\omega}Q
    :=
    \Pi_{\sQ,\omega}\mathcal T^\pi Q,
    \qquad
    \Pi_{\sQ,\omega}g
    \in
    \argmin_{q\in\sQ}
    E_\nu\{\omega(X)(g(X)-q(X))^2\}.
\]
Let
\(\mathcal T_{\sQ,\star}:=\mathcal T_{\sQ,\omegastar}\) denote the oracle
projected Bellman operator based on the discounted occupancy ratio
\(\omegastar\). Appendix~\ref{app:discounted-occupancy-contraction} shows that
\(\mathcal T_{\sQ,\star}\) is a \(\sqrt{\gamma}\)-contraction in
\(\|\cdot\|_\star\). Let \(Q_{\sQ,\star}\) denote its unique fixed point.

FORE-weighted FQE replaces the oracle weight \(\omegastar\) by the fitted ratio
\(\omega_{\rm fit}\) and iterates
\[
    Q^{(j+1)}
    =
    \mathcal T_{\sQ,\omega_{\rm fit}}Q^{(j)}
    =
    \Pi_{\sQ,\omega_{\rm fit}}\mathcal T^\pi Q^{(j)},
    \qquad
    j=0,\ldots,K_Q-1.
\]
Following \citet{vanDerLaanKallus2025StationaryWeightedFQE}, the effect of this
replacement is controlled by the Bellman-projection error
\[
    \varepsilon_{\rm Bell}
    :=
    \sup_{Q\in\sQ}
    \sup_{\substack{h\in\sQ-\sQ:\ \|h\|_\star\le 1}}
    \left\|
        \{\mathcal T^\pi Q-\mathcal T_{\sQ,\star}Q\}h
    \right\|_\star .
\]
This error is zero under Bellman completeness: if
\(\mathcal T^\pi Q\in\sQ\), then
\(\mathcal T_{\sQ,\star}Q=\mathcal T^\pi Q\) for every \(Q\in\sQ\).

\begin{enumerate}[label=\textbf{(B\arabic*)}, ref=B\arabic*, resume=appcond]
\item \label{ass:fori-target-upper}
\textit{Bounded target occupancy ratio.} There exists \(M_\star<\infty\) such
that \(\|\omegastar\|_\infty\le M_\star\).
\end{enumerate}

\begin{theorem}[FORE-weighted projected FQE]
\label{thm:fori-weighted-fqe-main}
Let
\(Q^{(0)}\in\sQ\). Suppose the conditions of
Corollary~\ref{cor:fori-target-functional} and
Conditions~\ref{ass:value-kl-lower} and~\ref{ass:fori-target-upper} hold.
Then, with probability at least \(1-\delta\), there is a finite constant
\(C_\chi\), depending only on the constants in
Conditions~\ref{ass:kl-bounded}, \ref{ass:fitted-kl-coverage-bounded},
\ref{ass:value-kl-lower}, and~\ref{ass:fori-target-upper}, such that
\[
\begin{aligned}
    \|Q^{(K_Q)}-Q^\pi\|_\star
    \le\;&
    \gamma^{K_Q/2}\|Q^{(0)}-Q_{\sQ,\star}\|_\star  +
    \frac{1-\gamma^{K_Q/2}}{1-\sqrt{\gamma}}
    C_\chi\varepsilon_{\rm Bell}\mathcal E_{\rm FORE}  +
    \frac{1}{1-\sqrt{\gamma}}
    \inf_{q\in\sQ}\|q-Q^\pi\|_\star .
\end{aligned}
\]
\end{theorem}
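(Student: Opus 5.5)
Split the error through the oracle fixed point:
\[
\|Q^{(K_Q)}-Q^\pi\|_\star\le\|Q^{(K_Q)}-Q_{\sQ,\star}\|_\star+\|Q_{\sQ,\star}-Q^\pi\|_\star .
\]

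\emph{Second term.} Use the $\sqrt\gamma$-contraction of $\mathcal T_{\sQ,\star}$ in $\|\cdot\|_\star$ from Appendix~\ref{app:discounted-occupancy-contraction}. Let $q^\dagger=\Pi_{\sQ,\omegastar}Q^\pi$. Then
\[
\|Q_{\sQ,\star}-Q^\pi\|_\star\le\|\mathcal T_{\sQ,\star}Q_{\sQ,\star}-\mathcal T_{\sQ,\star}Q^\pi\|_\star+\|q^\dagger-Q^\pi\|_\star\le\sqrt\gamma\|Q_{\sQ,\star}-Q^\pi\|_\star+\inf_{q\in\sQ}\|q-Q^\pi\|_\star .
\]
This holds because $\Pi_{\sQ,\omegastar}$ is the $\|\cdot\|_\star$-metric projection, since $E_\nu\{\omegastar g^2\}=\|g\|_\star^2$. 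Rearranging gives the last term of the bound.

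\emph{First term.} Write the one-step perturbation recursion
\[
Q^{(j+1)}-Q_{\sQ,\star}=\bigl(\mathcal T_{\sQ,\star}Q^{(j)}-\mathcal T_{\sQ,\star}Q_{\sQ,\star}\bigr)+\bigl(\Pi_{\sQ,\omega_{\rm fit}}-\Pi_{\sQ,\star}\bigr)\mathcal T^\pi Q^{(j)} .
\]
The first piece is at most $\sqrt\gamma\|Q^{(j)}-Q_{\sQ,\star}\|_\star$. It therefore suffices to show, uniformly over $Q\in\sQ$,
\[
\|\Pi_{\sQ,\omega_{\rm fit}}\mathcal T^\pi Q-\Pi_{\sQ,\star}\mathcal T^\pi Q\|_\star\le C_\chi\varepsilon_{\rm Bell}\mathcal E_{\rm FORE} .
\]
Unrolling over $K_Q$ steps then gives $\gamma^{K_Q/2}\|Q^{(0)}-Q_{\sQ,\star}\|_\star$ plus the geometric sum $\frac{1-\gamma^{K_Q/2}}{1-\sqrt\gamma}C_\chi\varepsilon_{\rm Bell}\mathcal E_{\rm FORE}$.

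\emph{Key step: weight-perturbation bound.} Set $g=\mathcal T^\pi Q$, $q_\star=\Pi_{\sQ,\star}g$, $\hat q=\Pi_{\sQ,\omega_{\rm fit}}g$ and $h=\hat q-q_\star\in\sQ-\sQ$. The variational inequality for $q_\star$ under weight $\omegastar$ gives $E_\nu\{\omegastar(g-q_\star)h\}\le0$. The one for $\hat q$ under $\omega_{\rm fit}$, tested at $q_\star$, gives $E_\nu\{\omega_{\rm fit}(g-\hat q)(-h)\}\le0$. Adding the two and rearranging,
\[
E_\nu\{\omega_{\rm fit}h^2\}\le E_\nu\{(\omega_{\rm fit}-\omegastar)(g-q_\star)h\} .
\]
For the right side, apply Cauchy--Schwarz under $d_{\pi,\gamma}$ with the factor $(\omega_{\rm fit}/\omegastar-1)$:
\[
E_\nu\{(\omega_{\rm fit}-\omegastar)(g-q_\star)h\}\le\|\omega_{\rm fit}/\omegastar-1\|_\star\,\|(g-q_\star)h\|_\star\le\|\omega_{\rm fit}/\omegastar-1\|_\star\,\varepsilon_{\rm Bell}\|h\|_\star .
\]
The last inequality is the definition of $\varepsilon_{\rm Bell}$ after rescaling $h$.

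For the left side, convert $E_\nu\{\omega_{\rm fit}h^2\}$ into $\|h\|_\star^2$. Condition~\ref{ass:kl-bounded} bounds $\omega_{\rm fit}$ below by a positive multiple of $e^{-2R}$ times its normalizer, and Condition~\ref{ass:fori-target-upper} gives $\omegastar\le M_\star$. Hence $\omega_{\rm fit}\ge c\,\omegastar$ with $c$ depending on $R$ and $M_\star$, and so $E_\nu\{\omega_{\rm fit}h^2\}\ge c\|h\|_\star^2$. This yields $\|h\|_\star\le c^{-1}\varepsilon_{\rm Bell}\|\omega_{\rm fit}/\omegastar-1\|_\star$.

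\emph{Main obstacle.} It remains to show $\|\omega_{\rm fit}/\omegastar-1\|_\star^2=E_\nu\{(\omega_{\rm fit}-\omegastar)^2/\omegastar\}\lesssim D_\nu^{\rm gen}(\omega_{\rm fit}\|\omegastar)$, a chi-square-versus-KL comparison. I expect this to be the delicate step. The plan is to use the pointwise inequality $t\log t-t+1\ge c'(t-1)^2$ for $t=\omega_{\rm fit}/\omegastar$ in a bounded range. That range is guaranteed by Condition~\ref{ass:value-kl-lower} ($\omegastar\ge m_\star$) together with the uniform bound on $\omega_{\rm fit}$ from Condition~\ref{ass:kl-bounded}; the bound on the empirical normalizer holds on the high-probability event of Theorem~\ref{thm:fitted-kl-fori}. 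On that event, Theorem~\ref{thm:fitted-kl-fori} gives $D_\nu^{\rm gen}\le C_{\rm env}\mathcal E_{\rm FORE}^2$, which absorbs the remaining constants into $C_\chi$.
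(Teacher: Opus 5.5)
Your proposal is correct and follows the paper's proof essentially step for step. The common steps are: the split through $Q_{\sQ,\star}$ with the $\sqrt\gamma$-contraction bias bound; the one-step perturbation recursion; the two projection variational inequalities combined with Cauchy--Schwarz under $d_{\pi,\gamma}$, giving $\|h\|_\star\le c^{-1}\varepsilon_{\rm Bell}\chi_\star(\omega_{\rm fit},\omegastar)$; and the pointwise comparison $t\log t-t+1\ge c'(t-1)^2$ on a bounded range, which converts $\chi_\star$ into $\mathcal E_{\rm FORE}$ via Theorem~\ref{thm:fitted-kl-fori}.

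The paper adds two small refinements. First, $e^{-2R}\le\omega_{\rm fit}\le e^{2R}$ holds almost surely, not only on the high-probability event, because the empirical normalizer of $e^{h-E_\nu h}$ lies in $[e^{-R},e^{R}]$ deterministically. Second, the theorem claims that $C_\chi$ depends only on the listed constants. To get this, the paper notes that Condition~\ref{ass:value-kl-lower} implies Condition~\ref{ass:fitted-kl-lower-tail} with $\alpha=1$ and $A=1\vee m_\star^{-1}$, so the $C_{\rm env}$ inherited from Theorem~\ref{thm:fitted-kl-fori} depends on $m_\star$ rather than on the original lower-tail constants.
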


The bound separates three sources of error:
finite FQE iteration, use of the fitted ratio \(\omega_{\rm fit}\) rather than
the oracle occupancy ratio, and approximation bias of the oracle projected
Bellman fixed point. The plug-in weight error is controlled by the product
\(\varepsilon_{\rm Bell}\mathcal E_{\rm FORE}\), where
\(\mathcal E_{\rm FORE}\) is the fitted \textsc{FORE} error and
\(\varepsilon_{\rm Bell}\) is the Bellman-projection error. Under Bellman
completeness, this term vanishes; otherwise, the effect of ratio estimation is
attenuated by the size of \(\varepsilon_{\rm Bell}\). The oracle approximation bias is controlled by
\((1-\sqrt{\gamma})^{-1}\inf_{q\in\sQ}\|q-Q^\pi\|_\star\). For linear or affine value classes,
Lemma~\ref{lem:discounted-fqe-linear-projection-bias} improves this to $(1-\gamma)^{-1/2}\inf_{q\in\sQ}\|q-Q^\pi\|_\star$ \citep{tsitsiklisVanRoy1997Analysis}, matching the value-level horizon
dependence of \textsc{FORE} in Theorem~\ref{thm:fitted-kl-fori}. A fully empirical implementation incurs
an additional statistical error term controlled by the complexity of the
optimization class \(\sQ\); see
\citet{vanDerLaanKallus2025StationaryWeightedFQE} for details. Combining this
FQE bound with Theorem~\ref{thm:fori-drl-main} yields the following value bound.

\begin{corollary}[FORE-weighted doubly robust value bound]
\label{cor:fori-weighted-dr-main}
Under the conditions of Theorems~\ref{thm:fori-drl-main}
and~\ref{thm:fori-weighted-fqe-main}, suppose \(K_Q\) is chosen so that the
finite-iteration term in Theorem~\ref{thm:fori-weighted-fqe-main} is negligible.
Then, with probability at least \(1-\delta\),
\[
\begin{aligned}
    \left|\Psi_{\rm DR}(\omega_{\rm fit},Q^{(K_Q)})-V_\pi(r)\right|
    \le\;&
    C_{\rm DR}\mathcal E_{\rm FORE}
    \Bigg[
        \frac{\varepsilon_{\rm Bell}\mathcal E_{\rm FORE}}
        {1-\sqrt{\gamma}}
        +
        \frac{1}{1-\sqrt{\gamma}}
        \inf_{q\in\sQ}\|q-Q^\pi\|_\star
    \Bigg],
\end{aligned}
\]
with a finite constant \(C_{\rm DR}\) depending only on the constants in
Conditions~\ref{ass:kl-bounded}, \ref{ass:fitted-kl-coverage-bounded},
\ref{ass:value-kl-lower}, and~\ref{ass:fori-target-upper}.
\end{corollary}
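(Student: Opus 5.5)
The plan is to combine Theorem~\ref{thm:fori-drl-main} with $Q=Q^{(K_Q)}$ and Theorem~\ref{thm:fori-weighted-fqe-main}, both on the same event of probability at least $1-\delta$. Both statements rest on the high-probability event of Theorem~\ref{thm:fitted-kl-fori} (through Corollary~\ref{cor:fori-target-functional}). So no union bound is needed and the probability level stays $1-\delta$.

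\emph{Step 1: reduce to the Bellman residual.} Theorem~\ref{thm:fori-drl-main} holds for any fixed $Q\in L^2(d_{\pi,\gamma})$. I must check that it can be applied to the data-dependent $Q^{(K_Q)}$. In its proof the random quantity enters only through $\omega_{\rm fit}$, and the bound is deterministic in $Q$ on the good event for the ratio. Hence it holds simultaneously for all $Q$, in particular for $Q^{(K_Q)}$. Membership $Q^{(K_Q)}\in\sQ\subset L^2(d_{\pi,\gamma})$ holds by construction. This gives
\[
  \left|\Psi_{\rm DR}(\omega_{\rm fit},Q^{(K_Q)})-V_\pi(r)\right|
  \le C_\chi\mathcal E_{\rm FORE}\|\mathcal T^\pi Q^{(K_Q)}-Q^{(K_Q)}\|_\star .
\]

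\emph{Step 2: convert the residual to a value error.} By Lemma~\ref{lem:discounted-occupancy-q-contraction},
\[
  \|\mathcal T^\pi Q-Q\|_\star\le(1+\sqrt\gamma)\|Q-Q^\pi\|_\star\le 2\|Q-Q^\pi\|_\star .
\]
To see this, write $\mathcal T^\pi Q-Q=(\mathcal T^\pi Q-\mathcal T^\pi Q^\pi)+(Q^\pi-Q)$. Then use that $\gamma\Ppi$ is a $\sqrt\gamma$-contraction in $\|\cdot\|_\star$, which follows from $d_{\pi,\gamma}\Ppi\le\gamma^{-1}d_{\pi,\gamma}$ by \eqref{eqn::occbell} together with Jensen.

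\emph{Step 3: insert the FQE bound.} Substitute Theorem~\ref{thm:fori-weighted-fqe-main} for $\|Q^{(K_Q)}-Q^\pi\|_\star$. By hypothesis, $K_Q$ is chosen so that $\gamma^{K_Q/2}\|Q^{(0)}-Q_{\sQ,\star}\|_\star$ is negligible. Concretely, it is dominated by (or absorbed into) the remaining terms, for instance by taking it at most the approximation term. Then bound $(1-\gamma^{K_Q/2})\le 1$. What remains is
\[
  \frac{C_\chi\varepsilon_{\rm Bell}\mathcal E_{\rm FORE}}{1-\sqrt\gamma}+\frac{1}{1-\sqrt\gamma}\inf_{q\in\sQ}\|q-Q^\pi\|_\star .
\]
Multiplying by $2C_\chi\mathcal E_{\rm FORE}$ yields the claim with $C_{\rm DR}=2C_\chi\max(C_\chi,1)$. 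This constant depends only on the constants of \ref{ass:kl-bounded}, \ref{ass:fitted-kl-coverage-bounded}, \ref{ass:value-kl-lower} and~\ref{ass:fori-target-upper}.

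\emph{Main obstacle.} The main delicate point is Step~1: justifying that the DR bound is uniform in $Q$, so that it can be applied to the random $Q^{(K_Q)}$ on the same event. I would handle it by reopening the proof of Theorem~\ref{thm:fori-drl-main}. The identity
\[
  \Psi_{\rm DR}(\omega,Q)-V_\pi=E_\nu\big[(\omega-\omegastar)(\mathcal T^\pi Q-Q)\big]
\]
is exact for every $Q$. Applying Cauchy--Schwarz in $L^2(d_{\pi,\gamma})$ gives
\[
  \big|E_\nu[(\omega-\omegastar)(\mathcal T^\pi Q-Q)]\big|\le\|\omega/\omegastar-1\|_\star\,\|\mathcal T^\pi Q-Q\|_\star .
\]
The chi-square factor $\|\omega_{\rm fit}/\omegastar-1\|_\star\le C_\chi\mathcal E_{\rm FORE}$ involves only $\omega_{\rm fit}$. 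The only care needed is the interpretation of ``negligible'' for the iteration term, which I would formalize as the stated absorption into the constant.
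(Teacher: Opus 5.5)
Your proposal is correct and follows the paper's proof essentially step for step. You bound $\|\mathcal T^\pi Q-Q\|_\star\le(1+\sqrt{\gamma})\|Q-Q^\pi\|_\star$ via Lemma~\ref{lem:discounted-occupancy-q-contraction}, insert Theorem~\ref{thm:fori-weighted-fqe-main} on the same event as \eqref{eq:fori-chi-rate}, and apply Theorem~\ref{thm:fori-drl-main} with $1+\sqrt{\gamma}\le 2$, absorbing constants into $C_{\rm DR}$. The paper leaves implicit that the doubly robust bound is uniform in $Q$, so it applies to the data-dependent $Q^{(K_Q)}$ without a union bound; your explicit check of this is sound.
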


Thus, once the fitted-\(Q\) iteration term is negligible, the value error is the
sum of a second-order ratio-estimation term,
\(\varepsilon_{\rm Bell}\mathcal E_{\rm FORE}^2/(1-\sqrt{\gamma})\), and the
product of the FORE ratio error with the value-class approximation error, $ \mathcal E_{\rm FORE} \inf_{q\in\sQ}\|q-Q^\pi\|_\star /(1-\sqrt{\gamma})$. If
\(Q^\pi\in\sQ\), the approximation term vanishes; if the class is Bellman
complete, then \(\varepsilon_{\rm Bell}=0\).

\section{Coverage-stopped \textsc{FORE} under insufficient data coverage}
\label{sec:coverage-truncated-fore}

The preceding guarantees target the full discounted occupancy ratio
\(\omegastar\) and therefore require the target discounted occupancy to be
absolutely continuous with respect to the offline data distribution \(\nu\).
When this condition fails, the full ratio is not identified from the offline
data. Inspired by the recursively clipped occupancy construction of
\citet{huangChenJiang2023DensityFeatures} for finite-horizon MDPs, we define
an infinite-horizon coverage-stopped discounted occupancy. At each adjoint
Bellman update, we discard the component singular with respect to \(\nu\).
The resulting fixed point \(\omega_{\rm cov}\nu\) is a subprobability
discounted occupancy measure with density \(\omega_{\rm cov}\) relative to
\(\nu\) and total mass equal to the coverage-stopped occupancy mass.

A distinctive feature of our fitted construction is that it avoids estimating
the adjoint Bellman density pointwise. Instead, a learned coverage classifier
identifies the portion of each Bellman update represented under the offline
distribution, while finite clipping regularizes estimation in weakly covered
regions. As with standard \textsc{FORE}, we do not require the ratio class to
satisfy adjoint Bellman completeness. Instead, the classifier class must
uniformly approximate the oracle retention rules encountered along the
recursion.

\subsection{Coverage-stopped occupancy and conservative policy-value bounds}
\label{sec:coverage-clipped-target}

For any nonnegative \(\omega\in L^1(\nu)\), define the coverage-stopped
adjoint Bellman operator by
\[
    (\Bpigcov\omega)(x)
    =
    \frac{\dd
    \left\{(1-\gamma)\dinit+\gamma(\omega\nu)\Ppi\right\}_{\rm ac}}
    {\dd\nu}(x),
\]
where \(\{\mu\}_{\rm ac}\) denotes the \(\nu\)-absolutely continuous component
in the Lebesgue decomposition of a finite measure \(\mu\)
\citep{bogachev2007measure}. The coverage-stopped discounted occupancy
ratio is the fixed point
\begin{equation}
\label{eq:coverage-retained-fixed-point}
    \omega_{\rm cov}=\Bpigcov\omega_{\rm cov},
\end{equation}
which exists because \(\Bpigcov\) is a \(\gamma\)-contraction on
the nonnegative cone of \(L^1(\nu)\); see
Lemma~\ref{lem:coverage-retained-fixed-point} in
Appendix~\ref{app:coverage-truncated-fore-proofs}.

This fixed point admits an interpretation as the discounted occupancy generated
by following the target policy until an uncovered state--action pair is reached;
see Lemma~\ref{lem:coverage-truncated-retained-flow}. Let \(C_{\rm cov}\) be a
measurable set on which the \(\nu\)-absolutely continuous component of
\[
    (1-\gamma)\dinit+\gamma(\omega_{\rm cov}\nu)\Ppi
\]
is concentrated and on whose complement its \(\nu\)-singular component is
concentrated. 
Define the state--action-dependent coverage indicator
\[
    a_{\rm cov}(x)
    :=
    \mathbf 1\{x\in C_{\rm cov}\}.
\]
At each time \(t\), conditional on the trajectory having continued through time
\(t-1\), continue through \(X_t\) if \(a_{\rm cov}(X_t)=1\); otherwise, stop the
trajectory before collecting the reward at \(X_t\). Let
\[
    T_{\rm cov}
    :=
    \inf\{t\geq 0:a_{\rm cov}(X_t)=0\}
\]
denote the first uncovered time. Then, for every measurable \(B\),
\[
    (\omega_{\rm cov}\nu)(B)
    =
    (1-\gamma)\mathbb E
    \left[
        \sum_{t=0}^\infty
        \gamma^t
        \mathbf 1\{T_{\rm cov}>t,\ X_t\in B\}
    \right].
\]
Thus, \(\omega_{\rm cov}\nu\) is the discounted occupancy accumulated before the
trajectory is stopped. Consequently, for any bounded reward \(r\), define the
stopped discounted return
\[
    V_{\pi,{\rm cov}}(r)
    :=
    \mathbb E_\nu\{\omega_{\rm cov}(X)r(X)\}
    =
    (1-\gamma)\mathbb E
    \left[
        \sum_{t=0}^\infty
        \gamma^t r(X_t)\mathbf 1\{T_{\rm cov}>t\}
    \right].
\]
This is the discounted return accumulated before coverage-dependent stopping,
with rewards from the first uncovered state onward set to zero. For
nonnegative rewards, it is therefore a conservative lower bound on the full
target-policy value:
\[
    V_{\pi,{\rm cov}}(r)\le V_\pi(r).
\]
More generally, if
\(\underline r\le r(x)\le\overline r\) for all \(x\), then
\[
    V_{\pi,{\rm cov}}(r)+\underline r(1-m_{\rm cov})
    \le
    V_\pi(r)
    \le
    V_{\pi,{\rm cov}}(r)+\overline r(1-m_{\rm cov}),
    \qquad
    m_{\rm cov}:=\mathbb E_\nu\omega_{\rm cov}.
\]
The coverage-stopped occupancy mass has the stopping-time representation
\[
    m_{\rm cov}
    =
    1-\mathbb E\!\left[\gamma^{T_{\rm cov}}\right],
\]
where \(\gamma^\infty:=0\) and, when \(\gamma=0\), \(0^0:=1\). Thus,
\(1-m_{\rm cov}\) is exactly the discounted
occupancy mass removed because of insufficient coverage and controls the
resulting value uncertainty.

\subsection{Projected population recursion and approximation error}
\label{sec:coverage-clipped-population}

\noindent We use a finite clipping level to regularize estimation of the
coverage-stopped update. For \(\tau_u\in[1,\infty)\), define
\[
    (\Bpigcap{\tau_u}\omega)(x)
    =
    \left[
        \frac{\dd
        \left\{(1-\gamma)\dinit+\gamma(\omega\nu)\Ppi\right\}_{\rm ac}}
        {\dd\nu}(x)
    \right]
    \wedge\tau_u.
\]
By Lemma~\ref{lem:coverage-clipped-fixed-point}, this operator has a unique
fixed point,
\begin{equation}
\label{eq:coverage-truncated-fixed-point}
    \omega_{\tau_u}=\Bpigcap{\tau_u}\omega_{\tau_u}.
\end{equation}
We approximate this clipped update by generalized-KL projection. Fix a lower
envelope \(\tau_\ell\) and an upper clipping level \(\tau_u\) satisfying
\(0<\tau_\ell\le1\le\tau_u<\infty\), and write
\[
    \sH_{\rm clip}
    =
    \{h\in\sH:\log\tau_\ell\le h\le\log\tau_u\},
    \qquad
    \sW_{\rm clip}=\{e^h:h\in\sH_{\rm clip}\}.
\]
For any bounded nonnegative \(u\), define its generalized-KL projection
onto \(\sW_{\rm clip}\) and the corresponding projected Bellman operator by
\[
    \Pi_{\sW_{\rm clip}}^{\rm genKL}u
    \in
    \argmin_{v\in\sW_{\rm clip}}D_\nu^{\rm gen}(u\|v),
    \qquad
    \mathsf T_{\sW_{\rm clip}}^{\rm genKL}\omega
    =
    \Pi_{\sW_{\rm clip}}^{\rm genKL}(\Bpigcap{\tau_u}\omega).
\]
Finally, define the ratio-class approximation error for the clipped target,
analogous to \(\varepsilon_{\rm KL}\), by
\[
    \varepsilon_{\rm ratio}(\tau_u)
    =
    \inf_{v\in\sW_{\rm clip}}
    D_\nu^{\rm gen}(v\|\omega_{\tau_u}).
\]

\begin{theorem}[Clipped population recursion]
\label{thm:coverage-truncated-recursion}
Fix a lower model envelope \(\tau_\ell\) and an upper clipping level \(\tau_u\)
such that \(0<\tau_\ell\le1\le\tau_u<\infty\), and suppose that
Condition~\ref{ass:kl-class} holds. Let \(\omega_{\tau_u}\) denote the fixed
point defined in \eqref{eq:coverage-truncated-fixed-point}. If
\(\omega^{(k+1)}=\mathsf T_{\sW_{\rm clip}}^{\rm genKL}\omega^{(k)}\) for
\(k\ge0\), with \(\omega^{(0)}\in\sW_{\rm clip}\), then
\[
    D_\nu^{\rm gen}(\omega^{(K)}\|\omega_{\tau_u})
    \le
    \gamma^K
    D_\nu^{\rm gen}(\omega^{(0)}\|\omega_{\tau_u})
    +
    \frac{\tau_u}{\tau_\ell}
    \frac{1-\gamma^K}{1-\gamma}
    \varepsilon_{\rm ratio}(\tau_u).
\]
Moreover,
\[
    \|\omega^{(K)}-\omega_{\rm cov}\|_{L^1(\nu)}
    \le
    \left\{
        2(\tau_u+1)
        D_\nu^{\rm gen}
        (\omega^{(K)}\|\omega_{\tau_u})
    \right\}^{1/2}
    +
    \frac{E_\nu\{(\omega_{\rm cov}-\tau_u)_+\}}{1-\gamma}.
\]
\end{theorem}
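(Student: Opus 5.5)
The plan is to mirror the proof of Theorem~\ref{thm:kl-fori-realizable}, with generalized KL in place of KL and the clipped coverage operator in place of \(\Bpig\).

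\emph{Step 1: contraction of the clipped operator in generalized KL.} Generalized KL is the \(f\)-divergence with \(f(t)=t\log t-t+1\), extended to finite measures. It is therefore jointly convex. It also satisfies data processing under Markov kernels, since kernels preserve total mass. Taking the absolutely continuous part is the restriction to the set \(C\) carrying the \(\nu\)-a.c.\ parts. If we choose a common set \(C\) for both arguments, restriction to \(C\) can only decrease generalized KL, because the divergence is a sum of nonnegative integrands over \(C\) and its complement. The discarded singular parts need separate care: if the a.c.\ part is carried by \(C\), the singular parts are supported off \(C\), and the integrand contributes \(\ge 0\) there. Hence
\[
    D^{\rm gen}_\nu\bigl(\{\mu_1\}_{\rm ac}\big\|\{\mu_2\}_{\rm ac}\bigr)\le D^{\rm gen}(\mu_1\|\mu_2).
\]
Pointwise clipping \(t\mapsto t\wedge\tau_u\) also does not increase the pointwise generalized-KL integrand \(a\log(a/b)-a+b\). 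This follows from a case check on whether \(a\) and \(b\) lie above or below \(\tau_u\), using monotonicity of the Bregman divergence of \(t\log t\) along the segment to \(\tau_u\). Combining these with the same chain of inequalities as in Lemma~\ref{lem:adjoint-bellman-kl-contraction} gives
\[
    D^{\rm gen}_\nu\bigl(\Bpigcap{\tau_u}\omega\big\|\Bpigcap{\tau_u}\widetilde\omega\bigr)\le\gamma D^{\rm gen}_\nu(\omega\|\widetilde\omega).
\]

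\emph{Step 2: projection step.} Here \(\sW_{\rm clip}\) is an unnormalized exponential family over the convex, closed set \(\sH_{\rm clip}\). The generalized-KL projection \(v^\star=\Pi u\) then satisfies the first-order condition \(E_\nu[(v^\star-u)(h-h^\star)]\ge0\) for all \(h\in\sH_{\rm clip}\). Existence of the minimizer follows from closedness and total boundedness in Condition~\ref{ass:kl-class}, together with boundedness of \(h\). Take any \(v=e^{h}\in\sW_{\rm clip}\) and expand the three-point identity
\[
    D^{\rm gen}(v^\star\|w)=D^{\rm gen}(u\|w)-D^{\rm gen}(u\|v^\star)+E_\nu[(u-v^\star)\log(v^\star/w)],
\]
with \(w=\omega_{\tau_u}\). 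Writing \(\log(v^\star/w)=(h^\star-h)+\log(v/w)\), the first piece is \(\le0\) by the variational inequality. The second piece is controlled by \(\varepsilon_{\rm ratio}\) through a comparison of \(D^{\rm gen}(v\|w)\) with \(E[(u-v^\star)\log(v/w)]\) and \(D^{\rm gen}(u\|v^\star)\). This comparison uses the bounds \(\tau_\ell\le v,v^\star\) and \(u,w\le\tau_u\), and the constant \(\tau_u/\tau_\ell\) arises from this comparison: the ratio of the Hessians of \(t\log t\) over \([\tau_\ell,\tau_u]\). I expect this to be the main obstacle. The difficulty is getting exactly the constant \(\tau_u/\tau_\ell\) in front of \(\varepsilon_{\rm ratio}\) without any realizability assumption. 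I would first try a Young-type split of the cross term, absorbing part of it into \(-D^{\rm gen}(u\|v^\star)\). This yields
\[
    D^{\rm gen}(\Pi u\|\omega_{\tau_u})\le D^{\rm gen}(u\|\omega_{\tau_u})+\tfrac{\tau_u}{\tau_\ell}\varepsilon_{\rm ratio}.
\]
Applying this with \(u=\Bpigcap{\tau_u}\omega^{(k)}\) and Step~1 gives the one-step recursion, and unrolling it gives the first display.

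\emph{Step 3: the \(L^1\) bound.} Use the triangle inequality
\[
    \|\omega^{(K)}-\omega_{\rm cov}\|_1\le\|\omega^{(K)}-\omega_{\tau_u}\|_1+\|\omega_{\tau_u}-\omega_{\rm cov}\|_1.
\]
For the first term, apply a Pinsker inequality for finite measures, \(\|f-g\|_1^2\le 2(\|f\|_1+\|g\|_1)D^{\rm gen}/\ldots\). Here both masses are at most \(\tau_u\) in sup norm, and \(\omega_{\tau_u}\) has mass at most \(1\), so the factor is at most \(2(\tau_u+1)\); the same crude bound is applied to \(\omega^{(K)}\). For the second term, \(\Bpigcov\) is monotone and clipping only lowers the fixed point, so \(0\le\omega_{\rm cov}-\omega_{\tau_u}\). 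Iterate
\[
    \omega_{\rm cov}-\omega_{\tau_u}=\bigl[\Bpigcov\omega_{\rm cov}-\Bpigcov\omega_{\tau_u}\bigr]+\bigl[\Bpigcov\omega_{\tau_u}-\Bpigcap{\tau_u}\omega_{\tau_u}\bigr].
\]
The first bracket has \(L^1(\nu)\) norm at most \(\gamma\|\omega_{\rm cov}-\omega_{\tau_u}\|_1\) by \(L^1\) contraction, since \(\Ppi\) preserves mass. The second bracket is a clipping excess, bounded by \(E_\nu(\omega_{\rm cov}-\tau_u)_+\) by monotonicity. Solving the resulting inequality gives the factor \((1-\gamma)^{-1}\).
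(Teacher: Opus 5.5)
\paragraph{Verdict: the decisive step in your Step~2 is asserted, not proved.}
Your Steps~1 and~3 are essentially the paper's argument. Step~1 combines nonexpansiveness of taking and clipping the $\nu$-absolutely continuous part with joint convexity and data processing. Step~3 uses the Pinsker-type bound $\|a-b\|_{L^1(\nu)}^2\le 2(E_\nu a+E_\nu b)D_\nu^{\rm gen}(a\|b)$ plus the clipping-bias recursion. One minor difference: your bias split through $\Bpigcov$ also needs the ordering $\omega_{\tau_u}\le\omega_{\rm cov}$, which the paper's split through $\Bpigcap{\tau_u}\omega_{\rm cov}=\omega_{\rm cov}\wedge\tau_u$ avoids.

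The gap is in Step~2. There is also a sign slip: the three-point identity is $D_\nu^{\rm gen}(v^\star\|w)=D_\nu^{\rm gen}(u\|w)-D_\nu^{\rm gen}(u\|v^\star)+E_\nu[(v^\star-u)\log(v^\star/w)]$. With your sign, the variational-inequality piece would be nonnegative. After discarding that piece, you must show, with $\delta=\log(w/v)$,
\[
E_\nu[(u-v^\star)\delta]-D_\nu^{\rm gen}(u\|v^\star)\le \frac{\tau_u}{\tau_\ell}\,D_\nu^{\rm gen}(v\|w).
\]

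\paragraph{Why your proposed mechanism fails.}
You suggest a quadratic comparison via the Hessian of $t\log t$ on $[\tau_\ell,\tau_u]$, with a Young split absorbed into $-D_\nu^{\rm gen}(u\|v^\star)$. This cannot deliver the bound, because only $v$ and $v^\star$ lie in $[\tau_\ell,\tau_u]$. The functions $u=\Bpigcap{\tau_u}\omega^{(k)}$ and $w=\omega_{\tau_u}$ are bounded above by $\tau_u$ but may be arbitrarily small; no lower-tail condition is assumed in this theorem. As $w/v\to0$ we get $\delta\to-\infty$. The quadratic split then leaves a remainder of order $\tau_u\delta^2$. But the integrand of $D_\nu^{\rm gen}(v\|w)$ is $v(e^\delta-1-\delta)$, which grows only like $|\delta|$. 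So the route as described gives no constant at all, let alone $\tau_u/\tau_\ell$.

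\paragraph{The missing idea: exact Fenchel--Young.}
Use the exact Fenchel--Young (Donsker--Varadhan) inequality for the conjugate pair $a\log a-a$ and $e^\delta$:
\[
\sup_{a\ge0}\{a\delta-(a\log(a/b)-a+b)\}=b(e^\delta-1).
\]
Apply it pointwise with $a=u$, $b=v^\star$, then subtract $v^\star\delta$:
\[
(u-v^\star)\delta-\{u\log(u/v^\star)-u+v^\star\}\le v^\star(e^\delta-1-\delta)\le \frac{v^\star}{v}\,v(e^\delta-1-\delta).
\]
Here $v(e^\delta-1-\delta)=v\log(v/w)-v+w$ is exactly the integrand of $D_\nu^{\rm gen}(v\|w)$. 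For any two members of $\sW_{\rm clip}$ we have $v^\star/v\le\tau_u/\tau_\ell$, and $e^\delta-1-\delta\ge0$. Integrating therefore gives the bound with exactly the constant $\tau_u/\tau_\ell$, with no condition on $u$ or $w$. This is the paper's Lemma~\ref{lem:genkl-projection-residual}. Taking the infimum over $v$ and combining with Step~1 then yields the one-step recursion.

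A case split would also work, up to a universal factor in front of $\tau_u/\tau_\ell$. For bounded $\delta$, use your quadratic bound. For very negative $\delta$, use the linear bound $(u-v^\star)\delta\le v^\star|\delta|$. That is not the argument you wrote, however.
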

\noindent If \(\omega_{\tau_u}\in\sW_{\rm clip}\), then
\(\varepsilon_{\rm ratio}(\tau_u)=0\), and the projected recursion converges
geometrically to \(\omega_{\tau_u}\). More generally,
\[
    \limsup_{K\to\infty}
    D_\nu^{\rm gen}(\omega^{(K)}\|\omega_{\tau_u})
    \le
    \frac{\tau_u}{\tau_\ell}
    \frac{\varepsilon_{\rm ratio}(\tau_u)}{1-\gamma}.
\]
The final term in the \(L^1(\nu)\) bound is the clipping bias relative to the
coverage-stopped occupancy. Under absolute continuity,
\(\omega_{\rm cov}=\omegastar\), so the displayed \(L^1(\nu)\) bound applies
directly to the full occupancy ratio.

\subsection{Moment identification and fitted coverage-stopped \textsc{FORE}}
\label{sec:coverage-clipped-fitted}

To express the projection of the clipped update in terms of initial-distribution
and offline-transition moments, let
\[
    \mu_\omega
    :=
    (1-\gamma)\dinit+\gamma(\omega\nu)\Ppi
    =
    (\Bpigcov\omega)\nu+\mu_{\omega,\perp}
\]
be the Lebesgue decomposition of \(\mu_\omega\) relative to \(\nu\).
Define the population retention indicator by
\[
    c_{\omega,\tau_u}^\star
    \in
    \argmin_{\substack{c:\sX\to\{0,1\}\\ c\ \mathrm{measurable}}}
    \left\{
        (1-\gamma)E_{\dinit}\{c(X)\}
        +
        \gamma E_\nu\{\omega(X)c(X^+)\}
        +
        \tau_u E_\nu\{1-c(X)\}
    \right\}.
\]
The criterion is separable across state--action pairs. Any minimizer satisfies
\[
    c_{\omega,\tau_u}^\star
    =
    \mathbf 1\{\Bpigcov\omega<\tau_u\}
    \quad \nu\text{-a.e. outside the tie set},
    \qquad
    c_{\omega,\tau_u}^\star=0
    \quad \mu_{\omega,\perp}\text{-a.e.},
\]
with either value allowed \(\nu\)-a.e. on
\(\{\Bpigcov\omega=\tau_u\}\). Thus, the retention indicator removes the
singular component of the adjoint Bellman measure and, within its absolutely
continuous component, identifies where the density is retained and where
clipping is active.

\begin{proposition}[Moment identification for the generalized-KL projection]
\label{prop:coverage-truncated-gate}
Suppose Condition~\ref{ass:kl-class} holds and
\(\sH_{\rm clip}\ne\varnothing\). For any nonnegative
\(\omega\in L^1(\nu)\),
\(\mathsf T_{\sW_{\rm clip}}^{\rm genKL}\omega=e^{h_\omega}\), where
\[
\begin{aligned}
    h_\omega
    & \in
    \argmin_{h\in\sH_{\rm clip}}
    \bigg\{
        E_\nu\left[
            e^{h(X)}
            -
            \tau_u\{1-c_{\omega,\tau_u}^\star(X)\}h(X)
        \right]\\
    &\hspace{5em}
        -
        (1-\gamma)E_{\dinit}\{
            c_{\omega,\tau_u}^\star(X)h(X)
        \} 
        -
        \gamma E_\nu\{
            \omega(X)c_{\omega,\tau_u}^\star(X^+)h(X^+)
        \}
    \bigg\}.
\end{aligned}
\]
\end{proposition}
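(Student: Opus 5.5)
The plan is to expand $D_\nu^{\rm gen}(u\|e^h)$ with $u=\Bpigcap{\tau_u}\omega$, drop the terms that do not depend on $h$, and then rewrite the two $h$-dependent integrals using the retention indicator $c^\star:=c_{\omega,\tau_u}^\star$. For $v=e^h$ with $h\in\sH_{\rm clip}$,
\[
    D_\nu^{\rm gen}(u\|e^h)
    =
    E_\nu\{u\log u-u\}+E_\nu\{e^{h(X)}\}-E_\nu\{u(X)h(X)\}.
\]
The first term is finite and free of $h$: $u\le\tau_u$, and $u\in L^1(\nu)$ because $u\le \dd(\mu_\omega)_{\rm ac}/\dd\nu$ and $\mu_\omega$ is finite. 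Every $h\in\sH_{\rm clip}$ is bounded, since $\log\tau_\ell\le h\le\log\tau_u$, so all expectations below are finite. Minimizing over $\sW_{\rm clip}$ is therefore the same as minimizing $E_\nu e^h-E_\nu\{uh\}$ over $h\in\sH_{\rm clip}$. This fixes $\mathsf T^{\rm genKL}_{\sW_{\rm clip}}\omega=e^{h_\omega}$ for any minimizer $h_\omega$. Existence of a minimizer is not claimed beyond the convention already used for the projection, though Condition~\ref{ass:kl-class}, with boundedness and continuity of the objective in $L^2(\nu)$, supplies it if needed.

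The key step is the pointwise identity, $\nu$-a.e.,
\[
    u=\{(\Bpigcov\omega)\wedge\tau_u\}
    =
    c^\star\,\Bpigcov\omega+\tau_u(1-c^\star).
\]
Off the tie set, $c^\star=\mathbf 1\{\Bpigcov\omega<\tau_u\}$, and the identity holds in both cases. On the tie set both sides equal $\tau_u$ whichever value $c^\star$ takes. It follows that
\[
    E_\nu\{uh\}
    =
    E_\nu\{c^\star(\Bpigcov\omega)h\}+\tau_u E_\nu\{(1-c^\star)h\}.
\]

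Next I would identify $E_\nu\{c^\star(\Bpigcov\omega)h\}=\int c^\star h\,\dd(\mu_\omega)_{\rm ac}$. Because $c^\star=0$ holds $\mu_{\omega,\perp}$-a.e., this equals $\int c^\star h\,\dd\mu_\omega$. Expanding $\mu_\omega=(1-\gamma)\dinit+\gamma(\omega\nu)\Ppi$ gives
\[
    (1-\gamma)E_{\dinit}\{c^\star h\}+\gamma E_\nu\{\omega(X)c^\star(X^+)h(X^+)\},
\]
by the pushforward/tower identity as in \eqref{eq:bellman-moment-update}, applied to the bounded function $c^\star h$. Substituting this into the reduced objective gives exactly the displayed criterion.

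The main obstacle is justifying the characterization of $c^\star$ stated before the proposition, in particular that $c^\star=0$ holds $\mu_{\omega,\perp}$-a.e., since everything above relies on it. The approach is to rewrite the criterion as $\int c\,\dd\mu_\omega+\tau_u\int(1-c)\,\dd\nu$. Take a $\nu$-null set $N$ carrying $\mu_{\omega,\perp}$. On $N$, any minimizer must set $c=0$: doing so strictly lowers the cost by $\mu_{\omega,\perp}(\{c=1\}\cap N)$ at no $\nu$-cost. Off $N$, the integrand is $c\,\Bpigcov\omega+\tau_u(1-c)$ against $\nu$, which is minimized pointwise by the indicator $\mathbf 1\{\Bpigcov\omega<\tau_u\}$, with ties free. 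A measurable minimizer therefore exists, and every minimizer agrees with this one $\nu$-a.e. off the tie set and $\mu_{\omega,\perp}$-a.e. on $N$. One subtlety to handle is that $c^\star$ is only pinned down $\mu_\omega+\nu$-a.e., which is exactly the precision the preceding computation needs.
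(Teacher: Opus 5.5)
Your proposal is correct and follows essentially the same route as the paper. You expand $D_\nu^{\rm gen}(u\|e^h)$ for $u=\Bpigcap{\tau_u}\omega$ to reduce the projection to minimizing $E_\nu e^{h}-E_\nu\{uh\}$, then use the characterization of $c^\star_{\omega,\tau_u}$ (in particular $c^\star_{\omega,\tau_u}=0$ $\mu_{\omega,\perp}$-a.e.) to identify the moment terms with $\int\{(\Bpigcov\omega)\wedge\tau_u\}h\,\dd\nu$, with attainment coming from compactness of $\sH_{\rm clip}$. Your null-set argument for the retention indicator simply spells out the step the paper states directly after rewriting the retention criterion as $\tau_u+\int c\{(\Bpigcov\omega)-\tau_u\}\,\dd\nu+\int c\,\dd\mu_{\omega,\perp}$.
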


\noindent Algorithm~\ref{alg:coverage-truncated-fore} gives the corresponding
fitted procedure.

\begin{algorithm}[H]
\caption{Coverage-stopped \textsc{FORE} with a learned coverage classifier}
\label{alg:coverage-truncated-fore}
\begin{algorithmic}[1]
\Require Offline transitions \(\{X_i=(S_i,A_i),S_i'\}_{i=1}^n\),
initial-moment estimator \(\widehat P_0\), target policy \(\pi\), discount
\(\gamma\), lower model envelope \(\tau_\ell\) and upper clipping level
\(\tau_u\) satisfying \(0<\tau_\ell\le1\le\tau_u<\infty\),
log-ratio class \(\mathcal H\),
classifier class
\(\mathcal C\subseteq\{c:\sX\to\{0,1\}\}\), and iteration count \(K\)
\State Draw \(A_i^+\sim\pi(\cdot\mid S_i')\) and set
\(X_i^+=(S_i',A_i^+)\), \(i=1,\ldots,n\)
\State Initialize \(\widehat\omega^{(0)}(x)\equiv 1\)
\For{\(k=0,\ldots,K-1\)}
    \State Fit the retention indicator
    \[
    \widehat c_k
    \in
    \argmin_{c\in\mathcal C}
    \left\{
        (1-\gamma)\widehat P_0 c
        +
        \gamma\frac1n\sum_{i=1}^n
            \widehat\omega^{(k)}(X_i)c(X_i^+)
        +
        \tau_u\frac1n\sum_{i=1}^n\{1-c(X_i)\}
    \right\}.
    \]
    \State Fit the generalized-KL projection of the clipped update
    \[
    \begin{aligned}
        \widehat h_{k+1}
        \in
        \argmin_{\substack{h\in\mathcal H\\
            \log\tau_\ell\le h\le\log\tau_u}}
        \Bigg\{
        &\frac1n\sum_{i=1}^n e^{h(X_i)}
        -
        (1-\gamma)\widehat P_0(\widehat c_k h) \\
        &-
        \gamma\frac1n\sum_{i=1}^n
            \widehat\omega^{(k)}(X_i)
            \widehat c_k(X_i^+)h(X_i^+)  -
        \tau_u\frac1n\sum_{i=1}^n
            \{1-\widehat c_k(X_i)\}h(X_i)
        \Bigg\}.
    \end{aligned}
    \]
    \State Set
    \(\widehat\omega^{(k+1)}(x)=e^{\widehat h_{k+1}(x)}\).
\EndFor
\Ensure \(\widehat\omega^{(K)}\) and coverage-stopped occupancy-mass diagnostic
\(\widehat m_K=n^{-1}\sum_{i=1}^n\widehat\omega^{(K)}(X_i)\)
\end{algorithmic}
\end{algorithm}

\noindent At each iteration, the retention indicator
\(\widehat c_k\) implements the clipped update: it retains the absolutely
continuous Bellman density where it does not exceed \(\tau_u\) and rejects
both the region where clipping is active and the singular component. Separately,
the constraint \(\log\tau_\ell\le h\le\log\tau_u\) restricts the fitted ratio
to the envelope \([\tau_\ell,\tau_u]\). Both steps depend only on moments of
the initial distribution and the offline transition law.

The coverage classifier can be fit using a smooth, weighted logistic
surrogate. Choose a score class \(\mathcal F\), set
\[
    \mathcal C
    =
    \{x\mapsto\mathbf 1\{f(x)\ge0\}:f\in\mathcal F\},
\]
and let \(\ell(t,y)=\log\{1+\exp(t)\}-yt\). At iteration \(k\), fit
\[
\begin{aligned}
    \widehat f_k
    \in
    \argmin_{f\in\mathcal F}
    \bigg[
        &\tau_u\frac1n\sum_{i=1}^n \ell\{f(X_i),1\}
        +(1-\gamma)\widehat P_0\ell\{f(X),0\} \\
        &\qquad
        +\gamma\frac1n\sum_{i=1}^n
        \widehat\omega^{(k)}(X_i)\ell\{f(X_i^+),0\}
    \bigg],
\end{aligned}
\]
and set $ \widehat c_k(x)=\mathbf 1\{\widehat f_k(x)\ge0\}.$
If
\(\widehat P_0g=m^{-1}\sum_{j=1}^m g(X_{0,j})\), this objective is a
weighted binary classification loss on the pooled sample. After multiplying
all weights by the common factor \(n\), the offline observations receive
class-one weight \(\tau_u\), while the initial and successor observations
receive class-zero weights \(n(1-\gamma)/m\) and
\(\gamma\widehat\omega^{(k)}(X_i)\), respectively.

Because the oracle retention rule depends on the current ratio iterate, the
classifier is refit at each iteration. Reusing the same data
therefore requires uniform control of its approximation and estimation errors
over the ratio class. In practice, one may use flexible classifiers, such as
neural networks or gradient-boosted trees, together with regularization or
early stopping.

\subsection{Finite-sample theory}
\label{sec:coverage-clipped-finite-sample}

We now establish a finite-sample guarantee for the exact-ERM version of
Algorithm~\ref{alg:coverage-truncated-fore}. The result parallels
Theorem~\ref{thm:fitted-kl-fori}, with a single critical radius controlling
estimation of both the retention indicator and the generalized-KL projection.

Let \(X_1^0,\ldots,X_n^0\) be i.i.d. draws from \(\dinit\), independent of the
transition sample, and apply Algorithm~\ref{alg:coverage-truncated-fore} with $\widehat P_0 g
    :=
    \frac{1}{n}\sum_{i=1}^n g(X_i^0).$
Fix \(A>0\), and throughout this subsection set
\[
    \tau_{u,n}=1\vee A\log(en).
\]
The classes
\(\sH_{\rm clip}\) and \(\sW_{\rm clip}\) are understood to use this clipping
level.
We impose the following conditions:

\begin{enumerate}[label=\textbf{(C\arabic*)}, ref=C\arabic*, series=clipcond]
\item \label{ass:clip-finite-cov-upper-tail}
\textit{Subexponential coverage-stopped occupancy ratio.}
There exists \(0<K_{\rm cov}<\infty\) such that
\[
    \|\omega_{\rm cov}\|_{\psi_1}\le K_{\rm cov}.
\]

\item \label{ass:clip-finite-lower-tail}
\textit{Coverage-stopped occupancy-ratio lower tail.}
There exist constants \(0<A_{\rm cov}<\infty\) and \(\alpha_{\rm cov}>0\)
such that
\[
    \omega_{\rm cov}>0
    \quad \nu\text{-a.e.},
    \qquad
    \nu\{x:0<\omega_{\rm cov}(x)\le t\}
    \le
    A_{\rm cov}t^{\alpha_{\rm cov}},
    \qquad
    0<t\le1.
\]

\item \label{ass:clip-finite-selector-margin}
\textit{Uniform threshold margin.}
There exist constants \(0<A_{\rm mar}<\infty\) and
\(\alpha_{\rm mar}>0\) such that, for every
\(n\), \(\omega\in\sW_{\rm clip}\), and \(0<s\le1\),
\[
    \nu\{x:|(\Bpigcov\omega)(x)-\tau_{u,n}|\le s\}
    \le
    A_{\rm mar}s^{\alpha_{\rm mar}}.
\]

\item \label{ass:clip-finite-projection-compact}
\textit{Projection compactness.}
For every \(n\), the class \(\sH_{\rm clip}\) is compact in
$ L^2(\bar\nu_\pi)$, where $ \bar\nu_\pi
    :=
    \frac{1}{3}(\nu+\dinit+\nup).$
\end{enumerate}
Condition~\ref{ass:clip-finite-cov-upper-tail} permits unbounded,
subexponential coverage-stopped Bellman images while controlling clipping
error. If these images are uniformly bounded over the fitted ratio class, a
fixed threshold above that bound makes clipping inactive; otherwise, the
logarithmic schedule accommodates their unbounded tails.
Condition~\ref{ass:clip-finite-lower-tail} allows the coverage-stopped ratio
to approach zero while controlling its lower-tail mass.
Condition~\ref{ass:clip-finite-selector-margin} controls the \(\nu\)-mass near
the clipping threshold, ensuring that the oracle retention rule is well
separated; larger values of \(\alpha_{\rm mar}\) imply stronger separation and
faster rates. Finally,
Condition~\ref{ass:clip-finite-projection-compact} ensures that the population
projection objective is attained for every candidate retention indicator and
ratio iterate.

Beyond the population-iteration and model-approximation terms, our
finite-sample bound depends on two quantities: a joint statistical rate for
estimating the retention indicator and generalized-KL projection, and a
uniform approximation error for the classifier class. Let
\(\mathfrak r_{n,\rm clip}\) denote the critical radius of the corresponding
loss-difference classes, as defined in Appendix~\ref{app:clip-finite}, and set
\[
    \mathcal E_{n,\rm stat}(\delta)
    :=
    \left\{
        \mathfrak r_{n,\rm clip}
        +
        \left\{\frac{\log(1/\delta)}{n}\right\}^{1/2}
    \right\}
    \left\{
        \mathfrak r_{n,\rm clip}^{
            \alpha_{\rm mar}/(\alpha_{\rm mar}+2)}
        +
        \left\{\frac{\log(1/\delta)}{n}\right\}^{
            \alpha_{\rm mar}/\{2(\alpha_{\rm mar}+2)\}}
    \right\}.
\]
For each Lebesgue decomposition
\(\mu_\omega=(\Bpigcov\omega)\nu+\mu_{\omega,\perp}\), define the uniform
coverage-classifier approximation error
\[
    \varepsilon_{\rm cls}(\tau_{u,n})
    :=
    \sup_{\omega\in\sW_{\rm clip}}
    \inf_{c\in\mathcal C}
    \left[
        E_\nu\left\{
            |(\Bpigcov\omega)(X)-\tau_{u,n}|
            1\{c(X)\ne c_{\omega,\tau_{u,n}}^\star(X)\}
        \right\}
        +
        \int c\,\dd\mu_{\omega,\perp}
    \right].
\]
This quantity weights classifier disagreements with the oracle retention
indicator by their distance from the clipping threshold and penalizes the
classifier for incorrectly retaining Bellman mass that is singular with
respect to \(\nu\).

\begin{theorem}[Fitted coverage-stopped \textsc{FORE}]
\label{thm:clip-finite-fore}
Let \(\gamma\in[0,1)\), and fix a lower model envelope \(\tau_\ell\) and an
upper clipping schedule \(\tau_{u,n}=1\vee A\log(en)\), where
\(0<\tau_\ell\le1\) and \(A>K_{\rm cov}/2\). Assume
Conditions~\ref{ass:kl-class},
\ref{ass:clip-finite-cov-upper-tail},
\ref{ass:clip-finite-lower-tail},
\ref{ass:clip-finite-selector-margin}, and
\ref{ass:clip-finite-projection-compact}. Suppose also that
\(1\in\sW_{\rm clip}\). Let
\(\{\widehat\omega^{(k)}\}_{k=0}^K\) be the exact-ERM fitted coverage-stopped
\textsc{FORE} iterates of
Algorithm~\ref{alg:coverage-truncated-fore}. Then,
for every \(0<\delta<1\), with probability at least \(1-\delta\), for
\(\rho=(1+\gamma)/2\),
\[
\begin{aligned}
\|\widehat\omega^{(K)}-\omega_{\rm cov}\|_{L^1(\nu)}
\le C_n\Bigg[
&\rho^{K/2}
\left\{
D_\nu^{\rm gen}
(\widehat\omega^{(0)}\|\omega_{\tau_{u,n}})
\right\}^{1/2}
+
\frac{
\left\{
(1-\gamma)\varepsilon_{\rm ratio}(\tau_{u,n})
+\varepsilon_{\rm cls}(\tau_{u,n})
\right\}^{1/2}
}{1-\gamma}\\
&+
\frac{
\{\mathcal E_{n,\rm stat}(\delta)\}^{1/2}
}{1-\gamma}
\Bigg],
\end{aligned}
\]
where finite constants \(C_0,q\), independent of \(n\), may be chosen so that
\[
    C_n
    \le
    C_0\{1+\log(en)+\tau_\ell^{-1}\}^{q}
    \left\{1+\log\frac{e}{1-\gamma}\right\}^{1/2}.
\]
\end{theorem}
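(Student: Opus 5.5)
The plan is to reduce the fitted recursion to a perturbed version of the population recursion in Theorem~\ref{thm:coverage-truncated-recursion}, with an additive per-step error. That error splits into a classifier part and a projection-estimation part. The argument then passes from generalized KL to $L^1(\nu)$ through the second display of that theorem.

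\textbf{Step 1: one-step perturbed inequality.} Fix an iterate $\omega=\widehat\omega^{(k)}\in\sW_{\rm clip}$, which holds by construction since $\log\tau_\ell\le\widehat h_{k+1}\le\log\tau_{u,n}$. Write $\bar\omega=\Bpigcap{\tau_{u,n}}\omega$. By Proposition~\ref{prop:coverage-truncated-gate}, the population objective with the oracle gate $c^\star_{\omega,\tau_{u,n}}$ equals $D_\nu^{\rm gen}(\bar\omega\|e^h)$ up to an $h$-free constant. I aim to show
\[
D_\nu^{\rm gen}(\widehat\omega^{(k+1)}\|\omega_{\tau_{u,n}})\le \gamma D_\nu^{\rm gen}(\omega\|\omega_{\tau_{u,n}})+\frac{\tau_{u,n}}{\tau_\ell}\varepsilon_{\rm ratio}(\tau_{u,n})+C\{\varepsilon_{\rm cls}(\tau_{u,n})+\Delta_{k}\},
\]
where $\Delta_k$ is the excess population risk of $\widehat h_{k+1}$. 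The first two terms come from the argument behind Theorem~\ref{thm:coverage-truncated-recursion}: the generalized-KL Pythagorean inequality on the convex set $\sH_{\rm clip}$, the $\gamma$-contraction of the clipped operator, and the bounded density-ratio factor $\tau_u/\tau_\ell$. To pass from the approximate minimizer to $\omega_{\tau_{u,n}}$, I use that the excess generalized-KL risk over the projection dominates the divergence to the projection, again by the Pythagorean inequality. A three-point bound with constant depending on $\tau_{u,n}/\tau_\ell$ then absorbs this; the contraction degrades $\gamma$ to $\rho=(1+\gamma)/2$ after Young's inequality.

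\textbf{Step 2: classifier error.} Replacing $c^\star$ by $\widehat c_k$ changes the loss. Since $h\in[\log\tau_\ell,\log\tau_{u,n}]$, the change is bounded by $\log(\tau_{u,n}/\tau_\ell)$ times the gate-disagreement mass. I bound this mass using the gate criterion's excess risk, which equals $E_\nu\{|\Bpigcov\omega-\tau_{u,n}|\,1\{c\ne c^\star\}\}+\int c\,\dd\mu_{\omega,\perp}$ exactly by separability. This gives the approximation piece $\varepsilon_{\rm cls}$ plus an estimation piece. The margin Condition~\ref{ass:clip-finite-selector-margin} converts weighted excess risk $\epsilon$ into disagreement mass of order $\epsilon^{\alpha_{\rm mar}/(\alpha_{\rm mar}+1)}$. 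Combined with a localized uniform deviation bound over $\mathcal C\times\sW_{\rm clip}$ at radius $\mathfrak r_{n,\rm clip}$, this yields the product form $\mathcal E_{n,\rm stat}(\delta)$, with its exponent $\alpha_{\rm mar}/(\alpha_{\rm mar}+2)$.

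\textbf{Step 3: projection estimation error.} The term $\Delta_k$ is bounded by the same uniform local Rademacher argument used for Theorem~\ref{thm:fitted-kl-fori}. It is taken uniformly over $\omega\in\sW_{\rm clip}$, so that data reuse across iterations is harmless. Curvature of $e^h$ on the bounded range gives the quadratic lower bound, and Condition~\ref{ass:clip-finite-projection-compact} gives existence of minimizers.

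\textbf{Step 4: unrolling and $L^1$ conversion.} Unrolling the recursion over $K$ steps gives the sum of $\rho^K D^{\rm gen}_\nu(\widehat\omega^{(0)}\|\omega_{\tau_{u,n}})$ and $(1-\gamma)^{-1}$ times the per-step errors. The $L^1$ bound of Theorem~\ref{thm:coverage-truncated-recursion} then gives the square roots with factor $\sqrt{2(\tau_{u,n}+1)}$, which is polylogarithmic. The clipping bias $E_\nu(\omega_{\rm cov}-\tau_{u,n})_+/(1-\gamma)$ is $O(n^{-1}\cdot\text{polylog})$ by Condition~\ref{ass:clip-finite-cov-upper-tail}, since $A>K_{\rm cov}/2$. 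The $\log(e/(1-\gamma))$ factor comes from bounding $D^{\rm gen}_\nu(\cdot\|\omega_{\tau_{u,n}})$ constants via Condition~\ref{ass:clip-finite-lower-tail}. All constants are polynomial in $\log(en)$ and $\tau_\ell^{-1}$.

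\textbf{Main obstacle.} I expect Step 2 to be hardest. The gate is refit on data depending on the previous iterate, so its excess-risk control must be uniform in $\omega$. Converting that control into loss perturbation through the margin, while preserving the $\mathcal E_{n,\rm stat}$ product form, is delicate.
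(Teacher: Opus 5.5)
Your outer skeleton matches the paper: the population one-step inequality from Theorem~\ref{thm:coverage-truncated-recursion}, Young's inequality degrading $\gamma$ to $\rho$, unrolling, and the $L^1$ conversion. The gap is inside: the gate step and the projection step cannot be analysed separately.

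\textbf{Step 3 cannot reuse the argument for Theorem~\ref{thm:fitted-kl-fori}.} That argument transferred $L^2(\nu)$ curvature to the initial and successor laws through Condition~\ref{ass:fitted-kl-coverage-bounded}. Nothing of this kind is assumed here, and $\mu_\omega=(\Bpigcov\omega)\nu+\mu_{\omega,\perp}$ may have an unbounded density and a singular part. Take the fitted gate $c=\widehat c_k$ and $g=h-h^\star$. The initial and successor parts of the loss difference then have second moments bounded by multiples of $\int c\,g^2\,\dd\mu_\omega$, and the only available control is through the gate:
\[
\int c\,g^2\,\dd\mu_\omega\le\tau_{u,n}\|g\|_{L^2(\nu)}^2+\|g\|_\infty^2\Bigl[\tau_{u,n}\,\nu\{c\ne c^\star\}+\Delta^{\rm ret}_\omega(c)\Bigr].
\]
Here $c^\star=c^\star_{\omega,\tau_{u,n}}$, and $\Delta^{\rm ret}_\omega(c)$ is the weighted gate excess risk you wrote down. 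The bound uses $c^\star=0$ on the singular part and $\Bpigcov\omega\le\tau_{u,n}$ where $c^\star=1$.

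This is where the margin condition is actually needed. It turns $\nu\{c\ne c^\star\}$ into $(\Delta^{\rm ret}_\omega)^{\alpha_{\rm mar}/(\alpha_{\rm mar}+1)}$. That localizes the gate ERM, through a variance--regret relation at exponent $2+2/\alpha_{\rm mar}$. It also gives the projection ERM a variance floor $s^{\alpha_{\rm mar}/(\alpha_{\rm mar}+1)}+s$ at gate regret $s$. The projection ERM must therefore be localized uniformly over triples $(\omega,c,h)$ with $\Delta^{\rm ret}_\omega(c)\le s$, where $s$ is the gate rate plus $\varepsilon_{\rm cls}$. The cross term $\mathfrak r_{n,\rm clip}\,s^{\alpha_{\rm mar}/\{2(\alpha_{\rm mar}+1)\}}$ is what produces the product form of $\mathcal E_{n,\rm stat}(\delta)$.

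\textbf{Step 2 should not use the margin, and its bound is false as written.} The change of objective when passing from $c^\star$ to $c$ is exactly
\[
-\int h\,(c-c^\star)\{\Bpigcov\omega-\tau_{u,n}\}\,\dd\nu-\int h\,c\,\dd\mu_{\omega,\perp}.
\]
This weights disagreement by the unbounded factor $|\Bpigcov\omega-\tau_{u,n}|$ and charges singular mass that $\nu\{c\ne c^\star\}$ does not see. It is bounded directly by $\|h\|_\infty\,\Delta^{\rm ret}_\omega(c)$, so $\varepsilon_{\rm cls}$ enters linearly. Even a corrected route through disagreement mass and the margin would cost a power $\alpha_{\rm mar}/(\alpha_{\rm mar}+1)$ on both $\varepsilon_{\rm cls}$ and the estimation error. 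For parametric classes at $\alpha_{\rm mar}=1$, that is an $n^{-1/3}$ term in place of $n^{-2/3}$, so the stated bound would not follow.

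\textbf{Step 1 has a second, smaller gap: the target is not bounded below.} The three-point cross term $\int(\widetilde\omega-\bar\omega)\log(\bar\omega/\omega_{\tau_{u,n}})\,\dd\nu$ cannot be absorbed with constants depending only on $\tau_{u,n}/\tau_\ell$, because $\omega_{\tau_{u,n}}$ has no lower bound. The Cauchy--Schwarz/Young step needs
\[
\int\bar\omega\{\log(\bar\omega/s)\}^2\,\dd\nu\lesssim\{1+\log(1/\eta)\}\,D_\nu^{\rm gen}(\bar\omega\|s),\qquad s=\omega_{\tau_{u,n}}\vee\eta,
\]
which requires two things. First, a lower-tail bound for $\omega_{\tau_{u,n}}$ uniform in the clipping level. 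Second, truncation at a level $\eta$ tuned to the statistical rate.

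Condition~\ref{ass:clip-finite-lower-tail} concerns $\omega_{\rm cov}\ge\omega_{\tau_{u,n}}$, so it does not transfer by monotonicity. The $L^1$ clipping bias plus Markov's inequality is also useless as $t\downarrow0$. The paper proves the transfer, at exponent $\alpha_{\rm cov}/2$ with constant of order $(1-\gamma)^{-1}$. It shows that a rescaled fixed point of $\Bpigcov$, killed on $\{\omega_{\rm cov}>L\}$, is a subsolution of the clipped operator. This is where the $\log(en)$ and $\{1+\log(e/(1-\gamma))\}^{1/2}$ factors in $C_n$ come from.

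\textbf{Step 4 misstates the clipping bias.} It is $O\{(1-\gamma)^{-1}n^{-A/K_{\rm cov}}\}$, which under $A>K_{\rm cov}/2$ is only $o\{(1-\gamma)^{-1}n^{-1/2}\}$, not $O(n^{-1})$. The theorem has no separate bias term, so it must be absorbed into the statistical term using $\mathcal E_{n,\rm stat}(\delta)\ge n^{-1}$.
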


\paragraph{Bound terms.}
As in Theorem~\ref{thm:fitted-kl-fori}, the first term is the geometrically decaying iteration error. The second combines the ratio-class approximation error \(\varepsilon_{\rm ratio}(\tau_{u,n})\) for the clipped target with the coverage-classifier approximation error \(\varepsilon_{\rm cls}(\tau_{u,n})\). Although the ratio class need not be closed under the clipped Bellman operator, the classifier class \(\mathcal C\) must uniformly approximate the oracle retention indicators, as measured by \(\varepsilon_{\rm cls}(\tau_{u,n})\). This condition is analogous in spirit to completeness but concerns a binary retention rule rather than Bellman images of the ratio class. The third term is the statistical error accumulated across iterations from jointly estimating the retention indicator and projection, with critical radius \(\mathfrak r_{n,\rm clip}\) capturing the complexity of both problems. If \(\mathcal C\) has VC dimension \(d_{\mathcal C}\) and \(\sH_{\rm clip}\) has VC-subgraph dimension \(d_{\mathcal H}\), Lemma~\ref{lem:clip-finite-vc-radius} yields
\[
    \mathfrak r_{n,\rm clip}^2
    \le
    C_n\frac{(d_{\mathcal C}+d_{\mathcal H})\log(en)}{n}.
\]
For fixed class dimensions and confidence level, \(\mathcal E_{n,\rm stat}(\delta)\) is, up to logarithmic factors, of order \(n^{-2/3}\) when \(\alpha_{\rm mar}=1\), \(n^{-3/4}\) when \(\alpha_{\rm mar}=2\), and approaches \(n^{-1}\) as \(\alpha_{\rm mar}\to\infty\). In this limit,
\[
    \mathcal E_{n,\rm stat}(\delta)
    \lesssim
    \mathfrak r_{n,\rm clip}^2
    +
    \frac{\log(1/\delta)}{n}.
\]

\paragraph{Adaptivity to coverage.}
The estimator automatically adapts to favorable coverage by incurring little
or no classifier-approximation error. If \(1\in\mathcal C\), choosing
\(c\equiv1\) gives
\[
    \varepsilon_{\rm cls}(\tau_{u,n})
    \le
    \sup_{\omega\in\sW_{\rm clip}}
    \left[
        E_\nu\{((\Bpigcov\omega)(X)-\tau_{u,n})_+\}
        +
        \mu_{\omega,\perp}(\sX)
    \right].
\]
This bound is small when, uniformly over \(\omega\in\sW_{\rm clip}\), the
excess of \(\Bpigcov\omega\) above the clipping threshold and the total mass
of \(\mu_{\omega,\perp}\) are small. If
\(\dinit\ll\nu\) and \(\Ppi(x,\cdot)\ll\nu\) for \(\nu\)-a.e. \(x\), then
\(\mu_{\omega,\perp}=0\) for every \(\omega\in\sW_{\rm clip}\). If, in
addition, \(\Bpigcov\omega\le\tau_{u,n}\) \(\nu\)-a.e. for every
\(\omega\in\sW_{\rm clip}\), then the oracle retention rule is identically one
and \(\varepsilon_{\rm cls}(\tau_{u,n})=0\). In this regime, the guarantee
essentially reduces to the corresponding \textsc{FORE} bound, up to the
additional statistical cost of jointly estimating the retention rule and the
generalized-KL projection.

\section{Numerical Experiments}
\label{sec:experiments}

The theory separates two requirements in offline policy evaluation:
realizability of \(Q^\pi\) in a value class and representability of the
discounted occupancy ratio in a density-ratio class. Our first two examples
isolate this distinction. In both, \(Q^\pi\) belongs to the fitted value class,
but linear FQE can be unstable because its Bellman update is projected in the
offline data norm and the class is not Bellman complete. By contrast,
log-linear \textsc{FORE} remains stable when the ratio class contains the true
discounted occupancy ratio, even though the class is not adjoint Bellman
complete. As occupancy-estimation baselines, we compare \textsc{FORE} with DualDICE
\citep{nachumEtAl2019DualDICE} and minimax weight learning (MWL)
\citep{ueharaEtAl2020MWLMQL}, using the same ratio class and favorable tuning
of the critic classes. We also use the fitted \textsc{FORE} ratio to construct
a \textsc{FORE}-reweighted FQE baseline, which changes only the projection distribution
in FQE. A third experiment evaluates coverage-stopped \textsc{FORE} when full
coverage fails.

\subsection{Baird-style finite MRP}
\label{sec:experiments-baird}

Our first example is a Baird-style finite MRP based on the star-shaped
off-policy counterexample of \citet{baird1995residual}. The state space has six
symmetric upper states and one lower state. We specify a target transition
kernel, an offline data distribution, and a one-dimensional feature \(\phi\), with
\(\phi(x)=0.1\) on each upper state and \(\phi(x)=1\) on the lower state. The
discounted occupancy ratio is exactly represented by a one-parameter normalized
log-linear class:
\[
    \omegastar(x)=
    \begin{cases}
        0.2211, & x\text{ upper},\\
        15.7987, & x\text{ lower}.
    \end{cases}
\]
To illustrate the role of the occupancy ratio in stabilizing FQE, rewards are
chosen from the Bellman equation \(r=\phi-\gamma P\phi\). Hence the target
value function is realizable in the scalar class \(q_\beta(x)=\beta\phi(x)\),
with \(q^\pi=\phi\), and the policy value is \(0.1\).

Figure~\ref{fig:baird-population} illustrates the population recursions. The
population \textsc{FORE} KL recursion converges to the true ratio. In contrast,
under the offline data distribution, the projected linear FQE recursion has scalar
multiplier \(2.103\), so coefficient errors are amplified across iterations.
Using the \textsc{FORE} ratio as the FQE projection weight changes this
multiplier to \(0.801\). Tabular FQE is included as a Bellman-complete
benchmark, for which the projected Bellman operator has contraction multiplier
\(\gamma=0.95\).

\begin{figure}[H]
\centering
\includegraphics[width=0.92\linewidth]{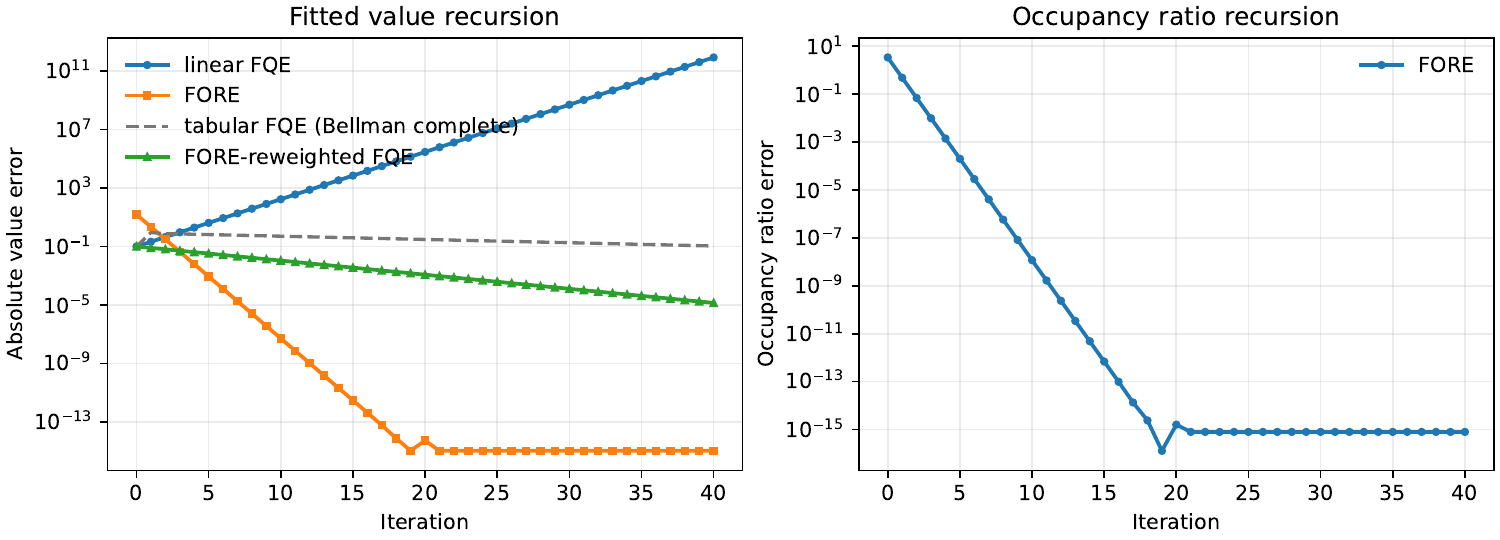}
\caption{Baird-style finite MRP. The left panel shows policy-value error for
linear FQE, direct \textsc{FORE}, tabular FQE, and \textsc{FORE}-reweighted FQE. The right panel
shows the \textsc{FORE} occupancy-ratio error in \(L^1(\nu)\).}
\label{fig:baird-population}
\end{figure}

\subsection{Linear-Gaussian policy evaluation}
\label{sec:experiments-gaussian}

Our second example is a continuous linear-Gaussian policy-evaluation problem with
\(X=(S,A)\in\mathbb R^2\). Offline samples are drawn from
\(\nu=N(0,\Sigma_b)\), where \(\Sigma_b=\operatorname{diag}(1.5,0.4)\). Under
the target policy,
\[
    S^+=0.7S+0.5A+\varepsilon_s,\qquad
    A^+=-0.8S^+ + \varepsilon_a,
\]
with Gaussian noise. The initial distribution is the target stationary distribution, so the
discounted occupancy distribution is Gaussian and the true density ratio is
exponential quadratic. We write \(h_\star\) for the log-density ratio, up to an
additive constant, and use the normalized log-linear class with sufficient
statistics \((h_\star,s,a)\) for \textsc{FORE}, MWL, and DualDICE. This class
contains the target ratio but is not closed under the target transition or the
corresponding adjoint Bellman update.

Rewards are chosen from the Bellman equation. Specifically, we take
\(r=q-\gamma Pq\), with \(q\) quadratic in \(a\), so that \(Q^\pi=q\) belongs
to a three-dimensional value class of the form
\[
    \{ \beta_0 q+\beta_1 s+\beta_2 a:\beta\in\mathbb R^3\}.
\]
Thus the value function is realizable. However, the class is not Bellman
complete, because \(Pq\) contains the quadratic directions \(s^2\) and \(sa\),
which are missing from the value class. At the population level, the projected
linear FQE recursion under the offline data distribution is expansive, with dominant
iteration multiplier \(1.22\). By contrast, the \textsc{FORE} ratio recursion
is contractive, with multiplier \(0.086\). Reweighting FQE by the resulting
occupancy ratio also makes the projected FQE recursion contractive, with
multiplier \(0.68\).

Finite-sample runs use \(n\in\{500,1000,2000,5000,10000\}\) offline
transitions and \(300\) independent repetitions at each sample size.
\textsc{FORE}, MWL, and DualDICE use the same three-dimensional normalized
log-linear ratio class. Linear FQE, \textsc{FORE}-reweighted FQE, and MQL use
the same three-dimensional value class. Thus the direct ratio and value
estimators are compared using classes of the same size. MWL, MQL, and DualDICE
use the same random-Fourier RBF critic class, with \(128\) features and an
intercept term. Additional numerical constants, tuning parameters, and
implementation details are reported in Appendix~\ref{app:gaussian-details}.

\begin{figure}[H]
\centering
\includegraphics[width=0.92\linewidth]{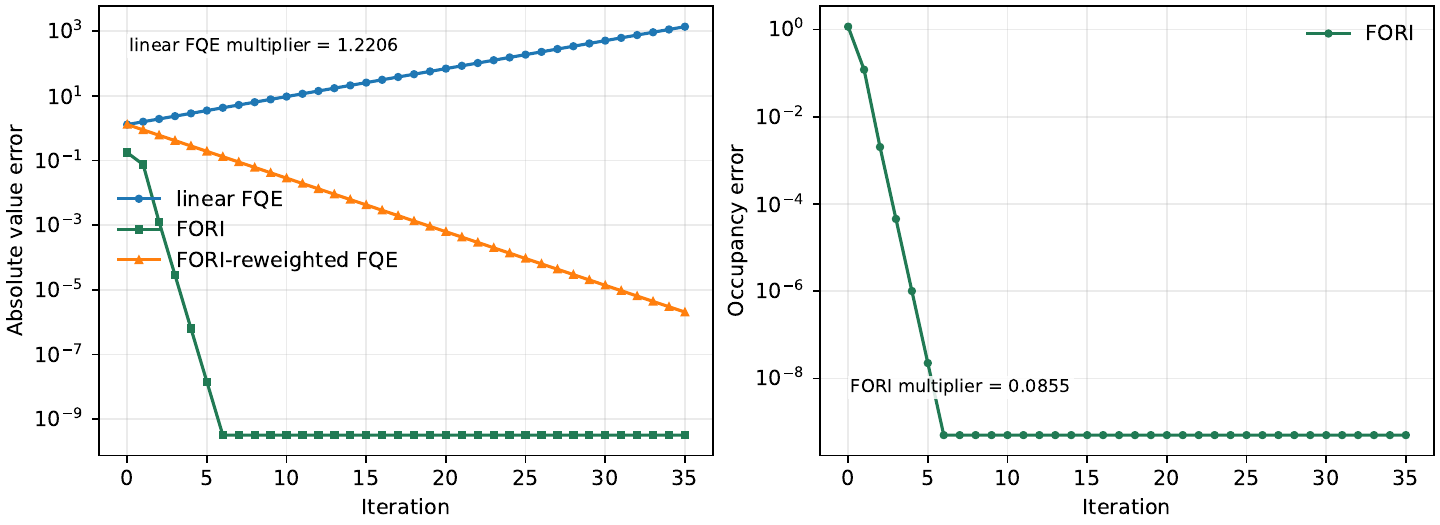}
\caption{Linear-Gaussian population recursions. The left panel shows
policy-value error for linear FQE, direct \textsc{FORE}, and
\textsc{FORE}-reweighted FQE. The right panel shows \textsc{FORE} occupancy
error.}
\label{fig:gaussian-population}
\end{figure}

\begin{figure}[H]
\centering
\includegraphics[width=0.92\linewidth]{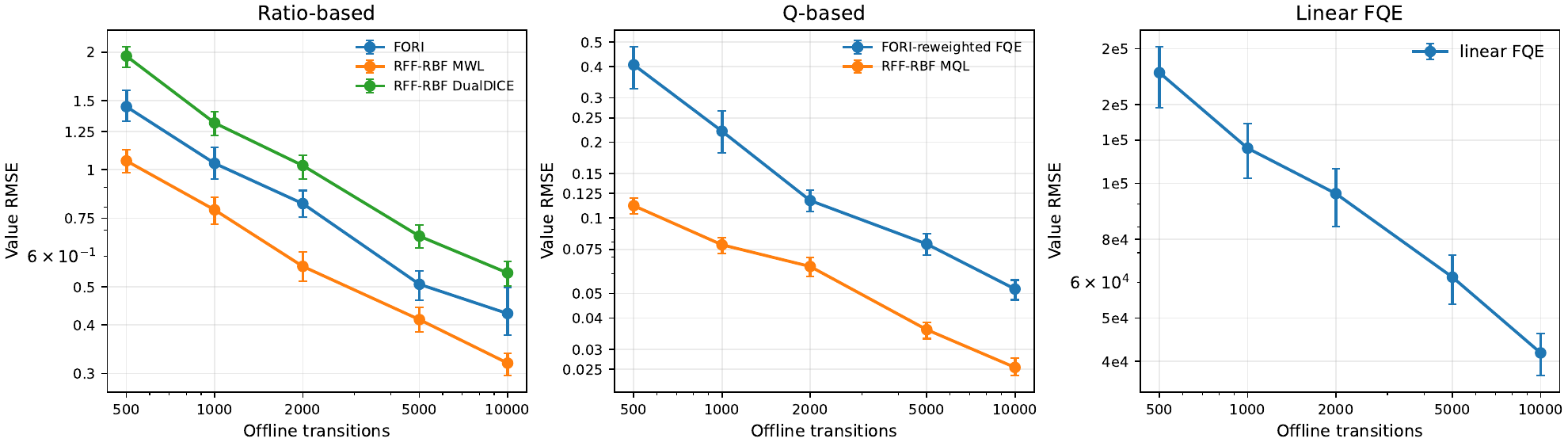}
\caption{Linear-Gaussian finite-sample value error. Curves report value RMSE
over \(300\) repetitions; vertical bars indicate Monte Carlo uncertainty.}
\label{fig:gaussian-finite}
\end{figure}

\begin{figure}[H]
\centering
\includegraphics[width=0.88\linewidth]{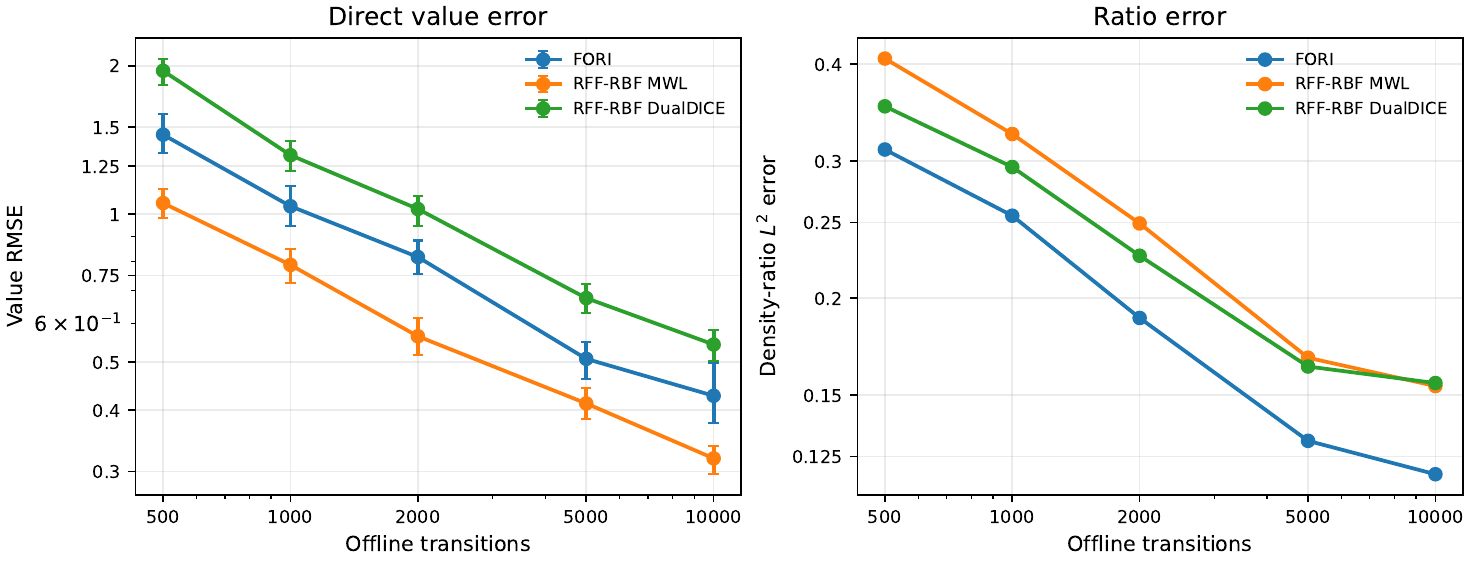}
\caption{Direct ratio estimators in the linear-Gaussian experiment. The left
panel reports value RMSE from direct reward reweighting; the right panel
reports empirical \(L^2(\nu)\) error of the fitted density ratio.}
\label{fig:gaussian-ratio-value}
\end{figure}

Figure~\ref{fig:gaussian-ratio-value} separates density-ratio error from the
error of direct reward reweighting. In this design, \textsc{FORE} has the
smallest density-ratio error across the reported sample sizes, while MWL has
the smallest direct value RMSE among the ratio estimators. At \(n=10000\), MQL
and \textsc{FORE}-reweighted FQE have value RMSEs \(0.025\) and \(0.052\),
respectively, compared with \(4.18\times 10^4\) for linear FQE. Direct reward
reweighting has value RMSEs \(0.428\), \(0.319\), and \(0.543\) for
\textsc{FORE}, MWL, and DualDICE, respectively.

The \textsc{FORE}-reweighted FQE results show that recovering the target
occupancy distribution can stabilize the projected Bellman recursion even when the value
class is not Bellman complete for the Bellman projection under the offline data distribution. MQL uses the
same value class as \textsc{FORE}-reweighted FQE, so their difference reflects
the fitted criterion and critic weighting rather than the size of the
\(Q\)-model.

We also vary the discount factor at fixed sample size \(n=5000\). For each
\(\gamma\), the reward is redefined as \(r=q-\gamma Pq\), so the value class
remains correctly specified. Figure~\ref{fig:gaussian-gamma-sweep} plots value
RMSE against the effective horizon \((1-\gamma)^{-1}\). The direct ratio
estimators grow approximately linearly on this scale, consistent with the
value-level horizon dependence in Theorem~\ref{thm:fitted-kl-fori}. The
\(Q\)-based estimators, MQL and \textsc{FORE}-reweighted FQE, are less
sensitive to the discount in this example. Linear FQE is run for the same fixed
number of fitted updates at every discount, including settings in which the
empirical projected Bellman recursion is noncontractive.

\begin{figure}[H]
\centering
\includegraphics[width=0.92\linewidth]{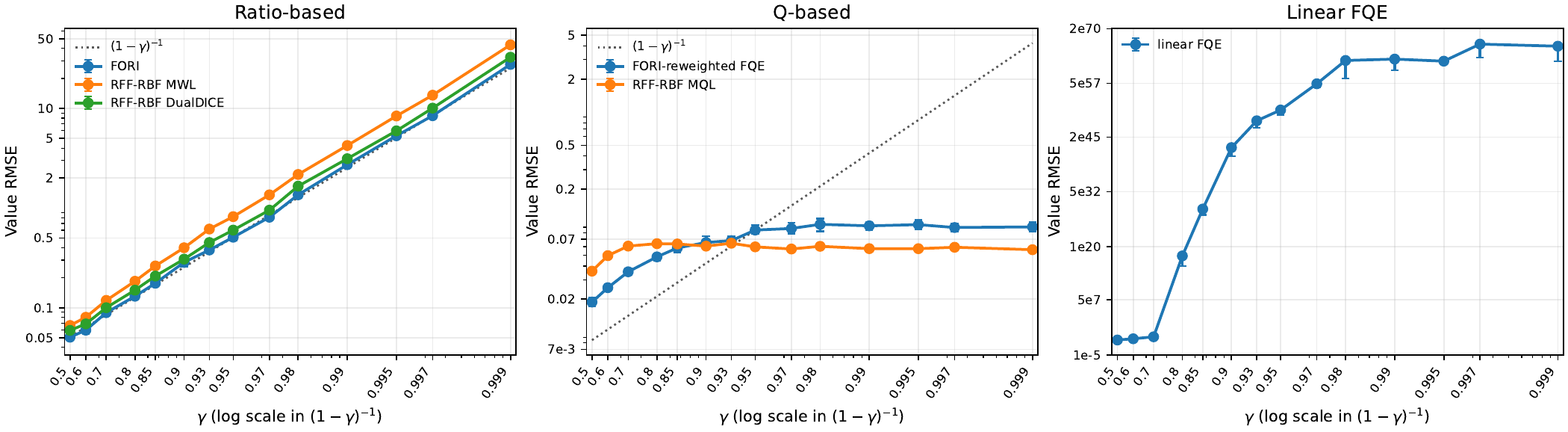}
\caption{Linear-Gaussian value error as the discount varies. Curves report value
RMSE over \(500\) repetitions at \(n=5000\), with the horizontal axis scaled by
the effective horizon \((1-\gamma)^{-1}\). Linear FQE is evaluated after the
same fixed number of fitted updates at every discount.}
\label{fig:gaussian-gamma-sweep}
\end{figure}

\subsection{Coverage-stopped occupancy under insufficient data coverage}
\label{sec:experiments-stopped-fore}

Our third experiment evaluates whether coverage-stopped \textsc{FORE} recovers
the subprobability occupancy induced by following the target policy until the
first unsupported state--action pair. The coverage classifier is estimated
from the data. We vary the fraction of covered contexts over
\(p\in\{0,0.25,0.5,0.75,1\}\) and introduce the support failure either at the
initial stage or at a recurrent successor state. We compare coverage-stopped
\textsc{FORE} with standard \textsc{FORE} normalized to unit mass and with the
standard ratio clipped post hoc at \(20\), evaluating all methods against the
same coverage-stopped occupancy ratio. For \(p<1\), the two standard variants
do not target this subprobability occupancy and test whether normalization or
post-hoc clipping can approximate the removal of unsupported occupancy mass.
For each support-failure location and covered-context fraction, we use \(30\)
independent repetitions and training samples of size \(2{,}000\) and
\(10{,}000\). Appendix~\ref{app:stopped-fore-details} gives the full data-generating
process, analytical occupancy masses, and implementation details.

Figure~\ref{fig:stopped-fore} reports the ratio and value errors. At each sample
size, coverage-stopped \textsc{FORE} has lower \(L^1(\nu)\) ratio error than
both comparison methods in all \(240\) runs with \(p<1\). At \(n=10{,}000\),
its median ratio error is \(0.032\), compared with \(1.154\) for standard
\textsc{FORE} and \(0.626\) for post-hoc clipping. The corresponding median
absolute coverage-stopped value errors are \(0.0067\), \(0.315\), and \(0.124\).
Under full support, coverage-stopped and standard \textsc{FORE} perform
similarly, with median ratio errors of \(0.0537\) and \(0.0504\), respectively.
Thus, the gains from coverage stopping arise specifically under support
failure rather than from uniformly stronger regularization.

\begin{figure}[H]
\centering
\includegraphics[width=0.94\linewidth]{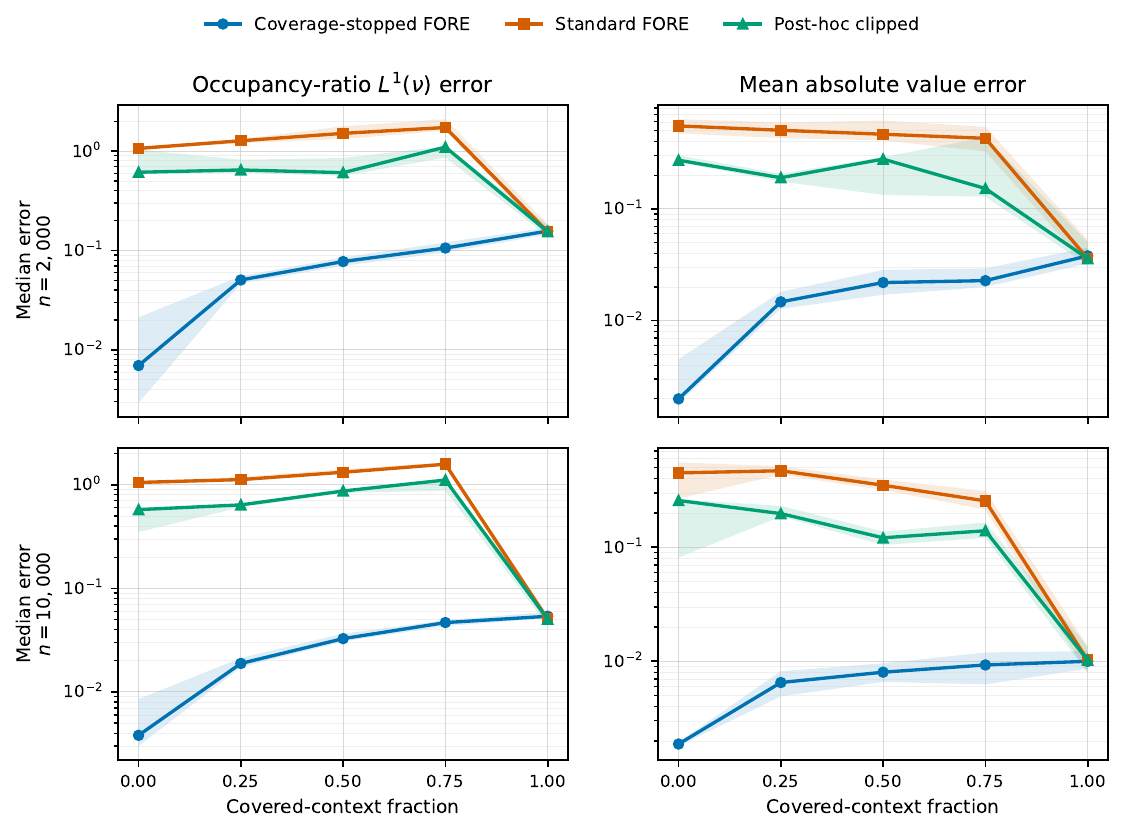}
\caption{Coverage-stopped occupancy under insufficient data coverage. Columns
show the \(L^1(\nu)\) error of the coverage-stopped occupancy ratio and the
mean absolute coverage-stopped value error; rows correspond to the training
sample size. At each covered-context fraction, points show the median over
\(60\) runs, with \(30\) repetitions for each support-failure location. Bands
are pointwise \(95\%\) percentile-bootstrap intervals for the median. }
\label{fig:stopped-fore}
\end{figure}

\section{Conclusion}

\textsc{FORE} formulates discounted occupancy-ratio estimation as a fitted
adjoint Bellman problem. Rather than solving a ratio--critic saddle point, it
iterates adjoint Bellman updates and KL-projects each update onto a class of
positive, normalized ratios. The KL geometry gives the main stability result:
the population update contracts in relative entropy toward the target
occupancy ratio, and the projected recursion converges up to the approximation
error of the reverse-KL projection onto the ratio class. The fitted analysis adds a statistical error
governed by the complexity of the same class.

The approximation requirement is therefore placed directly on the occupancy
ratio. Instead of requiring Bellman completeness of a value class, adjoint
Bellman completeness of a ratio class, or dual completeness of a critic class,
the fixed-policy theory requires that the distribution shift from the offline
distribution to the target discounted occupancy be representable and
estimable. If \(\log\omegastar\) is poorly approximated by the chosen class,
\textsc{FORE} converges only up to the corresponding KL approximation error.

This perspective contrasts with value-function realizability results in
offline reinforcement learning. Realizability of \(Q^\pi\) alone does not
control distribution shift, so finite-sample value guarantees typically require
additional coverage, concentrability, completeness, or weighting conditions
\citep{jiangLi2016DoublyRobust,munosSzepesvari2008FVI,
xieEtAl2019MarginalizedIS,yinWang2020EfficientTabularOPE,
pattersonEtAl2022GeneralizedPBE,vanDerLaanKallus2025StationaryWeightedFQE}.
Given support overlap, however, the structural realizability condition in
\textsc{FORE} is on the discounted occupancy ratio, not on a value or critic
class.

\medskip
 
\noindent\textbf{Limitations.}
Full-ratio recovery requires coverage: target-induced state--action
distributions must be supported by the offline data. Under weak overlap, in
high-dimensional continuous state--action spaces, or for near-deterministic
target policies, occupancy-ratio estimation may be as difficult as, or more
difficult than, value-function estimation. The guarantees also require
boundedness and complexity control of the log-ratio class. Because
\textsc{FORE} models ratios on the log scale, the target discounted occupancy
ratio must be positive on the support of the offline distribution, and the
finite-sample theory requires a lower-tail margin condition. The coverage-stopped extension in
Section~\ref{sec:coverage-truncated-fore} avoids extrapolation into unsupported
regions by targeting the coverage-stopped discounted occupancy. This changes
the estimand: when target occupancy is removed, the resulting value is the
coverage-stopped value \(V_{\pi,{\rm cov}}(r)\), not the full target-policy
value.

\bibliographystyle{plainnat}
\bibliography{refs}

@InProceedings{jiangLi2016DoublyRobust,
  title = 	 {Doubly Robust Off-policy Value Evaluation for Reinforcement Learning},
  author = 	 {Jiang, Nan and Li, Lihong},
  booktitle = 	 {Proceedings of The 33rd International Conference on Machine Learning},
  pages = 	 {652--661},
  year = 	 {2016},
  editor = 	 {Balcan, Maria Florina and Weinberger, Kilian Q.},
  volume = 	 {48},
  series = 	 {Proceedings of Machine Learning Research},
  address = 	 {New York, New York, USA},
  month = 	 {20--22 Jun},
  publisher =    {PMLR},
  url = 	 {https://proceedings.mlr.press/v48/jiang16.html}
}

@InProceedings{thomasBrunskill2016DataEfficientOPE,
  title = 	 {Data-Efficient Off-Policy Policy Evaluation for Reinforcement Learning},
  author = 	 {Thomas, Philip and Brunskill, Emma},
  booktitle = 	 {Proceedings of The 33rd International Conference on Machine Learning},
  pages = 	 {2139--2148},
  year = 	 {2016},
  editor = 	 {Balcan, Maria Florina and Weinberger, Kilian Q.},
  volume = 	 {48},
  series = 	 {Proceedings of Machine Learning Research},
  address = 	 {New York, New York, USA},
  month = 	 {20--22 Jun},
  publisher =    {PMLR},
  url = 	 {https://proceedings.mlr.press/v48/thomasa16.html}
}

@inproceedings{xieEtAl2019MarginalizedIS,
 author = {Xie, Tengyang and Ma, Yifei and Wang, Yu-Xiang},
 booktitle = {Advances in Neural Information Processing Systems},
 editor = {H. Wallach and H. Larochelle and A. Beygelzimer and F. d\textquotesingle Alch\'{e}-Buc and E. Fox and R. Garnett},
 pages = {},
 publisher = {Curran Associates, Inc.},
 title = {Towards Optimal Off-Policy Evaluation for Reinforcement Learning with Marginalized Importance Sampling},
 url = {https://proceedings.neurips.cc/paper_files/paper/2019/file/4ffb0d2ba92f664c2281970110a2e071-Paper.pdf},
 volume = {32},
 year = {2019}
}

@article{van2026researcher,
  title={A Researcher's Guide to Empirical Risk Minimization},
  author={van der Laan, Lars},
  journal={arXiv preprint arXiv:2602.21501},
  year={2026}
}

@inproceedings{chen2019information,
  title={Information-theoretic considerations in batch reinforcement learning},
  author={Chen, Jinglin and Jiang, Nan},
  booktitle={International conference on machine learning},
  pages={1042--1051},
  year={2019},
  organization={PMLR}
}

@InProceedings{yinWang2020EfficientTabularOPE,
  title = 	 {Asymptotically Efficient Off-Policy Evaluation for Tabular Reinforcement Learning},
  author =       {Yin, Ming and Wang, Yu-Xiang},
  booktitle = 	 {Proceedings of the Twenty Third International Conference on Artificial Intelligence and Statistics},
  pages = 	 {3948--3958},
  year = 	 {2020},
  editor = 	 {Chiappa, Silvia and Calandra, Roberto},
  volume = 	 {108},
  series = 	 {Proceedings of Machine Learning Research},
  month = 	 {26--28 Aug},
  publisher =    {PMLR},
  url = 	 {https://proceedings.mlr.press/v108/yin20b.html}
}

@article{kallusUehara2020DoubleRL,
  author  = {Nathan Kallus and Masatoshi Uehara},
  title   = {Double Reinforcement Learning for Efficient Off-Policy Evaluation in Markov Decision Processes},
  journal = {Journal of Machine Learning Research},
  year    = {2020},
  volume  = {21},
  number  = {167},
  pages   = {1--63},
  url     = {http://jmlr.org/papers/v21/19-827.html}
}

@InProceedings{kallusUehara2020DRL_ICML,
  title = 	 {Double Reinforcement Learning for Efficient and Robust Off-Policy Evaluation},
  author =       {Kallus, Nathan and Uehara, Masatoshi},
  booktitle = 	 {Proceedings of the 37th International Conference on Machine Learning},
  pages = 	 {5078--5088},
  year = 	 {2020},
  editor = 	 {III, Hal Daum{\'e} and Singh, Aarti},
  volume = 	 {119},
  series = 	 {Proceedings of Machine Learning Research},
  month = 	 {13--18 Jul},
  publisher =    {PMLR},
  url = 	 {https://proceedings.mlr.press/v119/kallus20b.html}
}

@inproceedings{kallusUehara2019IntrinsicOPE,
 author = {Kallus, Nathan and Uehara, Masatoshi},
 booktitle = {Advances in Neural Information Processing Systems},
 editor = {H. Wallach and H. Larochelle and A. Beygelzimer and F. d\textquotesingle Alch\'{e}-Buc and E. Fox and R. Garnett},
 pages = {},
 publisher = {Curran Associates, Inc.},
 title = {Intrinsically Efficient, Stable, and Bounded Off-Policy Evaluation for Reinforcement Learning},
 url = {https://proceedings.neurips.cc/paper_files/paper/2019/file/59bcda7c438bad7d2afffe9e2fed00be-Paper.pdf},
 volume = {32},
 year = {2019}
}

@InProceedings{kallusUehara2020PolicyGradients,
  title = 	 {Statistically Efficient Off-Policy Policy Gradients},
  author =       {Kallus, Nathan and Uehara, Masatoshi},
  booktitle = 	 {Proceedings of the 37th International Conference on Machine Learning},
  pages = 	 {5089--5100},
  year = 	 {2020},
  editor = 	 {III, Hal Daum{\'e} and Singh, Aarti},
  volume = 	 {119},
  series = 	 {Proceedings of Machine Learning Research},
  month = 	 {13--18 Jul},
  publisher =    {PMLR},
  url = 	 {https://proceedings.mlr.press/v119/kallus20c.html}
}

@inproceedings{kallusUehara2020DeterministicPolicyDR,
 author = {Kallus, Nathan and Uehara, Masatoshi},
 booktitle = {Advances in Neural Information Processing Systems},
 editor = {H. Larochelle and M. Ranzato and R. Hadsell and M.F. Balcan and H. Lin},
 pages = {10420--10430},
 publisher = {Curran Associates, Inc.},
 title = {Doubly Robust Off-Policy Value and Gradient Estimation for Deterministic Policies},
 url = {https://proceedings.neurips.cc/paper_files/paper/2020/file/75df63609809c7a2052fdffe5c00a84e-Paper.pdf},
 volume = {33},
 year = {2020}
}

@article{kallusUehara2022BreakingHorizonDRL,
  author       = {Nathan Kallus and
                  Masatoshi Uehara},
  title        = {Efficiently Breaking the Curse of Horizon in Off-Policy Evaluation
                  with Double Reinforcement Learning},
  journal      = {Oper. Res.},
  volume       = {70},
  number       = {6},
  pages        = {3282--3302},
  year         = {2022},
  url          = {https://doi.org/10.1287/opre.2021.2249},
  doi          = {10.1287/OPRE.2021.2249},
  bibsource    = {dblp computer science bibliography, https://dblp.org}
}

@article{kallusUehara2024NaturalPolicies,
  author = {Kallus, Nathan and Uehara, Masatoshi},
  title = {Efficient Evaluation of Natural Stochastic Policies in Off-Line Reinforcement Learning},
  journal = {Biometrika},
  volume = {111},
  number = {1},
  pages = {51--69},
  year = {2024},
  doi = {10.1093/biomet/asad059},
  url = {https://academic.oup.com/biomet/article/111/1/51/7284104}
}

@article{ueharaShiKallus2022OPEReview,
  author       = {Masatoshi Uehara and
                  Chengchun Shi and
                  Nathan Kallus},
  title        = {A Review of Off-Policy Evaluation in Reinforcement Learning},
  journal      = {CoRR},
  volume       = {abs/2212.06355},
  year         = {2022},
  url          = {https://doi.org/10.48550/arXiv.2212.06355},
  doi          = {10.48550/ARXIV.2212.06355},
  eprinttype   = {arXiv},
  eprint       = {2212.06355},
  bibsource    = {dblp computer science bibliography, https://dblp.org}
}

@article{vanDerLaanEtAl2025AutomaticDRL,
  title={Semiparametric double reinforcement learning with applications to long-term causal inference},
  author={van der Laan, Lars and Hubbard, David and Tran, Allen and Kallus, Nathan and Bibaut, Aur{\'e}lien},
  journal={arXiv preprint arXiv:2501.06926},
  year={2025}
}

@inproceedings{riedmiller2005neural,
  title={Neural fitted Q iteration--first experiences with a data efficient neural reinforcement learning method},
  author={Riedmiller, Martin},
  booktitle={European conference on machine learning},
  pages={317--328},
  year={2005},
  organization={Springer}
}

@inproceedings{tosatto2017boosted,
  title={Boosted fitted q-iteration},
  author={Tosatto, Samuele and Pirotta, Matteo and d’Eramo, Carlo and Restelli, Marcello},
  booktitle={International Conference on Machine Learning},
  pages={3434--3443},
  year={2017},
  organization={PMLR}
}

@article{vanDerLaanKallusBibaut2025ClassificationIRL,
  author       = {Lars van der Laan and
                  Nathan Kallus and
                  Aur{\'{e}}lien Bibaut},
  title        = {Inverse Reinforcement Learning Using Just Classification and a Few
                  Regressions},
  journal      = {CoRR},
  volume       = {abs/2509.21172},
  year         = {2025},
  url          = {https://doi.org/10.48550/arXiv.2509.21172},
  doi          = {10.48550/ARXIV.2509.21172},
  eprinttype   = {arXiv},
  eprint       = {2509.21172},
  bibsource    = {dblp computer science bibliography, https://dblp.org}
}

@article{vanDerLaanBibautKallus2025EfficientIRL,
  author       = {Lars van der Laan and
                  Aur{\'{e}}lien Bibaut and
                  Nathan Kallus},
  title        = {Efficient Inference for Inverse Reinforcement Learning and Dynamic
                  Discrete Choice Models},
  journal      = {CoRR},
  volume       = {abs/2512.24407},
  year         = {2025},
  url          = {https://doi.org/10.48550/arXiv.2512.24407},
  doi          = {10.48550/ARXIV.2512.24407},
  eprinttype   = {arXiv},
  eprint       = {2512.24407},
  bibsource    = {dblp computer science bibliography, https://dblp.org}
}

@inproceedings{liuEtAl2018InfiniteHorizonOPE,
 author = {Liu, Qiang and Li, Lihong and Tang, Ziyang and Zhou, Dengyong},
 booktitle = {Advances in Neural Information Processing Systems},
 editor = {S. Bengio and H. Wallach and H. Larochelle and K. Grauman and N. Cesa-Bianchi and R. Garnett},
 pages = {},
 publisher = {Curran Associates, Inc.},
 title = {Breaking the Curse of Horizon: Infinite-Horizon Off-Policy Estimation},
 url = {https://proceedings.neurips.cc/paper_files/paper/2018/file/dda04f9d634145a9c68d5dfe53b21272-Paper.pdf},
 volume = {31},
 year = {2018}
}

@InProceedings{hallakMannor2017COPTD,
  title = 	 {Consistent On-Line Off-Policy Evaluation},
  author = 	 {Assaf Hallak and Shie Mannor},
  booktitle = 	 {Proceedings of the 34th International Conference on Machine Learning},
  pages = 	 {1372--1383},
  year = 	 {2017},
  editor = 	 {Precup, Doina and Teh, Yee Whye},
  volume = 	 {70},
  series = 	 {Proceedings of Machine Learning Research},
  month = 	 {06--11 Aug},
  publisher =    {PMLR},
  url = 	 {https://proceedings.mlr.press/v70/hallak17a.html}
}

@article{suttonEtAl2016EmphaticTD,
  author  = {Richard S. Sutton and A. Rupam Mahmood and Martha White},
  title   = {An Emphatic Approach to the Problem of Off-policy Temporal-Difference Learning},
  journal = {Journal of Machine Learning Research},
  year    = {2016},
  volume  = {17},
  number  = {73},
  pages   = {1--29},
  url     = {http://jmlr.org/papers/v17/14-488.html}
}

@inproceedings{geladaBellemare2019CovariateShift,
  author       = {Carles Gelada and
                  Marc G. Bellemare},
  title        = {Off-Policy Deep Reinforcement Learning by Bootstrapping the Covariate
                  Shift},
  booktitle    = {The Thirty-Third {AAAI} Conference on Artificial Intelligence, {AAAI}
                  2019, The Thirty-First Innovative Applications of Artificial Intelligence
                  Conference, {IAAI} 2019, The Ninth {AAAI} Symposium on Educational
                  Advances in Artificial Intelligence, {EAAI} 2019, Honolulu, Hawaii,
                  USA, January 27 - February 1, 2019},
  pages        = {3647--3655},
  publisher    = {{AAAI} Press},
  year         = {2019},
  url          = {https://doi.org/10.1609/aaai.v33i01.33013647},
  doi          = {10.1609/AAAI.V33I01.33013647},
  bibsource    = {dblp computer science bibliography, https://dblp.org}
}

@inproceedings{nachumEtAl2019DualDICE,
 author = {Nachum, Ofir and Chow, Yinlam and Dai, Bo and Li, Lihong},
 booktitle = {Advances in Neural Information Processing Systems},
 editor = {H. Wallach and H. Larochelle and A. Beygelzimer and F. d\textquotesingle Alch\'{e}-Buc and E. Fox and R. Garnett},
 pages = {},
 publisher = {Curran Associates, Inc.},
 title = {DualDICE: Behavior-Agnostic Estimation of Discounted Stationary Distribution Corrections},
 url = {https://proceedings.neurips.cc/paper_files/paper/2019/file/cf9a242b70f45317ffd281241fa66502-Paper.pdf},
 volume = {32},
 year = {2019}
}

@inproceedings{zhangEtAl2020GenDICE,
  author       = {Ruiyi Zhang and
                  Bo Dai and
                  Lihong Li and
                  Dale Schuurmans},
  title        = {GenDICE: Generalized Offline Estimation of Stationary Values},
  booktitle    = {8th International Conference on Learning Representations, {ICLR} 2020,
                  Addis Ababa, Ethiopia, April 26-30, 2020},
  publisher    = {OpenReview.net},
  year         = {2020},
  url          = {https://openreview.net/forum?id=HkxlcnVFwB},
  bibsource    = {dblp computer science bibliography, https://dblp.org}
}

@InProceedings{zhangEtAl2020GradientDICE,
  title = 	 {{G}radient{DICE}: Rethinking Generalized Offline Estimation of Stationary Values},
  author =       {Zhang, Shangtong and Liu, Bo and Whiteson, Shimon},
  booktitle = 	 {Proceedings of the 37th International Conference on Machine Learning},
  pages = 	 {11194--11203},
  year = 	 {2020},
  editor = 	 {III, Hal Daumé and Singh, Aarti},
  volume = 	 {119},
  series = 	 {Proceedings of Machine Learning Research},
  month = 	 {13--18 Jul},
  publisher =    {PMLR},
  url = 	 {https://proceedings.mlr.press/v119/zhang20r.html}
}

@InProceedings{ueharaEtAl2020MWLMQL,
  title = 	 {Minimax Weight and Q-Function Learning for Off-Policy Evaluation},
  author =       {Uehara, Masatoshi and Huang, Jiawei and Jiang, Nan},
  booktitle = 	 {Proceedings of the 37th International Conference on Machine Learning},
  pages = 	 {9659--9668},
  year = 	 {2020},
  editor = 	 {III, Hal Daumé and Singh, Aarti},
  volume = 	 {119},
  series = 	 {Proceedings of Machine Learning Research},
  month = 	 {13--18 Jul},
  publisher =    {PMLR},
  url = 	 {https://proceedings.mlr.press/v119/uehara20a.html}
}

@article{ueharaEtAl2021FiniteSampleMinimax,
  author       = {Masatoshi Uehara and
                  Masaaki Imaizumi and
                  Nan Jiang and
                  Nathan Kallus and
                  Wen Sun and
                  Tengyang Xie},
  title        = {Finite Sample Analysis of Minimax Offline Reinforcement Learning:
                  Completeness, Fast Rates and First-Order Efficiency},
  journal      = {CoRR},
  volume       = {abs/2102.02981},
  year         = {2021},
  url          = {https://arxiv.org/abs/2102.02981},
  eprinttype   = {arXiv},
  eprint       = {2102.02981},
  bibsource    = {dblp computer science bibliography, https://dblp.org}
}

@inproceedings{yangEtAl2020RegularizedLagrangianDICE,
 author = {Yang, Mengjiao and Nachum, Ofir and Dai, Bo and Li, Lihong and Schuurmans, Dale},
 booktitle = {Advances in Neural Information Processing Systems},
 editor = {H. Larochelle and M. Ranzato and R. Hadsell and M.F. Balcan and H. Lin},
 pages = {6551--6561},
 publisher = {Curran Associates, Inc.},
 title = {Off-Policy Evaluation via the Regularized Lagrangian},
 url = {https://proceedings.neurips.cc/paper_files/paper/2020/file/488e4104520c6aab692863cc1dba45af-Paper.pdf},
 volume = {33},
 year = {2020}
}

@inproceedings{daiEtAl2020CoinDICE,
 author = {Dai, Bo and Nachum, Ofir and Chow, Yinlam and Li, Lihong and Szepesvari, Csaba and Schuurmans, Dale},
 booktitle = {Advances in Neural Information Processing Systems},
 editor = {H. Larochelle and M. Ranzato and R. Hadsell and M.F. Balcan and H. Lin},
 pages = {9398--9411},
 publisher = {Curran Associates, Inc.},
 title = {CoinDICE: Off-Policy Confidence Interval Estimation},
 url = {https://proceedings.neurips.cc/paper_files/paper/2020/file/6aaba9a124857622930ca4e50f5afed2-Paper.pdf},
 volume = {33},
 year = {2020}
}

@article{cheEtAl2025AVGDICE,
    title={{AVG-DICE}: {S}tationary Distribution Correction by Regression},
    author={Che, Fengdi and Chan, Bryan and Ma, Chen and Mahmood, A. Rupam},
    journal={Reinforcement Learning Journal},
    volume={6},
    pages={2415--2426},
    year={2025}
}

@article{nachumEtAl2019AlgaeDICE,
  author       = {Ofir Nachum and
                  Bo Dai and
                  Ilya Kostrikov and
                  Yinlam Chow and
                  Lihong Li and
                  Dale Schuurmans},
  title        = {AlgaeDICE: Policy Gradient from Arbitrary Experience},
  journal      = {CoRR},
  volume       = {abs/1912.02074},
  year         = {2019},
  url          = {http://arxiv.org/abs/1912.02074},
  eprinttype   = {arXiv},
  eprint       = {1912.02074},
  bibsource    = {dblp computer science bibliography, https://dblp.org}
}

@inproceedings{kostrikovEtAl2020ValueDICE,
  author       = {Ilya Kostrikov and
                  Ofir Nachum and
                  Jonathan Tompson},
  title        = {Imitation Learning via Off-Policy Distribution Matching},
  booktitle    = {8th International Conference on Learning Representations, {ICLR} 2020,
                  Addis Ababa, Ethiopia, April 26-30, 2020},
  publisher    = {OpenReview.net},
  year         = {2020},
  url          = {https://openreview.net/forum?id=Hyg-JC4FDr},
  bibsource    = {dblp computer science bibliography, https://dblp.org}
}

@InProceedings{leeEtAl2021OptiDICE,
  title = 	 {OptiDICE: Offline Policy Optimization via Stationary Distribution Correction Estimation},
  author =       {Lee, Jongmin and Jeon, Wonseok and Lee, Byungjun and Pineau, Joelle and Kim, Kee-Eung},
  booktitle = 	 {Proceedings of the 38th International Conference on Machine Learning},
  pages = 	 {6120--6130},
  year = 	 {2021},
  editor = 	 {Meila, Marina and Zhang, Tong},
  volume = 	 {139},
  series = 	 {Proceedings of Machine Learning Research},
  month = 	 {18--24 Jul},
  publisher =    {PMLR},
  url = 	 {https://proceedings.mlr.press/v139/lee21f.html}
}

@inproceedings{leeEtAl2022COptiDICE,
  author       = {Jongmin Lee and
                  Cosmin Paduraru and
                  Daniel J. Mankowitz and
                  Nicolas Heess and
                  Doina Precup and
                  Kee{-}Eung Kim and
                  Arthur Guez},
  title        = {COptiDICE: Offline Constrained Reinforcement Learning via Stationary
                  Distribution Correction Estimation},
  booktitle    = {The Tenth International Conference on Learning Representations, {ICLR}
                  2022, Virtual Event, April 25-29, 2022},
  publisher    = {OpenReview.net},
  year         = {2022},
  url          = {https://openreview.net/forum?id=FLA55mBee6Q},
  bibsource    = {dblp computer science bibliography, https://dblp.org}
}

@InProceedings{maEtAl2022SMODICE,
  title = 	 {Versatile Offline Imitation from Observations and Examples via Regularized State-Occupancy Matching},
  author =       {Ma, Yecheng and Shen, Andrew and Jayaraman, Dinesh and Bastani, Osbert},
  booktitle = 	 {Proceedings of the 39th International Conference on Machine Learning},
  pages = 	 {14639--14663},
  year = 	 {2022},
  editor = 	 {Chaudhuri, Kamalika and Jegelka, Stefanie and Song, Le and Szepesvari, Csaba and Niu, Gang and Sabato, Sivan},
  volume = 	 {162},
  series = 	 {Proceedings of Machine Learning Research},
  month = 	 {17--23 Jul},
  publisher =    {PMLR},
  url = 	 {https://proceedings.mlr.press/v162/ma22a.html}
}

@article{kimEtAl2022LobsDICE,
  author       = {Geon{-}Hyeong Kim and
                  Jongmin Lee and
                  Youngsoo Jang and
                  Hongseok Yang and
                  Kee{-}Eung Kim},
  title        = {LobsDICE: Offline Learning from Observation via Stationary
                  Distribution Correction Estimation},
  journal      = {CoRR},
  volume       = {abs/2202.13536},
  year         = {2022},
  url          = {https://arxiv.org/abs/2202.13536},
  eprinttype   = {arXiv},
  eprint       = {2202.13536},
  bibsource    = {dblp computer science bibliography, https://dblp.org}
}

@article{lagoudakisParr2003LSPI,
  author       = {Michail G. Lagoudakis and
                  Ronald Parr},
  title        = {Least-Squares Policy Iteration},
  journal      = {J. Mach. Learn. Res.},
  volume       = {4},
  pages        = {1107--1149},
  year         = {2003},
  url          = {https://jmlr.org/papers/v4/lagoudakis03a.html},
  bibsource    = {dblp computer science bibliography, https://dblp.org}
}

@article{ernstEtAl2005TreeBatchRL,
  author  = {Damien  Ernst and Pierre  Geurts and Louis  Wehenkel},
  title   = {Tree-Based Batch Mode Reinforcement Learning},
  journal = {Journal of Machine Learning Research},
  year    = {2005},
  volume  = {6},
  number  = {18},
  pages   = {503--556},
  url     = {http://jmlr.org/papers/v6/ernst05a.html}
}

@inproceedings{antosEtAl2007FQI,
 author = {Antos, Andr\'{a}s and Szepesv\'{a}ri, Csaba and Munos, R\'{e}mi},
 booktitle = {Advances in Neural Information Processing Systems},
 editor = {J. Platt and D. Koller and Y. Singer and S. Roweis},
 pages = {},
 publisher = {Curran Associates, Inc.},
 title = {Fitted Q-iteration in continuous action-space MDPs},
 url = {https://proceedings.neurips.cc/paper_files/paper/2007/file/da0d1111d2dc5d489242e60ebcbaf988-Paper.pdf},
 volume = {20},
 year = {2007}
}

@book{bogachev2007measure,
  title={Measure Theory},
  author={Bogachev, Vladimir I.},
  year={2007},
  publisher={Springer},
  address={Berlin, Heidelberg},
  doi={10.1007/978-3-540-34514-5},
  url={https://doi.org/10.1007/978-3-540-34514-5}
}

@article{huangChenJiang2023DensityFeatures,
  title={Reinforcement Learning in Low-Rank MDPs with Density Features},
  author={Huang, Audrey and Chen, Jinglin and Jiang, Nan},
  journal={arXiv preprint arXiv:2302.02252},
  year={2023},
  url={https://arxiv.org/abs/2302.02252}
}

@inproceedings{huangJiang2024OccupancyPG,
  title={Occupancy-based Policy Gradient: Estimation, Convergence, and Optimality},
  author={Huang, Audrey and Jiang, Nan},
  booktitle={Advances in Neural Information Processing Systems},
  year={2024},
  url={https://proceedings.neurips.cc/paper_files/paper/2024/file/010c855df402b443e0c16e5b7434e74c-Paper-Conference.pdf}
}

@article{munosSzepesvari2008FVI,
  author  = {R{{\'e}}mi Munos and Csaba Szepesv{{\'a}}ri},
  title   = {Finite-Time Bounds for Fitted Value Iteration},
  journal = {Journal of Machine Learning Research},
  year    = {2008},
  volume  = {9},
  number  = {27},
  pages   = {815--857},
  url     = {http://jmlr.org/papers/v9/munos08a.html}
}

@InProceedings{leEtAl2019BatchPolicyConstraints,
  title = 	 {Batch Policy Learning under Constraints},
  author =       {Le, Hoang and Voloshin, Cameron and Yue, Yisong},
  booktitle = 	 {Proceedings of the 36th International Conference on Machine Learning},
  pages = 	 {3703--3712},
  year = 	 {2019},
  editor = 	 {Chaudhuri, Kamalika and Salakhutdinov, Ruslan},
  volume = 	 {97},
  series = 	 {Proceedings of Machine Learning Research},
  month = 	 {09--15 Jun},
  publisher =    {PMLR},
  url = 	 {https://proceedings.mlr.press/v97/le19a.html}
}

@article{vanDerLaanKallus2025StationaryWeightedFQE,
  author       = {Lars van der Laan and
                  Nathan Kallus},
  title        = {Fitted {Q} Evaluation Without Bellman Completeness via Stationary
                  Weighting},
  journal      = {CoRR},
  volume       = {abs/2512.23805},
  year         = {2025},
  url          = {https://doi.org/10.48550/arXiv.2512.23805},
  doi          = {10.48550/ARXIV.2512.23805},
  eprinttype   = {arXiv},
  eprint       = {2512.23805},
  bibsource    = {dblp computer science bibliography, https://dblp.org}
}

@article{vanDerLaanKallus2025SoftFQI,
  author       = {Lars van der Laan and
                  Nathan Kallus},
  title        = {Stationary Reweighting Yields Local Convergence of Soft Fitted Q-Iteration},
  journal      = {CoRR},
  volume       = {abs/2512.23927},
  year         = {2025},
  url          = {https://doi.org/10.48550/arXiv.2512.23927},
  doi          = {10.48550/ARXIV.2512.23927},
  eprinttype   = {arXiv},
  eprint       = {2512.23927},
  bibsource    = {dblp computer science bibliography, https://dblp.org}
}

@article{pattersonEtAl2022GeneralizedPBE,
  author  = {Andrew Patterson and Adam White and Martha White},
  title   = {A Generalized Projected Bellman Error for Off-policy Value Estimation in Reinforcement Learning},
  journal = {Journal of Machine Learning Research},
  year    = {2022},
  volume  = {23},
  number  = {145},
  pages   = {1--61},
  url     = {http://jmlr.org/papers/v23/21-037.html}
}

@book{puterman1994MDP, title={Markov Decision Processes: Discrete Stochastic Dynamic Programming}, ISBN={9780470316887}, ISSN={1940-6347}, url={http://dx.doi.org/10.1002/9780470316887}, DOI={10.1002/9780470316887}, journal={Wiley Series in Probability and Statistics}, publisher={Wiley}, author={Puterman, Martin L.}, year={1994}, month=Apr }

@book{meynTweedieGlynn2009MarkovChains, title={Markov Chains and Stochastic Stability}, ISBN={9780511626630}, url={http://dx.doi.org/10.1017/CBO9780511626630}, DOI={10.1017/cbo9780511626630}, publisher={Cambridge University Press}, author={Meyn, Sean P. and Tweedie, Richard L.}, year={2009}, month=Apr }

@article{bartlettEtAl2005LocalRademacher,
  author = {Bartlett, Peter L. and Bousquet, Olivier and Mendelson, Shahar},
  title = {Local Rademacher Complexities},
  journal = {The Annals of Statistics},
  volume = {33},
  number = {4},
  pages = {1497--1537},
  year = {2005},
  doi = {10.1214/009053605000000282},
  url = {https://doi.org/10.1214/009053605000000282}
}

@article{bousquet2002bennett,
  title = {A Bennett concentration inequality and its application to suprema of empirical processes},
  author = {Bousquet, Olivier},
  journal = {Comptes Rendus Mathematique},
  volume = {334},
  number = {6},
  pages = {495--500},
  year = {2002},
  publisher = {Elsevier}
}

@article{vanDerVaartWellner2011LocalMaximal,
  author = {van der Vaart, Aad W. and Wellner, Jon A.},
  title = {A Local Maximal Inequality under Uniform Entropy},
  journal = {Electronic Journal of Statistics},
  volume = {5},
  pages = {192--203},
  year = {2011}
}

@article{nicklPotscher2007BracketingMetricEntropy,
  author = {Nickl, Richard and P{\"o}tscher, Benedikt M.},
  title = {Bracketing Metric Entropy Rates and Empirical Central Limit Theorems for Function Classes of {Besov}- and {Sobolev}-Type},
  journal = {Journal of Theoretical Probability},
  volume = {20},
  number = {2},
  pages = {177--199},
  year = {2007}
}

@book{wainwright2019HighDimensionalStatistics,
  author = {Wainwright, Martin J.},
  title = {High-Dimensional Statistics: A Non-Asymptotic Viewpoint},
  series = {Cambridge Series in Statistical and Probabilistic Mathematics},
  volume = {48},
  publisher = {Cambridge University Press},
  year = {2019},
  doi = {10.1017/9781108627771},
  url = {https://doi.org/10.1017/9781108627771}
}

@book{coverThomas2006Elements,
  author = {Cover, Thomas M. and Thomas, Joy A.},
  title = {Elements of Information Theory},
  edition = {Second},
  publisher = {Wiley-Interscience},
  year = {2006},
  isbn = {9780471241959}
}

@book{brezis2011FunctionalAnalysis,
  author = {Brezis, Haim},
  title = {Functional Analysis, Sobolev Spaces and Partial Differential Equations},
  series = {Universitext},
  publisher = {Springer},
  address = {New York, NY},
  year = {2011},
  doi = {10.1007/978-0-387-70914-7}
}

@article{raginsky2014StrongDataProcessing,
  author = {Raginsky, Maxim},
  title = {Strong Data Processing Inequalities and {$\Phi$}-{S}obolev Inequalities for Discrete Channels},
  journal = {CoRR},
  volume = {abs/1411.3575},
  year = {2014},
  url = {https://arxiv.org/abs/1411.3575},
  eprinttype = {arXiv},
  eprint = {1411.3575}
}

@article{dikkala2020minimax,
  title={Minimax estimation of conditional moment models},
  author={Dikkala, Nishanth and Lewis, Greg and Mackey, Lester and Syrgkanis, Vasilis},
  journal={Advances in Neural Information Processing Systems},
  volume={33},
  pages={12248--12262},
  year={2020}
}

@article{bennett2025inference,
  title={Inference on strongly identified functionals of weakly identified functions},
  author={Bennett, Andrew and Kallus, Nathan and Mao, Xiaojie and Newey, Whitney K and Syrgkanis, Vasilis and Uehara, Masatoshi},
  journal={Journal of the Royal Statistical Society Series B: Statistical Methodology},
  pages={qkaf075},
  year={2025},
  publisher={Oxford University Press UK}
}

@article{bennett2023source,
  title={Source condition double robust inference on functionals of inverse problems},
  author={Bennett, Andrew and Kallus, Nathan and Mao, Xiaojie and Newey, Whitney and Syrgkanis, Vasilis and Uehara, Masatoshi},
  journal={arXiv preprint arXiv:2307.13793},
  year={2023}
}

@InProceedings{fujimotoEtAl2021SRDICE,
  title = {A Deep Reinforcement Learning Approach to Marginalized Importance Sampling with the Successor Representation},
  author = {Fujimoto, Scott and Meger, David and Precup, Doina},
  booktitle = {Proceedings of the 38th International Conference on Machine Learning},
  pages = {3518--3529},
  year = {2021},
  editor = {Meila, Marina and Zhang, Tong},
  volume = {139},
  series = {Proceedings of Machine Learning Research},
  month = {18--24 Jul},
  publisher = {PMLR},
  url = {https://proceedings.mlr.press/v139/fujimoto21a.html}
}

@InProceedings{pavseHanna2023ScalingMIS,
  title = {Scaling Marginalized Importance Sampling to High-Dimensional State-Spaces via State Abstraction},
  author = {Pavse, Brahma S. and Hanna, Josiah P.},
  booktitle = {Proceedings of the 37th AAAI Conference on Artificial Intelligence},
  year = {2023},
  month = {February},
  location = {Washington, DC, USA}
}

@article{banerjee2005clustering,
  title={Clustering with Bregman divergences},
  author={Banerjee, Arindam and Merugu, Srujana and Dhillon, Inderjit S and Ghosh, Joydeep},
  journal={Journal of machine learning research},
  volume={6},
  number={Oct},
  pages={1705--1749},
  year={2005}
}

@article{csiszar1975divergence,
  title={I-divergence geometry of probability distributions and minimization problems},
  author={Csisz{\'a}r, Imre},
  journal={The annals of probability},
  pages={146--158},
  year={1975},
  publisher={JSTOR}
}

@inproceedings{baird1995residual,
  author    = {Baird, Leemon},
  title     = {Residual Algorithms: Reinforcement Learning with Function Approximation},
  booktitle = {Proceedings of the Twelfth International Conference on Machine Learning},
  pages     = {30--37},
  year      = {1995},
  publisher = {Morgan Kaufmann}
}

@article{foster2021offline,
  title={Offline reinforcement learning: Fundamental barriers for value function approximation},
  author={Foster, Dylan J and Krishnamurthy, Akshay and Simchi-Levi, David and Xu, Yunzong},
  journal={arXiv preprint arXiv:2111.10919},
  year={2021}
}

@article{amortila2020variant,
  title={A variant of the wang-foster-kakade lower bound for the discounted setting},
  author={Amortila, Philip and Jiang, Nan and Xie, Tengyang},
  journal={arXiv preprint arXiv:2011.01075},
  year={2020}
}

@article{wang2021exponential,
  title={An exponential lower bound for linearly realizable mdp with constant suboptimality gap},
  author={Wang, Yuanhao and Wang, Ruosong and Kakade, Sham},
  journal={Advances in Neural Information Processing Systems},
  volume={34},
  pages={9521--9533},
  year={2021}
}

@inproceedings{chang2022learning,
  title={Learning bellman complete representations for offline policy evaluation},
  author={Chang, Jonathan and Wang, Kaiwen and Kallus, Nathan and Sun, Wen},
  booktitle={International Conference on Machine Learning},
  pages={2938--2971},
  year={2022},
  organization={PMLR}
}

@inproceedings{wang2021statistical,
  title={What are the Statistical Limits of Offline RL with Linear Function Approximation?},
  author={Wang, Ruosong and Foster, Dean and Kakade, Sham M},
  booktitle={International Conference on Learning Representations},
  year={2021},
  url={https://arxiv.org/abs/2010.11895}
}

@article{tsitsiklisVanRoy1997Analysis,
  title={An Analysis of Temporal-Difference Learning with Function Approximation},
  author={Tsitsiklis, John N. and Van Roy, Benjamin},
  journal={IEEE Transactions on Automatic Control},
  volume={42},
  number={5},
  pages={674--690},
  year={1997},
  doi={10.1109/9.580874}
}

@article{hu2025fast,
  title={Fast rates for the regret of offline reinforcement learning},
  author={Hu, Yichun and Kallus, Nathan and Uehara, Masatoshi},
  journal={Mathematics of Operations Research},
  volume={50},
  number={1},
  pages={633--655},
  year={2025},
  publisher={INFORMS}
}

@article{mnih2013playing,
  title={Playing atari with deep reinforcement learning},
  author={Mnih, Volodymyr and Kavukcuoglu, Koray and Silver, David and Graves, Alex and Antonoglou, Ioannis and Wierstra, Daan and Riedmiller, Martin},
  journal={arXiv preprint arXiv:1312.5602},
  year={2013}
}

\appendix

The appendix is organized by proof role. Appendix~\ref{app:technical-tools}
records the empirical-process and concentration tools used repeatedly below.
Appendix~\ref{app:kl-fori-proofs} gives the core KL-\textsc{FORE}
identification and projection lemmas, and Appendix~\ref{app:fitted-kl-proofs}
proves the fitted KL projection bound. Appendix~\ref{app:policy-evaluation-proofs}
then proves the policy-evaluation consequences in
Section~\ref{sec:applications-policy-evaluation}.
Appendix~\ref{app:coverage-truncated-fore-proofs} establishes the
coverage-stopped population theory, while Appendix~\ref{app:clip-finite}
collects the clipped-occupancy and finite-sample results used to
prove Theorem~\ref{thm:clip-finite-fore}.
Appendix~\ref{app:mixed-contraction}
contains the undiscounted KL contraction result under a one-step strong
data-processing condition. Appendix~\ref{app:strong-form-regression-fori}
contains the backward-regression variant of \textsc{FORE}; its \(L^1(\nu)\),
or total-variation, contraction and adjoint-completeness limitation are kept
separate from the KL-\textsc{FORE} proofs. Appendix~\ref{app:experiment-details}
records the numerical constructions used in Section~\ref{sec:experiments}.

\section{Technical tools used in the proofs}
\label{app:technical-tools}

This section records the empirical-process and concentration inequalities used
to prove the finite-sample theory in
Sections~\ref{sec:fitted-kl-fori} and~\ref{sec:coverage-clipped-finite-sample}.
Throughout, \(Z_1,\ldots,Z_n\) are independent observations with common
law \(P\), \(\sigma_1,\ldots,\sigma_n\) are independent Rademacher variables,
and \(P_n=n^{-1}\sum_{i=1}^n\delta_{Z_i}\).

The lemmas are the Ledoux--Talagrand contraction inequality
\citep[Chapter~5]{wainwright2019HighDimensionalStatistics}, Bousquet's version
of Talagrand's maximal inequality for empirical processes
\citep{bousquet2002bennett}, and the scalar Bernstein inequality
\citep[Chapter~2]{wainwright2019HighDimensionalStatistics}.

\begin{lemma}[Rademacher contraction]
\label{lem:tool-rademacher-contraction}
Let \(\mathcal G\) be a class of measurable real-valued functions and let
\(\varphi_i:\mathbb R\to\mathbb R\), \(i=1,\ldots,n\), be \(L\)-Lipschitz
functions with \(\varphi_i(0)=0\). Then, conditionally on
\(Z_1,\ldots,Z_n\),
\[
    E_\sigma\sup_{g\in\mathcal G}
    \left|
        \frac1n\sum_{i=1}^n\sigma_i\varphi_i(g(Z_i))
    \right|
    \le
    2L
    E_\sigma\sup_{g\in\mathcal G}
    \left|
        \frac1n\sum_{i=1}^n\sigma_i g(Z_i)
    \right|.
\]
The same bound holds for a common Lipschitz map \(\varphi\) applied pointwise.
\end{lemma}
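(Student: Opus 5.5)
We reduce to the classical one-sided Ledoux--Talagrand comparison and then handle the absolute value using \(\varphi_i(0)=0\). By scaling (replace \(\varphi_i\) with \(\varphi_i/L\)) we may take \(L=1\). We also assume, as usual, that the suprema are attained over a countable subclass or are otherwise measurable, and that the right-hand side is finite; otherwise there is nothing to prove. Throughout we condition on \(Z_1,\ldots,Z_n\), so \(g\) enters only through the vector \((g(Z_1),\ldots,g(Z_n))\in\mathbb R^n\).

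\emph{Step 1 (one-sided comparison).} For any index set \(T\subset\mathbb R^n\) we show
\[
E_\sigma\sup_{t\in T}\sum_i\sigma_i\varphi_i(t_i)\le E_\sigma\sup_{t\in T}\sum_i\sigma_i t_i .
\]
We replace one coordinate at a time. Fix \(\sigma_2,\ldots,\sigma_n\) and write \(A(t)\) for the remaining terms, which may already involve either \(\varphi_j(t_j)\) or \(t_j\). Averaging over \(\sigma_1\) gives
\[
\tfrac12\sup_{t,t'}\bigl\{\varphi_1(t_1)-\varphi_1(t_1')+A(t)+A(t')\bigr\}.
\]
Because the expression is symmetric under swapping \(t\) and \(t'\), the 1-Lipschitz property yields
\[
\varphi_1(t_1)-\varphi_1(t_1')\le|t_1-t_1'|,
\]
and we may take \(t_1\ge t_1'\) without loss. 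The bound is therefore at most
\[
\tfrac12\sup_{t,t'}\{t_1-t_1'+A(t)+A(t')\}=E_{\sigma_1}\sup_t\{\sigma_1t_1+A(t)\}.
\]
Iterating over \(i=1,\ldots,n\) and taking expectations proves the claim. This symmetric-pair argument is the only genuinely delicate step. The issue is that the sup inside blocks a pointwise use of Lipschitzness, and the pairing trick is exactly what resolves it.

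\emph{Step 2 (absolute value, factor 2).} Write \(\Phi(g)=\sum_i\sigma_i\varphi_i(g(Z_i))\) and \(S(g)=\sum_i\sigma_i g(Z_i)\). Since \(|x|=x_++x_-\),
\[
\sup_g|\Phi(g)|\le\sup_g\Phi(g)_++\sup_g(-\Phi(g))_+ .
\]
Let \(T\) consist of the vectors \((g(Z_i))_i\) for \(g\in\mathcal G\), together with the zero vector. Because \(\varphi_i(0)=0\), the zero vector contributes \(0\) to \(\Phi\), so \(\sup_g\Phi(g)_+=\sup_{t\in T}\sum_i\sigma_i\varphi_i(t_i)\). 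Step 1 then gives
\[
E_\sigma\sup_g\Phi(g)_+\le E_\sigma\sup_{t\in T}\sum_i\sigma_it_i=E_\sigma\sup_g S(g)_+\le E_\sigma\sup_g|S(g)|.
\]
The negative part is handled the same way: \(\sigma\) and \(-\sigma\) have the same law, so \(E_\sigma\sup_g(-\Phi(g))_+\) obeys the identical bound. Adding the two pieces and dividing by \(n\) gives the stated inequality with constant \(2L\). The common-map case is the special case \(\varphi_i=\varphi\) for all \(i\).
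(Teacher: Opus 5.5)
Your proof is correct. It is the standard Ledoux--Talagrand argument: a coordinatewise symmetric-pair comparison gives the one-sided bound, and then you split $|x| = x_+ + x_-$ and adjoin the zero vector, which is exactly where $\varphi_i(0)=0$ enters. The paper does not reprove this but cites it from Wainwright, Chapter~5. One small expository point: the swap symmetry holds for $|t_1-t_1'|+A(t)+A(t')$ after you apply the Lipschitz bound, not for the original expression, but the argument goes through as you intend.
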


\begin{lemma}[Bousquet's inequality]
\label{lem:tool-bousquet}
Let \(\mathcal G\) be a countable class of measurable functions satisfying
\(Pg=0\), \(\|g\|_\infty\le b\), and \(Pg^2\le v\) for all
\(g\in\mathcal G\). Then, for every \(u\ge0\), with probability at least
\(1-e^{-u}\),
\[
    \sup_{g\in\mathcal G}|(P_n-P)g|
    \le
    E\sup_{g\in\mathcal G}|(P_n-P)g|
    +
    \sqrt{
        \frac{2u}{n}
        \left\{
            v+2bE\sup_{g\in\mathcal G}|(P_n-P)g|
        \right\}
    }
    +
    \frac{bu}{3n}.
\]
\end{lemma}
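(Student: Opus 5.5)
Since \(\mathcal G\) is countable, I would first reduce to finite classes. Write \(\mathcal G=\bigcup_m\mathcal G_m\) with finite \(\mathcal G_m\uparrow\mathcal G\). The suprema over \(\mathcal G_m\) increase to the supremum over \(\mathcal G\), so by monotone convergence both the expectations and the tail probabilities pass to the limit, and it suffices to treat finite \(\mathcal G\).

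Next I would remove the absolute value by symmetrizing the class. Set \(\mathcal G^\pm=\mathcal G\cup(-\mathcal G)\). This class still satisfies \(Pg=0\), \(\|g\|_\infty\le b\) and \(Pg^2\le v\), and
\[
\sup_{g\in\mathcal G}|(P_n-P)g|=\sup_{g\in\mathcal G^\pm}(P_n-P)g .
\]
Define \(Z=\sup_{g\in\mathcal G^\pm}\sum_{i=1}^n g(Z_i)\), so that \(Z/n\) is the quantity of interest. The claim then becomes the one-sided Bennett-type bound
\[
P\bigl(Z\ge EZ+\sqrt{2uV}+bu/3\bigr)\le e^{-u},
\qquad V=nv+2bEZ .
\]
Dividing through by \(n\) gives exactly
\[
\sqrt{\frac{2u}{n}\{v+2bE\sup|(P_n-P)g|\}}+\frac{bu}{3n},
\]
so the scaling is consistent with the stated form.

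To prove the one-sided bound I would use the entropy method.
\begin{enumerate}
\item Let \(Z_k\) be the supremum computed with the \(k\)th observation deleted. Then \(Z-Z_k\le g^\ast(Z_k)\le b\), where \(g^\ast\) attains the supremum in \(Z\), and \(\sum_k(Z-Z_k)\le Z\).
\item Write \(\psi(x)=e^{-x}+x-1\). Apply the modified logarithmic Sobolev inequality (tensorization of entropy) to bound \(\mathrm{Ent}(e^{\lambda Z})\) by \(\sum_k E[e^{\lambda Z}\psi(\lambda(Z-Z_k))]\).
\item Using the bound \(Z-Z_k\le b\), the centering \(Pg=0\) and the variance bound \(Pg^2\le v\), control the right-hand side by a term of order \(v\) plus a term involving \(Z\) itself. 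This is where the ``\(2bEZ\)'' part of the effective variance appears.
\item Turn this into a differential inequality for \(\log Ee^{\lambda(Z-EZ)}\) (Herbst's argument) and integrate it. This should give
\[
\log Ee^{\lambda(Z-EZ)}\le \frac{V}{b^2}\,\phi(\lambda b),
\qquad \phi(x)=e^x-1-x .
\]
\item A Chernoff bound followed by the standard inversion of the Bennett function \(h(x)=(1+x)\log(1+x)-x\), via \(h(x)\ge x^2/(2(1+x/3))\), yields \(\sqrt{2uV}+bu/3\).
\end{enumerate}

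I expect step 4 to be the main obstacle: obtaining the sharp constant \(2\) in front of \(bEZ\) in the effective variance. The crude approach of bounding \(\psi(\lambda(Z-Z_k))\) termwise loses constants. I would first follow Bousquet's device of handling the self-bounding part \(\sum_k(Z-Z_k)\le Z\) through the convexity inequality \(\psi(x)\le x^2/2\) restricted to \(x\le\lambda b\), and then solve the resulting linear differential inequality in \(\lambda\) exactly rather than bounding it. Since the paper only uses this as a cited tool, it is reasonable to invoke \citet{bousquet2002bennett} for this step and verify only the reduction and the rescaling above.
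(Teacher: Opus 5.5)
The paper gives no argument of its own here; the lemma is quoted from \citet{bousquet2002bennett}. Three parts of your proposal are correct: the reduction to finite classes (via $\{Z>t\}\subseteq\liminf_m\{Z_m>t_m\}$ with $t_m\uparrow t$), the passage to $\mathcal G\cup(-\mathcal G)$, and the rescaling to $V=nv+2b\,EZ$ followed by division by $n$. If you cite Bousquet's tail inequality itself and verify only these reductions, your proposal coincides with what the paper does.

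If you instead intend the entropy-method sketch to carry the argument, two steps fail as written.

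First, step 5 does not produce the stated constant. The bound $h(x)\ge x^2/\{2(1+x/3)\}$ only gives $P(Z-EZ\ge t)\le\exp[-t^2/\{2(V+bt/3)\}]$. At $t=\sqrt{2uV}+bu/3$ one has $t^2-2u(V+bt/3)=-b^2u^2/9<0$, so the exponent is strictly below $u$. Inverting this Bernstein form gives the threshold $bu/3+\{b^2u^2/9+2uV\}^{1/2}$, which exceeds $\sqrt{2uV}+bu/3$ and is only bounded by $\sqrt{2uV}+2bu/3$. To get $bu/3$ you must keep the sub-gamma structure. Since $e^x-1-x\le x^2/\{2(1-x/3)\}$ for $0\le x<3$, your step-4 bound becomes $\log Ee^{\lambda(Z-EZ)}\le V\lambda^2/\{2(1-b\lambda/3)\}$ for $0\le\lambda<3/b$. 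Optimizing the Chernoff bound for this sub-gamma MGF exactly then gives $P(Z-EZ\ge\sqrt{2uV}+bu/3)\le e^{-u}$.

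Second, steps 1--3 record only upper bounds: $Z-Z^{(k)}\le b$ and $\sum_k(Z-Z^{(k)})\le Z$. Here $Z^{(k)}$ denotes the leave-one-out supremum; use a different symbol from the observations. These upper bounds cannot bring in $Pg=0$ or $Pg^2\le v$, because the maximizer $g^\ast$ depends on $X_k$. What is needed is the lower bound $Z-Z^{(k)}\ge g_k(X_k)$, where $g_k$ attains the leave-one-out supremum and is therefore independent of $X_k$. Then $E_k g_k(X_k)=0$, $E_k g_k(X_k)^2\le v$, and Jensen gives $e^{\lambda Z^{(k)}}\le E_k e^{\lambda Z}$. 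This is also where your stated device breaks. The inequality $\psi(x)=e^{-x}+x-1\le x^2/2$ holds for $x\ge 0$, not for all $x\le\lambda b$, and $\psi(x)>x^2/2$ for every $x<0$. That is exactly the regime $Z<Z^{(k)}$ the lower bound must control.
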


\begin{lemma}[Bernstein's inequality]
\label{lem:tool-bernstein}
Let \(Y_1,\ldots,Y_n\) be independent mean-zero variables with
\(|Y_i|\le b\) almost surely and \(n^{-1}\sum_{i=1}^nE Y_i^2\le v\). Then, for
every \(u\ge0\), with probability at least \(1-e^{-u}\),
\[
    \left|\frac1n\sum_{i=1}^nY_i\right|
    \le
    \sqrt{\frac{2vu}{n}}+\frac{bu}{3n}.
\]
\end{lemma}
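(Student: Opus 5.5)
The plan is a Cramér--Chernoff argument applied to each tail, using the sub-gamma form of Bernstein's moment-generating-function bound. The sub-gamma form matters because it gives the sharp deviation level \(\sqrt{2vu/n}+bu/(3n)\). The cruder inversion of \(\exp\{-nt^2/(2(v+bt/3))\}\) only yields \(\sqrt{2vu/n}+2bu/(3n)\).

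\textbf{Step 1 (MGF bound).} For \(|Y_i|\le b\) and \(E Y_i=0\), expand \(e^{\lambda Y_i}\) and bound \(E|Y_i|^k\le b^{k-2}EY_i^2\). This gives
\[
\log E e^{\lambda Y_i}\le \frac{\lambda^2 EY_i^2}{2(1-b\lambda/3)},
\qquad 0<\lambda<3/b .
\]
Summing over \(i\) shows that \(S=\sum_i Y_i\) is sub-gamma with variance factor \(nv\) and scale \(c=b/3\).

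\textbf{Step 2 (inversion).} Minimize the Chernoff bound over \(\lambda\). The Legendre transform of \(\lambda^2\nu/\{2(1-c\lambda)\}\) has the explicit inverse \(u\mapsto\sqrt{2\nu u}+cu\). This is the computation in Boucheron--Lugosi--Massart, Theorem~2.3. It gives
\[
P\Bigl(S\ge\sqrt{2nvu}+bu/3\Bigr)\le e^{-u},
\]
and dividing by \(n\) gives the one-sided form of the display. Applying the same argument to \(-Y_i\) controls the lower tail.

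\textbf{Step 3 (two-sided statement, the main obstacle).} A union bound over the two tails only gives probability \(1-2e^{-u}\). I expect this step to be where the plan breaks. In fact, the two-sided display with probability \(1-e^{-u}\) cannot hold for every \(u\ge0\) with these constants.

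A counterexample: take \(n=1\), \(b=v=1\) and \(Y_1\) Rademacher. Then \(|Y_1|=1\) almost surely. For \(u=0.1\) the right-hand side is \(\sqrt{0.2}+1/30<1\), so the event fails with probability \(1>e^{-0.1}\).

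What I would therefore do is:
\begin{enumerate}
\item prove each one-sided bound with probability \(1-e^{-u}\), exactly as in Steps~1--2;
\item record that the two-sided bound holds with probability \(1-2e^{-u}\), or equivalently with probability \(1-e^{-u}\) after replacing \(u\) by \(u+\log 2\);
\item check that every later use of this lemma (the Bousquet and Bernstein steps in Appendix~\ref{app:fitted-kl-proofs}) tolerates this absolute-constant change in \(\log(1/\delta)\).
\end{enumerate}
The last point should be harmless, since Theorem~\ref{thm:fitted-kl-fori} absorbs such constants into \(C_{\rm env}\).

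I do not see a route to the literal wording as stated. If the lemma is kept as written, its statement needs this correction.
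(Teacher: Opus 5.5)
Your proposal is correct and is the standard argument behind the paper's citation: the paper gives no proof of its own and only points to the scalar Bernstein inequality in Wainwright, Chapter~2. The Bernstein MGF bound (via $k!\ge 2\cdot 3^{k-2}$) plus the sub-gamma inversion gives each one-sided tail at level $e^{-u}$, and your Rademacher example ($n=b=v=1$, $u=0.1$) correctly shows that the two-sided display as literally stated is false, so it should carry probability $1-2e^{-u}$ (equivalently, $u$ replaced by $u+\log 2$).

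This correction is harmless downstream. In Lemmas~\ref{lem:kl-denominator-process} and~\ref{lem:kl-uniform-normalizer} the extra $\log 2$ only shifts $u$ by an absolute constant, which is absorbed into the constants using $n^{-1}\le n^{-1/2}\le\mathfrak r_{n,\rm fit}$.
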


The following is a standard local Rademacher-complexity bound based on
Dudley's entropy integral; see \citet{bartlettEtAl2005LocalRademacher} and
\citet[Chapter~14]{wainwright2019HighDimensionalStatistics}.

\begin{lemma}[Localized entropy bound for Rademacher averages]
\label{lem:tool-local-entropy}
Let \(\mathcal G\) be a uniformly bounded class and suppose that, uniformly over
probability measures \(Q\),
\[
    \log N\{\epsilon,\mathcal G,L^2(Q)\}\le H(\epsilon).
\]
Then the localized Rademacher averages used in
\eqref{eq:rad-critical-radius} are bounded, up to a universal constant, by the
corresponding Dudley integral
\[
    \mathcal R_n(\mathcal G,r;P)
    \lesssim
    \frac{1}{\sqrt n}
    \int_0^r \sqrt{1+H(\epsilon)}\,\dd\epsilon ,
\]
with the integral truncated at the uniform envelope.
\end{lemma}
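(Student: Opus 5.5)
The plan is to prove the bound by chaining conditionally on the sample, and then to remove the mismatch between the population localization \(\|g\|_{L^2(P)}\le r\) and the empirical metric used in chaining. Let \(b\) denote the uniform envelope of \(\mathcal G\), and write \(\mathcal G_r=\{g\in\mathcal G:\|g\|_{L^2(P)}\le r\}\) and \(J(\delta)=\int_0^\delta\sqrt{1+H(\epsilon)}\,\dd\epsilon\).

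\emph{Chaining step.} Conditionally on \(Z_1,\dots,Z_n\), the Rademacher process \(g\mapsto n^{-1/2}\sum_i\sigma_i g(Z_i)\) is sub-Gaussian with respect to the empirical metric \(\|\cdot\|_{L^2(P_n)}\). Dudley's entropy integral then gives
\[
    E_\sigma\sup_{g\in\mathcal G_r}\left|\frac1n\sum_{i=1}^n\sigma_i g(Z_i)\right|
    \lesssim
    \frac{1}{\sqrt n}\int_0^{\hat r_n}\sqrt{1+\log N\{\epsilon,\mathcal G_r,L^2(P_n)\}}\,\dd\epsilon ,
    \qquad
    \hat r_n^2:=\sup_{g\in\mathcal G_r}P_n g^2 .
\]
The \(+1\) inside the root absorbs the single-element term at the coarsest scale, since \(0\in\mathcal G_r\) can be assumed after replacing \(\mathcal G\) by \(\mathcal G\cup\{0\}\) at the cost of a constant. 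Next I bound the covering numbers. A subclass has covering number at most that of the full class at radius \(\epsilon/2\), and the uniform hypothesis applies with \(Q=P_n\). Together these give \(\log N\{\epsilon,\mathcal G_r,L^2(P_n)\}\le H(\epsilon/2)\). For \(\epsilon\ge 2b\) one ball suffices, which justifies truncating the integral at the envelope. The conditional bound is therefore \(\lesssim n^{-1/2}J(\hat r_n\wedge 2b)\), after a change of variables.

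\emph{Localization step.} This is the main obstacle: \(\hat r_n\) is random and may exceed \(r\). I will control it through
\[
    \hat r_n^2\le r^2+\sup_{g\in\mathcal G_r}|(P_n-P)g^2| .
\]
By symmetrization and Lemma~\ref{lem:tool-rademacher-contraction} applied with \(\varphi(t)=t^2\), which is \(2b\)-Lipschitz on \([-b,b]\), this gives
\[
    E\hat r_n^2\le r^2+8b\,\mathcal R_n(\mathcal G,r;P).
\]
Taking expectations over the sample and using that \(J\) is concave, so that \(\delta\mapsto J(\sqrt\delta)\) is also concave, Jensen yields
\[
    \mathcal R_n\lesssim n^{-1/2}J\!\left(\sqrt{r^2+8b\mathcal R_n}\right),
    \qquad \mathcal R_n:=\mathcal R_n(\mathcal G,r;P).
\]

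\emph{Solving the fixed-point inequality.} Since \(J(\delta)/\delta\) is nonincreasing, \(J(\sqrt{r^2+x})\le J(r)\sqrt{1+x/r^2}\). The inequality above therefore solves to
\[
    \mathcal R_n\lesssim n^{-1/2}J(r)\left\{1+\frac{b\,J(r)}{r^2\sqrt n}\right\},
\]
which is the van der Vaart--Wellner local maximal inequality. The correction factor is \(O(1)\) exactly when \(J(r)/\sqrt n\lesssim r^2/b\). This is the regime relevant to the critical radius \eqref{eq:rad-critical-radius}, since there \(r\) is at least the fixed point of \(\mathcal R_n(\cdot)\le r^2\). In that regime the bound reduces to the displayed Dudley integral up to a universal constant, with \(b\) treated as a fixed envelope constant. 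I expect the delicate point to be stating this regime restriction cleanly, and showing that it costs nothing when the lemma is used only to locate \(\mathfrak r_{n,\rm fit}\). If a statement valid for all \(r\) is preferred, one can keep the extra factor explicitly; it only changes the critical radius by a constant.
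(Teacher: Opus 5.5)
The paper does not prove this lemma; it quotes it as a standard consequence of Dudley's integral and points to Bartlett--Bousquet--Mendelson and Wainwright, Ch.~14. Your argument is the van der Vaart--Wellner local maximal inequality, and its steps are sound. You chain in \(L^2(P_n)\) conditionally on the sample, control the random radius \(\hat r_n\) by symmetrization and contraction with \(t\mapsto t^2\), pass through Jensen via the concave map \(\delta\mapsto J(\sqrt\delta)\), and solve the resulting quadratic inequality in \(\sqrt{\mathcal R_n}\). One tacit step: first replace \(H\) by \(\epsilon\mapsto\inf_{\epsilon'\le\epsilon}H(\epsilon')\), which still bounds the entropy and makes \(J\) concave.

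What your route buys over the citation is the explicit factor \(1+bJ(r)/(r^2\sqrt n)\). You were right not to drop it for all \(r\), because the lemma's display is false below the entropy critical radius. Take \(\mathcal G=\{0\}\cup\{b\mathbf{1}_{A_j}:j\le N\}\), where \(A_1,\dots,A_N\) partition the space into cells of \(P\)-mass \(1/N\), and set \(N=n^2\). Then:
\begin{itemize}
\item \(\log N\{\epsilon,\mathcal G,L^2(Q)\}\le\log(1+b^2/\epsilon^2)\) for every \(Q\);
\item the whole class lies in the \(L^2(P)\)-ball of radius \(r=b/n\);
\item with probability bounded away from zero some cell contains exactly one sample point, so \(\mathcal R_n(\mathcal G,r;P)\gtrsim b/n\);
\item yet \(n^{-1/2}\int_0^r\sqrt{1+H(\epsilon)}\,\dd\epsilon\asymp bn^{-3/2}\sqrt{\log n}\).
\end{itemize}

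The sentence to repair is your justification of why the restricted regime suffices. Knowing that \(r\) exceeds the fixed point of \(\mathcal R_n(\cdot)\le r^2\) does not imply \(J(r)/\sqrt n\lesssim r^2/b\). Dudley's bound can be loose, so the Rademacher critical radius can sit below the entropy one. The reasoning is also circular: at such \(r\) the fixed-point inequality already holds with no entropy bound at all.

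What you actually need is the bound at \(r\gtrsim\mathfrak r_{n,{\rm ent}}\). Suppose \(J(t)/\sqrt n\le t^2\) for some \(t\le2\mathfrak r_{n,{\rm ent}}\), and let \(r=L\mathfrak r_{n,{\rm ent}}\) with \(L\ge2\). Monotonicity of \(J(\delta)/\delta\) gives \(J(r)/\sqrt n\le rt\le 2r^2/L\). Your correction factor is then at most \(1+2b/L\), so \(\mathcal R_n(\mathcal G,r;P)\le r^2\) once \(L\) is large.

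That is exactly how Corollary~\ref{cor:kl-entropy-integral} bounds \(\mathfrak r_{n,\rm fit}\) by \(\mathfrak r_{n,{\rm ent}}\). The paper's assertion that \eqref{eq:kl-entropy-complexity-bound} holds for every \(s>0\) overreaches for the same reason, but it is only invoked at \(s=L\mathfrak r_{n,{\rm ent}}\).
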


\section{KL projection proofs for \textsc{FORE}}
\label{app:kl-fori-proofs}

For a finite signed measure \(\mu\), write \(\abs{\mu}\) for its total variation
measure.

\subsection{Propagation of absolute continuity}
\label{app:overlap}

\begin{lemma}[Propagation of absolute continuity]
\label{lem:ac-propagation}
Assume \(\nu\Ppi\ll\nu\). If a finite signed measure \(\mu\) satisfies
\(\abs{\mu}\ll\nu\), then \(\abs{\mu\Ppi}\ll\nu\). Consequently, under
Condition~\ref{ass:overlap}, \(d_{\pi,\gamma}\ll\nu\) for every
\(\gamma\in[0,1)\).
\end{lemma}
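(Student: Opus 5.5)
The plan is to reduce the signed case to the nonnegative case via the Jordan decomposition, and then to prove the nonnegative case directly from the definition of the pushforward. Write \(\mu=\mu^+-\mu^-\), where \(\mu^\pm\) are mutually singular finite nonnegative measures with \(\abs{\mu}=\mu^++\mu^-\). Since \(\mu^\pm\le\abs{\mu}\ll\nu\), both \(\mu^\pm\ll\nu\). The pushforward is linear, so \(\mu\Ppi=\mu^+\Ppi-\mu^-\Ppi\). Moreover, \(\abs{\mu\Ppi}\le\mu^+\Ppi+\mu^-\Ppi=\abs{\mu}\Ppi\) setwise, because the total variation of a difference of nonnegative measures is dominated by their sum. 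It therefore suffices to show that \(\lambda\Ppi\ll\nu\) for every finite nonnegative \(\lambda\ll\nu\). Applying this with \(\lambda=\abs{\mu}\) then gives \(\abs{\mu\Ppi}\ll\nu\).

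For the nonnegative case, take a measurable set \(B\) with \(\nu(B)=0\). Because \(\nu\Ppi\ll\nu\), we have \(0=(\nu\Ppi)(B)=\int \Ppi(B\mid x)\,\nu(\dd x)\). Hence \(\Ppi(B\mid x)=0\) for \(\nu\)-a.e. \(x\), that is, the set \(N=\{x:\Ppi(B\mid x)>0\}\) is \(\nu\)-null. Since \(\lambda\ll\nu\), \(N\) is also \(\lambda\)-null, and so
\[
    (\lambda\Ppi)(B)=\int_N \Ppi(B\mid x)\,\lambda(\dd x)=0 .
\]
This completes the first claim.

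For the consequence, Condition~\ref{ass:overlap} gives \(\nup=\nu\Ppi\ll\nu\) and \(\dinit\ll\nu\). Induction with the first claim then yields \(\dinit\Ppi^t\ll\nu\) for all \(t\ge0\). Now let \(\nu(B)=0\). Every term in \(d_{\pi,\gamma}(B)=(1-\gamma)\sum_t\gamma^t(\dinit\Ppi^t)(B)\) vanishes, and the series converges because each \(\dinit\Ppi^t\) is a probability measure and \(\gamma<1\). Hence \(d_{\pi,\gamma}(B)=0\).

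No step is a real obstacle. The only points needing care are the inequality \(\abs{\mu\Ppi}\le\abs{\mu}\Ppi\), which follows by evaluating on the Hahn decomposition of \(\mu\Ppi\), and measurability of \(x\mapsto\Ppi(B\mid x)\), which holds because \(\Ppi\) is a Markov kernel.
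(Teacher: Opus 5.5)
Your proof is correct and follows essentially the same route as the paper. Both arguments show that \(\nu(B)=0\) forces \(\Ppi(B\mid x)=0\) for \(\nu\)-a.e.\ \(x\), transfer this null set to \(\abs{\mu}\), dominate \(\abs{\mu\Ppi}\) by \(\abs{\mu}\Ppi\), and then induct on \(t\) and sum the geometric mixture defining \(d_{\pi,\gamma}\). The one cosmetic difference is how you get \(\abs{\mu\Ppi}\le\abs{\mu}\Ppi\): you use the Jordan/Hahn decomposition, whereas the paper uses the partition definition of total variation with the triangle inequality under the integral.
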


\begin{proof}
Let \(B\subseteq\sX\) be measurable with \(\nu(B)=0\). Since
\(\nu\Ppi\ll\nu\),
\[
    0=(\nu\Ppi)(B)=\int \Ppi(B\mid x)\,\nu(\dd x).
\]
The integrand is nonnegative, so \(\Ppi(B\mid x)=0\) for \(\nu\)-almost every
\(x\). If \(\abs{\mu}\ll\nu\), the same exceptional set is also
\(\abs{\mu}\)-null, and therefore
\[
    (\abs{\mu}\Ppi)(B)=\int \Ppi(B\mid x)\,\abs{\mu}(\dd x)=0.
\]
For every measurable \(A\), the partition definition of total variation and the
triangle inequality under the integral give
\(\abs{\mu\Ppi}(A)\le(\abs{\mu}\Ppi)(A)\). Hence
\(\abs{\mu\Ppi}(B)=0\) whenever \(B\) is \(\nu\)-null.

Under Condition~\ref{ass:overlap}, \(\dinit\ll\nu\). Applying the first part
inductively with \(\mu=\dinit\Ppi^t\) shows \(\dinit\Ppi^t\ll\nu\) for every
\(t\ge0\). The countable nonnegative mixture
\[
    d_{\pi,\gamma}
    =
    (1-\gamma)\sum_{t=0}^\infty \gamma^t\dinit\Ppi^t
\]
is therefore also absolutely continuous with respect to \(\nu\).
\end{proof}

\subsection{Adjoint Bellman moment identity}
The adjoint Bellman operator
\[
    \Bpig\omega
    =
    (1-\gamma)\omegazero
    +
    \gamma
    \frac{\dd\{(\omega\nu)\Ppi\}}{\dd\nu}
\]
can be written in measure form as
\begin{equation}
\label{eq:adjoint-flow}
    (\Bpig\omega)\nu
    =
    (1-\gamma)\dinit
    +
    \gamma(\omega\nu)\Ppi .
\end{equation}

\begin{proof}[Proof of the adjoint Bellman moment identity \eqref{eq:bellman-moment-update}]
Equation~\eqref{eq:adjoint-flow} gives
\[
\begin{aligned}
    \int f(x)(\Bpig\omega)(x)\,\nu(\dd x)
    &=
    (1-\gamma)\int f(x)\,\dinit(\dd x)
    +
    \gamma\int f(y)\{(\omega\nu)\Ppi\}(\dd y)\\
    &=
    (1-\gamma)E_{\dinit}\{f(X)\}
    +
    \gamma\int
    \left\{
        \int f(y)\Ppi(\dd y\mid x)
    \right\}
    \omega(x)\nu(\dd x)\\
    &=
    (1-\gamma)E_{\dinit}\{f(X)\}
    +
    \gamma E_\nu\{\omega(X)f(X^+)\}.
\end{aligned}
\]
This proves the adjoint Bellman moment identity
\eqref{eq:bellman-moment-update}.
\end{proof}

\begin{proof}[Proof of Lemma~\ref{lem:bellman-kl-loss}]
For any \(h\in\sH\), the terms in
\[
    D_\nu(\Bpig\omega\|\omega_h)
    =
    E_\nu\{(\Bpig\omega)(X)\log(\Bpig\omega)(X)\}
    -
    E_\nu\{(\Bpig\omega)(X)h(X)\}
    +
    \Lambda_\nu(h)
\]
that depend on \(h\) are the final two terms. Applying
\eqref{eq:bellman-moment-update} to \(f=h\) gives
\[
    E_\nu\{(\Bpig\omega)(X)h(X)\}
    =
    (1-\gamma)E_{\dinit}\{h(X)\}
    +
    \gamma E_\nu\{\omega(X)h(X^+)\}.
\]
Thus minimizing \(D_\nu(\Bpig\omega\|\omega_h)\) over \(h\in\sH\) is equivalent
to minimizing the objective stated in Lemma~\ref{lem:bellman-kl-loss}.
\end{proof}

\subsection{KL-projected \textsc{FORE}}
\label{app:kl-projected-proofs}

For the proofs in this subsection, set
\[
    \sH^\circ=\{h-E_\nu\{h(X)\}:h\in\sH\}.
\]
The centering map is continuous and linear on \(L^2(\nu)\), so
\(\sH^\circ\) is convex and compact in \(L^2(\nu)\) under
Condition~\ref{ass:kl-class}. Condition~\ref{ass:kl-bounded} gives
\[
    \sup_{h\in\sH^\circ}\|h\|_\infty\le R .
\]
Moreover \(\omega_h=\omega_{h-E_\nu\{h(X)\}}\), so
\[
    \sW=\{\omega_h:h\in\sH^\circ\}.
\]
Hence every \(\omega\in\sW\) has a centered representative
\(h\in\sH^\circ\) satisfying \(\omega=\omega_h\).

We first record the standard Pythagorean inequality for KL projections onto
normalized exponential families \citep{csiszar1975divergence,banerjee2005clustering}.

\begin{lemma}[Convex KL projection inequality]
\label{lem:kl-projection-inequality}
Assume Conditions~\ref{ass:kl-class} and~\ref{ass:kl-bounded}. Let
\(u\in\Delta_\nu\) satisfy \(E_\nu\{u\log_+u\}<\infty\). Then the map
\(h\mapsto D_\nu(u\|\omega_h)\) attains its minimum over
\(\sH^\circ\). Writing
\(\bar u=\Pi_{\sW}^{\rm KL}u=\omega_{h_u^\star}\) for any minimizer
\(h_u^\star\in\sH^\circ\) and letting \(v=\omega_g\in\sW\) with
\(g\in\sH^\circ\), we have
\[
    D_\nu(\bar u\|v)
    \le
    D_\nu(u\|v)-D_\nu(u\|\bar u).
\]
In particular, \(D_\nu(\bar u\|v)\le D_\nu(u\|v)\).
\end{lemma}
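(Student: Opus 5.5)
The plan is to reduce the statement to convex optimization over the compact convex set \(\sH^\circ\) and to read off the Pythagorean inequality from the first-order optimality condition. Set
\[
F(h)=D_\nu(u\|\omega_h)=E_\nu\{u\log u\}-E_\nu\{uh\}+\Lambda_\nu(h), \qquad h\in\sH^\circ .
\]
The term \(E_\nu\{u\log u\}\) is finite: it is bounded below by \(-1/e\), and above by the assumption \(E_\nu\{u\log_+u\}<\infty\).

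\textbf{Step 1: \(F\) is continuous on \(\sH^\circ\).} Since \(\|h\|_\infty\le R\) on \(\sH^\circ\), \(E_\nu\{uh\}\) is finite. The map \(h\mapsto E_\nu\{uh\}\) is continuous in \(L^2(\nu)\) along \(\sH^\circ\): if \(h_j\to h\) in \(L^2(\nu)\), pass to an a.e.-convergent subsequence and apply dominated convergence with dominating function \(2Ru\in L^1(\nu)\); a subsequence-of-subsequence argument gives convergence of the whole sequence. The map \(h\mapsto\Lambda_\nu(h)\) is continuous because \(|e^{h}-e^{g}|\le e^{R}|h-g|\). Existence of a minimizer \(h_u^\star\) then follows from compactness of \(\sH^\circ\).

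\textbf{Step 2: first-order condition.} \(F\) is convex, since \(\Lambda_\nu\) is convex (Hölder) and the remaining term is linear. For \(g\in\sH^\circ\) and \(t\in[0,1]\), the point \(h_t=h_u^\star+t(g-h_u^\star)\) lies in \(\sH^\circ\), so \(F(h_t)\ge F(h_u^\star)\). Differentiate at \(t=0^+\); differentiation under \(E_\nu\) is justified by boundedness. Since
\[
\frac{\dd}{\dd t}\Lambda_\nu(h_t)\Big|_{t=0}=E_\nu\{\bar u\,(g-h_u^\star)\}, \qquad \bar u=\omega_{h_u^\star},
\]
the condition \(F(h_t)\ge F(h_u^\star)\) yields the variational inequality
\[
E_\nu\{(\bar u-u)(g-h_u^\star)\}\ge 0 .
\]

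\textbf{Step 3: three-point identity.} With \(v=\omega_g\), use \(\log(\bar u/v)=h_u^\star-g-\Lambda_\nu(h_u^\star)+\Lambda_\nu(g)\) and \(E_\nu u=E_\nu\bar u=1\), so the \(\Lambda_\nu\) terms cancel against normalization. Direct expansion then gives
\[
D_\nu(u\|v)-D_\nu(u\|\bar u)-D_\nu(\bar u\|v)=E_\nu\{(u-\bar u)\log(\bar u/v)\}=E_\nu\{(\bar u-u)(g-h_u^\star)\}.
\]
All terms are finite because \(\bar u,v\) are bounded above and below by \(e^{\pm 2R}\). By Step 2 the right-hand side is \(\ge 0\), which is exactly the claimed Pythagorean inequality. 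The final claim follows because \(D_\nu(u\|\bar u)\ge 0\).

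\textbf{Main obstacle.} The delicate point is Step 1 together with the finiteness bookkeeping in Step 3. \(u\) is merely in \(L^1\) with finite entropy and is not bounded, so one must check that \(E_\nu\{u\log u\}\) is finite and that the linear functional is continuous under only \(L^2\) convergence of \(h\). The uniform sup-norm bound \(R\) on \(\sH^\circ\) is what makes dominated convergence work. Everything else is routine convex analysis.
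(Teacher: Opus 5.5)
Your proof is correct and follows the paper's argument essentially step for step. Like the paper, you get existence from continuity of \(h\mapsto \Lambda_\nu(h)-E_\nu\{uh\}\) on the compact convex set \(\sH^\circ\), derive \(E_\nu\{(\bar u-u)(g-h_u^\star)\}\ge 0\) from the one-sided derivative along \(h_t=h_u^\star+t(g-h_u^\star)\), and conclude with the three-point identity (log-partition terms cancelling by normalization); the only cosmetic difference is that you prove continuity of \(h\mapsto E_\nu\{uh\}\) via a.e.-convergent subsequences and dominated convergence with envelope \(2Ru\), whereas the paper passes from convergence in \(\nu\)-measure to \(u\nu\)-measure and uses the uniform bound, which is equivalent.
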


\begin{proof}
Since \(u\in\Delta_\nu\) and \(E_\nu\{u\log_+u\}<\infty\), we have
\(u\log u\in L^1(\nu)\). Since every \(h\in\sH^\circ\) is bounded,
\(h\in L^1(u\nu)\).
Minimizing \(D_\nu(u\|\omega_h)\) is therefore equivalent to minimizing
\[
    F_u(h)=\Lambda_\nu(h)-E_\nu\{u(X)h(X)\}.
\]
The set \(\sH^\circ\) is compact in \(L^2(\nu)\). If
\(h_m\to h\) in \(L^2(\nu)\) with \(h_m,h\in\sH^\circ\), then
\(\|h_m\|_\infty\vee\|h\|_\infty\le R\). Hence
\(|e^{h_m}-e^h|\le e^R|h_m-h|\), and therefore
\(\Lambda_\nu(h_m)\to \Lambda_\nu(h)\). Also
\(h_m\to h\) in \(\nu\)-measure, hence in \(u\nu\)-measure because
\(u\nu\ll\nu\). The uniform bound
\(\|h_m-h\|_\infty\le2R\) then implies
\(E_\nu\{u(X)|h_m-h|(X)\}\to0\).
Thus \(F_u\) is continuous on the compact set \(\sH^\circ\), and it attains
its minimum.

Let \(h_t=(1-t)h_u^\star+tg\). Since \(h_u^\star\) minimizes
\(F_u\) over the convex class \(\sH^\circ\), the right derivative at \(t=0\) is
nonnegative. Since \(h_u^\star\) and \(g\) are bounded by \(R\), dominated
convergence gives
\[
    \frac{\dd}{\dd t}\Lambda_\nu(h_t)
    =
    E_\nu\{\omega_{h_t}(X)(g-h_u^\star)(X)\},
    \qquad 0\le t\le1 .
\]
Therefore
\begin{equation}
\label{eq:kl-projection-first-order}
    E_\nu\{\bar u(X)(g-h_u^\star)(X)\}
    -
    E_\nu\{u(X)(g-h_u^\star)(X)\}
    \ge 0 .
\end{equation}
The normalized log-ratio form gives
\begin{equation}
\label{eq:kl-projection-three-point}
\begin{aligned}
    &D_\nu(u\|v)-D_\nu(u\|\bar u)
    -D_\nu(\bar u\|v)\\
    &\qquad =
    E_\nu\{\bar u(X)(g-h_u^\star)(X)\}
    -
    E_\nu\{u(X)(g-h_u^\star)(X)\}.
\end{aligned}
\end{equation}
Combining \eqref{eq:kl-projection-first-order} and
\eqref{eq:kl-projection-three-point} gives the stated projection inequality.
\end{proof}

\begin{lemma}[KL projection comparison with an external target]
\label{lem:kl-projection-violation}
Assume Conditions~\ref{ass:kl-class} and~\ref{ass:kl-bounded}. Let
\(u\in\Delta_\nu\) satisfy \(E_\nu\{u\log_+u\}<\infty\), let
\(\bar u=\Pi_{\sW}^{\rm KL}u\), and let \(w\in\Delta_\nu\) be positive
\(\nu\)-almost everywhere. Then, for every \(v\in\sW\),
\[
    D_\nu(\bar u\|w)
    \le
    D_\nu(u\|w)
    +
    e^{4R}D_\nu(v\|w).
\]
\end{lemma}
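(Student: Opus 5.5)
The plan is to combine the first-order optimality condition from the proof of Lemma~\ref{lem:kl-projection-inequality} with a Donsker--Varadhan change-of-measure step. Working relative to an arbitrary positive $w$ (instead of $v\in\sW$) breaks the clean Pythagorean inequality, and the error this introduces should be absorbable into $D_\nu(v\|w)$. Write $\bar u=\omega_{h^\star}$ and $v=\omega_g$ with $h^\star,g\in\sH^\circ$.

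We may assume $D_\nu(v\|w)<\infty$ and $D_\nu(u\|w)<\infty$, since otherwise the bound is trivial. Then $D_\nu(u\|\bar u)<\infty$, because $u\log u\in L^1(\nu)$ and $\log\bar u$ is bounded. The first step is a three-point decomposition:
\[
D_\nu(\bar u\|w)
= D_\nu(u\|w)-D_\nu(u\|\bar u)
+ E_\nu\!\left[(\bar u-u)\log\frac{\bar u}{v}\right]
+ E_\nu\!\left[(\bar u-u)\log\frac{v}{w}\right].
\]
Since $\log(\bar u/v)=h^\star-g$ plus a constant, and $E_\nu\bar u=E_\nu u=1$, the first correction term equals $E_\nu[(\bar u-u)(h^\star-g)]$. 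This is $\le0$ by the first-order condition \eqref{eq:kl-projection-first-order}.

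The second step handles the term $-E_\nu[u\log(v/w)]$. I would apply the Donsker--Varadhan inequality with reference measure $\bar u\nu$ and test function $f=\log(w/v)$:
\[
E_\nu\!\left[u\log\frac{w}{v}\right]\le D_\nu(u\|\bar u)+\log E_\nu\!\left[\bar u\,\frac{w}{v}\right].
\]
Combining this with the first step, the $-D_\nu(u\|\bar u)$ term cancels. Using $\log x\le x-1$, we are left with
\[
D_\nu(\bar u\|w)\le D_\nu(u\|w)+E_\nu\!\left[\bar u\log\frac{v}{w}\right]+E_\nu\!\left[\frac{\bar u}{v}\,w\right]-1.
\]

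The third step rewrites the remainder using $E_\nu\bar u=1$. With $\rho=\bar u/v$, the remainder equals
\[
E_\nu\!\left[\rho\left\{v\log\frac{v}{w}-v+w\right\}\right].
\]
The bracket is pointwise nonnegative, being the generalized-KL integrand. Because $h^\star$ and $g$ are bounded by $R$, Jensen gives $\Lambda_\nu(h)\in[-R,R]$ for each of them, so $\rho\le e^{4R}$. Hence the remainder is at most $e^{4R}D_\nu^{\rm gen}(v\|w)=e^{4R}D_\nu(v\|w)$.

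The main obstacle is integrability bookkeeping, because $\log(v/w)$ may be unbounded. The Donsker--Varadhan step must be justified when $E_\nu[\bar u w/v]$ is finite; this holds since $\bar u/v\le e^{4R}$ and $E_\nu w=1$. The splitting of $E_\nu[(\bar u-u)\log(v/w)]$ into separate terms must also be valid. Both are handled by the finiteness assumptions together with the boundedness of $\rho$ and $\log v$.
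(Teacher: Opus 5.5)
Your proposal is correct and is essentially the paper's proof. Your three-point identity combined with the first-order condition \eqref{eq:kl-projection-first-order} is just the unpacked form of the Pythagorean inequality in Lemma~\ref{lem:kl-projection-inequality}, which the paper cites to reach the same intermediate bound $D_\nu(\bar u\|w)\le D_\nu(u\|w)+E_\nu\{(u-\bar u)\log(w/v)\}-D_\nu(u\|\bar u)$; from there the Donsker--Varadhan step, $\log z\le z-1$, and the bound $\bar u/v\le e^{4R}$ against the nonnegative generalized-KL integrand match the paper, and your integrability remarks are slightly more careful than the paper's.
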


\begin{proof}
The projection exists by Lemma~\ref{lem:kl-projection-inequality}. Fix
\(v\in\sW\). If \(D_\nu(u\|w)=\infty\) or \(D_\nu(v\|w)=\infty\), the claim is
immediate, so assume both quantities are finite. Lemma~\ref{lem:kl-projection-inequality} gives
\[
    D_\nu(\bar u\|v)
    \le
    D_\nu(u\|v)-D_\nu(u\|\bar u).
\]
Therefore
\[
\begin{aligned}
    D_\nu(\bar u\|w)
    &=
    D_\nu(\bar u\|v)
    +
    \int \bar u(x)\log\frac{v(x)}{w(x)}\,\nu(\dd x)\\
    &\le
    D_\nu(u\|v)
    -
    D_\nu(u\|\bar u)
    +
    \int \bar u(x)\log\frac{v(x)}{w(x)}\,\nu(\dd x)\\
    &=
    D_\nu(u\|w)
    +
    \int (u-\bar u)(x)\log\frac{w(x)}{v(x)}\,\nu(\dd x)
    -
    D_\nu(u\|\bar u).
\end{aligned}
\]
Set \(\delta=\log(w/v)\). The variational inequality for KL divergence gives
\[
    E_\nu\{u(X)\delta(X)\}-D_\nu(u\|\bar u)
    \le
    \log E_\nu\{\bar u(X)e^{\delta(X)}\}.
\]
Therefore, using \(\log z\le z-1\),
\begin{equation}
\label{eq:kl-external-target-fenchel}
\begin{aligned}
    \int (u-\bar u)\delta\,\dd\nu-D_\nu(u\|\bar u)
    &\le
    \log E_\nu\{\bar u e^\delta\}-E_\nu\{\bar u\delta\} \\
    &\le
    E_\nu\{\bar u(e^\delta-1-\delta)\}.
\end{aligned}
\end{equation}
For \(\bar u=\omega_{\bar h}\) and \(v=\omega_g\) with
\(\bar h,g\in\sH^\circ\), Condition~\ref{ass:kl-bounded} gives
\(\bar u/v\le e^{4R}\). Since \(e^t-1-t\ge0\) for all \(t\),
\begin{equation}
\label{eq:kl-external-target-ratio}
    E_\nu\{\bar u(e^\delta-1-\delta)\}
    \le
    e^{4R}E_\nu\{v(e^\delta-1-\delta)\}
    =
    e^{4R}D_\nu(v\|w).
\end{equation}
Applying \eqref{eq:kl-external-target-fenchel} and
\eqref{eq:kl-external-target-ratio} to the KL decomposition proves the lemma.
\end{proof}

\begin{proof}[Proof of Theorem~\ref{thm:kl-fori-realizable}]
Fix \(\omega\in\sW\). Lemma~\ref{lem:adjoint-bellman-kl-contraction}, applied
with \(\widetilde\omega=\omegastar\) and using
\(\Bpig\omegastar=\omegastar\), gives
\[
    D_\nu(\Bpig\omega\|\omegastar)
    \le
    \gamma D_\nu(\omega\|\omegastar).
\]
For any \(\omega\in\sW\), choose \(h\in\sH^\circ\) such that
\(\omega=\omega_h\). Then \(\|h\|_\infty\le R\), so
\(\Lambda_\nu(h)=\log E_\nu e^h\in[-R,R]\), and
\(\omega=e^{h-\Lambda_\nu(h)}\le e^{2R}\). Hence, if
\[
    r_\omega
    =
    \frac{\dd\{(\omega\nu)\Ppi\}}{\dd\nu},
\]
then \(r_\omega\le e^{2R}\cpi\) \(\nu\)-almost everywhere. Therefore
\[
    u_\omega:=\Bpig\omega
    =
    (1-\gamma)\omegazero+\gamma r_\omega
\]
satisfies
\[
    u_\omega\log_+u_\omega
    \le
    C_R\{1+\omegazero\log_+\omegazero+\cpi\log_+\cpi\}
\]
for a constant \(C_R<\infty\), because \(t\log_+(at)\le
C_a\{t\log_+t+t\}\) for each fixed \(a<\infty\). Hence
\(E_\nu\{u_\omega\log_+u_\omega\}<\infty\) by
Condition~\ref{ass:kl-population-integrability}, since density ratios integrate
to one and \(t\log t\) is bounded below. Lemma~\ref{lem:kl-projection-violation}
gives, for every \(v\in\sW\),
\[
    D_\nu(\mathsf T_{\sW}^{\rm KL}\omega\|\omegastar)
    \le
    D_\nu(\Bpig\omega\|\omegastar)+e^{4R}D_\nu(v\|\omegastar).
\]
Taking the infimum over \(v\in\sW\) gives
\begin{equation}
\label{eq:kl-projected-one-step}
    D_\nu(\mathsf T_{\sW}^{\rm KL}\omega\|\omegastar)
    \le
    D_\nu(\Bpig\omega\|\omegastar)
    +
    e^{4R}\varepsilon_{\rm KL}
    \le
    \gamma D_\nu(\omega\|\omegastar)+e^{4R}\varepsilon_{\rm KL}.
\end{equation}
Set \(C_{\rm app}=e^{4R}\).
Applying this one-step inequality to
\(\omega^{(k+1)}=\mathsf T_{\sW}^{\rm KL}\omega^{(k)}\) and iterating
\eqref{eq:kl-projected-one-step} yields
\begin{equation}
\label{eq:kl-realizable-iterated}
    D_\nu(\omega^{(K)}\|\omegastar)
    \le
    \gamma^K D_\nu(\omega^{(0)}\|\omegastar)
    +
    C_{\rm app}\frac{1-\gamma^K}{1-\gamma}\varepsilon_{\rm KL}.
\end{equation}
If \(\omegastar\in\sW\), choose \(v=\omegastar\) in the approximation term, so
\(\varepsilon_{\rm KL}=0\). Since \(\Bpig\omegastar=\omegastar\), a KL
projection of \(\Bpig\omegastar\) onto \(\sW\) is \(\omegastar\) itself.
Equation~\eqref{eq:kl-realizable-iterated} with
\(\varepsilon_{\rm KL}=0\) gives the realizable contraction.
\end{proof}

\section{Fitted KL projection bounds}
\label{app:fitted-kl-proofs}

This section proves Theorem~\ref{thm:fitted-kl-fori}. We use the centered class
\(\sH^\circ=\{h-E_\nu\{h(X)\}:h\in\sH\}\) and write
\[
    a_{n,\rm fit}(\delta)
    :=
    \mathfrak r_{n,\rm fit}^2
    +
    \frac{\log(1/\delta)}{n}.
\]

Throughout this section, fix versions of the centered log-ratios that
satisfy Condition~\ref{ass:kl-bounded} on \(\mathcal X_R\) and set them equal
to zero on \(\mathcal X_R^c\). In the setting of
Theorem~\ref{thm:fitted-kl-fori}, \(1\in\mathcal W\), so
Condition~\ref{ass:fitted-kl-coverage-bounded} implies
\(\dinit\ll\nu\) and \(\nup\ll\nu\). This modification therefore leaves
all population and empirical losses unchanged almost surely and provides a
common bounded envelope under every sampling law used below.

\begin{lemma}[Bounded transition densities imply subexponential smoothing]
\label{lem:bounded-kernel-smoothing}
Suppose \(\omegazero\le L_0\) \(\nu\)-almost surely, and suppose that
\(\Ppi(\cdot\mid x)\) admits a jointly measurable density
\(p_\pi(\cdot\mid x)\) relative to \(\nu\) satisfying
\[
  \operatorname*{ess\,sup}_{(x,y)\sim\nu\otimes\nu}
  p_\pi(y\mid x)\le L_P.
\]
Then Condition~\ref{ass:fitted-kl-coverage-bounded} holds with
\[
  K_0\le \frac{L_0}{\log 2},
  \qquad
  K_+\le \frac{L_P}{\log 2}.
\]
\end{lemma}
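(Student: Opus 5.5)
The plan is to bound each Orlicz norm by showing that the random variable is essentially bounded, and then to use the elementary fact that a variable with \(|Z|\le L\) almost surely satisfies \(\|Z\|_{\psi_1}\le L/\log 2\). Indeed, for \(s=L/\log 2\) we have \(E_\nu\exp(|Z|/s)\le \exp(L/s)=e^{\log 2}=2\), so \(s\) is admissible in the infimum defining \(\|\cdot\|_{\psi_1}\). Applied to \(Z=\omegazero\) with \(0\le\omegazero\le L_0\) \(\nu\)-a.s., this gives \(K_0\le L_0/\log 2\) immediately.

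For the smoothing term, fix \(\omega\in\sW\); recall \(\omega\ge0\) and \(E_\nu\omega=1\). I will first identify the density of \((\omega\nu)\Ppi\) relative to \(\nu\). For measurable \(B\), Tonelli's theorem (the integrand is nonnegative and \(p_\pi\) is jointly measurable) gives
\[
\{(\omega\nu)\Ppi\}(B)=\int\omega(x)\int_B p_\pi(y\mid x)\,\nu(\dd y)\,\nu(\dd x)=\int_B\left\{\int \omega(x)p_\pi(y\mid x)\,\nu(\dd x)\right\}\nu(\dd y).
\]
Hence \(\dd\{(\omega\nu)\Ppi\}/\dd\nu(y)=\int\omega(x)p_\pi(y\mid x)\,\nu(\dd x)\) for \(\nu\)-a.e. \(y\).

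Next I will bound this density. The essential-supremum assumption means \(p_\pi(y\mid x)\le L_P\) off a \(\nu\otimes\nu\)-null set \(N\). By Fubini, for \(\nu\)-a.e. \(y\) the section \(N^y\) is \(\nu\)-null, and for such \(y\),
\[
\int\omega(x)p_\pi(y\mid x)\,\nu(\dd x)\le L_P E_\nu\omega=L_P .
\]
This handling of null sets is the only real subtlety in the argument. The bound holds uniformly in \(\omega\in\sW\), since it uses only nonnegativity and normalization of \(\omega\), and the density is nonnegative. The scalar fact from the first paragraph then gives \(\|\dd\{(\omega\nu)\Ppi\}/\dd\nu\|_{\psi_1}\le L_P/\log 2\) for every \(\omega\in\sW\). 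Taking the supremum yields \(K_+\le L_P/\log 2\), which together with the bound on \(K_0\) establishes Condition~\ref{ass:fitted-kl-coverage-bounded}.
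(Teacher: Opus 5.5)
Your proposal is correct and follows the paper's proof essentially step for step. Both use Tonelli to identify the successor density as $\int\omega(x)p_\pi(y\mid x)\,\nu(\dd x)$, the normalization $E_\nu\omega=1$ to bound it by $L_P$, and the scalar fact that $0\le Z\le L$ implies $\|Z\|_{\psi_1}\le L/\log 2$. Your explicit Fubini argument for the $\nu\otimes\nu$-null exceptional set fills in a detail the paper leaves implicit.
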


\begin{proof}
Every \(\omega\in\sW\) is normalized under \(\nu\). Hence, by Tonelli's
theorem,
\begin{equation}
\label{eq:bounded-kernel-successor-density}
  \frac{\mathrm d\{(\omega\nu)\Ppi\}}{\mathrm d\nu}(y)
  =
  \int \omega(x)p_\pi(y\mid x)\,\dd\nu(x)
  \le
  L_P
\end{equation}
for \(\nu\)-almost every \(y\). Finally, if \(0\le Z\le L\), then
\(E\exp\{Z\log(2)/L\}\le2\), so
\(\|Z\|_{\psi_1}\le L/\log 2\). Applying this observation to
\(\omegazero\) and \eqref{eq:bounded-kernel-successor-density} proves the
result.
\end{proof}

We first define the population objective and minimizer targeted by \textsc{FORE}.
For \(f\in\sW\) and \(h\in\sH^\circ\), let
\[
    L_f(h)
    =
    \Lambda_\nu(h)
    -
    (1-\gamma)E_{\dinit}\{h(X)\}
    -
    \gamma E_\nu\{f(X)h(X^+)\},
\]
where \(X^+\mid X\sim\Ppi(\cdot\mid X)\). Let
\[
    h_f^\star\in\argmin_{h\in\sH^\circ}L_f(h),
    \qquad
    \bar u_f=\omega_{h_f^\star},
    \qquad
    \Delta_f(h)=L_f(h)-L_f(h_f^\star).
\]
For \(\omega\in\sW\), write
\(u_\omega=\Bpig\omega\). By the adjoint Bellman moment identity
\eqref{eq:bellman-moment-update}, \(L_\omega(h)\) equals
\(D_\nu(u_\omega\|\omega_h)\) up to a term that does not depend on \(h\),
and \(\bar u_\omega=\Pi_{\sW}^{\rm KL}u_\omega\).
For sample averages, write
\[
    P_{n,X}g=n^{-1}\sum_{i=1}^n g(X_i),
    \qquad
    P_{n,0}g=n^{-1}\sum_{i=1}^n g(X_i^0),
\]
and, for functions \(\varphi\) of a transition pair,
\[
    P_{n,+}\varphi=n^{-1}\sum_{i=1}^n \varphi(X_i,X_i^+).
\]
For a positive input \(f\), define the self-normalized empirical successor
average
\[
    P_{n,f}^{+}g
    =
    \frac{P_{n,+}\{f(X)g(X^+)\}}{P_{n,X}f}.
\]
With this notation,
\[
\begin{aligned}
    \widehat L_f(h)
    &=
    \widehat\Lambda_\nu(h)
    -
    (1-\gamma)P_{n,0}h \\
    &\quad
    -
    \gamma P_{n,f}^{+}h .
\end{aligned}
\]
This empirical loss satisfies \(\widehat L_f(h+c)=\widehat L_f(h)\).
Here \(P_{n,X}\) is the empirical distribution of the transition covariates,
\(P_{n,0}\) is the empirical distribution of the initial sample, and
\(P_{n,+}\) is the empirical distribution of the transition pairs.
The empirical-process events below are uniform over \(f\in\sW\) and
\(h\in\sH^\circ\). This uniformity is what permits their later use at the
data-dependent fitted inputs \(f=\omega_{\rm p}^{(k)}\), without conditioning
on a particular iterate.

\begin{lemma}[Scaling of the fitted critical radius]
\label{lem:kl-critical-radius-scaling}
Assume Conditions~\ref{ass:kl-class} and~\ref{ass:kl-bounded}. For every fixed
\(A<\infty\) and \(b>0\), there is a constant \(L_{A,b}<\infty\), depending
only on \(A\), \(b\), and \(R\), such that
\begin{equation}
\label{eq:kl-critical-radius-scaled-local}
    \mathfrak C_n(Ar)
    \le
    b r^2
    \qquad
    \text{for all } r\ge L_{A,b}\mathfrak r_{n,\rm fit}.
\end{equation}
For every fixed \(A<\infty\), there is a constant \(C_A<\infty\), depending
only on \(A\) and \(R\), such that
\begin{equation}
\label{eq:kl-critical-radius-scaled-global}
    \mathfrak C_n(A)
    \le
    C_A\mathfrak r_{n,\rm fit}.
\end{equation}
\end{lemma}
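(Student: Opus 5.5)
The plan is to use the standard monotonicity property of local Rademacher complexities over star-shaped classes. The key fact is that
\[
    r\mapsto \mathfrak C_n(r)/r \text{ is nonincreasing on } (0,\infty).
\]
Once this holds, both displays follow from simple comparisons at the critical radius.

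First I check star-shapedness around \(0\). Under Condition~\ref{ass:kl-class}, \(\sH^\circ\) is convex, so \(\mathcal H_\Delta=\sH^\circ-\sH^\circ\) is convex, symmetric and contains \(0\); hence \(t h_\Delta\in\mathcal H_\Delta\) for all \(t\in[0,1]\). For \(\mathcal G_\times\), scaling \(f\cdot h_\Delta\) by \(t\in[0,1]\) gives \(f\cdot(t h_\Delta)\in\mathcal G_\times\), so \(\mathcal G_\times\) is also star-shaped around \(0\). Each of these properties holds regardless of the sampling law, i.e.\ under \(\nu\), \(\dinit\), \(Q_{\nu,\Delta}\) and \(Q_{\nu,\pi}\).

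Next I prove the monotonicity. For a star-shaped \(\mathcal G\) and \(0<s\le r\), the map \(g\mapsto (s/r)g\) sends \(\{g\in\mathcal G:\|g\|\le r\}\) into \(\{g\in\mathcal G:\|g\|\le s\}\). This gives \(\mathcal R_n(\mathcal G,r)\le (r/s)\mathcal R_n(\mathcal G,s)\), so \(\mathcal R_n(\mathcal G,r)/r\) is nonincreasing. Taking the maximum of the four terms preserves this. Two consequences follow. The set \(\{r:\mathfrak C_n(r)\le r^2\}\) is an up-interval: if \(\mathfrak C_n(r_0)\le r_0^2\) and \(r\ge r_0\), then \(\mathfrak C_n(r)\le r\,\mathfrak C_n(r_0)/r_0\le r r_0\le r^2\). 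Moreover, for every \(r\ge \mathfrak r_{n,\rm fit}\),
\[
    \mathfrak C_n(r)\le r\,\mathfrak r_{n,\rm fit}.
\]
To get this last bound, take \(s\downarrow\mathfrak r_{n,\rm fit}\) with \(s>\mathfrak r_{n,\rm fit}\), so that \(\mathfrak C_n(s)\le s^2\). For \(r\ge s\) this gives \(\mathfrak C_n(r)\le r s\), and letting \(s\downarrow\mathfrak r_{n,\rm fit}\) yields the bound for \(r>\mathfrak r_{n,\rm fit}\). At \(r=\mathfrak r_{n,\rm fit}\) itself I use monotonicity of \(\mathfrak C_n\) in \(r\), arguing via right limits; this endpoint bookkeeping is where I expect the only care to be needed.

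Now take \(r\ge L\mathfrak r_{n,\rm fit}\) with \(L=\max(A,1)/b\vee\max(1,1/A)\).
\begin{itemize}
\item If \(A\ge1\), then \(Ar\ge \mathfrak r_{n,\rm fit}\), so \(\mathfrak C_n(Ar)\le Ar\,\mathfrak r_{n,\rm fit}\le b r^2\) once \(r\ge (A/b)\mathfrak r_{n,\rm fit}\).
\item If \(A<1\), then \(\mathfrak C_n(Ar)\le\mathfrak C_n(r)\le r\,\mathfrak r_{n,\rm fit}\le b r^2\) once \(r\ge b^{-1}\mathfrak r_{n,\rm fit}\).
\end{itemize}
This proves \eqref{eq:kl-critical-radius-scaled-local} with \(L_{A,b}\) depending only on \(A\) and \(b\), and the constant may in addition be allowed to depend on \(R\).

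For the global bound \eqref{eq:kl-critical-radius-scaled-global}, I use the envelope from Condition~\ref{ass:kl-bounded}. Elements of \(\mathcal H_\Delta\) are bounded by \(2R\), and every \(f\in\sW\) satisfies \(f\le e^{2R}\). Hence every Rademacher average above is at most \(B:=2Re^{2R}\vee 2R\), uniformly in \(r\). Consequently every \(r\ge\sqrt B\) is admissible, so \(\mathfrak r_{n,\rm fit}\le 1\vee\sqrt B\). Then I split into two cases.
\begin{itemize}
\item If \(A\ge \mathfrak r_{n,\rm fit}\), the bound above gives \(\mathfrak C_n(A)\le A\,\mathfrak r_{n,\rm fit}\).
\item If \(A<\mathfrak r_{n,\rm fit}\), then \(\mathfrak C_n(A)\le \mathfrak C_n(\mathfrak r_{n,\rm fit})\le \mathfrak r_{n,\rm fit}^2\le(1\vee\sqrt B)\,\mathfrak r_{n,\rm fit}\).
\end{itemize}
So \(C_A=A\vee(1\vee\sqrt B)\) works and depends only on \(A\) and \(R\).
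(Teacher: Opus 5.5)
Your proposal is correct and follows essentially the same route as the paper's proof: star-shapedness of $\mathcal H_\Delta$ and $\mathcal G_\times$, hence $r\mapsto\mathfrak C_n(r)/r$ is nonincreasing, followed by a comparison at the critical radius and the bounded envelope for the global bound. The paper avoids your limiting argument by picking an admissible $t\le 2\mathfrak r_{n,\rm fit}$ with $\mathfrak C_n(t)\le t^2$ and splitting on $Ar\ge t$ versus $Ar<t$. You instead obtain the slightly cleaner bound $\mathfrak C_n(r)\le r\,\mathfrak r_{n,\rm fit}$ for $r\ge\mathfrak r_{n,\rm fit}$, and your endpoint handling via monotonicity of $\mathfrak C_n$ is sound.
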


\begin{proof}
The class \(\mathcal H_\Delta\) is star-shaped. Indeed, if
\(h_\Delta=h_1-h_2\) with \(h_1,h_2\in\sH^\circ\) and \(t\in[0,1]\), then
\[
    t h_\Delta
    =
    \{t h_1+(1-t)h_2\}-h_2
\]
and the convexity of \(\sH^\circ\) implies
\(t h_1+(1-t)h_2\in\sH^\circ\). The product class
\(\mathcal G_\times\) is also star-shaped: if
\(g(x,x^+)=f(x)h_\Delta(x^+)\) and \(t\in[0,1]\), then
\[
    t g(x,x^+)
    =
    f(x)\{t h_\Delta(x^+)\},
\]
where \(t h_\Delta\in\mathcal H_\Delta\). Therefore
\(t g\in\mathcal G_\times\).

Let \(\mathcal G\) be any of the classes entering \(\mathfrak C_n\), and let
\(0<a\le b_0\). For each \(g\in\mathcal G\) with
\(\|g\|_{L^2(P)}\le b_0\), the function \((a/b_0)g\) belongs to
\(\mathcal G\) and has \(L^2(P)\)-norm at most \(a\). Hence
\[
    \mathcal R_n(\mathcal G,b_0;P)
    \le
    \frac{b_0}{a}\mathcal R_n(\mathcal G,a;P).
\]
It follows that \(s\mapsto \mathcal R_n(\mathcal G,s;P)/s\) is nonincreasing
for each such class, and therefore \(s\mapsto\mathfrak C_n(s)/s\) is
nonincreasing. The localized classes are nested in the radius, so
\(\mathfrak C_n\) is nondecreasing.

The bounded envelopes make the fixed-point set in
\eqref{eq:rad-critical-radius} nonempty for large radii: \(\mathfrak C_n(s)\)
is bounded uniformly in \(s\), while \(s^2\to\infty\). Let \(r_\star\) denote
the infimum in \eqref{eq:rad-critical-radius}. Since
\(r_\star\le\mathfrak r_{n,\rm fit}\), the definition of the infimum gives a
radius \(t\le r_\star+\mathfrak r_{n,\rm fit}\le2\mathfrak r_{n,\rm fit}\)
such that \(\mathfrak C_n(t)\le t^2\).
Let \(r\ge L\mathfrak r_{n,\rm fit}\), where \(L\ge1\) will be chosen below.
If \(Ar\ge t\), then
\[
    \mathfrak C_n(Ar)
    \le
    \frac{Ar}{t}\mathfrak C_n(t)
    \le
    Ar t
    \le
    \frac{2A}{L}r^2 .
\]
If \(Ar<t\), monotonicity gives
\[
    \mathfrak C_n(Ar)
    \le
    \mathfrak C_n(t)
    \le
    t^2
    \le
    \frac{4}{L^2}r^2 .
\]
Choosing \(L=L_{A,b}\) large enough so that \(2A/L\le b\) and
\(4/L^2\le b\) proves \eqref{eq:kl-critical-radius-scaled-local}.

To prove \eqref{eq:kl-critical-radius-scaled-global}, use the same
\(t\le2\mathfrak r_{n,\rm fit}\). If
\(A\ge t\), star-shapedness gives
\[
    \mathfrak C_n(A)
    \le
    \frac{A}{t}\mathfrak C_n(t)
    \le
    At
    \le
    2A\mathfrak r_{n,\rm fit}.
\]
If \(A<t\), then
\(\mathfrak C_n(A)\le\mathfrak C_n(t)\le4\mathfrak r_{n,\rm fit}^2\). This is
bounded by a constant times \(\mathfrak r_{n,\rm fit}\) when
\(\mathfrak r_{n,\rm fit}\le1\); when \(\mathfrak r_{n,\rm fit}>1\), the
bounded envelopes give
\(\mathfrak C_n(A)\le C_A\le C_A\mathfrak r_{n,\rm fit}\), which proves
\eqref{eq:kl-critical-radius-scaled-global}.
\end{proof}

\begin{lemma}[Curvature and variance of the fitted KL loss]
\label{lem:kl-loss-curvature}
Assume Conditions~\ref{ass:kl-class}, \ref{ass:kl-bounded}, and
\ref{ass:fitted-kl-coverage-bounded}. There is a finite constant
\(C_{\rm curv}\), depending only on \(R\), such that, for every
\(f\in\sW\), \(h\in\sH^\circ\), and \(t\ge0\), with
\(g=h-h_f^\star\),
\[
\begin{aligned}
    \|g\|_{L^2(\nu)}^2
    +\|e^h-e^{h_f^\star}\|_{L^2(\nu)}^2
    &\le C_{\rm curv}\Delta_f(h),\\
    E_{\dinit}\{g^2(X)\}
    &\le t\|g\|_{L^2(\nu)}^2+8R^2K_0e^{-t/K_0},\\
    E\{f^2(X)g^2(X^+)\}
    &\le e^{2R}\{t\|g\|_{L^2(\nu)}^2
       +8R^2K_+e^{-t/K_+}\},
\end{aligned}
\]
where the final expectation is under \(X\sim\nu\) and
\(X^+\mid X\sim\Ppi(\cdot\mid X)\).
\end{lemma}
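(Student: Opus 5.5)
The plan is to prove the three displays separately. The first one uses the Pythagorean structure of the KL projection. The other two use a truncation argument built on the \(\psi_1\) bounds in Condition~\ref{ass:fitted-kl-coverage-bounded}.

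\emph{Curvature.} Fix \(f\in\sW\) and write \(u_f=\Bpig f\). By the moment identity \eqref{eq:bellman-moment-update}, \(L_f(h)\) and \(D_\nu(u_f\|\omega_h)\) differ by a quantity that does not depend on \(h\). Hence
\[
\Delta_f(h)=D_\nu(u_f\|\omega_h)-D_\nu(u_f\|\bar u_f).
\]
Lemma~\ref{lem:kl-projection-inequality}, applied with \(v=\omega_h\), then gives \(\Delta_f(h)\ge D_\nu(\bar u_f\|\omega_h)\). Its integrability hypothesis on \(u_f\) was verified in the proof of Theorem~\ref{thm:kl-fori-realizable}. Next set \(\delta=\log(\omega_h/\bar u_f)=g-\{\Lambda_\nu(h)-\Lambda_\nu(h_f^\star)\}\). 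Since \(E_\nu\{\bar u_f e^\delta\}=1\), we can write \(D_\nu(\bar u_f\|\omega_h)=E_\nu\{\bar u_f(e^\delta-1-\delta)\}\). Both \(h\) and \(h_f^\star\) lie in \(\sH^\circ\), so \(|\delta|\le 4R\) and \(\bar u_f\ge e^{-2R}\). On \(|t|\le 4R\) we have \(e^t-1-t\ge c_R t^2\). Together these give \(D_\nu(\bar u_f\|\omega_h)\ge c_R e^{-2R}E_\nu\delta^2\). Because \(E_\nu g=0\), expanding the square yields \(E_\nu\delta^2=\|g\|_{L^2(\nu)}^2+\{\Lambda_\nu(h)-\Lambda_\nu(h_f^\star)\}^2\ge\|g\|_{L^2(\nu)}^2\). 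Finally, \(|e^h-e^{h_f^\star}|\le e^R|g|\), so the second norm on the left is also bounded by a multiple of \(\|g\|^2_{L^2(\nu)}\). This gives the first display with \(C_{\rm curv}\) depending only on \(R\).

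\emph{Initial-distribution variance.} Write \(E_{\dinit}g^2=E_\nu\{\omegazero g^2\}\). Split on the event \(\{\omegazero\le t\}\) and its complement. On \(\{\omegazero\le t\}\) the contribution is at most \(t\|g\|^2_{L^2(\nu)}\). On the complement use \(\|g\|_\infty\le 2R\), which holds on \(\mathcal X_R\) and, by the chosen versions, everywhere. This leaves the term \(4R^2E_\nu\{\omegazero\1\{\omegazero>t\}\}\), so it remains to show
\[
E_\nu\{\omegazero\1\{\omegazero>t\}\}\le 2K_0e^{-t/K_0}.
\]
This is the step I expect to be the main obstacle. The naive layer-cake formula with the tail bound \(P(\omegazero>s)\le 2e^{-s/K_0}\) produces an extra \(2te^{-t/K_0}\) term. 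To avoid it, I would first try the pointwise inequality \(z\1\{z>t\}\le K_0e^{(z-t)/K_0}\), valid for \(z>t\ge 0\) up to an adjustment of the exponent. Combined with \(E_\nu e^{\omegazero/K_0}\le 2\), this should give the stated form. If that route fails, I would accept a slightly different constant, such as \((2K_0+2t)e^{-t/K_0}\). That constant is harmless for the later use, where \(t\) is chosen of order \(K_0\log n\).

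\emph{Successor variance.} Every \(f\in\sW\) satisfies \(f\le e^{2R}\), so \(f^2(X)\le e^{2R}f(X)\). Therefore
\[
E\{f^2(X)g^2(X^+)\}\le e^{2R}E_\nu\{r_f g^2\},\qquad r_f:=\frac{\dd\{(f\nu)\Ppi\}}{\dd\nu},
\]
where the identity \(E\{f(X)g^2(X^+)\}=E_\nu\{r_fg^2\}\) follows by Tonelli, as in the moment identity. By Condition~\ref{ass:fitted-kl-coverage-bounded}, \(\|r_f\|_{\psi_1}\le K_+\). Repeating the truncation argument of the previous step with \(r_f\) in place of \(\omegazero\) and \(K_+\) in place of \(K_0\) gives the third display.
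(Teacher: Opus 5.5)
Your proof of the first display is sound and differs from the paper's only cosmetically. You obtain \(\Delta_f(h)\ge D_\nu(\bar u_f\|\omega_h)\) from the Pythagorean inequality and then use \(e^\delta-1-\delta\ge c_R\delta^2\) on \(|\delta|\le 4R\). The paper instead reads the same inequality off the first-order condition \(E_\nu\{(\bar u_f-u_f)g\}\ge 0\), and bounds \(D_\nu(\bar u_f\|\omega_h)=\int_0^1(1-s)\operatorname{Var}_{\omega_{h_f^\star+sg}}\{g(X)\}\,\dd s\ge\tfrac12 e^{-2R}\|g\|_{L^2(\nu)}^2\).

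One small correction: you import the integrability of \(u_f\log_+u_f\) from the proof of Theorem~\ref{thm:kl-fori-realizable}. There it was derived from Condition~\ref{ass:kl-population-integrability}, which is not assumed in this lemma. Either obtain it from the \(\psi_1\) bounds in Condition~\ref{ass:fitted-kl-coverage-bounded}, or use the first-order condition directly, which needs only \(u_f\in L^1(\nu)\) and bounded \(g\).

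The genuine gap is in the two transfer bounds. You split on the event \(\{\omegazero>t\}\) and bound \(\omegazero g^2\le 4R^2\omegazero\) there. This discards the portion \(tg^2\) of the integrand on that event. The inequality you then need, \(E_\nu\{\omegazero\1\{\omegazero>t\}\}\le 2K_0e^{-t/K_0}\), does not follow from \(\|\omegazero\|_{\psi_1}\le K_0\). Indeed, \(E_\nu\{\omegazero\1\{\omegazero>t\}\}=t\,\nu(\omegazero>t)+E_\nu(\omegazero-t)_+\). An atom of mass of order \(e^{-t/K_0}\) just above \(t\) is compatible with the \(\psi_1\) bound, yet it contributes order \(te^{-t/K_0}\), which exceeds \(2K_0e^{-t/K_0}\) once \(t\gg K_0\). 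Your proposed pointwise inequality \(z\1\{z>t\}\le K_0e^{(z-t)/K_0}\) likewise fails for \(z\) just above \(t\) whenever \(t>K_0\).

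So your route yields only \(t\|g\|_{L^2(\nu)}^2+8R^2(K_0+t)e^{-t/K_0}\). That is enough for the later use with \(t\asymp(1+K_0+K_+)\log(en)\), but it is not the lemma as stated.

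The missing idea is to decompose the ratio rather than the domain. Pointwise, \(r\le t+(r-t)_+\), so \(\int g^2 r\,\dd\nu\le t\|g\|_{L^2(\nu)}^2+\|g\|_\infty^2E_\nu(r-t)_+\). The layer-cake formula with \(\nu(r>s)\le 2e^{-s/K}\) then gives \(E_\nu(r-t)_+=\int_t^\infty\nu(r>s)\,\dd s\le 2Ke^{-t/K}\), with no prefactor of \(t\). With \(\|g\|_\infty\le 2R\) this is exactly \(8R^2Ke^{-t/K}\). Apply it with \((r,K)=(\omegazero,K_0)\) for the second display. For the third, first use your correct reduction \(E\{f^2(X)g^2(X^+)\}\le e^{2R}E_\nu\{r_fg^2\}\), then apply it with \((r,K)=(r_f,K_+)\).
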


\begin{proof}
Put \(g=h-h_f^\star\) and let
\[
    u_f(x)
    =
    (1-\gamma)\frac{\dd\dinit}{\dd\nu}(x)
    +
    \gamma\frac{\dd\{(f\nu)\Ppi\}}{\dd\nu}(x).
\]
Because \(f\in\sW\), \(u_f\) is a density with respect to \(\nu\). Since
\(\sH^\circ\) is convex and \(h_f^\star\) minimizes \(L_f\) over
\(\sH^\circ\), the one-sided directional derivative of
\(t\mapsto L_f(h_f^\star+t\{h-h_f^\star\})\) at \(t=0\) is nonnegative.
Boundedness of \(\sH^\circ\) justifies differentiating under the expectation,
so, with \(\bar u_f=\omega_{h_f^\star}\),
\[
    E_\nu\{\bar u_f(X)g(X)\}
    -
    E_\nu\{u_f(X)g(X)\}
    \ge0 .
\]
Consequently,
\[
\begin{aligned}
    \Delta_f(h)
    &=
    \Lambda_\nu(h)-\Lambda_\nu(h_f^\star)-E_\nu\{u_f(X)g(X)\} \\
    &\ge
    \Lambda_\nu(h)-\Lambda_\nu(h_f^\star)-E_\nu\{\bar u_f(X)g(X)\}
    =
    D_\nu(\bar u_f\|\omega_h).
\end{aligned}
\]
Along
\(h_t=h_f^\star+t g\), the second derivative is
\(\operatorname{Var}_{\omega_{h_t}}\{g(X)\}\). Because
\(h,h_f^\star\in\sH^\circ\), \(E_\nu\{g(X)\}=0\).
Condition~\ref{ass:kl-bounded} gives
\(e^{-2R}\le\omega_{h_t}\le e^{2R}\). Hence
\[
    \operatorname{Var}_{\omega_{h_t}}\{g(X)\}
    =
    \inf_a E_\nu\{\omega_{h_t}(X)(g(X)-a)^2\}
    \ge
    e^{-2R}\inf_a E_\nu\{(g(X)-a)^2\}
    =
    e^{-2R}\|g\|_{L^2(\nu)}^2 .
\]
Therefore
\begin{equation}
\label{eq:kl-loss-curvature-l2}
    D_\nu(\bar u_f\|\omega_h)
    =
    \int_0^1(1-t)\operatorname{Var}_{\omega_{h_t}}\{g(X)\}\,\dd t
    \ge
    \frac12 e^{-2R}\|g\|_{L^2(\nu)}^2 .
\end{equation}
Equation~\eqref{eq:kl-loss-curvature-l2} gives the \(L^2(\nu)\) control. Since
\(|e^u-e^v|\le e^R|u-v|\) for \(u,v\in[-R,R]\), it also controls the
exponential component.

It remains to prove the two transfer bounds. If \(r\ge0\) and \(q\) is
bounded, then, for every \(t\ge0\),
\begin{equation}
\label{eq:smoothed-norm-transfer}
    \int q^2r\,\dd\nu
    \le
    t\|q\|_{L^2(\nu)}^2
    +\|q\|_\infty^2E_\nu(r-t)_+ .
\end{equation}
Indeed, \(r\le t+(r-t)_+\). Moreover, if
\(\|r\|_{\psi_1}\le K\), Markov's inequality and Tonelli's theorem give
\[
    \nu(r>s)\le2e^{-s/K},
    \qquad
    E_\nu(r-t)_+
    =\int_t^\infty\nu(r>s)\,\dd s
    \le
    2Ke^{-t/K}.
\]
Apply \eqref{eq:smoothed-norm-transfer} first with
\(r=\omegazero\), \(q=g\), and \(K=K_0\). For the successor term, put
\(r_f=\dd\{(f\nu)\Ppi\}/\dd\nu\). Since
\(f^2\le e^{2R}f\),
\[
    E\{f^2(X)g^2(X^+)\}
    \le e^{2R}\int g^2r_f\,\dd\nu.
\]
Condition~\ref{ass:fitted-kl-coverage-bounded},
\eqref{eq:smoothed-norm-transfer}, and \(\|g\|_\infty\le2R\) give the
initial-law and successor-law transfer inequalities in the lemma.
\end{proof}

\begin{lemma}[Uniform empirical denominator bound]
\label{lem:kl-denominator-process}
Assume Conditions~\ref{ass:kl-class} and~\ref{ass:kl-bounded}. There is a
constant \(C_{\rm den}\), depending only on \(R\), such that, for every
\(u\ge0\), with probability at least \(1-e^{-u}\),
\[
    \sup_{f\in\sW}|(P_{n,X}-\nu)f|
    \le
    C_{\rm den}
    \left\{
        \mathfrak r_{n,\rm fit}
        +
        \sqrt{\frac{u}{n}}
        +
        \frac{u}{n}
    \right\}.
\]
\end{lemma}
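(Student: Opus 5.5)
The class \(\sW=\{\omega_h:h\in\sH^\circ\}\) is uniformly bounded, since Condition~\ref{ass:kl-bounded} gives \(e^{-2R}\le\omega_h\le e^{2R}\). My plan is to apply Bousquet's inequality (Lemma~\ref{lem:tool-bousquet}) to the centered class \(\{f-\nu f:f\in\sW\}\), and then bound the expected supremum by the Rademacher complexity of \(\mathcal H_\Delta\) through a Lipschitz contraction. Measurability is handled as follows. \(\sH^\circ\) is compact in \(L^2(\nu)\) and uniformly bounded, so \(h\mapsto\omega_h\) is \(L^2(\nu)\)-continuous (this uses \(|e^u-e^v|\le e^R|u-v|\) and continuity of \(\Lambda_\nu\)). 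The supremum can therefore be taken over a countable dense subset, which is also dense in \(L^1(P_{n,X})\) almost surely after fixing the bounded versions chosen at the start of this appendix.

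\emph{Concentration step.} Each \(f-\nu f\) has envelope \(b\le 2e^{2R}\) and variance \(v\le e^{4R}\). Bousquet's inequality then gives, with probability at least \(1-e^{-u}\),
\[
    \sup_{f\in\sW}|(P_{n,X}-\nu)f|
    \le
    2\mathbb E_\star
    +
    C_R\left(\sqrt{u/n}+u/n\right),
\]
where \(\mathbb E_\star\) denotes the expected supremum of the empirical process over \(\sW\). Here I use \(\sqrt{2u\cdot 2b\,\mathbb E_\star/n}\le \mathbb E_\star+bu/n\).

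\emph{Symmetrization and contraction.} By symmetrization, \(\mathbb E_\star\le 2\mathbb E\sup_{f\in\sW}|n^{-1}\sum_i\sigma_i f(X_i)|\). Write \(\omega_h(x)=e^{h(x)-\Lambda_\nu(h)}\). To reduce to \(\mathcal H_\Delta\), fix \(h_0\in\sH^\circ\) (for instance \(0\), which lies in \(\sH^\circ\) when \(0\in\sH\), or any element otherwise) and write
\[
    \omega_h-\omega_{h_0}
    =
    e^{-\Lambda_\nu(h)}\left(e^{h}-e^{h_0}\right)
    +
    \left(e^{-\Lambda_\nu(h)}-e^{-\Lambda_\nu(h_0)}\right)e^{h_0}.
\]
\emph{First piece.} The factor \(e^{-\Lambda_\nu(h)}\in[e^{-R},e^R]\) is a scalar, so it can be pulled out of the supremum after bounding it by \(e^R\). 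More carefully, I would control \(\sup_{h,c}|\sum_i\sigma_i c\,\phi(h-h_0)(X_i)|\) with \(c\in[0,e^R]\) by \(e^R\) times the same supremum without \(c\). Here \(\phi(t)=e^{h_0}(e^{t}-1)\) is applied pointwise with Lipschitz constant \(e^{3R}\) on \([-2R,2R]\) and \(\phi(0)=0\). Lemma~\ref{lem:tool-rademacher-contraction}, applied pointwise with \(\varphi_i\) depending on \(X_i\) through \(h_0(X_i)\), then bounds this by a constant times \(\mathcal R_n(\mathcal H_\Delta,\infty;\nu)\), the unlocalized complexity.

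\emph{Second piece.} This is a scalar times the single function \(e^{h_0}\). Its Rademacher average is at most \(C_R\,\mathbb E|n^{-1}\sum_i\sigma_i e^{h_0}(X_i)|\le C_R n^{-1/2}\le C_R\mathfrak r_{n,\rm fit}\), using \(\mathfrak r_{n,\rm fit}\ge n^{-1/2}\). The fixed function \(\omega_{h_0}\) itself contributes the same \(n^{-1/2}\) order.

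\emph{Main obstacle.} The hardest step is converting the global complexity \(\mathcal R_n(\mathcal H_\Delta,\cdot;\nu)\) into \(\mathfrak r_{n,\rm fit}\). Since \(\|h_\Delta\|_{L^2(\nu)}\le\|h_\Delta\|_\infty\le 2R\), the global complexity equals the localized one at radius \(A=2R\). Lemma~\ref{lem:kl-critical-radius-scaling}, equation \eqref{eq:kl-critical-radius-scaled-global}, then gives \(\mathcal R_n(\mathcal H_\Delta,2R;\nu)\le\mathfrak C_n(2R)\le C_{2R}\mathfrak r_{n,\rm fit}\). Combining this with the concentration step yields the claim, with \(C_{\rm den}\) depending only on \(R\). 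The one point I would double-check is whether the contraction lemma permits \(x\)-dependent maps \(\varphi_i\); it does, since the \(\varphi_i\) are fixed conditionally on the sample. This is exactly the conditional form of Lemma~\ref{lem:tool-rademacher-contraction}.
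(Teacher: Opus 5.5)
Your proposal is correct and follows essentially the same route as the paper: the decomposition of \(\omega_h-\omega_{h_0}\) into a Lipschitz transform of \(h-h_0\) plus a scalar multiple of \(e^{h_0}\), contraction down to \(\mathcal R_n(\mathcal H_\Delta,\cdot;\nu)\), the global scaling bound \eqref{eq:kl-critical-radius-scaled-global}, and Bousquet's inequality. The only difference is cosmetic. You apply Bousquet once to all of \(\sW\) and absorb \(\omega_{h_0}\) into the symmetrized expectation. The paper instead handles \(\omega_{h_0}\) with Bernstein and the difference class with Bousquet, then takes a union bound at level \(u+\log 2\).
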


\begin{proof}
Every \(f\in\sW\) can be written as \(\omega_h\) for some
\(h\in\sH^\circ\). Fix \(h_0\in\sH^\circ\). Since
\(\Lambda_\nu(h)=\log E_\nu e^{h(X)}\) and \(\|h\|_\infty\le R\),
\begin{equation}
\label{eq:kl-log-normalizer-lipschitz}
    |\Lambda_\nu(h)-\Lambda_\nu(h_0)|
    \le
    C_R\|h-h_0\|_{L^2(\nu)} .
\end{equation}
Write \(\eta_h=\Lambda_\nu(h)-\Lambda_\nu(h_0)\). Then
\[
    \omega_h-\omega_{h_0}
    =
    e^{-\Lambda_\nu(h)}
    \{e^h-e^{h_0}\}
    +
    \{e^{-\Lambda_\nu(h)}-e^{-\Lambda_\nu(h_0)}\}e^{h_0}.
\]
The first term is indexed by Lipschitz transforms of \(h-h_0\): the maps
\(u\mapsto e^{h_0(x)+u}-e^{h_0(x)}\) are \(C_R\)-Lipschitz on
\([-2R,2R]\) and vanish at \(u=0\). Symmetrization and
Lemma~\ref{lem:tool-rademacher-contraction} therefore give
\[
    E\sup_{h\in\sH^\circ}
    \left|
        \frac1n\sum_{i=1}^n\sigma_i
        e^{-\Lambda_\nu(h)}
        \{e^{h(X_i)}-e^{h_0(X_i)}\}
    \right|
    \le
    C_R\mathcal R_n(\mathcal H_\Delta,C_R;\nu).
\]
For the second term, \eqref{eq:kl-log-normalizer-lipschitz} and
\(\|h-h_0\|_{L^2(\nu)}\le2R\) give
\[
    \sup_{h\in\sH^\circ}
    |e^{-\Lambda_\nu(h)}-e^{-\Lambda_\nu(h_0)}|
    \le C_R .
\]
Moreover,
\[
    E_\sigma
    \left|
        \frac1n\sum_{i=1}^n\sigma_i e^{h_0(X_i)}
    \right|
    \le
    \left\{
        \frac1{n^2}\sum_{i=1}^n e^{2h_0(X_i)}
    \right\}^{1/2}
    \le
    e^R n^{-1/2}.
\]
Combining the two parts gives
\[
    E\sup_{h\in\sH^\circ}
    \left|
        \frac1n\sum_{i=1}^n\sigma_i
        \{\omega_h(X_i)-\omega_{h_0}(X_i)\}
    \right|
    \le
    C_R\mathcal R_n(\mathcal H_\Delta,C_R;\nu)+C_Rn^{-1/2}.
\]
By Lemma~\ref{lem:kl-critical-radius-scaling},
\(\mathcal R_n(\mathcal H_\Delta,C_R;\nu)\le C_R\mathfrak r_{n,\rm fit}\),
after enlarging constants and using \(\mathfrak r_{n,\rm fit}\ge n^{-1/2}\).
Lemma~\ref{lem:tool-bernstein} controls the fixed
function \(\omega_{h_0}\), and Lemma~\ref{lem:tool-bousquet} adds the
deviation term for the supremum. Applying these two inequalities with
\(u+\log2\), and
enlarging constants, gives, with probability at least \(1-e^{-u}\),
\[
    \sup_{h\in\sH^\circ}|(P_{n,X}-\nu)\omega_h|
    \le
    C_R
    \left\{
        \mathfrak r_{n,\rm fit}
        +
        \sqrt{\frac{u}{n}}
        +
        \frac{u}{n}
    \right\}.
\]
This proves the claim.
\end{proof}

\begin{lemma}[Uniform fitted KL empirical-process bound]
\label{lem:kl-uniform-process}
Assume Conditions~\ref{ass:kl-class}, \ref{ass:kl-bounded}, and
\ref{ass:fitted-kl-coverage-bounded}. Let
\(a_{n,\rm fit}(\delta)\) be defined as at the start of this appendix. Then, with probability
at least \(1-\delta\), simultaneously for every \(f\in\sW\) and every
\(h\in\sH^\circ\),
\[
\begin{aligned}
    &\left|
        \{\widehat L_f(h)-L_f(h)\}
        -
        \{\widehat L_f(h_f^\star)-L_f(h_f^\star)\}
    \right| \\
    &\hspace{5em}\le
    \frac14\Delta_f(h)
    +
    C_{\rm env}\log(en)a_{n,\rm fit}(\delta),
\end{aligned}
\]
where
\[
  C_{\rm env}
  \le
  C_0(1+K_0+K_+)^q(1+e^{2R})^p
\]
for universal finite exponents \(p,q\) and a universal finite \(C_0\).
\end{lemma}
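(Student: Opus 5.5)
The plan is to split the centered empirical-minus-population loss difference into its three constituent pieces and control each by a localized empirical-process bound in the radius \(\|g\|_{L^2(\nu)}\), where \(g=h-h_f^\star\in\mathcal H_\Delta\). Lemma~\ref{lem:kl-loss-curvature} gives \(\|g\|_{L^2(\nu)}^2\le C_{\rm curv}\Delta_f(h)\), so an AM--GM step converts each bound into \(\tfrac1{12}\Delta_f(h)+C\log(en)a_{n,\rm fit}(\delta)\). Uniformity over \(f\in\sW\) is automatic, because every process is indexed either by \(\mathcal H_\Delta\), by \(\sW\), or by the product class \(\mathcal G_\times\). None of these classes depends on the particular minimizer \(h_f^\star\), so no conditioning on \(f\) is needed.

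We may first assume \(\mathfrak r_{n,\rm fit}\le c_R\) and \(\log(1/\delta)/n\le c_R\) for a small constant \(c_R\). Otherwise the left side is bounded by a constant depending on \(R,K_0,K_+\), since \(\|g\|_\infty\le2R\), \(f\le e^{2R}\), and \(\Delta_f\ge0\). That constant is absorbed into \(C_{\rm env}a_{n,\rm fit}(\delta)\).

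\emph{Initial term.} Here we must bound \((P_{n,0}-\dinit)g\) over \(g\in\mathcal H_\Delta\). Apply Lemma~\ref{lem:kl-loss-curvature} with \(t=K_0\log(en)\), which gives \(\|g\|_{L^2(\dinit)}^2\le K_0\log(en)\|g\|_{L^2(\nu)}^2+8R^2K_0/n\). Localization at \(\nu\)-radius \(r\) therefore corresponds to \(\dinit\)-radius \(\lesssim\sqrt{\log(en)}\,r+n^{-1/2}\); this is the source of the \(\log(en)\) factor. On each shell \(\{2^{j-1}r_0<\|g\|_{L^2(\nu)}\le2^jr_0\}\), with \(r_0\asymp\mathfrak r_{n,\rm fit}\) and \(j\le C\log(en)\) shells, we use symmetrization, Lemma~\ref{lem:kl-critical-radius-scaling}, and Bousquet's inequality (Lemma~\ref{lem:tool-bousquet}) with \(u=\log(C\log(en)/\delta)\). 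This yields \(|(P_{n,0}-\dinit)g|\lesssim\sqrt{\log(en)}\{\|g\|\,\mathfrak r_{n,\rm fit}+\|g\|\sqrt{u/n}\}+\mathfrak r_{n,\rm fit}^2+u/n\). AM--GM then finishes this term.

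\emph{Successor term.} Write
\[
    P_{n,f}^+g-E\{f(X)g(X^+)\}
    =(P_{n,+}-Q_{\nu,\pi})(f g^+)
    +P_{n,+}(fg^+)\frac{(\nu-P_{n,X})f}{P_{n,X}f}.
\]
The first piece is indexed by \(\mathcal G_\times\), and its variance is handled by the successor transfer bound of Lemma~\ref{lem:kl-loss-curvature} with \(t=K_+\log(en)\). Peeling then proceeds exactly as for the initial term.

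For the second piece, Lemma~\ref{lem:kl-denominator-process} bounds \(|(P_{n,X}-\nu)f|\lesssim\mathfrak r_{n,\rm fit}+\sqrt{u/n}\) uniformly in \(f\). In the small-\(\mathfrak r_{n,\rm fit}\) regime this keeps \(P_{n,X}f\ge1/2\). Next, \(|P_{n,+}(fg^+)|\le|E f g^+|+\text{(deviation)}\lesssim\sqrt{\log(en)}\,\|g\|_{L^2(\nu)}+\mathfrak r_{n,\rm fit}+\sqrt{u/n}\), using Cauchy--Schwarz and the transfer bound for the mean. The product of these two factors is then split by AM--GM.

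\emph{Log-partition term.} Put \(x_h=(P_{n,X}-\nu)e^h/\nu e^h\), so that \(\widehat\Lambda_\nu(h)-\Lambda_\nu(h)=\log(1+x_h)\). The Lemma~\ref{lem:kl-denominator-process} argument applies to \(e^h\) and gives \(\sup_h|x_h|\lesssim\mathfrak r_{n,\rm fit}+\sqrt{u/n}\le1/2\). A Taylor expansion of \(\log(1+\cdot)\) bounds \(|\log(1+x_h)-\log(1+x_{h^\star})|\) by \(2|x_h-x_{h^\star}|\). In turn,
\[
    x_h-x_{h^\star}
    =\frac{(P_{n,X}-\nu)(e^h-e^{h^\star})}{\nu e^h}
    +(P_{n,X}-\nu)e^{h^\star}\Bigl(\frac1{\nu e^h}-\frac1{\nu e^{h^\star}}\Bigr).
\]
The first part is a localized process over Lipschitz transforms of \(\mathcal H_\Delta\), handled by contraction (Lemma~\ref{lem:tool-rademacher-contraction}) and peeling. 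The second part is bounded by \((\mathfrak r_{n,\rm fit}+\sqrt{u/n})\,C_R\|g\|_{L^2(\nu)}\), followed by AM--GM.

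A union bound over the three events and the \(O(\log(en))\) shells, with \(\delta\) split accordingly, completes the argument. All constants collect into a polynomial in \((1+K_0+K_+)\) and \((1+e^{2R})\).

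The main obstacle is the successor term. There the self-normalization and the subexponential (unbounded) successor densities interact: the \(L^2\) localization must be transferred from \(\nu\) to \(Q_{\nu,\pi}\) at only a \(\log(en)\) cost, and this must hold uniformly over the data-dependent input \(f\). The first thing to try is the truncation choice \(t=K_+\log(en)\), combined with peeling in the \(\nu\)-norm rather than the \(Q_{\nu,\pi}\)-norm.
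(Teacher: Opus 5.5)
Your architecture matches the paper's. You use the same three-way split of the centered process (log-partition, initial, successor) and the same transfer of $\nu$-localization to $\dinit$ and $Q_{\nu,\pi}$ via Lemma~\ref{lem:kl-loss-curvature}, with truncation of order $(1+K_0+K_+)\log(en)$. You also use the product class $\mathcal G_\times$ for the successor numerator and Lemma~\ref{lem:kl-denominator-process} for the self-normalizing denominator. Peeling in $\|g\|_{L^2(\nu)}$ and closing with AM--GM, instead of peeling on $\Delta_f(h)$ as the paper does, is equivalent bookkeeping given $\|g\|_{L^2(\nu)}^2\le C_{\rm curv}\Delta_f(h)$.

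The step that fails is the first part of your log-partition term, $(P_{n,X}-\nu)(e^h-e^{h_f^\star})$, which you propose to handle by contraction (Lemma~\ref{lem:tool-rademacher-contraction}).
\begin{itemize}
\item That lemma needs maps $\varphi_i$ that do not depend on the indexing function. Here the natural maps $u\mapsto e^{h_f^\star(X_i)+u}-e^{h_f^\star(X_i)}$ depend on the center $h_f^\star$, which ranges over $\{h_f^\star:f\in\sW\}$ because you need uniformity in $f$.
\item This process is indexed by pairs $(h,h_f^\star)$. Your remark that every process is indexed by $\mathcal H_\Delta$, $\sW$, or $\mathcal G_\times$ therefore does not yet cover it.
\item The only available contraction is around a fixed center $h_0$, via $e^h-e^{h_f^\star}=(e^h-e^{h_0})-(e^{h_f^\star}-e^{h_0})$. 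That destroys localization: you get a bound of order $\mathfrak r_{n,\rm fit}+\sqrt{u/n}$ rather than $\|g\|_{L^2(\nu)}\mathfrak r_{n,\rm fit}$, and after AM--GM you are left with $\mathfrak r_{n,\rm fit}$ instead of $\mathfrak r_{n,\rm fit}^2$.
\end{itemize}

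The fix is to use the product class on the diagonal law.
\begin{itemize}
\item With $h_s=h_f^\star+s g\in\sH^\circ$ (by convexity), write $e^h-e^{h_f^\star}=\int_0^1 e^{\Lambda_\nu(h_s)}\omega_{h_s}g\,\dd s$.
\item Here $e^{\Lambda_\nu(h_s)}\le e^R$, and $\omega_{h_s}g$ is an element of $\mathcal G_\times$ evaluated at $(X,X)$ with $\|\omega_{h_s}g\|_{L^2(\nu)}\le e^{2R}\|g\|_{L^2(\nu)}$.
\item The term is then controlled by $\mathcal R_n(\mathcal G_\times,Cr;Q_{\nu,\Delta})$ plus Bousquet's inequality.
\end{itemize}
This is why $Q_{\nu,\Delta}$ enters $\mathfrak C_n$, and your plan never uses that entry. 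It is also how the paper treats $\mathbb C_n$, through the path derivative $P_{n,X}\{\omega_tg_h\}/P_{n,X}\omega_t-E_\nu\{\omega_tg_h\}$ with $\omega_tg_h\in\mathcal G_\times$. With this replacement, your $\log(1+x_h)$ linearization is a valid alternative to the paper's path-integral argument for the log-partition term.
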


\begin{proof}
For \(f\in\sW\) and \(h\in\sH^\circ\), put
\(h^\dagger=h_f^\star\), \(g_h=h-h^\dagger\), and
\[
    \mathbb Z_f(h)
    =
    \{\widehat L_f(h)-L_f(h)\}
    -
    \{\widehat L_f(h^\dagger)-L_f(h^\dagger)\}.
\]
Define
\begin{align}
    \mathbb A_n(g)
    &=(P_{n,0}-\dinit)g, \nonumber\\
    \mathbb B_{n,f}(g)
    &=P_{n,f}^{+}g-E\{f(X)g(X^+)\}, \nonumber\\
    \mathbb C_n(h,h^\dagger)
    &=
    \{\widehat\Lambda_\nu(h)-\Lambda_\nu(h)\}
    -
    \{\widehat\Lambda_\nu(h^\dagger)-\Lambda_\nu(h^\dagger)\}.
    \label{eq:fit-process-pieces}
\end{align}
Using the definitions of \(L_f\) and \(\widehat L_f\),
\begin{equation}
\label{eq:fit-process-decomp}
    \mathbb Z_f(h)
    =
    \mathbb C_n(h,h^\dagger)
    -
    (1-\gamma)\mathbb A_n(g_h)
    -
    \gamma\mathbb B_{n,f}(g_h).
\end{equation}
Put \(L_n=\log(en)\). We first prove the following shell bound. There is a
constant \(C_{\rm sh}\), polynomial in \(1+K_0+K_+\) and \(1+e^{2R}\),
such that, for every \(r\ge n^{-1/2}\) and \(u\ge0\), with probability at
least \(1-5e^{-u}\),
\begin{equation}
\label{eq:fit-shell-bound}
    \sup_{\substack{f\in\sW,\ h\in\sH^\circ:\\
        \Delta_f(h)\le r^2}}
    |\mathbb Z_f(h)|
    \le
    C_{\rm sh}\left\{
        \mathfrak C_n(C_{\rm sh}\sqrt{L_n}\,r)
        +\sqrt{L_n}\,r\mathfrak r_{n,\rm fit}
        +\sqrt{L_n}\,r\sqrt{\frac{u}{n}}
        +
        L_n\frac{u}{n}
    \right\}.
\end{equation}
To prove \eqref{eq:fit-shell-bound}, set
\(t_n=(1+K_0+K_+)L_n\) in
Lemma~\ref{lem:kl-loss-curvature}. On the slice
\(\Delta_f(h)\le r^2\), that lemma and
\(\max\{e^{-t_n/K_0},e^{-t_n/K_+}\}\le(en)^{-1}\) give
\begin{equation}
\label{eq:fit-shell-local-norms}
\begin{aligned}
    \|g_h\|_{L^2(\nu)}&\le C_Rr,\\
    \|g_h\|_{L^2(\dinit)}
    +\{E f^2(X)g_h^2(X^+)\}^{1/2}
    &\le C_{\rm sh}\sqrt{L_n}\,r.
\end{aligned}
\end{equation}
The second line of \eqref{eq:fit-shell-local-norms} uses
\(r\ge n^{-1/2}\), and one may take
\begin{equation}
\label{eq:fit-shell-constant}
 C_{\rm sh}
 \le
 C(1+K_0+K_+)(1+R)^2(1+e^{2R})^4.
\end{equation}
We apply the empirical-process bounds below to centered versions of these
localized classes. Condition~\ref{ass:kl-bounded} supplies a common envelope,
and Lemma~\ref{lem:kl-loss-curvature} bounds each localized variance by
\(C_{\rm sh}L_n r^2\). After symmetrization controls the mean supremum,
Lemma~\ref{lem:tool-bousquet} contributes the deviation terms
\(\sqrt{L_n}r\sqrt{u/n}+L_nu/n\). We use
Lemma~\ref{lem:tool-rademacher-contraction} for Lipschitz transforms and
Lemma~\ref{lem:tool-bernstein} for fixed-function terms.
Moreover \(g_h=h-h_f^\star\in\mathcal H_\Delta\), because both
\(h\) and \(h_f^\star\) belong to \(\sH^\circ\).
For \(\mathbb A_n(g_h)\) in \eqref{eq:fit-process-decomp}, symmetrization
bounds the expectation of the centered localized difference class
\(\{h-h_0:h,h_0\in\sH^\circ\}\), and Lemma~\ref{lem:tool-bousquet} gives
\[
    \sup_{\substack{f\in\sW,\ h\in\sH^\circ:\\
        \Delta_f(h)\le r^2}}
    |(P_{n,0}-\dinit)g_h|
    \le
    C\left\{
        \mathcal R_n(
          \mathcal H_\Delta,C_{\rm sh}\sqrt{L_n}\,r;\dinit
        )
        +C_{\rm sh}\sqrt{L_n}\,r\sqrt{\frac{u}{n}}
        +C_{\rm sh}L_n\frac{u}{n}
    \right\}
\]
with probability at least \(1-e^{-u}\), uniformly over \(f\in\sW\).

For \(\mathbb B_{n,f}(g_h)\) in \eqref{eq:fit-process-decomp}, write
\(P_{n,X} f=n^{-1}\sum_i f(X_i)\). Since \(f\in\sW\),
\(e^{-2R}\le f\le e^{2R}\), and hence \(P_{n,X}f\ge e^{-2R}\) deterministically.
Also \(E_\nu f=1\). Therefore, for each \(g_h\),
\begin{equation}
\label{eq:fit-B-decomp}
    P_{n,f}^{+}g_h-E\{f(X)g_h(X^+)\}
    =
    \frac{
        (P_{n,+}-Q_{\nu,\pi})\{f(X)g_h(X^+)\}
        -
        E\{f(X)g_h(X^+)\}(P_{n,X}f-1)
    }{P_{n,X}f}.
\end{equation}
On the shell, the first numerator in \eqref{eq:fit-B-decomp} is indexed by
functions in \(\mathcal G_\times\) with \(L^2(Q_{\nu,\pi})\)-norm at most
\(C_{\rm sh}\sqrt{L_n}\,r\).
Symmetrization bounds the expectation by the localized Rademacher complexity,
and Lemma~\ref{lem:tool-bousquet} therefore gives
\[
\begin{aligned}
    &\sup_{\substack{f\in\sW,\ h\in\sH^\circ:\\
        \Delta_f(h)\le r^2}}
    |(P_{n,+}-Q_{\nu,\pi})\{f(X)g_h(X^+)\}|\\
    &\qquad\le
    C\left\{
        \mathcal R_n\bigl(
          \mathcal G_\times,C_{\rm sh}\sqrt{L_n}\,r;Q_{\nu,\pi}
        \bigr)
        +C_{\rm sh}\sqrt{L_n}\,r\sqrt{\frac{u}{n}}
        +C_{\rm sh}L_n\frac{u}{n}
    \right\}
\end{aligned}
\]
with probability at least \(1-e^{-u}\). In the second numerator in
\eqref{eq:fit-B-decomp}, the curvature bound gives
\(|E\{f(X)g_h(X^+)\}|\le C_{\rm sh}\sqrt{L_n}\,r\) on the shell, while uniform boundedness gives
\(|E\{f(X)g_h(X^+)\}|\le C_R\). Thus the multiplier may be taken as
\(C_{\rm sh}(\sqrt{L_n}r\wedge C_R)\).
Lemma~\ref{lem:kl-denominator-process} gives
\[
    \sup_{f\in\sW}|(P_{n,X}-\nu)f|
    \le C\left\{
        \mathfrak r_{n,\rm fit}
        +
        \sqrt{\frac{u}{n}}
        +
        \frac{u}{n}
    \right\}
\]
with probability at least \(1-e^{-u}\). Hence the part of
\eqref{eq:fit-B-decomp} containing \(P_{n,X}f-1\) is bounded by
\begin{equation}
\label{eq:fit-B-denominator-bound}
    C\left\{
        \sqrt{L_n}\,r\mathfrak r_{n,\rm fit}
        +\sqrt{L_n}\,r\sqrt{\frac{u}{n}}
        +L_n\frac{u}{n}
    \right\},
\end{equation}
after enlarging \(C\). Combining \eqref{eq:fit-B-denominator-bound} with the
first-numerator bound in \eqref{eq:fit-B-decomp} gives
\begin{equation}
\label{eq:fit-B-bound}
    \sup_{\substack{f\in\sW,\ h\in\sH^\circ:\\
        \Delta_f(h)\le r^2}}
    |\mathbb B_{n,f}(g_h)|
    \le
    C\left\{
        \mathcal R_n(
          \mathcal G_\times,C_{\rm sh}\sqrt{L_n}\,r;Q_{\nu,\pi}
        )
        +\sqrt{L_n}\,r\mathfrak r_{n,\rm fit}
        +\sqrt{L_n}\,r\sqrt{\frac{u}{n}}
        +L_n\frac{u}{n}
    \right\}.
\end{equation}

To bound \(\mathbb C_n(h,h^\dagger)\) in \eqref{eq:fit-process-decomp}, set
\(h_t=h^\dagger+t g_h\) and \(\omega_t=\omega_{h_t}\) for
\(t\in[0,1]\). Since \(\sH^\circ\) is convex, \(h_t\in\sH^\circ\) and
\(\omega_t\in\sW\). Differentiating along this path gives
\begin{equation}
\label{eq:fit-C-path}
\begin{aligned}
    |\mathbb C_n(h,h^\dagger)|
    &\qquad\le
    \int_0^1
    \left|
        \frac{P_{n,X}\{\omega_t g_h\}}{P_{n,X}\omega_t}
        -
        E_\nu\{\omega_t(X)g_h(X)\}
    \right|\,dt .
\end{aligned}
\end{equation}
Indeed, \(d\widehat\Lambda_\nu(h_t)/dt=
P_{n,X}\{\omega_tg_h\}/P_{n,X}\omega_t\), because multiplying
\(\exp(h_t)\) by the population normalizing constant cancels in the empirical
ratio, while \(d\Lambda_\nu(h_t)/dt=E_\nu\{\omega_tg_h\}\).
For each \(t\),
\begin{equation}
\label{eq:fit-C-decomp}
    \frac{P_{n,X}\{\omega_t g_h\}}{P_{n,X}\omega_t}
    -
    E_\nu\{\omega_t(X)g_h(X)\}
    =
    \frac{
        (P_{n,X}-\nu)(\omega_t g_h)
        -
        E_\nu\{\omega_t(X)g_h(X)\}(P_{n,X}\omega_t-1)
    }{P_{n,X}\omega_t}.
\end{equation}
Because \(\omega_t\in\sW\), \(P_{n,X}\omega_t\ge e^{-2R}\) deterministically.
On the slice \(\Delta_f(h)\le r^2\), Lemma~\ref{lem:kl-loss-curvature} gives
\(\|g_h\|_{L^2(\nu)}\le Cr\). Since \(\omega_t\le e^{2R}\),
\[
    \{E_\nu\omega_t^2(X)g_h^2(X)\}^{1/2}
    +
    |E_\nu\{\omega_t(X)g_h(X)\}|
    \le Cr .
\]
Thus the first numerator in \eqref{eq:fit-C-decomp} is indexed by
\(\mathcal G_\times\) under \(Q_{\nu,\Delta}\), with \(L^2(Q_{\nu,\Delta})\)
norm at most \(Cr\). Symmetrization bounds the expectation by the localized
Rademacher complexity, and Lemma~\ref{lem:tool-bousquet} gives
\[
    \sup_{\substack{f\in\sW,\ h\in\sH^\circ:\\
        \Delta_f(h)\le r^2}}
    \sup_{t\in[0,1]}
    |(P_{n,X}-\nu)(\omega_t g_h)|
    \le
    C\left\{
        \mathcal R_n(\mathcal G_\times,Cr;Q_{\nu,\Delta})
        +r\sqrt{\frac{u}{n}}
        +\frac{u}{n}
    \right\}
\]
with probability at least \(1-e^{-u}\). The second numerator in
\eqref{eq:fit-C-decomp} is bounded by
\(C(r\wedge C_R)\sup_{f\in\sW}|(P_{n,X}-\nu)f|\), which is controlled by
Lemma~\ref{lem:kl-denominator-process}. Since
\(P_{n,X}\omega_t\ge e^{-2R}\), \(\mathbb C_n(h,h^\dagger)\) is bounded by
\[
    C\left\{
        \mathcal R_n(\mathcal G_\times,Cr;Q_{\nu,\Delta})
        +r\mathfrak r_{n,\rm fit}
        +r\sqrt{\frac{u}{n}}
        +\frac{u}{n}
    \right\}.
\]
Combining the bounds for \(\mathbb A_n\), \(\mathbb B_{n,f}\), and
\(\mathbb C_n\) in \eqref{eq:fit-process-decomp}, intersecting the component
events, and applying a union bound gives \eqref{eq:fit-shell-bound}.

Set \(A_n=C_{\rm sh}\sqrt{L_n}\) and
\(b=(64C_{\rm sh})^{-1}\). The proof of
Lemma~\ref{lem:kl-critical-radius-scaling}, applied with \(A=A_n\), gives a
constant \(L_{\rm sh}\ge1\) such that
\[
  C_{\rm sh}\mathfrak C_n(A_nr)\le r^2/64,
  \qquad
  r\ge L_{\rm sh}\mathfrak r_{n,\rm fit}.
\]
Indeed, the proof of that lemma permits the explicit choice
\[
  L_{\rm sh}
  =
  1\vee\frac{2A_n}{b}\vee\frac{2}{\sqrt b},
\]
so, for the present \(A_n\) and \(b\),
\[
  L_{\rm sh}
  \le
  C(1+C_{\rm sh})^2\sqrt{L_n}.
\]
After increasing the universal constant in this choice, put
\(\bar r_n=L_{\rm sh}\mathfrak r_{n,\rm fit}\). Then the two linear terms in
\eqref{eq:fit-shell-bound}, followed by Young's inequality for the confidence
term, give, for \(r\ge\bar r_n\),
\begin{equation}
\label{eq:fit-shell-peeled}
  \sup_{\substack{f\in\sW,\ h\in\sH^\circ:\\
      \Delta_f(h)\le r^2}}
  |\mathbb Z_f(h)|
  \le
  \frac{r^2}{16}
  +
  C_{\rm env}L_n\frac{u}{n},
\end{equation}
where \(C_{\rm env}\le C(1+C_{\rm sh})^4\). In particular,
\[
  \bar r_n^2
  \le
  C_{\rm env}L_n\mathfrak r_{n,\rm fit}^2.
\]

The boundedness of \(\sH^\circ\) and \(\sW\) implies
\(\sup_{f\in\sW,h\in\sH^\circ}\Delta_f(h)\le C_R\).
Apply \eqref{eq:fit-shell-peeled} to the inner set
\(\Delta_f(h)\le\bar r_n^2\) and to the nonempty dyadic shells
\[
  2^j\bar r_n^2<\Delta_f(h)\le2^{j+1}\bar r_n^2,
  \qquad j\ge0,
\]
using \(r_j=2^{(j+1)/2}\bar r_n\) and
\(u_j=\log(10/\delta)+(j+1)\log2\). The component failure probabilities are
summable. On shell \(j\),
\[
  \frac{r_j^2}{16}
  =
  2^{j-3}\bar r_n^2
  \le
  \frac18\Delta_f(h).
\]
Moreover, \(j+1\le2^{j+1}\) and
\(\mathfrak r_{n,\rm fit}^2\ge n^{-1}\). By increasing the fixed polynomial
factor in \(L_{\rm sh}\), if necessary,
\[
  C_{\rm env}L_n\frac{(j+1)\log2}{n}
  \le
  2^{j-3}\bar r_n^2
  \le
  \frac18\Delta_f(h).
\]
The inner set contributes at most a constant multiple of
\(\bar r_n^2+C_{\rm env}L_n\log(10/\delta)/n\). Hence, on an event of
probability at least \(1-\delta\), simultaneously for every \(f,h\),
\[
    |\mathbb Z_f(h)|
    \le
    \frac14\Delta_f(h)
    +
    C_{\rm env}L_n
    \left\{
        \mathfrak r_{n,\rm fit}^2+\frac{\log(1/\delta)}{n}
    \right\}.
\]
Finally, \eqref{eq:fit-shell-constant} and
\(1+R\le2(1+e^{2R})\) show that \(C_{\rm env}\) has the polynomial dependence
stated in the lemma.
\end{proof}

\begin{lemma}[Uniform empirical-normalizer bound]
\label{lem:kl-uniform-normalizer}
Assume Conditions~\ref{ass:kl-class} and~\ref{ass:kl-bounded}. There is a
constant \(C_{\rm norm}\), depending only on \(R\), such that, with probability
at least \(1-\delta\),
\[
    \sup_{h\in\sH^\circ}
    |\widehat\Lambda_\nu(h)-\Lambda_\nu(h)|
    \le
    C_{\rm norm}
    \left\{
        \mathfrak r_{n,\rm fit}+
        \sqrt{\frac{\log(1/\delta)}{n}}
    \right\}.
\]
\end{lemma}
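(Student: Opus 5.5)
Since \(\widehat\Lambda_\nu(h)-\Lambda_\nu(h)=\log\{P_{n,X}e^{h}/E_\nu e^{h}\}=\log P_{n,X}\omega_h\), the statement reduces to a uniform deviation bound for the normalized ratios \(\omega_h\), \(h\in\sH^\circ\). Because every \(\omega_h\in\sW\) satisfies \(E_\nu\omega_h=1\) and, by Condition~\ref{ass:kl-bounded}, \(e^{-2R}\le\omega_h\le e^{2R}\), we also have \(P_{n,X}\omega_h\ge e^{-2R}\) deterministically. The map \(z\mapsto\log z\) is \(e^{2R}\)-Lipschitz on \([e^{-2R},\infty)\), so for every \(h\in\sH^\circ\)
\[
    |\widehat\Lambda_\nu(h)-\Lambda_\nu(h)|
    =|\log P_{n,X}\omega_h-\log 1|
    \le e^{2R}|(P_{n,X}-\nu)\omega_h| .
\]
Taking the supremum over \(h\) leaves \(\sup_{f\in\sW}|(P_{n,X}-\nu)f|\), which is exactly the quantity controlled by Lemma~\ref{lem:kl-denominator-process}.

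Next I apply Lemma~\ref{lem:kl-denominator-process} with \(u=\log(1/\delta)\). With probability at least \(1-\delta\) this gives
\[
    \sup_{h\in\sH^\circ}|\widehat\Lambda_\nu(h)-\Lambda_\nu(h)|
    \le e^{2R}C_{\rm den}\Bigl\{\mathfrak r_{n,\rm fit}+\sqrt{\tfrac{\log(1/\delta)}{n}}+\tfrac{\log(1/\delta)}{n}\Bigr\}.
\]

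It remains to absorb the linear term \(\log(1/\delta)/n\), which is the only point needing care. I split into two cases.
\begin{itemize}
\item If \(\log(1/\delta)\le n\), then \(\log(1/\delta)/n\le\sqrt{\log(1/\delta)/n}\), and the term merges with the square-root term.
\item If \(\log(1/\delta)>n\), then \(\sqrt{\log(1/\delta)/n}>1\). The left-hand side is deterministically at most \(2R\), since \(|\widehat\Lambda_\nu(h)|\le R\) and \(|\Lambda_\nu(h)|\le R\) for \(\|h\|_\infty\le R\). So the claimed bound holds trivially once \(C_{\rm norm}\ge 2R\).
\end{itemize}
Taking \(C_{\rm norm}=\max\{2e^{2R}C_{\rm den},2R\}\), which depends only on \(R\), completes the argument. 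Measurability of the supremum is handled as in Lemma~\ref{lem:kl-denominator-process}, via separability of the compact class \(\sH^\circ\) in \(L^2(\nu)\).
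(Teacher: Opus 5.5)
Your proof is correct, and its skeleton matches the paper's. Both use a Lipschitz-of-$\log$ step to reduce the normalizer gap to a uniform empirical deviation, and both finish with the same case split on $\log(1/\delta)\le n$ versus $\log(1/\delta)>n$, using the deterministic bound $2R$.

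The one difference is where the reduction lands. The paper bounds the gap by $C_R\sup_{h}|(P_{n,X}-\nu)e^{h}|$ and then re-runs the centering, Bernstein, contraction, and Bousquet argument for the unnormalized class $\{e^h-e^{h_0}\}$. Your exact identity $\widehat\Lambda_\nu(h)-\Lambda_\nu(h)=\log P_{n,X}\omega_h$ instead lands directly on $\sup_{f\in\sW}|(P_{n,X}-\nu)f|$. The statement then becomes an immediate corollary of Lemma~\ref{lem:kl-denominator-process}, which is proved without this lemma, so there is no circularity. The only cost is an immaterial change in the $R$-dependent constant.
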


\begin{proof}
Choose any \(h_0\in\sH^\circ\). Condition~\ref{ass:kl-bounded} gives
\(e^{-R}\le e^h\le e^R\) for all \(h\in\sH^\circ\). Thus both
\(P_{n,X} e^h\) and \(E_\nu e^h\) lie in
\([e^{-R},e^R]\), and
\[
    \sup_{h\in\sH^\circ}|\widehat\Lambda_\nu(h)-\Lambda_\nu(h)|
    \le
    C_R\sup_{h\in\sH^\circ}|(P_{n,X}-\nu)e^h| .
\]
Moreover,
\[
    \sup_{h\in\sH^\circ}|(P_{n,X}-\nu)e^h|
    \le
    |(P_{n,X}-\nu)e^{h_0}|
    +
    \sup_{h\in\sH^\circ}|(P_{n,X}-\nu)(e^h-e^{h_0})|.
\]
The first term is at most \(C_R\sqrt{u/n}\) with probability at least
\(1-e^{-u}\) by Lemma~\ref{lem:tool-bernstein}. For the second term, define
\(\mathcal F_0=\{e^h-e^{h_0}:h\in\sH^\circ\}\). For each
\(h\in\sH^\circ\), the difference \(h-h_0\) belongs to
\(\mathcal H_\Delta\) and satisfies \(\|h-h_0\|_{L^2(\nu)}\le2R\).
Since the maps \(u\mapsto e^{h_0(x)+u}-e^{h_0(x)}\) are
\(C_R\)-Lipschitz on \([-2R,2R]\) and vanish at \(u=0\), the contraction
inequality in Lemma~\ref{lem:tool-rademacher-contraction} gives
\[
    E\sup_{g\in\mathcal F_0}|(P_{n,X}-\nu)g|
    \le
    C_R\mathcal R_n(\mathcal H_\Delta,2R;\nu).
\]
By Lemma~\ref{lem:kl-critical-radius-scaling}, the right-hand side is at most
\(C_R\mathfrak r_{n,\rm fit}\), after enlarging constants.
Lemma~\ref{lem:tool-bousquet}, applied to the bounded class
\(\mathcal F_0\), adds \(C_R\sqrt{u/n}+C_Ru/n\). Hence, with probability at
least \(1-2e^{-u}\),
\[
    \sup_{h\in\sH^\circ}|\widehat\Lambda_\nu(h)-\Lambda_\nu(h)|
    \le
    C_R\left\{\mathfrak r_{n,\rm fit}+\sqrt{\frac{u}{n}}+\frac{u}{n}\right\}.
\]
Taking \(u=\log(2/\delta)\) and using \(u/n\le u^{1/2}/n^{1/2}\) after
enlarging the constant when \(u\le n\) gives the stated uniform normalizer
bound. If \(u>n\), the deterministic bound
\(\sup_{h\in\sH^\circ}|\widehat\Lambda_\nu(h)-\Lambda_\nu(h)|\le 2R\) gives
the same conclusion after another enlargement of the constant.
\end{proof}

\begin{lemma}[Empirical normalization is a scalar KL perturbation]
\label{lem:kl-scalar-normalization}
Let \(h\in\sH^\circ\), \(\omega_h=e^{h-\Lambda_\nu(h)}\), and
\(\widehat\omega_h=e^{h-\widehat\Lambda_\nu(h)}\). If
\(\ell_h=\widehat\Lambda_\nu(h)-\Lambda_\nu(h)\), then
\[
    \widehat\omega_h=e^{-\ell_h}\omega_h
\]
and
\[
    D_\nu^{\rm gen}(\widehat\omega_h\|\omegastar)
    =
    e^{-\ell_h}D_\nu(\omega_h\|\omegastar)
    +
    e^{-\ell_h}(-\ell_h)-e^{-\ell_h}+1.
\]
Consequently, because \(|\ell_h|\le2R\),
\[
    D_\nu^{\rm gen}(\widehat\omega_h\|\omegastar)
    \le
    e^{|\ell_h|}D_\nu(\omega_h\|\omegastar)
    +
    C_R\ell_h^2,
    \qquad
    |e^{-\ell_h}-1|^2\le C_R\ell_h^2 .
\]
\end{lemma}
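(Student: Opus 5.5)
The identity $\widehat\omega_h=e^{-\ell_h}\omega_h$ is immediate: $h-\widehat\Lambda_\nu(h)=h-\Lambda_\nu(h)-\ell_h$. For the divergence identity I will substitute this into the definition of $D_\nu^{\rm gen}$. Write $c=e^{-\ell_h}$ and use $E_\nu\omega_h=1$ and $E_\nu\omegastar=1$. Then
\[
    E_\nu\Bigl[c\,\omega_h\log\frac{c\,\omega_h}{\omegastar}\Bigr]
    =
    c\,D_\nu(\omega_h\|\omegastar)+c\log c ,
\]
and the linear terms contribute $-E_\nu(c\,\omega_h)+E_\nu\omegastar=-c+1$. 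Since $c\log c=e^{-\ell_h}(-\ell_h)$, this gives the displayed equality. One technical point: $D_\nu(\omega_h\|\omegastar)$ may be infinite, in which case both sides are $+\infty$. Otherwise all terms are integrable, because $\omega_h$ is bounded and $\omega_h\log\omega_h$ is bounded.

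Next I bound $\ell_h$. Under Condition~\ref{ass:kl-bounded} with the centered versions fixed earlier, $\|h\|_\infty\le R$. Hence both $E_\nu e^h$ and $P_{n,X}e^h$ lie in $[e^{-R},e^R]$. So $\widehat\Lambda_\nu(h)$ and $\Lambda_\nu(h)$ lie in $[-R,R]$, which gives $|\ell_h|\le 2R$.

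For the first inequality, the prefactor satisfies $e^{-\ell_h}\le e^{|\ell_h|}$, and $D_\nu\ge0$. The remainder is $\phi(\ell)=1-e^{-\ell}(1+\ell)$. This satisfies $\phi(0)=0$ and $\phi'(0)=0$, and it is smooth. Taylor's theorem therefore gives $|\phi(\ell)|\le \tfrac12\sup_{|s|\le 2R}|\phi''(s)|\,\ell^2$. Since $\phi''(s)=e^{-s}(1-s)$, this supremum is at most $e^{2R}(1+2R)$, a constant $C_R$. Similarly, $|e^{-\ell}-1|\le e^{2R}|\ell|$ by the mean value theorem, so $|e^{-\ell_h}-1|^2\le e^{4R}\ell_h^2$. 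Taking $C_R$ to be the larger of the two constants finishes the proof.

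There is no real obstacle here. The only care needed is the infinite-divergence case and checking that $|\ell_h|\le 2R$ uses the bounded centered representatives, which was arranged at the start of the appendix.
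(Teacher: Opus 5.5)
Your proposal is correct and follows the paper's proof. The paper also substitutes $\widehat\omega_h=e^{-\ell_h}\omega_h$ into $D_\nu^{\rm gen}$, uses $E_\nu\omega_h=E_\nu\omegastar=1$, and applies Taylor's theorem on $[-2R,2R]$ to the scalar remainders; your version is slightly more careful, since it treats the infinite-divergence case, checks $|\ell_h|\le 2R$, and uses the mean value theorem for $e^{-\ell}-1$ (the paper's remark that this function has zero first derivative at the origin is inaccurate).
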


\begin{proof}
The identity \(\widehat\omega_h=e^{-\ell_h}\omega_h\) follows directly from the
definitions. Since
\(\int\omega_h\,\dd\nu=\int\omegastar\,\dd\nu=1\),
\[
\begin{aligned}
    D_\nu^{\rm gen}(e^{-\ell_h}\omega_h\|\omegastar)
    &=
    \int
    e^{-\ell_h}\omega_h
    \log\frac{e^{-\ell_h}\omega_h}{\omegastar}\,\dd\nu
    -e^{-\ell_h}+1 \\
    &=
    e^{-\ell_h}D_\nu(\omega_h\|\omegastar)
    +
    e^{-\ell_h}(-\ell_h)-e^{-\ell_h}+1 .
\end{aligned}
\]
The functions \(u\mapsto e^{-u}(-u)-e^{-u}+1\) and
\(u\mapsto e^{-u}-1\) have first derivative zero and finite second derivative on
\([-2R,2R]\). Taylor's theorem on this compact interval gives the two bounds.
\end{proof}

\begin{lemma}[One-step KL-projected \textsc{FORE} recursion]
\label{lem:kl-projected-recursion}
Assume Conditions~\ref{ass:overlap}, \ref{ass:kl-class},
\ref{ass:kl-population-integrability}, and~\ref{ass:kl-bounded}. Then there is
a finite constant \(C_{\rm app}\), depending only on \(R\), such that, for
every \(\omega\in\sW\),
\[
    D_\nu\left(
        \Pi_{\sW}^{\rm KL}\Bpig\omega
        \middle\|\omegastar
    \right)
    \le
    \gamma D_\nu(\omega\|\omegastar)
    +
    C_{\rm app}\varepsilon_{\rm KL}.
\]
\end{lemma}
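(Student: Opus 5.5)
This is the one-step inequality \eqref{eq:kl-projected-one-step} from the proof of Theorem~\ref{thm:kl-fori-realizable}, now stated on its own. I would prove it by composing two earlier results: the exact-operator contraction of Lemma~\ref{lem:adjoint-bellman-kl-contraction} and the external-target projection comparison of Lemma~\ref{lem:kl-projection-violation}. Fix \(\omega\in\sW\) and set \(u_\omega=\Bpig\omega\). By the moment identity \eqref{eq:adjoint-flow} and \(E_\nu\omega=1\), \(u_\omega\in\Delta_\nu\).

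The first step checks the hypothesis of Lemma~\ref{lem:kl-projection-violation} that \(E_\nu\{u_\omega\log_+u_\omega\}<\infty\).
\begin{itemize}
\item Choose a centered representative \(h\in\sH^\circ\) with \(\omega=\omega_h\). Then \(\|h\|_\infty\le R\) on \(\mathcal X_R\), so \(|\Lambda_\nu(h)|\le R\) and \(\omega\le e^{2R}\).
\item Set \(r_\omega=\dd\{(\omega\nu)\Ppi\}/\dd\nu\). Since \((\omega\nu)\Ppi\le e^{2R}\nup\) as measures, \(r_\omega\le e^{2R}\cpi\) \(\nu\)-a.e.
\item Since \(t\mapsto t\log_+t\) is convex and nondecreasing, \(u_\omega\log_+u_\omega\) is bounded by a constant multiple (depending on \(R\)) of \(1+\omegazero\log_+\omegazero+\cpi\log_+\cpi\).
\item By Condition~\ref{ass:kl-population-integrability}, the right-hand side is integrable. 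This uses that \(t\log t\ge-1/e\) and that \(\omegazero,\cpi\) integrate to one, so finiteness of \(E_\nu\{\omegazero\log\omegazero\}\) gives finiteness of the \(\log_+\) version.
\end{itemize}
Lemma~\ref{lem:kl-projection-inequality} then guarantees that the projection \(\Pi_{\sW}^{\rm KL}u_\omega\) exists.

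The second step applies Lemma~\ref{lem:kl-projection-violation} with \(u=u_\omega\) and external target \(w=\omegastar\). Condition~\ref{ass:kl-population-integrability} gives \(\omegastar>0\) \(\nu\)-a.e., and \(\omegastar\in\Delta_\nu\) because \(d_{\pi,\gamma}\ll\nu\) (Lemma~\ref{lem:ac-propagation}) and \(d_{\pi,\gamma}\) is a probability measure. For every \(v\in\sW\) this yields
\[
D_\nu(\Pi_{\sW}^{\rm KL}\Bpig\omega\|\omegastar)\le D_\nu(\Bpig\omega\|\omegastar)+e^{4R}D_\nu(v\|\omegastar).
\]
Taking the infimum over \(v\in\sW\) turns the last term into \(e^{4R}\varepsilon_{\rm KL}\).

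The third step controls \(D_\nu(\Bpig\omega\|\omegastar)\). Since \(\omegastar=\Bpig\omegastar\) by \eqref{eq:adjoint-bellman-fixed-point}, Lemma~\ref{lem:adjoint-bellman-kl-contraction} applied with \(\widetilde\omega=\omegastar\) gives \(D_\nu(\Bpig\omega\|\omegastar)\le\gamma D_\nu(\omega\|\omegastar)\). If \(D_\nu(\omega\|\omegastar)=\infty\), the claim holds trivially. Combining the two displays gives the statement with \(C_{\rm app}=e^{4R}\).

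The only delicate point is the integrability check in the first step. It is what licenses Lemma~\ref{lem:kl-projection-violation}, and it depends on the uniform bound \(\omega\le e^{2R}\) coming from Condition~\ref{ass:kl-bounded}. Everything else is a direct citation of earlier lemmas.
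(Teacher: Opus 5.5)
Your proposal is correct and is essentially the paper's proof. The paper likewise verifies $E_\nu\{u_\omega\log_+u_\omega\}<\infty$ through $\omega\le e^{2R}$ and $r_\omega\le e^{2R}\cpi$, as in the proof of Theorem~\ref{thm:kl-fori-realizable}. It then applies Lemma~\ref{lem:kl-projection-violation} with external target $w=\omegastar$, takes the infimum over $v\in\sW$, and substitutes $D_\nu(\Bpig\omega\|\omegastar)\le\gamma D_\nu(\omega\|\omegastar)$ from Lemma~\ref{lem:adjoint-bellman-kl-contraction}, arriving at the same constant $C_{\rm app}=e^{4R}$.
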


\begin{proof}
Lemma~\ref{lem:adjoint-bellman-kl-contraction}, applied with
\(\widetilde\omega=\omegastar\) and using
\(\Bpig\omegastar=\omegastar\), gives
\[
    D_\nu(\Bpig\omega\|\omegastar)
    \le
    \gamma D_\nu(\omega\|\omegastar)
\]
under Condition~\ref{ass:overlap}. As in the proof of
Theorem~\ref{thm:kl-fori-realizable}, \(u_\omega=\Bpig\omega\) satisfies
\(E_\nu\{u_\omega\log_+u_\omega\}<\infty\). Lemma~\ref{lem:kl-projection-violation}
therefore gives, after taking the infimum over \(v\in\sW\),
\[
    D_\nu\left(
        \Pi_{\sW}^{\rm KL}\Bpig\omega
        \middle\|\omegastar
    \right)
    \le
    D_\nu(\Bpig\omega\|\omegastar)
    +
    e^{4R}\varepsilon_{\rm KL},
\]
where \(e^{4R}\) depends only on \(R\). Substituting the adjoint Bellman
contraction bound into this projection comparison proves the lemma.
\end{proof}

\begin{lemma}[Uniform fitted-loss excess risk]
\label{lem:kl-erm-excess}
On the event in Lemma~\ref{lem:kl-uniform-process}, the exact empirical
minimizer
\(\widehat h_{k+1}\in\argmin_{h\in\sH^\circ}\widehat L_{\widehat\omega^{(k)}}(h)\)
obeys, for every \(k=0,\ldots,K-1\) such that
\(\omega_{\rm p}^{(k)}\in\sW\),
\[
    L_{\omega_{\rm p}^{(k)}}(\widehat h_{k+1})
    -
    \inf_{h\in\sH^\circ}L_{\omega_{\rm p}^{(k)}}(h)
    \le
    C_{\rm env}\log(en)
    a_{n,\rm fit}(\delta),
\]
where
\(\omega_{\rm p}^{(k)}=\widehat\omega^{(k)}/E_\nu\widehat\omega^{(k)}\) and
\(C_{\rm env}\) has the polynomial dependence stated in
Lemma~\ref{lem:kl-uniform-process}.
\end{lemma}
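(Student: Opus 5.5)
The plan is the standard basic-inequality argument for exact ERM, combined with the uniform bound of Lemma~\ref{lem:kl-uniform-process}. The one point needing care is that the algorithm feeds the empirically normalized iterate \(\widehat\omega^{(k)}\) into the loss, whereas Lemma~\ref{lem:kl-uniform-process} is indexed by inputs \(f\in\sW\), which are normalized under \(\nu\). I would resolve this with two invariances of the empirical loss, and then run the usual comparison.

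\emph{Step 1: invariances.} First, the self-normalized successor average is invariant under positive rescaling of its input: \(P_{n,cf}^{+}g=P_{n,f}^{+}g\) for every constant \(c>0\). Taking \(c=1/E_\nu\widehat\omega^{(k)}\) gives
\[
    \widehat L_{\widehat\omega^{(k)}}(h)=\widehat L_{\omega_{\rm p}^{(k)}}(h)
    \qquad\text{for all } h .
\]
Hence \(\widehat h_{k+1}\) is an exact empirical minimizer of \(\widehat L_f\) with \(f:=\omega_{\rm p}^{(k)}\in\sW\). Second, \(\widehat L_f(h+c)=\widehat L_f(h)\) holds, as already noted before Lemma~\ref{lem:kl-denominator-process}. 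The population loss is likewise invariant under constants, because \(\Lambda_\nu(h+c)=\Lambda_\nu(h)+c\) and the two linear terms together carry total weight \((1-\gamma)+\gamma E_\nu f=1\). So a minimizer over \(\sH\) may be replaced by its \(\nu\)-centered version in \(\sH^\circ\) without changing either loss. This justifies writing the argmin over \(\sH^\circ\), and it means \(\omega_{\widehat h_{k+1}}\) is unaffected by the recentering.

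\emph{Step 2: population minimizer.} The minimizer \(h_f^\star\) of \(L_f\) over \(\sH^\circ\) exists. This follows from compactness of \(\sH^\circ\) in \(L^2(\nu)\) and continuity of \(L_f\), exactly as in the proof of Lemma~\ref{lem:kl-projection-inequality}; the boundedness of \(\sH^\circ\) together with Condition~\ref{ass:fitted-kl-coverage-bounded} makes the linear terms continuous. Therefore
\[
    \inf_{h\in\sH^\circ}L_f(h)=L_f(h_f^\star)
    \qquad\text{and}\qquad
    \Delta_f(\widehat h_{k+1})=L_f(\widehat h_{k+1})-\inf_{h\in\sH^\circ}L_f(h).
\]

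\emph{Step 3: basic inequality.} Write \(\mathbb Z_f(h)\) for the centered process in Lemma~\ref{lem:kl-uniform-process}. Then
\[
    \Delta_f(\widehat h_{k+1})
    =
    \{\widehat L_f(\widehat h_{k+1})-\widehat L_f(h_f^\star)\}
    -\mathbb Z_f(\widehat h_{k+1})
    \le
    -\mathbb Z_f(\widehat h_{k+1}),
\]
where the inequality uses the empirical optimality of \(\widehat h_{k+1}\). On the event of Lemma~\ref{lem:kl-uniform-process}, the bound holds simultaneously for all \(f\in\sW\) and \(h\in\sH^\circ\). It therefore applies at the data-dependent pair \((\omega_{\rm p}^{(k)},\widehat h_{k+1})\), with no conditioning and no sample splitting, giving
\[
    \Delta_f(\widehat h_{k+1})
    \le
    \tfrac14\Delta_f(\widehat h_{k+1})
    +C_{\rm env}\log(en)\,a_{n,\rm fit}(\delta).
\]
Rearranging yields \(\Delta_f(\widehat h_{k+1})\le\tfrac43 C_{\rm env}\log(en)\,a_{n,\rm fit}(\delta)\). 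Absorbing the factor \(4/3\) into \(C_{\rm env}\) preserves the polynomial dependence, and the same event serves every \(k=0,\ldots,K-1\) at once.

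The only real obstacle is Step 1: confirming that the empirical input \(\widehat\omega^{(k)}\), which lies outside \(\sW\) because it is normalized empirically rather than under \(\nu\), may be replaced by \(\omega_{\rm p}^{(k)}\in\sW\). Once that scale invariance is checked, the rest is the routine comparison above.
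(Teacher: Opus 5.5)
Your proposal is correct and follows the paper's proof essentially step for step. Both arguments use the scale invariance of the self-normalized successor average to replace \(\widehat\omega^{(k)}\) by \(\omega_{\rm p}^{(k)}\in\sW\), then the exact-ERM basic inequality against \(h_f^\star\), then the uniform bound of Lemma~\ref{lem:kl-uniform-process} evaluated at the data-dependent pair, and finally rearrangement. Your additional checks, constant-shift invariance and attainment of the population minimizer, are left implicit in the paper but are correct and harmless.
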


\begin{proof}
Fix \(k\), write \(f=\omega_{\rm p}^{(k)}\),
\(\widehat h=\widehat h_{k+1}\), and \(h^\star=h_f^\star\). Exact ERM gives
\(\widehat L_{\widehat\omega^{(k)}}(\widehat h)
-\widehat L_{\widehat\omega^{(k)}}(h^\star)\le0\). By self-normalization of
the input weights, \(\widehat L_{\widehat\omega^{(k)}}=\widehat L_f\), so
\(\widehat L_f(\widehat h)-\widehat L_f(h^\star)\le0\).
No conditioning on \(\widehat\omega^{(k)}\) is required here: the event in
Lemma~\ref{lem:kl-uniform-process} holds simultaneously for every
deterministic \(f\in\sW\), and the theorem proof verifies that the random
input \(\omega_{\rm p}^{(k)}\) belongs to \(\sW\).
Thus
\[
\begin{aligned}
    \Delta_f(\widehat h)
    &\le
    \left|
        \{\widehat L_f(\widehat h)-L_f(\widehat h)\}
        -
        \{\widehat L_f(h^\star)-L_f(h^\star)\}
    \right| \\
    &\le
    \frac14\Delta_f(\widehat h)
    +
    C_{\rm env}\log(en)a_{n,\rm fit}(\delta).
\end{aligned}
\]
Moving the first term to the left and absorbing the numerical factor into
\(C_{\rm env}\)
proves the stated excess-loss bound.
\end{proof}

\begin{lemma}[Lower-envelope comparison]
\label{lem:kl-lower-tail-comparison}
Assume Conditions~\ref{ass:kl-population-integrability} and~\ref{ass:fitted-kl-lower-tail}.
Let \(M<\infty\) and \(\tau\in(0,1]\), and put
\(\omega_{\star,\tau}=\omegastar\vee\tau\). If \(0\le a\le M\)
\(\nu\)-almost everywhere, then
\[
    \left|
    D_\nu^{\rm gen}(a\|\omegastar)
    -
    D_\nu^{\rm gen}(a\|\omega_{\star,\tau})
    \right|
    \le
    A\left(1+\frac{M}{\alpha}\right)\tau^\alpha .
\]
\end{lemma}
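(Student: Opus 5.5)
The plan is to compare the two generalized-KL integrands pointwise, observe that they differ only on the small set \(\{\omegastar<\tau\}\), and control that difference with Condition~\ref{ass:fitted-kl-lower-tail} through a layer-cake integral. Write \(\omega_{\star,\tau}=\omegastar\vee\tau\). By Condition~\ref{ass:kl-population-integrability}, \(\omegastar>0\) \(\nu\)-a.e. The integrand of \(D_\nu^{\rm gen}(a\|g)\) is \(a\log a-a\log g-a+g\), with the usual convention \(0\log 0=0\). The term \(a\log a\) is bounded because \(0\le a\le M\), and it is common to both divergences.

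Subtracting the two integrands cancels \(a\log a\) and \(-a\). The remaining pointwise difference is
\[
    a\log\frac{\omega_{\star,\tau}}{\omegastar}+\omegastar-\omega_{\star,\tau}.
\]
This vanishes on \(\{\omegastar\ge\tau\}\). On \(\{0<\omegastar<\tau\}\) it equals \(a\log(\tau/\omegastar)-(\tau-\omegastar)\). I will show below that this difference is \(\nu\)-integrable. Consequently the two divergences are either both finite or both infinite; in the finite case,
\[
    D_\nu^{\rm gen}(a\|\omegastar)-D_\nu^{\rm gen}(a\|\omega_{\star,\tau})
    =
    \int_{\{\omegastar<\tau\}}\left\{a\log\frac{\tau}{\omegastar}-(\tau-\omegastar)\right\}\dd\nu .
\]
The triangle inequality and \(a\le M\) then bound the absolute value by
\[
    \tau\,\nu\{0<\omegastar<\tau\}
    +
    M\int_{\{\omegastar<\tau\}}\log\frac{\tau}{\omegastar}\,\dd\nu .
\]

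For the first term, Condition~\ref{ass:fitted-kl-lower-tail} with \(t=\tau\le1\) gives at most \(A\tau^{1+\alpha}\le A\tau^\alpha\).

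For the second term, I use the layer-cake formula for the nonnegative function \(\log(\tau/\omegastar)\mathbf 1\{\omegastar<\tau\}\):
\[
    \int_0^\infty\nu\{0<\omegastar<\tau e^{-s}\}\,\dd s
    \le
    \int_0^\infty A\tau^\alpha e^{-\alpha s}\,\dd s
    =
    \frac{A\tau^\alpha}{\alpha}.
\]
The inequality applies the lower-tail condition at \(t=\tau e^{-s}\in(0,1]\). Summing the two bounds gives \(A(1+M/\alpha)\tau^\alpha\). The same computation shows the difference integrand is integrable, which justifies the subtraction above.

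The only delicate point is this bookkeeping about finiteness, namely that one may not subtract two possibly infinite integrals. I would handle it by working directly with the integrable pointwise difference rather than with the two divergences separately. If either divergence is infinite, both are, and the inequality is read in the trivial sense.
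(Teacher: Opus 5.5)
Your proof is correct and follows the paper's argument essentially line by line: restrict the integrand difference to \(\{\omegastar<\tau\}\), bound the mass term by \(\tau\,\nu\{0<\omegastar<\tau\}\le A\tau^\alpha\), and bound the logarithmic term by the layer-cake integral \(A\tau^\alpha/\alpha\) using the lower-tail condition at \(t=\tau e^{-s}\). Your finiteness bookkeeping is a detail the paper leaves implicit. The ``both infinite'' case you guard against never actually occurs: \(0\le a\le M\), \(\omega_{\star,\tau}\ge\tau\), and \(E_\nu\omega_{\star,\tau}\le 1+\tau\) already make \(D_\nu^{\rm gen}(a\|\omega_{\star,\tau})\) finite, so your integrability argument shows both divergences are always finite.
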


\begin{proof}
Since \(\omega_{\star,\tau}=\omegastar\) on \(\{\omegastar\ge\tau\}\),
\begin{equation}
\label{eq:kl-lower-envelope-divergence}
    D_\nu^{\rm gen}(a\|\omegastar)
    -
    D_\nu^{\rm gen}(a\|\omega_{\star,\tau})
    =
    \int_{\{\omegastar<\tau\}}
    \left\{
        a\log\frac{\tau}{\omegastar}
        +\omegastar-\tau
    \right\}\,\dd\nu .
\end{equation}
Condition~\ref{ass:kl-population-integrability} gives \(\omegastar>0\) \(\nu\)-almost
everywhere. Therefore, Tonelli's theorem and
Condition~\ref{ass:fitted-kl-lower-tail} give
\begin{equation}
\label{eq:kl-lower-envelope-log}
\begin{aligned}
    \int_{\{\omegastar<\tau\}}
    \log\frac{\tau}{\omegastar}\,\dd\nu
    &=
    \int_0^\infty
    \nu\{0<\omegastar<\tau e^{-s}\}\,\dd s  \\
    &\le
    A\tau^\alpha\int_0^\infty e^{-\alpha s}\,\dd s
    =
    A\alpha^{-1}\tau^\alpha .
\end{aligned}
\end{equation}
Also,
\begin{equation}
\label{eq:kl-lower-envelope-mass}
    \int_{\{\omegastar<\tau\}}(\tau-\omegastar)\,\dd\nu
    \le
    \tau\nu\{0<\omegastar<\tau\}
    \le
    A\tau^{\alpha+1}
    \le
    A\tau^\alpha .
\end{equation}
Taking absolute values in \eqref{eq:kl-lower-envelope-divergence} and applying
\eqref{eq:kl-lower-envelope-log} and \eqref{eq:kl-lower-envelope-mass},
together with \(a\le M\), proves the lemma.
\end{proof}

\begin{lemma}[Approximate KL projection with a lower-envelope target]
\label{lem:kl-projection-perturbation}
Assume Conditions~\ref{ass:overlap}, \ref{ass:kl-class},
\ref{ass:kl-population-integrability}, and~\ref{ass:kl-bounded}. Fix \(\tau\in(0,1]\), put
\(\omega_{\star,\tau}=\omegastar\vee\tau\), fix \(\omega\in\sW\), and let
\(\widetilde h\in\sH^\circ\) and \(\widetilde\omega=\omega_{\widetilde h}\). If
\[
    L_\omega(\widetilde h)-\inf_{h\in\sH^\circ}L_\omega(h)\le\Delta,
\]
then, for every \(\lambda>0\),
\[
    D_\nu^{\rm gen}(\widetilde\omega\|\omega_{\star,\tau})
    \le
    (1+\lambda)D_\nu^{\rm gen}(\bar u_\omega\|\omega_{\star,\tau})
    +
    C_{\rm pert}\left\{
        1+\lambda^{-1}\left(1+\log\frac1\tau\right)
    \right\}\Delta,
\]
where \(C_{\rm pert}<\infty\) depends only on \(R\).
\end{lemma}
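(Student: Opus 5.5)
The plan is a three-point decomposition around the exact projection \(\bar u_\omega=\omega_{h_\omega^\star}\). The target \(w:=\omega_{\star,\tau}\) need not integrate to one, but \(\widetilde\omega\) and \(\bar u_\omega\) both do. So the \(-a+w\) parts of the generalized KL cancel in the difference, and I will write
\[
    D_\nu^{\rm gen}(\widetilde\omega\|w)-D_\nu^{\rm gen}(\bar u_\omega\|w)
    =
    D_\nu(\widetilde\omega\|\bar u_\omega)
    +
    \int(\widetilde\omega-\bar u_\omega)\log\frac{\bar u_\omega}{w}\,\dd\nu .
\]
The task then reduces to showing that the first term is \(O(\Delta)\) and that the cross term is at most \(\lambda D_\nu^{\rm gen}(\bar u_\omega\|w)+C\lambda^{-1}(1+\log(1/\tau))\Delta\).

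\emph{First term.} Set \(g=\widetilde h-h_\omega^\star\). The proof of Lemma~\ref{lem:kl-loss-curvature}, applied with \(f=\omega\), gives \(\|g\|_{L^2(\nu)}^2\le C_{\rm curv}\Delta_\omega(\widetilde h)\le C_{\rm curv}\Delta\). Since \(\widetilde h\) and \(h_\omega^\star\) lie in \(\sH^\circ\), they are bounded by \(R\). The same second-order Taylor representation of \(t\mapsto D_\nu(\omega_{h_t}\|\cdot)\) along the segment, now bounded from above, gives
\[
    D_\nu(\widetilde\omega\|\bar u_\omega)\le C_R\|g\|_{L^2(\nu)}^2\le C_R\Delta .
\]
The same bounds give the weighted-\(L^2\) control
\[
    \int(\widetilde\omega-\bar u_\omega)^2/\bar u_\omega\,\dd\nu\le C_R\|g\|^2_{L^2(\nu)}\le C_R\Delta,
\]
using \(|e^a-e^b|\le e^{R}|a-b|\), the Lipschitz property of \(\Lambda_\nu\), and \(\bar u_\omega\ge e^{-2R}\).

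\emph{Cross term.} By Cauchy--Schwarz with weight \(\bar u_\omega\), the cross term is at most
\[
    \Big\{\int\frac{(\widetilde\omega-\bar u_\omega)^2}{\bar u_\omega}\,\dd\nu\Big\}^{1/2}
    \Big\{\int\bar u_\omega\log^2\frac{\bar u_\omega}{w}\,\dd\nu\Big\}^{1/2}.
\]
The first factor is at most \((C_R\Delta)^{1/2}\). For the second factor, set \(x=\bar u_\omega/w\) and \(\varphi(x)=x\log x-x+1\), so that \(D_\nu^{\rm gen}(\bar u_\omega\|w)=\int w\varphi(x)\,\dd\nu\).

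The key scalar inequality I would prove is
\[
    x\log^2x\le C(1+\log B)\,\varphi(x)\qquad\text{for }0\le x\le B .
\]
I would check it in three ranges:
\begin{itemize}
\item near \(x=1\), both sides behave like \((x-1)^2\) up to constants;
\item for \(x\to0\), the left side tends to \(0\) while \(\varphi\to1\);
\item for large \(x\), the ratio of the two sides is of order \(\log x\le\log B\).
\end{itemize}
Here \(\bar u_\omega\le e^{2R}\) and \(w\ge\tau\), so \(B=e^{2R}/\tau\) and \(\log B\le 2R+\log(1/\tau)\). Multiplying by \(w\) and integrating gives
\[
    \int\bar u_\omega\log^2(\bar u_\omega/w)\,\dd\nu\le C_R(1+\log(1/\tau))\,D_\nu^{\rm gen}(\bar u_\omega\|w).
\]
Young's inequality \(\sqrt{ab}\le\lambda a+b/(4\lambda)\) then bounds the cross term by
\[
    \lambda D_\nu^{\rm gen}(\bar u_\omega\|w)+C_R\lambda^{-1}(1+\log(1/\tau))\Delta .
\]
Adding the first-term bound \(C_R\Delta\) yields the lemma with \(C_{\rm pert}\) depending only on \(R\).

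The main obstacle is the scalar inequality \(x\log^2x\lesssim(1+\log B)\varphi(x)\) with the correct logarithmic dependence on the envelope. It is the only place where \(\tau\) enters, and it is why the lower envelope \(\omegastar\vee\tau\) is needed rather than \(\omegastar\) itself. I would verify it by studying \(\sup_{0<x\le B}x\log^2x/\varphi(x)\) on the three ranges \((0,1/2]\), \([1/2,2]\), and \([2,B]\).
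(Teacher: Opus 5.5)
Your proposal is correct and follows the paper's proof essentially step for step. Both use the same three-point identity around \(\bar u_\omega\), the same weighted Cauchy--Schwarz bound on the cross term, the same scalar inequality \(r(\log r)^2\le C_R\{1+\log(1/\tau)\}\{r\log r-r+1\}\) on \([0,e^{2R}/\tau]\), and Young's inequality. The only difference is how you get \(D_\nu(\widetilde\omega\|\bar u_\omega)+\int(\widetilde\omega-\bar u_\omega)^2/\bar u_\omega\,\dd\nu\le C_R\Delta\): you route it through the \(L^2(\nu)\) curvature bound in the proof of Lemma~\ref{lem:kl-loss-curvature}, whose curvature part indeed does not use Condition~\ref{ass:fitted-kl-coverage-bounded}. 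The paper instead applies the Pythagorean inequality \(D_\nu(\bar u_\omega\|\widetilde\omega)\le\Delta\) directly and compares the three divergence integrands pointwise on \([e^{-4R},e^{4R}]\); since that curvature bound is itself derived from the same inequality, the two routes coincide.
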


\begin{proof}
Because \(L_\omega(h)\) differs from
\(D_\nu(u_\omega\|\omega_h)\) by an additive constant independent of \(h\),
\[
    D_\nu(u_\omega\|\widetilde\omega)
    -
    D_\nu(u_\omega\|\bar u_\omega)
    =
    L_\omega(\widetilde h)-L_\omega(h_\omega^\star)
    \le
    \Delta .
\]
By Lemma~\ref{lem:kl-projection-inequality}, with
\(u=u_\omega\), \(\bar u=\bar u_\omega\), and \(v=\widetilde\omega\),
\begin{equation}
\label{eq:kl-approx-projection-reverse}
    D_\nu(\bar u_\omega\|\widetilde\omega)
    \le
    D_\nu(u_\omega\|\widetilde\omega)
    -
    D_\nu(u_\omega\|\bar u_\omega)
    \le
    \Delta .
\end{equation}
Since \(\bar u_\omega,\widetilde\omega\in[e^{-2R},e^{2R}]\), the ratio
\(\widetilde\omega/\bar u_\omega\) lies in \([e^{-4R},e^{4R}]\). On this
compact interval, the functions
\((v-1)^2\), \(v\log v-v+1\), and \(-\log v+v-1\) all vanish only at
\(v=1\), have positive second derivative at \(v=1\), and are continuous away
from \(v=1\). Hence their ratios are bounded above and below by constants
depending only on \(R\). Applying this pointwise comparison with
\(v=\widetilde\omega/\bar u_\omega\),
\eqref{eq:kl-approx-projection-reverse} implies
\begin{equation}
\label{eq:kl-approx-projection-comparison}
    D_\nu(\widetilde\omega\|\bar u_\omega)
    +
    \int
    \frac{(\widetilde\omega-\bar u_\omega)^2}{\bar u_\omega}\,\dd\nu
    \le
    C_R\Delta .
\end{equation}
Let \(s=\omega_{\star,\tau}\). With \(r=\bar u_\omega/s\), we have
\(0\le r\le e^{2R}/\tau\). Let \(\phi(r)=r\log r-r+1\), with the conventions
\(0\log0=0\) and \(0(\log0)^2=0\). There is a finite constant \(C_R\), depending only on \(R\), such
that, for all \(0\le r\le e^{2R}/\tau\),
\[
    r(\log r)^2
    \le
    C_R\left(1+\log\frac1\tau\right)\phi(r).
\]
Hence
\begin{equation}
\label{eq:kl-approx-target-log-moment}
\begin{aligned}
    \int \bar u_\omega
    \left\{\log\frac{\bar u_\omega}{s}\right\}^2\,\dd\nu
    &=
    \int s r(\log r)^2\,\dd\nu \\
    &\le
    C_R\left(1+\log\frac1\tau\right)
    D_\nu^{\rm gen}(\bar u_\omega\|s).
\end{aligned}
\end{equation}
Using the identity
\begin{equation}
\label{eq:kl-approx-divergence-decomposition}
    D_\nu^{\rm gen}(\widetilde\omega\|s)
    =
    D_\nu^{\rm gen}(\bar u_\omega\|s)
    +
    D_\nu(\widetilde\omega\|\bar u_\omega)
    +
    \int(\widetilde\omega-\bar u_\omega)
    \log\frac{\bar u_\omega}{s}\,\dd\nu,
\end{equation}
By Cauchy--Schwarz, \eqref{eq:kl-approx-projection-comparison}, and
\eqref{eq:kl-approx-target-log-moment}, the cross term satisfies
\begin{equation}
\label{eq:kl-approx-cross-term}
\begin{aligned}
    \left|
        \int(\widetilde\omega-\bar u_\omega)
        \log\frac{\bar u_\omega}{s}\,\dd\nu
    \right|
    &\le
    \left\{
        \int\frac{(\widetilde\omega-\bar u_\omega)^2}{\bar u_\omega}\,\dd\nu
    \right\}^{1/2}
    \left\{
        \int \bar u_\omega
        \left(\log\frac{\bar u_\omega}{s}\right)^2
        \,\dd\nu
    \right\}^{1/2} \\
    &\le
    C_R
    \sqrt{
        \left(1+\log\frac1\tau\right)
        \Delta D_\nu^{\rm gen}(\bar u_\omega\|s)
    } .
\end{aligned}
\end{equation}
Combining \eqref{eq:kl-approx-divergence-decomposition},
\eqref{eq:kl-approx-projection-comparison}, and
\eqref{eq:kl-approx-cross-term}, and applying Young's inequality with
\(L_\tau=1+\log(1/\tau)\), gives
\[
    D_\nu^{\rm gen}(\widetilde\omega\|s)
    \le
    (1+\lambda)D_\nu^{\rm gen}(\bar u_\omega\|s)
    +
    C_R\{1+\lambda^{-1}L_\tau\}\Delta .
\]
This proves the stated inequality after enlarging \(C_R\).
\end{proof}

\begin{proof}[Proof of Theorem~\ref{thm:fitted-kl-fori}]
The proof separates empirical normalization from the population KL geometry.
Throughout the proof \(C_{\rm env}\) denotes a finite constant with the
polynomial dependence stated in Theorem~\ref{thm:fitted-kl-fori}; fixed
dependence on \(A\) and \(\alpha\) is absorbed into this constant. Set
\(\rho=(1+\gamma)/2\), \(L_n=\log(en)\), and write
\[
    a_n
    =
    L_na_{n,\rm fit}(\delta),
    \qquad
    A_{\rm lt}
    =
    A\left(1+\frac{e^{2R}}{\alpha}\right),
    \qquad
    \zeta_n
    =
    2R\wedge
    C_{\rm norm}
    \left\{
        \mathfrak r_{n,\rm fit}+
        \sqrt{\frac{\log(1/\delta)}{n}}
    \right\}.
\]
Apply Lemmas~\ref{lem:kl-uniform-process}
and~\ref{lem:kl-uniform-normalizer} with failure probabilities \(\delta/2\)
each. Since \(\log(2/\delta)\le \log(1/\delta)+\log2\), replacing
\(\delta\) by \(\delta/2\) only enlarges the universal constants multiplying
\(a_{n,\rm fit}(\delta)\). Work on the intersection of these two events, which
has probability at least \(1-\delta\). Both events are uniform over the
log-ratio class, so they may be evaluated at the random iterates constructed by
the algorithm; no union bound over \(k\) is needed. For each \(k\), define the
normalizing constant and the corresponding population-normalized ratio
\[
    \omega_{\rm p}^{(k)}
    =
    \frac{\widehat\omega^{(k)}}{E_\nu\widehat\omega^{(k)}},
    \qquad
    \widehat c_k
    =
    E_\nu\widehat\omega^{(k)} .
\]
Since \(\widehat\omega^{(0)}\equiv1\), we have
\(\omega_{\rm p}^{(0)}=\widehat\omega^{(0)}\) and \(\widehat c_0=1\).
Set \(\ell_0=0\).
For each fitted iterate, choose a centered representative \(\widehat h_k\). This
is valid because, for any constant \(c\), replacing \(h\) by \(h+c\) changes
neither \(\exp\{h-\Lambda_\nu(h)\}\) nor
\(\exp\{h-\widehat\Lambda_\nu(h)\}\); see
Appendix~\ref{app:kl-projected-proofs}.
Thus, for \(k\ge1\),
\[
    \omega_{\rm p}^{(k)}
    =
    \exp\{\widehat h_k-\Lambda_\nu(\widehat h_k)\}
    \in\sW ,
\]
while \(\omega_{\rm p}^{(0)}=1\in\sW\) because \(0\in\sH\). Hence all fitted
inputs belong to \(\sW\).
For \(1\le k\le K\),
\[
    \ell_k
    :=
    \widehat\Lambda_\nu(\widehat h_k)-\Lambda_\nu(\widehat h_k)
    =
    -\log\widehat c_k,
    \qquad
    \widehat\omega^{(k)}
    =
    e^{-\ell_k}\omega_{\rm p}^{(k)} ,
\]
so Lemma~\ref{lem:kl-uniform-normalizer} gives
\[
    \max_{1\le k\le K}
    |\ell_k|
    \le \zeta_n,
    \qquad
    \max_{1\le k\le K}
    \left|E_\nu\widehat\omega^{(k)}-1\right|
    \le C_{\rm norm}\zeta_n,
\]
where the second inequality follows from \(\widehat c_k=e^{-\ell_k}\) and
\(|e^u-1|\le C_R|u|\) on
\([-2R,2R]\). By Lemma~\ref{lem:kl-erm-excess}, for
\(k=0,\ldots,K-1\),
\[
    L_{\omega_{\rm p}^{(k)}}(\widehat h_{k+1})
    -
    \inf_{h\in\sH^\circ}L_{\omega_{\rm p}^{(k)}}(h)
    \le
    C_{\rm env}a_n .
\]
Fix \(\tau\in(0,1]\), put
\(\omega_{\star,\tau}=\omegastar\vee\tau\) and
\(L_\tau=1+\log(1/\tau)\). Applying
Lemma~\ref{lem:kl-projection-perturbation} with
\(\Delta=C_{\rm env}a_n\) gives, for every \(\lambda>0\),
\begin{equation}
\label{eq:fit-approx-projection-step}
    D_\nu^{\rm gen}(\omega_{\rm p}^{(k+1)}\|\omega_{\star,\tau})
    \le
    (1+\lambda)
    D_\nu^{\rm gen}\left(
        \Pi_{\sW}^{\rm KL}\Bpig\omega_{\rm p}^{(k)}
        \middle\|\omega_{\star,\tau}
    \right)
    +
    C_{\rm env}\{1+\lambda^{-1}L_\tau\}a_n .
\end{equation}
Since every element of \(\sW\) is bounded by \(e^{2R}\),
Lemma~\ref{lem:kl-lower-tail-comparison} gives
\begin{equation}
\label{eq:fit-lower-envelope-comparisons}
\begin{aligned}
    D_\nu(\omega_{\rm p}^{(k+1)}\|\omegastar)
    &\le
    D_\nu^{\rm gen}(\omega_{\rm p}^{(k+1)}\|\omega_{\star,\tau})
    +A_{\rm lt}\tau^\alpha,\\
    D_\nu^{\rm gen}\left(
        \Pi_{\sW}^{\rm KL}\Bpig\omega_{\rm p}^{(k)}
        \middle\|\omega_{\star,\tau}
    \right)
    &\le
    D_\nu\left(
        \Pi_{\sW}^{\rm KL}\Bpig\omega_{\rm p}^{(k)}
        \middle\|\omegastar
    \right)
    +A_{\rm lt}\tau^\alpha .
\end{aligned}
\end{equation}
By Lemma~\ref{lem:kl-projected-recursion},
\begin{equation}
\label{eq:fit-projected-recursion-step}
    D_\nu\left(
        \Pi_{\sW}^{\rm KL}\Bpig\omega_{\rm p}^{(k)}
        \middle\|\omegastar
    \right)
    \le
    \gamma D_\nu(\omega_{\rm p}^{(k)}\|\omegastar)
    +
    C_{\rm app}\varepsilon_{\rm KL}.
\end{equation}
Combining \eqref{eq:fit-approx-projection-step},
\eqref{eq:fit-lower-envelope-comparisons}, and
\eqref{eq:fit-projected-recursion-step} gives
\begin{equation}
\label{eq:fit-pre-rho-recursion}
\begin{aligned}
    D_\nu(\omega_{\rm p}^{(k+1)}\|\omegastar)
    &\le
    (1+\lambda)\gamma
    D_\nu(\omega_{\rm p}^{(k)}\|\omegastar)
    +(1+\lambda)C_{\rm app}\varepsilon_{\rm KL}\\
    &\qquad
    +C_{\rm env}\{1+\lambda^{-1}L_\tau\}a_n
    +C_{\rm env}(1+\lambda)A_{\rm lt}\tau^\alpha .
\end{aligned}
\end{equation}
Choose
\(\lambda_\rho=1\) if \(\gamma=0\), and otherwise choose
\[
    \lambda_\rho=
    1\wedge \frac{\rho-\gamma}{2\gamma}.
\]
Then \((1+\lambda_\rho)\gamma\le\rho\). Indeed, if
\(\lambda_\rho=(\rho-\gamma)/(2\gamma)\), then
\((1+\lambda_\rho)\gamma=(\rho+\gamma)/2\le\rho\); if
\(\lambda_\rho=1\), then \(\rho\ge3\gamma\), so
\((1+\lambda_\rho)\gamma=2\gamma\le\rho\). Moreover,
\begin{equation}
\label{eq:fit-lambda-bounds}
    1+\lambda_\rho\le 2,
    \qquad
    1+\lambda_\rho^{-1}\le \frac{C}{\rho-\gamma},
\end{equation}
with the same conclusion when \(\gamma=0\). Substituting
\(\lambda=\lambda_\rho\) in \eqref{eq:fit-pre-rho-recursion} and using
\eqref{eq:fit-lambda-bounds} gives
\begin{equation}
\label{eq:fit-pop-recursion}
    D_\nu(\omega_{\rm p}^{(k+1)}\|\omegastar)
    \le
    \rho D_\nu(\omega_{\rm p}^{(k)}\|\omegastar)
    +
    C_{\rm env}\varepsilon_{\rm KL}
    +
    \frac{C_{\rm env}}{\rho-\gamma}L_\tau a_n
    +
    C_{\rm env}A_{\rm lt}\tau^\alpha .
\end{equation}
Iterating \eqref{eq:fit-pop-recursion} gives
\begin{equation}
\label{eq:fit-pop-recursion-iterated}
    D_\nu(\omega_{\rm p}^{(K)}\|\omegastar)
    \le
    \rho^K D_\nu(\omega_{\rm p}^{(0)}\|\omegastar)
    +
    C_{\rm env}\frac{1-\rho^K}{1-\rho}\varepsilon_{\rm KL}
    +
    \frac{C_{\rm env}}{\rho-\gamma}
    \frac{1-\rho^K}{1-\rho}L_\tau a_n
    +
    \frac{C_{\rm env}}{1-\rho}A_{\rm lt}\tau^\alpha .
\end{equation}

For \(\widehat\omega^{(K)}=e^{-\ell_K}\omega_{\rm p}^{(K)}\),
Lemmas~\ref{lem:kl-uniform-normalizer} and
\ref{lem:kl-scalar-normalization} give
\begin{equation}
\label{eq:fit-final-normalization}
    D_\nu^{\rm gen}(\widehat\omega^{(K)}\|\omegastar)
    \le
    e^{\zeta_n}
    D_\nu(\omega_{\rm p}^{(K)}\|\omegastar)
    +
    C_R\zeta_n^2 .
\end{equation}
Since \(\zeta_n\le2R\) and
\begin{equation}
\label{eq:fit-zeta-square}
    \zeta_n^2
    \le
    C\left\{
        \mathfrak r_{n,\rm fit}^2+\frac{\log(1/\delta)}{n}
    \right\}
    \le
    Ca_n,
\end{equation}
substituting \eqref{eq:fit-pop-recursion-iterated} into
\eqref{eq:fit-final-normalization}, using \eqref{eq:fit-zeta-square} and
\(\omega_{\rm p}^{(0)}=\widehat\omega^{(0)}\), and enlarging \(C_{\rm env}\)
gives
\begin{equation}
\label{eq:fit-final-preoptimization}
    D_\nu^{\rm gen}(\widehat\omega^{(K)}\|\omegastar)
    \le
    C_{\rm env}\rho^K D_\nu^{\rm gen}(\widehat\omega^{(0)}\|\omegastar)
    +
    \frac{C_{\rm env}}{1-\rho}\varepsilon_{\rm KL}
    +
    \frac{C_{\rm env}}{(\rho-\gamma)(1-\rho)}L_\tau a_n
    +
    \frac{C_{\rm env}}{1-\rho}A_{\rm lt}\tau^\alpha .
\end{equation}
Since \(\rho-\gamma\le1\), the two lower-envelope terms in
\eqref{eq:fit-final-preoptimization} are bounded by
\begin{equation}
\label{eq:fit-lower-envelope-objective}
    \frac{C_{\rm env}}{(\rho-\gamma)(1-\rho)}
    \left\{L_\tau a_n+A_{\rm lt}\tau^\alpha\right\}.
\end{equation}
If \(A_{\rm lt}>a_n\), take
\(\tau=(a_n/A_{\rm lt})^{1/\alpha}\); otherwise take \(\tau=1\). Then
\[
    L_\tau a_n+A_{\rm lt}\tau^\alpha
    \le
    2a_n\left[
    1+\frac1\alpha
    \max\left\{0,\log\frac{A_{\rm lt}}{a_n}\right\}
    \right].
\]
Since \(a_n\ge\mathfrak r_{n,\rm fit}^2\ge n^{-1}\),
\[
    \max\left\{0,\log\frac{A_{\rm lt}}{a_n}\right\}
    \le
    \max\{0,\log(A_{\rm lt}n)\}.
\]
After absorbing the fixed dependence on \(A_{\rm lt}\) and \(\alpha\) into
\(C_{\rm env}\), the optimized lower-envelope bound is at most
\begin{equation}
\label{eq:fit-lower-envelope-optimized}
    C_{\rm env}\log^2(en)
    \left\{
        \mathfrak r_{n,\rm fit}^2
        +
        \frac{\log(1/\delta)}{n}
    \right\}.
\end{equation}
With \(\rho=(1+\gamma)/2\), we have
\((1-\rho)^{-1}=2(1-\gamma)^{-1}\) and
\(\{(\rho-\gamma)(1-\rho)\}^{-1}=4(1-\gamma)^{-2}\). Applying
\eqref{eq:fit-lower-envelope-optimized} to
\eqref{eq:fit-lower-envelope-objective} proves the stated generalized KL
bound.

It remains to record the constant dependence. Write \(C_{\rm proc}\) for the
constant produced by the uniform empirical-process argument. The bounds in
Lemmas~\ref{lem:kl-uniform-process}--\ref{lem:kl-projection-perturbation}
may then be summarized as
\[
\begin{aligned}
    C_{\rm sh}
    &\le (1+K_0+K_+)P_{\rm sh}(1+e^{2R}),\\
    C_{\rm proc}
    &\le P_{\rm proc}(1+C_{\rm sh}),\\
    C_{\rm norm}\vee C_{\rm pert}\vee C_{\rm app}
    &\le P_{\rm alg}(1+e^{2R}),
\end{aligned}
\]
where \(P_{\rm sh}\), \(P_{\rm proc}\), and \(P_{\rm alg}\) are fixed
polynomials with universal coefficients. Since
\(1+R\le2(1+e^{2R})\), finite products of these constants satisfy
\[
    C_{\rm env}
    \le
    C_0(A,\alpha)
    (1+K_0+K_+)^q(1+e^{2R})^p
\]
for universal finite exponents \(p,q\). These constants are independent of
\(n\), \(\delta\), and \(K\); the sample-size and horizon factors remain
explicit in the theorem.
\end{proof}

\begin{corollary}[Entropy-integral control of the fitted critical radius]
\label{cor:kl-entropy-integral}
Assume the conditions of Theorem~\ref{thm:fitted-kl-fori}. For
\(\epsilon>0\), let
\[
    \mathfrak H_{\sH}(\epsilon)
    =
    \sup_Q
    \log N\{\epsilon,\sH^\circ,L^2(Q)\},
    \qquad
    \mathcal J_{\sH}(r)
    =
    \int_0^r \sqrt{1+\mathfrak H_{\sH}(\epsilon)}\,\dd\epsilon,
\]
where the supremum is over probability distributions on the state--action
space. Suppose \(\mathcal J_{\sH}(2R)<\infty\). Define
\[
    \mathfrak r_{n,{\rm ent}}
    =
    n^{-1/2}
    \vee
    \inf\left\{r>0:
    \frac{\mathcal J_{\sH}(r)}{\sqrt n}
    \le
    r^2
    \right\}.
\]
Write
\[
    a_{n,{\rm ent}}(\delta)
    =
    \mathfrak r_{n,{\rm ent}}^2+\frac{\log(1/\delta)}{n}.
\]
Then the fitted critical radius in \eqref{eq:rad-critical-radius} satisfies
\(\mathfrak r_{n,\rm fit}\le C_R\mathfrak r_{n,{\rm ent}}\).
Consequently, with probability at least \(1-\delta\),
\begin{align*}
    D_\nu^{\rm gen}(\widehat\omega^{(K)}\|\omegastar)
    \le{}&
    C_{\rm env}\left(\frac{1+\gamma}{2}\right)^K
    D_\nu^{\rm gen}(\widehat\omega^{(0)}\|\omegastar)\\
    &+
    \frac{C_{\rm env}}{1-\gamma}\varepsilon_{\rm KL}\\
    &+
    \frac{C_{\rm env}}{(1-\gamma)^2}
    \log^2(en)a_{n,{\rm ent}}(\delta),
\end{align*}
where \(C_{\rm env}\) has the same dependencies as in
Theorem~\ref{thm:fitted-kl-fori}.
\end{corollary}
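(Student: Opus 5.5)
The corollary has two parts. First, we show that \(\mathfrak r_{n,\rm fit}\le C_R\mathfrak r_{n,{\rm ent}}\). Then we substitute this into Theorem~\ref{thm:fitted-kl-fori}. The second part is immediate: \(\mathfrak r_{n,\rm fit}^2\le C_R^2\mathfrak r_{n,{\rm ent}}^2\), so \(a_{n,\rm fit}(\delta)\le C_R^2 a_{n,{\rm ent}}(\delta)\), and the factor \(C_R^2\) is absorbed into \(C_{\rm env}\). That constant already depends polynomially on \(e^{2R}\). All the work is therefore in bounding each of the four local Rademacher complexities in \(\mathfrak C_n(r)\) by a constant times \(\mathcal J_{\sH}(C_R r)/\sqrt n\), after which we compare fixed points.

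\emph{Entropy of the four classes.} By Condition~\ref{ass:kl-bounded}, every element of \(\sH^\circ\) is bounded by \(R\). Since \(\mathcal H_\Delta=\sH^\circ-\sH^\circ\), we have \(N(2\epsilon,\mathcal H_\Delta,L^2(Q))\le N(\epsilon,\sH^\circ,L^2(Q))^2\) for every \(Q\). Hence \(\log N(\epsilon,\mathcal H_\Delta,L^2(Q))\le 2\mathfrak H_{\sH}(\epsilon/2)\). This is uniform in \(Q\), so it covers both \(\nu\) and \(\dinit\).

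For \(\mathcal G_\times\), write \(f=\omega_h=e^{h-\Lambda_\nu(h)}\) with \(h\in\sH^\circ\). The map \(h\mapsto\omega_h\) is Lipschitz in \(L^2(Q)\) with constant \(C_R\). To see this, split \(\omega_h-\omega_{h'}\) into the exponent difference and the normalizer difference. The exponent difference is bounded pointwise by \(e^{2R}|h-h'|\). The normalizer difference satisfies \(|\Lambda_\nu(h)-\Lambda_\nu(h')|\le C_R\|h-h'\|_{L^2(\nu)}\). This last bound is in \(\nu\), not in \(Q\); I handle it by covering the scalar normalizer separately with a one-dimensional grid on \([-R,R]\), which adds only \(\log(C_R/\epsilon)\) to the entropy.

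For a product \(f(x)h_\Delta(x^+)\), both factors are bounded by \(C_R\), so
\[
\|f h_\Delta-f'h_\Delta'\|_{L^2(Q)}\le C_R\bigl(\|f-f'\|_{L^2(Q_1)}+\|h_\Delta-h_\Delta'\|_{L^2(Q_2)}\bigr),
\]
where \(Q_1,Q_2\) are the marginals of \(Q\). It follows that
\[
\log N(\epsilon,\mathcal G_\times,L^2(Q))\lesssim \mathfrak H_{\sH}(\epsilon/C_R)+\log(C_R/\epsilon),
\]
uniformly over laws \(Q\) on pairs, which includes \(Q_{\nu,\Delta}\) and \(Q_{\nu,\pi}\).

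\emph{Dudley bound and fixed-point comparison.} Lemma~\ref{lem:tool-local-entropy} now gives
\[
\mathfrak C_n(r)\lesssim n^{-1/2}\int_0^r\sqrt{1+\mathfrak H_{\sH}(\epsilon/C_R)+\log(C_R/\epsilon)}\,d\epsilon .
\]
Substituting \(\epsilon=C_R\epsilon'\) turns the entropy part into \(C_R\mathcal J_{\sH}(r/C_R)\). The logarithmic part integrates to \(O(r\sqrt{\log(C_R/r)})\), and I absorb it by noting \(\mathcal J_{\sH}(r)\ge r\). The sharpest justification would be that the normalizer is determined by \(h\) itself, so no extra cover is really needed; failing that, a mild enlargement of constants suffices.

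Next, \(\mathcal J_{\sH}(s)/s\) is nonincreasing, because the integrand is decreasing in \(\epsilon\). With \(r=L\mathfrak r_{n,{\rm ent}}\) for large \(L\), this gives
\[
\mathcal J_{\sH}(r)/\sqrt n\le L\,\mathfrak r_{n,{\rm ent}}^2\le r^2/L ,
\]
so that \(\mathfrak C_n(r)\le r^2\) once \(L\) dominates the constants. By definition of the infimum, \(\mathfrak r_{n,\rm fit}\le L\mathfrak r_{n,{\rm ent}}\). The floor \(n^{-1/2}\) is common to both radii, so it causes no difficulty.

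The main obstacle is the \(\mathcal G_\times\) covering, in particular handling the \(\nu\)-normalizer uniformly over arbitrary \(Q\). I would address it first by reparametrizing \(\sW\) through \((h,\Lambda_\nu(h))\in\sH^\circ\times[-R,R]\), as described above.
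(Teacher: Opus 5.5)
Your architecture matches the paper's. Both arguments use the doubling bound $\log N(\epsilon,\mathcal H_\Delta,L^2(Q))\le 2\mathfrak H_{\sH}(\epsilon/2)$, a marginal-wise Lipschitz bound for the products $f(x)h_\Delta(x^+)$, Lemma~\ref{lem:tool-local-entropy}, and a fixed-point comparison based on monotonicity of $s\mapsto\mathcal J_{\sH}(s)/s$. Your direct use of $\mathcal J_{\sH}(\mathfrak r_{n,\rm ent})\le\sqrt n\,\mathfrak r_{n,\rm ent}^2$ holds because $\mathcal J_{\sH}$ is continuous, and it is a fine substitute for the paper's two-case argument with $t\le2\mathfrak r_{n,\rm ent}$.

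The gap is in the one place you depart from the paper. Covering $\Lambda_\nu(h)$ by a grid on $[-R,R]$ adds $\log(C_R/\epsilon)$ to the entropy of $\mathcal G_\times$, and $\mathcal J_{\sH}(r)\ge r$ does not absorb it. Indeed, $\int_0^r\{\log(C_R/\epsilon)\}^{1/2}\,\dd\epsilon\ge r\{\log(C_R/r)\}^{1/2}$, whose ratio to $r$ is unbounded as $r\downarrow0$. Meanwhile the radius you need, $L\,\mathfrak r_{n,\rm ent}$, goes to zero with $n$. No enlargement of constants repairs this; as written, the argument only yields $\mathfrak r_{n,\rm fit}\lesssim\mathfrak r_{n,\rm ent}\{\log(en)\}^{1/2}$. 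Convexity of $\sH^\circ$ could show that $\mathcal J_{\sH}$ itself grows like $r\{\log(1/r)\}^{1/2}$ near zero. Matching constants that depend only on $R$ would then require restricting the grid to the actual range of $\Lambda_\nu$ on $\sH^\circ$, and that step is not in your argument.

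What is missing is a way to make your remark that the normalizer is determined by $h$ rigorous, and it is short. Given a law $Q$ on pairs with first marginal $Q_1$, cover $\sH^\circ$ in $L^2(\bar Q)$ with $\bar Q=\tfrac12(Q_1+\nu)$. This is allowed because $\mathfrak H_{\sH}$ is a supremum over all probability measures. If $h,h'\in\sH^\circ$ satisfy $\|h-h'\|_{L^2(\bar Q)}\le\epsilon$, then they are within $\sqrt2\,\epsilon$ in both $L^2(Q_1)$ and $L^2(\nu)$. Hence $|\Lambda_\nu(h)-\Lambda_\nu(h')|\le C_R\epsilon$ and $\|\omega_h-\omega_{h'}\|_{L^2(Q_1)}\le C_R\epsilon$. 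This gives $\log N(\epsilon,\sW,L^2(Q_1))\le\mathfrak H_{\sH}(\epsilon/C_R)$ uniformly in $Q$, with no additive logarithm, and the rest of your proof then goes through.

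For comparison, the paper computes the product-class entropy only under $L^2(Q_{\nu,\pi})$ and $L^2(Q_{\nu,\Delta})$. Their first marginal is $\nu$, so the $L^2(\nu)$-Lipschitz bound on $\Lambda_\nu$ suffices there. Your concern that Lemma~\ref{lem:tool-local-entropy} formally requires entropy bounds uniform over $Q$ is therefore legitimate. The mixture cover resolves it for both arguments.
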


\begin{proof}
Throughout the proof, \(C_R\) denotes a finite constant depending only on \(R\).
For any probability distribution \(P\) on the
state-action space,
\[
    \log N\{\epsilon,\mathcal H_\Delta,L^2(P)\}
    \le
    2\mathfrak H_{\sH}(\epsilon/2),
    \qquad \epsilon>0.
\]
Lemma~\ref{lem:tool-local-entropy} therefore gives, uniformly over
\(P\in\{\nu,\dinit\}\),
\[
    \mathcal R_n(\mathcal H_\Delta,r;P)
    \le
    \frac{C_R\mathcal J_{\sH}(C_Rr)}{\sqrt n}.
\]
Thus the two \(\mathcal H_\Delta\) terms in \(\mathfrak C_n(r)\) are each
bounded by \(C_R\mathcal J_{\sH}(C_Rr)/\sqrt n\).

It remains to control the product class. Since
\(\sW=\{\omega_a:a\in\sH^\circ\}\), the definition of
\(\mathcal G_\times\) gives
\[
    \mathcal G_\times
    =
    \left\{
    (x,x^+)\mapsto
    \omega_a(x)b_\Delta(x^+):a\in\sH^\circ,\ b_\Delta\in\mathcal H_\Delta
    \right\},
    \qquad
    \omega_a=e^{a-\Lambda_\nu(a)} .
\]
Let \(Q_{\nu,\pi}\) be the distribution of \((X,X^+)\). Its marginals are \(\nu\) and
\(\nup\). For \(a,a'\in\sH^\circ\) and
\(b_\Delta,b'_\Delta\in\mathcal H_\Delta\), the Lipschitz property of
\(\Lambda_\nu\) on the bounded class and the boundedness of
\(\mathcal H_\Delta\) give
\[
\begin{aligned}
    &\bigl\|
        \omega_a(X)b_\Delta(X^+)
        -
        \omega_{a'}(X)b'_\Delta(X^+)
    \bigr\|_{L^2(Q_{\nu,\pi})} \\
    &\qquad\le
    C_R\left\{
        \|a-a'\|_{L^2(\nu)}
        +
        \|b_\Delta-b'_\Delta\|_{L^2(\nup)}
    \right\}.
\end{aligned}
\]
Hence
\[
    \log N\{\epsilon,\mathcal G_\times,L^2(Q_{\nu,\pi})\}
    \le
    C_R+3\mathfrak H_{\sH}(\epsilon/C_R),
    \qquad \epsilon>0.
\]
Applying Lemma~\ref{lem:tool-local-entropy} to the product class yields
\[
    \mathcal R_n(\mathcal G_\times,r;Q_{\nu,\pi})
    \le
    \frac{C_R\mathcal J_{\sH}(C_Rr)}{\sqrt n}.
\]
Replacing \(Q_{\nu,\pi}\) by \(Q_{\nu,\Delta}\) changes the second marginal
from \(\nup\) to \(\nu\), so the same argument gives the same bound for
\(\mathcal R_n(\mathcal G_\times,r;Q_{\nu,\Delta})\).

Thus all components of \(\mathfrak C_n\) satisfy
\begin{equation}
\label{eq:kl-entropy-complexity-bound}
    \mathfrak C_n(s)
    \le
    \frac{C_R\mathcal J_{\sH}(C_Rs)}{\sqrt n}
    \qquad\text{for every }s>0.
\end{equation}
Because covering numbers decrease as the radius increases,
\(r\mapsto \mathcal J_{\sH}(r)/r\) is nonincreasing. By the definition of
\(\mathfrak r_{n,{\rm ent}}\) as an infimum, there is
\(t\le2\mathfrak r_{n,{\rm ent}}\) such that
\(\mathcal J_{\sH}(t)/\sqrt n\le t^2\). Let
\(s=L\mathfrak r_{n,{\rm ent}}\), where \(L\ge1\) will be chosen large enough
depending only on \(C_R\). If \(C_Rs\ge t\), then
\[
    \frac{\mathcal J_{\sH}(C_Rs)}{\sqrt n}
    \le
    \frac{C_Rs}{t}\frac{\mathcal J_{\sH}(t)}{\sqrt n}
    \le
    C_Rs t
    \le
    \frac{2C_R^2}{L}s^2 .
\]
If \(C_Rs<t\), monotonicity gives
\[
    \frac{\mathcal J_{\sH}(C_Rs)}{\sqrt n}
    \le
    \frac{\mathcal J_{\sH}(t)}{\sqrt n}
    \le
    t^2
    \le
    \frac{4}{L^2}s^2 .
\]
Combining these two cases with \eqref{eq:kl-entropy-complexity-bound}, and
choosing \(L\) large enough, gives
\(\mathfrak C_n(s)\le s^2\). Therefore
\(\inf\{r>0:\mathfrak C_n(r)\le r^2\}\le L\mathfrak r_{n,{\rm ent}}\). Since
\(L\ge1\) and \(\mathfrak r_{n,{\rm ent}}\ge n^{-1/2}\), the leading
\(n^{-1/2}\) term in \eqref{eq:rad-critical-radius} is also bounded by
\(L\mathfrak r_{n,{\rm ent}}\). Hence
\(\mathfrak r_{n,\rm fit}\le L\mathfrak r_{n,{\rm ent}}\). Renaming \(L\) as
part of the constant \(C_R\), substituting
\(\mathfrak r_{n,\rm fit}\le L\mathfrak r_{n,{\rm ent}}\) into
\(a_{n,\rm fit}(\delta)\), and applying Theorem~\ref{thm:fitted-kl-fori}
proves Corollary~\ref{cor:kl-entropy-integral}.
\end{proof}

\begin{corollary}[Finite-dimensional fitted \textsc{FORE} rate]
\label{cor:kl-finite-dimensional}
Assume the conditions of Theorem~\ref{thm:fitted-kl-fori}. If, in addition,
\(\{h-E_\nu\{h(X)\}:h\in\sH\}\) is contained in a \(d\)-dimensional linear span
with \(d\ge1\), then the Rademacher critical radius in
\eqref{eq:rad-critical-radius} satisfies
\[
    \mathfrak r_{n,\rm fit}\le C_R\sqrt{\frac{d\log(en)}{n}},
\]
where \(C_R\) depends only on \(R\). Consequently,
\[
    a_{n,\rm fit}(\delta)
    \le
    C_R\frac{d\log(en)+\log(1/\delta)}{n},
\]
so \(a_{n,\rm fit}(\delta)\) has the stated finite-dimensional order.
\end{corollary}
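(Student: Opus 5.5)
The plan is to reduce Corollary~\ref{cor:kl-finite-dimensional} to Corollary~\ref{cor:kl-entropy-integral}. That means producing a uniform entropy bound for \(\sH^\circ\), evaluating the entropy integral, and solving the fixed-point inequality that defines \(\mathfrak r_{n,{\rm ent}}\). The bound on \(a_{n,\rm fit}(\delta)\) is then immediate from its definition.

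\textbf{Step 1: uniform entropy.} Write \(\sH^\circ\subseteq V:=\mathrm{span}\{\phi_1,\dots,\phi_d\}\). By Condition~\ref{ass:kl-bounded}, every element of \(\sH^\circ\) is bounded by \(R\) in sup norm (on \(\mathcal X_R\), after the modification fixed at the start of Appendix~\ref{app:fitted-kl-proofs}). Fix a probability measure \(Q\). The set \(\{v\in V:\|v\|_{L^2(Q)}\le R\}\) is the ball of radius \(R\) in a Euclidean space of dimension at most \(d\), with norm \(\|\cdot\|_{L^2(Q)}\) restricted to \(V\) modulo \(Q\)-null functions. The standard volumetric argument gives
\[
N\{\epsilon,\sH^\circ,L^2(Q)\}\le(3R/\epsilon)^d,
\]
with covering centers possibly outside \(\sH^\circ\). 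Passing to internal centers costs only a factor \(2\) in \(\epsilon\). Hence \(\mathfrak H_{\sH}(\epsilon)\le d\log(6R/\epsilon)\) uniformly in \(Q\), and the uniformity over \(Q\) is automatic because the dimension bound does not depend on \(Q\).

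\textbf{Step 2: entropy integral and fixed point.} Using \(\int_0^r\sqrt{\log(c/\epsilon)}\,\dd\epsilon\lesssim r\sqrt{\log(ec/r)}\) for \(r\le c\), we get
\[
\mathcal J_{\sH}(r)\le C_R\, r\sqrt{d\log(eC_R/r)},
\]
and in particular \(\mathcal J_{\sH}(2R)<\infty\), as Corollary~\ref{cor:kl-entropy-integral} requires. Now test \(r_0=L\sqrt{d\log(en)/n}\). We may assume \(r_0\le 2R\); otherwise the claimed bound exceeds a constant multiple of \(R\) and holds trivially. Since \(r_0\ge n^{-1/2}\), we have \(\log(eC_R/r_0)\le C_R\log(en)\). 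Therefore
\[
\mathcal J_{\sH}(r_0)/\sqrt n\le C_R\, r_0\sqrt{d\log(en)/n}=C_R r_0^2/L,
\]
which is at most \(r_0^2\) once \(L\ge C_R\). This gives \(\mathfrak r_{n,{\rm ent}}\le L\sqrt{d\log(en)/n}\); note that the \(n^{-1/2}\) floor is dominated because \(d\ge1\). Corollary~\ref{cor:kl-entropy-integral} then yields \(\mathfrak r_{n,\rm fit}\le C_R\mathfrak r_{n,{\rm ent}}\). Squaring and adding \(\log(1/\delta)/n\) gives the stated bound on \(a_{n,\rm fit}(\delta)\).

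\textbf{Main obstacle.} The delicate step is the uniform-in-\(Q\) covering bound. The sup-norm bound \(R\) holds only on \(\mathcal X_R\), and \(Q\) ranges over all probability measures. I would handle this with the convention already adopted in the proofs, setting the centered log-ratios to zero off \(\mathcal X_R\). This keeps \(\sH^\circ\) inside a \(d\)-dimensional space, now spanned by \(\phi_j\mathbf 1_{\mathcal X_R}\), with a uniform envelope \(R\). The \(L^2(Q)\)-diameter bound \(2R\) therefore holds for every \(Q\), and the volumetric count goes through.
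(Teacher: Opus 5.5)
Your argument is correct, but it takes a different route from the paper's proof.

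\textbf{The paper's route.} The paper never passes through Corollary~\ref{cor:kl-entropy-integral}; it bounds each component of \(\mathfrak C_n\) directly.
\begin{itemize}
\item For the two \(\mathcal H_\Delta\) terms it uses the \emph{local} covering bound \(d\log(C_R r/\epsilon)\) for the radius-\(r\) slice of a \(d\)-dimensional space. This gives \(\mathcal R_n(\mathcal H_\Delta,r;P)\le C_R r\sqrt{d/n}\), with no logarithm.
\item For \(\mathcal G_\times\) it parametrizes the products \(\omega_a(x)b_\Delta(x^+)\) finite-dimensionally via a Lipschitz bound. This gives \(\mathcal R_n(\mathcal G_\times,r;Q)\le C_R r\sqrt{d\log(en)/n}\).
\item It then checks \(\mathfrak C_n(r)\le r^2\) at \(r\asymp\sqrt{d\log(en)/n}\).
\end{itemize}

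\textbf{Your route.} You prove a single global uniform-entropy bound \(\mathfrak H_{\sH}(\epsilon)\le d\log(6R/\epsilon)\) for \(\sH^\circ\) and solve the scalar fixed point for \(\mathfrak r_{n,{\rm ent}}\). Corollary~\ref{cor:kl-entropy-integral} then handles the transfer to \(\mathcal H_\Delta\) and to the product class.

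\textbf{Trade-offs.} Your argument is shorter and modular: the only class-specific input is the covering of \(\sH^\circ\) itself. The price is twofold. First, you inherit the product-class covering step from the entropy corollary rather than verifying it. Second, because you use global rather than local entropy, every component carries a factor \(\sqrt{\log(C_R/r)}\), so you lose the observation that the \(\mathcal H_\Delta\) terms are log-free. Since the product class already forces the \(\log(en)\) factor, the final rate is the same either way.

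\textbf{One point to state explicitly.} Your ``trivial'' case \(r_0>2R\) relies on \(\mathfrak r_{n,\rm fit}\le C_R\). This holds because \(\mathfrak C_n\) is bounded by the class envelope, which is at most \(2Re^{2R}\). Alternatively, since \(\mathfrak H_{\sH}(\epsilon)=0\) for \(\epsilon\ge 2R\), you have \(\mathcal J_{\sH}(r_0)\le\mathcal J_{\sH}(2R)+r_0\), and this lets you verify the fixed-point inequality at \(r_0\) directly once \(L\) is large.
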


\begin{proof}
Fix any \(P\in\{\nu,\dinit\}\) and any center \(h_0\in\sH^\circ\). The
localized difference class
\[
    \{h-h_0:h\in\sH^\circ,\ \|h-h_0\|_{L^2(P)}\le r\}
\]
is contained in a \(d\)-dimensional linear space, has \(L^2(P)\)-radius \(r\),
and has a bounded envelope depending only on \(R\). Its covering numbers obey
\[
    \log N\{\epsilon,
        \{h-h_0:h\in\sH^\circ,\|h-h_0\|_{L^2(P)}\le r\},
        L^2(P)
    \}
    \le
    d\log\left(\frac{C_R r}{\epsilon}\right),
    \qquad 0<\epsilon\le C_R r .
\]
The localized entropy bound in Lemma~\ref{lem:tool-local-entropy}, applied to
this finite-dimensional class, gives
\[
    \mathcal R_n(\mathcal H_\Delta,r;P)
    \le
    C_R r\sqrt{\frac{d}{n}}
\]
uniformly over \(P\in\{\nu,\dinit\}\). Hence
the two \(\mathcal H_\Delta\) terms in \(\mathfrak C_n(r)\) are each bounded
by \(C_R r\sqrt{d/n}\).

The product class is contained in the bounded parametric class
\[
    \{(x,x^+)\mapsto
    \omega_a(x)b_\Delta(x^+):
    a\in\sH^\circ,\ b_\Delta\in\mathcal H_\Delta\}.
\]
Let \(Q\) denote either \(Q_{\nu,\pi}\) or \(Q_{\nu,\Delta}\), and write
\(\mathcal G_\times(r;Q)\) for the \(L^2(Q)\)-localized product class. If
\(\omega_a b_\Delta\in\mathcal G_\times(r;Q)\), then
\(\|b_\Delta\|_{L^2(Q_2)}\le C_R r\), where \(Q_2\) is the second marginal of
\(Q\), because \(\omega_a\ge e^{-2R}\). For two products,
\[
\begin{aligned}
    &\|\omega_a b_\Delta-\omega_{a'}b'_\Delta\|_{L^2(Q)}\\
    &\qquad\le
    C_R\left\{
        \|a-a'\|_{L^2(Q_1)}
        +
        \|b_\Delta-b'_\Delta\|_{L^2(Q_2)}
    \right\},
\end{aligned}
\]
where \(Q_1\) is the first marginal. This uses the Lipschitz property of
\(a\mapsto \omega_a=\exp\{a-\Lambda_\nu(a)\}\) on \(\sH^\circ\) under
Condition~\ref{ass:kl-bounded}, together with the bounded envelope of
\(\mathcal H_\Delta\). The finite-dimensional covering bound therefore gives
\[
    \log N\{\epsilon,\mathcal G_\times(r;Q),L^2(Q)\}
    \le
    C d\log\left(\frac{C_R}{\epsilon}\right),
    \qquad 0<\epsilon\le C_R r .
\]
Since the critical radius is at least \(n^{-1/2}\),
Lemma~\ref{lem:tool-local-entropy} applied to this localized product class
gives, for the relevant radii,
\[
    \mathcal R_n(\mathcal G_\times,r;Q)
    \le
    C_Rr\sqrt{\frac{d\log(en)}{n}} .
\]
Therefore the fixed-point inequality in \eqref{eq:rad-critical-radius} holds
whenever \(r\ge C_R\sqrt{d\log(en)/n}\). Since \(d\ge1\),
\[
    n^{-1/2}\le \sqrt{\frac{d\log(en)}{n}},
\]
so the initial \(n^{-1/2}\) term in \eqref{eq:rad-critical-radius} is no
larger than this radius.
Substituting the fitted critical-radius bound into the definition of
\(a_{n,\rm fit}(\delta)\) gives the finite-dimensional rate in
Corollary~\ref{cor:kl-finite-dimensional}.
\end{proof}

\section{Policy-evaluation proofs}
\label{app:policy-evaluation-proofs}

\subsection{Discounted-occupancy contraction for FQE}
\label{app:discounted-occupancy-contraction}

Throughout this appendix, write
\[
    \|f\|_\star^2
    =
    E_{d_{\pi,\gamma}}\{f(X)^2\}
    =
    E_\nu\{\omegastar(X)f(X)^2\}.
\]
For a nonnegative weight \(\omega\), write
\[
    \|f\|_\omega^2
    =
    E_\nu\{\omega(X)f(X)^2\}.
\]

\begin{lemma}[Weighted projection existence]
\label{lem:weighted-projection-existence}
Assume \(r\in L^2(d_{\pi,\gamma})\), and let \(\sQ\) be nonempty, closed, and
convex in \(L^2(d_{\pi,\gamma})\). Then, for every \(Q\in\sQ\), the Bellman
target \(\mathcal T^\pi Q\) belongs to \(L^2(d_{\pi,\gamma})\), and the oracle
projection \(\Pi_{\sQ,\omegastar}\mathcal T^\pi Q\) exists and is unique. For a
nonnegative weight \(\omega\), if \(\mathcal T^\pi Q\in L^2(\omega\,d\nu)\) and
\(\sQ\) is closed in \(L^2(\omega\,d\nu)\), then
\(\Pi_{\sQ,\omega}\mathcal T^\pi Q\) exists and is unique.
\end{lemma}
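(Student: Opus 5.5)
The plan has three parts: membership of the Bellman target in \(L^2(d_{\pi,\gamma})\), then existence and uniqueness of the oracle projection via the Hilbert projection theorem, then the same argument for a general weight.

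\emph{Step 1: the Bellman target lies in \(L^2(d_{\pi,\gamma})\).} Write \(\mathcal T^\pi Q=r+\gamma\Ppi Q\). By assumption \(r\in L^2(d_{\pi,\gamma})\), so it suffices to show \(\Ppi Q\in L^2(d_{\pi,\gamma})\). Conditional Jensen gives \((\Ppi Q)^2\le \Ppi(Q^2)\) pointwise. Hence
\[
E_{d_{\pi,\gamma}}\{(\Ppi Q)^2\}\le E_{d_{\pi,\gamma}\Ppi}\{Q^2\}.
\]
The occupancy Bellman equation \eqref{eqn::occbell} gives \(\gamma\, d_{\pi,\gamma}\Ppi\le d_{\pi,\gamma}\) as measures. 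Therefore, for \(\gamma>0\),
\[
E_{d_{\pi,\gamma}\Ppi}\{Q^2\}\le \gamma^{-1}\|Q\|_\star^2<\infty.
\]
The case \(\gamma=0\) is trivial, since then \(\mathcal T^\pi Q=r\). This step also yields the bound \(\|\gamma\Ppi Q\|_\star\le\sqrt\gamma\|Q\|_\star\), which underlies the \(\sqrt\gamma\)-contraction claimed for \(\mathcal T_{\sQ,\star}\).

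\emph{Step 2: existence and uniqueness of the oracle projection.} Since \(\omegastar\nu=d_{\pi,\gamma}\), the objective \(E_\nu\{\omegastar(g-q)^2\}\) equals \(\|g-q\|_\star^2\). So \(\Pi_{\sQ,\omegastar}\) is the metric projection in the Hilbert space \(L^2(d_{\pi,\gamma})\), with functions identified up to \(d_{\pi,\gamma}\)-null sets. The set \(\sQ\) is nonempty, closed and convex there, and \(g=\mathcal T^\pi Q\) lies in the space by Step 1. The Hilbert projection theorem then gives a unique minimizer.

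If one wants a self-contained argument rather than citing the theorem, I would run the standard route. Take a minimizing sequence \(q_m\). The parallelogram law, together with convexity (which puts \((q_m+q_l)/2\in\sQ\)), shows \((q_m)\) is Cauchy. Completeness and closedness then give a limit in \(\sQ\) attaining the infimum. Uniqueness follows from the same parallelogram identity applied to two minimizers.

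\emph{Step 3: a general nonnegative weight \(\omega\).} The same argument works in \(L^2(\omega\,\dd\nu)\), which is a Hilbert space after quotienting by \(\omega\nu\)-null functions. The hypotheses are exactly that the target lies in this space and that \(\sQ\) is closed there. Convexity of \(\sQ\) is unchanged because it is a pointwise algebraic property.

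The main obstacle is the subtle point in Step 3: uniqueness holds only modulo \(\omega\nu\)-null sets. I would state explicitly that "unique" means unique as an element of \(L^2(\omega\,\dd\nu)\). This matters when \(\omega\) vanishes on sets that \(d_{\pi,\gamma}\) charges. Everything else is routine.
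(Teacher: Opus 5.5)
Your proposal is correct and follows the paper's proof essentially step for step. The paper likewise treats the case \(\gamma=0\) separately, then uses conditional Jensen together with \(\gamma\, d_{\pi,\gamma}\Ppi\le d_{\pi,\gamma}\) to place \(\Ppi Q\) in \(L^2(d_{\pi,\gamma})\), and finally applies the Hilbert projection theorem in \(L^2(d_{\pi,\gamma})\) and in \(L^2(\omega\,\dd\nu)\). Your remark that uniqueness holds only modulo \(\omega\nu\)-null sets is a correct clarification that the paper leaves implicit.
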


\begin{proof}
Fix \(Q\in\sQ\). If \(\gamma=0\), then \(\mathcal T^\pi Q=r\), so
\(\mathcal T^\pi Q\in L^2(d_{\pi,\gamma})\) by assumption.
If \(\gamma>0\), conditional Jensen's inequality and the discounted occupancy
identity imply
\[
    E_{d_{\pi,\gamma}}\{(\Ppi Q)(X)^2\}
    \le
    E_{d_{\pi,\gamma}\Ppi}\{Q(X)^2\}
    \le
    \gamma^{-1}E_{d_{\pi,\gamma}}\{Q(X)^2\}.
\]
It follows that \(\Ppi Q\in L^2(d_{\pi,\gamma})\), and hence
\(\mathcal T^\pi Q=r+\gamma\Ppi Q\in L^2(d_{\pi,\gamma})\).

Since \(\sQ\) is closed and convex in \(L^2(d_{\pi,\gamma})\), the Hilbert
projection theorem
\citep{brezis2011FunctionalAnalysis} gives
existence and uniqueness of the oracle projection. The same argument gives
existence and uniqueness of the \(\omega\)-weighted projection whenever
\(\mathcal T^\pi Q\in L^2(\omega\,d\nu)\) and \(\sQ\) is closed in
\(L^2(\omega\,d\nu)\).
\end{proof}

\begin{lemma}[FQE Bellman contraction under discounted occupancy]
\label{lem:discounted-occupancy-q-contraction}
Let \(\gamma\in[0,1)\). For any measurable \(Q_1,Q_2\),
\begin{equation}
\label{eq:fqe-bellman-contraction}
    \|\mathcal T^\pi Q_1-\mathcal T^\pi Q_2\|_\star
    \le
    \sqrt{\gamma}\,\|Q_1-Q_2\|_\star .
\end{equation}
Consequently, for the oracle projected Bellman operator,
\begin{equation}
\label{eq:fqe-projected-contraction}
    \|\mathcal T_{\sQ,\star}Q_1
      -\mathcal T_{\sQ,\star}Q_2\|_\star
    \le
    \sqrt{\gamma}\,\|Q_1-Q_2\|_\star .
\end{equation}
\end{lemma}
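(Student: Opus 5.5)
The plan is to prove \eqref{eq:fqe-bellman-contraction} by a direct computation, and then deduce \eqref{eq:fqe-projected-contraction} from nonexpansiveness of the oracle projection.

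\emph{Bellman contraction.} Put \(\Delta=Q_1-Q_2\). The reward cancels, so
\[
\mathcal T^\pi Q_1-\mathcal T^\pi Q_2=\gamma\Ppi\Delta .
\]
Conditional Jensen gives \((\Ppi\Delta)^2\le\Ppi(\Delta^2)\) pointwise. Integrating against \(d_{\pi,\gamma}\) then yields
\[
\|\mathcal T^\pi Q_1-\mathcal T^\pi Q_2\|_\star^2\le\gamma^2 E_{d_{\pi,\gamma}\Ppi}\{\Delta^2\}.
\]
Next, the occupancy Bellman equation \eqref{eqn::occbell} gives the measure inequality
\[
\gamma\, d_{\pi,\gamma}\Ppi=d_{\pi,\gamma}-(1-\gamma)\dinit\le d_{\pi,\gamma}.
\]
Hence \(E_{d_{\pi,\gamma}\Ppi}\{\Delta^2\}\le\gamma^{-1}\|\Delta\|_\star^2\) for \(\gamma>0\). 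Combining the two displays gives \(\gamma^2\cdot\gamma^{-1}\|\Delta\|_\star^2=\gamma\|\Delta\|_\star^2\). The case \(\gamma=0\) is trivial, since the difference is then zero. If \(\|\Delta\|_\star=\infty\) there is nothing to prove. Nonnegativity of \(\Delta^2\) lets Tonelli justify all the integrals without integrability assumptions.

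\emph{Projected contraction.} The oracle projection \(\Pi_{\sQ,\omegastar}\) is the metric projection onto the closed convex set \(\sQ\) in the Hilbert space \(L^2(d_{\pi,\gamma})\), because \(E_\nu\{\omegastar(\cdot)^2\}=\|\cdot\|_\star^2\). Lemma~\ref{lem:weighted-projection-existence} gives existence and uniqueness, with the Bellman targets in \(L^2(d_{\pi,\gamma})\) whenever \(Q_i\in\sQ\). A metric projection onto a closed convex set in a Hilbert space is \(1\)-Lipschitz; this follows from the variational inequality characterizing projections, or can be cited from \citet{brezis2011FunctionalAnalysis}. Composing with the first bound gives \eqref{eq:fqe-projected-contraction}.

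\emph{Main obstacle.} The only delicate point is the domain of the projection. For arbitrary measurable \(Q_i\), the targets \(\mathcal T^\pi Q_i\) may fail to lie in \(L^2(d_{\pi,\gamma})\), or \(r\) may not. I will therefore state the projected claim for \(Q_i\in L^2(d_{\pi,\gamma})\), in particular for \(Q_i\in\sQ\), where Lemma~\ref{lem:weighted-projection-existence} applies. For other inputs the right-hand side is infinite and the inequality holds trivially.
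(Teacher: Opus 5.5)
Your proof is correct and follows the paper's argument: cancel the reward, apply conditional Jensen, use \(\gamma\, d_{\pi,\gamma}\Ppi\le d_{\pi,\gamma}\) from the occupancy identity, and then invoke nonexpansiveness of the Hilbert projection onto the closed convex set \(\sQ\). One small correction to your domain remark. If \(Q_i\notin L^2(d_{\pi,\gamma})\), the difference \(\|Q_1-Q_2\|_\star\) can still be finite, so the right-hand side need not be infinite; the real issue is that \(\mathcal T_{\sQ,\star}Q_i\) is then undefined, and restricting to \(Q_i\in\sQ\) (as the paper implicitly does) is the correct fix.
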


\begin{proof}
If \(\gamma=0\), then \(\mathcal T^\pi Q_1=\mathcal T^\pi Q_2=r\), so the
bound in \eqref{eq:fqe-bellman-contraction} holds. Assume \(\gamma>0\), and
write \(\Delta=Q_1-Q_2\).
Since the reward cancels,
\[
    \mathcal T^\pi Q_1-\mathcal T^\pi Q_2
    =
    \gamma \Ppi\Delta .
\]
By conditional Jensen's inequality,
\[
    |(\Ppi\Delta)(X)|^2
    \le
    E\{\Delta(X^+)^2\mid X\}.
\]
Integrating with respect to \(d_{\pi,\gamma}\) gives
\[
    \|\Ppi\Delta\|_\star^2
    \le
    E_{d_{\pi,\gamma}\Ppi}\{\Delta(X)^2\}.
\]
The discounted occupancy identity
\[
    d_{\pi,\gamma}
    =
    (1-\gamma)\dinit+\gamma d_{\pi,\gamma}\Ppi
\]
implies \(\gamma d_{\pi,\gamma}\Ppi\le d_{\pi,\gamma}\). Therefore
\[
    E_{d_{\pi,\gamma}\Ppi}\{\Delta(X)^2\}
    \le
    \frac{1}{\gamma}E_{d_{\pi,\gamma}}\{\Delta(X)^2\}
    =
    \frac{1}{\gamma}\|\Delta\|_\star^2.
\]
Combining the conditional Jensen bound with the discounted occupancy
inequality gives
\[
    \|\mathcal T^\pi Q_1-\mathcal T^\pi Q_2\|_\star^2
    =
    \gamma^2\|\Ppi\Delta\|_\star^2
    \le
    \gamma\|\Delta\|_\star^2.
\]
Taking square roots proves \eqref{eq:fqe-bellman-contraction}.
Equation~\eqref{eq:fqe-projected-contraction} follows from the
nonexpansiveness of Hilbert-space projection onto a closed convex set
\citep{brezis2011FunctionalAnalysis}:
\[
\begin{aligned}
    \|\mathcal T_{\sQ,\star}Q_1
      -\mathcal T_{\sQ,\star}Q_2\|_\star
    &=
    \|\Pi_{\sQ,\omegastar}\mathcal T^\pi Q_1
      -\Pi_{\sQ,\omegastar}\mathcal T^\pi Q_2\|_\star \\
    &\le
    \|\mathcal T^\pi Q_1-\mathcal T^\pi Q_2\|_\star \\
    &\le
    \sqrt{\gamma}\,\|Q_1-Q_2\|_\star .
\end{aligned}
\]
\end{proof}

\begin{lemma}[FQE projected fixed-point bias]
\label{lem:discounted-fqe-projection-bias}
Assume the oracle-projection conditions of
Lemma~\ref{lem:weighted-projection-existence} and the contraction conditions of
Lemma~\ref{lem:discounted-occupancy-q-contraction}. Let \(Q_{\sQ,\star}\) be
the fixed point of \(\mathcal T_{\sQ,\star}\). Then
\[
    \|Q_{\sQ,\star}-Q^\pi\|_\star
    \le
    \frac{1}{1-\sqrt{\gamma}}
    \inf_{q\in\sQ}\|q-Q^\pi\|_\star .
\]
\end{lemma}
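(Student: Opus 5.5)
The plan is to use the standard fixed-point perturbation argument in the Hilbert space \(L^2(d_{\pi,\gamma})\), taking Lemma~\ref{lem:discounted-occupancy-q-contraction} as the engine. Fix any \(q\in\sQ\). The first step records two facts.

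First, \(Q^\pi=\mathcal T^\pi Q^\pi\). This needs \(Q^\pi\in L^2(d_{\pi,\gamma})\), which holds whenever the infimum is finite; otherwise the claim is trivial.

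Second, \(Q_{\sQ,\star}=\Pi_{\sQ,\omegastar}\mathcal T^\pi Q_{\sQ,\star}\), where \(\Pi_{\sQ,\omegastar}\) is the \(\|\cdot\|_\star\)-projection onto \(\sQ\) given by Lemma~\ref{lem:weighted-projection-existence}. Existence and uniqueness of this fixed point come from the \(\sqrt\gamma\)-contraction \eqref{eq:fqe-projected-contraction} on the closed set \(\sQ\), via Banach's theorem.

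Rather than comparing \(Q_{\sQ,\star}\) with \(Q^\pi\) directly, I would pass through the projected image of \(Q^\pi\), namely \(\mathcal T_{\sQ,\star}Q^\pi=\Pi_{\sQ,\omegastar}\mathcal T^\pi Q^\pi=\Pi_{\sQ,\omegastar}Q^\pi\). Note that \(\mathcal T_{\sQ,\star}\) is well defined at \(Q^\pi\notin\sQ\), because the contraction \eqref{eq:fqe-bellman-contraction} holds for arbitrary measurable \(Q_1,Q_2\), so \(\mathcal T^\pi Q^\pi\in L^2(d_{\pi,\gamma})\). The triangle inequality then gives
\[
\|Q_{\sQ,\star}-Q^\pi\|_\star\le \|\mathcal T_{\sQ,\star}Q_{\sQ,\star}-\mathcal T_{\sQ,\star}Q^\pi\|_\star+\|\Pi_{\sQ,\omegastar}Q^\pi-Q^\pi\|_\star .
\]
I bound the first term by \(\sqrt\gamma\|Q_{\sQ,\star}-Q^\pi\|_\star\). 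This uses \eqref{eq:fqe-projected-contraction} extended to arguments outside \(\sQ\), which follows from nonexpansiveness of the projection together with \eqref{eq:fqe-bellman-contraction}. The second term equals \(\inf_{q\in\sQ}\|q-Q^\pi\|_\star\) by the definition of the metric projection.

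Rearranging gives \((1-\sqrt\gamma)\|Q_{\sQ,\star}-Q^\pi\|_\star\le\inf_{q\in\sQ}\|q-Q^\pi\|_\star\). The rearrangement is legitimate because \(\|Q_{\sQ,\star}-Q^\pi\|_\star<\infty\): both functions lie in \(L^2(d_{\pi,\gamma})\).

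There is no real obstacle here. The only points needing care are the two well-definedness issues already addressed: \(Q^\pi\in L^2(d_{\pi,\gamma})\), and applying the projection and contraction to a point outside \(\sQ\).
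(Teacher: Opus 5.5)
Your proposal is correct and follows the paper's proof essentially verbatim. Both arguments pass through $\Pi_{\sQ,\omegastar}Q^\pi=\mathcal T_{\sQ,\star}Q^\pi$, bound its distance to $Q_{\sQ,\star}$ by $\sqrt{\gamma}\|Q_{\sQ,\star}-Q^\pi\|_\star$ using nonexpansiveness of the projection and the Bellman contraction, then apply the triangle inequality and rearrange; your finiteness and well-definedness remarks are harmless additions the paper leaves implicit, and the unused ``fix any $q\in\sQ$'' can be dropped.
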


\begin{proof}
By Lemmas~\ref{lem:weighted-projection-existence}
and~\ref{lem:discounted-occupancy-q-contraction}, the oracle projected Bellman
operator is a contraction on \(\sQ\). The Banach fixed-point theorem
\citep{brezis2011FunctionalAnalysis} therefore gives a unique fixed point
\(Q_{\sQ,\star}\). The bias bound uses only this fixed-point identity. Let
\(q^\circ=\Pi_{\sQ,\omegastar}Q^\pi\). Since
\(Q^\pi=\mathcal T^\pi Q^\pi\),
\[
\begin{aligned}
    \|Q_{\sQ,\star}-q^\circ\|_\star
    &=
    \|\Pi_{\sQ,\omegastar}\mathcal T^\pi Q_{\sQ,\star}
      -\Pi_{\sQ,\omegastar}Q^\pi\|_\star \\
    &\le
    \|\mathcal T^\pi Q_{\sQ,\star}-\mathcal T^\pi Q^\pi\|_\star \\
    &\le
    \sqrt{\gamma}\,
    \|Q_{\sQ,\star}-Q^\pi\|_\star .
\end{aligned}
\]
The triangle inequality gives
\[
\begin{aligned}
    \|Q_{\sQ,\star}-Q^\pi\|_\star
    &\le
    \|Q_{\sQ,\star}-q^\circ\|_\star
    +
    \|q^\circ-Q^\pi\|_\star \\
    &\le
    \sqrt{\gamma}\,
    \|Q_{\sQ,\star}-Q^\pi\|_\star
    +
    \inf_{q\in\sQ}\|q-Q^\pi\|_\star .
\end{aligned}
\]
Rearranging proves Lemma~\ref{lem:discounted-fqe-projection-bias}.
\end{proof}

\begin{lemma}[Linear or affine FQE projected fixed-point bias]
\label{lem:discounted-fqe-linear-projection-bias}
Let \(Q_{\sQ,\star}\) be the fixed point of
\(\mathcal T_{\sQ,\star}\). Suppose, in addition to the conditions of
Lemma~\ref{lem:discounted-fqe-projection-bias}, that \(\sQ\) is a closed
affine subspace of \(L^2(d_{\pi,\gamma})\). Then
\[
    \|Q_{\sQ,\star}-Q^\pi\|_\star
    \le
    \frac{1}{\sqrt{1-\gamma}}
    \inf_{q\in\sQ}\|q-Q^\pi\|_\star .
\]
\end{lemma}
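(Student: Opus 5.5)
The plan is the classical Tsitsiklis--Van Roy Pythagorean argument. It replaces the triangle inequality in the proof of Lemma~\ref{lem:discounted-fqe-projection-bias} with an exact orthogonal decomposition, which is available because \(\sQ\) is now a closed affine subspace.

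Write \(\Pi=\Pi_{\sQ,\omegastar}\) for the \(\|\cdot\|_\star\)-orthogonal projection onto \(\sQ\), and set \(q^\circ=\Pi Q^\pi\). We may assume \(\inf_{q\in\sQ}\|q-Q^\pi\|_\star<\infty\), since otherwise there is nothing to prove. Then \(Q^\pi\in L^2(d_{\pi,\gamma})\), so \(q^\circ\) exists and is unique by the Hilbert projection theorem, and \(\|q^\circ-Q^\pi\|_\star=\inf_{q\in\sQ}\|q-Q^\pi\|_\star\).

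The first step is orthogonality. Write \(\sQ=q_0+\mathcal V\) with \(\mathcal V\) a closed linear subspace of \(L^2(d_{\pi,\gamma})\). The variational characterization of projections onto closed affine sets is an equality, not merely an inequality, because both \(\pm v\) are admissible directions. It gives \(\langle Q^\pi-q^\circ,v\rangle_\star=0\) for every \(v\in\mathcal V\). Since \(Q_{\sQ,\star}\) and \(q^\circ\) both lie in \(\sQ\), their difference \(Q_{\sQ,\star}-q^\circ\) lies in \(\mathcal V\). Hence the Pythagorean identity holds:
\[
    \|Q_{\sQ,\star}-Q^\pi\|_\star^2
    =
    \|Q_{\sQ,\star}-q^\circ\|_\star^2
    +
    \|q^\circ-Q^\pi\|_\star^2 .
\]

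The second step reuses exactly the display in the proof of Lemma~\ref{lem:discounted-fqe-projection-bias}. Using the fixed-point identities \(Q_{\sQ,\star}=\Pi\mathcal T^\pi Q_{\sQ,\star}\) and \(Q^\pi=\mathcal T^\pi Q^\pi\), the nonexpansiveness of \(\Pi\), and Lemma~\ref{lem:discounted-occupancy-q-contraction},
\[
    \|Q_{\sQ,\star}-q^\circ\|_\star
    \le
    \sqrt{\gamma}\,\|Q_{\sQ,\star}-Q^\pi\|_\star .
\]
This step needs \(\mathcal T^\pi Q_{\sQ,\star}\in L^2(d_{\pi,\gamma})\), which Lemma~\ref{lem:weighted-projection-existence} supplies. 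Substituting into the Pythagorean identity gives
\[
    (1-\gamma)\|Q_{\sQ,\star}-Q^\pi\|_\star^2
    \le
    \|q^\circ-Q^\pi\|_\star^2 .
\]
Taking square roots yields the claim, provided \(\|Q_{\sQ,\star}-Q^\pi\|_\star\) is finite, so that no subtraction of infinities is involved. Finiteness follows from Lemma~\ref{lem:discounted-fqe-projection-bias}.

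The only point needing care is the first step: the orthogonality must hold as an equality. This is exactly where affineness, rather than mere convexity, of \(\sQ\) is used. For a general convex \(\sQ\) one gets only \(\langle Q^\pi-q^\circ,q-q^\circ\rangle_\star\le0\), which in general does not give the clean decomposition.
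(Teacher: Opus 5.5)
Your proposal is correct and matches the paper's proof essentially step for step: the Pythagorean identity from the exact orthogonality of $Q^\pi-q^\circ$ to directions in the affine class, then $\|Q_{\sQ,\star}-q^\circ\|_\star\le\sqrt{\gamma}\,\|Q_{\sQ,\star}-Q^\pi\|_\star$ from nonexpansiveness of the projection and the $\sqrt{\gamma}$-contraction, and finally rearranging. Your additional remarks are sound additions that the paper leaves implicit: finiteness of $\|Q_{\sQ,\star}-Q^\pi\|_\star$ via the preceding lemma, and why affineness is needed, namely that the variational condition holds with equality rather than only as an inequality.
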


\begin{proof}
Let \(\Pi_\star=\Pi_{\sQ,\omegastar}\), and set
\[
    q^\circ=\Pi_\star Q^\pi,
    \qquad
    e=Q_{\sQ,\star}-Q^\pi,
    \qquad
    v=Q_{\sQ,\star}-q^\circ .
\]
Because \(\sQ\) is a closed affine subspace of \(L^2(d_{\pi,\gamma})\),
the Hilbert projection theorem gives
\[
    \langle Q^\pi-q^\circ, q-q^\circ\rangle_\star=0
    \qquad
    \text{for every }q\in\sQ .
\]
Taking \(q=Q_{\sQ,\star}\) and applying the Pythagorean identity gives
\begin{equation}
\label{eq:fqe-affine-pythagorean}
    \|e\|_\star^2
    =
    \|v\|_\star^2+\|q^\circ-Q^\pi\|_\star^2 .
\end{equation}
Since \(Q^\pi=\mathcal T^\pi Q^\pi\) and
\(Q_{\sQ,\star}=\Pi_\star\mathcal T^\pi Q_{\sQ,\star}\),
\begin{equation}
\label{eq:fqe-affine-contraction}
\begin{aligned}
    \|v\|_\star
    &=
    \|\Pi_\star\mathcal T^\pi Q_{\sQ,\star}
      -\Pi_\star\mathcal T^\pi Q^\pi\|_\star \\
    &\le
    \|\mathcal T^\pi Q_{\sQ,\star}-\mathcal T^\pi Q^\pi\|_\star
    \le
    \sqrt{\gamma}\,\|e\|_\star ,
\end{aligned}
\end{equation}
where the first inequality is nonexpansiveness of Hilbert projection and the
second is Lemma~\ref{lem:discounted-occupancy-q-contraction}. Combining the
inequalities \eqref{eq:fqe-affine-pythagorean} and
\eqref{eq:fqe-affine-contraction} yields
\[
    \|e\|_\star^2
    \le
    \gamma\|e\|_\star^2+\|q^\circ-Q^\pi\|_\star^2 .
\]
Rearranging and using
\(\|q^\circ-Q^\pi\|_\star=\inf_{q\in\sQ}\|q-Q^\pi\|_\star\) proves
Lemma~\ref{lem:discounted-fqe-linear-projection-bias}.
\end{proof}

\subsection{Target-functional and weight-conversion bounds}
\label{app:target-functional-proof}

\begin{lemma}[Generalized KL controls bounded functionals]
\label{lem:gen-kl-bounded-functional}
Let \(a\ge0\) and \(b\ge0\) be measurable functions with
\(E_\nu a\le M_a<\infty\) and \(E_\nu b\le M_b<\infty\). Then, for every
bounded measurable \(g\),
\[
    \left|
        E_\nu\{(a(X)-b(X))g(X)\}
    \right|
    \le
    \|g\|_\infty
    \{2(M_a+M_b)D_\nu^{\rm gen}(a\|b)\}^{1/2}.
\]
\end{lemma}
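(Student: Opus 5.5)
The plan is to prove a weighted Pinsker inequality for the generalized KL divergence by Cauchy--Schwarz against a pointwise quadratic lower bound on the Bregman integrand. First reduce to $\|g\|_\infty\le1$ by homogeneity, so that
\[
    \left|E_\nu\{(a-b)g\}\right|\le E_\nu|a-b|.
\]
It then suffices to show
\[
    E_\nu|a-b|\le\{2(M_a+M_b)D_\nu^{\rm gen}(a\|b)\}^{1/2}.
\]
If $D_\nu^{\rm gen}(a\|b)=\infty$ there is nothing to prove. If $b=0$ on a set where $a>0$, the divergence is infinite by convention. So we may assume the integrand $\phi(a,b)=a\log(a/b)-a+b\ge0$ is integrable.

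The key pointwise inequality is that, for $a,b\ge0$,
\[
    (a-b)^2\le \tfrac{2}{3}(a+2b)\,\phi(a,b)\le 2(a+b)\,\phi(a,b).
\]
The first form is the classical sharp scalar bound behind Pinsker. I would verify it by writing $a=bt$, which reduces it to $3(t-1)^2\le 2(t+2)(t\log t-t+1)$ for $t\ge0$. Both sides and their first derivatives vanish at $t=1$. Comparing second derivatives, or using the known estimate $\log t\ge 3(t-1)/(t+2)$ for $t\ge1$ together with its reverse for $t\le1$, finishes the check. If that calculation gets fiddly, a simpler route suffices for the looser constant $2(a+b)$: use $\log t\ge 2(t-1)/(t+1)$ for $t\ge1$, with reversed inequality for $t\le1$, together with an integration argument. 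The edge cases $b=0$ and $a=0$ are checked directly. When $a=0$ we have $\phi=b$, and indeed $b^2\le 2b\cdot b$.

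Given the pointwise bound, Cauchy--Schwarz yields
\[
    E_\nu|a-b|
    =E_\nu\!\left[\sqrt{a+b}\,\frac{|a-b|}{\sqrt{a+b}}\right]
    \le\{E_\nu(a+b)\}^{1/2}\left\{E_\nu\frac{(a-b)^2}{a+b}\right\}^{1/2}.
\]
Here the ratio is interpreted as $0$ where $a=b=0$. The second factor is at most $\{2D_\nu^{\rm gen}(a\|b)\}^{1/2}$ by the pointwise inequality, and the first is at most $(M_a+M_b)^{1/2}$. This gives the claim.

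The main obstacle is the scalar inequality, specifically getting a clean constant valid on all of $[0,\infty)$. I would first try the elementary second-derivative comparison in $t$, and fall back on the log-mean bound if needed.
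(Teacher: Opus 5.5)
Your proposal is correct and follows the paper's proof. The paper uses the same scalar bound $(t-1)^2/(t+1)\le 2(t\log t-t+1)$ to get $\int (a-b)^2/(a+b)\,\mathrm{d}\nu\le 2D_\nu^{\rm gen}(a\|b)$, then applies Cauchy--Schwarz with weight $a+b$ exactly as you do (and your second-derivative check of the sharper form goes through, since the difference of second derivatives is $4(\log t+1/t-1)\ge 0$).
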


\begin{proof}
Let \(\phi(t)=t\log t-t+1\). The scalar inequality
\((t-1)^2/(t+1)\le2\phi(t)\), \(t\ge0\), gives
\[
    \int\frac{(a-b)^2}{a+b}\,\dd\nu
    \le
    2D_\nu^{\rm gen}(a\|b),
\]
with the integrand taken as zero on \(\{a+b=0\}\). By Cauchy--Schwarz,
\[
\begin{aligned}
    \|a-b\|_{L^1(\nu)}^2
    &\le
    \left\{\int(a+b)\,\dd\nu\right\}
    \left\{\int\frac{(a-b)^2}{a+b}\,\dd\nu\right\} \\
    &\le
    2(M_a+M_b)D_\nu^{\rm gen}(a\|b).
\end{aligned}
\]
Multiplying by \(\|g\|_\infty\) proves the claim.
\end{proof}

For nonnegative weights \(a\) and \(b\) with \(b>0\) \(\nu\)-almost
everywhere, write
\[
    \chi_\star(a,b)
    =
    \left\{
        E_\nu\frac{\{a(X)-b(X)\}^2}{b(X)}
    \right\}^{1/2}.
\]

\begin{lemma}[Generalized KL controls target chi-square under one-sided bounds]
\label{lem:kl-to-chi-one-sided}
Suppose \(a\) and \(b\) are nonnegative functions satisfying
\(a(x)\le M<\infty\) and \(b(x)\ge m>0\) for \(\nu\)-almost every \(x\).
Then there is a finite constant \(C_\chi=C_\chi(m,M)\) such that
\[
    \chi_\star(a,b)
    \le
    C_\chi\{D_\nu^{\rm gen}(a\|b)\}^{1/2}.
\]
If \(a\) and \(b\) both integrate to one under \(\nu\), then
\(D_\nu^{\rm gen}(a\|b)=D_\nu(a\|b)\).
\end{lemma}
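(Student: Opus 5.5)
The bound reduces to a pointwise comparison of scalar functions. Write
\[
    \chi_\star(a,b)^2=\int b\,(t-1)^2\,\dd\nu
    \qquad\text{and}\qquad
    D_\nu^{\rm gen}(a\|b)=\int b\,\phi(t)\,\dd\nu ,
\]
where \(t=a/b\) and \(\phi(t)=t\log t-t+1\ge0\). The claim then follows from a single scalar inequality
\[
    (t-1)^2\le C\,\phi(t)
    \qquad\text{for } 0\le t\le T:=M/m,
\]
integrated against the nonnegative weight \(b\,\dd\nu\). The hypotheses \(a\le M\) and \(b\ge m\) give \(0\le t\le M/m\) \(\nu\)-a.e., so the range of \(t\) is bounded.

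To prove the scalar inequality, I would study \(\psi(t)=(t-1)^2/\phi(t)\) on \([0,T]\setminus\{1\}\). The function \(\phi\) is strictly convex, satisfies \(\phi(1)=\phi'(1)=0\) and \(\phi''(1)=1\), and vanishes only at \(t=1\). Hence \(\psi(t)\to2\) as \(t\to1\), so \(\psi\) extends continuously to \(t=1\). At \(t=0\) we have \(\psi(0)=1/\phi(0)=1\). Thus \(\psi\) is continuous on the compact interval \([0,T]\) and bounded by some \(C(T)\). The bound \(C(T)\) grows like \(T/\log T\) as \(T\to\infty\), because \(\phi(t)\sim t\log t\) while \((t-1)^2\sim t^2\). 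Only finiteness is needed, and this yields \(C_\chi=C(M/m)^{1/2}\). The one delicate point is that the quadratic numerator would beat \(t\log t\) without an upper bound on \(t\). This is exactly where the one-sided bounds enter, and I expect it to be the main, though mild, obstacle. I would handle the set \(\{a=0\}\) using the convention \(0\log0=0\), under which \(\phi(0)=1\) and the inequality still holds.

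Integrating gives \(\chi_\star(a,b)^2\le C(M/m)\,D_\nu^{\rm gen}(a\|b)\). All integrands are nonnegative, so no integrability issues arise; if \(D_\nu^{\rm gen}(a\|b)\) is infinite, the bound is trivial.

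For the final remark, suppose \(E_\nu a=E_\nu b=1\). Then the term \(-a+b\) in the integrand of \(D_\nu^{\rm gen}\) integrates to zero, so \(D_\nu^{\rm gen}(a\|b)=D_\nu(a\|b)\). This holds because \(a\) and \(b\) are integrable and \(a\log(a/b)\) has an integrable negative part, which follows from \(a\log(a/b)\ge a-b\).
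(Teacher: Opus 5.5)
Your proposal is correct and follows essentially the same route as the paper. Both reduce the claim to the scalar inequality $(t-1)^2\le C\,\phi(t)$ with $\phi(t)=t\log t-t+1$ on the compact range $0\le t\le M/m$, then integrate against $b\,\dd\nu$; the paper phrases the scalar step as strict positivity of $\inf_{0\le t\le M/m}\phi(t)/(t-1)^2$, and your integrability check for the normalized case fills in a detail the paper leaves implicit.
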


\begin{proof}
Let \(\phi(t)=t\log t-t+1\). Since \(0\le a/b\le M/m\),
\[
    c_{m,M}
    :=
    \inf_{0\le t\le M/m}
    \frac{\phi(t)}{(t-1)^2}
\]
is strictly positive, where the ratio is interpreted as \(1/2\) at \(t=1\).
Thus \((t-1)^2\le c_{m,M}^{-1}\phi(t)\) on this interval. By the definition of
the generalized KL divergence,
\[
    D_\nu^{\rm gen}(a\|b)
    =
    E_\nu\left[b(X)\phi\{a(X)/b(X)\}\right].
\]
Therefore
\[
\begin{aligned}
    E_\nu\frac{\{a(X)-b(X)\}^2}{b(X)}
    &=
    E_\nu\left[b(X)\left\{\frac{a(X)}{b(X)}-1\right\}^2\right] \\
    &\le
    c_{m,M}^{-1}D_\nu^{\rm gen}(a\|b).
\end{aligned}
\]
Taking square roots proves the lemma with \(C_\chi=c_{m,M}^{-1/2}\).
\end{proof}

\begin{lemma}[\textsc{FORE} weight bounds from log-ratio bounds]
\label{lem:fori-weight-bounds-from-log-conditions}
Assume Condition~\ref{ass:kl-bounded}. Then, with probability one over the
offline sample,
\[
    e^{-2R}
    \le
    \omega_{\rm fit}(x)
    \le
    e^{2R}
    \qquad
    \nu\text{-a.e.}
\]
\end{lemma}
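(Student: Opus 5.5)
The plan is to unwind the definition of \(\omega_{\rm fit}=\widehat\omega^{(K_\omega)}\) produced by Algorithm~\ref{alg:fori}. If \(K_\omega=0\), then \(\omega_{\rm fit}\equiv1\) and the bound is immediate. Otherwise \(\omega_{\rm fit}=e^{\widehat h-\widehat\Lambda_\nu(\widehat h)}\) for some \(\widehat h\in\sH\). The first step is to center this log-ratio. As recorded in Appendix~\ref{app:kl-projected-proofs}, replacing \(\widehat h\) by \(\widehat h+c\) leaves \(e^{h-\widehat\Lambda_\nu(h)}\) unchanged, because \(\widehat\Lambda_\nu(h+c)=\widehat\Lambda_\nu(h)+c\). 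So we may take \(\widehat h\in\sH^\circ\). We use the versions fixed at the start of Appendix~\ref{app:fitted-kl-proofs}, which satisfy \(\|\widehat h\|_\infty\le R\) on \(\mathcal X_R\) and vanish off \(\mathcal X_R\). Hence \(|\widehat h(x)|\le R\) for \(\nu\)-a.e.\ \(x\).

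The second step bounds the empirical normalizer. We have \(\widehat\Lambda_\nu(\widehat h)=\log\{n^{-1}\sum_i e^{\widehat h(X_i)}\}\), and each summand lies in \([e^{-R},e^R]\) provided every \(X_i\) satisfies \(|\widehat h(X_i)|\le R\).
\begin{itemize}
\item This is where the ``probability one over the sample'' qualifier enters. Since \(\nu(\mathcal X_R)=1\), all \(X_i\in\mathcal X_R\) almost surely.
\item Alternatively, with the chosen versions set to zero off \(\mathcal X_R\), the sup bound \(\sup_{h\in\sH^\circ}|h|\le R\) holds everywhere. This is the cleaner route: it makes the bound deterministic and sidesteps any measurability concern about the data-dependent \(\widehat h\).
\end{itemize}
Either way, the mean of numbers in \([e^{-R},e^R]\) lies in that interval, so \(|\widehat\Lambda_\nu(\widehat h)|\le R\).

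The third step combines the two bounds: \(|\widehat h(x)-\widehat\Lambda_\nu(\widehat h)|\le 2R\) for \(\nu\)-a.e.\ \(x\), hence \(e^{-2R}\le\omega_{\rm fit}(x)\le e^{2R}\). The only delicate point is the null-set and version bookkeeping in the second step. The uniform centered bound of Condition~\ref{ass:kl-bounded} is what makes the argument independent of which \(\widehat h\) the ERM selects, so I expect no real obstacle.
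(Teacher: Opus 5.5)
Your proposal is correct and follows the paper's proof. Both arguments center the fitted log-ratio, which leaves the empirical normalization unchanged, and bound the numerator on \(\mathcal X_R\) using Condition~\ref{ass:kl-bounded}. Both then use the probability-one event \(\{X_1,\ldots,X_n\in\mathcal X_R\}\) to place the empirical normalizer in \([e^{-R},e^R]\). The uniformity of that bound over \(\sH\) is what makes the choice of ERM solution irrelevant, and your separate treatment of \(K_\omega=0\) adds a little care that the paper leaves implicit.

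One caveat on your second bullet: the ``deterministic'' route only applies to the algorithm run on the modified versions. It is your first route, which is the paper's, that covers Algorithm~\ref{alg:fori} as literally defined under Condition~\ref{ass:kl-bounded} alone.
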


\begin{proof}
Let \(h\in\sH\), and write \(h_c=h-E_\nu\{h(X)\}\). By
Condition~\ref{ass:kl-bounded}, \(e^{-R}\le e^{h_c(x)}\le e^R\) for
\(\nu\)-almost every \(x\). If \(\widehat\omega_h\) denotes the empirical
normalization of \(e^h\) over any offline-data block, then, on the
probability-one event that the block lies in this full-measure set,
\[
    \widehat\omega_h(x)
    =
    \frac{e^{h_c(x)}}{n^{-1}\sum_{i=1}^n e^{h_c(X_i)}} .
\]
The denominator lies in \([e^{-R},e^R]\), so
\[
    e^{-2R}\le \widehat\omega_h(x)\le e^{2R}
    \qquad \nu\text{-a.e.}
\]
The fitted output \(\omega_{\rm fit}\) has this form for some fitted
\(h\in\sH\), which gives the assertion.
\end{proof}

\begin{proof}[Proof of Corollary~\ref{cor:fori-target-functional}]
By Lemma~\ref{lem:fori-weight-bounds-from-log-conditions},
\(E_\nu\omega_{\rm fit}\le e^{2R}\). Applying
Lemma~\ref{lem:gen-kl-bounded-functional} with
\(a=\omega_{\rm fit}\), \(b=\omegastar\), \(M_a=e^{2R}\), and \(M_b=1\)
gives, for every bounded measurable \(g\),
\[
    \left|
        E_\nu\{(\omega_{\rm fit}(X)-\omegastar(X))g(X)\}
    \right|
    \le
    C_R\|g\|_\infty
    \{D_\nu^{\rm gen}(\omega_{\rm fit}\|\omegastar)\}^{1/2}.
\]
Theorem~\ref{thm:fitted-kl-fori}, with \(K=K_\omega\), gives a
high-probability bound on
\(D_\nu^{\rm gen}(\omega_{\rm fit}\|\omegastar)\). Taking square roots and using
the definition of \(\mathcal E_{\rm FORE}\) gives, on an event of probability at least
\(1-\delta\),
\begin{equation}
\label{eq:fori-tv-rate}
    \|\omega_{\rm fit}-\omegastar\|_{L^1(\nu)}
    \le
    C_R\mathcal E_{\rm FORE}.
\end{equation}
For every bounded \(g\),
\[
\begin{aligned}
    \left|
        E_\nu\{\omega_{\rm fit}(X)g(X)\}
        -
        \Psi_\pi(g)
    \right|
    &=
    \left|
        E_\nu\{(\omega_{\rm fit}(X)-\omegastar(X))g(X)\}
    \right| \\
    &\le
    \|g\|_\infty
    \|\omega_{\rm fit}-\omegastar\|_{L^1(\nu)} \\
    &\le
    C_{\rm eval}\|g\|_\infty\mathcal E_{\rm FORE}.
\end{aligned}
\]
Taking the supremum over \(\|g\|_\infty\le1\) proves
Corollary~\ref{cor:fori-target-functional}.
\end{proof}

\begin{lemma}[Fitted generalized KL error controls target chi-square]
\label{lem:fori-chi-rate}
Suppose the conditions of Theorem~\ref{thm:fitted-kl-fori} and
Condition~\ref{ass:value-kl-lower} hold. Then, with probability at least
\(1-\delta\), there is a finite constant \(C_\chi\), depending only on the
constants in Conditions~\ref{ass:kl-bounded},
\ref{ass:fitted-kl-coverage-bounded}, and~\ref{ass:value-kl-lower}, such that
\begin{equation}
\label{eq:fori-chi-rate}
    \chi_\star(\omega_{\rm fit},\omegastar)
    \le
    C_\chi\mathcal E_{\rm FORE}.
\end{equation}
\end{lemma}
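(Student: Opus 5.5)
The plan is to chain three earlier results: the pointwise weight envelope for the fitted ratio, the one-sided conversion from generalized KL to target chi-square, and the high-probability generalized-KL bound of Theorem~\ref{thm:fitted-kl-fori}.

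\emph{Step 1: verify the hypotheses of Lemma~\ref{lem:kl-to-chi-one-sided}.} Take \(a=\omega_{\rm fit}\) and \(b=\omegastar\). By Lemma~\ref{lem:fori-weight-bounds-from-log-conditions}, with probability one, \(\omega_{\rm fit}\le e^{2R}\) \(\nu\)-a.e., so we may set \(M=e^{2R}\). Condition~\ref{ass:value-kl-lower} gives \(\omegastar\ge m_\star>0\) \(\nu\)-a.e., so we may set \(m=m_\star\). Both functions are nonnegative. Lemma~\ref{lem:kl-to-chi-one-sided} then yields, deterministically on the full-measure event,
\[
    \chi_\star(\omega_{\rm fit},\omegastar)
    \le
    C_\chi(m_\star,e^{2R})
    \{D_\nu^{\rm gen}(\omega_{\rm fit}\|\omegastar)\}^{1/2}.
\]
The generalized KL divergence is the right object here, because \(\omega_{\rm fit}\) is only empirically normalized and need not integrate to one under \(\nu\). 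Lemma~\ref{lem:kl-to-chi-one-sided} is stated for \(D_\nu^{\rm gen}\), so no renormalization is needed.

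\emph{Step 2: insert the fitted rate.} Apply Theorem~\ref{thm:fitted-kl-fori} with \(K=K_\omega\). On an event of probability at least \(1-\delta\), it bounds \(D_\nu^{\rm gen}(\omega_{\rm fit}\|\omegastar)\) by \(C_{\rm env}\) times the three terms in \(\mathcal E_{\rm FORE}^2\). Two of those terms carry extra numerical factors relative to the definition: \(\varepsilon_{\rm KL}/(1-\gamma)\) and \((1-\gamma)^{-2}\log^2(en)\{\cdots\}\) both match once \(C_{\rm env}\) is pulled out front. The result is \(D_\nu^{\rm gen}(\omega_{\rm fit}\|\omegastar)\le C_{\rm env}\mathcal E_{\rm FORE}^2\). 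Taking square roots and intersecting with the probability-one event of Step 1 gives the claim with constant \(C_\chi C_{\rm env}^{1/2}\).

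\emph{Step 3: constant dependence.} By Theorem~\ref{thm:fitted-kl-fori}, \(C_{\rm env}\) depends on \(R\), \(K_0\), \(K_+\), and \((A,\alpha)\). By construction, \(C_\chi\) depends on \(m_\star\) and \(R\). The statement allows dependence only on the constants in Conditions~\ref{ass:kl-bounded}, \ref{ass:fitted-kl-coverage-bounded}, and~\ref{ass:value-kl-lower}, so the lower-tail constants \((A,\alpha)\) must be removed. This is the main point requiring care. Under the hard margin, Condition~\ref{ass:fitted-kl-lower-tail} holds for every \(\alpha\): choosing \(\alpha=1\) and \(A=1/m_\star\), one has \(\nu\{\omegastar\le t\}=0\) for \(t<m_\star\) and \(\le 1\le t/m_\star\) otherwise. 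Thus \(C_0(A,\alpha)\) becomes a function of \(m_\star\) alone, and the final constant depends only on \(R,K_0,K_+,m_\star\), as claimed.
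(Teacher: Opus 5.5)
Your proposal is correct and follows essentially the same route as the paper. The paper also re-instantiates the lower-tail condition from the hard margin, with \(\alpha=1\) and \(A=1\vee m_\star^{-1}\); this coincides with your \(A=1/m_\star\), since \(m_\star\le E_\nu\omegastar=1\). It then converts generalized KL to target chi-square via Lemma~\ref{lem:kl-to-chi-one-sided}, using the envelope \(\omega_{\rm fit}\le e^{2R}\), and inserts the bound of Theorem~\ref{thm:fitted-kl-fori} with \(K=K_\omega\), exactly as you do, differing only in the order in which the steps are presented.
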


\begin{proof}
Condition~\ref{ass:value-kl-lower} implies
Condition~\ref{ass:fitted-kl-lower-tail} with
\(\alpha=1\) and \(A=1\vee m_\star^{-1}\). Indeed, the lower-tail event is
empty when \(t<m_\star\), while for \(t\ge m_\star\) its probability is at
most \(1\le t/m_\star\). Thus
Theorem~\ref{thm:fitted-kl-fori} may be applied with lower-tail constants
depending only on \(m_\star\).

By Lemma~\ref{lem:fori-weight-bounds-from-log-conditions},
\(\omega_{\rm fit}\le e^{2R}\) \(\nu\)-almost everywhere. By
Condition~\ref{ass:value-kl-lower}, \(\omegastar\ge m_\star\)
\(\nu\)-almost everywhere. Lemma~\ref{lem:kl-to-chi-one-sided}, applied with
\(a=\omega_{\rm fit}\) and \(b=\omegastar\), gives
\[
    \chi_\star(\omega_{\rm fit},\omegastar)
    \le
    C_\chi
    \{D_\nu^{\rm gen}(\omega_{\rm fit}\|\omegastar)\}^{1/2}.
\]
Theorem~\ref{thm:fitted-kl-fori}, with \(K=K_\omega\), bounds the generalized
KL term on an event of probability at least \(1-\delta\). Taking square roots
and using the definition of \(\mathcal E_{\rm FORE}\) proves
\eqref{eq:fori-chi-rate}, after enlarging \(C_\chi\).
\end{proof}

\subsection{Weight perturbation and \textsc{FORE}-weighted FQE}
\label{app:fori-weighted-fqe-proof}

\begin{lemma}[Weight-induced perturbation of the FQE projection]
\label{lem:fori-weight-projection-perturbation}
Suppose the projections \(\mathcal T_{\sQ,\omega}Q\) and
\(\mathcal T_{\sQ,\star}Q\) are well defined for the \(Q\in\sQ\) under
consideration. Suppose also that there is a constant \(c_\omega>0\) such that
\(\|h\|_\omega^2\ge c_\omega\|h\|_\star^2\) for every \(h\in\sQ-\sQ\).
Then
\[
    \|\mathcal T_{\sQ,\omega}Q
      -\mathcal T_{\sQ,\star}Q\|_\star
    \le
    \frac{
        \varepsilon_{\rm Bell}\chi_\star(\omega,\omegastar)
    }{
        c_\omega
    } .
\]
\end{lemma}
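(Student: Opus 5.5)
Fix \(Q\in\sQ\) and write \(T=\mathcal T^\pi Q\), \(q_\omega=\mathcal T_{\sQ,\omega}Q\), \(q_\star=\mathcal T_{\sQ,\star}Q\), and \(h=q_\omega-q_\star\in\sQ-\sQ\). The plan is to compare the first-order optimality conditions of the two weighted projections and test both against the same direction \(h\). Because \(\sQ\) is convex, the \(\omega\)-weighted projection satisfies the variational inequality \(E_\nu\{\omega(T-q_\omega)(q-q_\omega)\}\le0\) for every \(q\in\sQ\). Taking \(q=q_\star\) gives
\[
    E_\nu\{\omega(q_\omega-T)h\}\le0 .
\]
The oracle projection likewise satisfies \(E_\nu\{\omegastar(T-q_\star)(q-q_\star)\}\le0\). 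Taking \(q=q_\omega\) gives
\[
    E_\nu\{\omegastar(T-q_\star)h\}\le0 .
\]

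Next write \(q_\omega-T=h-(T-q_\star)\) and substitute into the first inequality. This yields
\[
    \|h\|_\omega^2\le E_\nu\{\omega(T-q_\star)h\}.
\]
Adding the nonnegative quantity \(-E_\nu\{\omegastar(T-q_\star)h\}\) from the second inequality gives
\[
    \|h\|_\omega^2\le E_\nu\{(\omega-\omegastar)(T-q_\star)h\}.
\]
This isolates the weight error. It is multiplied by the Bellman-projection residual \(T-q_\star=\mathcal T^\pi Q-\mathcal T_{\sQ,\star}Q\) times a direction in \(\sQ-\sQ\). This is exactly the structure appearing in \(\varepsilon_{\rm Bell}\).

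Then apply Cauchy--Schwarz after multiplying and dividing by \(\omegastar^{1/2}\):
\[
    E_\nu\{(\omega-\omegastar)(T-q_\star)h\}
    \le
    \left\{E_\nu\frac{(\omega-\omegastar)^2}{\omegastar}\right\}^{1/2}
    \left\{E_\nu\,\omegastar(T-q_\star)^2h^2\right\}^{1/2}.
\]
The first factor is \(\chi_\star(\omega,\omegastar)\). The second factor is \(\|(T-q_\star)h\|_\star\). Since this bound is homogeneous in \(h\), it is at most \(\varepsilon_{\rm Bell}\|h\|_\star\) by the definition of \(\varepsilon_{\rm Bell}\) (rescale \(h\) to unit \(\|\cdot\|_\star\)-norm; the case \(h=0\) is trivial). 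Using the hypothesis \(\|h\|_\omega^2\ge c_\omega\|h\|_\star^2\) on the left and dividing by \(\|h\|_\star\) gives \(\|h\|_\star\le\varepsilon_{\rm Bell}\chi_\star(\omega,\omegastar)/c_\omega\), which is the claim.

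The main obstacle is integrability and justification, not the algebra. The variational inequalities need \(T\in L^2(\omega\,\dd\nu)\cap L^2(d_{\pi,\gamma})\). This comes from the assumption that the projections are well defined, together with Lemma~\ref{lem:weighted-projection-existence}. The cross term needs to be finite, which the Cauchy--Schwarz step itself guarantees. I would also check that \(\chi_\star\) is well defined, which requires \(\omegastar>0\) \(\nu\)-a.e.; this holds under Condition~\ref{ass:kl-population-integrability}. Otherwise the argument is a direct two-projection comparison.
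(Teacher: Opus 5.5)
Your proof is correct and follows the paper's argument essentially line for line: you test each projection's variational inequality at the other projection, rewrite $q_\omega-\mathcal T^\pi Q=h-(\mathcal T^\pi Q-q_\star)$ to get $\|h\|_\omega^2\le E_\nu\{(\omega-\omegastar)(\mathcal T^\pi Q-q_\star)h\}$, apply Cauchy--Schwarz under $d_{\pi,\gamma}=\omegastar\nu$, and finish with the norm comparison. The one step to make explicit, which the paper also leaves implicit, is the rescaling to $\|h\|_\star=1$: convexity makes $\sQ-\sQ$ star-shaped, so shrinking an $h$ with $\|h\|_\star\ge1$ is legitimate, but when $\|h\|_\star<1$ the rescaled direction need not lie in $\sQ-\sQ$, so the bound $\|(\mathcal T^\pi Q-q_\star)h\|_\star\le\varepsilon_{\rm Bell}\|h\|_\star$ needs $\varepsilon_{\rm Bell}$ read as a supremum of the ratio $\|(\mathcal T^\pi Q-\mathcal T_{\sQ,\star}Q)h\|_\star/\|h\|_\star$ (this holds automatically when $\sQ$ is affine, since then $\sQ-\sQ$ is a cone).
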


\begin{proof}
Fix \(Q\in\sQ\), and set
\[
    m=\mathcal T^\pi Q,
    \qquad
    q_\star=\mathcal T_{\sQ,\star}Q,
    \qquad
    q_\omega=\mathcal T_{\sQ,\omega}Q,
    \qquad
    h=q_\omega-q_\star .
\]
Let \(r_Q=m-q_\star\). The projection optimality condition
\citep{brezis2011FunctionalAnalysis} for
\(q_\omega=\Pi_{\sQ,\omega}m\), evaluated at \(q_\star\in\sQ\), gives
\[
    E_\nu\{\omega(X)(m(X)-q_\omega(X))(q_\star(X)-q_\omega(X))\}
    \le 0.
\]
Since \(m-q_\omega=r_Q-h\), this implies
\[
    \|h\|_\omega^2
    \le
    E_\nu\{\omega(X)r_Q(X)h(X)\}.
\]
Similarly, the projection optimality condition for
\(q_\star=\Pi_{\sQ,\omegastar}m\), evaluated at \(q_\omega\in\sQ\), gives
\[
    E_\nu\{\omegastar(X)r_Q(X)h(X)\}\le 0.
\]
Therefore
\[
\begin{aligned}
    \|h\|_\omega^2
    &\le
    E_\nu\{\omega(X)r_Q(X)h(X)\} \\
    &=
    E_\nu\{\omegastar(X)r_Q(X)h(X)\}
    +
    E_\nu\{(\omega(X)-\omegastar(X))r_Q(X)h(X)\} \\
    &\le
    \left|
    E_\nu\{(\omega(X)-\omegastar(X))r_Q(X)h(X)\}
    \right|.
\end{aligned}
\]
The assumed one-sided comparison gives
\[
    \|h\|_\omega^2
    \ge
    c_\omega\|h\|_\star^2 .
\]
Because \(h=q_\omega-q_\star\) belongs to \(\sQ-\sQ\), Cauchy--Schwarz under
\(d_{\pi,\gamma}=\omegastar\nu\) and the definition of \(\varepsilon_{\rm Bell}\)
give
\[
    \left|
    E_\nu\{(\omega-\omegastar)r_Qh\}
    \right|
    \le
    \chi_\star(\omega,\omegastar)\,
    \varepsilon_{\rm Bell}\|h\|_\star .
\]
Combining the norm comparison with the residual bound gives
\begin{equation}
\label{eq:fqe-weight-perturbation-final}
    c_\omega\|h\|_\star^2
    \le
    \varepsilon_{\rm Bell}\chi_\star(\omega,\omegastar)\|h\|_\star .
\end{equation}
If \(\|h\|_\star=0\), the asserted inequality is immediate. Otherwise, dividing
\eqref{eq:fqe-weight-perturbation-final} by \(\|h\|_\star\) proves the lemma.
\end{proof}

\begin{proof}[Proof of Theorem~\ref{thm:fori-weighted-fqe-main}]
By Lemma~\ref{lem:fori-weight-bounds-from-log-conditions} and
Condition~\ref{ass:value-kl-lower}, \(\omega_{\rm fit}\) is bounded above and
below, and \(\omegastar\) is bounded below. Let \(R\) and \(M_\star\) be the
constants in Conditions~\ref{ass:kl-bounded}
and~\ref{ass:fori-target-upper}, respectively.
The proof of Lemma~\ref{lem:fori-weight-bounds-from-log-conditions} gives
\(\omega_{\rm fit}\ge e^{-2R}\). Since
\(\omegastar\le M_\star\), for every \(h\in\sQ-\sQ\),
\[
    \|h\|_{\omega_{\rm fit}}^2
    =
    E_\nu\{\omega_{\rm fit}(X)h(X)^2\}
    \ge
    \frac{e^{-2R}}{M_\star}\|h\|_\star^2.
\]
The same bounds, together with Condition~\ref{ass:value-kl-lower}, make
\(\|\cdot\|_{\omega_{\rm fit}}\) and \(\|\cdot\|_\star\) equivalent. Hence
\(\sQ\) is closed in \(L^2(\omega_{\rm fit}\,\dd\nu)\), and
\(\mathcal T^\pi Q\in L^2(\omega_{\rm fit}\,\dd\nu)\) whenever
\(Q\in\sQ\).
Lemma~\ref{lem:weighted-projection-existence} gives the required integrability
and existence of the oracle and \(\omega_{\rm fit}\)-weighted projected Bellman
operators.
On the event in
Lemma~\ref{lem:fori-chi-rate}, which has probability at least \(1-\delta\), the
weight-conversion bound \eqref{eq:fori-chi-rate} holds.
Lemma~\ref{lem:fori-weight-projection-perturbation} and
\eqref{eq:fori-chi-rate} imply that, for every \(Q\in\sQ\),
\[
    \|\mathcal T_{\sQ,\omega_{\rm fit}}Q
      -\mathcal T_{\sQ,\star}Q\|_\star
    \le
    \varepsilon_{\rm F},
    \qquad
    \varepsilon_{\rm F}
    :=
    C_\chi\varepsilon_{\rm Bell}\mathcal E_{\rm FORE},
\]
where \(C_\chi\) is enlarged by a factor depending only on the constants in
Conditions~\ref{ass:kl-bounded}, \ref{ass:value-kl-lower},
and~\ref{ass:fori-target-upper}.
Combining this perturbation bound with
Lemma~\ref{lem:discounted-occupancy-q-contraction} yields
\begin{equation}
\label{eq:fqe-oracle-one-step}
\begin{aligned}
    \|Q^{(j+1)}-Q_{\sQ,\star}\|_\star
    &=
    \|\mathcal T_{\sQ,\omega_{\rm fit}}Q^{(j)}
      -\mathcal T_{\sQ,\star}Q_{\sQ,\star}\|_\star \\
    &\le
    \|\mathcal T_{\sQ,\star}Q^{(j)}
      -\mathcal T_{\sQ,\star}Q_{\sQ,\star}\|_\star
    +
    \|\mathcal T_{\sQ,\omega_{\rm fit}}Q^{(j)}
      -\mathcal T_{\sQ,\star}Q^{(j)}\|_\star \\
    &\le
    \sqrt{\gamma}\,
    \|Q^{(j)}-Q_{\sQ,\star}\|_\star
    +
    \varepsilon_{\rm F} .
\end{aligned}
\end{equation}
Iterating \eqref{eq:fqe-oracle-one-step} gives
\begin{equation}
\label{eq:fqe-oracle-iterated}
    \|Q^{(K_Q)}-Q_{\sQ,\star}\|_\star
    \le
    \gamma^{K_Q/2}\|Q^{(0)}-Q_{\sQ,\star}\|_\star
    +
    \frac{1-\gamma^{K_Q/2}}{1-\sqrt{\gamma}}\varepsilon_{\rm F}.
\end{equation}
Combining \eqref{eq:fqe-oracle-iterated} with the triangle inequality and
Lemma~\ref{lem:discounted-fqe-projection-bias} gives
\begin{equation}
\label{eq:fqe-total-error}
\begin{aligned}
    \|Q^{(K_Q)}-Q^\pi\|_\star
    \le\;&
    \gamma^{K_Q/2}\|Q^{(0)}-Q_{\sQ,\star}\|_\star \\
    &+
    \frac{1-\gamma^{K_Q/2}}{1-\sqrt{\gamma}}\varepsilon_{\rm F}
    +
    \frac{1}{1-\sqrt{\gamma}}
    \inf_{q\in\sQ}\|q-Q^\pi\|_\star .
\end{aligned}
\end{equation}
Substituting the definition of \(\varepsilon_{\rm F}\) into
\eqref{eq:fqe-total-error} proves the theorem.
\end{proof}

\subsection{Doubly robust value identity}
\label{app:fori-drl-proof}

\begin{proof}[Proof of Theorem~\ref{thm:fori-drl-main}]
Since \(\|r\|_\infty<\infty\), \(r\in L^2(d_{\pi,\gamma})\), and hence
\(\mathcal T^\pi 0=r\) belongs to \(L^2(d_{\pi,\gamma})\).
Lemma~\ref{lem:discounted-occupancy-q-contraction} therefore makes
\(\mathcal T^\pi\) a contraction on \(L^2(d_{\pi,\gamma})\). The Banach
fixed-point theorem gives \(Q^\pi\in L^2(d_{\pi,\gamma})\). Because
\(d_{\pi,\gamma}\) contains the component \((1-\gamma)\dinit\),
\(Q\in L^2(d_{\pi,\gamma})\) implies \(E_{\dinit}|Q(X)|<\infty\). Moreover,
Lemma~\ref{lem:discounted-occupancy-q-contraction}, applied with
\(Q_1=Q\) and \(Q_2=Q^\pi\), gives
\[
    \|\mathcal T^\pi Q-Q^\pi\|_\star
    =
    \|\mathcal T^\pi Q-\mathcal T^\pi Q^\pi\|_\star
    \le
    \sqrt{\gamma}\,\|Q-Q^\pi\|_\star .
\]
Thus \(\mathcal T^\pi Q-Q\in L^2(d_{\pi,\gamma})\).
Taking \(f=Q\) in the adjoint Bellman moment identity
\eqref{eq:weak-occupancy-balance} and conditioning on \(X\) gives
\[
    (1-\gamma)E_{\dinit}\{Q(X)\}
    =
    E_\nu\{\omegastar(X)[Q(X)-\gamma \Ppi Q(X)]\}.
\]
Therefore
\[
\begin{aligned}
    \Psi_{\rm DR}(\omega,Q)
    &=
    E_\nu\{\omegastar(X)[Q(X)-\gamma \Ppi Q(X)]\} \\
    &\quad+
    E_\nu\{\omega(X)[r(X)+\gamma \Ppi Q(X)-Q(X)]\}.
\end{aligned}
\]
Subtracting \(V_\pi(r)=E_\nu\{\omegastar(X)r(X)\}\) gives
\[
\begin{aligned}
    \Psi_{\rm DR}(\omega,Q)-V_\pi(r)
    &=
    E_\nu\{\omega(X)[r(X)+\gamma \Ppi Q(X)-Q(X)]\} \\
    &\quad-
    E_\nu\{\omegastar(X)[r(X)+\gamma \Ppi Q(X)-Q(X)]\} \\
    &=
    E_\nu\!\left[
        \{\omega(X)-\omegastar(X)\}
        \{\mathcal T^\pi Q(X)-Q(X)\}
    \right].
\end{aligned}
\]
This is the standard doubly robust identity. Taking
\(\omega=\omega_{\rm fit}\) and applying Cauchy--Schwarz with respect to
\(d_{\pi,\gamma}=\omegastar\nu\) gives
\begin{equation}
\label{eq:dr-weight-residual-bound}
\begin{aligned}
    \left|\Psi_{\rm DR}(\omega_{\rm fit},Q)-V_\pi(r)\right|
    &\le
    \left\{
        E_\nu\frac{\{\omega_{\rm fit}(X)-\omegastar(X)\}^2}{\omegastar(X)}
    \right\}^{1/2}
    \|\mathcal T^\pi Q-Q\|_\star \\
    &=
    \chi_\star(\omega_{\rm fit},\omegastar)
    \|\mathcal T^\pi Q-Q\|_\star .
\end{aligned}
\end{equation}
Combining \eqref{eq:dr-weight-residual-bound} with
\eqref{eq:fori-chi-rate} proves the bound in
Theorem~\ref{thm:fori-drl-main}.
\end{proof}

\begin{proof}[Proof of Corollary~\ref{cor:fori-weighted-dr-main}]
For \(Q=Q^{(K_Q)}\), Lemma~\ref{lem:discounted-occupancy-q-contraction} gives
\[
\begin{aligned}
    \|\mathcal T^\pi Q-Q\|_\star
    &\le
    \|\mathcal T^\pi Q-\mathcal T^\pi Q^\pi\|_\star
    +
    \|Q-Q^\pi\|_\star \\
    &\le
    (1+\sqrt{\gamma})\|Q-Q^\pi\|_\star .
\end{aligned}
\]
On the event \eqref{eq:fori-chi-rate}, which has probability at least
\(1-\delta\), Theorem~\ref{thm:fori-weighted-fqe-main} and the assumption that
the finite-iteration term is negligible give
\begin{equation}
\label{eq:weighted-dr-q-error}
    \|Q^{(K_Q)}-Q^\pi\|_\star
    \le
    C\left[
        \frac{\varepsilon_{\rm Bell}\mathcal E_{\rm FORE}}
        {1-\sqrt{\gamma}}
        +
        \frac{1}{1-\sqrt{\gamma}}
        \inf_{q\in\sQ}\|q-Q^\pi\|_\star
    \right].
\end{equation}
Applying Theorem~\ref{thm:fori-drl-main} with
\eqref{eq:weighted-dr-q-error} and using
\(1+\sqrt{\gamma}\le2\) proves the corollary, after enlarging
\(C_{\rm DR}\).
\end{proof}

\section{Population theory for coverage-stopped \textsc{FORE}}
\label{app:coverage-truncated-fore-proofs}

This section establishes the population properties of the coverage-stopped
occupancy ratio and its clipped fixed point. We first derive contraction and
the stopped-trajectory representation, then develop the generalized KL
projection inequalities used to prove
Theorem~\ref{thm:coverage-truncated-recursion} and
Proposition~\ref{prop:coverage-truncated-gate}.

Throughout this section, \(D^{\rm gen}(\mu\|\eta)\) denotes the generalized KL
divergence between finite nonnegative measures. If
\(\lambda\) dominates \(\mu\) and \(\eta\), and
\(p=\dd\mu/\dd\lambda\), \(q=\dd\eta/\dd\lambda\), then
\[
    D^{\rm gen}(\mu\|\eta)
    =
    \int
    \left\{
        p\log\frac{p}{q}-p+q
    \right\}\,\dd\lambda ,
\]
with the usual conventions. The value is independent of the dominating
measure. When \(\mu=f\nu\) and \(\eta=g\nu\), this definition reduces to
\(D_\nu^{\rm gen}(f\|g)\).

For a finite nonnegative measure \(\mu=\mu_{\rm ac}+\mu_\perp\), where the
decomposition is relative to \(\nu\), write
\[
    [\mu]_{{\rm ac},\tau_u}
    =
    \left(
        \frac{\dd\mu_{\rm ac}}{\dd\nu}\wedge\tau_u
    \right)\nu .
\]
We also write \([\mu]_{{\rm ac},\infty}=\mu_{\rm ac}\).

\subsection{Coverage-stopped and clipped operators}

\begin{lemma}[Clipping the absolutely continuous component is nonexpansive]
\label{lem:ac-cap-nonexpansive}
Let \(\mu_1\) and \(\mu_2\) be finite nonnegative measures. For every
\(\tau_u\in(0,\infty]\),
\[
    \left\|
        \frac{\dd[\mu_1]_{{\rm ac},\tau_u}}{\dd\nu}
        -
        \frac{\dd[\mu_2]_{{\rm ac},\tau_u}}{\dd\nu}
    \right\|_{L^1(\nu)}
    \le
    |\mu_1-\mu_2|(\sX),
\]
and
\[
    D^{\rm gen}\!\left(
        [\mu_1]_{{\rm ac},\tau_u}
        \big\|
        [\mu_2]_{{\rm ac},\tau_u}
    \right)
    \le
    D^{\rm gen}(\mu_1\|\mu_2).
\]
\end{lemma}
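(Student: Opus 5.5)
The proof splits the map \(\mu\mapsto[\mu]_{{\rm ac},\tau_u}\) into two steps: taking the \(\nu\)-absolutely continuous part, then clipping its density at \(\tau_u\). We show that each step is nonexpansive, first in total variation and then in generalized KL.

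\emph{Absolutely continuous part.} Fix a common dominating measure \(\lambda=\nu+\mu_1+\mu_2\). Choose a single \(\nu\)-null set \(N\) carrying both singular parts \(\mu_{1,\perp}\) and \(\mu_{2,\perp}\); this is possible since a union of two null sets is null. Let \(p_j=\dd\mu_j/\dd\lambda\). Then \(\mu_{j,\rm ac}=\mu_j|_{N^c}\) for \(j=1,2\). Consequently \(\dd\mu_{j,\rm ac}/\dd\lambda=p_j\1_{N^c}\), and \(\dd\mu_{j,\rm ac}/\dd\nu\) is the corresponding density on \(N^c\).

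\emph{Total variation.} On \(N^c\) the clipping map \(t\mapsto t\wedge\tau_u\) is \(1\)-Lipschitz. Hence
\[
    \int\abs{f_1\wedge\tau_u-f_2\wedge\tau_u}\,\dd\nu
    \le\int\abs{f_1-f_2}\,\dd\nu
    =\abs{\mu_1-\mu_2}(N^c)
    \le\abs{\mu_1-\mu_2}(\sX),
\]
where \(f_j=\dd\mu_{j,\rm ac}/\dd\nu\).

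\emph{Generalized KL.} We use additivity of \(D^{\rm gen}\) over a partition. The integrand \(p\log(p/q)-p+q\) is nonnegative pointwise, with conventions \(0\log 0=0\) and \(p\log(p/0)=\infty\) for \(p>0\). So \(D^{\rm gen}(\mu_1\|\mu_2)\) equals its contribution from \(N^c\) plus a nonnegative contribution from \(N\). Dropping the \(N\) part gives
\[
    D^{\rm gen}(\mu_{1,\rm ac}\|\mu_{2,\rm ac})\le D^{\rm gen}(\mu_1\|\mu_2).
\]
It then suffices to prove, pointwise in \(x\),
\[
    \psi(a\wedge\tau,\,b\wedge\tau)\le\psi(a,b),
    \qquad
    \psi(a,b):=a\log\frac ab-a+b,
\]
for all \(a,b\ge0\) and \(\tau>0\), and to integrate against \(\nu\). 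We check the cases:
\begin{itemize}
    \item If \(a,b\le\tau\), the two sides are equal.
    \item If \(a,b\ge\tau\), the left side is \(0\) and the right side is \(\ge0\).
    \item If \(a\ge\tau>b\), set \(g(a):=\psi(a,b)\). Then \(g'(a)=\log(a/b)\ge0\) for \(a\ge b\), so \(g\) is nondecreasing on \([\tau,\infty)\) and \(g(a)\ge g(\tau)=\psi(\tau,b)\).
    \item If \(b\ge\tau>a\), set \(k(b):=\psi(a,b)\). Then \(k'(b)=1-a/b\ge0\) for \(b\ge a\), so \(k(b)\ge k(\tau)=\psi(a,\tau)\). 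The case \(a=0\) is included, since then \(\psi(0,b)=b\).
\end{itemize}
Integrating the pointwise inequality over \(N^c\) against \(\nu\) gives the clipping step. For \(\tau_u=\infty\) the clipping is the identity, so only the absolutely continuous step is needed.

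The main technical care is in the measure-theoretic bookkeeping, not the inequalities. We need a common null set \(N\) for both singular parts, and the conventions for \(\infty\) values when \(\mu_2\) vanishes where \(\mu_1\) does not. If \(D^{\rm gen}(\mu_1\|\mu_2)=\infty\) the claim is trivial; otherwise these points have measure zero and every integrand above is well defined and nonnegative. This nonnegativity is what makes the restriction to \(N^c\) and the pointwise comparison legitimate.
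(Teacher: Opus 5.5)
Your proof is correct and follows the paper's argument: a common dominating measure, the $1$-Lipschitz property of $t\mapsto t\wedge c$ for the total-variation bound, and the same four-case monotonicity check of $a\log(a/b)-a+b$ in each argument for the generalized-KL bound. The only difference is bookkeeping. The paper truncates the $\lambda$-densities at the variable level $\tau_u q$ with $q=\dd\nu/\dd\lambda$, which removes the singular parts and clips in a single step. You instead first restrict to the complement of a common $\nu$-null set and then clip the $\nu$-densities at the constant $\tau_u$.
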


\begin{proof}
Let \(\lambda=\nu+\mu_1+\mu_2\), and write
\(p_i=\dd\mu_i/\dd\lambda\) and \(q=\dd\nu/\dd\lambda\). First suppose
\(\tau_u<\infty\). The density of
\([\mu_i]_{{\rm ac},\tau_u}\) with respect to \(\lambda\) is
\[
    \tilde p_i
    =
    p_i\wedge(\tau_u q),
\]
with \(\tilde p_i=0\) on \(\{q=0\}\). The scalar map
\(a\mapsto a\wedge c\) is \(1\)-Lipschitz for each fixed \(c\ge0\). Hence
\[
    \int|\tilde p_1-\tilde p_2|\,\dd\lambda
    \le
    \int|p_1-p_2|\,\dd\lambda
    =
    |\mu_1-\mu_2|(\sX).
\]
This proves the stated \(L^1(\nu)\) bound.

Let \(\phi(a,b)=a\log(a/b)-a+b\). For every \(a,b,c\ge0\),
\[
    \phi(a\wedge c,b\wedge c)\le \phi(a,b).
\]
If \(a,b\le c\), this is equality. If \(a,b\ge c\), the left side is
\(\phi(c,c)=0\). If \(a\le c\le b\), then
\(\phi(a,c)\le\phi(a,b)\) because \(y\mapsto\phi(a,y)\) is nondecreasing
for \(y\ge a\). If \(b\le c\le a\), then
\(\phi(c,b)\le\phi(a,b)\) because \(x\mapsto\phi(x,b)\) is nondecreasing
for \(x\ge b\). Applying the scalar inequality pointwise with
\(c=\tau_u q\) and integrating proves the generalized KL bound for finite
\(\tau_u\). For \(\tau_u=\infty\), the density of
\([\mu_i]_{{\rm ac},\infty}\) relative to \(\lambda\) is
\(p_i\mathbf 1\{q>0\}\). Both conclusions follow by integrating over
\(\{q>0\}\), because the corresponding integrands are nonnegative on
\(\{q=0\}\).
\end{proof}

\begin{lemma}[Contraction and uniqueness of the coverage-stopped occupancy ratio]
\label{lem:coverage-retained-fixed-point}
The map \(\Bpigcov\) sends the nonnegative cone of \(L^1(\nu)\)
into itself and satisfies
\[
    \|\Bpigcov\omega_1-\Bpigcov\omega_2\|_{L^1(\nu)}
    \le
    \gamma\|\omega_1-\omega_2\|_{L^1(\nu)}
\]
for all nonnegative \(\omega_1,\omega_2\in L^1(\nu)\). Consequently,
\(\Bpigcov\) has a unique nonnegative fixed point
\(\omega_{\rm cov}\in L^1(\nu)\), and
\(E_\nu\omega_{\rm cov}\le1\).
\end{lemma}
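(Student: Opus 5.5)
The plan is to write \(\Bpigcov\omega=\dd[\mu_\omega]_{{\rm ac},\infty}/\dd\nu\), where
\[
\mu_\omega=(1-\gamma)\dinit+\gamma(\omega\nu)\Ppi ,
\]
and to reduce everything to Lemma~\ref{lem:ac-cap-nonexpansive} with \(\tau_u=\infty\).

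\emph{Well-definedness.} For nonnegative \(\omega\in L^1(\nu)\), the measure \(\mu_\omega\) is finite and nonnegative, with total mass
\[
\mu_\omega(\sX)=(1-\gamma)+\gamma E_\nu\omega ,
\]
because \(\Ppi\) is a Markov kernel and so preserves mass. Its \(\nu\)-absolutely continuous part therefore has a nonnegative Radon--Nikodym density \(\Bpigcov\omega\). This density satisfies
\[
E_\nu\Bpigcov\omega\le\mu_\omega(\sX)<\infty ,
\]
so \(\Bpigcov\) maps the nonnegative cone into itself.

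\emph{Contraction.} For two nonnegative inputs \(\omega_1,\omega_2\), Lemma~\ref{lem:ac-cap-nonexpansive} gives
\[
\|\Bpigcov\omega_1-\Bpigcov\omega_2\|_{L^1(\nu)}\le|\mu_{\omega_1}-\mu_{\omega_2}|(\sX).
\]
The \(\dinit\) terms cancel, leaving
\[
|\mu_{\omega_1}-\mu_{\omega_2}|=\gamma\,|((\omega_1-\omega_2)\nu)\Ppi| .
\]
As in the proof of Lemma~\ref{lem:ac-propagation}, for a signed measure \(\sigma\) we have \(|\sigma\Ppi|\le|\sigma|\Ppi\). Since \(\Ppi\) preserves mass, \(|\sigma\Ppi|(\sX)\le|\sigma|(\sX)\). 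Taking \(\sigma=(\omega_1-\omega_2)\nu\), for which \(|\sigma|(\sX)=\|\omega_1-\omega_2\|_{L^1(\nu)}\), yields the \(\gamma\)-contraction.

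\emph{Fixed point.} The nonnegative cone of \(L^1(\nu)\) is a closed subset of a Banach space, so it is complete. Banach's fixed-point theorem then gives a unique fixed point \(\omega_{\rm cov}\). To bound its mass, I would apply the well-definedness estimate at \(\omega=\omega_{\rm cov}\):
\[
m:=E_\nu\omega_{\rm cov}\le(1-\gamma)+\gamma m .
\]
Since \(m<\infty\), this rearranges to \(m\le1\).

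The only delicate point I expect is the total-variation inequality \(|\sigma\Ppi|(\sX)\le|\sigma|(\sX)\) for signed measures. I would handle it through the Jordan decomposition: \(\sigma\Ppi=\sigma^+\Ppi-\sigma^-\Ppi\), and each piece has mass equal to that of \(\sigma^\pm\). Everything else is routine.
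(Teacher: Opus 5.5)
Your proposal is correct and follows the paper's proof essentially step for step. The steps are the mass bound \(\mu_\omega(\sX)=1-\gamma+\gamma E_\nu\omega\), Lemma~\ref{lem:ac-cap-nonexpansive} with \(\tau_u=\infty\) plus total-variation nonexpansiveness of \(\Ppi\) for the \(\gamma\)-contraction, Banach's fixed-point theorem on the complete nonnegative cone, and the total-mass inequality at the fixed point giving \(E_\nu\omega_{\rm cov}\le1\); your Jordan-decomposition justification of \(|\sigma\Ppi|(\sX)\le|\sigma|(\sX)\) just makes explicit a step the paper states without comment.
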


\begin{proof}
For nonnegative \(\omega\in L^1(\nu)\), put
\[
    \mu_\omega
    =
    (1-\gamma)\dinit+\gamma(\omega\nu)\Ppi.
\]
The absolutely continuous component of \(\mu_\omega\) has mass at most
\(\mu_\omega(\sX)=1-\gamma+\gamma E_\nu\omega\), so
\(\Bpigcov\omega\in L^1(\nu)\). By
Lemma~\ref{lem:ac-cap-nonexpansive} with \(\tau_u=\infty\),
\[
\begin{aligned}
    \|\Bpigcov\omega_1-\Bpigcov\omega_2\|_{L^1(\nu)}
    &\le
    |\mu_{\omega_1}-\mu_{\omega_2}|(\sX)\\
    &\le
    \gamma\|\omega_1-\omega_2\|_{L^1(\nu)}.
\end{aligned}
\]
The nonnegative cone of \(L^1(\nu)\) is complete, so the Banach fixed-point
theorem gives a unique fixed point. Taking total masses in
\eqref{eq:coverage-retained-fixed-point} yields
\[
    E_\nu\omega_{\rm cov}
    \le
    1-\gamma+\gamma E_\nu\omega_{\rm cov},
\]
and hence \(E_\nu\omega_{\rm cov}\le1\).
\end{proof}

Under Condition~\ref{ass:overlap},
\(\Bpigcov=\Bpig\) on the nonnegative cone of \(L^1(\nu)\).
Uniqueness therefore gives \(\omega_{\rm cov}=\omegastar\).

\begin{lemma}[Contraction and uniqueness of the clipped fixed point]
\label{lem:coverage-clipped-fixed-point}
For each \(\tau_u\in(0,\infty)\), the map \(\Bpigcap{\tau_u}\) sends the
nonnegative cone of \(L^1(\nu)\) into
\(\{\omega\in L^1(\nu):0\le\omega\le\tau_u\ \nu\text{-a.e.}\}\) and
satisfies
\[
    \|\Bpigcap{\tau_u}\omega_1-\Bpigcap{\tau_u}\omega_2\|_{L^1(\nu)}
    \le
    \gamma\|\omega_1-\omega_2\|_{L^1(\nu)}
\]
for all nonnegative \(\omega_1,\omega_2\in L^1(\nu)\). Consequently,
\(\Bpigcap{\tau_u}\) has a unique fixed point \(\omega_{\tau_u}\in[0,\tau_u]\),
and \(E_\nu\omega_{\tau_u}\le1\).
\end{lemma}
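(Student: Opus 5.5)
The plan is to mirror the proof of Lemma~\ref{lem:coverage-retained-fixed-point}, with Lemma~\ref{lem:ac-cap-nonexpansive} now used at finite \(\tau_u\). For nonnegative \(\omega\in L^1(\nu)\), put
\[
    \mu_\omega=(1-\gamma)\dinit+\gamma(\omega\nu)\Ppi .
\]
This is a finite nonnegative measure of mass \(1-\gamma+\gamma E_\nu\omega\). By definition,
\[
    \Bpigcap{\tau_u}\omega=\frac{\dd[\mu_\omega]_{{\rm ac},\tau_u}}{\dd\nu}.
\]
This density is measurable and lies pointwise in \([0,\tau_u]\). It is bounded by the density of \((\mu_\omega)_{\rm ac}\), whose total mass is at most \(\mu_\omega(\sX)<\infty\), so \(\Bpigcap{\tau_u}\omega\in L^1(\nu)\). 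This gives the mapping property.

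For the contraction, apply the \(L^1\) bound of Lemma~\ref{lem:ac-cap-nonexpansive} with \(\mu_i=\mu_{\omega_i}\):
\[
    \|\Bpigcap{\tau_u}\omega_1-\Bpigcap{\tau_u}\omega_2\|_{L^1(\nu)}
    \le|\mu_{\omega_1}-\mu_{\omega_2}|(\sX)
    =\gamma\,|((\omega_1-\omega_2)\nu)\Ppi|(\sX).
\]
The total variation of a pushforward is at most the total variation of the input, because \(\Ppi\) is a Markov kernel (the same triangle-inequality argument as in Lemma~\ref{lem:ac-propagation}). Hence the right-hand side is at most \(\gamma\|\omega_1-\omega_2\|_{L^1(\nu)}\).

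For existence and uniqueness, work on the set
\[
    \{\omega\in L^1(\nu):0\le\omega\le\tau_u\ \nu\text{-a.e.}\}.
\]
This set is closed in \(L^1(\nu)\), since \(L^1\) limits have a.e.-convergent subsequences, so it is a complete metric space. The operator maps it into itself and is a \(\gamma\)-contraction there. For \(\gamma\in[0,1)\), the Banach fixed-point theorem gives a fixed point \(\omega_{\tau_u}\in[0,\tau_u]\). Any nonnegative fixed point in \(L^1(\nu)\) lies in this set by the mapping property, so uniqueness holds on the whole cone.

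For the mass bound, take total masses in the fixed-point equation. Clipping and removing the singular part only reduce mass, so
\[
    E_\nu\omega_{\tau_u}\le 1-\gamma+\gamma E_\nu\omega_{\tau_u},
\]
and hence \(E_\nu\omega_{\tau_u}\le1\). The only step needing any care is the total-variation contraction of \(\Ppi\) for a signed input; it follows from \(|\mu\Ppi|(A)\le(|\mu|\Ppi)(A)\), already recorded in the proof of Lemma~\ref{lem:ac-propagation}.
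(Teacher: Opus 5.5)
Your proposal is correct and follows essentially the same route as the paper's proof. Both get the mapping property from the construction, the $L^1(\nu)$ contraction from Lemma~\ref{lem:ac-cap-nonexpansive} combined with total-variation nonexpansiveness of $\Ppi$, existence and uniqueness from the Banach fixed-point theorem on the complete box $\{0\le\omega\le\tau_u\}$, and $E_\nu\omega_{\tau_u}\le1$ by comparing total masses in the fixed-point identity; your extra details (closedness of the box, and uniqueness over the whole nonnegative cone) only expand steps the paper states briefly.
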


\begin{proof}
The map sends nonnegative functions into \([0,\tau_u]\) by construction. For
\(i=1,2\), set
\[
    \mu_i
    =
    (1-\gamma)\dinit+\gamma(\omega_i\nu)\Ppi .
\]
By Lemma~\ref{lem:ac-cap-nonexpansive},
\[
\begin{aligned}
    \|\Bpigcap{\tau_u}\omega_1-\Bpigcap{\tau_u}\omega_2\|_{L^1(\nu)}
    &\le
    |\mu_1-\mu_2|(\sX) \\
    &=
    \gamma\left|
        \{(\omega_1-\omega_2)\nu\}\Ppi
    \right|(\sX) \\
    &\le
    \gamma\|\omega_1-\omega_2\|_{L^1(\nu)} .
\end{aligned}
\]
The set
\(\{\omega\in L^1(\nu):0\le\omega\le\tau_u\ \nu\text{-a.e.}\}\) is complete
under \(L^1(\nu)\). The Banach fixed-point theorem gives existence and
uniqueness of \(\omega_{\tau_u}\).
Taking total masses in the fixed-point identity gives
\[
    E_\nu\omega_{\tau_u}
    \le
    1-\gamma+\gamma E_\nu\omega_{\tau_u},
\]
which proves \(E_\nu\omega_{\tau_u}\le1\).
\end{proof}

\begin{lemma}[Bias of the clipped fixed point relative to the coverage-stopped occupancy ratio]
\label{lem:coverage-retained-cap-bias}
For \(0<\tau_1\le\tau_2<\infty\),
\[
    0\le\omega_{\tau_1}\le\omega_{\tau_2}\le\omega_{\rm cov}
    \qquad \nu\text{-a.e.}
\]
Moreover, for every \(\tau_u\in(0,\infty)\),
\[
    \|\omega_{\tau_u}-\omega_{\rm cov}\|_{L^1(\nu)}
    \le
    \frac{E_\nu\{(\omega_{\rm cov}-\tau_u)_+\}}{1-\gamma}.
\]
Consequently, \(\omega_{\tau_u}\uparrow\omega_{\rm cov}\)
\(\nu\)-almost everywhere and in \(L^1(\nu)\) as \(\tau_u\to\infty\).
\end{lemma}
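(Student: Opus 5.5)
The plan is to prove monotonicity first, then the $L^1$ bias bound via the contraction property, and deduce the convergence statement. All three steps use two elementary facts about the unclipped map. Write $G\omega:=\Bpigcov\omega$, so that $\Bpigcap{\tau}\omega=G\omega\wedge\tau$.

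\textbf{Monotonicity of $G$.} $G$ is order preserving on the nonnegative cone of $L^1(\nu)$. Indeed, if $\omega\le\omega'$, then $\mu_{\omega'}-\mu_\omega=\gamma((\omega'-\omega)\nu)\Ppi$ is a nonnegative measure. Since the Lebesgue decomposition is additive and the absolutely continuous part of a nonnegative measure is nonnegative, $(\mu_{\omega'})_{\rm ac}\ge(\mu_\omega)_{\rm ac}$.

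\textbf{Order of the fixed points.} Fix $0<\tau_1\le\tau_2<\infty$. Start the Picard iteration for $\Bpigcap{\tau_1}$ and for $\Bpigcap{\tau_2}$ from the common point $0$, and show by induction that $\omega^{(k)}_{\tau_1}\le\omega^{(k)}_{\tau_2}\le\omega^{(k)}_{\rm cov}$. Here $\omega^{(k)}_{\rm cov}$ denotes the iterates of $G$ from $0$. The inductive step is
\[
G\omega^{(k)}_{\tau_1}\wedge\tau_1\le G\omega^{(k)}_{\tau_2}\wedge\tau_2\le G\omega^{(k)}_{\rm cov},
\]
which follows from monotonicity of $G$ and $\tau_1\le\tau_2$. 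By Lemmas~\ref{lem:coverage-retained-fixed-point} and~\ref{lem:coverage-clipped-fixed-point}, each iteration is a $\gamma$-contraction in $L^1(\nu)$. Hence the iterates converge in $L^1(\nu)$ to $\omega_{\tau_1}$, $\omega_{\tau_2}$ and $\omega_{\rm cov}$ respectively. Passing to an a.e.\ convergent subsequence preserves the inequalities, and nonnegativity is immediate.

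\textbf{The $L^1$ bias bound.} Use the fixed-point identities and the triangle inequality:
\[
\|\omega_{\tau}-\omega_{\rm cov}\|_{L^1}
=\|G\omega_\tau\wedge\tau-G\omega_{\rm cov}\|_{L^1}
\le\|G\omega_\tau\wedge\tau-G\omega_{\rm cov}\wedge\tau\|_{L^1}+\|G\omega_{\rm cov}\wedge\tau-G\omega_{\rm cov}\|_{L^1}.
\]
The first term is at most $\gamma\|\omega_\tau-\omega_{\rm cov}\|_{L^1}$, because $a\mapsto a\wedge\tau$ is $1$-Lipschitz and $G$ is a $\gamma$-contraction. The second term equals $E_\nu(G\omega_{\rm cov}-\tau)_+=E_\nu(\omega_{\rm cov}-\tau)_+$. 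The left side is finite, since both functions lie in $L^1$. Rearranging gives the factor $(1-\gamma)^{-1}$.

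\textbf{Convergence as $\tau_u\to\infty$.} The bias bound tends to zero by dominated convergence, since $\omega_{\rm cov}\in L^1(\nu)$. Combined with the monotonicity in $\tau$, this gives the increasing a.e.\ limit: the monotone limit exists a.e., and it must agree with the $L^1$ limit $\omega_{\rm cov}$.

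\textbf{Main obstacle.} The only delicate point is the monotonicity of the absolutely continuous projection. It is needed for the comparison $\omega_{\tau_2}\le\omega_{\rm cov}$. I would settle it directly from the uniqueness of the Lebesgue decomposition: the difference of the singular parts of $\mu_{\omega'}$ and $\mu_\omega$ is the singular part of a nonnegative measure, and the same holds for the absolutely continuous parts.
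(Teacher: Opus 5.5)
Your proposal is correct and follows essentially the same route as the paper. Both arguments use order preservation of $\Bpigcov$ and of the clipped maps, then Picard iteration from zero to transfer the ordering to the fixed points. Both then apply the triangle inequality with the $\gamma$-contraction and the identity $\Bpigcap{\tau_u}\omega_{\rm cov}=\omega_{\rm cov}\wedge\tau_u$ to obtain the $(1-\gamma)^{-1}$ bias bound. Your explicit justification of monotonicity, via additivity of the Lebesgue decomposition, fills in a step the paper only asserts.
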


\begin{proof}
The maps \(\Bpigcap{\tau_u}\) and \(\Bpigcov\) preserve pointwise
order on the nonnegative cone, and
\[
    \Bpigcap{\tau_1}\omega
    \le
    \Bpigcap{\tau_2}\omega
    \le
    \Bpigcov\omega
\]
for every nonnegative \(\omega\). Starting each Picard iteration at zero and
passing to its \(L^1(\nu)\) limit therefore gives the stated ordering of the
fixed points.

Using the two fixed-point identities and
Lemma~\ref{lem:coverage-clipped-fixed-point},
\[
\begin{aligned}
    \|\omega_{\tau_u}-\omega_{\rm cov}\|_{L^1(\nu)}
    &\le
    \|\Bpigcap{\tau_u}\omega_{\tau_u}
      -\Bpigcap{\tau_u}\omega_{\rm cov}\|_{L^1(\nu)}
    +
    \|\Bpigcap{\tau_u}\omega_{\rm cov}
      -\Bpigcov\omega_{\rm cov}\|_{L^1(\nu)}\\
    &\le
    \gamma\|\omega_{\tau_u}-\omega_{\rm cov}\|_{L^1(\nu)}
    +E_\nu\{(\omega_{\rm cov}-\tau_u)_+\},
\end{aligned}
\]
because
\(\Bpigcap{\tau_u}\omega_{\rm cov}=\omega_{\rm cov}\wedge\tau_u\).
Rearranging proves the bound. Its right-hand side tends to zero because
\(\omega_{\rm cov}\in L^1(\nu)\). The fixed-point ordering then gives the
almost-everywhere monotone convergence.
\end{proof}

\begin{lemma}[Clipped-target generalized KL controls coverage-stopped occupancy-ratio \(L^1\) error]
\label{lem:coverage-retained-gen-kl-l1}
For every \(\tau_u\in(0,\infty)\) and every nonnegative \(\omega\) satisfying
\(E_\nu\omega\le M<\infty\),
\[
\begin{aligned}
    \|\omega-\omega_{\rm cov}\|_{L^1(\nu)}
    \le{}&
    \left\{
        2(M+1)D_\nu^{\rm gen}(\omega\|\omega_{\tau_u})
    \right\}^{1/2}
    +
    \frac{E_\nu\{(\omega_{\rm cov}-\tau_u)_+\}}{1-\gamma}.
\end{aligned}
\]
\end{lemma}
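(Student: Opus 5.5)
The plan is a triangle inequality through the clipped fixed point \(\omega_{\tau_u}\):
\[
    \|\omega-\omega_{\rm cov}\|_{L^1(\nu)}
    \le
    \|\omega-\omega_{\tau_u}\|_{L^1(\nu)}
    +
    \|\omega_{\tau_u}-\omega_{\rm cov}\|_{L^1(\nu)} .
\]
The two terms are handled separately. The first will be controlled by generalized KL through Lemma~\ref{lem:gen-kl-bounded-functional}. The second is exactly the clipping bias already bounded in Lemma~\ref{lem:coverage-retained-cap-bias}.

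For the first term, apply Lemma~\ref{lem:gen-kl-bounded-functional} with \(a=\omega\), \(b=\omega_{\tau_u}\), and \(M_a=M\). We need \(E_\nu\omega_{\tau_u}\le 1\), which Lemma~\ref{lem:coverage-clipped-fixed-point} provides, so \(M_b=1\). Rather than applying the lemma to a single \(g\), we use the \(L^1\) inequality obtained in its proof, before the test function enters:
\[
    \|a-b\|_{L^1(\nu)}^2\le 2(M_a+M_b)D_\nu^{\rm gen}(a\|b).
\]
This is the Cauchy--Schwarz step combined with the scalar bound \((t-1)^2/(t+1)\le 2\phi(t)\). Equivalently, one can take the supremum over \(\|g\|_\infty\le1\) in the lemma's conclusion. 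This yields \(\{2(M+1)D_\nu^{\rm gen}(\omega\|\omega_{\tau_u})\}^{1/2}\).

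The one point to check is that Lemma~\ref{lem:gen-kl-bounded-functional} remains valid when \(b=\omega_{\tau_u}\) vanishes on a set of positive measure, since the clipped fixed point need not be positive. On \(\{b=0,\ a>0\}\), the generalized KL divergence is \(+\infty\) under the usual conventions, and the bound is then trivial. On \(\{a+b=0\}\), the integrand is zero by convention. The pointwise inequality \((a-b)^2/(a+b)\le 2b\,\phi(a/b)\) extends to these cases, so the argument goes through.

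For the second term, Lemma~\ref{lem:coverage-retained-cap-bias} gives directly
\[
    \|\omega_{\tau_u}-\omega_{\rm cov}\|_{L^1(\nu)}\le \frac{E_\nu\{(\omega_{\rm cov}-\tau_u)_+\}}{1-\gamma}.
\]
Summing the two bounds gives the claim. The main obstacle is only the boundary-convention check for zero densities in the first term; everything else is assembly of the earlier lemmas.
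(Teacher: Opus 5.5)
Your proposal is correct and follows the paper's proof: the paper likewise applies Lemma~\ref{lem:gen-kl-bounded-functional} with \(a=\omega\), \(b=\omega_{\tau_u}\), \(M_b=1\) (from Lemma~\ref{lem:coverage-clipped-fixed-point}), taking \(g=\operatorname{sign}(\omega-\omega_{\tau_u})\) rather than your supremum over \(\|g\|_\infty\le1\), which gives the same \(L^1\) bound. It then closes with the triangle inequality and Lemma~\ref{lem:coverage-retained-cap-bias}; your check of the zero-density conventions is a harmless detail that the paper leaves implicit.
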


\begin{proof}
Lemma~\ref{lem:gen-kl-bounded-functional}, applied with
\(a=\omega\), \(b=\omega_{\tau_u}\), and a measurable version of
\(g=\operatorname{sign}(\omega-\omega_{\tau_u})\), gives
\[
    \|\omega-\omega_{\tau_u}\|_{L^1(\nu)}
    \le
    \{2(M+1)D_\nu^{\rm gen}(\omega\|\omega_{\tau_u})\}^{1/2},
\]
because Lemma~\ref{lem:coverage-clipped-fixed-point} gives
\(E_\nu\omega_{\tau_u}\le1\). The triangle inequality and
Lemma~\ref{lem:coverage-retained-cap-bias} prove the result.
\end{proof}

\begin{lemma}[Clipping bias under subexponential tails]
\label{lem:coverage-retained-subexp-cap-bias}
If \(\|\omega_{\rm cov}\|_{\psi_1}\le K_{\rm cov}\), then, for every
\(\tau_u>0\),
\begin{equation}
\label{eq:coverage-subexp-cap-bias}
    E_\nu\{(\omega_{\rm cov}-\tau_u)_+\}
    \le
    2K_{\rm cov}\exp(-\tau_u/K_{\rm cov}).
\end{equation}
In particular, if \(\tau_{u,n}=1\vee A\log(en)\), then
\begin{equation}
\label{eq:coverage-subexp-log-cap-bias}
    \frac{E_\nu\{(\omega_{\rm cov}-\tau_{u,n})_+\}}{1-\gamma}
    \le
    \frac{2K_{\rm cov}}{1-\gamma}(en)^{-A/K_{\rm cov}}.
\end{equation}
\end{lemma}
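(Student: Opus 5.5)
The plan is to write the positive-part expectation as a tail integral and control the tail with the Orlicz norm. Since $\omega_{\rm cov}\ge0$, Tonelli's theorem gives
\[
    E_\nu\{(\omega_{\rm cov}-\tau_u)_+\}
    =
    \int_{\tau_u}^\infty \nu\{\omega_{\rm cov}>s\}\,\dd s .
\]
This is the same device already used in the proof of Lemma~\ref{lem:kl-loss-curvature}, where it was applied to $E_\nu(r-t)_+$ with $\|r\|_{\psi_1}\le K$.

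Next I bound the tail. The infimum in the definition of $\|\cdot\|_{\psi_1}$ is attained by monotone convergence, so $E_\nu\exp(\omega_{\rm cov}/K_{\rm cov})\le2$; if one prefers, work with any $s'>K_{\rm cov}$ and let $s'\downarrow K_{\rm cov}$ at the end. Markov's inequality applied to $\exp(\omega_{\rm cov}/K_{\rm cov})$ then gives
\[
    \nu\{\omega_{\rm cov}>s\}\le 2e^{-s/K_{\rm cov}} .
\]
Integrating from $\tau_u$ to $\infty$ yields $2K_{\rm cov}e^{-\tau_u/K_{\rm cov}}$, which is \eqref{eq:coverage-subexp-cap-bias}.

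For \eqref{eq:coverage-subexp-log-cap-bias}, the choice $\tau_{u,n}\ge A\log(en)$ gives
\[
    e^{-\tau_{u,n}/K_{\rm cov}}\le e^{-A\log(en)/K_{\rm cov}}=(en)^{-A/K_{\rm cov}} .
\]
Dividing by $1-\gamma$ finishes the argument. No step is a real obstacle; the only subtlety is the attainment of the $\psi_1$ infimum, which the limiting argument handles.
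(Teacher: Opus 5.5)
Your proof is correct and matches the paper's argument: the tail-integral identity via Tonelli, the Markov bound $\nu\{\omega_{\rm cov}>t\}\le 2e^{-t/K_{\rm cov}}$ from the $\psi_1$ definition, integration over $[\tau_u,\infty)$, and then $\tau_{u,n}\ge A\log(en)$ with monotonicity of $x\mapsto e^{-x/K_{\rm cov}}$. Your remark on attainment of the $\psi_1$ infimum (by monotone convergence, or because $E_\nu\exp(\omega_{\rm cov}/s)$ is nonincreasing in $s$) supplies a rigor detail the paper leaves implicit.
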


\begin{proof}
The definition of the \(\psi_1\) norm and Markov's inequality give
\[
    \nu\{\omega_{\rm cov}>t\}
    \le
    2\exp(-t/K_{\rm cov}),
    \qquad t>0.
\]
Therefore, Tonelli's theorem yields
\[
\begin{aligned}
    E_\nu\{(\omega_{\rm cov}-\tau_u)_+\}
    &=
    \int_{\tau_u}^{\infty}\nu\{\omega_{\rm cov}>t\}\,\dd t\\
    &\le
    2K_{\rm cov}\exp(-\tau_u/K_{\rm cov}).
\end{aligned}
\]
Since \(\tau_{u,n}\ge A\log(en)\),
\eqref{eq:coverage-subexp-cap-bias} implies
\eqref{eq:coverage-subexp-log-cap-bias}.
\end{proof}

\begin{lemma}[Lower-tail transfer under recursive clipping]
\label{lem:coverage-retained-lower-tail-transfer}
Suppose \(\gamma\in[0,1)\),
\(\|\omega_{\rm cov}\|_{\psi_1}\le K_{\rm cov}\), and
\[
    \omega_{\rm cov}>0
    \quad \nu\text{-a.e.},
    \qquad
    \nu\{0<\omega_{\rm cov}\le t\}
    \le
    A_{\rm cov}t^{\alpha_{\rm cov}},
    \qquad 0<t\le1.
\]
Then there exists \(A_0<\infty\), depending only on
\(A_{\rm cov}\), \(\alpha_{\rm cov}\), \(K_{\rm cov}\), and \(\gamma\),
such that, for every \(\tau_u\ge1\),
\[
    \omega_{\tau_u}>0
    \quad \nu\text{-a.e.},
    \qquad
    \nu\{0<\omega_{\tau_u}\le t\}
    \le
    A_0t^{\alpha_0},
    \qquad 0<t\le1,
    \qquad
    \alpha_0=\frac{\alpha_{\rm cov}}{2}.
\]
Moreover, \(A_0\) may be chosen so that
\[
    \log_+A_0
    \le
    C_{\rm lt}+\log\frac{e}{1-\gamma},
\]
where \(C_{\rm lt}<\infty\) depends only on
\(A_{\rm cov}\), \(\alpha_{\rm cov}\), and \(K_{\rm cov}\).
\end{lemma}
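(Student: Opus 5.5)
The plan is to compare \(\omega_{\tau_u}\) with \(\omega_{\rm cov}\) from below, using the order structure in Lemma~\ref{lem:coverage-retained-cap-bias}. By that lemma, \(\omega_{\tau_u}\ge\omega_{\tau_1}\) whenever \(\tau_u\ge1\), so \(\{\omega_{\tau_u}\le t\}\subseteq\{\omega_1\le t\}\). It therefore suffices to prove both positivity and the lower-tail bound for the single clipped fixed point \(\omega_1\). This already explains why \(A_0\) can be taken independent of \(\tau_u\).

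For \(\omega_1\), I would split at the intermediate level \(s=\sqrt t\):
\[
    \nu\{\omega_1\le t\}
    \le
    \nu\{0<\omega_{\rm cov}\le \sqrt t\}
    +
    \nu\{\omega_1\le t,\ \omega_{\rm cov}>\sqrt t\}.
\]
The first term is at most \(A_{\rm cov}t^{\alpha_{\rm cov}/2}\) by hypothesis, which is the source of the exponent \(\alpha_0=\alpha_{\rm cov}/2\). For the second term I want a multiplicative comparison of the form
\[
    \omega_1\ \ge\ \lambda\,(\omega_{\rm cov}\wedge 1)\qquad \nu\text{-a.e.}
\]
for some \(\lambda\in(0,1]\) with \(\log(1/\lambda)\le C_{\rm lt}+\log\{e/(1-\gamma)\}\). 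Given this comparison, \(\omega_1\le t\) together with \(\omega_{\rm cov}>\sqrt t\) forces \(\lambda\sqrt t\le t\) (when \(\sqrt t\le1\)), i.e.\ \(t\ge\lambda^2\). For \(t<\lambda^2\) the second set is therefore empty, while for \(t\ge\lambda^2\) the trivial bound \(1\le\lambda^{-2\alpha_0}t^{\alpha_0}\) applies. Hence \(A_0=A_{\rm cov}+\lambda^{-\alpha_{\rm cov}}\) works, with the stated logarithmic control. Positivity of \(\omega_1\) also follows from the comparison, since \(\omega_{\rm cov}>0\) \(\nu\)-a.e.

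To prove the comparison, I would use a sub-solution argument. Set \(v=\lambda(\omega_{\rm cov}\wedge1)\) and aim to show \(v\le\Bpigcap{1}v\). Because \(\Bpigcap{1}\) is order preserving and a \(\gamma\)-contraction in \(L^1(\nu)\) (Lemma~\ref{lem:coverage-clipped-fixed-point}), Picard iteration started at \(v\) is then nondecreasing and converges to \(\omega_1\), which gives \(\omega_1\ge v\). The sub-solution inequality itself follows from the linearity of the unclipped coverage-stopped map in the transported part:
\[
    \Bpigcap{1}v
    \ge
    \bigl[\lambda\{(1-\gamma)\omegazero+\gamma\rho_{\omega_{\rm cov}\wedge1}\}\bigr]\wedge1,
\]
where \(\rho_f=\dd\{(f\nu)\Ppi\}_{\rm ac}/\dd\nu\). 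This must be compared with \(\lambda\omega_{\rm cov}=\lambda\{(1-\gamma)\omegazero+\gamma\rho_{\omega_{\rm cov}}\}\). The gap between the two is \(\lambda\gamma\rho_{(\omega_{\rm cov}-1)_+}\), i.e.\ mass transported from the region \(\{\omega_{\rm cov}>1\}\).

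I expect this gap to be the main obstacle. The pointwise inequality \(v\le\Bpigcap{1}v\) can fail where successor mass comes mainly from heavily occupied states. My first attempt would therefore weaken the goal to a discounted-iteration version. Iterating the fixed-point identities \(k\) times gives \(\omega_{\rm cov}-\omega_1\le\sum_{j<k}\gamma^j\)(transport of the clipped excess), plus \(\gamma^k\) times the remainder. Taking \(k\asymp\log\{e/(1-\gamma)\}\), the geometric tail is absorbed. The clipped excess \((\omega_{\rm cov}-1)_+\) has mass at most \(2K_{\rm cov}e^{-1/K_{\rm cov}}\) by Lemma~\ref{lem:coverage-retained-subexp-cap-bias}, and its occupancy-weighted contribution can be charged against \(\omega_{\rm cov}\) itself, since every transported unit also contributes to \(\omega_{\rm cov}\) at the same successor. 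This should yield the comparison with \(\lambda\) depending only on \(K_{\rm cov}\) and \(\gamma\). If the charging argument turns out to give only an \(L^1\) rather than a pointwise bound on the deficit, I would fall back on Markov's inequality restricted to \(\{\omega_{\rm cov}>\sqrt t\}\). There the deficit must exceed \(\sqrt t-t\), which costs at most another square root in the exponent.
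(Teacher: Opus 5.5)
There is a genuine gap. Your reduction to $\tau_u=1$ via Lemma~\ref{lem:coverage-retained-cap-bias} is valid. The trouble is the central claim $\omega_1\ge\lambda(\omega_{\rm cov}\wedge1)$ with $\lambda$ depending only on $K_{\rm cov}$ and $\gamma$: it is false, even if $\lambda$ may also depend on $A_{\rm cov}$ and $\alpha_{\rm cov}$. So neither the sub-solution route nor a charging argument can produce it.

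Here is a counterexample. Take three atoms $x_B,x_D,x_E$ with deterministic target transitions $x_B\to x_D\to x_E\to x_E$ and fix $\gamma\in(0,1)$. Set $\dinit=\eta\delta_{x_B}+(1-\eta)\delta_{x_E}$, $\nu(x_B)=\epsilon$, $\nu(x_D)=2\gamma M\epsilon$, and $M=(1-\gamma)\eta/\epsilon$. Coverage is full, and $\omega_{\rm cov}(x_B)=M$, $\omega_{\rm cov}(x_D)=1/2$, $\omega_{\rm cov}(x_E)\le1$ for large $M$. The clipped fixed point satisfies $\omega_1(x_B)=1$ and $\omega_1(x_D)=\gamma\epsilon/\nu(x_D)=1/(2M)$. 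Choosing $\epsilon$ of order $e^{-M/2}$ keeps $\|\omega_{\rm cov}\|_{\psi_1}\le2$ and the lower-tail hypothesis (with $A_{\rm cov}=2$, $\alpha_{\rm cov}=1$) uniformly in $M$. Yet $\omega_1(x_D)/(\omega_{\rm cov}(x_D)\wedge1)=1/M\to0$. The lemma survives only because the bad set has tiny $\nu$-mass, so any comparison must hold in measure, not pointwise.

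Your fallback does not repair this. At a fixed clipping level, the only $L^1$ control on the deficit is $\|\omega_{\rm cov}-\omega_1\|_{L^1(\nu)}\le E_\nu\{(\omega_{\rm cov}-1)_+\}/(1-\gamma)$, a constant independent of $t$. Markov's inequality on $\{\omega_{\rm cov}>\sqrt t\}$ therefore gives a bound of order $t^{-1/2}$, which does not even tend to zero; it does not merely cost ``another square root in the exponent''.

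The missing idea is a $t$-dependent truncation level, together with killing instead of clipping. The paper compares $\omega_{\tau_u}$ with the fixed point $b_L$ of $f\mapsto\mathbf 1\{\omega_{\rm cov}\le L\}\Bpigcov f$, where $L=\tau_u+qK_{\rm cov}\log(1/t)$ and $q=1+\alpha_{\rm cov}/2$.

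Killing removes the transported-excess problem you identified. Write $\Bpigcov f=g_0+\gamma\mathcal Kf$ with $g_0\ge0$. Then $(\tau_u/L)b_L\le\tau_u$ is an exact sub-solution of $\Bpigcap{\tau_u}$, which yields the pointwise bound $\omega_{\tau_u}\ge(\tau_u/L)b_L$.

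Meanwhile $\|\omega_{\rm cov}-b_L\|_{L^1(\nu)}\le E_\nu\{\omega_{\rm cov}\mathbf 1(\omega_{\rm cov}>L)\}/(1-\gamma)$. This is polynomially small in $t$, of order $t^{q}$ up to a logarithmic factor, because $L$ grows with $\log(1/t)$.

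Markov's inequality at the threshold $u=tL/\tau_u\lesssim t^{1/2}$ then gives the $t^{\alpha_{\rm cov}/2}$ tail. The factor $(1-\gamma)^{-1}$ enters $A_0$ only additively, as the stated bound on $\log_+A_0$ requires. Positivity of $\omega_{\tau_u}$ follows by letting $t\downarrow0$, not from a pointwise comparison.
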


\begin{proof}
Put \(b=\omega_{\rm cov}\). The coverage-stopped operator has the affine
representation
\[
    \Bpigcov f=g_0+\gamma\mathcal Kf,
\]
where \(g_0\ge0\) is the density of the absolutely continuous component of
\((1-\gamma)\dinit\), and \(\mathcal K\) is a positive linear map satisfying
\(\|\mathcal Kf\|_{L^1(\nu)}\le\|f\|_{L^1(\nu)}\) for every nonnegative
\(f\). This follows from linearity of the Lebesgue decomposition and the fact
that \(\Ppi\) is a Markov kernel.

Fix \(\tau_u\ge1\). For \(L\ge\tau_u\), let
\(R_L=\{b\le L\}\), and let \(b_L\) be the fixed point of
\[
    f\longmapsto \mathbf 1_{R_L}\Bpigcov f.
\]
This map is monotone. Positivity of \(\mathcal K\) gives
\(|\mathcal K(f-g)|\le\mathcal K|f-g|\), so the map is also a
\(\gamma\)-contraction on the nonnegative cone of \(L^1(\nu)\). Moreover,
\(\mathbf 1_{R_L}\Bpigcov b=\mathbf 1_{R_L}b\le b\), so monotone Picard
iteration from zero gives \(0\le b_L\le b\). The two fixed-point identities
and the affine representation yield
\[
    b-b_L
    =
    \mathbf 1_{R_L^c}b
    +\gamma\mathbf 1_{R_L}\mathcal K(b-b_L).
\]
All terms are nonnegative. Integrating and using the \(L^1(\nu)\)
nonexpansivity of \(\mathcal K\) gives
\begin{equation}
\label{eq:coverage-killed-fixed-point-tail}
    \|b-b_L\|_{L^1(\nu)}
    \le
    \frac{E_\nu\{b\mathbf 1(b>L)\}}{1-\gamma}.
\end{equation}

Set \(r=\tau_u/L\). Because \(b_L\) is supported on \(R_L\) and
\(b_L\le b\), we have \(rb_L\le\tau_u\). Also,
\[
    \Bpigcov(rb_L)
    =
    r\Bpigcov b_L+(1-r)g_0
    \ge
    r\Bpigcov b_L
    \ge
    rb_L,
\]
where the last inequality is an equality on \(R_L\) and is immediate on its
complement. Hence
\(\Bpigcap{\tau_u}(rb_L)\ge rb_L\). Monotone Picard iteration of the clipped
operator from this subsolution therefore gives
\begin{equation}
\label{eq:coverage-capped-killed-lower-bound}
    \omega_{\tau_u}
    \ge
    \frac{\tau_u}{L}b_L
    \qquad \nu\text{-a.e.}
\end{equation}

The subexponential assumption and Tonelli's theorem give
\[
\begin{aligned}
    E_\nu\{b\mathbf 1(b>L)\}
    &=
    L\nu(b>L)+E_\nu(b-L)_+\\
    &\le
    2(L+K_{\rm cov})e^{-L/K_{\rm cov}}.
\end{aligned}
\]
Fix \(0<t\le1\), put \(q=1+\alpha_{\rm cov}/2\), and choose
\[
    L
    =
    \tau_u+qK_{\rm cov}\log(1/t),
    \qquad
    u
    =
    \frac{tL}{\tau_u}.
\]
By \eqref{eq:coverage-capped-killed-lower-bound},
\(\{\omega_{\tau_u}\le t\}\subseteq\{b_L\le u\}\). Since \(b>0\)
\(\nu\)-almost everywhere,
\begin{equation}
\label{eq:coverage-lower-tail-set-bound}
\begin{aligned}
    \nu\{b_L\le u\}
    &\le
    \nu\{0<b\le2u\}
    +
    \nu\{b-b_L>u\}\\
    &\le
    (A_{\rm cov}\vee1)(2u)^{\alpha_{\rm cov}}
    +
    \frac{2(L+K_{\rm cov})e^{-L/K_{\rm cov}}}
         {(1-\gamma)u},
\end{aligned}
\end{equation}
where the second line uses \eqref{eq:coverage-killed-fixed-point-tail} and
extends the assumed lower-tail bound trivially to arguments larger than one.
The elementary inequality
\(t^{1/2}\log(1/t)\le2/e\) gives
\begin{equation}
\label{eq:coverage-lower-tail-u-bound}
    u
    \le
    \left(1+\frac{2qK_{\rm cov}}{e}\right)t^{1/2}.
\end{equation}
Furthermore,
\begin{equation}
\label{eq:coverage-lower-tail-remainder}
\begin{aligned}
    \frac{2(L+K_{\rm cov})e^{-L/K_{\rm cov}}}
         {(1-\gamma)u}
    &=
    \frac{2(L+K_{\rm cov})\tau_u e^{-\tau_u/K_{\rm cov}}}
         {(1-\gamma)L}
    t^{q-1}\\
    &\le
    \frac{2K_{\rm cov}}{1-\gamma}
    t^{\alpha_{\rm cov}/2},
\end{aligned}
\end{equation}
because
\((L+K_{\rm cov})\tau_u/L\le\tau_u+K_{\rm cov}\) and
\((x+K_{\rm cov})e^{-x/K_{\rm cov}}\le K_{\rm cov}\) for \(x\ge0\).
Combining \eqref{eq:coverage-lower-tail-set-bound},
\eqref{eq:coverage-lower-tail-u-bound}, and
\eqref{eq:coverage-lower-tail-remainder} proves the asserted lower-tail
inequality with
\[
    A_0
    =
    (A_{\rm cov}\vee1)
    \left\{2\left(1+\frac{2qK_{\rm cov}}{e}\right)\right\}^{\alpha_{\rm cov}}
    +
    \frac{2K_{\rm cov}}{1-\gamma}.
\]
Because \((1-\gamma)^{-1}\ge1\), this choice is at most
\(C(1-\gamma)^{-1}\), where \(C\) depends only on
\(A_{\rm cov}\), \(\alpha_{\rm cov}\), and \(K_{\rm cov}\). Taking
\(\log_+\) proves the stated bound on \(A_0\).
Letting \(t\downarrow0\) also gives
\(\nu\{\omega_{\tau_u}=0\}=0\), completing the proof.
\end{proof}

\begin{lemma}[Stopped-trajectory representation of the coverage-stopped occupancy ratio]
\label{lem:coverage-truncated-retained-flow}
Let \(\omega_{\rm cov}\) be the fixed point in
\eqref{eq:coverage-retained-fixed-point}, put
\(\eta_{\rm cov}=\omega_{\rm cov}\nu\), and
let
\[
    \mu_{\rm cov}=(1-\gamma)\dinit+\gamma\eta_{\rm cov}\Ppi .
\]
Then \(\eta_{\rm cov}\le\mu_{\rm cov}\). Define
\[
    \mu_{{\rm cov},\perp}=\mu_{\rm cov}-\eta_{\rm cov}.
\]
There exists a measurable set \(C_{\rm cov}\) such that
\(\eta_{\rm cov}(C_{\rm cov}^c)=0\) and
\(\mu_{{\rm cov},\perp}(C_{\rm cov})=0\). Let
\[
    a_{\rm cov}(x)=\mathbf 1\{x\in C_{\rm cov}\}.
\]
If \((X_t)_{t\ge0}\) follows the target-policy kernel \(\Ppi\) from initial
law \(\dinit\), set
\[
    T_{\rm cov}=\inf\{t\ge0:a_{\rm cov}(X_t)=0\},
\]
where \(\inf\emptyset=\infty\).
Then, for every measurable \(B\),
\begin{equation}
\label{eq:coverage-stopped-trajectory-representation}
    \eta_{\rm cov}(B)
    =
    (1-\gamma)\mathbb E
    \left[
        \sum_{t=0}^\infty
        \gamma^t\mathbf 1\{T_{\rm cov}>t,\ X_t\in B\}
    \right].
\end{equation}
\end{lemma}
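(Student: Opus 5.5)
The plan has three steps: establish the domination \(\eta_{\rm cov}\le\mu_{\rm cov}\) and construct \(C_{\rm cov}\) from the Lebesgue decomposition; show that the stopped occupancy satisfies the same fixed-point equation as \(\eta_{\rm cov}\); and conclude by the uniqueness in Lemma~\ref{lem:coverage-retained-fixed-point}.

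\emph{Step 1: domination and the set \(C_{\rm cov}\).} By the fixed-point identity \eqref{eq:coverage-retained-fixed-point}, \(\eta_{\rm cov}=(\mu_{\rm cov})_{\rm ac}\), the \(\nu\)-absolutely continuous part of \(\mu_{\rm cov}\). Hence \(\mu_{{\rm cov},\perp}=\mu_{\rm cov}-\eta_{\rm cov}\) is exactly the singular part, which is nonnegative, and this gives \(\eta_{\rm cov}\le\mu_{\rm cov}\). Since the singular part is singular with respect to \(\nu\), there is a measurable set \(N\) with \(\nu(N)=0\) on which \(\mu_{{\rm cov},\perp}\) is concentrated. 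Take \(C_{\rm cov}=N^c\). Then \(\eta_{\rm cov}(C_{\rm cov}^c)=\eta_{\rm cov}(N)=0\) because \(\eta_{\rm cov}\ll\nu\), and \(\mu_{{\rm cov},\perp}(C_{\rm cov})=0\) by the choice of \(N\).

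\emph{Step 2: the stopped occupancy satisfies the same equation.} Define the stopped occupancy
\[
\tilde\eta(B)=(1-\gamma)\mathbb E\Bigl[\sum_{t\ge0}\gamma^t\mathbf 1\{T_{\rm cov}>t,\,X_t\in B\}\Bigr].
\]
Write \(\lambda_t(B)=\mathbb P(T_{\rm cov}>t-1,\,X_t\in B)\) for the law of \(X_t\) on the event that the trajectory survives through time \(t-1\). Then \(\lambda_0=\dinit\). The Markov property gives \(\lambda_{t+1}=(\mathbf 1_{C_{\rm cov}}\lambda_t)\Ppi\), and \(\{T_{\rm cov}>t,\,X_t\in B\}\) has probability \((\mathbf 1_{C_{\rm cov}}\lambda_t)(B)\). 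Summing over \(t\) yields
\[
\tilde\eta=\mathbf 1_{C_{\rm cov}}\bigl\{(1-\gamma)\dinit+\gamma\tilde\eta\Ppi\bigr\}.
\]
The measure \(\tilde\eta\) is finite, with total mass at most \(1\).

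\emph{Step 3: conclude by uniqueness.} This is the main obstacle. The map \(\eta\mapsto \mathbf 1_{C_{\rm cov}}\{(1-\gamma)\dinit+\gamma\eta\Ppi\}\) is a \(\gamma\)-contraction in total variation on finite nonnegative measures, so it has a unique fixed point. It then suffices to check that \(\eta_{\rm cov}\) is also a fixed point. Using Step 1,
\[
\mathbf 1_{C_{\rm cov}}\mu_{\rm cov}=\mathbf 1_{C_{\rm cov}}\eta_{\rm cov}+\mathbf 1_{C_{\rm cov}}\mu_{{\rm cov},\perp}=\eta_{\rm cov}+0=\eta_{\rm cov}.
\]
So both \(\eta_{\rm cov}\) and \(\tilde\eta\) are fixed points of this contraction, and therefore \(\tilde\eta=\eta_{\rm cov}\). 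This is exactly \eqref{eq:coverage-stopped-trajectory-representation}.

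Two alternatives are worth noting. One could instead show directly that \(\tilde\eta\ll\nu\) and invoke Lemma~\ref{lem:coverage-retained-fixed-point}, but that is harder, because \(C_{\rm cov}\) is defined through \(\eta_{\rm cov}\) itself; working with the fixed \(C_{\rm cov}\) and the indicator-restricted contraction avoids the issue. One could also prove \(\tilde\eta=\eta_{\rm cov}\) by Picard iteration from zero, which matches the series term by term via induction on \(t\).
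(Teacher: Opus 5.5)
Your proposal is correct and follows essentially the same route as the paper. Both arguments identify \(\eta_{\rm cov}\) as the \(\nu\)-absolutely continuous part of \(\mu_{\rm cov}\) to obtain \(C_{\rm cov}\) and \(\eta_{\rm cov}=\mathbf 1_{C_{\rm cov}}\mu_{\rm cov}\), show that the stopped occupancy solves the indicator-restricted recursion, and conclude by uniqueness for that \(\gamma\)-contraction in total variation; your pre-stopping laws \(\lambda_t\) simply relabel the paper's \(\eta_t=\mathbf 1_{C_{\rm cov}}\lambda_t\).
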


\begin{proof}
By the fixed-point equation,
\(\eta_{\rm cov}=[\mu_{\rm cov}]_{{\rm ac},\infty}\). Hence
\(\eta_{\rm cov}\) and \(\mu_{{\rm cov},\perp}\) are the absolutely continuous
and singular components, respectively, of the Lebesgue decomposition of
\(\mu_{\rm cov}\) relative to \(\nu\). They are mutually singular, so a
measurable set \(C_{\rm cov}\) with the stated properties exists. Consequently,
\[
    \eta_{\rm cov}=a_{\rm cov}\mu_{\rm cov}.
\]

For \(t\ge0\), define the finite measure
\[
    \eta_t(B)=\mathbb P(T_{\rm cov}>t,\ X_t\in B).
\]
Then
\begin{equation}
\label{eq:coverage-stopped-measure-recursion}
    \eta_0(B)=\int_B a_{\rm cov}(x)\,\dinit(\dd x),
    \qquad
    \eta_{t+1}(B)=\int_B a_{\rm cov}(y)\,(\eta_t\Ppi)(\dd y).
\end{equation}
Let \(\bar\eta=(1-\gamma)\sum_{t=0}^\infty\gamma^t\eta_t\). The series is
finite because each \(\eta_t\) has total mass at most one. Summing
\eqref{eq:coverage-stopped-measure-recursion} gives
\[
    \bar\eta(B)
    =
    (1-\gamma)\int_B a_{\rm cov}(x)\,\dinit(\dd x)
    +
    \gamma\int_B a_{\rm cov}(y)\,(\bar\eta\Ppi)(\dd y).
\]
For a finite measure \(\xi\), write \(a_{\rm cov}\xi\) for the measure
\(B\mapsto\int_B a_{\rm cov}\,\dd\xi\). Equivalently,
\begin{equation}
\label{eq:coverage-stopped-discounted-fixed-point}
    \bar\eta
    =
    a_{\rm cov}\{(1-\gamma)\dinit+\gamma\bar\eta\Ppi\}.
\end{equation}
For this fixed \(a_{\rm cov}\), the map
\(\xi\mapsto a_{\rm cov}\{(1-\gamma)\dinit+\gamma\xi\Ppi\}\) is a
\(\gamma\)-contraction in total variation, since \(0\le a_{\rm cov}\le1\) and
\[
    \left|
        a_{\rm cov}\gamma\{(\xi-\xi')\Ppi\}
    \right|(\sX)
    \le
    \gamma|\xi-\xi'|(\sX).
\]
The measure \(\eta_{\rm cov}\) is also a fixed point of the map in
\eqref{eq:coverage-stopped-discounted-fixed-point} because
\(\eta_{\rm cov}=a_{\rm cov}\mu_{\rm cov}\). Therefore
\(\bar\eta=\eta_{\rm cov}\), which proves
\eqref{eq:coverage-stopped-trajectory-representation}.
Taking total masses and summing the geometric series gives
\(E_\nu\omega_{\rm cov}=1-\mathbb E(\gamma^{T_{\rm cov}})\). Moreover, the stopped
sum is pathwise dominated by the full discounted occupancy sum. Integrating
the upper and lower reward bounds over the difference between these two
measures gives the value interval stated in
Section~\ref{sec:coverage-clipped-target}.
\end{proof}

\subsection{Generalized KL projection inequalities}

\begin{lemma}[Generalized KL projection inequality]
\label{lem:genkl-projection-inequality}
Let \(\sH_{\rm clip}\) be convex. Fix a bounded nonnegative measurable
\(u\), and suppose
\(\bar h\in\sH_{\rm clip}\) minimizes
\[
    h\mapsto E_\nu e^{h(X)}-E_\nu\{u(X)h(X)\}
\]
over \(\sH_{\rm clip}\). Let \(\bar u=e^{\bar h}\). Then, for every
\(v=e^g\in\sW_{\rm clip}\),
\[
    D_\nu^{\rm gen}(\bar u\|v)
    \le
    D_\nu^{\rm gen}(u\|v)-D_\nu^{\rm gen}(u\|\bar u).
\]
In particular, \(D_\nu^{\rm gen}(\bar u\|v)\le D_\nu^{\rm gen}(u\|v)\).
\end{lemma}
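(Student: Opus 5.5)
The proof mirrors Lemma~\ref{lem:kl-projection-inequality}, with the log-partition replaced by the unnormalized mass \(E_\nu e^h\). Write \(F_u(h)=E_\nu e^{h}-E_\nu\{uh\}\). Every \(h\in\sH_{\rm clip}\) satisfies \(\log\tau_\ell\le h\le\log\tau_u\), and \(u\) is bounded. So every quantity below is finite, and we may differentiate under the integral freely.

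\emph{Step 1: first-order condition.} Fix \(v=e^g\) with \(g\in\sH_{\rm clip}\), and set \(h_t=(1-t)\bar h+tg\), which lies in \(\sH_{\rm clip}\) by convexity. By dominated convergence,
\[
\frac{\dd}{\dd t}F_u(h_t)\Big|_{t=0^+}=E_\nu\{e^{\bar h}(g-\bar h)\}-E_\nu\{u(g-\bar h)\}.
\]
Since \(\bar h\) minimizes \(F_u\) over the convex set, this derivative is nonnegative. Hence
\[
E_\nu\{(\bar u-u)(g-\bar h)\}\ge 0.
\]

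\emph{Step 2: three-point identity.} Expand the generalized KL divergences using \(\log\bar u=\bar h\) and \(\log v=g\):
\[
D_\nu^{\rm gen}(u\|v)-D_\nu^{\rm gen}(u\|\bar u)=E_\nu\{u(\bar h-g)\}+E_\nu v-E_\nu\bar u,
\]
\[
D_\nu^{\rm gen}(\bar u\|v)=E_\nu\{\bar u(\bar h-g)\}-E_\nu\bar u+E_\nu v.
\]
The \(u\log u\) terms and the \(\pm u\) terms cancel in the first difference; this cancellation is legitimate because \(u\) is bounded, so \(u\log u\) is integrable. Subtracting the second display from the first gives
\[
D_\nu^{\rm gen}(u\|v)-D_\nu^{\rm gen}(u\|\bar u)-D_\nu^{\rm gen}(\bar u\|v)=E_\nu\{(\bar u-u)(g-\bar h)\}.
\]

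\emph{Step 3: conclusion.} By Step 1 the right-hand side of the Step 2 identity is nonnegative, which is exactly the claimed inequality. Dropping the nonnegative term \(D_\nu^{\rm gen}(u\|\bar u)\) gives the final assertion.

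The only delicate point is the integrability bookkeeping in Step 2, including the convention \(0\log 0=0\) where \(u=0\). The algebra itself is routine.
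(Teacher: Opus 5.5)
Your proof is correct and follows the paper's argument: the first-order optimality condition along the segment \((1-t)\bar h+tg\) inside the convex class, combined with the three-point identity \(D_\nu^{\rm gen}(u\|v)-D_\nu^{\rm gen}(u\|\bar u)-D_\nu^{\rm gen}(\bar u\|v)=E_\nu\{(\bar u-u)(g-\bar h)\}\). Your integrability remarks (boundedness of \(u\) and of the clipped log-ratios, and the \(0\log 0=0\) convention) are details the paper leaves implicit.
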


\begin{proof}
For \(g\in\sH_{\rm clip}\), set \(h_t=(1-t)\bar h+tg\). Convexity gives
\(h_t\in\sH_{\rm clip}\) for \(t\in[0,1]\). Since \(\bar h\) minimizes the objective,
the right derivative at \(t=0\) is nonnegative:
\begin{equation}
\label{eq:genkl-projection-first-order}
    E_\nu\{(\bar u(X)-u(X))(g-\bar h)(X)\}\ge0 .
\end{equation}
Since \(\bar u=e^{\bar h}\) and \(v=e^g\),
\begin{equation}
\label{eq:genkl-projection-three-point}
\begin{aligned}
    &D_\nu^{\rm gen}(u\|v)
      -D_\nu^{\rm gen}(u\|\bar u)
      -D_\nu^{\rm gen}(\bar u\|v) \\
    &\qquad =
    E_\nu\{(u(X)-\bar u(X))(\bar h-g)(X)\}.
\end{aligned}
\end{equation}
Equations~\eqref{eq:genkl-projection-first-order} and
\eqref{eq:genkl-projection-three-point} prove the projection inequality.
\end{proof}

\begin{lemma}[Generalized KL projection comparison with an external target]
\label{lem:genkl-projection-residual}
Let \(\sH_{\rm clip}\) be convex, and suppose
\(\sup_{h,g\in\sH_{\rm clip}}\|h-g\|_\infty\le R\). Fix a bounded
nonnegative measurable \(u\) and a nonnegative \(w\in L^1(\nu)\). If
\(\bar u=\Pi_{\sW_{\rm clip}}^{\rm genKL}u\), then, for every
\(v\in\sW_{\rm clip}\),
\[
    D_\nu^{\rm gen}(\bar u\|w)
    \le
    D_\nu^{\rm gen}(u\|w)
    +
    e^{R}D_\nu^{\rm gen}(v\|w).
\]
\end{lemma}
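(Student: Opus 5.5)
The plan mirrors the proof of Lemma~\ref{lem:kl-projection-violation}, adapted to generalized KL and unnormalized exponentials. Fix \(v=e^g\in\sW_{\rm clip}\) and write \(\bar u=e^{\bar h}\) with \(\bar h\in\sH_{\rm clip}\). We may assume \(D_\nu^{\rm gen}(u\|w)\) and \(D_\nu^{\rm gen}(v\|w)\) are finite, since otherwise there is nothing to prove. The first step is the three-point identity
\[
    D_\nu^{\rm gen}(\bar u\|w)
    =
    D_\nu^{\rm gen}(\bar u\|v)
    +
    \int \bar u\log\frac{v}{w}\,\dd\nu
    -\int v\,\dd\nu+\int \bar u\,\dd\nu ,
\]
which is obtained by expanding the definitions. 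We then apply Lemma~\ref{lem:genkl-projection-inequality} to replace \(D_\nu^{\rm gen}(\bar u\|v)\) by \(D_\nu^{\rm gen}(u\|v)-D_\nu^{\rm gen}(u\|\bar u)\). A second expansion of \(D_\nu^{\rm gen}(u\|v)\) relative to \(D_\nu^{\rm gen}(u\|w)\) then gives
\[
    D_\nu^{\rm gen}(\bar u\|w)
    \le
    D_\nu^{\rm gen}(u\|w)
    +\int (u-\bar u)\delta\,\dd\nu
    -D_\nu^{\rm gen}(u\|\bar u)
    +\int(\bar u-v)\,\dd\nu ,
    \qquad \delta=\log(w/v).
\]
The mass terms need care here because the measures are not normalized. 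I expect the linear terms \(\pm\int u,\ \pm\int v,\ \pm\int \bar u\) to collapse exactly to \(\int(\bar u-v)\,\dd\nu\), and I would verify this bookkeeping first.

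The second step bounds \(\int(u-\bar u)\delta\,\dd\nu-D_\nu^{\rm gen}(u\|\bar u)\) using the generalized-KL Fenchel inequality \(ab\le a\log(a/c)-a+ce^{b}\), which holds pointwise for \(a,c\ge0\). Applying it with \(a=u\), \(c=\bar u\), \(b=\delta\) and integrating gives
\[
    \int u\delta\,\dd\nu-D_\nu^{\rm gen}(u\|\bar u)\le \int \bar u(e^\delta-1)\,\dd\nu .
\]
This is the unnormalized analogue of the Donsker--Varadhan step, and it avoids the logarithm entirely. Adding the remaining terms, the total excess over \(D_\nu^{\rm gen}(u\|w)\) is bounded by
\[
    \int \bar u(e^\delta-1-\delta)\,\dd\nu+\int(\bar u-v)\,\dd\nu .
\]
Since \(\bar u e^\delta=\bar u w/v\), this expression can be rewritten as \(\int \frac{\bar u}{v}\,v\,(e^\delta-1-\delta)\,\dd\nu\) plus mass terms. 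These mass terms should recombine into the exact pointwise form
\[
    \int \frac{\bar u}{v}\,v\,\phi(w/v)\,\dd\nu ,
    \qquad \phi(t)=t-1-\log t .
\]
Indeed \(v\,\phi(w/v)=w-v-v\log(w/v)\), and this is precisely the integrand of \(D_\nu^{\rm gen}(v\|w)\).

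The final step uses the sup-norm hypothesis. Because \(\bar h,g\in\sH_{\rm clip}\), we have \(\bar u/v=e^{\bar h-g}\le e^{R}\). Together with \(\phi\ge0\), this gives a bound of \(e^{R}D_\nu^{\rm gen}(v\|w)\).

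The main obstacle is the mass bookkeeping in the first two steps. In the normalized proof of Lemma~\ref{lem:kl-projection-violation}, the \(-1\) terms vanish for free, whereas here every leftover \(\int \bar u\), \(\int v\), and \(\int w\) must be absorbed exactly into \(\bar u\,\phi(w/v)\). If a residual \(\int(\bar u-v)\) term fails to cancel, I would instead apply the Fenchel step with an extra free scalar multiplier (using \(\delta+c\)), or bound the residual by \(e^{R}\int v\,\phi(w/v)\) directly. A secondary technical point is integrability: \(u\) is bounded and \(\bar u,v\) are bounded above and below, but \(\delta\) may be unbounded where \(w\) is near \(0\). I would handle this by truncating \(w\) from below, for example replacing it by \(w\vee\epsilon\), and then passing to the limit by monotone convergence.
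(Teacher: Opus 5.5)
\textbf{Where the proposal goes wrong.} Your overall route is the paper's route. You expand around $v$, invoke Lemma~\ref{lem:genkl-projection-inequality}, re-expand around $w$, apply the pointwise bound $u\delta\le u\log(u/\bar u)-u+\bar u e^{\delta}$, and finish with $\bar u/v=e^{\bar h-g}\le e^{R}$ together with $e^t-1-t\ge0$. The problem is the mass bookkeeping you flagged, and as written it is wrong rather than merely unverified. For any nonnegative $a$, the correct change-of-target identity is
\[
D_\nu^{\rm gen}(a\|w)=D_\nu^{\rm gen}(a\|v)+\int a\log\frac{v}{w}\,\dd\nu+\int (w-v)\,\dd\nu .
\]
The linear term is $\int(w-v)\,\dd\nu$, not the $\int(\bar u-v)\,\dd\nu$ in your first display. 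Use this identity once with $a=\bar u$ (to pass from $w$ to $v$) and once with $a=u$ (to pass back). The two $\int(w-v)\,\dd\nu$ terms then enter with opposite signs and cancel exactly, giving
\[
D_\nu^{\rm gen}(\bar u\|w)\le D_\nu^{\rm gen}(u\|w)+\int(u-\bar u)\delta\,\dd\nu-D_\nu^{\rm gen}(u\|\bar u),
\]
with no residual term. Your Fenchel step and the ratio bound then close the argument exactly as in the paper.

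\textbf{What to discard.} The claim that a leftover $\int(\bar u-v)\,\dd\nu$ would recombine into $\int(\bar u/v)\,v\,\phi(w/v)\,\dd\nu$ is false. The term $\int\bar u(e^\delta-1-\delta)\,\dd\nu$ already equals that integral exactly, so there is nothing for a residual to combine with. The fallback of bounding such a residual by $e^{R}D_\nu^{\rm gen}(v\|w)$ also cannot work. Take $w=v$: then $D_\nu^{\rm gen}(v\|w)=0$, while $\int(\bar u-v)\,\dd\nu$ need not vanish. Once the identity is corrected, neither issue arises.

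\textbf{Truncation is unnecessary.} Elements of $\sW_{\rm clip}$ take values in $[\tau_\ell,\tau_u]$, $\nu$ is a probability measure, and $w\in L^1(\nu)$. Hence finiteness of $D_\nu^{\rm gen}(v\|w)$ already forces $w>0$ $\nu$-a.e. and $\delta=\log(w/v)\in L^1(\nu)$. So $\int u\delta\,\dd\nu$ and $\int\bar u\delta\,\dd\nu$ are finite, and the rearrangements are legitimate without replacing $w$ by $w\vee\epsilon$.
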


\begin{proof}
If \(D_\nu^{\rm gen}(u\|w)\) or \(D_\nu^{\rm gen}(v\|w)\) is infinite, there
is nothing to prove. Assume both divergences are finite. Let
\(\bar u=e^{\bar h}\) and \(v=e^g\), with \(\bar h,g\in\sH_{\rm clip}\). By
Lemma~\ref{lem:genkl-projection-inequality},
\[
    D_\nu^{\rm gen}(\bar u\|v)
    \le
    D_\nu^{\rm gen}(u\|v)-D_\nu^{\rm gen}(u\|\bar u).
\]
For nonnegative functions \(a,v,w\),
\begin{equation}
\label{eq:genkl-change-target}
    D_\nu^{\rm gen}(a\|w)-D_\nu^{\rm gen}(a\|v)
    =
    E_\nu\!\left\{a(X)\log\frac{v(X)}{w(X)}\right\}
    +
    E_\nu\{w(X)-v(X)\}.
\end{equation}
Applying \eqref{eq:genkl-change-target} first with \(a=\bar u\) and then with
\(a=u\), and using Lemma~\ref{lem:genkl-projection-inequality}, gives
\begin{equation}
\label{eq:genkl-external-target-decomposition}
    D_\nu^{\rm gen}(\bar u\|w)
    \le
    D_\nu^{\rm gen}(u\|w)
    +
    E_\nu\{(u-\bar u)\delta\}
    -
    D_\nu^{\rm gen}(u\|\bar u),
\end{equation}
where \(\delta=\log(w/v)\). The scalar Fenchel inequality gives
\[
    E_\nu\{u\delta\}-D_\nu^{\rm gen}(u\|\bar u)
    \le
    E_\nu\{\bar u(e^\delta-1)\}.
\]
Subtracting \(E_\nu\{\bar u\delta\}\) from both sides yields
\begin{equation}
\label{eq:genkl-external-target-fenchel}
    E_\nu\{(u-\bar u)\delta\}
    -
    D_\nu^{\rm gen}(u\|\bar u)
    \le
    E_\nu\{\bar u(e^\delta-1-\delta)\}.
\end{equation}
The log-diameter condition gives \(\bar u/v\le e^R\). Since
\(e^t-1-t\ge0\),
\begin{equation}
\label{eq:genkl-external-target-ratio}
    E_\nu\{\bar u(e^\delta-1-\delta)\}
    \le
    e^R
    E_\nu\{v(e^\delta-1-\delta)\}
    =
    e^R D_\nu^{\rm gen}(v\|w).
\end{equation}
Combining \eqref{eq:genkl-external-target-decomposition},
\eqref{eq:genkl-external-target-fenchel}, and
\eqref{eq:genkl-external-target-ratio} proves the stated bound.
\end{proof}

\subsection{Population recursion and moment identification}

\begin{proof}[Proof of Theorem~\ref{thm:coverage-truncated-recursion}]
We first record the one-step generalized KL contraction used by the projected
recursion. For \(i=1,2\), define the Bellman image measures
\[
    \mu_i
    =
    (1-\gamma)\dinit+\gamma(\omega_i\nu)\Ppi .
\]
Lemma~\ref{lem:ac-cap-nonexpansive}, joint convexity of generalized KL
divergence, and data processing for Markov kernels yield
\[
\begin{aligned}
    D_\nu^{\rm gen}(\Bpigcap{\tau_u}\omega_1\|\Bpigcap{\tau_u}\omega_2)
    &=
    D^{\rm gen}\!\left(
        [\mu_1]_{{\rm ac},\tau_u}
        \big\|
        [\mu_2]_{{\rm ac},\tau_u}
    \right)\\
    &\le
    D^{\rm gen}(\mu_1\|\mu_2)\\
    &\le
    \gamma
    D^{\rm gen}\{(\omega_1\nu)\Ppi\|(\omega_2\nu)\Ppi\}\\
    &\le
    \gamma D_\nu^{\rm gen}(\omega_1\|\omega_2).
\end{aligned}
\]

Condition~\ref{ass:kl-class} makes \(\sH\) compact in \(L^2(\nu)\), and
\(\sH_{\rm clip}\) is an \(L^2(\nu)\)-closed subset. Moreover, for
\(0\le u\le\tau_u\), the map
\[
    h\mapsto E_\nu e^h-E_\nu(uh)
\]
is continuous on \(\sH_{\rm clip}\), because both \(u\) and \(e^h\)
are uniformly bounded there. Hence every generalized KL projection used in
the recursion is attained.

Condition~\ref{ass:kl-class} implies that \(\sH_{\rm clip}\) is convex, so
Lemmas~\ref{lem:genkl-projection-inequality}
and~\ref{lem:genkl-projection-residual} apply.
By definition of \(\sH_{\rm clip}\),
\(\sup_{h,g\in\sH_{\rm clip}}\|h-g\|_\infty
\le\log(\tau_u/\tau_\ell)\). For the projected
recursion, Lemma~\ref{lem:genkl-projection-residual}, with
\(R=\log(\tau_u/\tau_\ell)\), \(u=\Bpigcap{\tau_u}\omega\), \(w=\omega_{\tau_u}\), and
arbitrary \(v\in\sW_{\rm clip}\), gives
\[
    D_\nu^{\rm gen}(\mathsf T_{\sW_{\rm clip}}^{\rm genKL}\omega\|\omega_{\tau_u})
    \le
    D_\nu^{\rm gen}(\Bpigcap{\tau_u}\omega\|\omega_{\tau_u})
    +
    \frac{\tau_u}{\tau_\ell}
    D_\nu^{\rm gen}(v\|\omega_{\tau_u}).
\]
Taking the infimum over \(v\in\sW_{\rm clip}\), then using the generalized KL
contraction with \(\omega_2=\omega_{\tau_u}\) and
\(\Bpigcap{\tau_u}\omega_{\tau_u}=\omega_{\tau_u}\), gives
\begin{equation}
\label{eq:coverage-projected-one-step}
    D_\nu^{\rm gen}(\mathsf T_{\sW_{\rm clip}}^{\rm genKL}\omega\|\omega_{\tau_u})
    \le
    \gamma D_\nu^{\rm gen}(\omega\|\omega_{\tau_u})
    +
    \frac{\tau_u}{\tau_\ell}
    \varepsilon_{\rm ratio}(\tau_u).
\end{equation}
Iterating \eqref{eq:coverage-projected-one-step} gives the projected-recursion
bound. Since
\(\omega^{(K)}\in\sW_{\rm clip}\), we have
\(E_\nu\omega^{(K)}\le\tau_u\). Lemma~\ref{lem:coverage-retained-gen-kl-l1}
therefore gives
\[
    \|\omega^{(K)}-\omega_{\rm cov}\|_{L^1(\nu)}
    \le
    \{2(\tau_u+1)D_\nu^{\rm gen}
    (\omega^{(K)}\|\omega_{\tau_u})\}^{1/2}
    +
    \frac{E_\nu\{(\omega_{\rm cov}-\tau_u)_+\}}{1-\gamma}.
\]
This is the asserted \(L^1(\nu)\) inequality.
\end{proof}

\begin{proof}[Proof of Proposition~\ref{prop:coverage-truncated-gate}]
Let
\[
    \mu_\omega
    =
    (1-\gamma)\dinit+\gamma(\omega\nu)\Ppi
    =
    (\Bpigcov\omega)\nu+\mu_{\omega,\perp}
\]
be the Lebesgue decomposition relative to \(\nu\). The retention-indicator
objective can be rewritten as
\[
    \tau_u
    +
    \int c(x)\{(\Bpigcov\omega)(x)-\tau_u\}\,\nu(\dd x)
    +
    \int c(x)\,\mu_{\omega,\perp}(\dd x),
\]
using \(E_\nu1=1\). Thus any minimizer satisfies
\begin{equation}
\label{eq:retention-indicator-characterization}
    c_{\omega,\tau_u}^\star=1
    \quad \nu\text{-a.e. on }\{\Bpigcov\omega<\tau_u\},
    \qquad
    c_{\omega,\tau_u}^\star=0
    \quad \nu\text{-a.e. on }\{\Bpigcov\omega>\tau_u\}.
\end{equation}
It may take either value \(\nu\)-almost everywhere on
\(\{\Bpigcov\omega=\tau_u\}\), and it vanishes
\(\mu_{\omega,\perp}\)-almost everywhere.

For any bounded measurable \(h\), define
\[
    M_{\omega,c}(h)
    =
    (1-\gamma)E_{\dinit}\{c(X)h(X)\}
    +
    \gamma E_\nu\{\omega(X)c(X^+)h(X^+)\}
    +
    \tau_u E_\nu\{(1-c(X))h(X)\}.
\]
Equation~\eqref{eq:retention-indicator-characterization} gives
\begin{equation}
\label{eq:retention-clipped-moment}
\begin{aligned}
    M_{\omega,c_{\omega,\tau_u}^\star}(h)
    &=
    \int c_{\omega,\tau_u}^\star(x)h(x)\,\mu_\omega(\dd x)
    +
    \tau_u\int \{1-c_{\omega,\tau_u}^\star(x)\}h(x)\,\nu(\dd x)\\
    &=
    \int
    \{(\Bpigcov\omega)(x)\wedge\tau_u\}h(x)\,\nu(\dd x).
\end{aligned}
\end{equation}
The first equality expands the definition of \(M_{\omega,c}\). The second uses
\(c_{\omega,\tau_u}^\star=0\) on \(\mu_{\omega,\perp}\) and the pointwise values of
the retention indicator on the absolutely continuous component. Thus,
\(M_{\omega,c_{\omega,\tau_u}^\star}(h)\) equals the moment of the clipped density.

Finally, let
\(u=\Bpigcap{\tau_u}\omega=(\Bpigcov\omega)\wedge\tau_u\).
By the assumed nonemptiness and Condition~\ref{ass:kl-class},
\(\sH_{\rm clip}\) is a compact subset of \(L^2(\nu)\). Since \(u\) is
bounded, the projection objective is
continuous on this set and therefore attains its minimum.
For \(h\in\sH_{\rm clip}\),
\[
    D_\nu^{\rm gen}(u\|e^h)
    =
    E_\nu e^{h(X)}
    -
    E_\nu\{u(X)h(X)\}
    +
    C_{\omega,\tau_u},
\]
where \(C_{\omega,\tau_u}\) does not depend on \(h\). Substituting
\eqref{eq:retention-clipped-moment} shows that minimizing
\(D_\nu^{\rm gen}(u\|e^h)\) over \(h\in\sH_{\rm clip}\) is equivalent to minimizing
\(E_\nu e^h-M_{\omega,c_{\omega,\tau_u}^\star}(h)\), which is the objective in
Proposition~\ref{prop:coverage-truncated-gate}. This proves the proposition.
\end{proof}

\section{Fixed-level fitted theory for coverage-stopped \textsc{FORE}}
\label{app:clip-finite}

This section gives the exact-ERM finite-sample bound for
Algorithm~\ref{alg:coverage-truncated-fore} at a fixed upper clipping level.
Theorem~\ref{thm:clip-finite-fore} applies this result at
\(\tau_u=\tau_{u,n}\) and accounts separately for the clipping bias.

\subsection{Finite-sample losses and critical radii}

For analysis at a fixed upper clipping level, we use the lower-tail bound
\begin{equation}
\label{eq:coverage-clipped-fixed-lower-tail}
    \omega_{\tau_u}>0
    \quad \nu\text{-a.e.},
    \qquad
    \nu\{0<\omega_{\tau_u}\le t\}
    \le
    A_0t^{\alpha_0},
    \qquad 0<t\le1.
\end{equation}
Conditions~\ref{ass:clip-finite-cov-upper-tail}
and~\ref{ass:clip-finite-lower-tail} imply
\eqref{eq:coverage-clipped-fixed-lower-tail} uniformly over \(\tau_u\ge1\),
with \(\alpha_0=\alpha_{\rm cov}/2\), by
Lemma~\ref{lem:coverage-retained-lower-tail-transfer}. For a fixed clipping
level, \eqref{eq:coverage-clipped-fixed-lower-tail} may instead be assumed
directly.

As in the fitted-KL analysis of Appendix~\ref{app:fitted-kl-proofs}, we
control each empirical minimization uniformly over the ratio class. Because
the clipped recursion is unnormalized, the losses below use direct
sample averages of \(e^h\), without empirical log-normalizers or
self-normalized successor averages. After defining the empirical losses and
critical radii, we analyze retention-indicator estimation, projection
estimation, and the fitted recursion in turn.

For the finite-sample statements, let
\[
    P_{n,X}g=\frac1n\sum_{i=1}^n g(X_i),
    \qquad
    P_{n,0}g=\frac1n\sum_{i=1}^n g(X_i^0),
\]
where \(X_1^0,\ldots,X_n^0\) are i.i.d. from \(\dinit\) and independent of the
transition sample. For functions \(\varphi\) of a transition pair, write
\[
    P_{n,+}\varphi
    =
    \frac1n\sum_{i=1}^n \varphi(X_i,X_i^+),
    \qquad
    X_i^+\mid X_i\sim\Ppi(\cdot\mid X_i).
\]
Throughout this section, write
\(R_{\rm clip}=\log(\tau_u\vee\tau_\ell^{-1})\). For a fixed clipping
level \(\tau_u\), abbreviate
\[
    \varepsilon_{\rm ratio}
    :=\varepsilon_{\rm ratio}(\tau_u),
\]
and let \(\varepsilon_{\rm cls}\) denote the approximation error of the
learned coverage classifier in
Section~\ref{sec:coverage-clipped-finite-sample}, with
\(\tau_{u,n}\) replaced by \(\tau_u\).

For the retention-indicator ERM, define
\[
    L_f^{\rm ret}(c)
    =
    (1-\gamma)E_{\dinit}\{c(X)\}
    +
    \gamma E_{Q_{\nu,\pi}}\{f(X)c(X^+)\}
    +
    \tau_u E_\nu\{1-c(X)\},
\]
and its empirical analogue
\[
    \widehat L_f^{\rm ret}(c)
    =
    (1-\gamma)P_{n,0}c
    +
    \gamma P_{n,+}\{f(X)c(X^+)\}
    +
    \tau_u P_{n,X}(1-c).
\]
For the projection ERM, define
\[
\begin{aligned}
    L_{f,c}^{\rm proj}(h)
    =
    &E_\nu e^{h(X)}
    -
    (1-\gamma)E_{\dinit}\{c(X)h(X)\} \\
    &-
    \gamma E_{Q_{\nu,\pi}}\{f(X)c(X^+)h(X^+)\}
    -
    \tau_u E_\nu\{(1-c(X))h(X)\},
\end{aligned}
\]
and
\[
\begin{aligned}
    \widehat L_{f,c}^{\rm proj}(h)
    =
    &P_{n,X}e^h
    -
    (1-\gamma)P_{n,0}(ch) \\
    &-
    \gamma P_{n,+}\{f(X)c(X^+)h(X^+)\}
    -
    \tau_u P_{n,X}\{(1-c)h\}.
\end{aligned}
\]

\begin{lemma}[Attainment of fitted population projections]
\label{lem:clip-finite-proj-attainment}
Assume Condition~\ref{ass:clip-finite-projection-compact}. Then, for every
\(f\in\sW_{\rm clip}\) and \(c\in\mathcal C\), the population
loss \(L_{f,c}^{\rm proj}\) attains its minimum over
\(\sH_{\rm clip}\).
\end{lemma}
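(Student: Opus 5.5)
The plan is to apply the Weierstrass extreme value theorem on the compact set \(\sH_{\rm clip}\subseteq L^2(\bar\nu_\pi)\) supplied by Condition~\ref{ass:clip-finite-projection-compact}. Since \(\bar\nu_\pi=\tfrac13(\nu+\dinit+\nup)\) dominates each of \(\nu\), \(\dinit\), and \(\nup\) with density at most \(3\), \(L^2(\bar\nu_\pi)\)-convergence \(h_m\to h\) implies \(L^2\)-convergence under each of the three laws. The whole argument therefore reduces to proving that \(L_{f,c}^{\rm proj}\) is continuous, indeed Lipschitz, on \(\sH_{\rm clip}\) in the \(L^2(\bar\nu_\pi)\) metric. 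Nonemptiness of \(\sH_{\rm clip}\) is implicit, as in Proposition~\ref{prop:coverage-truncated-gate}; if it is empty there is nothing to prove.

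The loss has four terms, and I would bound each one along \(h_m\to h\) using the pointwise bounds \(\log\tau_\ell\le h\le\log\tau_u\).

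For the exponential term, \(|e^{h_m}-e^h|\le\tau_u|h_m-h|\) pointwise, so Cauchy--Schwarz gives \(|E_\nu e^{h_m}-E_\nu e^h|\le\tau_u\|h_m-h\|_{L^2(\nu)}\le\sqrt3\,\tau_u\|h_m-h\|_{L^2(\bar\nu_\pi)}\).

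The initial term and the \(\tau_u\) term are linear in \(h\), with bounded multipliers \(c\) and \(1-c\) taking values in \(\{0,1\}\). Cauchy--Schwarz under \(\dinit\) and under \(\nu\) bounds them by \(\|h_m-h\|_{L^2(\dinit)}\) and \(\tau_u\|h_m-h\|_{L^2(\nu)}\), respectively.

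The successor term is \(\gamma E_{Q_{\nu,\pi}}\{f(X)c(X^+)(h_m-h)(X^+)\}\). Because \(f\in\sW_{\rm clip}\), we have \(0<f\le\tau_u\), so this term is bounded by \(\gamma\tau_u E_{\nup}|h_m-h|\le\gamma\tau_u\|h_m-h\|_{L^2(\nup)}\). Here I use that the \(X^+\)-marginal of \(Q_{\nu,\pi}\) is \(\nup\). This is the only place the \(\nup\) component of \(\bar\nu_\pi\) is needed, and it is the step I expect to be the crux. The point to check is that \(f\) is uniformly bounded, so no coverage condition such as \(\nup\ll\nu\) is required. The role of the compactness assumption in \(L^2(\bar\nu_\pi)\) rather than \(L^2(\nu)\) is exactly to provide \(L^2(\nup)\) control without such a condition.

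Combining the four bounds shows that \(L_{f,c}^{\rm proj}\) is Lipschitz on \(\sH_{\rm clip}\) with constant depending only on \(\tau_u\). It is also finite there, since every integrand is bounded. A continuous real function on a nonempty compact metric space attains its minimum, which gives the claim for every \(f\in\sW_{\rm clip}\) and \(c\in\mathcal C\).

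One measure-theoretic caveat needs attention: functions in \(\sH_{\rm clip}\) must be treated as \(\bar\nu_\pi\)-equivalence classes. The envelope constraint \(\log\tau_\ell\le h\le\log\tau_u\) should be read \(\bar\nu_\pi\)-a.e., so that the pointwise bounds used above hold under all three laws simultaneously.
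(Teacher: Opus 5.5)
Your proposal is correct and follows essentially the same route as the paper: a termwise Lipschitz bound using \(|e^h-e^g|\le\tau_u|h-g|\), \(0\le f\le\tau_u\), \(0\le c\le 1\), and the fact that \(\nu\), \(\dinit\), and \(\nup\) are each dominated by \(3\bar\nu_\pi\), followed by the extreme value theorem on the compact set \(\sH_{\rm clip}\). The only difference is cosmetic: the paper states the Lipschitz bound in \(L^1(\nu)\), \(L^1(\dinit)\), and \(L^1(\nup)\) norms and then passes to \(L^2(\bar\nu_\pi)\) via domination, whereas you pass to \(L^2\) immediately through Cauchy--Schwarz.
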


\begin{proof}
For \(h,g\in\sH_{\rm clip}\), the exponential map is
\(\tau_u\)-Lipschitz on
\([\log\tau_\ell,\log\tau_u]\). Since
\(0\le f\le\tau_u\) and \(0\le c\le1\),
\[
\begin{aligned}
    |L_{f,c}^{\rm proj}(h)-L_{f,c}^{\rm proj}(g)|
    \le{}&
    2\tau_u\|h-g\|_{L^1(\nu)}
    +(1-\gamma)\|h-g\|_{L^1(\dinit)}\\
    &+
    \gamma\tau_u\|h-g\|_{L^1(\nup)}.
\end{aligned}
\]
Each measure on the right is dominated by \(3\bar\nu_\pi\).
Consequently, convergence in \(L^2(\bar\nu_\pi)\) implies convergence of
all three terms on the right. Thus \(L_{f,c}^{\rm proj}\) is continuous on the
compact set \(\sH_{\rm clip}\), and therefore attains its
minimum.
\end{proof}

We next define the loss-difference classes and their critical radii. Each loss
difference has one component for the initial sample and another for the
transition sample. We include the \(\nu\)-terms in the transition component
because \(\nu\) is the first-coordinate marginal of \(Q_{\nu,\pi}\). For the
retention-indicator step, let
\[
\begin{aligned}
    \mathcal L^\Delta_{{\rm ret},0}
    &:={}
    \{x\mapsto t(1-\gamma)d(x):
        d=c_1-c_2,\ c_1,c_2\in\mathcal C,\ 0\le t\le1\},\\
    \mathcal L^\Delta_{{\rm ret},Q}
    &:={}
    \{(x,x^+)\mapsto t\{\gamma f(x)d(x^+)-\tau_ud(x)\}:
        f\in\sW_{\rm clip},\ d=c_1-c_2,\
        c_1,c_2\in\mathcal C,\ 0\le t\le1\}.
\end{aligned}
\]
At \(t=1\), the expectations of the two components under \(\dinit\) and
\(Q_{\nu,\pi}\) sum to
\(L_f^{\rm ret}(c_1)-L_f^{\rm ret}(c_2)\).
Define the corresponding localized complexity by
\[
    \mathfrak C_{n,\rm ret}(r)
    :=
    \max\left\{
        \mathcal R_n(\mathcal L^\Delta_{{\rm ret},0},r;\dinit),
        \mathcal R_n(\mathcal L^\Delta_{{\rm ret},Q},r;Q_{\nu,\pi})
    \right\}.
\]
For the margin exponent in
Condition~\ref{ass:clip-finite-selector-margin}, set
\[
    p_{\rm ret}
    :=2+\frac{2}{\alpha_{\rm mar}},
    \qquad
    \beta_{\rm ret}
    :=\frac{\alpha_{\rm mar}}{\alpha_{\rm mar}+1}
    =\frac{2}{p_{\rm ret}},
    \qquad
    q_{\rm ret}
    :=\frac{p_{\rm ret}}{2(p_{\rm ret}-1)}
    =\frac{\alpha_{\rm mar}+1}{\alpha_{\rm mar}+2}.
\]
The retention-indicator critical radius and error are
\[
\begin{aligned}
    \mathfrak r_{n,\rm ret}
    &:={}
    n^{-1/\{2(p_{\rm ret}-1)\}}
    \vee
    \inf\left\{
        r>0:
        \mathfrak C_{n,\rm ret}(r)\le r^{p_{\rm ret}}
    \right\},\\
    a_{n,\rm ret}(\delta)
    &:={}
    \mathfrak r_{n,\rm ret}^{p_{\rm ret}}
    +
    \left\{\frac{\log(6/\delta)}{n}\right\}^{q_{\rm ret}}
    +
    \frac{\log(6/\delta)}{n}.
\end{aligned}
\]

For the projection step, let
\[
\begin{aligned}
    \mathcal L^\Delta_{{\rm proj},0}
    &:={}
    \{x\mapsto-t(1-\gamma)c(x)g(x):
        c\in\mathcal C,\ g=h_1-h_2,\
        h_1,h_2\in\sH_{\rm clip},\ 0\le t\le1\},\\
    \mathcal L^\Delta_{{\rm proj},Q}
    &:={}
    \left\{
        \begin{aligned}
        (x,x^+)\mapsto t\bigl[&
            e^{h_1(x)}-e^{h_2(x)}
            -\gamma f(x)c(x^+)g(x^+)\\[-2pt]
            &-\tau_u\{1-c(x)\}g(x)\bigr]:\\[-2pt]
        &f\in\sW_{\rm clip},\ c\in\mathcal C,\\[-2pt]
        &g=h_1-h_2,\ h_1,h_2\in\sH_{\rm clip},\ 0\le t\le1
        \end{aligned}
    \right\}.
\end{aligned}
\]
At \(t=1\), the expectations of these components sum to
\(L_{f,c}^{\rm proj}(h_1)-L_{f,c}^{\rm proj}(h_2)\).
The corresponding localized complexity is
\[
    \mathfrak C_{n,\rm proj}(r)
    :=
    \max\left\{
        \mathcal R_n(\mathcal L^\Delta_{{\rm proj},0},r;\dinit),
        \mathcal R_n(\mathcal L^\Delta_{{\rm proj},Q},r;Q_{\nu,\pi})
    \right\}.
\]
The joint localized complexity and its quadratic critical radius are
\[
\begin{aligned}
    \mathfrak C_{n,\rm clip}(r)
    &:={}
    \mathfrak C_{n,\rm ret}(r)
    \vee\mathfrak C_{n,\rm proj}(r),\\
    \mathfrak r_{n,\rm clip}
    &:={}
    n^{-1/2}
    \vee
    \inf\left\{
        r>0:
        \mathfrak C_{n,\rm clip}(r)\le r^2
    \right\}.
\end{aligned}
\]
Let \(\kappa_{\rm clip}\ge1\) be a fixed constant, depending only on
\(\tau_\ell,\tau_u,A_{\rm mar}\), and \(\alpha_{\rm mar}\), large enough for
the localization bound in
Lemma~\ref{lem:clip-finite-proj-localization}. Define
\[
\begin{aligned}
    v_{\rm ret}(s)
    &:=s^{\beta_{\rm ret}}+s,\\
    \mathfrak r_{n,\rm proj}
    &:=n^{-1/2}\vee
    \inf\left\{r>0:
        \mathfrak C_{n,\rm proj}(r)\le r^2
    \right\},\\
    a_{n,\rm proj}(s,\delta)
    &:={}
    \mathfrak r_{n,\rm proj}^2
    +\mathfrak C_{n,\rm proj}\left(
        \kappa_{\rm clip}\sqrt{v_{\rm ret}(s)}
    \right)
    +\sqrt{
        \frac{v_{\rm ret}(s)\log(8/\delta)}{n}
    }
    +\frac{\log(8/\delta)}{n}.
\end{aligned}
\]
The unbarred \(a\)-quantities contain sampling error only, whereas the barred
quantities also include \(\varepsilon_{\rm cls}\). The quantity
\(b_{n,\rm clip}\) is the confidence-free baseline used in the lower-tail
multiplier. Define these combined rates and the associated lower-tail
quantities by
\[
\begin{aligned}
    a_{n,\rm clip}(\delta)
    &:={}
    a_{n,\rm proj}\bigl(a_{n,\rm ret}(\delta),\delta\bigr)
    +a_{n,\rm ret}(\delta),\\
    \bar a_{n,\rm ret}(\delta)
    &:={}
    a_{n,\rm ret}(\delta)+\varepsilon_{\rm cls},\\
    \bar a_{n,\rm clip}(\delta)
    &:={}
    a_{n,\rm proj}\bigl(\bar a_{n,\rm ret}(\delta),\delta\bigr)
    +\bar a_{n,\rm ret}(\delta),\\
    b_{n,\rm clip}
    &:={}
    \mathfrak r_{n,\rm proj}^2
    +\mathfrak C_{n,\rm proj}\left[
        \kappa_{\rm clip}
        \left\{v_{\rm ret}
            (\mathfrak r_{n,\rm ret}^{p_{\rm ret}})
        \right\}^{1/2}
    \right]
    +\mathfrak r_{n,\rm ret}^{p_{\rm ret}},\\
    A_{{\rm clip},{\rm lt}}
    &:={}
    A_0\left(1+\frac{\tau_u}{\alpha_0}\right),\\
    \mathfrak m_{n,\alpha_0}^{\rm clip}
    &:={}
    1+\frac{1}{\alpha_0}
    \max\left\{
        0,
        \log\left(
            \frac{A_{{\rm clip},{\rm lt}}}
                 {b_{n,\rm clip}}
        \right)
    \right\}.
\end{aligned}
\]

\begin{lemma}[Scaling of the retention and projection critical radii]
\label{lem:clip-finite-critical-radius-scaling}
Assume Condition~\ref{ass:kl-class}.
For every fixed \(0<A<\infty\) and \(b>0\), there are constants
\(L_{A,b,{\rm ret}}<\infty\) and \(L_{A,b,{\rm proj}}<\infty\), depending
only on \(A\), \(b\), and \(p_{\rm ret}\), such that
\begin{equation}
\label{eq:clip-critical-radius-ret-scaling}
    \mathfrak C_{n,\rm ret}(Ar)
    \le
    br^{p_{\rm ret}}
    \qquad
    \text{for all }r\ge L_{A,b,{\rm ret}}\mathfrak r_{n,\rm ret},
\end{equation}
and
\begin{equation}
\label{eq:clip-critical-radius-proj-scaling}
    \mathfrak C_{n,\rm proj}(Ar)
    \le br^2
    \qquad
    \text{for all }r\ge L_{A,b,{\rm proj}}\mathfrak r_{n,\rm proj}.
\end{equation}
For every fixed \(A\ge1\), there is \(C_A<\infty\) such that
\begin{equation}
\label{eq:clip-proj-error-scaling}
    a_{n,\rm proj}(As,\delta)
    \le C_A a_{n,\rm proj}(s,\delta)
    \qquad\text{for all }s\ge0.
\end{equation}
\end{lemma}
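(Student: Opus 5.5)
The plan is to mimic the proof of Lemma~\ref{lem:kl-critical-radius-scaling}, with two ingredients: star-shapedness of the loss-difference classes, and monotonicity of the localized complexities in the radius. \emph{Star-shapedness} holds by construction. Each of \(\mathcal L^\Delta_{{\rm ret},0}\), \(\mathcal L^\Delta_{{\rm ret},Q}\), \(\mathcal L^\Delta_{{\rm proj},0}\), and \(\mathcal L^\Delta_{{\rm proj},Q}\) carries the scalar factor \(t\in[0,1]\). So if \(g\) belongs to one of these classes and \(s\in[0,1]\), then \(sg\) is obtained by replacing \(t\) with \(st\), and \(sg\) lies in the same class. Convexity of \(\sH\) from Condition~\ref{ass:kl-class} is not even needed for this step. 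Exactly as before, rescaling gives
\[
\mathcal R_n(\mathcal G,b_0;P)\le (b_0/a)\,\mathcal R_n(\mathcal G,a;P)\qquad\text{for }a\le b_0 .
\]
Hence \(s\mapsto\mathfrak C_{n,\rm ret}(s)/s\) and \(s\mapsto\mathfrak C_{n,\rm proj}(s)/s\) are nonincreasing. Nestedness of the localized classes makes both complexities nondecreasing.

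\emph{Projection scaling \eqref{eq:clip-critical-radius-proj-scaling}.} This is the quadratic case and is verbatim the earlier argument. The classes are uniformly bounded, since \(\sH_{\rm clip}\) takes values in \([\log\tau_\ell,\log\tau_u]\) and \(f\le\tau_u\). So the fixed-point set is nonempty, and there is \(t\le2\mathfrak r_{n,\rm proj}\) with \(\mathfrak C_{n,\rm proj}(t)\le t^2\). Take \(r\ge L\mathfrak r_{n,\rm proj}\). If \(Ar\ge t\), then \(\mathfrak C_{n,\rm proj}(Ar)\le Art\le(2A/L)r^2\). Otherwise \(\mathfrak C_{n,\rm proj}(Ar)\le t^2\le(4/L^2)r^2\). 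Choosing \(L\) large gives the claim.

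\emph{Retention scaling \eqref{eq:clip-critical-radius-ret-scaling}.} Here the exponent is \(p_{\rm ret}>2\), and this is the step needing care. I would take \(t\le2\mathfrak r_{n,\rm ret}\) with \(\mathfrak C_{n,\rm ret}(t)\le t^{p_{\rm ret}}\), which is possible if the infimum part is positive. The degenerate case, where the infimum is \(0\) or below the floor \(n^{-1/\{2(p_{\rm ret}-1)\}}\), is handled by taking \(t\) slightly above the infimum and at most \(2\mathfrak r_{n,\rm ret}\); this works because \(\mathfrak r_{n,\rm ret}>0\). Now take \(r\ge L\mathfrak r_{n,\rm ret}\ge (L/2)t\). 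If \(Ar<t\), monotonicity gives
\[
\mathfrak C_{n,\rm ret}(Ar)\le t^{p_{\rm ret}}\le (2/L)^{p_{\rm ret}}r^{p_{\rm ret}} .
\]
If \(Ar\ge t\), star-shapedness gives
\[
\mathfrak C_{n,\rm ret}(Ar)\le (Ar/t)\,t^{p_{\rm ret}}=Ar\,t^{p_{\rm ret}-1}\le A(2/L)^{p_{\rm ret}-1}r^{p_{\rm ret}} .
\]
This uses \(p_{\rm ret}-1>0\) and \(t\le 2r/L\). Both cases give \(\le br^{p_{\rm ret}}\) once \(L\) is large depending on \(A,b,p_{\rm ret}\). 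So the linear-growth bound from star-shapedness is still enough, because \(t\) is small relative to \(r\).

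\emph{Error scaling \eqref{eq:clip-proj-error-scaling}.} Since \(A\ge1\),
\[
v_{\rm ret}(As)=A^{\beta_{\rm ret}}s^{\beta_{\rm ret}}+As\le A\,v_{\rm ret}(s).
\]
Monotonicity and the ratio bound then give
\[
\mathfrak C_{n,\rm proj}\bigl(\kappa_{\rm clip}\sqrt{v_{\rm ret}(As)}\bigr)\le\mathfrak C_{n,\rm proj}\bigl(\sqrt A\,\kappa_{\rm clip}\sqrt{v_{\rm ret}(s)}\bigr)\le\sqrt A\,\mathfrak C_{n,\rm proj}\bigl(\kappa_{\rm clip}\sqrt{v_{\rm ret}(s)}\bigr).
\]
The square-root confidence term scales by at most \(\sqrt A\), and the remaining terms do not depend on \(s\). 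Hence \(C_A=\sqrt A\) suffices. The main obstacle is the retention case with \(p_{\rm ret}>2\), and the two-case split above resolves it.
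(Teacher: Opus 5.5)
Your proposal is correct and follows the paper's proof essentially step for step. Star-shapedness, coming from the built-in factor \(t\in[0,1]\), makes \(s\mapsto\mathfrak C(s)/s\) nonincreasing. The same two-case split, \(Ar\ge t\) versus \(Ar<t\) with \(t\le 2\mathfrak r\), handles both the \(p_{\rm ret}\)-power and the quadratic fixed points. Finally, \(v_{\rm ret}(As)\le A\,v_{\rm ret}(s)\) together with star-shaped scaling gives \eqref{eq:clip-proj-error-scaling} with \(C_A=\sqrt A\).

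Your extra remarks are harmless refinements of the same argument: the explicit choice of \(t\) when the infimum is zero or below the floor, and the observation that convexity of \(\sH\) is not needed for star-shapedness.
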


\begin{proof}
The four loss-difference classes are star-shaped. The argument in
Lemma~\ref{lem:kl-critical-radius-scaling} therefore shows that
\(r\mapsto\mathfrak C_{n,\rm ret}(r)/r\) and
\(r\mapsto\mathfrak C_{n,\rm proj}(r)/r\) are nonincreasing.

For the retention-indicator radius, choose
\(t\le2\mathfrak r_{n,\rm ret}\) such that
\(\mathfrak C_{n,\rm ret}(t)\le t^{p_{\rm ret}}\). If \(Ar\ge t\), then
\[
    \mathfrak C_{n,\rm ret}(Ar)
    \le
    \frac{Ar}{t}\mathfrak C_{n,\rm ret}(t)
    \le
    Ar t^{p_{\rm ret}-1} .
\]
If \(Ar<t\), monotonicity gives
\(\mathfrak C_{n,\rm ret}(Ar)\le t^{p_{\rm ret}}\). Taking
\(r\ge L_{A,b,{\rm ret}}\mathfrak r_{n,\rm ret}\), with
\(L_{A,b,{\rm ret}}\) large enough, makes both bounds at most
\(br^{p_{\rm ret}}\).

For the projection radius, choose \(t\le2\mathfrak r_{n,\rm proj}\) such that
\[
    \mathfrak C_{n,\rm proj}(t)\le t^2.
\]
If \(Ar\ge t\), star-shaped scaling gives
\[
    \mathfrak C_{n,\rm proj}(Ar)
    \le \frac{Ar}{t}\mathfrak C_{n,\rm proj}(t)
    \le Ar t.
\]
If \(Ar<t\), monotonicity instead gives
\(\mathfrak C_{n,\rm proj}(Ar)\le t^2\). Choosing
\(L_{A,b,{\rm proj}}\) sufficiently large makes both bounds at most
\(br^2\) whenever
\(r\ge L_{A,b,{\rm proj}}\mathfrak r_{n,\rm proj}\).

Finally, \(v_{\rm ret}(As)\le A v_{\rm ret}(s)\) for \(A\ge1\).
Star-shaped scaling therefore gives
\[
    \mathfrak C_{n,\rm proj}\left(
        \kappa_{\rm clip}\sqrt{v_{\rm ret}(As)}
    \right)
    \le
    \sqrt A\,
    \mathfrak C_{n,\rm proj}\left(
        \kappa_{\rm clip}\sqrt{v_{\rm ret}(s)}
    \right).
\]
In addition,
\[
    \sqrt{\frac{v_{\rm ret}(As)\log(8/\delta)}{n}}
    \le
    \sqrt A
    \sqrt{\frac{v_{\rm ret}(s)\log(8/\delta)}{n}}.
\]
The remaining terms in \(a_{n,\rm proj}\) do not depend on \(s\), which
proves \eqref{eq:clip-proj-error-scaling}.
\end{proof}

\begin{lemma}[Comparison with the joint critical radius]
\label{lem:clip-finite-rate-comparison}
If
\[
    \mathfrak r_{n,\rm clip}
    \vee\frac{\log(1/\delta)}{n}
    \le1,
\]
then
\begin{equation}
\label{eq:clip-rate-critical-radii}
    \mathfrak r_{n,\rm proj}
    \le C\mathfrak r_{n,\rm clip},
    \qquad
    \mathfrak r_{n,\rm ret}
    \le C\mathfrak r_{n,\rm clip}^{
        \alpha_{\rm mar}/(\alpha_{\rm mar}+2)},
\end{equation}
and
\begin{equation}
\label{eq:clip-rate-retention}
    a_{n,\rm ret}(\delta)
    \le
    C\left\{
        \mathfrak r_{n,\rm clip}^{2q_{\rm ret}}
        +\left\{\frac{\log(1/\delta)}{n}\right\}^{q_{\rm ret}}
    \right\}.
\end{equation}
Moreover,
\begin{equation}
\label{eq:clip-rate-statistical}
    a_{n,\rm clip}(\delta)
    \le
    C\mathcal E_{n,\rm stat}(\delta).
\end{equation}
If, in addition, \(\varepsilon_{\rm cls}\le1\), then
\begin{equation}
\label{eq:clip-rate-barred}
    \bar a_{n,\rm clip}(\delta)
    \le
    C\left\{
        \mathcal E_{n,\rm stat}(\delta)
        +\varepsilon_{\rm cls}
    \right\}.
\end{equation}
For every \(n\ge1\),
\begin{equation}
\label{eq:clip-rate-lower-tail-factor}
    \mathfrak m_{n,\alpha_0}^{\rm clip}
    \le
    1+\frac{1}{\alpha_0}
    \left[
        \log(en)
        +\log_+\left\{
            A_0\left(1+\frac{\tau_u}{\alpha_0}\right)
        \right\}
    \right].
\end{equation}
Here and below, \(C<\infty\) may depend on the fixed constants in the stated
finite-sample conditions other than \(A_0\), and on
\(\tau_\ell,\tau_u\), but not on \(n\) or \(\delta\). All dependence on
\(A_0\) in \eqref{eq:clip-rate-lower-tail-factor} is explicit.
\end{lemma}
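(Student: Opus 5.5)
The plan is to compare the separate critical radii with the joint one through the monotonicity properties of Lemma~\ref{lem:clip-finite-critical-radius-scaling}. Throughout, write \(\bar r=\mathfrak r_{n,\rm clip}\le1\).

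\emph{Critical radii.} Since \(\mathfrak C_{n,\rm proj}\le\mathfrak C_{n,\rm clip}\), any \(r\) with \(\mathfrak C_{n,\rm clip}(r)\le r^2\) also satisfies the projection fixed-point inequality. Together with the common floor \(n^{-1/2}\), this gives \(\mathfrak r_{n,\rm proj}\le\bar r\).

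For the retention radius, I would set \(s=\bar r^{\,\theta}\) with \(\theta=2/p_{\rm ret}=\beta_{\rm ret}\ge\alpha_{\rm mar}/(\alpha_{\rm mar}+2)\). Since \(\bar r\le 1\), we have \(s\ge\bar r\). Star-shapedness then gives
\[
\mathfrak C_{n,\rm ret}(s)\le (s/\bar r)\,\mathfrak C_{n,\rm ret}(\bar r)\le s\bar r .
\]
We need this to be at most \(s^{p_{\rm ret}}\), which holds when \(\bar r\le s^{p_{\rm ret}-1}\), that is, \(s\ge\bar r^{1/(p_{\rm ret}-1)}\). Now \(1/(p_{\rm ret}-1)=\alpha_{\rm mar}/(\alpha_{\rm mar}+2)\), which is exactly the exponent claimed. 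The floor term \(n^{-1/\{2(p_{\rm ret}-1)\}}=(n^{-1/2})^{1/(p_{\rm ret}-1)}\) is bounded by the same power of \(\bar r\). Hence \(\mathfrak r_{n,\rm ret}\le C\bar r^{\alpha_{\rm mar}/(\alpha_{\rm mar}+2)}\).

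\emph{Retention rate.} Raising this to the power \(p_{\rm ret}\) gives \(\mathfrak r_{n,\rm ret}^{p_{\rm ret}}\le C\bar r^{p_{\rm ret}/(p_{\rm ret}-1)}=C\bar r^{2q_{\rm ret}}\). The confidence terms in \(a_{n,\rm ret}\) combine because \(\log(1/\delta)/n\le1\) and \(q_{\rm ret}\le1\): the linear term is dominated by its \(q_{\rm ret}\)-th power. The term \(\log 6\) is absorbed using \(1/n\le \bar r^2\). This yields \eqref{eq:clip-rate-retention}.

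\emph{Statistical rate.} Put \(s=a_{n,\rm ret}(\delta)\le C\). Since \(\beta_{\rm ret}\le1\), we have \(v_{\rm ret}(s)\le Cs^{\beta_{\rm ret}}\), and
\[
s^{\beta_{\rm ret}/2}\le C\bigl\{\bar r^{\,q_{\rm ret}\beta_{\rm ret}}+(\log(1/\delta)/n)^{q_{\rm ret}\beta_{\rm ret}/2}\bigr\},
\qquad q_{\rm ret}\beta_{\rm ret}=\alpha_{\rm mar}/(\alpha_{\rm mar}+2).
\]
Call this quantity \(\rho\); it is the second factor of \(\mathcal E_{n,\rm stat}\). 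By star-shapedness and \(\rho\gtrsim\bar r\),
\[
\mathfrak C_{n,\rm proj}(\kappa_{\rm clip}\sqrt{v_{\rm ret}})\le C(\rho/\bar r)\,\mathfrak C_{n,\rm proj}(\bar r)\le C\rho\bar r .
\]
The confidence term satisfies \(\sqrt{v_{\rm ret}\log(8/\delta)/n}\le C\rho\sqrt{\log(1/\delta)/n}\). The remaining pieces are \(\bar r^2\), \(\log/n\), and \(a_{n,\rm ret}\le C\rho^2\le C\rho\), since \(2q_{\rm ret}\ge q_{\rm ret}\beta_{\rm ret}\). Each of these is dominated by the product \(\{\bar r+\sqrt{\log(1/\delta)/n}\}\rho\). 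This is the main bookkeeping step, because exponent matching must be checked term by term; the bound \(a_{n,\rm ret}\le\rho\) looks the most delicate, but I expect \(a_{n,\rm ret}\lesssim\rho\cdot(\bar r+\ldots)\) to follow since \(2q_{\rm ret}-q_{\rm ret}\beta_{\rm ret}\ge1\). Summing gives \eqref{eq:clip-rate-statistical}.

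\emph{Barred rate.} The barred version repeats the argument with \(s=a_{n,\rm ret}+\varepsilon_{\rm cls}\le C\). The additional contributions are \(\sqrt{\varepsilon_{\rm cls}^{\beta_{\rm ret}}}\,(\bar r+\ldots)\le\varepsilon_{\rm cls}^{\beta_{\rm ret}}+(\bar r+\ldots)^2\) by AM-GM, and \(\varepsilon_{\rm cls}\) itself. Here \(\varepsilon_{\rm cls}^{\beta_{\rm ret}}\) has to be handled with care to get the stated \(C\{\mathcal E_{n,\rm stat}+\varepsilon_{\rm cls}\}\). I would instead use the projection localization in the form \(\mathfrak C(\kappa\sqrt{v})\le Cv^{1/2}\bar r\), together with Young's inequality weighted to yield \(\varepsilon_{\rm cls}\) plus rate terms; this is the step I would scrutinize most.

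\emph{Lower-tail factor.} For \eqref{eq:clip-rate-lower-tail-factor}, note that \(b_{n,\rm clip}\ge\mathfrak r_{n,\rm proj}^2\ge n^{-1}\). Therefore \(\log(A_{{\rm clip},{\rm lt}}/b_{n,\rm clip})\le\log n+\log_+A_{{\rm clip},{\rm lt}}\), and the bound follows directly from the definition of \(\mathfrak m^{\rm clip}_{n,\alpha_0}\).
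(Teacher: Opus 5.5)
Your route is the paper's: you compare the radii through $\mathfrak C_{n,\rm proj}\le\mathfrak C_{n,\rm clip}$ and star-shaped scaling, you match exponents using $2q_{\rm ret}=1+a$ and $q_{\rm ret}\beta_{\rm ret}=a$, where $a=\alpha_{\rm mar}/(\alpha_{\rm mar}+2)=1/(p_{\rm ret}-1)$, and you use $b_{n,\rm clip}\ge n^{-1}$ for the lower-tail factor. Two steps need repair, though neither requires a new idea.

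\emph{Retention radius and unbarred rate.} The choice $s=\bar r^{\beta_{\rm ret}}$ that you first announce does not work. Since $\beta_{\rm ret}(p_{\rm ret}-1)=(\alpha_{\rm mar}+2)/(\alpha_{\rm mar}+1)>1$, you get $s^{p_{\rm ret}-1}<\bar r$ whenever $\bar r<1$, so $s\bar r\le s^{p_{\rm ret}}$ fails. That choice would also claim a stronger exponent than is justified. The condition you derive yourself forces $s=\bar r^{a}$. Then $s\ge\bar r$ and $s\bar r=s^{p_{\rm ret}}$ exactly, which is the paper's choice up to the constant $L$.

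Three smaller points in this part:
\begin{itemize}
\item Using $\mathfrak C_{n,\rm clip}(\bar r)\le\bar r^2$ at the infimum needs a line of justification: monotonicity plus star-shapedness make $\mathfrak C_{n,\rm clip}$ continuous. Alternatively, take $t\le2\bar r$ with $\mathfrak C_{n,\rm clip}(t)\le t^2$, as the paper does.
\item In the confidence terms, write $\log(8/\delta)/n\le x+\log 8\,\bar r^2$ with $x=\log(1/\delta)/n$. Your bound $\log(8/\delta)\lesssim\log(1/\delta)$ fails for $\delta$ near one.
\item In the unbarred rate, the identity you need is $2q_{\rm ret}-q_{\rm ret}\beta_{\rm ret}=1$ exactly. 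It gives $a_{n,\rm ret}\le C(\bar r+x^{1/2})\rho$, whereas the intermediate bound $a_{n,\rm ret}\le C\rho$ is too weak to be useful.
\end{itemize}

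\emph{Barred bound.} You leave this step open, and AM--GM does fail here because $\varepsilon_{\rm cls}^{\beta_{\rm ret}}\ge\varepsilon_{\rm cls}$. It closes once the Young exponents are matched.
\begin{itemize}
\item Set $e=\varepsilon_{\rm cls}\le1$. Use $v_{\rm ret}(s+e)\le v_{\rm ret}(s)+v_{\rm ret}(e)$ and $v_{\rm ret}(e)\le2e^{\beta_{\rm ret}}$, together with $\mathfrak C_{n,\rm proj}(R)\le\bar r^2+R\bar r$ and the confidence term. The only new contributions are then $Ce+C(\bar r+x^{1/2})e^{\beta_{\rm ret}/2}$.
\item Apply Young's inequality with conjugate exponents $2/\beta_{\rm ret}$ and $2/(2-\beta_{\rm ret})$. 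This gives $(\bar r+x^{1/2})e^{\beta_{\rm ret}/2}\le Ce+C(\bar r+x^{1/2})^{2/(2-\beta_{\rm ret})}$, and $2/(2-\beta_{\rm ret})=2q_{\rm ret}=1+a$.
\item Finally, $(\bar r+x^{1/2})^{1+a}\le(\bar r+x^{1/2})(\bar r^{a}+x^{a/2})=\mathcal E_{n,\rm stat}(\delta)$ by subadditivity of $t\mapsto t^{a}$.
\end{itemize}
This is exactly how the paper finishes.
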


\begin{proof}
Write
\[
    r=\mathfrak r_{n,\rm clip},
    \qquad
    x=\frac{\log(1/\delta)}{n},
    \qquad
    a=\frac{\alpha_{\rm mar}}{\alpha_{\rm mar}+2}
      =\frac{1}{p_{\rm ret}-1}.
\]
Choose \(t\le2r\) such that
\(\mathfrak C_{n,\rm clip}(t)\le t^2\). Since
\(\mathfrak C_{n,\rm proj}\le\mathfrak C_{n,\rm clip}\) and
\(r\ge n^{-1/2}\), the definition of the projection critical radius gives
\begin{equation}
\label{eq:clip-rate-proj-radius}
    \mathfrak r_{n,\rm proj}\le2r.
\end{equation}

For the retention-indicator radius, let \(u=Lr^a\), where \(L\ge2\) is a fixed
constant. Since \(r\le1\), we have \(u\ge t\). Star-shaped scaling gives
\[
    \mathfrak C_{n,\rm ret}(u)
    \le
    \frac{u}{t}\mathfrak C_{n,\rm clip}(t)
    \le ut
    \le2ur.
\]
Because \(u^{p_{\rm ret}-1}=L^{p_{\rm ret}-1}r\), choosing \(L\) large
enough makes \(2ur\le u^{p_{\rm ret}}\). Moreover,
\(r\ge n^{-1/2}\) implies
\(r^a\ge n^{-1/\{2(p_{\rm ret}-1)\}}\). The definition of the retention-indicator
critical radius therefore gives
\begin{equation}
\label{eq:clip-rate-ret-radius}
    \mathfrak r_{n,\rm ret}\le Cr^a.
\end{equation}
Equations~\eqref{eq:clip-rate-proj-radius} and
\eqref{eq:clip-rate-ret-radius} prove
\eqref{eq:clip-rate-critical-radii}.

Because \(r^2\ge n^{-1}\), for \(j\in\{6,8\}\),
\[
    \frac{\log(j/\delta)}{n}
    =
    x+\frac{\log j}{n}
    \le
    x+C r^2.
\]
Since \(ap_{\rm ret}=2q_{\rm ret}\), the definition of
\(a_{n,\rm ret}(\delta)\) and \eqref{eq:clip-rate-ret-radius} give
\begin{equation}
\label{eq:clip-rate-ret-bound}
    a_{n,\rm ret}(\delta)
    \le
    C\left\{
        r^{2q_{\rm ret}}+x^{q_{\rm ret}}
    \right\}.
\end{equation}
Here we used \(r\vee x\le1\) and \(q_{\rm ret}\le1\) to absorb the
linear confidence term.
Thus, \eqref{eq:clip-rate-retention} holds. Set
\(s=a_{n,\rm ret}(\delta)\). Under the stated small-radius condition, this
bound and the identities
\(q_{\rm ret}\beta_{\rm ret}=a\) and
\(2q_{\rm ret}\beta_{\rm ret}=2a\) give
\begin{equation}
\label{eq:clip-rate-vret}
    \sqrt{v_{\rm ret}(s)}
    \le
    C\left\{r^a+x^{a/2}\right\}.
\end{equation}

The star-shaped scaling argument also shows that, for every \(R>0\),
\begin{equation}
\label{eq:clip-rate-proj-complexity}
    \mathfrak C_{n,\rm proj}(R)
    \le
    \mathfrak C_{n,\rm clip}(R)
    \le
    4r^2+2Rr.
\end{equation}
Applying \eqref{eq:clip-rate-proj-complexity} with
\(R=\kappa_{\rm clip}\sqrt{v_{\rm ret}(s)}\), and using
\(2q_{\rm ret}=1+a\), yields
\begin{equation}
\label{eq:clip-rate-aclip}
\begin{aligned}
    a_{n,\rm clip}(\delta)
    \le C\bigl\{&
        r^{1+a}
        +r x^{a/2}
        +r^a x^{1/2}
        +x^{(1+a)/2}
    \bigr\}\\
    ={}&
    C\left(r+x^{1/2}\right)
    \left(r^a+x^{a/2}\right).
\end{aligned}
\end{equation}
Indeed, the projection confidence level satisfies
\(\log(8/\delta)/n\le x+Cr^2\), so its square-root term contributes at
most the four summands in \eqref{eq:clip-rate-aclip}. The retention-indicator error contributes
\(r^{1+a}+x^{(1+a)/2}\), while
\(r^2\le r^{1+a}\) and
\(x\le x^{(1+a)/2}\) under the small-radius condition. This proves the
bound \eqref{eq:clip-rate-statistical}.

Now suppose \(\varepsilon_{\rm cls}\le1\), and abbreviate
\(e=\varepsilon_{\rm cls}\). Since
\(\bar a_{n,\rm ret}(\delta)=s+e\), concavity of
\(u\mapsto u^{\beta_{\rm ret}}\) gives
\begin{equation}
\label{eq:clip-rate-vret-sum}
    v_{\rm ret}(s+e)
    \le
    v_{\rm ret}(s)+v_{\rm ret}(e).
\end{equation}
Apply the bound
\eqref{eq:clip-rate-proj-complexity} at
\(R=\kappa_{\rm clip}\sqrt{v_{\rm ret}(s+e)}\). Together with
\(\sqrt{u+v}\le\sqrt u+\sqrt v\), the definition of
\(\bar a_{n,\rm clip}(\delta)\), and the established bound for
\(a_{n,\rm clip}(\delta)\), this yields
\begin{equation}
\label{eq:clip-rate-bar-pre-young}
    \bar a_{n,\rm clip}(\delta)
    \le
    C\mathcal E_{n,\rm stat}(\delta)
    +C\left(r+x^{1/2}\right)\sqrt{v_{\rm ret}(e)}
    +Ce.
\end{equation}
Because \(0\le e\le1\),
\(v_{\rm ret}(e)\le2e^{\beta_{\rm ret}}\). Young's inequality with
conjugate exponents
\[
    \frac{2}{\beta_{\rm ret}}
    \quad\text{and}\quad
    \frac{2}{2-\beta_{\rm ret}}
    =2q_{\rm ret}
    =1+a
\]
therefore gives
\begin{equation}
\label{eq:clip-rate-young}
    \left(r+x^{1/2}\right)e^{\beta_{\rm ret}/2}
    \le
    Ce+C\left(r+x^{1/2}\right)^{1+a}.
\end{equation}
Since \(0<a\le1\), concavity also gives
\begin{equation}
\label{eq:clip-rate-stat-concavity}
    \left(r+x^{1/2}\right)^{1+a}
    \le
    \left(r+x^{1/2}\right)
    \left(r^a+x^{a/2}\right)
    =\mathcal E_{n,\rm stat}(\delta).
\end{equation}
Combining \eqref{eq:clip-rate-bar-pre-young},
\eqref{eq:clip-rate-young}, and \eqref{eq:clip-rate-stat-concavity} proves
\eqref{eq:clip-rate-barred}.

Finally, \(b_{n,\rm clip}\ge
\mathfrak r_{n,\rm proj}^2\ge n^{-1}\). Thus
\[
    \mathfrak m_{n,\alpha_0}^{\rm clip}
    \le
    1+\frac{1}{\alpha_0}
    \max\{0,\log(A_{{\rm clip},{\rm lt}}n)\}
    \le
    1+\frac{1}{\alpha_0}
    \left[
        \log(en)
        +\log_+\left\{
            A_0\left(1+\frac{\tau_u}{\alpha_0}\right)
        \right\}
    \right],
\]
which proves \eqref{eq:clip-rate-lower-tail-factor}.
\end{proof}

\begin{lemma}[VC-class critical-radius bounds]
\label{lem:clip-finite-vc-radius}
Suppose that \(\mathcal C\) has VC dimension
\(d_{\mathcal C}\ge1\) and that
\(\sH_{\rm clip}\) is VC-subgraph with dimension
\(d_{\mathcal H}\ge1\). Then
\begin{equation}
\label{eq:clip-vc-quadratic-radii}
    \mathfrak r_{n,\rm clip}
    \vee
    \mathfrak r_{n,\rm proj}
    \le
    C_{\tau_\ell,\tau_u}
    \left\{
        \frac{
            (d_{\mathcal C}+d_{\mathcal H})\log(en)
        }{n}
    \right\}^{1/2}.
\end{equation}
Moreover,
\begin{equation}
\label{eq:clip-vc-retention-radius}
    \mathfrak r_{n,\rm ret}
    \le
    C_{\tau_\ell,\tau_u,\alpha_{\rm mar}}
    \left\{
        \frac{
            (d_{\mathcal C}+d_{\mathcal H})\log(en)
        }{n}
    \right\}^{
        \alpha_{\rm mar}/\{2(\alpha_{\rm mar}+2)\}
    }.
\end{equation}
Consequently, for every \(0<\delta<1\),
\begin{equation}
\label{eq:clip-vc-statistical-rate}
    \mathcal E_{n,\rm stat}(\delta)
    \le
    C_{\tau_\ell,\tau_u,\alpha_{\rm mar}}
    \left\{
        \frac{
            (d_{\mathcal C}+d_{\mathcal H})\log(en)
            +\log(1/\delta)
        }{n}
    \right\}^{q_{\rm ret}}.
\end{equation}
\end{lemma}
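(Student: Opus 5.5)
The plan is to reduce everything to a uniform-entropy bound for the four loss-difference classes, feed it into Lemma~\ref{lem:tool-local-entropy}, and then solve the two fixed-point inequalities defining \(\mathfrak r_{n,\rm clip}\), \(\mathfrak r_{n,\rm proj}\) and \(\mathfrak r_{n,\rm ret}\) explicitly. Write \(D=d_{\mathcal C}+d_{\mathcal H}\). All four classes have a uniform envelope depending only on \(\tau_\ell,\tau_u\): \(|c|\le1\), \(h\in[\log\tau_\ell,\log\tau_u]\), \(0\le f,e^h\le\tau_u\) and \(t\in[0,1]\).

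\emph{Step 1: entropy.} By Haussler's bound, \(\mathcal C\) has \(\sup_Q\log N(\epsilon,\mathcal C,L^2(Q))\le Cd_{\mathcal C}\log(e/\epsilon)\). Likewise \(\sH_{\rm clip}\) has \(\sup_Q\log N(\epsilon,\sH_{\rm clip},L^2(Q))\le Cd_{\mathcal H}\log(C_{\tau}/\epsilon)\). Since \(u\mapsto e^u\) is \(\tau_u\)-Lipschitz on \([\log\tau_\ell,\log\tau_u]\), the same bound, with adjusted constants, holds for \(\sW_{\rm clip}\) and for \(\{e^{h_1}-e^{h_2}\}\).

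For the combined classes I will use two facts. First, covering numbers of sums and pointwise products of uniformly bounded classes multiply: for products \(\|f_1d_1-f_2d_2\|\le\|f\|_\infty\|d_1-d_2\|+\|d\|_\infty\|f_1-f_2\|\). Second, a function of \((x,x^+)\) that depends on only one coordinate is covered under any \(Q\) by covering the corresponding marginal. A further \(\epsilon\)-grid on the scalar \(t\in[0,1]\) costs only an extra \(\log(1/\epsilon)\). Together these give, uniformly over \(Q\) and for each of \(\mathcal L^\Delta_{{\rm ret},0},\mathcal L^\Delta_{{\rm ret},Q},\mathcal L^\Delta_{{\rm proj},0},\mathcal L^\Delta_{{\rm proj},Q}\),
\[
    \log N(\epsilon,\cdot,L^2(Q))\le C_{\tau_\ell,\tau_u}D\log(C_{\tau_\ell,\tau_u}/\epsilon).
\]

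\emph{Step 2: localized complexity.} Lemma~\ref{lem:tool-local-entropy} then yields
\[
    \mathcal R_n(\cdot,r;P)\lesssim n^{-1/2}\int_0^r\{D\log(C/\epsilon)\}^{1/2}\dd\epsilon\lesssim r\{D\log(C/r)/n\}^{1/2}.
\]
For \(r\ge n^{-1/2}\) we have \(\log(C/r)\le C\log(en)\). Hence \(\mathfrak C_{n,\rm clip}(r)\vee\mathfrak C_{n,\rm proj}(r)\vee\mathfrak C_{n,\rm ret}(r)\le C_\tau r\{D\log(en)/n\}^{1/2}\) on that range.

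\emph{Step 3: fixed points.} The quadratic inequality \(C_\tau r\sqrt{D\log(en)/n}\le r^2\) holds once \(r\ge C_\tau\sqrt{D\log(en)/n}\). This radius dominates \(n^{-1/2}\) because \(D\ge1\), which gives \eqref{eq:clip-vc-quadratic-radii}.

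For the retention radius we need \(r^{p_{\rm ret}-1}\ge C_\tau\sqrt{D\log(en)/n}\). Since \(1/(p_{\rm ret}-1)=\alpha_{\rm mar}/(\alpha_{\rm mar}+2)\), this gives the exponent \(\alpha_{\rm mar}/\{2(\alpha_{\rm mar}+2)\}\). The floor \(n^{-1/\{2(p_{\rm ret}-1)\}}\) is again dominated, which proves \eqref{eq:clip-vc-retention-radius}.

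\emph{Step 4: statistical rate.} Set \(a=\alpha_{\rm mar}/(\alpha_{\rm mar}+2)\), \(r=\mathfrak r_{n,\rm clip}\), \(x=\log(1/\delta)/n\) and \(y=\{D\log(en)+\log(1/\delta)\}/n\). Then \(r\le C\sqrt y\) and \(\sqrt x\le\sqrt y\). Plugging into the definition of \(\mathcal E_{n,\rm stat}(\delta)\),
\[
    \mathcal E_{n,\rm stat}(\delta)=(r+\sqrt x)(r^{a}+x^{a/2})\le Cy^{(1+a)/2}.
\]
Finally \((1+a)/2=(\alpha_{\rm mar}+1)/(\alpha_{\rm mar}+2)=q_{\rm ret}\), which is \eqref{eq:clip-vc-statistical-rate}. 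No smallness condition on \(y\) is needed here, since both sides carry the same exponent.

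The main obstacle is Step 1 for \(\mathcal L^\Delta_{{\rm proj},Q}\) and \(\mathcal L^\Delta_{{\rm ret},Q}\). These mix functions of \(x\) and \(x^+\) and involve indicator–real products \(c(x^+)g(x^+)\), so the covering has to be done carefully under an arbitrary joint \(Q\) through its marginals. I will do this via the product-Lipschitz bound above, with all constants tracked through \(\tau_u/\tau_\ell\). A secondary point is that Lemma~\ref{lem:tool-local-entropy} is stated with the integral truncated at the envelope. For radii \(r\) above the envelope, the bound is simply \(C\sqrt{D\log(en)/n}\), which is still at most \(C r\sqrt{D\log(en)/n}\) because \(r\) then exceeds a constant, so the fixed-point argument is unaffected.
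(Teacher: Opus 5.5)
Your proposal is correct and follows essentially the same route as the paper's proof. Both arguments use uniform VC entropy bounds for $\mathcal C$, $\sH_{\rm clip}$ and $\sW_{\rm clip}$, then product, sum and star covering bounds for the four loss-difference classes, then the localized Dudley bound of Lemma~\ref{lem:tool-local-entropy}, then explicit candidate radii for the quadratic and $p_{\rm ret}$ fixed points, and finally direct substitution into $\mathcal E_{n,\rm stat}(\delta)$. The only cosmetic difference is at radii beyond the envelope: the paper uses the global bound $\mathfrak C_{n,\rm clip}(r_0)\le B\le r_0^2$ with $B\ge1$, whereas you extend the linear-in-$r$ complexity bound to all $r\ge n^{-1/2}$, and both work.
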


\begin{proof}
Write \(d=d_{\mathcal C}+d_{\mathcal H}\), and let
\[
    \mathcal D_{\mathcal C}=\mathcal C-\mathcal C,
    \qquad
    \mathcal D_{\mathcal H}
    =\sH_{\rm clip}-\sH_{\rm clip}.
\]
Uniform VC entropy bounds and the Lipschitz property of
\(h\mapsto e^h\) on \([\log\tau_\ell,\log\tau_u]\) give
\[
\begin{aligned}
    &\log N\{\epsilon,\mathcal C,L^2(Q)\}
    \vee
    \log N\{\epsilon,\mathcal D_{\mathcal C},L^2(Q)\}
    \vee
    \log N\{\epsilon,\mathcal D_{\mathcal H},L^2(Q)\}\\
    &\qquad\vee
    \log N\{\epsilon,\sW_{\rm clip},L^2(Q)\}
    \le
    C d\log\left(\frac{C_{\tau_\ell,\tau_u}}{\epsilon}\right)
\end{aligned}
\]
uniformly over probability measures \(Q\). Here the bounds for the difference
classes follow by taking products of two covering nets for the corresponding
base class.

For uniformly bounded functions,
\[
    \|ab-a'b'\|_{L^2(Q)}
    \le
    \|a\|_\infty\|b-b'\|_{L^2(Q)}
    +\|b'\|_\infty\|a-a'\|_{L^2(Q)}.
\]
For a function class \(\mathcal G\), write
\(\operatorname{star}(\mathcal G)=\{tg:g\in\mathcal G,\ 0\le t\le1\}\);
coordinate subscripts indicate composition with the corresponding coordinate
of \((x,x^+)\). The definitions of the loss classes give
\[
\begin{aligned}
    \mathcal L^\Delta_{{\rm ret},0}
    &\subseteq \operatorname{star}(\mathcal D_{\mathcal C}),\\
    \mathcal L^\Delta_{{\rm ret},Q}
    &\subseteq \operatorname{star}\{
        \gamma(\sW_{\rm clip})_1\mathcal D_{\mathcal C,2}
        -\tau_u\mathcal D_{\mathcal C,1}\},\\
    \mathcal L^\Delta_{{\rm proj},0}
    &\subseteq \operatorname{star}(
        \mathcal C\mathcal D_{\mathcal H}),\\
    \mathcal L^\Delta_{{\rm proj},Q}
    &\subseteq \operatorname{star}\{
        (\sW_{\rm clip}-\sW_{\rm clip})_1
        -\gamma(\sW_{\rm clip})_1\mathcal C_2\mathcal D_{\mathcal H,2}
        -\tau_u(1-\mathcal C_1)\mathcal D_{\mathcal H,1}\}.
\end{aligned}
\]
Applying the product inequality repeatedly, together with the covering-number
bounds for sums and for the scalar \(t\in[0,1]\), therefore gives, for each of
the four loss-difference classes,
\[
    \log N\{\epsilon,\mathcal L,L^2(Q)\}
    \le
    C d\log\left(
        \frac{C_{\tau_\ell,\tau_u}}{\epsilon}
    \right),
    \qquad
    0<\epsilon\le C_{\tau_\ell,\tau_u},
\]
uniformly over probability measures \(Q\).

Lemma~\ref{lem:tool-local-entropy} now gives, for
\(n^{-1/2}\le r\le B_{\tau_\ell,\tau_u}\), where
\(B_{\tau_\ell,\tau_u}\ge1\) is a common envelope,
\begin{equation}
\label{eq:clip-vc-local-complexity}
    \mathfrak C_{n,\rm clip}(r)
    \le
    C_{\tau_\ell,\tau_u}
    r\left\{
        \frac{d\log(en)}{n}
    \right\}^{1/2}.
\end{equation}
Let
\[
    r_0
    =
    L_{\tau_\ell,\tau_u}
    \left\{\frac{d\log(en)}{n}\right\}^{1/2}
\]
for a sufficiently large constant \(L_{\tau_\ell,\tau_u}\). If
\(r_0\le B_{\tau_\ell,\tau_u}\),
\eqref{eq:clip-vc-local-complexity} gives
\(\mathfrak C_{n,\rm clip}(r_0)\le r_0^2\). If
\(r_0>B_{\tau_\ell,\tau_u}\), the global envelope bound instead gives
\[
    \mathfrak C_{n,\rm clip}(r_0)
    \le B_{\tau_\ell,\tau_u}
    \le r_0^2.
\]
Finally, \(r_0\ge n^{-1/2}\), so the definition of
\(\mathfrak r_{n,\rm clip}\) proves its bound in
\eqref{eq:clip-vc-quadratic-radii}. The same
quadratic fixed-point argument applies to
\(\mathfrak C_{n,\rm proj}\le\mathfrak C_{n,\rm clip}\) and gives the
remaining bound in \eqref{eq:clip-vc-quadratic-radii}.

For the retention-indicator radius, set
\[
    a=\frac{\alpha_{\rm mar}}{\alpha_{\rm mar}+2}
      =\frac{1}{p_{\rm ret}-1},
    \qquad
    u_0
    =
    L_{\tau_\ell,\tau_u,\alpha_{\rm mar}}
    \left\{\frac{d\log(en)}{n}\right\}^{a/2}.
\]
If \(u_0\le B_{\tau_\ell,\tau_u}\),
\eqref{eq:clip-vc-local-complexity} gives
\[
    \mathfrak C_{n,\rm ret}(u_0)
    \le
    C_{\tau_\ell,\tau_u}
    u_0\left\{\frac{d\log(en)}{n}\right\}^{1/2}
    \le
    u_0^{p_{\rm ret}}
\]
when \(L_{\tau_\ell,\tau_u,\alpha_{\rm mar}}\) is sufficiently large. If
\(u_0>B_{\tau_\ell,\tau_u}\), the global envelope bound gives the same
fixed-point inequality because
\[
    \mathfrak C_{n,\rm ret}(u_0)
    \le B_{\tau_\ell,\tau_u}
    <u_0
    \le u_0^{p_{\rm ret}}.
\]
Since
\(d\log(en)\ge1\), \(u_0\) also dominates the deterministic term in the
definition of \(\mathfrak r_{n,\rm ret}\). This proves the retention-indicator radius
bound \eqref{eq:clip-vc-retention-radius}.

Set
\[
    z
    =
    \frac{d\log(en)+\log(1/\delta)}{n}.
\]
Equation~\eqref{eq:clip-vc-quadratic-radii} and the definition of
\(\mathcal E_{n,\rm stat}(\delta)\) give
\[
    \mathcal E_{n,\rm stat}(\delta)
    \le
    C z^{1/2}z^{a/2}
    =
    C z^{q_{\rm ret}},
\]
because \(q_{\rm ret}=(1+a)/2\). This proves
\eqref{eq:clip-vc-statistical-rate}.
\end{proof}

\subsection{Retention-indicator estimation}

\begin{lemma}[Retention-indicator regret identity and margin control]
\label{lem:clip-finite-sel-regret}
Fix \(f\in\sW_{\rm clip}\), write
\[
    \mu_f=(1-\gamma)\dinit+\gamma(f\nu)\Ppi
    =(\Bpigcov f)\nu+\mu_{f,\perp},
\]
and let \(c_{f,\tau_u}^\star\) be an oracle retention indicator satisfying
the pointwise characterization in
Section~\ref{sec:coverage-clipped-fitted}. Define
\[
    \Delta_f^{\rm ret}(c)
    =
    L_f^{\rm ret}(c)
    -
    L_f^{\rm ret}(c_{f,\tau_u}^\star).
\]
Then, for every binary measurable \(c\),
\begin{equation}
\label{eq:retention-regret-identity}
\begin{aligned}
    \Delta_f^{\rm ret}(c)
    ={}&
    E_\nu\left[
        |(\Bpigcov f)(X)-\tau_u|
        1\{c(X)\ne c_{f,\tau_u}^\star(X)\}
    \right]
    +\int c\,\dd\mu_{f,\perp}.
\end{aligned}
\end{equation}
If Condition~\ref{ass:clip-finite-selector-margin} also holds, then
\begin{equation}
\label{eq:retention-margin-control}
    \nu\{c\ne c_{f,\tau_u}^\star\}
    \le
    C_{\rm ret}
    \{\Delta_f^{\rm ret}(c)\}^{\alpha_{\rm mar}/(\alpha_{\rm mar}+1)},
\end{equation}
where \(C_{\rm ret}<\infty\) depends only on
\(A_{\rm mar}\) and \(\alpha_{\rm mar}\).
Moreover,
\begin{equation}
\label{eq:retention-mu-disagreement}
    \int(c-c_{f,\tau_u}^\star)^2\,\dd\mu_f
    \le
    \tau_u\nu\{c\ne c_{f,\tau_u}^\star\}
    +\Delta_f^{\rm ret}(c).
\end{equation}
\end{lemma}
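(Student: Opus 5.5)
We start from the rewriting of the retention objective already used in the proof of Proposition~\ref{prop:coverage-truncated-gate}. Using the Lebesgue decomposition \(\mu_f=(\Bpigcov f)\nu+\mu_{f,\perp}\) and \(E_\nu 1=1\), we have
\[
    L_f^{\rm ret}(c)
    =
    \tau_u+\int c\{(\Bpigcov f)-\tau_u\}\,\dd\nu+\int c\,\dd\mu_{f,\perp}.
\]
Subtracting the same expression at \(c_{f,\tau_u}^\star\) gives the regret. The oracle indicator is \(1\) where \(\Bpigcov f<\tau_u\) and \(0\) where \(\Bpigcov f>\tau_u\), \(\nu\)-a.e.; on the tie set the integrand vanishes; and \(c^\star=0\) holds \(\mu_{f,\perp}\)-a.e. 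Hence \((c-c^\star)\{(\Bpigcov f)-\tau_u\}=|(\Bpigcov f)-\tau_u|\,1\{c\ne c^\star\}\) pointwise \(\nu\)-a.e., and the singular term reduces to \(\int c\,\dd\mu_{f,\perp}\). This proves \eqref{eq:retention-regret-identity}. Both terms are nonnegative, which we use below.

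For the margin bound \eqref{eq:retention-margin-control}, we apply the standard Tsybakov argument. Fix \(s\in(0,1]\) and split the disagreement set by whether \(|\Bpigcov f-\tau_u|\le s\). Condition~\ref{ass:clip-finite-selector-margin} bounds the near-threshold part by \(A_{\rm mar}s^{\alpha_{\rm mar}}\). The far part has \(\nu\)-mass at most \(s^{-1}E_\nu[|\Bpigcov f-\tau_u|1\{c\ne c^\star\}]\le s^{-1}\Delta_f^{\rm ret}(c)\), using the identity and nonnegativity of the singular term. Together these give
\[
    \nu\{c\ne c^\star\}\le A_{\rm mar}s^{\alpha_{\rm mar}}+\Delta/s .
\]
If \(\Delta\le1\), we choose \(s=\Delta^{1/(\alpha_{\rm mar}+1)}\), which yields \((A_{\rm mar}+1)\Delta^{\alpha_{\rm mar}/(\alpha_{\rm mar}+1)}\). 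If \(\Delta>1\), the left side is at most \(1\le\Delta^{\beta_{\rm ret}}\). In both cases \(C_{\rm ret}=A_{\rm mar}+1\) works.

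For \eqref{eq:retention-mu-disagreement}, note that \((c-c^\star)^2=1\{c\ne c^\star\}\) for binary indicators. Decompose \(\mu_f\) again. On the absolutely continuous part we need \(\int 1\{c\ne c^\star\}\,\Bpigcov f\,\dd\nu\). Write \(\Bpigcov f\le\tau_u+|\Bpigcov f-\tau_u|\); the \(\tau_u\) piece gives \(\tau_u\nu\{c\ne c^\star\}\), and the remainder is the first term of the regret identity. On the singular part, \(c^\star=0\) a.e., so \(1\{c\ne c^\star\}=c\) there, which gives exactly the second regret term. Adding the two parts yields the claim.

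The main subtlety is the measure-theoretic bookkeeping: the oracle characterization holds only \(\nu\)-a.e. off the tie set and \(\mu_{f,\perp}\)-a.e. separately. We handle this by fixing a set carrying \(\mu_{f,\perp}\) that is \(\nu\)-null, on which \(c^\star=0\). Everything else is elementary.
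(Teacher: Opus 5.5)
Your proposal is correct and follows essentially the same route as the paper. You use the same Lebesgue-decomposition rewriting of $L_f^{\rm ret}$ for the regret identity, and the same near/far split of the disagreement set at level $s$ for the margin bound. Your choice $s=\{\Delta_f^{\rm ret}(c)\}^{1/(\alpha_{\rm mar}+1)}$ simply gives the explicit constant $C_{\rm ret}=A_{\rm mar}+1$ in place of the paper's optimized $s$. For the last inequality you use the same splitting of $\mu_f$ into its $\nu$-continuous and singular parts.

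The only loose end is the case $\Delta_f^{\rm ret}(c)=0$, where your $s$ equals $0\notin(0,1]$. There the far part is $\nu$-null for every $s\in(0,1]$, so letting $s\downarrow 0$ gives $\nu\{c\ne c_{f,\tau_u}^\star\}=0$, exactly as the paper handles it.
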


\begin{proof}
By the Lebesgue decomposition of \(\mu_f\), the retention-indicator loss is
\begin{equation}
\label{eq:retention-loss-decomposition}
    L_f^{\rm ret}(c)
    =
    \tau_u
    +\int c(x)\{(\Bpigcov f)(x)-\tau_u\}\,\nu(\dd x)
    +\int c\,\dd\mu_{f,\perp}.
\end{equation}
The oracle retention indicator equals one on \(\{\Bpigcov f<\tau_u\}\), equals zero on
\(\{\Bpigcov f>\tau_u\}\), and vanishes \(\mu_{f,\perp}\)-almost everywhere. Its
value on the \(\nu\)-tie set \(\{\Bpigcov f=\tau_u\}\) is immaterial. Subtracting
\eqref{eq:retention-loss-decomposition} evaluated at
\(c=c_{f,\tau_u}^\star\) gives \eqref{eq:retention-regret-identity}.

Let \(A_c=\{c\ne c_{f,\tau_u}^\star\}\). For \(0<s\le1\),
\[
    \nu(A_c)
    \le
    \nu\{|\Bpigcov f-\tau_u|\le s\}
    +
    \nu\{A_c,\ |\Bpigcov f-\tau_u|>s\}
    \le
    A_{\rm mar}s^{\alpha_{\rm mar}}
    +
    s^{-1}\Delta_f^{\rm ret}(c).
\]
If \(\Delta_f^{\rm ret}(c)=0\), letting \(s\downarrow0\) in this set bound
and using the margin condition gives \(\nu(A_c)=0\).
Otherwise, take
\[
    s=
    \left\{
        \frac{\Delta_f^{\rm ret}(c)}
        {A_{\rm mar}\alpha_{\rm mar}}
    \right\}^{1/(\alpha_{\rm mar}+1)}
    \wedge1.
\]
If the untruncated choice exceeds one, then \(\nu(A_c)\le1\), and the same
bound follows after enlarging the constant. The margin condition also implies
\(\nu\{\Bpigcov f=\tau_u\}=0\); hence retention-indicator disagreement on ties is \(\nu\)-null.

We next control retention-indicator disagreement under the full Bellman measure. Let
\(A_-=\{c=0,c_{f,\tau_u}^\star=1\}\) and
\(A_+=\{c=1,c_{f,\tau_u}^\star=0\}\). On \(A_-\), we have \(\Bpigcov f\le\tau_u\),
whereas on \(A_+\),
\(\Bpigcov f=\tau_u+(\Bpigcov f-\tau_u)\). Since the oracle retention indicator vanishes on the
singular component,
\[
\begin{aligned}
    \int(c-c_{f,\tau_u}^\star)^2\,\dd\mu_f
    &=\int_{A_-\cup A_+}\Bpigcov f\,\dd\nu
      +\int c\,\dd\mu_{f,\perp}\\
    &\le \tau_u\nu(A_-\cup A_+)
      +\int_{A_+}(\Bpigcov f-\tau_u)\,\dd\nu
      +\int c\,\dd\mu_{f,\perp}\\
    &\le \tau_u\nu\{c\ne c_{f,\tau_u}^\star\}
      +\Delta_f^{\rm ret}(c).
\end{aligned}
\]
This proves \eqref{eq:retention-mu-disagreement}.
\end{proof}

\begin{lemma}[Uniform retention-indicator excess risk]
\label{lem:clip-finite-sel-erm}
Assume Condition~\ref{ass:clip-finite-selector-margin}. For each
\(f\in\sW_{\rm clip}\), let
\[
    \widehat c_f
    \in
    \argmin_{c\in\mathcal C}\widehat L_f^{\rm ret}(c)
\]
be an exact empirical minimizer. Then, with probability at least
\(1-\delta\),
\[
    \sup_{f\in\sW_{\rm clip}}
    \Delta_f^{\rm ret}(\widehat c_f)
    \le
    C_{\rm ret,erm}\bar a_{n,\rm ret}(\delta),
\]
where \(C_{\rm ret,erm}<\infty\) depends only on
\(\tau_u,A_{\rm mar}\), and \(\alpha_{\rm mar}\).
\end{lemma}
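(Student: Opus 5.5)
The argument is a uniform localized-ERM bound under a Tsybakov-type margin condition, in the spirit of the proof of Lemma~\ref{lem:kl-uniform-process}. Throughout, $\bar a_{n,\rm ret}(\delta)=a_{n,\rm ret}(\delta)+\varepsilon_{\rm cls}$.

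\textbf{Step 1: basic inequality.} Fix $f\in\sW_{\rm clip}$. Let $c_f^\circ\in\mathcal C$ attain, up to a factor of two, the infimum in the definition of $\varepsilon_{\rm cls}$. By the regret identity \eqref{eq:retention-regret-identity}, $\Delta_f^{\rm ret}(c_f^\circ)\le 2\varepsilon_{\rm cls}$. Write $\mathbb Z_f(c)$ for the centered empirical process of $\widehat L_f^{\rm ret}(c)-\widehat L_f^{\rm ret}(c_f^\circ)$. Its two components are
\[
(1-\gamma)(P_{n,0}-\dinit)(c-c_f^\circ),
\qquad
(P_{n,+}-Q_{\nu,\pi})\{\gamma f(X)d(X^+)-\tau_u d(X)\},\quad d=c-c_f^\circ .
\]
These lie in $\mathcal L^\Delta_{{\rm ret},0}$ and $\mathcal L^\Delta_{{\rm ret},Q}$ respectively, evaluated at $t=1$. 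Exact ERM gives
\[
\Delta_f^{\rm ret}(\widehat c_f)\le \Delta_f^{\rm ret}(c_f^\circ)+|\mathbb Z_f(\widehat c_f)|.
\]
So it suffices to show that, uniformly in $f$ and $c$,
\[
|\mathbb Z_f(c)|\le \tfrac12\bigl\{\Delta_f^{\rm ret}(c)+\Delta_f^{\rm ret}(c_f^\circ)\bigr\}+C\,a_{n,\rm ret}(\delta).
\]
Absorbing the first term on the right then finishes the proof.

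\textbf{Step 2: variance–regret link.} I will bound the $L^2$ norms of the loss differences by a power of the regret. Under $\dinit$, $(1-\gamma)\dinit\le\mu_f$. Under $Q_{\nu,\pi}$, using $f\le\tau_u$ and $f^2\le\tau_u f$, we get
\[
E\{\gamma f(X)d(X^+)\}^2\le \tau_u\int d^2\,\dd\{\gamma(f\nu)\Ppi\},
\]
and the $\tau_u d(X)$ term is controlled by $\nu\{d\ne0\}$. Insert $c_{f,\tau_u}^\star$ via the triangle inequality $d^2\le 2(c-c^\star)^2+2(c_f^\circ-c^\star)^2$. Then \eqref{eq:retention-mu-disagreement} and the margin bound \eqref{eq:retention-margin-control} give
\[
\|\text{loss difference}\|_{L^2}^2\le C\, v_{\rm ret}\bigl(\Delta_f^{\rm ret}(c)+\Delta_f^{\rm ret}(c_f^\circ)\bigr),
\qquad v_{\rm ret}(s)=s^{\beta_{\rm ret}}+s .
\]
This Bernstein-type condition with exponent $\beta_{\rm ret}=2/p_{\rm ret}$ is exactly why the retention critical radius is defined with power $p_{\rm ret}$.

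\textbf{Step 3: shell bound and peeling.} On the slice where the combined regret is at most $r^{p_{\rm ret}}$, the localized $L^2$ radius is at most $C r$ for $r\le1$. Symmetrization together with Bousquet's inequality (Lemma~\ref{lem:tool-bousquet}), with bounded envelope depending on $\tau_u$, gives, with probability $1-e^{-u}$,
\[
\sup|\mathbb Z_f(c)|\le C\bigl\{\mathfrak C_{n,\rm ret}(Cr)+r\sqrt{u/n}+u/n\bigr\}.
\]
The classes are star-shaped, so by \eqref{eq:clip-critical-radius-ret-scaling} the complexity term is at most $r^{p_{\rm ret}}/16$ once $r\ge L\mathfrak r_{n,\rm ret}$. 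Young's inequality with exponents $p_{\rm ret}$ and $p_{\rm ret}/(p_{\rm ret}-1)$ turns $r\sqrt{u/n}$ into $r^{p_{\rm ret}}/16+C(u/n)^{q_{\rm ret}}$. Peeling over dyadic shells of the regret, exactly as at the end of Lemma~\ref{lem:kl-uniform-process} with $u_j=\log(6/\delta)+(j+1)\log2$, yields the claim in Step 1 with remainder $C\,a_{n,\rm ret}(\delta)$. The event is uniform over $f$ because $f$ indexes the class $\mathcal L^\Delta_{{\rm ret},Q}$.

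\textbf{Main obstacle.} Step 2 is the delicate point. Here $c_f^\circ$ is not the oracle minimizer, so the variance must be bounded in terms of both regrets simultaneously. The variance must also be transferred from $\nu$ to the successor law $(f\nu)\Ppi$. This transfer goes through the $\mu_f$-disagreement bound \eqref{eq:retention-mu-disagreement}, which itself is linear (not a power) in the singular-mass part of the regret; that is why $v_{\rm ret}$ includes the linear term $s$. I would also handle the measurability of the choice $f\mapsto c_f^\circ$ by working with near-minimizers and countable dense subsets, which is legitimate by the compactness in Condition~\ref{ass:kl-class}.
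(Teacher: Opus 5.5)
Your proposal is correct and follows essentially the same route as the paper: a near-optimal comparator in $\mathcal C$, the margin-based variance bound via \eqref{eq:retention-mu-disagreement} and \eqref{eq:retention-margin-control}, symmetrization with Bousquet's inequality, the $p_{\rm ret}$-critical-radius scaling combined with Young's inequality, and dyadic peeling. You peel over the combined regret $\Delta_f^{\rm ret}(c)+\Delta_f^{\rm ret}(c_f^\circ)$, whereas the paper peels over $\Delta_f^{\rm ret}(c)$ with baseline $a_{n,\rm ret}(\delta)+\varepsilon_{\rm cls}+\eta$; this is an equivalent bookkeeping choice. One small repair is needed: a comparator within a factor of two of the infimum need not exist when $\varepsilon_{\rm cls}=0$ and the infimum is not attained, so use an additive slack, $\Delta_f^{\rm ret}(c_f^\circ)\le\varepsilon_{\rm cls}+\eta$ with $\eta=a_{n,\rm ret}(\delta)$, as the paper does.
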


\begin{proof}
Fix \(\eta>0\). For each \(f\in\sW_{\rm clip}\), choose
\(c_{f,\eta}^\circ\in\mathcal C\) such that
\[
    \Delta_f^{\rm ret}(c_{f,\eta}^\circ)
    \le
    \inf_{c\in\mathcal C}\Delta_f^{\rm ret}(c)+\eta
    \le
    \varepsilon_{\rm cls}+\eta.
\]
Such a retention indicator exists by the definition of the infimum; no attainment
condition on \(\mathcal C\) is needed. For \(c\in\mathcal C\), define
\[
    \mathbb Z_{f,\eta}^{\rm ret}(c)
    =
    \{\widehat L_f^{\rm ret}(c)-L_f^{\rm ret}(c)\}
    -
    \{\widehat L_f^{\rm ret}(c_{f,\eta}^\circ)
        -L_f^{\rm ret}(c_{f,\eta}^\circ)\}.
\]
Put \(d=c-c_{f,\eta}^\circ\). Since both retention indicators belong to
\(\mathcal C\), the initial-law loss difference belongs to
\(\mathcal L^\Delta_{{\rm ret},0}\), while the combined transition-sample
loss difference
\[
    (x,x^+)\mapsto \gamma f(x)d(x^+)-\tau_ud(x)
\]
belongs to \(\mathcal L^\Delta_{{\rm ret},Q}\).

Suppose that
\[
    \Delta_f^{\rm ret}(c)
    \vee
    \Delta_f^{\rm ret}(c_{f,\eta}^\circ)
    \le r^{p_{\rm ret}}.
\]
The triangle inequality for retention-indicator disagreement and
Lemma~\ref{lem:clip-finite-sel-regret} give
\[
\begin{aligned}
    \nu\{c\ne c_{f,\eta}^\circ\}
    &\le
    \nu\{c\ne c_{f,\tau_u}^\star\}
    +\nu\{c_{f,\eta}^\circ\ne c_{f,\tau_u}^\star\}\\
    &\le
    C\left[
        \{\Delta_f^{\rm ret}(c)\}^{\beta_{\rm ret}}
        +\{\Delta_f^{\rm ret}(c_{f,\eta}^\circ)\}^{\beta_{\rm ret}}
    \right]
    \le Cr^2,
\end{aligned}
\]
where \(p_{\rm ret}\beta_{\rm ret}=2\) yields the bound \(Cr^2\). Applying
\eqref{eq:retention-mu-disagreement} separately to \(c\) and
\(c_{f,\eta}^\circ\) yields
\[
\begin{aligned}
    \int(c-c_{f,\eta}^\circ)^2\,\dd\mu_f
    &\le
    2\int(c-c_{f,\tau_u}^\star)^2\,\dd\mu_f
    +2\int(c_{f,\eta}^\circ-c_{f,\tau_u}^\star)^2\,\dd\mu_f\\
    &\le Cr^2+Cr^{p_{\rm ret}}
    \le Cr^2
\end{aligned}
\]
for \(0<r\le1\). Therefore
\[
\begin{aligned}
    \|(1-\gamma)d\|_{L^2(\dinit)}^2
    &\le (1-\gamma)\int d^2\,\dd\mu_f
    \le Cr^2,\\
    \|\gamma f(X)d(X^+)-\tau_ud(X)\|_{L^2(Q_{\nu,\pi})}^2
    &\le
    2\|\gamma f(X)d(X^+)\|_{L^2(Q_{\nu,\pi})}^2
    +2\tau_u^2\|d\|_{L^2(\nu)}^2\\
    &\le
    2\tau_u\int d^2\,\dd\mu_f
    +2\tau_u^2\nu\{c\ne c_{f,\eta}^\circ\}
    \le Cr^2.
\end{aligned}
\]
The transition-sample bound uses \(f^2\le\tau_u f\), the fact that the first-coordinate
marginal of \(Q_{\nu,\pi}\) is \(\nu\), and the definition of \(\mu_f\).
Thus the two empirical-process terms have \(L^2(\dinit)\)- and
\(L^2(Q_{\nu,\pi})\)-radii of order \(r\), respectively. This conclusion does
not require either distribution to be dominated by \(\nu\). For \(r>1\), the
same bound follows after enlarging the constant because both classes have
bounded envelopes.

Symmetrization and Lemma~\ref{lem:tool-bousquet} now imply that, for every
\(u\ge0\), with probability at least \(1-2e^{-u}\),
\[
    \sup_{\substack{f\in\sW_{\rm clip},\,c\in\mathcal C:\\
        \Delta_f^{\rm ret}(c)\vee
        \Delta_f^{\rm ret}(c_{f,\eta}^\circ)
        \le r^{p_{\rm ret}}}}
    |\mathbb Z_{f,\eta}^{\rm ret}(c)|
    \le
    C\left\{
        \mathfrak C_{n,\rm ret}(Cr)
        +
        r\sqrt{\frac{u}{n}}
        +
        \frac{u}{n}
    \right\}.
\]
Take \(r\) at least a sufficiently large constant multiple of
\(\mathfrak r_{n,\rm ret}\). Lemma~\ref{lem:clip-finite-critical-radius-scaling}
then gives
\(\mathfrak C_{n,\rm ret}(Cr)\le cr^{p_{\rm ret}}\), for a numerical
\(c>0\) chosen small enough. Young's inequality gives
\[
    r\sqrt{\frac{u}{n}}
    \le
    c r^{p_{\rm ret}}
    +
    C\left(\frac{u}{n}\right)^{q_{\rm ret}}.
\]
For the peeling argument, write
\[
    \bar b_{\rm ret,\eta}
    =a_{n,\rm ret}(\delta)+\varepsilon_{\rm cls}+\eta.
\]
Choose a sufficiently large fixed
constant \(C_0\), and define
\[
\begin{aligned}
    \mathcal S_0
    &=\{(f,c):\Delta_f^{\rm ret}(c)\le C_0\bar b_{\rm ret,\eta}\},\\
    \mathcal S_j
    &=\{(f,c):2^{j-1}C_0\bar b_{\rm ret,\eta}
        <\Delta_f^{\rm ret}(c)
        \le2^jC_0\bar b_{\rm ret,\eta}\},\qquad j\ge1.
\end{aligned}
\]
For \(j\ge0\), set
\[
    r_j=(2^jC_0\bar b_{\rm ret,\eta})^{1/p_{\rm ret}},
    \qquad
    u_j=\log(6/\delta)+2j\log2.
\]
Since
\(\Delta_f^{\rm ret}(c_{f,\eta}^\circ)
\le\varepsilon_{\rm cls}+\eta\le\bar b_{\rm ret,\eta}\),
the comparator lies in every localization ball used for a nonempty shell.
Moreover,
\(\bar b_{\rm ret,\eta}\ge
\mathfrak r_{n,\rm ret}^{p_{\rm ret}}\), so choosing
\(C_0\) large enough makes the critical-radius bound applicable at every
\(r_j\). The deterministic term in the definition of
\(\mathfrak r_{n,\rm ret}\) ensures that
\(\bar b_{\rm ret,\eta}\ge n^{-q_{\rm ret}}\), while the definition of
\(a_{n,\rm ret}(\delta)\) gives
\(\bar b_{\rm ret,\eta}\ge
\{\log(6/\delta)/n\}^{q_{\rm ret}}\) and
\(\bar b_{\rm ret,\eta}\ge\log(6/\delta)/n\). Consequently,
\[
    \left(\frac{u_j}{n}\right)^{q_{\rm ret}}
    +\frac{u_j}{n}
    \le C(1+j^{q_{\rm ret}}+j)\bar b_{\rm ret,\eta}
    \le C2^j\bar b_{\rm ret,\eta}.
\]
The constants in the critical-radius and Young inequalities can therefore be
chosen so that the localized deviation is bounded by
\(C\bar b_{\rm ret,\eta}\) on
\(\mathcal S_0\), and by
\(\Delta_f^{\rm ret}(c)/4+C\bar b_{\rm ret,\eta}\) on every
\(\mathcal S_j\), \(j\ge1\). Finally,
\[
    2\sum_{j=0}^\infty e^{-u_j}
    =\frac{4\delta}{9}
    \le\delta.
\]
A union bound over the shells thus gives an event of probability at least
\(1-\delta\) on which
\[
    |\mathbb Z_{f,\eta}^{\rm ret}(c)|
    \le
    \frac14\Delta_f^{\rm ret}(c)
    +
    C\bar b_{\rm ret,\eta}
\]
simultaneously for all \(f\in\sW_{\rm clip}\) and \(c\in\mathcal C\).

Exact ERM gives
\(\widehat L_f^{\rm ret}(\widehat c_f)
\le \widehat L_f^{\rm ret}(c_{f,\eta}^\circ)\). Applying the uniform deviation
bound with \(c=\widehat c_f\) gives
\[
    \Delta_f^{\rm ret}(\widehat c_f)
    \le
    \Delta_f^{\rm ret}(c_{f,\eta}^\circ)
    +
    \frac14\Delta_f^{\rm ret}(\widehat c_f)
    +
    C\bar b_{\rm ret,\eta}.
\]
Moving the fractional term to the left and taking the supremum over \(f\)
give a bound by \(C\bar b_{\rm ret,\eta}\). Apply the peeling argument with
\(\eta=a_{n,\rm ret}(\delta)\). Then
\(\bar b_{\rm ret,\eta}\le2\bar a_{n,\rm ret}(\delta)\), which proves the
stated uniform retention-indicator regret bound.
\end{proof}

\subsection{Projection estimation}

\begin{lemma}[Quadratic curvature of the projection loss]
\label{lem:clip-finite-proj-curvature}
Assume Conditions~\ref{ass:kl-class} and
\ref{ass:clip-finite-projection-compact}. For fixed
\(f\in\sW_{\rm clip}\) and \(c\in\mathcal C\),
Lemma~\ref{lem:clip-finite-proj-attainment} ensures that the population
minimizer set is nonempty. Choose \(h^\star_{f,c}\) from this set and define
\[
    h^\star_{f,c}
    \in
    \argmin_{h\in\sH_{\rm clip}}L_{f,c}^{\rm proj}(h),
    \qquad
    \Delta^{\rm proj}_{f,c}(h)
    =
    L_{f,c}^{\rm proj}(h)-L_{f,c}^{\rm proj}(h^\star_{f,c}).
\]
Then, for every \(h\in\sH_{\rm clip}\),
\begin{equation}
\label{eq:clip-proj-curvature}
    \Delta^{\rm proj}_{f,c}(h)
    \ge
    \frac{\tau_\ell}{2}
    \|h-h^\star_{f,c}\|_{L^2(\nu)}^2.
\end{equation}
\end{lemma}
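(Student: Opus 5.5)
The loss $L_{f,c}^{\rm proj}$ is the sum of the convex functional $h\mapsto E_\nu e^{h(X)}$ and a term that is linear in $h$, so the plan is a first-order optimality argument followed by a strong-convexity bound for the exponential part. The linear part collects the three moment terms. I would write
\[
    \ell_{f,c}(h)
    :=
    (1-\gamma)E_{\dinit}\{c(X)h(X)\}
    +\gamma E_{Q_{\nu,\pi}}\{f(X)c(X^+)h(X^+)\}
    +\tau_u E_\nu\{(1-c(X))h(X)\},
\]
so that $L_{f,c}^{\rm proj}(h)=E_\nu e^{h}-\ell_{f,c}(h)$. Every $h\in\sH_{\rm clip}$ is bounded between $\log\tau_\ell$ and $\log\tau_u$, and $f$ is bounded. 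The measures $\dinit$ and $\nup$ are dominated by $3\bar\nu_\pi$, and $L^2(\bar\nu_\pi)$-bounded functions are integrable against them. Hence $\ell_{f,c}$ is a finite linear functional on $\sH_{\rm clip}-\sH_{\rm clip}$.

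\textbf{Step 1: first-order condition.} Fix $h\in\sH_{\rm clip}$ and set $g=h-h^\star_{f,c}$ and $h_t=h^\star_{f,c}+tg$. Condition~\ref{ass:kl-class} makes $\sH$ convex, and the box constraint is convex, so $h_t\in\sH_{\rm clip}$ for $t\in[0,1]$. Because $h^\star_{f,c}$ minimizes over this convex set, the right derivative of $t\mapsto L_{f,c}^{\rm proj}(h_t)$ at $t=0$ is nonnegative. Since $g$ is uniformly bounded, dominated convergence allows differentiation under the expectation, giving
\[
    E_\nu\{e^{h^\star_{f,c}}g\}-\ell_{f,c}(g)\ge0 .
\]

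\textbf{Step 2: exact remainder.} Linearity of $\ell_{f,c}$ gives
\[
    \Delta^{\rm proj}_{f,c}(h)
    =E_\nu\{e^{h}-e^{h^\star_{f,c}}\}-\ell_{f,c}(g)
    \ge E_\nu\{e^{h}-e^{h^\star_{f,c}}-e^{h^\star_{f,c}}g\},
\]
where the inequality uses Step~1.

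\textbf{Step 3: pointwise curvature.} For reals $a,b\ge\log\tau_\ell$, Taylor's theorem with the Lagrange remainder gives
\[
    e^{b}-e^{a}-e^{a}(b-a)=\tfrac12 e^{\xi}(b-a)^2\ge\tfrac{\tau_\ell}{2}(b-a)^2,
\]
since $\xi$ lies between $a$ and $b$ and so $\xi\ge\log\tau_\ell$. Applying this pointwise with $a=h^\star_{f,c}(x)$, $b=h(x)$ and integrating under $\nu$ yields \eqref{eq:clip-proj-curvature}.

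\textbf{Main obstacle.} The only delicate point is justifying the directional derivative in Step~1 when $\dinit$ and $\nup$ need not be dominated by $\nu$. The pointwise bounds on $\sH_{\rm clip}$ may hold only $\nu$-a.e., so they might not control $g$ under those measures. I would handle this by requiring the envelope bounds to hold $\bar\nu_\pi$-a.e., which is the natural reading of the definition of $\sH_{\rm clip}$ together with Condition~\ref{ass:clip-finite-projection-compact}. Then $|g|\le\log(\tau_u/\tau_\ell)$ almost everywhere under all three measures, every term of $\ell_{f,c}$ is linear and finite in $t$, and no differentiation issue arises. Step~3 is the only place the lower envelope $\tau_\ell$ enters, and it is needed only $\nu$-a.e.
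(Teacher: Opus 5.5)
Your proof is correct and follows essentially the same route as the paper: the first-order optimality condition along the segment $h_t=h^\star_{f,c}+t(h-h^\star_{f,c})$ disposes of the linear moment terms, and the lower envelope $e^{h}\ge\tau_\ell$ on $\sH_{\rm clip}$ supplies the quadratic remainder. The only difference is cosmetic. The paper applies Taylor's formula with integral remainder to $t\mapsto E_\nu e^{h_t(X)}$ at the level of the functional, while you isolate the first-order inequality and then use the pointwise Lagrange remainder before integrating under $\nu$; the two arguments are equivalent.
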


\begin{proof}
Let \(g=h-h^\star_{f,c}\) and
\(h_t=h^\star_{f,c}+tg\), \(0\le t\le1\). Condition~\ref{ass:kl-class}
and the pointwise bounds defining \(\sH_{\rm clip}\) imply that this
set is convex. Hence \(h_t\in\sH_{\rm clip}\) for every
\(t\in[0,1]\). The linear part of \(L_{f,c}^{\rm proj}\) has zero second
derivative along this path, whereas
\[
    \frac{\dd^2}{\dd t^2}E_\nu e^{h_t(X)}
    =
    E_\nu\{e^{h_t(X)}g^2(X)\}
    \ge
    \tau_\ell\|g\|_{L^2(\nu)}^2 .
\]
The one-sided derivative at \(t=0\) is nonnegative because
\(h^\star_{f,c}\) minimizes the loss. Taylor's formula with integral
remainder therefore gives
\[
\begin{aligned}
    \Delta^{\rm proj}_{f,c}(h)
    &\ge
    \int_0^1(1-t)
    E_\nu\{e^{h_t(X)}g^2(X)\}\,\dd t\\
    &\ge
    \frac{\tau_\ell}{2}\|g\|_{L^2(\nu)}^2 .
\end{aligned}
\]
This is the stated curvature bound.
\end{proof}

\begin{lemma}[Localization of projection loss differences]
\label{lem:clip-finite-proj-localization}
Assume Conditions~\ref{ass:kl-class},
\ref{ass:clip-finite-selector-margin}, and
\ref{ass:clip-finite-projection-compact}. There is a finite constant
\(\kappa_{\rm clip}\), depending only on
\(\tau_\ell,\tau_u,A_{\rm mar}\), and \(\alpha_{\rm mar}\), with the
following property. If
\[
    \Delta_f^{\rm ret}(c)\le s,
    \qquad
    \Delta^{\rm proj}_{f,c}(h)\le t,
\]
and \(g=h-h^\star_{f,c}\), then
\begin{equation}
\label{eq:clip-proj-loss-localization}
\begin{aligned}
    \|(1-\gamma)cg\|_{L^2(\dinit)}^2
    &\le \kappa_{\rm clip}^2\{t+v_{\rm ret}(s)\},\\
    \|\zeta_{f,c,h}\|_{L^2(Q_{\nu,\pi})}^2
    &\le \kappa_{\rm clip}^2\{t+v_{\rm ret}(s)\}.
\end{aligned}
\end{equation}
Here
\[
    \zeta_{f,c,h}(x,x^+)
    =e^{h(x)}-e^{h^\star_{f,c}(x)}
    -\gamma f(x)c(x^+)g(x^+)
    -\tau_u\{1-c(x)\}g(x).
\]
\end{lemma}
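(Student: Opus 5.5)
The plan is to reduce both inequalities to two ingredients. The first is the projection curvature bound, Lemma~\ref{lem:clip-finite-proj-curvature}, which gives $\|g\|_{L^2(\nu)}^2\le 2t/\tau_\ell$. The second is a transfer inequality that moves $c$-weighted $L^2$ norms from $\nu$ to $\dinit$ and $\nup$ through the Bellman measure $\mu_f$. The idea is that $\mu_f$ is the measure seen by the initial and successor terms. On the retained set, $\mu_f$ has $\nu$-density $\Bpigcov f$, and this density is at most $\tau_u$ wherever the oracle indicator equals one. The points where $c$ disagrees with $c^\star_{f,\tau_u}$ are controlled by the retention regret through Lemma~\ref{lem:clip-finite-sel-regret}.

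\emph{Step 1 (decomposition on the retained set).} Since $|g|\le 2R_{\rm clip}$ with $R_{\rm clip}=\log(\tau_u\vee\tau_\ell^{-1})$, write $c=c^\star+(c-c^\star)$, where $c^\star=c^\star_{f,\tau_u}$. For the oracle part, $c^\star$ vanishes on $\mu_{f,\perp}$ and $\Bpigcov f\le\tau_u$ $\nu$-a.e.\ on $\{c^\star=1\}$. Hence
\[
\int c^\star g^2\,\dd\mu_f\le \tau_u\|g\|_{L^2(\nu)}^2.
\]
For the disagreement part, $c^2\le 2(c^\star)^2+2(c-c^\star)^2$. Combining \eqref{eq:retention-mu-disagreement} with \eqref{eq:retention-margin-control} gives
\[
\int (c-c^\star)^2 g^2\,\dd\mu_f\le 4R_{\rm clip}^2\{\tau_u C_{\rm ret}s^{\beta_{\rm ret}}+s\}\le C\,v_{\rm ret}(s).
\]
Together these yield
\[
\int c\,g^2\,\dd\mu_f\le C\{t+v_{\rm ret}(s)\}.
\]

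\emph{Step 2 (initial term).} Since $(1-\gamma)\dinit\le\mu_f$,
\[
\|(1-\gamma)cg\|_{L^2(\dinit)}^2\le(1-\gamma)\int c\,g^2\,\dd\mu_f,
\]
and Step 1 bounds the right-hand side.

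\emph{Step 3 (transition term).} Expand $\zeta_{f,c,h}$ into three pieces and apply $(a+b+d)^2\le 3(a^2+b^2+d^2)$.

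\begin{itemize}
\item The exponential piece depends only on $x$. Because $e^{h}$ is $\tau_u$-Lipschitz on the envelope, its $L^2(\nu)$ norm squared is at most $\tau_u^2\|g\|^2_{L^2(\nu)}\le Ct$.
\item The successor piece satisfies $E\{f^2(X)c(X^+)g^2(X^+)\}\le\tau_u E\{f(X)c(X^+)g^2(X^+)\}$, using $f^2\le\tau_u f$. The right-hand side is at most $\tau_u\gamma^{-1}\int c g^2\,\dd\mu_f$ because $\gamma(f\nu)\Ppi\le\mu_f$. Multiplying by $\gamma^2$ removes the $\gamma^{-1}$, so Step 1 applies.
\item The clipping piece is bounded by $\tau_u^2\cdot 4R_{\rm clip}^2\,\nu\{c\ne c^\star\}+\tau_u^2\int(1-c^\star)g^2\,\dd\nu$. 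The first term is $O(s^{\beta_{\rm ret}})$ by \eqref{eq:retention-margin-control}. The second is at most $\tau_u^2\|g\|^2_{L^2(\nu)}=O(t)$.
\end{itemize}

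Collecting the constants defines $\kappa_{\rm clip}$, which depends only on $\tau_\ell,\tau_u,A_{\rm mar},\alpha_{\rm mar}$, as required.

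The main obstacle is Step 1. The initial and successor laws need not be dominated by $\nu$ with bounded density, so the curvature bound in $L^2(\nu)$ does not transfer directly. The argument only works because the oracle indicator restricts attention to where the absolutely continuous Bellman density is at most $\tau_u$, and the singular mass retained by $c$ is paid for by the additive $\int c\,\dd\mu_{f,\perp}$ term in the regret identity. I would verify carefully that the regret identity \eqref{eq:retention-regret-identity} correctly charges this singular mass to $s$.
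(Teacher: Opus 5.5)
Your proposal is correct and follows essentially the same route as the paper. The curvature lemma gives $\|g\|_{L^2(\nu)}^2\le 2t/\tau_\ell$. The retained moment $\int c\,g^2\,\dd\mu_f$ is then split into an oracle part, bounded by $\tau_u\|g\|_{L^2(\nu)}^2$, and a disagreement part, controlled by \eqref{eq:retention-mu-disagreement} together with the margin bound; this is exactly where retained singular mass is charged to $s$. Finally, the initial and successor terms are dominated by this moment through $(1-\gamma)\dinit\le\mu_f$, $\gamma(f\nu)\Ppi\le\mu_f$, and $f^2\le\tau_u f$.

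The only difference is cosmetic. For the clipping piece you detour through $\nu\{c\ne c^\star\}$, whereas the paper simply uses $(1-c)^2\le 1$ to bound it by $\tau_u^2\|g\|_{L^2(\nu)}^2$.
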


\begin{proof}
Let \(c^\star=c_{f,\tau_u}^\star\), and write
\(\mu_f=(\Bpigcov f)\nu+\mu_{f,\perp}\). By
Lemma~\ref{lem:clip-finite-proj-curvature},
\[
    \|g\|_{L^2(\nu)}^2\le 2t/\tau_\ell.
\]
Moreover, \(\|g\|_\infty\le2R_{\rm clip}\). Since \(c^\star\) vanishes on the
singular component and \(\Bpigcov f\le\tau_u\) wherever \(c^\star=1\),
\begin{equation}
\label{eq:clip-proj-retained-moment}
\begin{aligned}
    \int c g^2\,\dd\mu_f
    &\le \int c^\star g^2\,\dd\mu_f
       +4R_{\rm clip}^2\int(c-c^\star)^2\,\dd\mu_f\\
    &\le \tau_u\|g\|_{L^2(\nu)}^2
       +4R_{\rm clip}^2\left[
          \tau_u\nu\{c\ne c^\star\}
          +\Delta_f^{\rm ret}(c)
       \right]\\
    &\le C\{t+s^{\beta_{\rm ret}}+s\}.
\end{aligned}
\end{equation}
The second line uses Lemma~\ref{lem:clip-finite-sel-regret}, and the third uses
its margin bound. In particular, the retention-indicator regret includes any singular
mass retained by \(c\), so this bound does not require \(\mu_f\ll\nu\).

The initial-law loss difference satisfies
\[
    \|(1-\gamma)cg\|_{L^2(\dinit)}^2
    \le (1-\gamma)\int c g^2\,\dd\mu_f.
\]
Because \(f^2\le\tau_u f\), the successor part of the transition loss
satisfies
\begin{equation}
\label{eq:clip-proj-successor-moment}
\begin{aligned}
    &\|\gamma f(X)c(X^+)g(X^+)\|_{L^2(Q_{\nu,\pi})}^2\\
    &\qquad\le
    \gamma\tau_u\left[
        \gamma E_{Q_{\nu,\pi}}\{f(X)c(X^+)g^2(X^+)\}
    \right]
    \le \tau_u\int c g^2\,\dd\mu_f.
\end{aligned}
\end{equation}
The offline-state part is bounded by
\(\tau_u^2\|g\|_{L^2(\nu)}^2\). The mean-value theorem and the pointwise
upper bound on \(e^h\) also give
\[
    |e^h-e^{h^\star_{f,c}}|\le\tau_u|g|,
\]
so the exponential part has squared \(L^2(\nu)\)-norm at most
\(\tau_u^2\|g\|_{L^2(\nu)}^2\). The first-coordinate marginal of
\(Q_{\nu,\pi}\) is \(\nu\). Hence the inequality
\((a+b+c)^2\le3(a^2+b^2+c^2)\), together with
\eqref{eq:clip-proj-curvature}, \eqref{eq:clip-proj-retained-moment}, and
\eqref{eq:clip-proj-successor-moment}, yields
\begin{equation}
\label{eq:clip-proj-zeta-moment}
    \|\zeta_{f,c,h}\|_{L^2(Q_{\nu,\pi})}^2
    \le C\{t+v_{\rm ret}(s)\}.
\end{equation}
Enlarging \(\kappa_{\rm clip}\) proves
\eqref{eq:clip-proj-loss-localization}.
\end{proof}

\begin{lemma}[Uniform projection excess risk]
\label{lem:clip-finite-proj-erm}
Assume Conditions~\ref{ass:kl-class},
\ref{ass:clip-finite-selector-margin}, and
\ref{ass:clip-finite-projection-compact}. Fix \(s\ge0\). For every
\(f\in\sW_{\rm clip}\) and \(c\in\mathcal C\) satisfying
\(\Delta_f^{\rm ret}(c)\le s\), let
\[
    \widehat h_{f,c}
    \in
    \argmin_{h\in\sH_{\rm clip}}\widehat L_{f,c}^{\rm proj}(h)
\]
be an exact empirical minimizer. Then, with probability at least
\(1-\delta\),
\[
    \sup_{\substack{f\in\sW_{\rm clip},\,c\in\mathcal C:\\
        \Delta_f^{\rm ret}(c)\le s}}
    \left\{
        L_{f,c}^{\rm proj}(\widehat h_{f,c})
        -
        \inf_{h\in\sH_{\rm clip}}L_{f,c}^{\rm proj}(h)
    \right\}
    \le
    C_{\rm proj,erm}a_{n,\rm proj}(s,\delta),
\]
where \(C_{\rm proj,erm}<\infty\) depends only on
\(\tau_u,\tau_\ell^{-1},A_{\rm mar}\), and \(\alpha_{\rm mar}\).
\end{lemma}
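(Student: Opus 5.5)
The plan mirrors the retention-indicator argument of Lemma~\ref{lem:clip-finite-sel-erm}: a localized, peeled uniform deviation bound around the population minimizer, followed by the exact-ERM basic inequality. Fix \(s\), and for each admissible pair \((f,c)\) with \(\Delta_f^{\rm ret}(c)\le s\) take the population minimizer \(h^\star_{f,c}\), which exists by Lemma~\ref{lem:clip-finite-proj-attainment}. For \(h\in\sH_{\rm clip}\), define the centered process
\[
    \mathbb Z^{\rm proj}_{f,c}(h)
    =
    \{\widehat L_{f,c}^{\rm proj}(h)-L_{f,c}^{\rm proj}(h)\}
    -
    \{\widehat L_{f,c}^{\rm proj}(h^\star_{f,c})-L_{f,c}^{\rm proj}(h^\star_{f,c})\}.
\]
Because the clipped losses are unnormalized and use plain sample averages, this process splits into exactly two empirical-process terms. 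The first is \((P_{n,0}-\dinit)\) applied to \(-(1-\gamma)cg\). The second is \((P_{n,+}-Q_{\nu,\pi})\) applied to \(\zeta_{f,c,h}\). Here \(g=h-h^\star_{f,c}\), and at \(t=1\) these are elements of \(\mathcal L^\Delta_{{\rm proj},0}\) and \(\mathcal L^\Delta_{{\rm proj},Q}\). There are no self-normalization or log-partition terms, unlike in Lemma~\ref{lem:kl-uniform-process}.

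Localization comes from Lemma~\ref{lem:clip-finite-proj-localization}. On the slice \(\Delta^{\rm proj}_{f,c}(h)\le r^2\), it gives \(L^2(\dinit)\)- and \(L^2(Q_{\nu,\pi})\)-radii at most \(\kappa_{\rm clip}\{r^2+v_{\rm ret}(s)\}^{1/2}\le\kappa_{\rm clip}\{r+\sqrt{v_{\rm ret}(s)}\}\). The classes have a common bounded envelope depending on \(\tau_\ell,\tau_u\). Symmetrization and Lemma~\ref{lem:tool-bousquet} then give, with probability at least \(1-2e^{-u}\), a bound on the supremum of \(|\mathbb Z^{\rm proj}_{f,c}(h)|\) over the slice, uniformly in \((f,c)\), of order
\[
    \mathfrak C_{n,\rm proj}(\kappa_{\rm clip} r)
    +\mathfrak C_{n,\rm proj}\bigl(\kappa_{\rm clip}\sqrt{v_{\rm ret}(s)}\bigr)
    +\{r+\sqrt{v_{\rm ret}(s)}\}\sqrt{u/n}+u/n .
\]
The split \(\mathfrak C_{n,\rm proj}(a+b)\le\mathfrak C_{n,\rm proj}(2a)+\mathfrak C_{n,\rm proj}(2b)\) holds by monotonicity. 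For \(r\ge L\mathfrak r_{n,\rm proj}\), Lemma~\ref{lem:clip-finite-critical-radius-scaling} gives \(\mathfrak C_{n,\rm proj}(2\kappa_{\rm clip}r)\le r^2/64\). Young's inequality gives \(r\sqrt{u/n}\le r^2/64+Cu/n\). The remaining terms are exactly the \(s\)-dependent pieces of \(a_{n,\rm proj}(s,\delta)\).

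The main obstacle is uniformity: the localization center \(h^\star_{f,c}\) moves with \((f,c)\), and the radius has an irreducible floor \(\sqrt{v_{\rm ret}(s)}\) coming from retention-indicator error rather than from \(\Delta^{\rm proj}\). I expect to handle the moving center as in Lemma~\ref{lem:kl-uniform-process}. Since \(g\in\sH_{\rm clip}-\sH_{\rm clip}\), \(f\in\sW_{\rm clip}\), and \(c\in\mathcal C\) all lie inside the indexing class, the supremum over the slice is dominated by the supremum over the localized loss-difference class itself. The floor cannot be peeled away; it is simply carried through as an additive term.

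Finally, peel over dyadic shells \(2^{j}\bar r^2<\Delta^{\rm proj}_{f,c}(h)\le 2^{j+1}\bar r^2\), with \(\bar r^2=C\,a_{n,\rm proj}(s,\delta)\ge C\mathfrak r_{n,\rm proj}^2\) and \(u_j=\log(8/\delta)+2j\log 2\), so that the failure probabilities sum to at most \(\delta\). On each shell the deviation is at most \(\Delta^{\rm proj}_{f,c}(h)/4+C a_{n,\rm proj}(s,\delta)\). Using the \(\log(8/\delta)/n\) and \(\mathfrak r_{n,\rm proj}^2\ge n^{-1}\) terms, the \(j\)-dependent confidence cost \(j/n\) is absorbed into \(2^{j}\bar r^2\). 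Exact ERM then gives \(\Delta^{\rm proj}_{f,c}(\widehat h_{f,c})\le|\mathbb Z^{\rm proj}_{f,c}(\widehat h_{f,c})|\le\tfrac14\Delta^{\rm proj}_{f,c}(\widehat h_{f,c})+C a_{n,\rm proj}(s,\delta)\). Rearranging and taking the supremum over admissible \((f,c)\) yields the claim, with \(C_{\rm proj,erm}\) depending on \(\tau_u,\tau_\ell^{-1},A_{\rm mar},\alpha_{\rm mar}\) through \(\kappa_{\rm clip}\) and the envelope.
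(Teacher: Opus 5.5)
Your proposal is correct and follows the paper's proof essentially step for step. It uses the same two-term decomposition into an initial-sample and a transition-sample empirical process, localization through Lemma~\ref{lem:clip-finite-proj-localization} at radius $\kappa_{\rm clip}\{r^2+v_{\rm ret}(s)\}^{1/2}$, symmetrization plus Lemma~\ref{lem:tool-bousquet}, dyadic peeling with $u_j=\log(8/\delta)+2j\log 2$, and the exact-ERM basic inequality. The differences are cosmetic: you split $\mathfrak C_{n,\rm proj}$ at the combined radius by monotonicity, whereas the paper case-splits on whether $r^2\ge v_{\rm ret}(s)$. You should also say explicitly that the $j$-dependent term $\sqrt{v_{\rm ret}(s)u_j/n}$ is absorbed just like $u_j/n$, using $a_{n,\rm proj}(s,\delta)\ge\sqrt{v_{\rm ret}(s)/n}$.
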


\begin{proof}
For \(f\in\sW_{\rm clip}\), \(c\in\mathcal C\), and \(h\in\sH_{\rm clip}\), put
\(h^\dagger=h^\star_{f,c}\), \(g=h-h^\dagger\), and
\[
    \mathbb Z^{\rm proj}_{f,c}(h)
    =
    \{\widehat L_{f,c}^{\rm proj}(h)-L_{f,c}^{\rm proj}(h)\}
    -
    \{\widehat L_{f,c}^{\rm proj}(h^\dagger)-L_{f,c}^{\rm proj}(h^\dagger)\}.
\]
The centered loss difference decomposes as
\begin{equation}
\label{eq:clip-proj-centered-loss-decomposition}
\begin{aligned}
    \mathbb Z^{\rm proj}_{f,c}(h)
    &=-(1-\gamma)(P_{n,0}-\dinit)(cg)
      +(P_{n,+}-Q_{\nu,\pi})\zeta_{f,c,h},
\end{aligned}
\end{equation}
where \(\zeta_{f,c,h}\) is defined in
Lemma~\ref{lem:clip-finite-proj-localization}. Identity
\eqref{eq:clip-proj-centered-loss-decomposition} uses the fact that
\(P_{n,X}\) and \(\nu\) are the first-coordinate marginals of \(P_{n,+}\)
and \(Q_{\nu,\pi}\), respectively.
Suppose also that \(\Delta_f^{\rm ret}(c)\le s\) and
\(\Delta^{\rm proj}_{f,c}(h)\le r^2\). By
Lemma~\ref{lem:clip-finite-proj-localization}, their respective
\(L^2(\dinit)\)- and \(L^2(Q_{\nu,\pi})\)-radii are bounded by
\(\kappa_{\rm clip}\sqrt{r^2+v_{\rm ret}(s)}\).

The initial-law difference belongs to
\(\mathcal L^\Delta_{{\rm proj},0}\), and the exact transition-loss difference
\(\zeta_{f,c,h}\) belongs to \(\mathcal L^\Delta_{{\rm proj},Q}\), with
\(h_1=h\), \(h_2=h^\dagger\), and \(t=1\). Thus these classes contain the
full projection-loss differences for the initial and transition samples,
respectively.

Symmetrization and Lemma~\ref{lem:tool-bousquet} therefore imply that, for
every \(u\ge0\), with probability at least \(1-2e^{-u}\),
\[
    \sup_{\substack{f\in\sW_{\rm clip},\,c\in\mathcal C,\,h\in\sH_{\rm clip}:\\
        \Delta_f^{\rm ret}(c)\le s,\\
        \Delta^{\rm proj}_{f,c}(h)\le r^2}}
    |\mathbb Z^{\rm proj}_{f,c}(h)|
    \le
    C\left\{
        \mathfrak C_{n,\rm proj}\left(
            \kappa_{\rm clip}\sqrt{r^2+v_{\rm ret}(s)}
        \right)
        +
        \sqrt{\frac{\{r^2+v_{\rm ret}(s)\}u}{n}}
        +
        \frac{u}{n}
    \right\}.
\]
The complexity term depends on two localization scales. If
\(r^2\ge v_{\rm ret}(s)\), then
\begin{equation}
\label{eq:clip-proj-complexity-large-radius}
    \kappa_{\rm clip}\sqrt{r^2+v_{\rm ret}(s)}
    \le \sqrt2\kappa_{\rm clip}r,
\end{equation}
and Lemma~\ref{lem:clip-finite-critical-radius-scaling} bounds the complexity
by a sufficiently small multiple of \(r^2\) whenever \(r\) is a sufficiently
large fixed multiple of \(\mathfrak r_{n,\rm proj}\). If
\(r^2<v_{\rm ret}(s)\), star-shaped scaling instead gives
\begin{equation}
\label{eq:clip-proj-complexity-small-radius}
    \mathfrak C_{n,\rm proj}\left(
        \kappa_{\rm clip}\sqrt{r^2+v_{\rm ret}(s)}
    \right)
    \le
    \sqrt2\mathfrak C_{n,\rm proj}\left(
        \kappa_{\rm clip}\sqrt{v_{\rm ret}(s)}
    \right).
\end{equation}
In the second case, this quantity is bounded by the corresponding term in
\(a_{n,\rm proj}(s,\delta)\). For the concentration term,
\[
    \sqrt{\frac{\{r^2+v_{\rm ret}(s)\}u}{n}}
    \le
    r\sqrt{\frac{u}{n}}
    +\sqrt{\frac{v_{\rm ret}(s)u}{n}}.
\]
Young's inequality bounds the first term on the right by a small multiple of
\(r^2\) plus \(Cu/n\).

To complete the peeling argument, write
\(b_{\rm proj}=a_{n,\rm proj}(s,\delta)\), choose a sufficiently
large fixed constant \(C_0\), and define
\[
\begin{aligned}
    \mathcal T_0
    &=\{(f,c,h):\Delta_f^{\rm ret}(c)\le s,
        \Delta^{\rm proj}_{f,c}(h)\le C_0b_{\rm proj}\},\\
    \mathcal T_j
    &=\{(f,c,h):\Delta_f^{\rm ret}(c)\le s,
        2^{j-1}C_0b_{\rm proj}
        <\Delta^{\rm proj}_{f,c}(h)
        \le2^jC_0b_{\rm proj}\},\qquad j\ge1.
\end{aligned}
\]
For \(j\ge0\), set
\[
    r_j^2=2^jC_0b_{\rm proj},
    \qquad
    u_j=\log(8/\delta)+2j\log2.
\]
The definition of \(b_{\rm proj}\) gives
\begin{equation}
\label{eq:clip-proj-baseline-components}
\begin{aligned}
    b_{\rm proj}\ge
    \mathfrak r_{n,\rm proj}^2,
    &\qquad
    b_{\rm proj}\ge
    \mathfrak C_{n,\rm proj}\left(
        \kappa_{\rm clip}\sqrt{v_{\rm ret}(s)}
    \right),\\
    b_{\rm proj}\ge
    \sqrt{\frac{v_{\rm ret}(s)\log(8/\delta)}{n}},
    &\qquad
    b_{\rm proj}\ge\frac{\log(8/\delta)}{n}.
\end{aligned}
\end{equation}
Since \(\log(8/\delta)>1\),
\eqref{eq:clip-proj-baseline-components} also implies
\(b_{\rm proj}\ge\sqrt{v_{\rm ret}(s)/n}\) and
\(b_{\rm proj}\ge n^{-1}\). Hence
\[
    \sqrt{\frac{v_{\rm ret}(s)u_j}{n}}
    +\frac{u_j}{n}
    \le C(1+\sqrt j+j)b_{\rm proj}
    \le C2^jb_{\rm proj}.
\]
For each shell, \eqref{eq:clip-proj-complexity-large-radius} and
\eqref{eq:clip-proj-complexity-small-radius} bound the complexity by either a small
multiple of \(r_j^2\) or at most \(C b_{\rm proj}\). After increasing
\(C_0\), these bounds and Young's inequality therefore yield a bound of
\(Cb_{\rm proj}\) on \(\mathcal T_0\), and a bound of
\(\Delta^{\rm proj}_{f,c}(h)/4+Cb_{\rm proj}\) on every
\(\mathcal T_j\), \(j\ge1\). The shell failure probabilities satisfy
\[
    2\sum_{j=0}^\infty e^{-u_j}
    =\frac{\delta}{3}
    \le\delta.
\]
Thus a union bound gives an event of probability at least \(1-\delta\) on
which
\[
    |\mathbb Z^{\rm proj}_{f,c}(h)|
    \le
    \frac14\Delta^{\rm proj}_{f,c}(h)
    +
    C a_{n,\rm proj}(s,\delta)
\]
simultaneously for all \(f,c,h\) satisfying
\(\Delta_f^{\rm ret}(c)\le s\).

Exact ERM gives
\(\widehat L_{f,c}^{\rm proj}(\widehat h_{f,c})
\le \widehat L_{f,c}^{\rm proj}(h^\star_{f,c})\). Evaluating the uniform
deviation bound at \(h=\widehat h_{f,c}\) yields
\[
    \Delta^{\rm proj}_{f,c}(\widehat h_{f,c})
    \le
    \frac14\Delta^{\rm proj}_{f,c}(\widehat h_{f,c})
    +
    C a_{n,\rm proj}(s,\delta).
\]
Rearranging proves the result.
\end{proof}

\begin{lemma}[Retention-indicator perturbation of the projection loss]
\label{lem:clip-finite-selector-perturbation}
Assume Conditions~\ref{ass:kl-class},
\ref{ass:clip-finite-selector-margin}, and
\ref{ass:clip-finite-projection-compact}. Fix
\(f\in\sW_{\rm clip}\), let
\(c_{f,\tau_u}^\star\) be the oracle retention indicator, and let \(c\in\mathcal C\). Then
\begin{equation}
\label{eq:clip-retention-projection-perturbation}
    \sup_{h\in\sH_{\rm clip}}
    |L_{f,c}^{\rm proj}(h)-L_{f,c_{f,\tau_u}^\star}^{\rm proj}(h)|
    \le
    \log(\tau_u\vee\tau_\ell^{-1})\Delta_f^{\rm ret}(c).
\end{equation}
Consequently, on the retention-indicator ERM event in
Lemma~\ref{lem:clip-finite-sel-erm} and the projection event in
Lemma~\ref{lem:clip-finite-proj-erm} with
\(s=C_{\rm ret,erm}\bar a_{n,\rm ret}(\delta)\), if
\(\widehat c_f\) is the fitted retention indicator and
\(\widehat h_f\) minimizes \(\widehat L_{f,\widehat c_f}^{\rm proj}\) over
\(\sH_{\rm clip}\), then
\begin{equation}
\label{eq:clip-retention-projection-excess}
    L_{f,c_{f,\tau_u}^\star}^{\rm proj}(\widehat h_f)
    -
    \inf_{h\in\sH_{\rm clip}}L_{f,c_{f,\tau_u}^\star}^{\rm proj}(h)
    \le
    C\bar a_{n,\rm clip}(\delta).
\end{equation}
\end{lemma}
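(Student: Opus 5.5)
For the perturbation bound \eqref{eq:clip-retention-projection-perturbation}, I will subtract the two projection losses directly. The exponential term \(E_\nu e^h\) does not depend on \(c\), so it cancels. For \(c^\star=c_{f,\tau_u}^\star\) and \(d=c-c^\star\), the difference is linear in \(d\) and in \(h\):
\[
    L_{f,c}^{\rm proj}(h)-L_{f,c^\star}^{\rm proj}(h)
    =
    -(1-\gamma)E_{\dinit}\{d h\}
    -\gamma E_{Q_{\nu,\pi}}\{f(X)d(X^+)h(X^+)\}
    +\tau_u E_\nu\{d h\}.
\]
Using the Lebesgue decomposition \(\mu_f=(\Bpigcov f)\nu+\mu_{f,\perp}\), the first two terms combine into \(-\int d h\,\dd\mu_f\). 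The difference is therefore
\[
    \int d\,h\,\{\tau_u-\Bpigcov f\}\,\dd\nu-\int c\,h\,\dd\mu_{f,\perp},
\]
because \(c^\star\) vanishes \(\mu_{f,\perp}\)-a.e.

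Next I will bound \(|h|\le\log(\tau_u\vee\tau_\ell^{-1})\) pointwise, which holds since \(\log\tau_\ell\le h\le\log\tau_u\). This gives an absolute value of at most
\[
    \log(\tau_u\vee\tau_\ell^{-1})
    \Bigl[E_\nu\{|\Bpigcov f-\tau_u|\,\mathbf 1\{c\ne c^\star\}\}+\int c\,\dd\mu_{f,\perp}\Bigr],
\]
and the bracket equals \(\Delta_f^{\rm ret}(c)\) by the regret identity \eqref{eq:retention-regret-identity} of Lemma~\ref{lem:clip-finite-sel-regret}. The bound is uniform in \(h\), so \eqref{eq:clip-retention-projection-perturbation} follows.

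For \eqref{eq:clip-retention-projection-excess}, I will work on the intersection of the two events. Lemma~\ref{lem:clip-finite-sel-erm} gives \(\Delta_f^{\rm ret}(\widehat c_f)\le s:=C_{\rm ret,erm}\bar a_{n,\rm ret}(\delta)\) uniformly over \(f\). With this \(s\), Lemma~\ref{lem:clip-finite-proj-erm} then gives \(L_{f,\widehat c_f}^{\rm proj}(\widehat h_f)-\inf_h L_{f,\widehat c_f}^{\rm proj}(h)\le C a_{n,\rm proj}(s,\delta)\). Let \(\epsilon=\log(\tau_u\vee\tau_\ell^{-1})\,s\). By \eqref{eq:clip-retention-projection-perturbation}, replacing \(\widehat c_f\) by \(c^\star\) changes the loss at \(\widehat h_f\) by at most \(\epsilon\), and changes the infimum by at most \(\epsilon\) (compare the losses at near-minimizers). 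The oracle excess is therefore at most \(Ca_{n,\rm proj}(s,\delta)+2\epsilon\).

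Finally, I will apply the scaling bound \eqref{eq:clip-proj-error-scaling} of Lemma~\ref{lem:clip-finite-critical-radius-scaling} with \(A=C_{\rm ret,erm}\vee1\), which gives \(a_{n,\rm proj}(s,\delta)\le C a_{n,\rm proj}(\bar a_{n,\rm ret}(\delta),\delta)\). Since \(\epsilon\le C\bar a_{n,\rm ret}(\delta)\), the definition of \(\bar a_{n,\rm clip}\) gives the claimed bound. The only delicate points are the sign bookkeeping when rewriting the first two moments as an integral against \(\mu_f\), and checking that the tie set does not matter. The tie set is \(\nu\)-null under Condition~\ref{ass:clip-finite-selector-margin}, and in any case \(\tau_u-\Bpigcov f=0\) there, so it contributes nothing to the first integral.
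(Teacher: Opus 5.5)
Your proposal is correct and follows the paper's proof essentially step for step. The steps are the same Lebesgue-decomposition rewriting of the loss difference, the bound $|h|\le\log(\tau_u\vee\tau_\ell^{-1})$ combined with the regret identity, and then the two ERM events, the factor-two perturbation of the fitted point and the infimum, and the scaling bound \eqref{eq:clip-proj-error-scaling}. Your choice $A=C_{\rm ret,erm}\vee1$ also tacitly uses that $s\mapsto a_{n,\rm proj}(s,\delta)$ is nondecreasing, which holds because $v_{\rm ret}$ and $\mathfrak C_{n,\rm proj}$ are nondecreasing.
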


\begin{proof}
Write \(\mu_f=(\Bpigcov f)\nu+\mu_{f,\perp}\). For
\(h\in\sH_{\rm clip}\), the two projection losses differ by
\[
\begin{aligned}
    L_{f,c}^{\rm proj}(h)-L_{f,c_{f,\tau_u}^\star}^{\rm proj}(h)
    ={}&
    -\int h(x)\{c(x)-c_{f,\tau_u}^\star(x)\}
        \{(\Bpigcov f)(x)-\tau_u\}\,\nu(\dd x)\\
    &-\int h(x)c(x)\,\mu_{f,\perp}(\dd x),
\end{aligned}
\]
where the second term uses
\(c_{f,\tau_u}^\star=0\), \(\mu_{f,\perp}\)-almost everywhere.
Taking absolute values, using
\(\|h\|_\infty\le\log(\tau_u\vee\tau_\ell^{-1})\), and applying
Lemma~\ref{lem:clip-finite-sel-regret} give
\eqref{eq:clip-retention-projection-perturbation}.

Let \(\widehat h_f\) minimize the empirical loss with the fitted retention indicator
\(\widehat c_f\). On the retention-indicator ERM event,
\(\Delta_f^{\rm ret}(\widehat c_f)\le
C\bar a_{n,\rm ret}(\delta)\). Lemma~\ref{lem:clip-finite-proj-erm}, applied
with \(s=C\bar a_{n,\rm ret}(\delta)\), gives
\[
    L_{f,\widehat c_f}^{\rm proj}(\widehat h_f)
    -
    \inf_{h\in\sH_{\rm clip}}L_{f,\widehat c_f}^{\rm proj}(h)
    \le
    C a_{n,\rm proj}
        \bigl(C\bar a_{n,\rm ret}(\delta),\delta\bigr).
\]
Replacing \(\widehat c_f\) by \(c_{f,\tau_u}^\star\) at both the fitted point and the
infimum changes the excess loss by at most twice the uniform loss difference,
because
\[
    \left|
    \inf_{h\in\sH_{\rm clip}}L_{f,\widehat c_f}^{\rm proj}(h)
    -
    \inf_{h\in\sH_{\rm clip}}L_{f,c_{f,\tau_u}^\star}^{\rm proj}(h)
    \right|
    \le
    \sup_{h\in\sH_{\rm clip}}
    |L_{f,\widehat c_f}^{\rm proj}(h)-L_{f,c_{f,\tau_u}^\star}^{\rm proj}(h)|.
\]
Equation~\eqref{eq:clip-proj-error-scaling} gives
\[
    a_{n,\rm proj}
    \bigl(C\bar a_{n,\rm ret}(\delta),\delta\bigr)
    \le
    C a_{n,\rm proj}
    \bigl(\bar a_{n,\rm ret}(\delta),\delta\bigr).
\]
Combining this comparison with the ERM bounds and the retention-indicator
perturbation bound gives
\[
    L_{f,c_{f,\tau_u}^\star}^{\rm proj}(\widehat h_f)
    -\inf_{h\in\sH_{\rm clip}}
        L_{f,c_{f,\tau_u}^\star}^{\rm proj}(h)
    \le
    C\left[
        a_{n,\rm proj}
            \bigl(\bar a_{n,\rm ret}(\delta),\delta\bigr)
        +R_{\rm clip}\bar a_{n,\rm ret}(\delta)
    \right].
\]
Because \(R_{\rm clip}\) depends only on the fixed envelopes, the right-hand
side is bounded by \(C\bar a_{n,\rm clip}(\delta)\). This proves the
bound \eqref{eq:clip-retention-projection-excess}.
\end{proof}

\begin{lemma}[Approximate generalized KL projection toward the clipped fixed point]
\label{lem:clip-finite-approx-projection}
Assume Conditions~\ref{ass:kl-class} and
\ref{ass:clip-finite-projection-compact}, and suppose
\eqref{eq:coverage-clipped-fixed-lower-tail} holds. Fix
\(f\in\sW_{\rm clip}\), and let
\[
    \bar\omega_f
    =
    \mathsf T_{\sW_{\rm clip}}^{\rm genKL}f
    =
    \Pi_{\sW_{\rm clip}}^{\rm genKL}(\Bpigcap{\tau_u}f).
\]
Suppose \(\widetilde h\in\sH_{\rm clip}\),
\(\widetilde\omega=e^{\widetilde h}\), and
\[
    L_{f,c_{f,\tau_u}^\star}^{\rm proj}(\widetilde h)
    -
    \inf_{h\in\sH_{\rm clip}}L_{f,c_{f,\tau_u}^\star}^{\rm proj}(h)
    \le
    \Delta,
    \qquad
    \Delta\ge\bar a_{n,\rm clip}(\delta).
\]
Then, for every \(0<\lambda\le1\),
\[
    D_\nu^{\rm gen}(\widetilde\omega\|\omega_{\tau_u})
    \le
    (1+\lambda)
    D_\nu^{\rm gen}(\bar\omega_f\|\omega_{\tau_u})
    +
    C_{\tau_\ell,\tau_u,\rm pert}
    \{1+\lambda^{-1}\mathfrak m_{n,\alpha_0}^{\rm clip}\}\Delta,
\]
where \(C_{\tau_\ell,\tau_u,\rm pert}<\infty\) depends only on
\(\tau_\ell\) and \(\tau_u\); all lower-tail dependence is contained in
\(\mathfrak m_{n,\alpha_0}^{\rm clip}\).
\end{lemma}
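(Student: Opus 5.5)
We follow the proof of Lemma~\ref{lem:kl-projection-perturbation}, with unnormalized generalized KL in place of normalized KL and the lower-envelope handling absorbed into \(\mathfrak m_{n,\alpha_0}^{\rm clip}\). Write \(u=\Bpigcap{\tau_u}f\) and \(s=\omega_{\tau_u}\).

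\emph{Step 1: reduce the loss gap to a divergence bound.} By \eqref{eq:retention-clipped-moment}, \(L_{f,c_{f,\tau_u}^\star}^{\rm proj}(h)=D_\nu^{\rm gen}(u\|e^h)\) up to an \(h\)-free constant. Hence \(D_\nu^{\rm gen}(u\|\widetilde\omega)-D_\nu^{\rm gen}(u\|\bar\omega_f)\le\Delta\), where \(\bar\omega_f=e^{\bar h}\) and \(\bar h\) is a population minimizer; it is attained by Lemma~\ref{lem:clip-finite-proj-attainment}, since \(c^\star\) may be taken as an indicator and the attainment argument does not use \(c\in\mathcal C\). Lemma~\ref{lem:genkl-projection-inequality} with \(v=\widetilde\omega\) then gives \(D_\nu^{\rm gen}(\bar\omega_f\|\widetilde\omega)\le\Delta\).

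\emph{Step 2: convert to chi-square control.} Both \(\bar\omega_f\) and \(\widetilde\omega\) lie in \([\tau_\ell,\tau_u]\), so their ratio lies in a compact interval \([\tau_\ell/\tau_u,\tau_u/\tau_\ell]\). On that interval, \(b\,\phi(a/b)\) and \((a-b)^2/b\) are comparable, with \(\phi(t)=t\log t-t+1\). This yields
\[
D_\nu^{\rm gen}(\widetilde\omega\|\bar\omega_f)+\int(\widetilde\omega-\bar\omega_f)^2/\bar\omega_f\,\dd\nu\le C_{\tau_\ell,\tau_u}\Delta .
\]

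\emph{Step 3: three-point identity.} The pointwise identity
\[
D_\nu^{\rm gen}(\widetilde\omega\|s)=D_\nu^{\rm gen}(\bar\omega_f\|s)+D_\nu^{\rm gen}(\widetilde\omega\|\bar\omega_f)+\int(\widetilde\omega-\bar\omega_f)\log(\bar\omega_f/s)\,\dd\nu
\]
holds with the generalized divergence, because the linear terms cancel. We bound the cross term by Cauchy--Schwarz, which leaves the weighted log-moment \(\int\bar\omega_f\log^2(\bar\omega_f/s)\,\dd\nu\) to control.

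\emph{Step 4: the log-moment, which is the main obstacle.} The target \(s=\omega_{\tau_u}\) is unbounded below, so the factor \(1+\log(1/\tau)\) from the earlier lemma must be replaced by something involving only \(\mathfrak m_{n,\alpha_0}^{\rm clip}\). Split at a level \(\tau\in(0,1]\).

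On \(\{s\ge\tau\}\), write \(r=\bar\omega_f/s\le\tau_u/\tau\). The scalar inequality \(r\log^2r\le C(1+\log(1/\tau))\phi(r)\) bounds this part by \(C(1+\log(1/\tau))D_\nu^{\rm gen}(\bar\omega_f\|s)\).

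On \(\{s<\tau\}\), use \(\bar\omega_f\le\tau_u\) and \eqref{eq:coverage-clipped-fixed-lower-tail}, integrating the tail as in \eqref{eq:kl-lower-envelope-log}, to bound the piece by \(C\tau_u\{A_0\alpha_0^{-2}+\log^2(\tau_u/\tau_\ell)A_0\}\tau^{\alpha_0}\). This is of order \(A_{{\rm clip},{\rm lt}}\tau^{\alpha_0}\) up to constants in \(\tau_\ell,\tau_u\).

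Choose \(\tau=(b_{n,\rm clip}/A_{{\rm clip},{\rm lt}})^{1/\alpha_0}\wedge1\). Then \(1+\log(1/\tau)\le\mathfrak m_{n,\alpha_0}^{\rm clip}\) and the tail piece is at most \(C b_{n,\rm clip}\). Next, \(b_{n,\rm clip}\le C\bar a_{n,\rm clip}(\delta)\le C\Delta\): the comparison holds termwise since \(a_{n,\rm ret}\ge\mathfrak r_{n,\rm ret}^{p_{\rm ret}}\) and \(v_{\rm ret}\) and \(\mathfrak C_{n,\rm proj}\) are monotone, and the second inequality is the hypothesis \(\Delta\ge\bar a_{n,\rm clip}(\delta)\). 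Hence
\[
\int\bar\omega_f\log^2(\bar\omega_f/s)\,\dd\nu\le C\,\mathfrak m_{n,\alpha_0}^{\rm clip}\,D_\nu^{\rm gen}(\bar\omega_f\|s)+C\Delta .
\]

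\emph{Step 5: combine.} Insert this into the Cauchy--Schwarz bound to get
\[
|{\rm cross}|\le C\sqrt{\Delta\,(\mathfrak m\,D+\Delta)},\qquad D=D_\nu^{\rm gen}(\bar\omega_f\|s),\ \mathfrak m=\mathfrak m_{n,\alpha_0}^{\rm clip}.
\]
Young's inequality gives \(\sqrt{\Delta\,\mathfrak m\,D}\le\lambda D+\lambda^{-1}\mathfrak m\Delta/4\). Adding the Step 2 bound on \(D_\nu^{\rm gen}(\widetilde\omega\|\bar\omega_f)\) produces
\[
(1+\lambda)D+C\{1+\lambda^{-1}\mathfrak m_{n,\alpha_0}^{\rm clip}\}\Delta,
\]
which is the claim.
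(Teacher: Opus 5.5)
Your Steps 1--3 and 5 are sound and follow the paper's skeleton. The problem is the tail estimate in Step 4, which comes from running the three-point identity against the unfloored target \(s=\omega_{\tau_u}\) and applying Cauchy--Schwarz on all of \(\{s<\tau\}\).

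\emph{A harmless omission.} On \(\{s<\tau\}\), \(\log(\bar\omega_f/s)=\log(\bar\omega_f/\tau)+\log(\tau/s)\), and \(\log(\bar\omega_f/\tau)\) can be as large as \(\log(\tau_u/\tau)\), not \(\log(\tau_u/\tau_\ell)\). Your bound therefore omits a term \(\tau_u\log^2(1/\tau)\,\nu\{s<\tau\}\lesssim(\mathfrak m_{n,\alpha_0}^{\rm clip})^2 b_{n,\rm clip}\). This does no damage: after Cauchy--Schwarz it contributes \(O(\mathfrak m_{n,\alpha_0}^{\rm clip}\Delta)\), which is absorbed since \(\lambda\le1\).

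\emph{The real defect: the pure tail term.} The layer-cake computation gives \(\int_{\{s<\tau\}}\log^2(\tau/s)\,\dd\nu\le 2A_0\tau^{\alpha_0}/\alpha_0^2\). But \(A_{{\rm clip},{\rm lt}}=A_0(1+\tau_u/\alpha_0)\) carries only one factor of \(\alpha_0^{-1}\). So \(\tau_uA_0\alpha_0^{-2}\tau^{\alpha_0}\le b_{n,\rm clip}/\alpha_0\), and your claim that this piece is ``of order \(A_{{\rm clip},{\rm lt}}\tau^{\alpha_0}\) up to constants in \(\tau_\ell,\tau_u\)'' is false.

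The leftover \(\alpha_0^{-1}\) cannot be absorbed into \(\mathfrak m_{n,\alpha_0}^{\rm clip}\), which equals \(1\) whenever \(b_{n,\rm clip}\ge A_{{\rm clip},{\rm lt}}\) (your \(\tau=1\)). Shrinking \(\tau\) does not help: in that regime it would need \(\log(1/\tau)\gtrsim\alpha_0^{-1}\log(1/\alpha_0)\), which breaks \(1+\log(1/\tau)\le\mathfrak m_{n,\alpha_0}^{\rm clip}\) on \(\{s\ge\tau\}\).

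Carried through Step 5, your argument proves the inequality only with a constant of order \(C_{\tau_\ell,\tau_u}(1+\alpha_0^{-1/2})\). The lemma asserts \(C_{\tau_\ell,\tau_u,\rm pert}\) with all lower-tail dependence in \(\mathfrak m_{n,\alpha_0}^{\rm clip}\), so this is not the stated result. The root cause is that an \(L^2\)--\(L^2\) pairing on the low-density region needs a \emph{second} log-moment of \(1/s\), while the lower-tail constant is calibrated only for first moments.

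\emph{The repair.} Split the cross term rather than the log-moment. Apply Cauchy--Schwarz only on \(\{s\ge\tau\}\). On \(\{s<\tau\}\), use \(|\widetilde\omega-\bar\omega_f|\le\tau_u\) together with the first moment
\(\int_{\{s<\tau\}}|\log(\bar\omega_f/s)|\,\dd\nu\le\{\log(\tau_u\vee\tau_\ell^{-1})+\log(1/\tau)\}A_0\tau^{\alpha_0}+A_0\tau^{\alpha_0}/\alpha_0\).
Multiplied by \(\tau_u\), this is at most \(C_{\tau_\ell,\tau_u}\mathfrak m_{n,\alpha_0}^{\rm clip}b_{n,\rm clip}\), precisely because \(\tau_uA_0/\alpha_0\le A_{{\rm clip},{\rm lt}}\).

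The paper achieves the same effect by flooring the target. It sets \(\omega_\eta=\omega_{\tau_u}\vee\eta\) with \(\eta=1\wedge\{\bar a_{n,\rm clip}(\delta)/A_{{\rm clip},{\rm lt}}\}^{1/\alpha_0}\), and runs your Steps 3--5 against \(\omega_\eta\). There \(\bar\omega_f/\omega_\eta\le\tau_u/\eta\), so the log-moment is at most \(C_{\tau_\ell,\tau_u}\mathfrak m_{n,\alpha_0}^{\rm clip}D_\nu^{\rm gen}(\bar\omega_f\|\omega_\eta)\) with no tail piece. It then moves both \(D_\nu^{\rm gen}(\widetilde\omega\|\cdot)\) and \(D_\nu^{\rm gen}(\bar\omega_f\|\cdot)\) between \(\omega_\eta\) and \(\omega_{\tau_u}\) using the first-moment layer-cake comparison of Lemma~\ref{lem:kl-lower-tail-comparison}, at cost \(A_{{\rm clip},{\rm lt}}\eta^{\alpha_0}\le\bar a_{n,\rm clip}(\delta)\le\Delta\).
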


\begin{proof}
Let \(u_f=\Bpigcap{\tau_u}f\). By Proposition~\ref{prop:coverage-truncated-gate},
\(L_{f,c_{f,\tau_u}^\star}^{\rm proj}(h)\) differs from
\(D_\nu^{\rm gen}(u_f\|e^h)\) by a term independent of \(h\). Therefore
\[
    D_\nu^{\rm gen}(u_f\|\widetilde\omega)
    -
    D_\nu^{\rm gen}(u_f\|\bar\omega_f)
    \le
    \Delta .
\]
Applying Lemma~\ref{lem:genkl-projection-inequality} with
\(u=u_f\), \(\bar u=\bar\omega_f\), and
\(v=\widetilde\omega\), gives
\[
    D_\nu^{\rm gen}(\bar\omega_f\|\widetilde\omega)
    \le
    \Delta .
\]
Since \(\widetilde\omega,\bar\omega_f\in[\tau_\ell,\tau_u]\), the scalar
comparisons between \(a\log(a/b)-a+b\), \(b\log(b/a)-b+a\), and
\((a-b)^2/b\) on the compact interval
\([\frac{\tau_\ell}{\tau_u},\frac{\tau_u}{\tau_\ell}]\) imply
\begin{equation}
\label{eq:clip-approx-reverse-and-chi}
    D_\nu^{\rm gen}(\widetilde\omega\|\bar\omega_f)
    +
    \int
    \frac{(\widetilde\omega-\bar\omega_f)^2}{\bar\omega_f}\,\dd\nu
    \le
    C_{\tau_\ell,\tau_u}\Delta .
\end{equation}
Let
\[
    \eta_{n,\rm clip}(\delta)
    =
    1\wedge
    \left\{
        \frac{\bar a_{n,\rm clip}(\delta)}{A_{{\rm clip},{\rm lt}}}
    \right\}^{1/\alpha_0},
    \qquad
    \omega_\eta=\omega_{\tau_u}\vee\eta_{n,\rm clip}(\delta).
\]
Define
\[
    B_{n,\tau_\ell,\tau_u}(\delta)
    =
    1+\frac{1}{\alpha_0}
    \max\left\{
        0,
        \log\left(
            \frac{A_{{\rm clip},{\rm lt}}}{\bar a_{n,\rm clip}(\delta)}
        \right)
    \right\}.
\]
The layer-cake calculation in Lemma~\ref{lem:kl-lower-tail-comparison}, with
\(\omega_{\tau_u}\) in place of \(\omegastar\) and upper envelope \(\tau_u\), gives
\begin{equation}
\label{eq:clip-approx-lower-envelope}
    \left|
    D_\nu^{\rm gen}(a\|\omega_{\tau_u})
    -
    D_\nu^{\rm gen}(a\|\omega_\eta)
    \right|
    \le
    A_{{\rm clip},{\rm lt}}\eta_{n,\rm clip}(\delta)^{\alpha_0}
    \le
    \bar a_{n,\rm clip}(\delta)
    \le
    \Delta
\end{equation}
for \(a=\widetilde\omega\) and \(a=\bar\omega_f\). Also,
\[
    1+\log\{1/\eta_{n,\rm clip}(\delta)\}
    \le
    C B_{n,\tau_\ell,\tau_u}(\delta).
\]
Since \(\tau_\ell\le\bar\omega_f\le\tau_u\) and
\(\omega_\eta\ge\eta_{n,\rm clip}(\delta)\), the scalar inequality
\[
    r(\log r)^2
    \le
    C_{\tau_\ell,\tau_u} B_{n,\tau_\ell,\tau_u}(\delta)\{r\log r-r+1\},
    \qquad
    0<r\le \tau_u/\eta_{n,\rm clip}(\delta),
\]
implies
\begin{equation}
\label{eq:clip-approx-log-square}
    \int \bar\omega_f
    \left(\log\frac{\bar\omega_f}{\omega_\eta}\right)^2\,\dd\nu
    \le
    C_{\tau_\ell,\tau_u} B_{n,\tau_\ell,\tau_u}(\delta)
    D_\nu^{\rm gen}(\bar\omega_f\|\omega_\eta).
\end{equation}
Using the identity
\begin{equation}
\label{eq:clip-approx-divergence-decomposition}
    D_\nu^{\rm gen}(\widetilde\omega\|\omega_\eta)
    =
    D_\nu^{\rm gen}(\bar\omega_f\|\omega_\eta)
    +
    D_\nu^{\rm gen}(\widetilde\omega\|\bar\omega_f)
    +
    \int(\widetilde\omega-\bar\omega_f)
    \log\frac{\bar\omega_f}{\omega_\eta}\,\dd\nu,
\end{equation}
Cauchy--Schwarz and \eqref{eq:clip-approx-log-square} give
\begin{equation}
\label{eq:clip-approx-cross-term}
\begin{aligned}
    &\left|
    \int(\widetilde\omega-\bar\omega_f)
    \log\frac{\bar\omega_f}{\omega_\eta}\,\dd\nu
    \right| \\
    &\qquad\le
    C_{\tau_\ell,\tau_u}
    \left\{
        \Delta
        B_{n,\tau_\ell,\tau_u}(\delta)
        D_\nu^{\rm gen}(\bar\omega_f\|\omega_\eta)
    \right\}^{1/2}.
\end{aligned}
\end{equation}
Young's inequality with parameter \(\lambda\) gives
\begin{equation}
\label{eq:clip-approx-smoothed-recursion}
    D_\nu^{\rm gen}(\widetilde\omega\|\omega_\eta)
    \le
    (1+\lambda)
    D_\nu^{\rm gen}(\bar\omega_f\|\omega_\eta)
    +
    C_{\tau_\ell,\tau_u}\{1+\lambda^{-1}B_{n,\tau_\ell,\tau_u}(\delta)\}\Delta .
\end{equation}
Applying \eqref{eq:clip-approx-lower-envelope} to both divergences in
\eqref{eq:clip-approx-smoothed-recursion}
and absorbing their additive \(\Delta\)-terms into the constant gives
\[
\begin{aligned}
    D_\nu^{\rm gen}(\widetilde\omega\|\omega_{\tau_u})
    \le{}&
    (1+\lambda)
    D_\nu^{\rm gen}(\bar\omega_f\|\omega_{\tau_u})\\
    &+
    C_{\tau_\ell,\tau_u}
    \{1+\lambda^{-1}B_{n,\tau_\ell,\tau_u}(\delta)\}\Delta.
\end{aligned}
\]
Since
\(\bar a_{n,\rm ret}(\delta)\ge
\mathfrak r_{n,\rm ret}^{p_{\rm ret}}\), monotonicity of
\(v_{\rm ret}\) and \(\mathfrak C_{n,\rm proj}\) gives
\[
    \bar a_{n,\rm clip}(\delta)
    \ge b_{n,\rm clip}.
\]
The definitions of the two lower-tail factors therefore imply
\(B_{n,\tau_\ell,\tau_u}(\delta)
\le\mathfrak m_{n,\alpha_0}^{\rm clip}\), which proves the stated bound.
\end{proof}

In what follows, write
\[
    \Lambda_{\rm clip}
    :=
    1+\tau_u+\tau_\ell^{-1}.
\]
\begin{lemma}[Polynomial dependence on the clipping envelopes]
\label{lem:clip-finite-polynomial-envelope}
Fix \(A_{\rm mar},\alpha_0\), and \(\alpha_{\rm mar}\). In the applications
below, every constant arising in
Lemmas~\ref{lem:clip-finite-critical-radius-scaling}--
\ref{lem:clip-finite-approx-projection} that depends on the clipping levels
may be chosen to be at most
\[
    C_0\Lambda_{\rm clip}^{p}
\]
for finite \(C_0,p\) depending only on these three fixed constants. Under the
VC assumptions of Lemma~\ref{lem:clip-finite-vc-radius}, the constants in its
three bounds have the same property.
\end{lemma}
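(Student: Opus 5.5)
The plan is a bookkeeping audit: walk through Lemmas~\ref{lem:clip-finite-critical-radius-scaling}--\ref{lem:clip-finite-approx-projection} in order and record, for each constant that depends on the clipping levels, how it is built from the basic envelope quantities. Those quantities are \(\tau_u\), \(\tau_\ell^{-1}\), and \(R_{\rm clip}=\log(\tau_u\vee\tau_\ell^{-1})\). The last one is at most \(\Lambda_{\rm clip}\), since \(\log x\le x\). In addition, \(e^{h}\in[\tau_\ell,\tau_u]\) on \(\sH_{\rm clip}\), so every Lipschitz constant of \(h\mapsto e^h\) is at most \(\tau_u\), and every curvature constant enters as \(\tau_\ell^{-1}\).

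The key observation is that every constant is produced from these primitives by finitely many operations, each of which preserves polynomial dependence on \(\Lambda_{\rm clip}\). These operations are sums, products, fixed powers, square roots, and Young's inequality with fixed exponents, which depend only on \(\alpha_{\rm mar}\). The number of such operations is fixed by the proof structure and does not depend on \(n\) or \(\delta\). Closure of the class \(\{C_0\Lambda_{\rm clip}^p\}\) under these operations then gives the claim.

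\textbf{Lemma-by-lemma checks.} I would verify the specific places where exponentials or logarithms of envelopes might appear.
\begin{enumerate}
\item Lemma~\ref{lem:clip-finite-proj-curvature} gives the factor \(2/\tau_\ell\).
\item Lemma~\ref{lem:clip-finite-proj-localization} gives \(\kappa_{\rm clip}^2\le C(\tau_u^2\tau_\ell^{-1}+\tau_uR_{\rm clip}^2+\tau_u^2)\) times the margin constant \(C_{\rm ret}\). The constant \(C_{\rm ret}\) depends only on \(A_{\rm mar},\alpha_{\rm mar}\).
\item Lemma~\ref{lem:clip-finite-sel-erm} uses \(f^2\le\tau_uf\) and \(\tau_u^2\) multipliers.
\item The peeling constants \(C_0\) in Lemmas~\ref{lem:clip-finite-sel-erm} and \ref{lem:clip-finite-proj-erm} come from Bousquet's inequality. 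Its envelope \(b\) is a polynomial in \(\tau_u,R_{\rm clip}\).
\item The critical-radius scaling constants \(L_{A,b}\) in Lemma~\ref{lem:clip-finite-critical-radius-scaling} are explicitly \(1\vee 2A/b\vee 2/\sqrt b\), raised to a power depending on \(p_{\rm ret}\). These are polynomial once \(A,b^{-1}\) are.
\item Lemma~\ref{lem:clip-finite-selector-perturbation} contributes the factor \(R_{\rm clip}\).
\end{enumerate}

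\textbf{Main obstacle.} The hardest step is Lemma~\ref{lem:clip-finite-approx-projection}, specifically the scalar comparison constants on \([\tau_\ell/\tau_u,\tau_u/\tau_\ell]\). These are the ratios between \(\phi(a/b)\), its reverse, and \((a/b-1)^2\), and a naive compactness argument gives no explicit dependence on the interval. I would bound them directly. On \([m^{-1},m]\) with \(m=\tau_u/\tau_\ell\), we have \(\phi(t)\ge (t-1)^2/(2\max(t,1))\ge (t-1)^2/(2m)\), and conversely \(\phi(t)\le (t-1)^2/t\le m(t-1)^2\). This yields constants that are linear in \(m\le\Lambda_{\rm clip}^2\).

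For the inequality \(r(\log r)^2\le C B\,\phi(r)\), I would take \(C\) polynomial in \(\tau_u/\tau_\ell\). Here the \(\log(\tau_u/\eta)\) part is absorbed into \(\mathfrak m^{\rm clip}_{n,\alpha_0}\), since \(\log\tau_u\le\Lambda_{\rm clip}\). Also \(A_{\rm clip,lt}=A_0(1+\tau_u/\alpha_0)\) appears only inside a logarithm or linearly.

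\textbf{VC lemma.} For Lemma~\ref{lem:clip-finite-vc-radius}, I would check that the covering radius \(C_{\tau_\ell,\tau_u}\) and the envelope \(B_{\tau_\ell,\tau_u}\) are polynomial in \(\Lambda_{\rm clip}\), which holds by the product inequality with sup-norms at most \(\tau_u\) or \(R_{\rm clip}\). The constant \(L\) chosen in the fixed-point argument is then a fixed power of these.
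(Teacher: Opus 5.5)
This is essentially the paper's argument: bound the primitive envelope quantities ($1+R_{\rm clip}$, $\tau_\ell^{-1}$, $\tau_u/\tau_\ell$, and the loss-class envelopes) by powers of $\Lambda_{\rm clip}$, and observe that every downstream constant is built from them by finitely many sums, products, maxima and fixed powers. The paper then handles the scalar divergence comparisons by strong convexity of $\phi$ on $[L^{-1},L]$ with $L=\tau_u/\tau_\ell\le\Lambda_{\rm clip}^2$, together with $r(\log r)^2/\phi(r)\lesssim 1+\log_+ r$, and notes that the VC bounds pick up only logarithms of these factors, exactly as you propose. One harmless slip: $\phi(t)\le (t-1)^2/t$ is false for large $t$ (e.g.\ $t=e^2$), but $\phi(t)\le (t-1)^2\le m(t-1)^2$ follows directly from $\log t\le t-1$, so your conclusion stands.
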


\begin{proof}
The basic envelope and curvature factors satisfy
\begin{equation}
\label{eq:clip-envelope-basic-factors}
\begin{aligned}
    1+R_{\rm clip}
    &\le 2\Lambda_{\rm clip},\\
    B_{\rm ret}
    &\le 2\tau_u\le2\Lambda_{\rm clip},\\
    B_{\rm proj}
    &\le C\tau_u(1+R_{\rm clip})
      \le C\Lambda_{\rm clip}^2,\\
    \tau_\ell^{-1}
    &\le\Lambda_{\rm clip},
    \qquad
    \frac{\tau_u}{\tau_\ell}
    \le\Lambda_{\rm clip}^2,
\end{aligned}
\end{equation}
where \(B_{\rm ret}\) and \(B_{\rm proj}\) are common envelopes for the two
loss-difference classes. Lemmas~\ref{lem:clip-finite-sel-regret} and
\ref{lem:clip-finite-proj-localization}, together with
\(f^2\le\tau_u f\), therefore give
\[
    \kappa_{\rm clip}
    \vee C_{\rm ret,erm}
    \vee C_{\rm proj,erm}
    \le C\Lambda_{\rm clip}^{p_1}
\]
for a finite exponent \(p_1\) depending only on \(\alpha_{\rm mar}\). The
critical-radius, concentration, peeling, and Young inequalities use only fixed
sums, products, maxima, and powers of these quantities. The resulting ERM and
retention-perturbation constants are therefore polynomial in
\(\Lambda_{\rm clip}\).

For the generalized KL comparisons, put
\(L=\tau_u/\tau_\ell\le\Lambda_{\rm clip}^2\). For
\(\phi(r)=r\log r-r+1\), strong convexity on \([L^{-1},L]\) gives
\[
    \phi(r)\ge \frac{(r-1)^2}{2L}.
\]
Thus the reverse-KL and chi-square comparison constants are polynomial in
\(\Lambda_{\rm clip}\). Moreover,
\[
    \frac{r(\log r)^2}{\phi(r)}
    \le C\{1+\log_+ r\},
    \qquad r>0,
\]
with the ratio defined by continuity at \(r=1\). At
\(r\le\tau_u/\eta_{n,\rm clip}(\delta)\), the right-hand side is bounded by
\(C(1+R_{\rm clip})B_{n,\tau_\ell,\tau_u}(\delta)\). The factor
\(1+R_{\rm clip}\) is polynomial in \(\Lambda_{\rm clip}\). Finally,
\(A_{{\rm clip},{\rm lt}}=A_0(1+\tau_u/\alpha_0)\) appears only through the
explicit lower-tail factors \(B_{n,\tau_\ell,\tau_u}(\delta)\) and
\(\mathfrak m_{n,\alpha_0}^{\rm clip}\). Thus the remaining lower-tail comparison
and approximate-projection constants are polynomial in
\(\Lambda_{\rm clip}\), without further dependence on \(A_0\).

For VC classes, the covering bounds acquire only the factors in
\eqref{eq:clip-envelope-basic-factors}
and their logarithms. Since \(\log\Lambda_{\rm clip}\le\Lambda_{\rm clip}\),
the local-entropy and fixed-point calculations preserve polynomial dependence.
Taking the maximum of the finitely many exponents proves the lemma.
\end{proof}

\subsection{Fitted recursion and finite-sample bounds}

\begin{theorem}[Fixed-level fitted coverage-stopped \textsc{FORE} bound]
\label{thm:clip-finite-fore-sharp}
Let \(\gamma\in[0,1)\), and fix
\(0<\tau_\ell\le1\le\tau_u<\infty\). Assume
Condition~\ref{ass:kl-class}, the lower-tail bound
\eqref{eq:coverage-clipped-fixed-lower-tail}, and the fixed-level versions of
Conditions~\ref{ass:clip-finite-selector-margin} and
\ref{ass:clip-finite-projection-compact}, with clipping level \(\tau_u\).
Suppose also that
\(1\in\sW_{\rm clip}\). Let
\(\{\widehat\omega^{(k)}\}_{k=0}^K\) be the exact-ERM fitted
coverage-stopped \textsc{FORE} iterates of
Algorithm~\ref{alg:coverage-truncated-fore}. Then,
for every \(0<\delta<1\), with probability at least \(1-\delta\), for
\(\rho=(1+\gamma)/2\),
\begin{equation}
\label{eq:clip-fixed-level-fitted-bound}
\begin{aligned}
    D_\nu^{\rm gen}(\widehat\omega^{(K)}\|\omega_{\tau_u})
    \le{}&
    C_{\tau_\ell,\tau_u}\rho^K
    D_\nu^{\rm gen}(\widehat\omega^{(0)}\|\omega_{\tau_u})\\
    &+
    \frac{C_{\tau_\ell,\tau_u}}{1-\gamma}
    \frac{\tau_u}{\tau_\ell}
    \varepsilon_{\rm ratio}\\
    &+
    \frac{C_{\tau_\ell,\tau_u}}{(1-\gamma)^2}
    \mathfrak m_{n,\alpha_0}^{\rm clip}
    \bar a_{n,\rm clip}(\delta),
\end{aligned}
\end{equation}
where \(C_{\tau_\ell,\tau_u}<\infty\) depends only on the two envelopes
and the fixed constants in the stated conditions other than \(A_0\). It may
be chosen so that
\begin{equation}
\label{eq:clip-fixed-level-envelope}
    C_{\tau_\ell,\tau_u}
    \le C_0\Lambda_{\rm clip}^{p}
\end{equation}
for finite \(C_0,p\) depending only on
\(A_{\rm mar},\alpha_0\), and \(\alpha_{\rm mar}\).
\end{theorem}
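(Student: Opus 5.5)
The argument mirrors the proof of Theorem~\ref{thm:fitted-kl-fori}, with the population KL geometry replaced by the generalized-KL geometry of Theorem~\ref{thm:coverage-truncated-recursion}. Since the clipped iterates are unnormalized, no empirical log-normalizer appears, and the scalar-normalization step of Lemma~\ref{lem:kl-scalar-normalization} is not needed.

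We work on the intersection of the retention-indicator ERM event of Lemma~\ref{lem:clip-finite-sel-erm} and the projection ERM event of Lemma~\ref{lem:clip-finite-proj-erm}, each at level \(\delta/2\). The projection event is taken at \(s=C_{\rm ret,erm}\bar a_{n,\rm ret}(\delta)\). Both events are uniform over \(f\in\sW_{\rm clip}\), so they can be evaluated at the data-dependent inputs \(f=\widehat\omega^{(k)}\) without any union bound over \(k\). These inputs lie in \(\sW_{\rm clip}\): the base case uses \(1\in\sW_{\rm clip}\), and every later iterate is of the form \(e^{\widehat h_{k+1}}\) with \(\widehat h_{k+1}\in\sH_{\rm clip}\).

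On this event, Lemma~\ref{lem:clip-finite-selector-perturbation} gives, for each \(k\), an excess oracle-projection loss of \(\widehat h_{k+1}\) at most \(\Delta:=C\bar a_{n,\rm clip}(\delta)\) for \(L^{\rm proj}_{\widehat\omega^{(k)},c^\star}\). Lemma~\ref{lem:clip-finite-approx-projection} then yields, for \(0<\lambda\le1\),
\[
D_\nu^{\rm gen}(\widehat\omega^{(k+1)}\|\omega_{\tau_u})\le(1+\lambda)D_\nu^{\rm gen}(\mathsf T_{\sW_{\rm clip}}^{\rm genKL}\widehat\omega^{(k)}\|\omega_{\tau_u})+C\{1+\lambda^{-1}\mathfrak m_{n,\alpha_0}^{\rm clip}\}\Delta .
\]
The first term on the right is controlled by the one-step population inequality \eqref{eq:coverage-projected-one-step} from the proof of Theorem~\ref{thm:coverage-truncated-recursion}. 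Combining the two gives
\[
D_\nu^{\rm gen}(\widehat\omega^{(k+1)}\|\omega_{\tau_u})\le(1+\lambda)\gamma\, D_\nu^{\rm gen}(\widehat\omega^{(k)}\|\omega_{\tau_u})+2\tfrac{\tau_u}{\tau_\ell}\varepsilon_{\rm ratio}+C\{1+\lambda^{-1}\mathfrak m^{\rm clip}_{n,\alpha_0}\}\Delta .
\]
We choose \(\lambda=\lambda_\rho\) exactly as in the proof of Theorem~\ref{thm:fitted-kl-fori}, so that \((1+\lambda_\rho)\gamma\le\rho\) and \(1+\lambda_\rho^{-1}\le C/(\rho-\gamma)\). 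Iterating over \(k\) produces the factor \((1-\rho)^{-1}=2/(1-\gamma)\) on the approximation term and the factor \(\{(\rho-\gamma)(1-\rho)\}^{-1}=4/(1-\gamma)^2\) on the statistical term. This is exactly the shape of \eqref{eq:clip-fixed-level-fitted-bound}, with \(\mathfrak m_{n,\alpha_0}^{\rm clip}\ge1\) absorbing the \(\lambda\)-free part of the statistical term.

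The main obstacle is the legitimacy of the per-step approximate-projection inequality. Lemma~\ref{lem:clip-finite-approx-projection} requires \(\Delta\ge\bar a_{n,\rm clip}(\delta)\), which we arrange by taking the constant in \(\Delta\) at least one. It also requires the lower-tail bound \eqref{eq:coverage-clipped-fixed-lower-tail}, which enters only through \(\mathfrak m^{\rm clip}_{n,\alpha_0}\). That is why this factor multiplies the statistical term alone and carries all of the \(A_0\)-dependence. In particular, the \(A_0\)-dependence must not leak into the constant \(C_{\tau_\ell,\tau_u}\), which the statement requires to depend on the fixed constants other than \(A_0\). We will check this explicitly when tracking constants.

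Finally, the envelope bound \eqref{eq:clip-fixed-level-envelope} follows from Lemma~\ref{lem:clip-finite-polynomial-envelope}. Every constant used above is a finite product of constants that are polynomial in \(\Lambda_{\rm clip}\), together with \(\tau_u/\tau_\ell\le\Lambda_{\rm clip}^2\), so the combined constant is again polynomial in \(\Lambda_{\rm clip}\).
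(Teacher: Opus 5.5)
Your proposal is correct and follows the paper's proof step for step: the same intersection of the uniform retention and projection ERM events, Lemma~\ref{lem:clip-finite-selector-perturbation} feeding Lemma~\ref{lem:clip-finite-approx-projection}, the one-step inequality \eqref{eq:coverage-projected-one-step}, the choice of \(\lambda\le1\) with \((1+\lambda)\gamma\le\rho\) and \(1+\lambda^{-1}\le C(1-\gamma)^{-1}\), and iteration with \((1-\rho)^{-1}=2/(1-\gamma)\). One bookkeeping fix is needed: because the retention event is taken at level \(\delta/2\), the projection event must be taken at \(s=C_{\rm ret,erm}\bar a_{n,\rm ret}(\delta/2)\), since otherwise \(\widehat c_f\) need not satisfy \(\Delta_f^{\rm ret}(\widehat c_f)\le s\); the paper then returns to level-\(\delta\) rates using \(\log(12/\delta)\le2\log(6/\delta)\), \(\log(16/\delta)\le2\log(8/\delta)\), and \eqref{eq:clip-proj-error-scaling}.
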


\begin{proof}[Proof of Theorem~\ref{thm:clip-finite-fore-sharp}]
Let
\(s_\delta=C_{\rm ret,erm}\bar a_{n,\rm ret}(\delta/2)\). Apply
Lemma~\ref{lem:clip-finite-sel-erm} with failure probability \(\delta/2\),
and apply Lemma~\ref{lem:clip-finite-proj-erm} with
\(s=s_\delta\) and the same failure probability. Work on the intersection of
these events, which has probability at least \(1-\delta\). Both events are
uniform over \(f\in\sW_{\rm clip}\), so they apply to the random fitted
inputs \(\widehat\omega^{(k)}\). The confidence terms at level \(\delta/2\)
are bounded by fixed multiples of those at level \(\delta\). Indeed,
\(\log(12/\delta)\le2\log(6/\delta)\) and
\(\log(16/\delta)\le2\log(8/\delta)\). It follows that
\(a_{n,\rm ret}(\delta/2)\le C a_{n,\rm ret}(\delta)\) and
\(\bar a_{n,\rm ret}(\delta/2)
\le C\bar a_{n,\rm ret}(\delta)\).
Equation~\eqref{eq:clip-proj-error-scaling}, together with
\(\log(16/\delta)\le2\log(8/\delta)\), gives
\[
    a_{n,\rm proj}(s_\delta,\delta/2)
    \le
    C a_{n,\rm proj}
    \bigl(\bar a_{n,\rm ret}(\delta),\delta\bigr).
\]

The initialization belongs to \(\sW_{\rm clip}\) because
\(1\in\sW_{\rm clip}\). Every later iterate also belongs to this
class because it has the form \(e^{\widehat h}\), with
\(\widehat h\in\sH_{\rm clip}\).

Fix \(k<K\) and set \(f=\widehat\omega^{(k)}\). Let
\(\widehat c_f\) and \(\widehat h_f\) be the retention-indicator and
projection ERMs chosen by Algorithm~\ref{alg:coverage-truncated-fore}. By
Lemma~\ref{lem:clip-finite-selector-perturbation},
\[
    L_{f,c_{f,\tau_u}^\star}^{\rm proj}(\widehat h_f)
    -
    \inf_{h\in\sH_{\rm clip}}L_{f,c_{f,\tau_u}^\star}^{\rm proj}(h)
    \le
    C\bar a_{n,\rm clip}(\delta).
\]
Set \(\Delta=C\bar a_{n,\rm clip}(\delta)\), increasing \(C\) if
necessary so that \(\Delta\ge\bar a_{n,\rm clip}(\delta)\), and apply
Lemma~\ref{lem:clip-finite-approx-projection}. For every \(0<\lambda\le1\),
\[
\begin{aligned}
    D_\nu^{\rm gen}(\widehat\omega^{(k+1)}\|\omega_{\tau_u})
    &\le
    (1+\lambda)
    D_\nu^{\rm gen}(
        \mathsf T_{\sW_{\rm clip}}^{\rm genKL}f
        \|\omega_{\tau_u}) \\
    &\quad
    +
    C_{\tau_\ell,\tau_u}
    \{1+\lambda^{-1}\mathfrak m_{n,\alpha_0}^{\rm clip}\}
    \bar a_{n,\rm clip}(\delta).
\end{aligned}
\]
The proof of Theorem~\ref{thm:coverage-truncated-recursion} gives the one-step
inequality for every nonnegative \(f\) with \(0\le f\le\tau_u\). In particular,
it applies to all \(f\in\sW_{\rm clip}\):
\[
    D_\nu^{\rm gen}(
        \mathsf T_{\sW_{\rm clip}}^{\rm genKL}f
        \|\omega_{\tau_u})
    \le
    \gamma D_\nu^{\rm gen}(f\|\omega_{\tau_u})
    +
    \frac{\tau_u}{\tau_\ell}\varepsilon_{\rm ratio}.
\]
Combining these one-step bounds gives
\[
\begin{aligned}
    D_\nu^{\rm gen}(\widehat\omega^{(k+1)}\|\omega_{\tau_u})
    &\le
    (1+\lambda)\gamma
    D_\nu^{\rm gen}(\widehat\omega^{(k)}\|\omega_{\tau_u})
    +
    2\frac{\tau_u}{\tau_\ell}\varepsilon_{\rm ratio} \\
    &\quad
    +
    C_{\tau_\ell,\tau_u}
    \{1+\lambda^{-1}\mathfrak m_{n,\alpha_0}^{\rm clip}\}
    \bar a_{n,\rm clip}(\delta),
\end{aligned}
\]
after using \(1+\lambda\le2\).
Choose \(\lambda=1\) if \(\gamma=0\), and otherwise choose
\[
    \lambda
    =
    1\wedge \frac{\rho-\gamma}{2\gamma}.
\]
Then \((1+\lambda)\gamma\le\rho\), and
\(1+\lambda^{-1}\le C(1-\gamma)^{-1}\). Since
\(\mathfrak m_{n,\alpha_0}^{\rm clip}\ge1\), it follows that
\[
    1+\lambda^{-1}\mathfrak m_{n,\alpha_0}^{\rm clip}
    \le
    C(1-\gamma)^{-1}\mathfrak m_{n,\alpha_0}^{\rm clip}.
\]
Hence
\begin{equation}
\label{eq:clip-fitted-fixed-level-recursion}
    D_\nu^{\rm gen}(\widehat\omega^{(k+1)}\|\omega_{\tau_u})
    \le
    \rho
    D_\nu^{\rm gen}(\widehat\omega^{(k)}\|\omega_{\tau_u})
    +
    C_{\tau_\ell,\tau_u}\frac{\tau_u}{\tau_\ell}\varepsilon_{\rm ratio}
    +
    \frac{C_{\tau_\ell,\tau_u}}{1-\gamma}
    \mathfrak m_{n,\alpha_0}^{\rm clip}
    \bar a_{n,\rm clip}(\delta).
\end{equation}
Iterating \eqref{eq:clip-fitted-fixed-level-recursion} and using
\((1-\rho)^{-1}=2(1-\gamma)^{-1}\) gives
\[
\begin{aligned}
    D_\nu^{\rm gen}(\widehat\omega^{(K)}\|\omega_{\tau_u})
    &\le
    \rho^K
    D_\nu^{\rm gen}(\widehat\omega^{(0)}\|\omega_{\tau_u})
    +
    \frac{C_{\tau_\ell,\tau_u}}{1-\gamma}\frac{\tau_u}{\tau_\ell}\varepsilon_{\rm ratio} \\
    &\quad
    +
    \frac{C_{\tau_\ell,\tau_u}}{(1-\gamma)^2}
    \mathfrak m_{n,\alpha_0}^{\rm clip}
    \bar a_{n,\rm clip}(\delta).
\end{aligned}
\]
Absorbing fixed multiplicative constants into \(C_{\tau_\ell,\tau_u}\)
proves \eqref{eq:clip-fixed-level-fitted-bound}.
Lemma~\ref{lem:clip-finite-polynomial-envelope} and the
finite sums and products used in the recursion show that this constant is at
most polynomial in \(\Lambda_{\rm clip}\), as asserted in
\eqref{eq:clip-fixed-level-envelope}.
\end{proof}

\begin{corollary}[Generalized KL bound under logarithmic clipping]
\label{cor:clip-finite-fore-kl}
Suppose the assumptions of Theorem~\ref{thm:clip-finite-fore} hold. Then,
for every \(0<\delta<1\), with probability at least \(1-\delta\), for
\(\rho=(1+\gamma)/2\),
\begin{equation}
\label{eq:clip-logarithmic-kl-bound}
\begin{aligned}
D_\nu^{\rm gen}(\widehat\omega^{(K)}\|\omega_{\tau_{u,n}})
\le{}&
C_n \rho^K
D_\nu^{\rm gen}(\widehat\omega^{(0)}\|\omega_{\tau_{u,n}})
+
\frac{C_n}{1-\gamma}
\frac{\tau_{u,n}}{\tau_\ell}
\varepsilon_{\rm ratio}(\tau_{u,n}) \\
&+
\frac{C_n}{(1-\gamma)^2}
\left\{
    \mathcal E_{n,\rm stat}(\delta)
    +\varepsilon_{\rm cls}(\tau_{u,n})
\right\},
\end{aligned}
\end{equation}
where finite constants \(C_0,q\), independent of \(n\), may be chosen so that
\begin{equation}
\label{eq:clip-logarithmic-constant}
    C_n
    \le
    C_0\{1+\log(en)+\tau_\ell^{-1}\}^{q}
    \left\{1+\log\frac{e}{1-\gamma}\right\}.
\end{equation}
Here \(C_0\) and \(q\) depend only on \(A\), \(K_{\rm cov}\), and the fixed
constants in the stated conditions.
\end{corollary}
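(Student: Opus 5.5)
The corollary is Theorem~\ref{thm:clip-finite-fore-sharp} specialized to the level \(\tau_u=\tau_{u,n}=1\vee A\log(en)\). The plan is to check that theorem's hypotheses at this level, invoke it, and then replace the fixed-level quantities \(\mathfrak m_{n,\alpha_0}^{\rm clip}\), \(\bar a_{n,\rm clip}(\delta)\) and \(C_{\tau_\ell,\tau_u}\) by the explicit rates in \eqref{eq:clip-logarithmic-kl-bound}.

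\emph{Step 1: hypotheses.} Conditions~\ref{ass:kl-class}, \ref{ass:clip-finite-selector-margin} and \ref{ass:clip-finite-projection-compact}, together with \(1\in\sW_{\rm clip}\), are assumed for every \(n\), so they hold at \(\tau_{u,n}\). The one remaining hypothesis is the fixed-level lower-tail bound \eqref{eq:coverage-clipped-fixed-lower-tail}. Lemma~\ref{lem:coverage-retained-lower-tail-transfer}, applied under Conditions~\ref{ass:clip-finite-cov-upper-tail} and \ref{ass:clip-finite-lower-tail}, supplies it uniformly over \(\tau_u\ge1\), with \(\alpha_0=\alpha_{\rm cov}/2\) and \(\log_+A_0\le C_{\rm lt}+\log\{e/(1-\gamma)\}\). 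Theorem~\ref{thm:clip-finite-fore-sharp} then gives \eqref{eq:clip-fixed-level-fitted-bound} at \(\tau_u=\tau_{u,n}\) on an event of probability at least \(1-\delta\).

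\emph{Step 2: constants and rates.}
\begin{itemize}
\item The envelope constant satisfies \(C_{\tau_\ell,\tau_{u,n}}\le C_0\Lambda_{\rm clip}^p\). Since \(\Lambda_{\rm clip}=1+\tau_{u,n}+\tau_\ell^{-1}\le C_A\{1+\log(en)+\tau_\ell^{-1}\}\), this constant is polynomial in \(1+\log(en)+\tau_\ell^{-1}\).
\item By \eqref{eq:clip-rate-lower-tail-factor}, \(\mathfrak m_{n,\alpha_0}^{\rm clip}\le 1+\alpha_0^{-1}[\log(en)+\log_+A_0+\log_+(1+\tau_{u,n}/\alpha_0)]\). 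Inserting the bound on \(\log_+A_0\) from Step~1 shows this factor is at most \(C\{1+\log(en)\}\{1+\log(e/(1-\gamma))\}\), which accounts for the \(\gamma\)-dependence allowed in \eqref{eq:clip-logarithmic-constant}.
\item For the statistical term, \eqref{eq:clip-rate-barred} gives \(\bar a_{n,\rm clip}(\delta)\le C\{\mathcal E_{n,\rm stat}(\delta)+\varepsilon_{\rm cls}\}\), whenever \(\mathfrak r_{n,\rm clip}\vee\log(1/\delta)/n\le1\) and \(\varepsilon_{\rm cls}\le1\).
\end{itemize}

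\emph{Step 3: the main obstacle.} The hardest part is the bookkeeping that keeps every \(\tau_{u,n}\)-dependence polynomial in \(\log(en)\), using Lemma~\ref{lem:clip-finite-polynomial-envelope}. Within that, the delicate point is that the constant \(C\) in Lemma~\ref{lem:clip-finite-rate-comparison} is stated to depend on \(\tau_u\). One has to trace it through that proof and confirm it enters only through \(\kappa_{\rm clip}\) and the envelopes, hence polynomially in \(\Lambda_{\rm clip}\).

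\emph{Step 4: degenerate regimes.} Outside the small-radius regime the claim holds trivially, which I would argue as follows. Every element of \(\sW_{\rm clip}\) and \(\omega_{\tau_{u,n}}\) itself lie in \([0,\tau_{u,n}]\), and the lower-tail control holds. Hence the left side of \eqref{eq:clip-logarithmic-kl-bound} is bounded by a polynomial in \(\tau_{u,n}\) times \(\mathfrak m_{n,\alpha_0}^{\rm clip}\), through the layer-cake argument of Lemma~\ref{lem:kl-lower-tail-comparison}. If instead \(\mathfrak r_{n,\rm clip}\), \(\log(1/\delta)/n\) or \(\varepsilon_{\rm cls}\) exceeds one, then \(\mathcal E_{n,\rm stat}(\delta)+\varepsilon_{\rm cls}\ge c\), and after enlarging \(C_n\) polynomially in \(\log(en)\) the right side dominates that bound.

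\emph{Step 5: assembly.} Combine the three terms of \eqref{eq:clip-fixed-level-fitted-bound} with the bounds from Steps~2 and~4. Absorb the factor \(\tau_{u,n}/\tau_\ell\) multiplying \(\varepsilon_{\rm ratio}\) as displayed, and collect all remaining factors into \(C_n\), which then satisfies \eqref{eq:clip-logarithmic-constant}.
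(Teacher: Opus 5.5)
Your proposal is correct and follows the paper's proof essentially step for step. You specialize Theorem~\ref{thm:clip-finite-fore-sharp} at \(\tau_{u,n}\), with the lower tail supplied by Lemma~\ref{lem:coverage-retained-lower-tail-transfer}; bound \(\mathfrak m_{n,\alpha_0}^{\rm clip}\) via \eqref{eq:clip-rate-lower-tail-factor} and \(\log_+A_0\le C_{\rm lt}+\log\{e/(1-\gamma)\}\); use \eqref{eq:clip-rate-barred} in the small-radius case, with Lemma~\ref{lem:clip-finite-polynomial-envelope} controlling the constants; and handle the complementary case with a crude layer-cake bound that the right-hand side dominates because \(\mathcal E_{n,\rm stat}(\delta)+\varepsilon_{\rm cls}>1\) there.

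One small correction in Step~4. The layer-cake argument naturally gives \(C_{\tau_\ell,\tau_{u,n},\alpha_0}\{1+\log_+A_0\}\), as in the paper, not a multiple of \(\mathfrak m_{n,\alpha_0}^{\rm clip}\) itself. The latter need not dominate \(1+\log_+A_0\) when \(b_{n,\rm clip}>1\), which can happen in that regime. The conclusion is unaffected, since \(1+\log_+A_0\) is still absorbed into \(C_n\).
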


\begin{proof}[Proof of Corollary~\ref{cor:clip-finite-fore-kl}]
Set \(\tau_u=\tau_{u,n}\), and use the fixed-level abbreviations
\(\varepsilon_{\rm ratio}=\varepsilon_{\rm ratio}(\tau_{u,n})\) and
\(\varepsilon_{\rm cls}=\varepsilon_{\rm cls}(\tau_{u,n})\).
First suppose that
\[
    \mathfrak r_{n,\rm clip}
    \vee\frac{\log(1/\delta)}{n}
    \vee\varepsilon_{\rm cls}
    \le1.
\]
The explicit lower-tail multiplier bound in
Lemma~\ref{lem:clip-finite-rate-comparison} and
\(\tau_{u,n}=1\vee A\log(en)\) give
\begin{equation}
\label{eq:clip-logarithmic-lower-tail-stat}
    \mathfrak m_{n,\alpha_0}^{\rm clip}
    \bar a_{n,\rm clip}(\delta)
    \le
    C\left\{
        1+\alpha_0^{-1}
        \left[
            \log(en)+\log\frac{e}{1-\gamma}
        \right]
    \right\}
    \left\{
        \mathcal E_{n,\rm stat}(\delta)
        +\varepsilon_{\rm cls}
    \right\}.
\end{equation}
Here Lemma~\ref{lem:coverage-retained-lower-tail-transfer} bounds
\(\log_+A_0\) by a fixed constant plus
\(\log\{e/(1-\gamma)\}\). Substituting
\eqref{eq:clip-logarithmic-lower-tail-stat} into
Theorem~\ref{thm:clip-finite-fore-sharp} and absorbing its logarithmic factor
into \(C_n\) gives the stated generalized KL bound in this case.

It remains to consider the complementary case. The lower-tail bound
\eqref{eq:coverage-clipped-fixed-lower-tail} and the layer-cake identity give
\begin{equation}
\label{eq:clip-log-integrability}
    \int\log_+\{1/\omega_{\tau_u}\}\,\dd\nu
    =
    \int_0^\infty
        \nu\{0<\omega_{\tau_u}<e^{-t}\}\,\dd t
    \le
    \int_0^\infty
        \min\{1,A_0e^{-\alpha_0t}\}\,\dd t
    \le
    \frac{1+\log_+A_0}{\alpha_0}.
\end{equation}
Every fitted iterate takes values in \([\tau_\ell,\tau_u]\), while
\(0<\omega_{\tau_u}\le\tau_u\), \(\nu\)-almost everywhere. The integral
bound \eqref{eq:clip-log-integrability} and the scalar inequality
\[
    a\log(a/b)-a+b
    \le
    \tau_u\{1+\log\tau_u+\log_+(1/b)\},
    \qquad 0<a,b\le\tau_u,
\]
therefore imply
\[
    D_\nu^{\rm gen}(\widehat\omega^{(K)}\|\omega_{\tau_u})
    \le
    C_{\tau_\ell,\tau_u,\alpha_0}\{1+\log_+A_0\}.
\]
If \(\mathfrak r_{n,\rm clip}>1\),
\(\log(1/\delta)/n>1\), or \(\varepsilon_{\rm cls}>1\), then
\[
    \mathcal E_{n,\rm stat}(\delta)+\varepsilon_{\rm cls}>1.
\]
Thus the final error term in the generalized KL bound controls this crude
bound after increasing its constant; the remaining terms are nonnegative.
The crude bound used here is polynomial in \(\Lambda_{\rm clip}\) and
linear in \(1+\log_+A_0\). Together with
Lemma~\ref{lem:clip-finite-polynomial-envelope} and
\[
    \Lambda_{\rm clip}
    =1+\tau_{u,n}+\tau_\ell^{-1}
    \le
    C_A\{1+\log(en)+\tau_\ell^{-1}\},
\]
this proves the generalized KL inequality with
\[
    C_n
    \le
    C_0\{1+\log(en)+\tau_\ell^{-1}\}^{q}
    \left\{1+\log\frac{e}{1-\gamma}\right\},
\]
for finite constants \(C_0,q\) of the stated form.
\end{proof}

\begin{proof}[Proof of Theorem~\ref{thm:clip-finite-fore}]
Work on the event in Corollary~\ref{cor:clip-finite-fore-kl}. Every fitted
iterate is at most \(\tau_{u,n}\), so
Lemma~\ref{lem:coverage-retained-gen-kl-l1} gives
\[
\begin{aligned}
    \|\widehat\omega^{(K)}-\omega_{\rm cov}\|_{L^1(\nu)}
    \le{}&
    \left\{
        2(\tau_{u,n}+1)
        D_\nu^{\rm gen}
        (\widehat\omega^{(K)}\|\omega_{\tau_{u,n}})
    \right\}^{1/2}\\
    &+
    \frac{E_\nu\{(\omega_{\rm cov}-\tau_{u,n})_+\}}{1-\gamma}.
\end{aligned}
\]
Substitute the generalized KL bound and apply
\(\sqrt{x+y}\le\sqrt{x}+\sqrt{y}\). After factoring
\((1-\gamma)^{-1}\) from the approximation terms, the inequality
\(\sqrt{x}+\sqrt{y}\le\{2(x+y)\}^{1/2}\) combines them into the approximation
term in Theorem~\ref{thm:clip-finite-fore}; the universal factor \(\sqrt{2}\)
is absorbed into \(C_n\). The resulting factors
\[
    (\tau_{u,n}+1)^{1/2},
    \qquad
    (\tau_{u,n}/\tau_\ell)^{1/2}
\]
are polynomial in
\(1+\log(en)+\tau_\ell^{-1}\). They may therefore be absorbed by
enlarging \(C_n\), without changing its asserted form; the logarithmic
horizon factor in Corollary~\ref{cor:clip-finite-fore-kl} enters under a
square root.

Finally, Lemma~\ref{lem:coverage-retained-subexp-cap-bias} yields
\[
    \frac{E_\nu\{(\omega_{\rm cov}-\tau_{u,n})_+\}}{1-\gamma}
    \le
    \frac{2K_{\rm cov}}{1-\gamma}(en)^{-A/K_{\rm cov}}
    =o\!\left(\frac{1}{(1-\gamma)\sqrt n}\right),
\]
where the last equality uses \(A>K_{\rm cov}/2\). To compare this term
with the statistical error, put
\(a=\alpha_{\rm mar}/(\alpha_{\rm mar}+2)\). By definition,
\[
    \mathcal E_{n,\rm stat}(\delta)
    \ge
    \mathfrak r_{n,\rm clip}^{1+a}.
\]
If \(\mathfrak r_{n,\rm clip}\le1\), then the right-hand side is at least
\(\mathfrak r_{n,\rm clip}^2\ge n^{-1}\); if
\(\mathfrak r_{n,\rm clip}>1\), it is larger than one. Hence
\(\{\mathcal E_{n,\rm stat}(\delta)\}^{1/2}\ge n^{-1/2}\). Since
\(n^{1/2}(en)^{-A/K_{\rm cov}}\) is bounded over \(n\ge1\), enlarging
\(C_0\) absorbs the clipping bias into the statistical term and proves the
stated \(L^1(\nu)\) inequality.
\end{proof}

\section{Undiscounted KL contraction under strong KL data processing}
\label{app:mixed-contraction}

The main text obtains a strict KL recursion from discounting. When
\(\gamma=1\), the common initial-distribution component disappears, and
ordinary data processing gives only nonexpansiveness
\citep{coverThomas2006Elements}. This section therefore states the additional
mixing requirement directly as a one-step KL strong data-processing inequality
for the target-policy kernel \citep{raginsky2014StrongDataProcessing}.

For \(\gamma=1\), the population adjoint Bellman update satisfies
\[
    (\mathsf B_1^\pi\omega)\nu
    =
    (\omega\nu)P_\pi .
\]
Let
\[
    \Delta_\nu
    =
    \left\{
        \omega\in L^1(\nu):
        \omega\ge 0\ \nu\text{-a.e.},\quad
        \int \omega\,\dd\nu=1
    \right\}
\]
denote the set of \(\nu\)-densities of probability distributions. The one-step
KL strong data-processing condition used below is the following.
\begin{enumerate}[label=\textbf{(A\arabic*)}, ref=A\arabic*, resume=forecond]
\item \label{ass:one-step-kl-sdpi}
\textit{One-step KL strong data processing.} There exists
\(\alpha\in[0,1)\) such that, for all probability distributions
\(\mu,\xi\ll\nu\) with \(0<D_{\rm KL}(\mu\|\xi)<\infty\),
\[
    D_{\rm KL}(\mu P_\pi\|\xi P_\pi)
    \le
    \alpha D_{\rm KL}(\mu\|\xi).
\]
\end{enumerate}
A simple sufficient condition is a one-step Doeblin minorization: if there
exist \(\epsilon>0\) and a probability distribution \(\lambda\) such that
\[
    P_\pi(\cdot\mid x)\ge \epsilon\lambda(\cdot)
    \qquad \text{for all }x,
\]
then Condition~\ref{ass:one-step-kl-sdpi} holds with
\(\alpha\le 1-\epsilon\). Indeed, writing
\(P_\pi=\epsilon\lambda+(1-\epsilon)R\) for the residual Markov kernel \(R\),
joint convexity of KL divergence and data processing
\citep{coverThomas2006Elements} yield, for any probability measures
\(\rho\) and \(\eta\),
\[
    D_{\mathrm{KL}}(\rho P_\pi\,\|\,\eta P_\pi)
    \le (1-\epsilon)D_{\mathrm{KL}}(\rho R\,\|\,\eta R)
    \le (1-\epsilon)D_{\mathrm{KL}}(\rho\,\|\,\eta).
\]
Minorization conditions of this type are standard sufficient conditions for
uniform ergodicity in Markov-chain theory
\citep{meynTweedieGlynn2009MarkovChains}.

\begin{theorem}[Undiscounted adjoint KL contraction]
\label{thm:mixed-adjoint-contraction}
Assume Conditions~\ref{ass:overlap} and~\ref{ass:one-step-kl-sdpi}. The map
\(\mathsf B_1^\pi\) maps \(\Delta_\nu\) into itself, and, for any
\(\omega,\xi\in\Delta_\nu\) with
\(D_\nu(\omega\|\xi)<\infty\),
\begin{equation}
\label{eq:undiscounted-adjoint-contraction}
    D_\nu\left(
        \mathsf B_1^\pi\omega
        \big\|
        \mathsf B_1^\pi\xi
    \right)
    \le
    \alpha D_\nu(\omega\|\xi).
\end{equation}
If there exists a stationary distribution \(d_{\pi,1}\ll\nu\), with ratio
\(\omega_{\pi,1}=\dd d_{\pi,1}/\dd\nu\), then
\begin{equation}
\label{eq:undiscounted-adjoint-geometric}
    D_\nu\left(
        (\mathsf B_1^\pi)^K\omega
        \big\|
        \omega_{\pi,1}
    \right)
    \le
    \alpha^K D_\nu(\omega\|\omega_{\pi,1})
    \qquad
    \text{for all }\omega\in\Delta_\nu
    \text{ with }D_\nu(\omega\|\omega_{\pi,1})<\infty .
\end{equation}
This stationary distribution is unique among stationary distributions
\(\tilde d=\tilde\omega\nu\) satisfying
\(D_\nu(\tilde\omega\|\omega_{\pi,1})<\infty\).
\end{theorem}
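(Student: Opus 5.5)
The plan is to work in measure form, exactly as in the proof sketch of Lemma~\ref{lem:adjoint-bellman-kl-contraction}, but with the strict contraction now supplied by Condition~\ref{ass:one-step-kl-sdpi} instead of by the discount mixture.

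\emph{Step 1: $\mathsf B_1^\pi$ maps $\Delta_\nu$ into itself.} For $\omega\in\Delta_\nu$, the measure $(\omega\nu)P_\pi$ is a probability measure, since $P_\pi$ is a Markov kernel. By Condition~\ref{ass:overlap}, $\nu P_\pi=\nu_\pi^+\ll\nu$, so Lemma~\ref{lem:ac-propagation} applied to $\mu=\omega\nu\ll\nu$ gives $(\omega\nu)P_\pi\ll\nu$. Its Radon--Nikodym derivative is therefore a nonnegative $\nu$-density with unit mass, so $\mathsf B_1^\pi\omega\in\Delta_\nu$.

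\emph{Step 2: the one-step contraction.} By the identity $D_\nu(f\|g)=D_{\rm KL}(f\nu\|g\nu)$,
\[
    D_\nu(\mathsf B_1^\pi\omega\|\mathsf B_1^\pi\xi)
    =
    D_{\rm KL}\{(\omega\nu)P_\pi\|(\xi\nu)P_\pi\}.
\]
Set $\mu=\omega\nu$ and $\xi'=\xi\nu$; both are probability measures absolutely continuous with respect to $\nu$. There are three cases.
\begin{itemize}
\item If $0<D_{\rm KL}(\mu\|\xi')<\infty$, Condition~\ref{ass:one-step-kl-sdpi} gives the bound directly.
\item If $D_{\rm KL}(\mu\|\xi')=0$, then $\mu=\xi'$, so both sides vanish.
\item The case $D_{\rm KL}(\mu\|\xi')=\infty$ is excluded by hypothesis.
\end{itemize}
This proves \eqref{eq:undiscounted-adjoint-contraction}.

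\emph{Step 3: geometric convergence and uniqueness.} A stationary $d_{\pi,1}\ll\nu$ satisfies $d_{\pi,1}P_\pi=d_{\pi,1}$, so $\mathsf B_1^\pi\omega_{\pi,1}=\omega_{\pi,1}$. Induct on $K$, taking $\xi=\omega_{\pi,1}$ at each step. Step 1 keeps the iterates in $\Delta_\nu$, and the inductive bound keeps $D_\nu((\mathsf B_1^\pi)^k\omega\|\omega_{\pi,1})$ finite, so Step 2 applies at every stage and yields \eqref{eq:undiscounted-adjoint-geometric}.

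For uniqueness, let $\tilde d=\tilde\omega\nu$ be stationary with $D_\nu(\tilde\omega\|\omega_{\pi,1})<\infty$. Stationarity gives $\mathsf B_1^\pi\tilde\omega=\tilde\omega$, so Step 2 yields
\[
    D_\nu(\tilde\omega\|\omega_{\pi,1})\le\alpha D_\nu(\tilde\omega\|\omega_{\pi,1}).
\]
Since $\alpha<1$ and the divergence is finite, it must equal $0$. Hence $\tilde\omega=\omega_{\pi,1}$ $\nu$-a.e.

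\emph{Main obstacle.} The only delicate point is measure-theoretic bookkeeping, not any inequality. One must check that the kernel pushforwards remain absolutely continuous with respect to $\nu$, so that the $\nu$-density form of the divergence is legitimate; Lemma~\ref{lem:ac-propagation} handles this. One must also handle the zero-divergence edge case, which the strict-positivity hypothesis in Condition~\ref{ass:one-step-kl-sdpi} leaves out; equality of the measures covers it.
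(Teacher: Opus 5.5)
Your proposal is correct and follows the paper's proof essentially step for step. The paper uses the same four ingredients: absolute continuity via Lemma~\ref{lem:ac-propagation}, the strong data-processing inequality applied to $\omega\nu$ and $\xi\nu$, iteration with $\xi=\omega_{\pi,1}$, and the $D\le\alpha D$ argument for uniqueness. Your explicit treatment of the zero-divergence case, which the strict-positivity hypothesis in Condition~\ref{ass:one-step-kl-sdpi} leaves uncovered, and of finiteness along the induction are small points the paper passes over silently.
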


\begin{proof}[Proof of Theorem~\ref{thm:mixed-adjoint-contraction}]
Condition~\ref{ass:overlap} and Lemma~\ref{lem:ac-propagation} imply that
\((\omega\nu)P_\pi\ll\nu\) whenever \(\omega\in\Delta_\nu\). Hence
\(\mathsf B_1^\pi\omega\) is well defined as a \(\nu\)-density. Since
\((\omega\nu)P_\pi\) is a probability distribution,
\(\mathsf B_1^\pi\Delta_\nu\subseteq\Delta_\nu\). For
\(\omega,\xi\in\Delta_\nu\), applying
Condition~\ref{ass:one-step-kl-sdpi} to the probability measures
\(\omega\nu\) and \(\xi\nu\) gives
\[
\begin{aligned}
    D_\nu\left(
        \mathsf B_1^\pi\omega
        \big\|
        \mathsf B_1^\pi\xi
    \right)
    &=
    D_{\rm KL}\{(\omega\nu)P_\pi\|(\xi\nu)P_\pi\} \\
    &\le
    \alpha D_{\rm KL}(\omega\nu\|\xi\nu)
    =
    \alpha D_\nu(\omega\|\xi).
\end{aligned}
\]
If \(d_{\pi,1}=\omega_{\pi,1}\nu\) is stationary, then
\(\mathsf B_1^\pi\omega_{\pi,1}=\omega_{\pi,1}\). Iterating
\eqref{eq:undiscounted-adjoint-contraction} with
\(\xi=\omega_{\pi,1}\) gives
\eqref{eq:undiscounted-adjoint-geometric}.
If \(\tilde\omega\nu\) is another stationary distribution and
\(D_\nu(\tilde\omega\|\omega_{\pi,1})<\infty\), then
\[
    D_\nu(\tilde\omega\|\omega_{\pi,1})
    =
    D_\nu\left(
        \mathsf B_1^\pi\tilde\omega
        \big\|
        \mathsf B_1^\pi\omega_{\pi,1}
    \right)
    \le
    \alpha D_\nu(\tilde\omega\|\omega_{\pi,1}).
\]
Since \(\alpha<1\), the finite divergence must be zero. Hence
\(\tilde d=d_{\pi,1}\), proving uniqueness in the stated class.
\end{proof}

\begin{proposition}[Projected undiscounted KL recursion]
\label{prop:projected-undiscounted-kl}
Assume \(\gamma=1\), Condition~\ref{ass:overlap},
Conditions~\ref{ass:kl-class}, \ref{ass:kl-bounded},
and~\ref{ass:one-step-kl-sdpi}. Suppose there
exists a stationary ratio
\(\omega_{\pi,1}\in\Delta_\nu\) with \(\omega_{\pi,1}>0\) \(\nu\)-a.e.\ and
\(\log\omega_{\pi,1}\in L^1(\nu)\). Assume that
\(\mathsf B_1^\pi\omega\) is bounded \(\nu\)-a.e.\ for every
\(\omega\in\sW\). For \(\omega\in\sW\), let
\[
    u_\omega
    =
    \mathsf B_1^\pi\omega,
    \qquad
    \mathsf T_{\sW,1}^{\rm KL}\omega
    =
    \Pi_{\sW}^{\rm KL}u_\omega .
\]
Define the projection violation
\[
    \operatorname{viol}_{{\rm KL},1}(\omega)
    =
    \inf_{v\in\sW}
    \left|
        \int
        \{u_\omega(x)-\mathsf T_{\sW,1}^{\rm KL}\omega(x)\}
        \log\frac{\omega_{\pi,1}(x)}{v(x)}
        \,\nu(\dd x)
    \right|.
\]
Then
\begin{equation}
\label{eq:projected-undiscounted-one-step}
    D_\nu\left(
        \mathsf T_{\sW,1}^{\rm KL}\omega
        \big\|
        \omega_{\pi,1}
    \right)
    \le
    \alpha
    D_\nu(\omega\|\omega_{\pi,1})
    +
    \operatorname{viol}_{{\rm KL},1}(\omega).
\end{equation}
Consequently, if
\(\sup_{\omega\in\sW}\operatorname{viol}_{{\rm KL},1}(\omega)\le\varepsilon_1\),
the projected iterates
\(\omega^{(j+1)}=\mathsf T_{\sW,1}^{\rm KL}\omega^{(j)}\) satisfy
\begin{equation}
\label{eq:projected-undiscounted-iterated}
    D_\nu(\omega^{(J)}\|\omega_{\pi,1})
    \le
    \alpha^J D_\nu(\omega^{(0)}\|\omega_{\pi,1})
    +
    \frac{1-\alpha^J}{1-\alpha}\varepsilon_1 .
\end{equation}
If \(\omega_{\pi,1}\in\sW\), then \(\varepsilon_1=0\),
\(\mathsf T_{\sW,1}^{\rm KL}\omega_{\pi,1}=\omega_{\pi,1}\), and
\[
    D_\nu\left(
        \mathsf T_{\sW,1}^{\rm KL}\omega
        \big\|
        \omega_{\pi,1}
    \right)
    \le
    \alpha D_\nu(\omega\|\omega_{\pi,1}).
\]
\end{proposition}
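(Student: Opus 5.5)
The plan is to mirror the proof of Lemma~\ref{lem:kl-projection-violation}, replacing the external-target bound there by the exact projection-violation term. Fix \(\omega\in\sW\) and write \(u=u_\omega\), \(\bar u=\mathsf T_{\sW,1}^{\rm KL}\omega\), \(w=\omega_{\pi,1}\). By Theorem~\ref{thm:mixed-adjoint-contraction}, \(u\in\Delta_\nu\). Since \(u\) is bounded, \(E_\nu\{u\log_+u\}<\infty\). Lemma~\ref{lem:kl-projection-inequality} then gives existence of \(\bar u=\omega_{\bar h}\) with \(\bar h\in\sH^\circ\), together with the Pythagorean inequality \(D_\nu(\bar u\|v)\le D_\nu(u\|v)-D_\nu(u\|\bar u)\) for every \(v\in\sW\).

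Next I fix \(v\in\sW\) and use the identity \(D_\nu(a\|w)=D_\nu(a\|v)+\int a\log(v/w)\,\dd\nu\), applied with \(a=\bar u\) and with \(a=u\). Before using it I check integrability. \(\log v\) is bounded. \(\log w\in L^1(\nu)\) by assumption, and \(u,\bar u\) are bounded, so \(\int a\log w\,\dd\nu\) is finite. Combining the identity with the Pythagorean inequality gives
\[
D_\nu(\bar u\|w)\le D_\nu(u\|w)-D_\nu(u\|\bar u)+\int(u-\bar u)\log\frac{w}{v}\,\dd\nu .
\]
I then drop the nonpositive term \(-D_\nu(u\|\bar u)\) and bound the cross term by its absolute value. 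Since the left side does not depend on \(v\), I take the infimum over \(v\in\sW\), which yields \(D_\nu(\bar u\|w)\le D_\nu(u\|w)+\operatorname{viol}_{{\rm KL},1}(\omega)\).

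The contraction step applies \eqref{eq:undiscounted-adjoint-contraction} with \(\xi=\omega_{\pi,1}\) and uses \(\mathsf B_1^\pi\omega_{\pi,1}=\omega_{\pi,1}\). This gives \(D_\nu(u\|w)\le\alpha D_\nu(\omega\|w)\). Here I need \(D_\nu(\omega\|w)<\infty\), which holds because \(\omega\) is bounded and \(\log w\in L^1(\nu)\). That proves \eqref{eq:projected-undiscounted-one-step}. Iterating it with the uniform bound \(\varepsilon_1\) and summing the geometric series gives \eqref{eq:projected-undiscounted-iterated}. All iterates lie in \(\sW\), so the one-step bound applies at every step.

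For the realizable case I choose \(v=\omega_{\pi,1}\), so \(\log(w/v)=0\) and the violation is zero for every \(\omega\), giving \(\varepsilon_1=0\). The fixed-point claim follows from \(u_{\omega_{\pi,1}}=\omega_{\pi,1}\in\sW\): the KL projection of an element of \(\sW\) onto \(\sW\) is that element itself, since \(D_\nu(\omega_{\pi,1}\|\omega_{\pi,1})=0\) and KL divergence vanishes only at equality. The only delicate point is the integrability bookkeeping: the change-of-target identity must involve only finite terms, and the divergences against \(w\) must be finite. This is handled by the boundedness of \(u\), \(\bar u\) and \(v\) together with \(\log\omega_{\pi,1}\in L^1(\nu)\).
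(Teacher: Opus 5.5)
Your proposal is correct and follows essentially the same route as the paper. Both arguments combine the convex KL projection inequality with the change-of-target identity to isolate the violation term, then apply the strong data-processing contraction of Theorem~\ref{thm:mixed-adjoint-contraction} with \(\xi=\omega_{\pi,1}\) and iterate. The differences are cosmetic: you carry the \(-D_\nu(u\|\bar u)\) term before dropping it, whereas the paper uses the weakened form \(D_\nu(\bar u\|v)\le D_\nu(u\|v)\) directly, and you spell out the integrability checks that the paper leaves implicit.
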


\begin{proof}
Let \(\bar u=\mathsf T_{\sW,1}^{\rm KL}\omega\) and \(u=u_\omega\). The
convex projection inequality in Lemma~\ref{lem:kl-projection-inequality},
applied to the bounded image \(u\), gives \(D_\nu(\bar u\|v)\le D_\nu(u\|v)\)
for every \(v\in\sW\). Therefore
\[
\begin{aligned}
    D_\nu(\bar u\|\omega_{\pi,1})
    &=
    D_\nu(\bar u\|v)
    +
    \int \bar u(x)\log\frac{v(x)}{\omega_{\pi,1}(x)}\,\nu(\dd x)\\
    &\le
    D_\nu(u\|v)
    +
    \int \bar u(x)\log\frac{v(x)}{\omega_{\pi,1}(x)}\,\nu(\dd x)\\
    &=
    D_\nu(u\|\omega_{\pi,1})
    +
    \int \{u(x)-\bar u(x)\}
        \log\frac{\omega_{\pi,1}(x)}{v(x)}\,\nu(\dd x).
\end{aligned}
\]
Bounding the last integral by its absolute value and then taking the infimum
over \(v\in\sW\) yields
\begin{equation}
\label{eq:mixed-projection-violation}
    D_\nu(\bar u\|\omega_{\pi,1})
    \le
    D_\nu(u\|\omega_{\pi,1})
    +
    \operatorname{viol}_{{\rm KL},1}(\omega).
\end{equation}
Theorem~\ref{thm:mixed-adjoint-contraction}, with
\(\xi=\omega_{\pi,1}\), gives
\begin{equation}
\label{eq:mixed-projected-one-step}
    D_\nu(u\|\omega_{\pi,1})
    \le
    \alpha D_\nu(\omega\|\omega_{\pi,1}).
\end{equation}
Combining \eqref{eq:mixed-projection-violation} and
\eqref{eq:mixed-projected-one-step} proves the one-step inequality in the
proposition, \eqref{eq:projected-undiscounted-one-step}. Iterating that
inequality gives \eqref{eq:projected-undiscounted-iterated}. If
\(\omega_{\pi,1}\in\sW\), choose \(v=\omega_{\pi,1}\) in the violation term.
Since \(u_{\omega_{\pi,1}}=\omega_{\pi,1}\), its KL projection is
\(\omega_{\pi,1}\), and the realizable contraction follows.
\end{proof}

\section{Backward-regression variant of \textsc{FORE}}
\label{app:strong-form-regression-fori}

The backward-regression variant is a fitted-regression version of the
adjoint Bellman recursion. It estimates the initial density ratio and
the one-step target-coverage ratio, repeatedly fits the backward conditional
mean in the backward-regression factorization of the adjoint Bellman operator,
and uses that regression to form the next ratio iterate. It is the
density-ratio analogue of fitted \(Q\)-evaluation: FQE regresses Bellman targets
for value functions, whereas this variant regresses the adjoint Bellman image
for density ratios.

This variant also makes explicit the role of adjoint Bellman completeness. The
main KL-projected method in Section~\ref{sec:kl-fori-theory} works with
adjoint Bellman moment identities and projects the resulting density in KL. It
is positive and normalized by construction, and its population analysis does
not require a regression class to contain every adjoint Bellman image. The
backward-regression variant below is more direct, but its projected population
error vanishes only under \emph{adjoint Bellman completeness}, the
density-ratio analogue of Bellman completeness in FQE. For this reason, we
recommend using the \textsc{FORE} algorithm studied in the main text.

\subsection{Backward-regression adjoint factorization}
\label{sec:strong-form-factorization}

Recall that
\[
    \nup=\nu\Ppi,
    \qquad
    \cpi=\frac{\dd\nup}{\dd\nu}.
\]
Let \(T_\pi f(x)=E\{f(X^+)\mid X=x\}\) be the forward transition operator. For
\(\omega\in L^1(\nu)\), define \(T_\pi'\omega\) as the \(\nu\)-density of the
pushed-forward signed measure \((\omega\nu)\Ppi\). When the functions are
square-integrable, \(T_\pi'\) agrees with the \(L^2(\nu)\)-adjoint:
\[
    \langle \omega,T_\pi f\rangle_\nu
    =
    \langle T_\pi'\omega,f\rangle_\nu .
\]
Define the backward conditional-mean operator
\[
    \Mpi\omega(x)
    =
    E\{\omega(X)\mid X^+=x\},
\]
where \(X\sim\nu\) and \(X^+\mid X\sim\Ppi(\cdot\mid X)\).

\begin{lemma}[Backward-regression factorization of the adjoint transition]
\label{lem:reverse-regression}
Suppose Condition~\ref{ass:overlap} holds. For every
\(\omega\in L^1(\nu)\), we have \(\Mpi\omega\in L^1(\nup)\), and
\begin{equation}
\label{eq:adjoint-transition-regression}
    T_\pi'\omega
    =
    \cpi\Mpi\omega
    \qquad \nu\text{-a.e.}
\end{equation}
Equivalently, \(\cpi\Mpi\omega\) is the \(\nu\)-density of
\((\omega\nu)\Ppi\).
\end{lemma}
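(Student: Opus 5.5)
The plan is to verify the density identity by testing both sides against an arbitrary bounded measurable function on \(\sX\). First, the integrability claim. \(\Mpi\omega\) is defined as a version of the conditional expectation \(E\{\omega(X)\mid X^+\}\) under the joint law \(Q_{\nu,\pi}\) of \((X,X^+)\). Since \(E|\omega(X)|=\|\omega\|_{L^1(\nu)}<\infty\), this conditional expectation exists. Because the marginal law of \(X^+\) is \(\nup\), conditional Jensen gives
\[
\int|\Mpi\omega|\,\dd\nup\le E|\omega(X)|,
\]
so \(\Mpi\omega\in L^1(\nup)\). Under Condition~\ref{ass:overlap}, \(\nup\ll\nu\), so \(\cpi\) exists and \(\cpi\Mpi\omega\in L^1(\nu)\) with \(\int|\cpi\Mpi\omega|\,\dd\nu=\int|\Mpi\omega|\,\dd\nup\).

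Next, fix a bounded measurable \(f\) on \(\sX\) and compute \(\int f\,\dd\{(\omega\nu)\Ppi\}\) in two ways. By the definition of the pushforward and Fubini, which is justified since \(\omega\in L^1(\nu)\) and \(f\) is bounded, the integral equals \(E\{\omega(X)f(X^+)\}\). By the tower property, conditioning on \(X^+\), this becomes
\[
E\{\Mpi\omega(X^+)f(X^+)\}=\int \Mpi\omega\, f\,\dd\nup=\int \cpi\,\Mpi\omega\, f\,\dd\nu .
\]
Hence the finite signed measures \((\omega\nu)\Ppi\) and \((\cpi\Mpi\omega)\nu\) agree on every bounded measurable \(f\), in particular on indicators. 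They are therefore equal, and uniqueness of Radon--Nikodym derivatives gives \(T_\pi'\omega=\cpi\Mpi\omega\) \(\nu\)-a.e. Lemma~\ref{lem:ac-propagation} confirms that \((\omega\nu)\Ppi\ll\nu\), so \(T_\pi'\omega\) is well defined, although the display above already establishes this.

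The only delicate point is measure-theoretic. \(\Mpi\omega\) is determined only \(\nup\)-a.e., whereas the identity is asserted \(\nu\)-a.e. On \(\{\cpi=0\}\) the product \(\cpi\Mpi\omega\) vanishes regardless of the version chosen. Off that set, \(\nu\) and \(\nup\) are mutually absolutely continuous. So the product is well defined \(\nu\)-a.e., and this resolves the issue; I expect no further obstacle.
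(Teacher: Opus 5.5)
Your proposal is correct and is essentially the paper's proof: conditional Jensen under the joint law of $(X,X^+)$ gives $\Mpi\omega\in L^1(\nup)$, and the tower property identifies $\int_B \cpi\Mpi\omega\,\dd\nu$ with $((\omega\nu)\Ppi)(B)$. Testing against bounded $f$ instead of indicators is equivalent. Your remark that $\cpi\Mpi\omega$ is well defined $\nu$-a.e., even though $\Mpi\omega$ is determined only $\nup$-a.e., makes explicit a point the paper leaves implicit.
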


\begin{proof}
Let \(J\) be the joint distribution of \((X,X^+)\) generated by \(X\sim\nu\)
and \(X^+\mid X\sim\Ppi(\cdot\mid X)\). Its second marginal is
\(\nup=\nu\Ppi\). Since \(\omega\in L^1(\nu)\),
\[
    \E_J[\abs{\omega(X)}]=\LoneNu{\omega}<\infty,
\]
so the conditional expectation
\(\Mpi\omega(x)=\E[\omega(X)\mid X^+=x]\) belongs to \(L^1(\nup)\).
Conditional Jensen's inequality gives
\[
    \LoneNuPlus{\Mpi\omega}
    =
    \E_J[\abs{\E[\omega(X)\mid X^+]}]
    \le
    \E_J[\abs{\omega(X)}]
    =
    \LoneNu{\omega}.
\]
For any measurable \(B\subseteq\sX\),
\[
\begin{aligned}
    \int_B \cpi(x)\Mpi\omega(x)\,\nu(\dd x)
    &=
    \int_B \Mpi\omega(x)\,\nup(\dd x)\\
    &=
    \E_J[\Mpi\omega(X^+)\1\{X^+\in B\}]\\
    &=
    \E_J[\omega(X)\1\{X^+\in B\}]\\
    &=
    \int \Ppi(B\mid x)\omega(x)\,\nu(\dd x)\\
    &=
    ((\omega\nu)\Ppi)(B).
\end{aligned}
\]
Thus \(\cpi\Mpi\omega\) is the \(\nu\)-density of
\((\omega\nu)\Ppi\). Adding the initial measure gives
\[
    (1-\gamma)\dinit+\gamma(\omega\nu)\Ppi
    \ll
    \nu,
    \qquad
    \frac{\dd[(1-\gamma)\dinit+\gamma(\omega\nu)\Ppi]}{\dd\nu}
    =
    (1-\gamma)\omegazero+\gamma\cpi\Mpi\omega.
\]
\end{proof}

Combining Lemma~\ref{lem:reverse-regression} with
\eqref{eq:adjoint-bellman-fixed-point} gives the backward-regression adjoint Bellman
equation
\begin{equation}
\label{eq:adjoint-bellman}
    \omegastar
    =
    (1-\gamma)\omegazero+\gamma\cpi\Mpi\omegastar .
\end{equation}
Equivalently, the measure-level update has the density
representation
\[
    \Bpig\omega
    =
    (1-\gamma)\omegazero+\gamma\cpi\Mpi\omega .
\]
The regression algorithm uses this representation.

\subsection{Backward-regression fitted adjoint Bellman iteration}
\label{sec:fori-algorithm}

Given offline transitions \((S_i,A_i,S_i')_{i=1}^n\), let
\(X_i=(S_i,A_i)\). For each transition, draw
\(A_i^+\sim\pi(\cdot\mid S_i')\) and set \(X_i^+=(S_i',A_i^+)\). Thus
\(\{X_i\}_{i=1}^n\) is an offline data sample from \(\nu\), while
\(\{X_i^+\}_{i=1}^n\) is a sample from the one-step target-policy successor distribution
\(\nup=\nu\Ppi\).

Given first-stage estimates \(\omegahatzero\) and \(\chat\), the backward-regression
variant estimates the discounted occupancy ratio by fitted adjoint
Bellman iteration. Starting from an initial estimate \(\omegahat^{(0)}\),
iteration \(k\) performs a backward conditional-mean regression of
\(\omegahat^{(k)}(X_i)\) on the successor covariates \(X_i^+\).
With squared-error regression class
\(\mathcal M_k\), define
\[
    \mhat_k
    \in
    \argmin_{m\in\mathcal M_k}
    \frac1n\sum_{i=1}^n
    \left\{
        \omegahat^{(k)}(X_i)-m(X_i^+)
    \right\}^2 .
\]
This regression estimates the backward conditional mean
\(x\mapsto E\{\omegahat^{(k)}(X)\mid X^+=x\}\). The fitted adjoint Bellman
update is then
\[
    \omegahat^{(k+1)}(x)
    =
    (1-\gamma)\omegahatzero(x)
    +
    \gamma\chat(x)\mhat_k(x).
\]
After the first-stage ratio estimates are fixed, each iteration is a
supervised backward conditional-mean regression. A clipped and empirically
normalized update uses a level \(M<\infty\) and sets
\[
    \tilde\omega^{(k+1)}(x)
    =
    \min\{\max\{\omegahat^{(k+1)}(x),0\},M\},
    \qquad
    \omegahat^{(k+1)}(x)
    \leftarrow
    \frac{\tilde\omega^{(k+1)}(x)}
    {n^{-1}\sum_{i=1}^n \tilde\omega^{(k+1)}(X_i)} .
\]
Algorithm~\ref{alg:strong-form-regression-fori} summarizes the procedure.

\begin{algorithm}[!htb]
\caption{Backward-regression \textsc{FORE}}
\label{alg:strong-form-regression-fori}
\begin{algorithmic}[1]
\Require Offline transitions \(\{X_i=(S_i,A_i),S_i'\}_{i=1}^n\), target policy
\(\pi\), target initial distribution \(\dinit=\mu_0\pi\), discount \(\gamma\), iteration
count \(K\), and regression classes \(\mathcal M_k\)
\State Draw \(A_i^+\sim\pi(\cdot\mid S_i')\) and set \(X_i^+=(S_i',A_i^+)\)
for \(i=1,\ldots,n\)
\State Estimate
\(\omegahatzero\approx\omegazero=\dd\dinit/\dd\nu\)
\State Estimate
\(\chat\approx\cpi=\dd\nup/\dd\nu\)
\For{\(k=0,\ldots,K-1\)}
    \State Fit backward conditional-mean regression \(\mhat_k
        \in
        \argmin_{m\in\mathcal M_k}
        \frac1n\sum_{i=1}^n
        \left\{\omegahat^{(k)}(X_i)-m(X_i^+)\right\}^2\)
    \State Set \(\omegahat^{(k+1)}(x)
        =
        (1-\gamma)\omegahatzero(x)+\gamma\chat(x)\mhat_k(x)\)
    \State Clip and normalize \(\omegahat^{(k+1)}\) if the optional clipping
    step is used
\EndFor
\Ensure Discounted occupancy-ratio estimate \(\omegahat^{(K)}\)
\end{algorithmic}
\end{algorithm}

\subsection{First-stage density-ratio estimation}
\label{sec:first-stage-ratios}

The fitted iteration requires estimates of two density ratios: the initial
ratio
\[
    \omegazero = \frac{\dd\dinit}{\dd\nu},
    \qquad
    \dinit(\dd s,\dd a)=\mu_0(\dd s)\pi(\dd a\mid s),
\]
and the one-step ratio
\[
    \cpi = \frac{\dd\nup}{\dd\nu}.
\]
The initial distribution \(\mu_0\) specifies the starting-state population whose target
occupancy is being evaluated. Given samples \(S_j^0\sim\mu_0\), drawing
\(A_j^0\sim\pi(\cdot\mid S_j^0)\) gives
\(X_j^0=(S_j^0,A_j^0)\sim\dinit\). These numerator samples, together with the
offline data sample \(\{X_i\}_{i=1}^n\sim\nu\), can be used to estimate
\(\omegazero\). Similarly, the successor pairs
\(X_i^+=(S_i',A_i^+)\), with \(A_i^+\sim\pi(\cdot\mid S_i')\), are sampled from
\(\nup=\nu\Ppi\). Hence \(\{X_i^+\}_{i=1}^n\), together with the same offline data
sample, can be used to estimate \(\cpi\).

Any density-ratio learner that targets these two ratios can be used to
construct \(\omegahatzero\) and \(\chat\), including methods based on
classification, \(f\)-divergence, or Bregman-risk objectives. These estimates
are then held fixed throughout the adjoint Bellman iteration. For numerical
stability, implementations may clip extreme ratios or apply post-hoc moment
calibration.

\subsection{Population contraction}
\label{sec:population-contraction}

The backward-regression representation in Lemma~\ref{lem:reverse-regression}
identifies the population update underlying
Algorithm~\ref{alg:strong-form-regression-fori}. The next result states its
\(L^1(\nu)\) contraction and fixed point.

\begin{theorem}[Backward-regression \textsc{FORE} contraction]
\label{thm:regression-population-contraction}
Assume Condition~\ref{ass:overlap} and let \(\gamma\in[0,1)\). Then
\(\Bpig:L^1(\nu)\to L^1(\nu)\) is well defined and satisfies
\[
    \LoneNu{\Bpig\omega-\Bpig\omega'}
    \le
    \gamma\LoneNu{\omega-\omega'}
    \qquad
    \text{for all }\omega,\omega'\in L^1(\nu).
\]
Moreover, \(d_{\pi,\gamma}\ll\nu\), and
\(\omegastar=\dd d_{\pi,\gamma}/\dd\nu\) is the unique fixed point of \(\Bpig\)
in \(L^1(\nu)\). Hence, for any \(\omega\in L^1(\nu)\),
\[
    \LoneNu{(\Bpig)^K\omega-\omegastar}
    \le
    \gamma^K\LoneNu{\omega-\omegastar}.
\]
\end{theorem}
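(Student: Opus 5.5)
The plan is to work at the level of signed measures and use the measure-form identity \eqref{eq:adjoint-flow}, namely \((\Bpig\omega)\nu=(1-\gamma)\dinit+\gamma(\omega\nu)\Ppi\). This turns the \(L^1(\nu)\) statement into a total-variation statement about pushforwards under the Markov kernel \(\Ppi\).

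The first step is well-definedness. Fix \(\omega\in L^1(\nu)\). The signed measure \(\omega\nu\) has \(\abs{\omega\nu}=\abs{\omega}\nu\ll\nu\). Since Condition~\ref{ass:overlap} gives \(\nup=\nu\Ppi\ll\nu\), Lemma~\ref{lem:ac-propagation} yields \(\abs{(\omega\nu)\Ppi}\ll\nu\), so the Radon--Nikodym derivative exists. Together with \(\dinit\ll\nu\), this shows \(\Bpig\omega\) is well defined. Its \(L^1(\nu)\) norm is the total variation of the measure in \eqref{eq:adjoint-flow}, which is at most \(1-\gamma+\gamma\LoneNu{\omega}<\infty\). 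Alternatively, Lemma~\ref{lem:reverse-regression} gives the explicit density \((1-\gamma)\omegazero+\gamma\cpi\Mpi\omega\), with the same bound.

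The second step is the contraction. The \(\dinit\) terms cancel, so
\[
\LoneNu{\Bpig\omega-\Bpig\omega'}=\gamma\,\abs{\{(\omega-\omega')\nu\}\Ppi}(\sX).
\]
For a finite signed measure \(\mu\), the proof of Lemma~\ref{lem:ac-propagation} gives \(\abs{\mu\Ppi}(A)\le(\abs{\mu}\Ppi)(A)\). Taking \(A=\sX\) and using \(\Ppi(\sX\mid x)=1\) gives \(\abs{\mu\Ppi}(\sX)\le\abs{\mu}(\sX)\). Applying this with \(\mu=(\omega-\omega')\nu\) gives exactly the stated bound.

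For the fixed point, \(L^1(\nu)\) is complete, so Banach's theorem gives a unique fixed point. Lemma~\ref{lem:ac-propagation} gives \(d_{\pi,\gamma}\ll\nu\). Taking Radon--Nikodym derivatives in \eqref{eqn::occbell} shows that \(\omegastar\in L^1(\nu)\), with norm one, satisfies \(\Bpig\omegastar=\omegastar\), so it is that unique fixed point. The geometric bound then follows by induction, applying the contraction with \(\omega'=\omegastar\).

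The only delicate point is the total-variation inequality \(\abs{\mu\Ppi}(\sX)\le\abs{\mu}(\sX)\) for \emph{signed} \(\mu\). This is easiest via the Jordan decomposition \(\mu=\mu^+-\mu^-\): we have \(\mu\Ppi=\mu^+\Ppi-\mu^-\Ppi\), hence \(\abs{\mu\Ppi}(\sX)\le\mu^+(\sX)+\mu^-(\sX)=\abs{\mu}(\sX)\).
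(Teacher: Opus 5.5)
Your proposal is correct and follows essentially the same route as the paper: identify \(\Bpig\omega-\Bpig\omega'\) as the \(\nu\)-density of \(\gamma\{(\omega-\omega')\nu\}\Ppi\), apply total-variation nonexpansiveness of the Markov kernel (you prove it via the Jordan decomposition, where the paper simply cites it), and show that \(\omegastar\) is a fixed point by taking Radon--Nikodym derivatives in the occupancy recursion. The only differences are cosmetic. You get well-definedness from Lemma~\ref{lem:ac-propagation} rather than the backward-regression factorization of Lemma~\ref{lem:reverse-regression}. You also invoke Banach for existence, which is redundant once \(\omegastar\) is exhibited as a fixed point, since the contraction alone gives uniqueness.
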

\begin{proof}
Lemma~\ref{lem:reverse-regression} identifies
\(\Bpig\omega-\Bpig\omega'\) as the \(\nu\)-density of the signed measure
\(\gamma\{(\omega-\omega')\nu\}\Ppi\). Nonexpansiveness of Markov kernels for
finite signed measures \citep{coverThomas2006Elements} gives
\[
    \LoneNu{\Bpig\omega-\Bpig\omega'}
    \le
    \gamma |(\omega-\omega')\nu|(\sX)
    =
    \gamma\LoneNu{\omega-\omega'}.
\]
The discounted occupancy recursion
\(d_{\pi,\gamma}=(1-\gamma)\dinit+\gamma d_{\pi,\gamma}P_\pi\) shows that
\(\omegastar\) is a fixed point after taking Radon--Nikodym derivatives.
The contraction gives uniqueness in \(L^1(\nu)\), and iterating it yields the
geometric bound.
\end{proof}

\subsection{Projection error and adjoint Bellman completeness}
\label{sec:projected-fori}

Theorem~\ref{thm:regression-population-contraction} describes the ideal
population iteration, in which the backward conditional mean
\(P_{\pi,\nu}^{\leftarrow}\omega\) is evaluated exactly.
Algorithm~\ref{alg:strong-form-regression-fori} instead estimates this
backward conditional mean within a supervised-learning class. We isolate the
population effect of this projection.

For a regression class \(\mathcal M\), let \(\Pi_{\mathcal M}^{+}g\) denote an
\(L^2(\nu_\pi^+)\)-projection of \(g\) onto \(\mathcal M\). The corresponding
population FORE update is
\[
    \widetilde{\mathsf B}_\gamma^\pi \omega
    =
    (1-\gamma)\omegazero
    +
    \gamma\cpi \Pi_{\mathcal M}^{+}P_{\pi,\nu}^{\leftarrow}\omega .
\]
Thus, \(\widetilde{\mathsf B}_\gamma^\pi\) differs from the exact adjoint Bellman update
only by replacing the exact backward conditional mean with its population
projection onto \(\mathcal M\).

For a set \(\mathcal W\subseteq L^1(\nu)\) of possible iterates, define the
\textit{inherent adjoint Bellman error}
\[
    b_{\mathcal M}(\mathcal W)
    =
    \sup_{\omega\in\mathcal W}
    \inf_{m\in\mathcal M}
    \left\|
    m-P_{\pi,\nu}^{\leftarrow}\omega
    \right\|_{L^2(\nu_\pi^+)} .
\]

We say that \(\mathcal M\) is \textit{adjoint Bellman complete} over
\(\mathcal W\) if \(P_{\pi,\nu}^{\leftarrow}\omega\in\mathcal M\) for every
\(\omega\in\mathcal W\) \citep{ueharaEtAl2021FiniteSampleMinimax}. This is the
direct analogue of Bellman completeness for FQE: the regression class must
contain the one-step adjoint Bellman image of every iterate encountered by the
fitted procedure. In this case, \(b_{\mathcal M}(\mathcal W)=0\).

\begin{lemma}[Population perturbation from backward-regression projection]
\label{lem:population-projected-fori}
Assume Condition~\ref{ass:overlap} and let \(\gamma\in[0,1)\). Let
\(\widetilde\omega^{(k+1)}
=
\widetilde{\mathsf B}_\gamma^\pi\widetilde\omega^{(k)}\), and set
\(\mathcal W_K=\{\widetilde\omega^{(0)},\ldots,\widetilde\omega^{(K-1)}\}\).
Then
\begin{equation}
\label{eq:backward-projection-iterated-bound}
    \left\|
    \widetilde\omega^{(K)}-\omegastar
    \right\|_{L^1(\nu)}
    \le
    \gamma^K
    \left\|
    \widetilde\omega^{(0)}-\omegastar
    \right\|_{L^1(\nu)}
    +
    \frac{\gamma(1-\gamma^K)}{1-\gamma}
    b_{\mathcal M}(\mathcal W_K).
\end{equation}
Consequently, if \(\mathcal M\) is adjoint Bellman complete over
\(\mathcal W_K\), then
\begin{equation}
\label{eq:backward-projection-complete-bound}
    \left\|
    \widetilde\omega^{(K)}-\omegastar
    \right\|_{L^1(\nu)}
    \le
    \gamma^K
    \left\|
    \widetilde\omega^{(0)}-\omegastar
    \right\|_{L^1(\nu)} .
\end{equation}
\end{lemma}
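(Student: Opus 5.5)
The plan is a one-step perturbation inequality followed by iteration, with the exact adjoint Bellman contraction of Theorem~\ref{thm:regression-population-contraction} doing the work. Fix \(k\) and write \(\omega=\widetilde\omega^{(k)}\in\mathcal W_K\). We add and subtract the exact update:
\[
    \widetilde\omega^{(k+1)}-\omegastar
    =
    \bigl\{\widetilde{\mathsf B}_\gamma^\pi\omega-\Bpig\omega\bigr\}
    +
    \bigl\{\Bpig\omega-\Bpig\omegastar\bigr\},
\]
using \(\omegastar=\Bpig\omegastar\). The second bracket has \(L^1(\nu)\) norm at most \(\gamma\LoneNu{\omega-\omegastar}\) by Theorem~\ref{thm:regression-population-contraction}.

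For the first bracket, Lemma~\ref{lem:reverse-regression} gives
\[
    \widetilde{\mathsf B}_\gamma^\pi\omega-\Bpig\omega
    =
    \gamma\cpi\bigl(\Pi_{\mathcal M}^{+}\Mpi\omega-\Mpi\omega\bigr).
\]
Since \(\cpi=\dd\nup/\dd\nu\), integrating \(\cpi|g|\) against \(\nu\) equals integrating \(|g|\) against \(\nup\). Hence the \(L^1(\nu)\) norm is \(\gamma\|\Pi_{\mathcal M}^{+}\Mpi\omega-\Mpi\omega\|_{L^1(\nup)}\). Because \(\nup\) is a probability measure, Cauchy--Schwarz bounds this by \(\gamma\|\Pi_{\mathcal M}^{+}\Mpi\omega-\Mpi\omega\|_{L^2(\nup)}\). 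The defining property of the \(L^2(\nup)\)-projection makes this equal to \(\gamma\inf_{m\in\mathcal M}\|m-\Mpi\omega\|_{L^2(\nup)}\le\gamma b_{\mathcal M}(\mathcal W_K)\).

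Combining the two brackets gives the recursion \(e_{k+1}\le\gamma e_k+\gamma b\), where \(e_k=\LoneNu{\widetilde\omega^{(k)}-\omegastar}\) and \(b=b_{\mathcal M}(\mathcal W_K)\). Unrolling over \(k=0,\ldots,K-1\) gives
\[
    e_K\le\gamma^K e_0+\gamma b\sum_{j<K}\gamma^j
    =\gamma^K e_0+\frac{\gamma(1-\gamma^K)}{1-\gamma}\,b,
\]
which is \eqref{eq:backward-projection-iterated-bound}. Under adjoint Bellman completeness over \(\mathcal W_K\) we have \(b=0\), so \eqref{eq:backward-projection-complete-bound} follows at once.

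The main obstacle is well-posedness, not the estimate. We must check that each iterate stays in \(L^1(\nu)\) and that \(\Mpi\omega\in L^2(\nup)\), so that both the projection and the term \(b\) are meaningful. I would handle this by assuming, as implicit in the definition of \(\widetilde{\mathsf B}_\gamma^\pi\), that the projections exist. If \(b\) is finite there is an \(m\in\mathcal M\) at finite \(L^2(\nup)\) distance from \(\Mpi\omega\), and the projection inherits this. \(L^1(\nu)\)-membership of the next iterate then follows from the same change-of-measure identity, by induction on \(k\). If \(b=\infty\), the bound is trivial.
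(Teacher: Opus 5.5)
Your proposal is correct and follows the paper's proof essentially step for step: add and subtract the exact update $\Bpig\widetilde\omega^{(k)}$, apply the $L^1(\nu)$ contraction, rewrite the perturbation term as an $L^1(\nup)$ norm using $\nup=\cpi\nu$, bound it by the $L^2(\nup)$ projection residual $\le\gamma b_{\mathcal M}(\mathcal W_K)$, and unroll the recursion $e_{k+1}\le\gamma e_k+\gamma b$. Your closing remark on well-posedness (existence of the projections and $L^1(\nu)$-membership of the iterates) covers a point the paper leaves implicit, and handling it by induction is fine.
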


\begin{proof}[Proof of Lemma~\ref{lem:population-projected-fori}]
For each \(k\), add and subtract the exact population update
\(\Bpig\widetilde\omega^{(k)}\). The contraction in
Theorem~\ref{thm:regression-population-contraction} gives
\[
\begin{aligned}
    \left\|
    \widetilde\omega^{(k+1)}-\omegastar
    \right\|_{L^1(\nu)}
    &\le
    \gamma
    \left\|
    \widetilde\omega^{(k)}-\omegastar
    \right\|_{L^1(\nu)}
    \\
    &\quad+
    \left\|
    \widetilde{\mathsf B}_\gamma^\pi\widetilde\omega^{(k)}
    -
    \Bpig\widetilde\omega^{(k)}
    \right\|_{L^1(\nu)} .
\end{aligned}
\]
The perturbation term is
\[
\begin{aligned}
    \left\|
    \widetilde{\mathsf B}_\gamma^\pi\widetilde\omega^{(k)}
    -
    \Bpig\widetilde\omega^{(k)}
    \right\|_{L^1(\nu)}
    &=
    \gamma
    \left\|
    \cpi
    \left\{
    \Pi_{\mathcal M}^{+}P_{\pi,\nu}^{\leftarrow}\widetilde\omega^{(k)}
    -
    P_{\pi,\nu}^{\leftarrow}\widetilde\omega^{(k)}
    \right\}
    \right\|_{L^1(\nu)}
    \\
    &=
    \gamma
    \left\|
    \Pi_{\mathcal M}^{+}P_{\pi,\nu}^{\leftarrow}\widetilde\omega^{(k)}
    -
    P_{\pi,\nu}^{\leftarrow}\widetilde\omega^{(k)}
    \right\|_{L^1(\nu_\pi^+)}
    \\
    &\le
    \gamma
    \left\|
    \Pi_{\mathcal M}^{+}P_{\pi,\nu}^{\leftarrow}\widetilde\omega^{(k)}
    -
    P_{\pi,\nu}^{\leftarrow}\widetilde\omega^{(k)}
    \right\|_{L^2(\nu_\pi^+)}
    \\
    &=
    \gamma
    \inf_{m\in\mathcal M}
    \left\|
    m-P_{\pi,\nu}^{\leftarrow}\widetilde\omega^{(k)}
    \right\|_{L^2(\nu_\pi^+)}
    \\
    &\le
    \gamma b_{\mathcal M}(\mathcal W_K),
\end{aligned}
\]
where the second equality uses \(\nu_\pi^+(\dd x)=\cpi(x)\nu(\dd x)\), the
next inequality uses that \(\nu_\pi^+\) is a probability distribution, and the final
equality is the defining property of the \(L^2(\nu_\pi^+)\)-projection. Hence
\begin{equation}
\label{eq:backward-projection-one-step}
    \left\|
    \widetilde\omega^{(k+1)}-\omegastar
    \right\|_{L^1(\nu)}
    \le
    \gamma
    \left\|
    \widetilde\omega^{(k)}-\omegastar
    \right\|_{L^1(\nu)}
    +
    \gamma b_{\mathcal M}(\mathcal W_K).
\end{equation}
Iterating \eqref{eq:backward-projection-one-step} gives
\begin{equation}
\label{eq:backward-projection-geometric-sum}
    \left\|
    \widetilde\omega^{(K)}-\omegastar
    \right\|_{L^1(\nu)}
    \le
    \gamma^K
    \left\|
    \widetilde\omega^{(0)}-\omegastar
    \right\|_{L^1(\nu)}
    +
    \gamma\sum_{j=0}^{K-1}\gamma^j b_{\mathcal M}(\mathcal W_K).
\end{equation}
Evaluating the geometric sum in
\eqref{eq:backward-projection-geometric-sum} gives
\eqref{eq:backward-projection-iterated-bound}.
Under adjoint Bellman completeness over \(\mathcal W_K\), the inherent adjoint
Bellman error is zero, giving
\eqref{eq:backward-projection-complete-bound}.
\end{proof}

Lemma~\ref{lem:population-projected-fori} makes the limitation of this
\textsc{FORE} variant explicit. Without adjoint Bellman completeness, the
backward-regression iteration converges only up to the inherent adjoint
Bellman error.

\section{Numerical experiment details}
\label{app:experiment-details}

This appendix records the exact constructions used for
Section~\ref{sec:experiments}. The reported finite-sample summaries use common
random draws across estimators at a fixed sample size and repetition index.

\subsection{Baird-style finite MRP}
\label{app:baird-style-details}

The state space is
\(\sX=\{u_1,\ldots,u_6,\ell\}\). The six states \(u_j\) are symmetric upper
states and \(\ell\) is the lower state. We use \(\gamma=0.95\),
\[
    \nu(u_j)=0.95/6,\qquad \nu(\ell)=0.05,
    \qquad
    d_0(u_j)=1/6,\qquad d_0(\ell)=0 .
\]
The target transition matrix has the aggregate form
\[
    P(u_j,u_m)=0.05/6,\quad P(u_j,\ell)=0.95,\qquad
    P(\ell,u_m)=0.20/6,\quad P(\ell,\ell)=0.80 .
\]
The scalar value feature is
\[
    \phi(u_j)=0.1,\qquad \phi(\ell)=1 .
\]
Rewards are defined by \(r=\phi-\gamma P\phi\), so
\[
    r(u_j)=-0.80725,\qquad r(\ell)=0.221,
    \qquad Q^\pi=\phi,\qquad V_\pi=0.1 .
\]
This construction keeps the six-upper/one-lower star layout and chooses the
target transition, offline data distribution, and rewards to isolate fitted policy
evaluation.

For linear FQE with \(q_\beta=\beta\phi\), the population update has the
one-dimensional form
\[
    \beta_{k+1}=1-\lambda+\lambda\beta_k,\qquad
    \lambda
    =
    \gamma
    \frac{E_\nu\{\phi(X)\phi(X^+)\}}{E_\nu\{\phi(X)^2\}}
    =
    2.1031722689 .
\]
Thus \(Q^\pi\in\sQ\), but the projected Bellman recursion expands errors under
the Bellman projection induced by the offline data distribution. The
discounted occupancy ratio is constant on the
upper states and equal to
\[
    \omegastar(u_j)=0.2211217321,\qquad
    \omegastar(\ell)=15.7986870897 .
\]
It is represented exactly by the normalized log-linear class
\[
    \omega_\theta(x)
    =
    \frac{\exp\{\theta\,1(x=\ell)\}}
         {E_\nu[\exp\{\theta\,1(X=\ell)\}]} .
\]
The fixed point is \(\theta^\star=4.7432986067\). The derivative of the
scalar \textsc{FORE} update at \(\theta^\star\) has magnitude \(0.1425\).
Projecting FQE under the target occupancy distribution gives scalar multiplier
\(0.8009962427\).

With tabular FQE, the value feature matrix is the \(7\times 7\) identity. The
Bellman image remains in the fitted value class, the projected Bellman operator
has contraction multiplier \(\gamma=0.95\), and the population iterates converge to
the exact value function.

The tabular finite-dimensional objectives for DualDICE, MWL, and MQL are
solved exactly in this finite example. The Baird-style panel in the main text
focuses on the population recursions, where the separation between offline-data
projection and target-occupancy projection is algebraic.

\subsection{Linear-Gaussian policy evaluation}
\label{app:gaussian-details}

The continuous example uses \(X=(S,A)\in\mathbb R^2\). Offline samples are
drawn from \(\nu=N(0,\Sigma_b)\), where
\[
    \Sigma_b
    =
    \begin{pmatrix}
    1.5 & 0 \\
    0 & 0.4
    \end{pmatrix}.
\]
Under the target policy,
\[
    S^+=0.7S+0.5A+\varepsilon_s,\qquad
    A^+=-0.8S^+ + \varepsilon_a,
\]
where \(\varepsilon_s\sim N(0,0.4)\) and
\(\varepsilon_a\sim N(0,0.25)\) are independent. The initial distribution is the
stationary Gaussian distribution of this target transition, with covariance
\[
    \Sigma_\star
    =
    \begin{pmatrix}
    0.508242 & -0.406593\\
    -0.406593 & 0.575275
    \end{pmatrix}.
\]
Thus the target density ratio relative to \(\nu\) is exponential quadratic:
\[
    \omega_\star(x)
    \propto
    \exp\{\theta^\star_1 s^2+\theta^\star_2sa+\theta^\star_3a^2\},
    \qquad
    \theta^\star=(-1.9304504505,\,-3.2,\,-0.75).
\]
Let
\[
    h_\star(x)
    =
    \theta^\star_1s^2+\theta^\star_2sa+\theta^\star_3a^2 .
\]
The normalized log-linear ratio class used by \textsc{FORE}, MWL, and
DualDICE has sufficient statistics \((h_\star(x),s,a)\). The target ratio has
coefficient \((1,0,0)\) in this class. The class is not closed under the target
transition or the corresponding adjoint Bellman update.

For the main finite-sample experiment, the discount is \(\gamma=\gamma_0\).
The value feature is
\[
    q(x)=a^2+0.7241380519,
\]
and the reward is set to
\[
    r(x)=q(x)-\gamma E\{q(X^+)\mid X=x\}.
\]
It follows that \(Q^\pi=q\). Since the initial distribution is stationary under the
target policy, the true policy value is
\[
    V_\pi=E_{\Sigma_\star}\{q(X)\}=1.299413 .
\]
Linear FQE, \textsc{FORE}-reweighted FQE, and MQL use the value class
\[
    \{ \beta_0q+\beta_1s+\beta_2a:\beta\in\mathbb R^3\},
\]
whose true coefficient is \((1,0,0)\). The class is not Bellman complete,
because \(P_\pi q\) contains the missing quadratic terms \(s^2\) and \(sa\).

The corresponding population multipliers are as follows. Under the offline
data distribution, the linear FQE multiplier is \(1.220598\). The population
\textsc{FORE} recursion has effective contraction multiplier \(0.085500\). When
the FQE projection distribution is replaced by the target occupancy distribution recovered by
\textsc{FORE}, the dominant projected FQE multiplier is \(0.682476\).

Finite-sample experiments use
\[
    n\in\{500,1000,2000,5000,10000\}
\]
offline transitions and \(300\) independent repetitions at each sample size.
For \textsc{FORE}, each update solves the empirical moment equation for the
three log-ratio sufficient statistics. \textsc{FORE}-reweighted FQE uses the
same value class as linear FQE, but replaces the empirical least-squares
weights by the fitted \textsc{FORE} density weights from the same sample.

MWL, MQL, and DualDICE use a random-Fourier RBF critic with \(128\) features
and an intercept term. The RBF bandwidths, critic ridge penalties, and MWL
density shrinkage coefficient are fixed once using an independent
offline-data population calculation and then held fixed across all sample sizes
and repetitions. MWL and MQL use bandwidth \(2.2\) and ridge penalty \(0.1\).
MWL uses density shrinkage \(10^{-3}\). DualDICE uses bandwidth \(0.4\), ridge
penalty \(10^{-4}\), and no density shrinkage. The DualDICE potential ridge is
used only to stabilize the finite critic solve.

For the discount sweep in Figure~\ref{fig:gaussian-gamma-sweep}, we use
\[
    \gamma\in
    \{0.5,0.6,0.7,0.8,0.85,0.9,0.93,0.95,0.97,0.98,0.99,
      0.995,0.997,0.999\}.
\]
The sample size is \(n=5000\), with \(500\) repetitions at each discount. The
transition distribution, initial distribution, offline data distribution, value class, ratio class, and critic
classes are the same as in the main experiment. At each discount, rewards are
set to
\[
    r_\gamma(x)=q(x)-\gamma E\{q(X^+)\mid X=x\}.
\]
Linear FQE and \textsc{FORE}-reweighted FQE use \(500\) fitted-\(Q\) updates
for every repetition and discount. \textsc{FORE} iterates until the relative
change in the log-ratio parameter is below \(10^{-7}\), with a maximum of
\(100\) updates. The linear FQE panel reports the fixed-iteration error at
every discount, including discounts for which the empirical projected Bellman
recursion is noncontractive.

\subsection{Coverage-stopped occupancy under insufficient data coverage}
\label{app:stopped-fore-details}

\paragraph{Data-generating process.}
Let \(C\) be uniform on eight contexts and let \(T\in\{0,1\}\) denote the
initial and hub stages, with probabilities \(1-\gamma\) and \(\gamma\),
respectively. The target action is zero. On a covered context--stage pair, the
behavior policy selects this action with probability \(q=0.25\); on an
uncovered pair, it selects it with probability zero. The alternative action is
therefore observed even in contexts where target flow is singular relative to
the offline occupancy. Successor states retain the current context and enter
the hub stage.

For each \(p\in\{0,0.25,0.5,0.75,1\}\), an evenly spread subset of \(8p\)
contexts is designated covered. Under initial support failure, action zero is
supported at the initial stage only in these contexts and is supported at the
hub in every context. Under successor support failure, it is supported
initially in every context and at the hub only in the selected contexts. The
exact coverage-stopped occupancy ratio is \(1/q=4\) on retained target-action
pairs and zero elsewhere. Its total mass is therefore
\[
    m_{\mathrm{cov}}(p)
    =
    \begin{cases}
      p, & \text{initial support failure},\\
      (1-\gamma)+\gamma p, & \text{successor support failure}.
    \end{cases}
\]
We set \(\gamma=0.95\). The four rewards are the constant reward, indicators
of coverage-stopped initial and hub occupancy, and a bounded
context-dependent reward that increases linearly from \(0.1\) to \(1\) across
the eight contexts.

\paragraph{Training and evaluation.}
For each support-failure location, covered-context fraction, and repetition,
we draw one training sample and an independent source-distribution test
sample. Training sizes are
\(n\in\{2{,}000,10{,}000\}\), and each test sample contains \(50{,}000\) rows.
Every estimator is fitted once on the full training sample. Rewards, the exact
coverage-stopped occupancy ratio, its mass, and the coverage-stopped values are
used only for evaluation. There is no renormalization of the fitted
coverage-stopped occupancy.

Coverage-stopped \textsc{FORE} uses two hidden layers of width \(64\), \(120\)
outer iterations, and \(50\) warm-started gradient steps for each coverage
classifier and ratio update. Its fitted ratio is restricted to
\([10^{-6},20]\). The estimand itself is not clipped: its maximum positive
ratio is \(4\), so the upper clipping level is inactive in the population.
Standard \textsc{FORE} uses the same hidden-layer widths, \(60\) outer
iterations, and \(50\) gradient steps per update. The post-hoc baseline is
\(\min\{\widehat\omega_{\mathrm{std}},20\}\) and reuses the standard fit.
The standard forms of DualDICE and MWL are omitted because they target a
normalized occupancy rather than the coverage-stopped occupancy defined in
Section~\ref{sec:coverage-clipped-target}.

We report independent-test \(L^1(\nu)\) ratio error, absolute coverage-stopped
occupancy-mass error, and mean absolute coverage-stopped value error over the
four rewards. The constant-reward value equals the coverage-stopped occupancy
mass and therefore checks directly that the fitted measure has not been
normalized to one. We also report coverage-classification errors, whether any
fit reaches the upper clipping level, and runtime.

\begin{table}[H]
\centering
\caption{Coverage-stopped occupancy experiment. Entries are median
independent-test errors over \(300\) runs at each sample size: \(30\)
repetitions, two locations of support failure, and five covered-context
fractions. All methods are evaluated against the same coverage-stopped
occupancy ratio.}
\label{tab:stopped-fore-summary}
\begin{tabular}{@{}r l r r r@{}}
\toprule
\(n\) & Method & Ratio \(L^1(\nu)\) & Mass error & Mean value error \\
\midrule
\(2{,}000\) & Coverage-stopped \textsc{FORE} & \(0.0783\) & \(0.0276\) & \(0.0197\) \\
             & Standard \textsc{FORE}          & \(1.3187\) & \(0.7762\) & \(0.4545\) \\
             & Post-hoc clipped                & \(0.7722\) & \(0.3327\) & \(0.1953\) \\
\addlinespace
\(10{,}000\) & Coverage-stopped \textsc{FORE} & \(0.0324\) & \(0.0090\) & \(0.0067\) \\
              & Standard \textsc{FORE}         & \(1.1542\) & \(0.4753\) & \(0.3145\) \\
              & Post-hoc clipped               & \(0.6262\) & \(0.1643\) & \(0.1244\) \\
\bottomrule
\end{tabular}
\end{table}

\paragraph{Coverage classification and ratio errors.}
At \(n=10{,}000\), the learned coverage classifier retains no uncovered
initial or successor context. In one run it removes one covered initial
context, corresponding to \(0.172\%\) of the test rows. At \(n=2{,}000\), one
run with successor support failure retains one of two uncovered successor
contexts; its coverage-stopped occupancy-ratio \(L^1(\nu)\) error is \(0.114\),
compared with approximately \(1.45\times 10^{10}\) for standard
\textsc{FORE}. In \(64\) small-sample runs, the classifier removes at least one
covered initial context, but never more than \(0.332\%\) of the test rows. No
coverage-stopped \textsc{FORE} fit reaches the upper clipping level. By
contrast, standard \textsc{FORE} has \(65\) of \(300\) small-sample ratio
errors above \(10^6\). Post-hoc clipping removes these extreme values but not
the error induced by targeting the normalized occupancy.

\paragraph{Runtime.}
We compare runtime on one CPU thread using the \(n=10{,}000\) training
schedule, both locations of support failure at \(p=0.5\), and three
repetitions. Median runtimes are \(80.8\) seconds for coverage-stopped
\textsc{FORE} and \(21.1\) seconds for standard \textsc{FORE}; maxima are
\(111.9\) and \(26.8\) seconds, respectively. Post-hoc clipping reuses the
standard fit.

\end{document}